\documentclass[11pt,oneside]{article}

\usepackage{epsfig,lscape}
\usepackage{amssymb,amsfonts,amsmath}
\usepackage{amsthm}
\usepackage{rotating}
\usepackage{color}
\usepackage[dvipsnames]{xcolor}
\usepackage{float}
\usepackage[hypertexnames=false]{hyperref}

\usepackage[authoryear, round]{natbib} 

\usepackage{algorithm}
\usepackage{algorithmic}

\usepackage{booktabs}
\usepackage{threeparttable}

\renewcommand{\baselinestretch}{1.6}

\def\dif{\mathrm d}

\def\Cov{\mathrm{Cov}}
\newenvironment{prf}[1][Proof]{\noindent \textbf{#1.}\ \ }{\hfill $\Box$ \vspace{.1in}}

\newtheorem{thm}{Theorem}[section]
\newtheorem{lem}{Lemma}[section]

\newtheorem{cor}{Corollary}[section]
\newtheorem{ass}{Assumption}
\theoremstyle{definition}
\newtheorem{rem}{Remark}[section]

\allowdisplaybreaks
\usepackage[medium,compact]{titlesec}
\titlespacing*{\section} {0pt}{1.5ex}{1ex}
\titlespacing*{\subsection} {0pt}{1.5ex}{1ex}

\titleformat{\part}[hang]
  {\huge\bfseries}
  {\partname\ \thepart}
  {1em}
  {}

\begin{document}

\setlength{\abovedisplayskip}{5pt}
\setlength{\belowdisplayskip}{5pt}

\begin{titlepage}

\begin{center}
{\Large Unregularized Convergence of Single-Loop, Entropy-Regularized\\ Natural Actor-Critic}

\vspace{.1in} Zhiqiang Tan\footnotemark[1]

\vspace{.1in}
\today
\end{center}

\footnotetext[1]{Department of Statistics, Rutgers University. Address: 110 Frelinghuysen Road,
Piscataway, NJ 08854. E-mail: ztan@stat.rutgers.edu.
The author thanks Yifan Hu and Koulik Khamaru for helpful discussions, and acknowledges the assistance and collaboration from Gemini and ChatGPT.}

\paragraph{Abstract.}

While entropy regularization is widely used to stabilize and accelerate Natural Policy Gradient methods, its ability to yield faster convergence rates for the unregularized objective remains underexplored. Existing analyses often rely on double-loop architectures and invoke a linear entropy penalty. To bridge the gap between theory and practice, we analyze a single-loop, entropy-regularized Natural Actor-Critic algorithm under compatible linear function approximation. By training an uncentered critic, our critic tracking can remain stable even as the training policy approaches determinism and the Fisher information matrix degenerates.
We focus on two primary regimes for the optimization landscape: a Stochastic Regime, where we fuse coupled actor-critic updates into a joint Lyapunov recurrence, and a Deterministic Regime, where we pivot to a Policy Mirror Descent framework to circumvent the collapse of Euclidean geometry. By exploiting a positive Minimal Action Gap in the unregularized Markov decision process, we introduce an Exponential Translation mechanism that maps the regularized gap to the unregularized one up to an exponentially decaying tail. By tuning the fixed temperature, our algorithm achieves accelerated unregularized convergence rates, up to approximation-error terms: $\tilde{\mathcal{O}}(T_{total}^{-1})$ in the Stochastic Regime, and $\tilde{\mathcal{O}}(T_{total}^{-2/3})$ for the average iterate alongside $\tilde{\mathcal{O}}(T_{total}^{-1/3})$ for the last iterate in the Deterministic Regime. Here, $T_{total}$ denotes the total number of stochastic critic updates (or Monte Carlo rollouts). Furthermore, in the tabular setting, our positive-action-gap analysis yields a $\tilde{\mathcal{O}}(T_{total}^{-2/3})$ average-iterate rate, surpassing the $\mathcal{O}(T_{total}^{-1/2})$ worst-case statistical barrier that applies without a positive action margin.

\vspace{.2in}
\paragraph{Key words and phrases.}
Entropy Regularization, Finite-Sample Analysis, Linear Function Approximation, Natural Actor-Critic, Natural Policy Gradient, Policy Mirror Descent,
Reinforcement Learning, Single-Loop Algorithms.

\end{titlepage}

{
\renewcommand{\baselinestretch}{1.1}\normalsize
\tableofcontents
}
\newpage

\section{Introduction} \label{sec:intro}

\textbf{Motivation.}
Policy gradient methods, particularly Natural Policy Gradient (NPG) \citep{kakade2001natural,peters2008natural}, form the backbone of modern reinforcement learning. To encourage exploration and improve convergence, these algorithms are frequently deployed in tandem with entropy regularization \citep{williams1991function, peters2010relative, mnih2016asynchronous, haarnoja2018soft}.
While the empirical benefits of entropy regularization---such as smoother optimization landscapes and faster convergence---are widely recognized, there remain substantive gaps in existing theoretical analyses to address the following question:\vspace{-.08in}
\begin{quote}
    \textit{Compared with unregularized NPG algorithms, what convergence rates (if any faster) can entropy-regularized NPG algorithms be shown to achieve in terms of unregularized performance gaps by appropriate temperature tuning alongside step-size tuning?}
\end{quote}

Historically, much of the early convergence literature focused on the \textit{tabular} Markov Decision Process (MDP). While early works established global convergence for both unregularized and regularized NPG using exact gradients \citep{mei2020global, bhandari2021linear, cen2022fast}, subsequent theoretical developments exploit the Policy Mirror Descent (PMD) framework, which is motivated by Trust Region Policy Optimization (TRPO) \citep{schulman2015trust}.
Under log-linear policy parameterizations, entropy-regularized NPG is equivalent to a PMD update with the Kullback-Leibler (KL) divergence as the proximity term. However, these PMD analyses typically study the unregularized and regularized settings in isolation \citep{shani2020adaptive, lan2023policy}, leaving the unregularized performance of regularized algorithms unevaluated. Even when regularized tabular algorithms are explicitly mapped to the unregularized objective, the translation relies on a standard linear $\mathcal{O}(\tau)$ entropy penalty. For Soft Policy Iteration (SPI) analyzed in \cite{cen2022fast},  the unregularized convergence rate is shown to be $\tilde{\mathcal{O}}(T_{total}^{-1/2})$ (see Table~\ref{tab:tabular-comparison}).

Beyond the tabular setting, guarantees for entropy regularization with function approximation remain limited.
For linear function approximation, \cite{cayci2024convergence} analyzed entropy-regularized NPG algorithms, obtaining an $\tilde{\mathcal{O}}(T_{total}^{-1/3})$ convergence rate (up to the approximation error term) for the \textit{regularized} objective, but without explicitly discussing the unregularized performance gap.
When this analysis is mapped to the unregularized objective via the linear entropy penalty as shown in Appendix \ref{app:cayci-et-al}, the resulting convergence rate would be $\tilde{\mathcal{O}}(T_{total}^{-1/6})$. This is significantly slower than the unregularized rates, $\tilde{\mathcal{O}}(T_{total}^{-1/4})$ to $\tilde{\mathcal{O}}(T_{total}^{-1/2})$, established directly for unregularized NPG algorithms \citep{agarwal2021theory, liu2020improved, yuan2023linear}  (see Table~\ref{tab:sample-complexity-comparison}).
Identifying whether entropy-regularized algorithms can match or even break the existing unregularized rates via appropriate tuning remains an open theoretical challenge.

Compounding this challenge is that the existing analyses of NPG algorithms are heavily dominated by double-loop architectures. Within an outer iteration, these algorithms freeze the actor parameter and run an inner loop of stochastic gradient descent (SGD) to solve a least-squares projection for updating the critic parameter.
While this double-loop approach simplifies the theoretical analysis, it can artificially inflate overall sample complexities or lead to highly skewed hyperparameter configurations to suppress statistical noise. Interestingly, the unbalanced configurations are required by the strongest existing theoretical rates: to achieve their $\tilde{\mathcal{O}}(T_{total}^{-1/2})$ unregularized convergence limits, both the double-loop Q-NPG under function approximation \citep{yuan2023linear} (see Appendix~\ref{app:Yuan-et-al}) and the tabular SPI \citep{cen2022fast} dictate large inner-loop iterations or batch evaluations alongside a very small number of outer policy updates. Such choices misalign with the single-loop implementations commonly used in practice.

\textbf{Algorithm: Single-Loop, Entropy-Regularized Uncentered NAC.}
To bridge the gap between theory and practice, we study an entropy-regularized NPG algorithm in an actor-critic framework. Specifically, we analyze a single-loop, entropy-regularized Natural Actor-Critic (NAC) algorithm under a log-linear policy parameterization and a linear compatible critic. In contrast to double-loop methods, our algorithm updates the actor and the critic continuously at every iteration, executing exactly one critic SGD step per actor update ($N=1$).

Notably, our algorithm evaluates the critic using uncentered features rather than action-centered features (the score function). Standard NPG algorithms typically project the advantage or Q-values onto centered features. However, such projections rely on the positive-definiteness of the centered feature covariance matrix (the Fisher information matrix).
This reliance becomes problematic in the deterministic regime: as the temperature drops to zero to recover the unregularized MDP, the optimal policy collapses into a deterministic distribution. Consequently, as the training policy tracks this optimum and approaches determinism, its action randomness vanishes and its Fisher information matrix degenerates.
By instead projecting the regularized $Q$-values onto uncentered features, our critic tracking can remain stable even when the Fisher information matrix degenerates.

\begin{table}[htbp]
\centering
\renewcommand{\arraystretch}{0.8}

\begin{threeparttable}

\caption{Comparison of sample-based NPG and NAC algorithms using linear
function approximation (log-linear policies and linear compatible critics).
$T_{total} = N \times T$ denotes the total number of stochastic updates
(or Monte Carlo rollouts) across $T$ outer-loop iterations and $N$ inner-loop
steps, translating to the total number of environment interactions up to an
additional $\mathcal{O}((1-\gamma)^{-1})$ expected-horizon factor.}
\label{tab:sample-complexity-comparison}

\footnotesize
\begin{tabular}{@{}lcccc@{}}
\toprule
\textbf{Reference} & \textbf{Algorithm} & \textbf{Setting}
& \textbf{Obj} & \textbf{Rate} \\
\midrule

\textbf{\cite{agarwal2021theory}}
& Double-loop & Discounted, & Unreg
& Avg: $\mathcal{O}(T_{total}^{-1/4}) + \mathcal{O}(\sqrt{\epsilon_{app}})$ \\
(Corollary 26) & Q-NPG & indep.\ sampling & & \\[2pt]

\textbf{\cite{liu2020improved}}
& Double-loop & Discounted, & Unreg
& Avg: $\mathcal{O}(T_{total}^{-1/3}) + \mathcal{O}(\sqrt{\epsilon_{app}})$ \\
(Theorem 4.9) & NPG & indep.\ sampling & & \\[2pt]

\textbf{\cite{yuan2023linear}}
& Double-loop & Discounted, & Unreg
& Last: $\tilde{\mathcal{O}}(T_{total}^{-1/2}) + \mathcal{O}(\sqrt{\epsilon_{app}})$ \\
(Corollary 1) & Q-NPG & indep.\ sampling & & \\[2pt]

\textbf{\cite{cayci2024convergence}}
& Double-loop & Discounted, & Reg
& Avg: $\tilde{\mathcal{O}}(T_{total}^{-1/3}) + \mathcal{O}(\sqrt{\epsilon_{app}})$ \\
(Prop. 4.1, Rem. 4.6) & ent-reg NPG & indep.\ sampling & ($\tau>0$) & \\[2pt]

\textbf{\cite{wang2024nonasymptotic}}
& Single-loop & Average-reward, & Unreg
& Avg: $\tilde{\mathcal{O}}(T_{total}^{-1/3}) + \mathcal{O}(\sqrt{\epsilon_{app}})$ \\
(Theorem 2) & NAC & Markovian & & \\[2pt]

\textbf{Ours (Part II)}
& Single-loop & Discounted, & Unreg
& Avg: $\tilde{\mathcal{O}}(T_{total}^{-2/3}) + \tilde{\mathcal{O}}(\epsilon_{app})$ \\
(Theorems \ref{thm:PMD-unreg-average}, \ref{thm:PMD-unreg-last})
& ent-reg, unc NAC & indep.\ sampling & &
Last: $\tilde{\mathcal{O}}(T_{total}^{-1/3})
+ \tilde{\mathcal{O}}(\sqrt{\epsilon_{app}})$ \\[2pt]

\textbf{Ours (Part I)}
& Single-loop & Discounted, & Unreg
& Avg: $\tilde{\mathcal{O}}(T_{total}^{-1})
+ \tilde{\mathcal{O}}(\epsilon_{app})$ \\
(Theorems \ref{thm:unreg-average-iterate-optimal},
\ref{thm:unreg-last-iterate-optimal})
& ent-reg, unc NAC & indep.\ sampling & &
Last: $\tilde{\mathcal{O}}(T_{total}^{-1})
+ \tilde{\mathcal{O}}(\epsilon_{app})$ \\

\bottomrule
\end{tabular}

\begin{tablenotes}[flushleft]
\footnotesize
\item[] \textit{Obj}: objective;
\textit{Unreg}: unregularized;
\textit{Reg}: regularized;
\textit{ent-reg}: entropy-regularized;
\textit{unc}: uncentered;
\textit{indep.\ sampling}: conditionally independent sampling from the
discounted visitation measure of the current policy;
\textit{Avg}: average-iterate bound;
\textit{Last}: last-iterate bound.
The stated Part-I bounds are understood over the admissible range for
$\epsilon_{app}$, excluding $\epsilon_{app}=0$, as described in
Remark~\ref{rem:admissible-eps-app}.
\end{tablenotes}

\end{threeparttable}
\end{table}

\begin{table}[htbp]
\centering
\renewcommand{\arraystretch}{0.8}

\begin{threeparttable}

\caption{Comparison of sample-based NPG and PMD algorithms in the
\textit{tabular} unregularized setting, free of concentrability conditions.
$T_{total}$ denotes the total number of stochastic updates, translating to
environment interactions up to an additional
$\mathcal{O}((1-\gamma)^{-1})$ expected-horizon factor.}
\label{tab:tabular-comparison}

\footnotesize
\begin{tabular}{@{}llc@{}}
\toprule
\textbf{Reference} & \textbf{Algorithm} & \textbf{Unregularized Rate} \\
\midrule

\textbf{\cite{shani2020adaptive}} (Thm. 5)
& TRPO (with $\nu$-sampling)
& Avg: $\tilde{\mathcal{O}}(T_{total}^{-1/4})$ \\[2pt]

\textbf{\cite{cen2022fast}} (Rem. 1)
& SPI (Generative Model)
& Last: $\tilde{\mathcal{O}}(T_{total}^{-1/2})$ \\[2pt]

\textbf{\cite{lan2023policy}} (Prop. 2)
& SPMD (Generative Model)
& Avg: $\tilde{\mathcal{O}}(T_{total}^{-1/2})$ \\[2pt]

\textbf{Ours} (Theorem~\ref{thm:tabular-convergence})
& ent-reg, unc NAC
& Avg: $\tilde{\mathcal{O}}(T_{total}^{-2/3})$ \\
& (with $\nu$-sampling)
& Last: $\tilde{\mathcal{O}}(T_{total}^{-1/3})$ \\

\bottomrule
\end{tabular}

\begin{tablenotes}[flushleft]
\footnotesize
\item[] \textit{Note:}
\textit{$\nu$-sampling} initializes trajectories from an exploratory restart
distribution to ensure state-action coverage, whereas a \textit{Generative
Model} assumes access to a simulator capable of drawing next-state
transitions from any arbitrary state-action query.
\end{tablenotes}

\end{threeparttable}
\end{table}

\textbf{Analysis and Results.}
To analyze our entropy-regularized algorithm as the temperature drops ($\tau \to 0$), we distinguish two optimization regimes: \textit{Stochastic Regime} and \textit{Deterministic Regime}, which have not been explicitly separated in prior literature.
We develop tailored analytical tools for each regime
and establish new sample complexities for the unregularized objective, summarized in Tables~\ref{tab:sample-complexity-comparison} and \ref{tab:tabular-comparison}.
For simplicity, we discuss convergence rates up to approximation error terms.
\begin{itemize}
    \item \textit{Coupled Lyapunov Analysis (Stochastic Regime):} When the regularized parametric optimum retains strict stochasticity as $\tau \to 0$, the Fisher information matrix remains uniformly positive-definite. By establishing a Parameter-Space Polyak-\L{}ojasiewicz (PL) condition and an Actor Progress Bound, we obtain
        both Coupled Actor Recurrence and Coupled Critic Tracking. We then fuse these two coupled bounds into a \textit{joint Lyapunov function}, proving an optimal $\tilde{\mathcal{O}}(T_{total}^{-1})$ average-iterate and last-iterate regularized convergence rate.

    \item \textit{Policy Mirror Descent (Deterministic Regime):} When the regularized parametric optimum approaches a deterministic policy as $\tau \to 0$,
    the action randomness vanishes and the Fisher information degenerates, breaking the Euclidean parameter-space geometry. To survive this limit, we pivot to a Policy Mirror Descent (PMD) geometry. By evaluating algorithmic progress directly on the probability simplex via KL divergence, we decouple the actor's progress and  the uncentered critic tracking, establishing an accelerated $\tilde{\mathcal{O}}(T_{total}^{-2/3})$ average-iterate and $\tilde{\mathcal{O}}(T_{total}^{-1/3})$ last-iterate regularized convergence rate.

    \item \textit{Exponential Translation Mechanism:} To map these regularized rates back to the unregularized MDP, we exploit the \textit{Minimal Action Gap}, a structural condition dictating a positive margin between the optimal and suboptimal unregularized action values. Under this condition, we prove the unregularized suboptimality is bounded by the regularized gap plus an \textit{exponentially} small translation tail. By tuning the fixed temperature to balance this exponential tail against the regularized optimization error, our algorithm achieves unregularized rates that  match the regularized rates up to logarithmic factors.

    \item \textit{Warm-Start Unification of Single- and Double-Loop Architectures:} To connect our single-loop algorithm with conventional double-loop implementations, we extend the PMD analysis to a warm-start architecture that performs $N$ critic SGD steps per actor update while initializing each inner loop from the previous critic output. For average-iterate convergence, the accelerated $\tilde{\mathcal{O}}(T_{total}^{-2/3})$ unregularized rate remains invariant across $N\in[1,T_{total}^{2/9}]$. Under the standard linear entropy penalty, the baseline $\tilde{\mathcal{O}}(T_{total}^{-1/4})$ rate remains invariant across the larger window $N\in[1,\sqrt{T_{total}}]$, directly connecting the single-loop and balanced double-loop endpoints.

    \item \textit{Application to Tabular Setting:} We apply our function approximation theory to the tabular setting equipped with a one-hot feature encoding. By employing an exploratory restart distribution ($\nu$-sampling) to guarantee sufficient state-action coverage during critic training, our single-loop, entropy-regularized algorithm achieves a $\tilde{\mathcal{O}}(T_{total}^{-2/3})$ average-iterate unregularized rate under the Minimal Action Gap. This instance-dependent rate surpasses the $\mathcal{O}(T_{total}^{-1/2})$ worst-case statistical barrier governing tabular MDPs without a positive action margin.
\end{itemize}

\textbf{Related Works.}
The literature on policy gradient methods is extensive.
We refer to Appendix~\ref{app:related-works} for extended discussions of the four analyses of sample-based algorithms stated in Table~\ref{tab:sample-complexity-comparison} \citep{agarwal2021theory, liu2020improved, yuan2023linear, cayci2024convergence}.
Here, we focus on the non-asymptotic literature concerning sample-based Natural Actor-Critic (NAC) algorithms. A significant portion of these works explicitly address Markovian sampling. While our current analysis utilizes conditionally independent sampling (with an exploratory $\nu$-sampling extension), we leave the extension of our theory to Markovian trajectories for future work.

In the tabular setting, \citet{khodadadian2021finite} and \citet{khodadadian2022finite} established finite-sample guarantees for double-loop and single-loop (two-timescale) NAC algorithms, respectively. Extending to linear function approximation, \citet{xu2020improving} and \citet{chen2022finite} analyzed double-loop NAC algorithms under Markovian sampling.
Recently, \citet{wang2024nonasymptotic} analyzed a single-loop NAC algorithm for average-reward MDPs, achieving an $\tilde{\mathcal{O}}(T_{total}^{-1/3})$ unregularized rate under Markovian sampling, similar to that of \citet{liu2020improved} under conditionally independent sampling. Like \citet{liu2020improved}, this analysis operates within Euclidean parameter spaces, relying on the assumption that the Fisher information matrix remains uniformly positive-definite. While appropriate within our Stochastic Regime, this approach would break down in the Deterministic Regime, where the Fisher information degenerates as the training policy approaches determinism.

\textbf{Notation.}
For a vector $x$, we use $\|x\|_1$, $\|x\|_2$, and $\|x\|_\infty$ to denote the standard $L_1$, Euclidean, and $L_\infty$ norms, respectively. For a matrix $A$, $\|A\|_2$ denotes its spectral norm. For a measurable function $f$, $\|f\|_\infty$ denotes its essential supremum (or maximum when the underlying space is finite).
We use $\mathcal{O}(\cdot)$ and $\Theta(\cdot)$ for standard asymptotic upper and matching-order bounds, respectively, and use $\tilde{\mathcal{O}}(\cdot)$ and $\tilde{\Theta}(\cdot)$ to additionally suppress logarithmic factors in the relevant asymptotic variables, such as $T$, $T_{total}$, and, when applicable, $\epsilon_{app}^{-1}$. Asymptotic notation may suppress fixed problem-dependent constants, while dependence displayed explicitly, such as those on $(1-\gamma)^{-1}$, $\kappa$, $\lambda$, and $\Delta$, is retained. All logarithms are natural logarithms.

\section{Problem Formulation} \label{sec:setup}

We consider an infinite-horizon discounted Markov Decision Process (MDP) defined by the tuple $(\mathcal{S}, \mathcal{A}, \mathcal{P}, r, \gamma, \mu)$, where $\mathcal{S}$ is the state space (allowed to be infinite), $\mathcal{A}$ is the action space (assumed to be finite of size $|\mathcal{A}|$), $\mathcal{P}(\cdot|s,a)$ is the transition probability measure over the next state given current $(s,a)$, $r(s,a) \in [0, 1]$ is the reward function, $\gamma \in (0,1)$ is the discount factor, and $\mu$ is the initial state distribution. A policy $\pi(a|s)$ defines the probability of selecting action $a$ in state $s$.

Let $(s_t, a_t)$ denote the state-action pair at time step $t \ge 0$.
Throughout the paper, we use $\mathbb{P}$ and $\mathbb{E}$ to denote probability and expectation. To avoid ambiguity, we subscript the expectation operator to indicate the random variables and their associated distributions (e.g., expectation over a policy trajectory $\mathbb{E}_{\pi}$, or over specific state-action marginals $\mathbb{E}_{s \sim d^\pi}$ or $\mathbb{E}_{a \sim \pi}$).
The discounted state visitation measure for a policy $\pi$ is denoted by $d^\pi$, defined such that for any measurable subset $\tilde{\mathcal{S}} \subset \mathcal{S}$, $d^\pi(\tilde{\mathcal{S}}) = (1-\gamma) \sum_{t=0}^\infty \gamma^t \mathbb{P}(s_t \in \tilde{\mathcal{S}} \mid s_0 \sim \mu, \pi)$. For convenience, we use $d^\pi(s)$ to denote its density (or mass in the discrete case). The dependence on $\mu$ is suppressed in the notation.

\subsection{Setup and Optimization Regimes} \label{sec:reg-objective}

The entropy-regularized objective from the initial state distribution $\mu$ is \citep{neu2017unified, geist2019theory}
\begin{align*}
    J_\tau(\pi) &= \mathbb{E}_{\pi, s_0 \sim \mu} \left[ \sum_{t=0}^\infty \gamma^t \big( r(s_t, a_t) - \tau \log \pi(a_t|s_t) \big) \right] \\
    &= \frac{1}{1-\gamma} \mathbb{E}_{s \sim d^\pi} \Big[ \mathbb{E}_{a \sim \pi}[r(s,a)] + \tau H(\pi(\cdot|s)) \Big],
\end{align*}
where $\tau > 0$ is the entropy temperature, and $H(\pi(\cdot|s)) \triangleq -\sum_{a} \pi(a|s) \log \pi(a|s)$ is the Shannon entropy.
Since we are primarily interested in the limits of entropy-regularized algorithms as the temperature decreases ($\tau \to 0$), we consider the temperature to be bounded by a base constant throughout the paper, i.e., $\tau \le \tau_{max}$ for some $\tau_{max} > 0$.

The regularized state-value function includes the instantaneous entropy penalty at all steps:
$$V_\tau^\pi(s) = \mathbb{E}_{\pi} \left[ \sum_{t=0}^\infty \gamma^t \big( r(s_t, a_t) - \tau \log \pi(a_t|s_t) \big) \;\middle|\;
s_0 = s \right].$$
The action-value function evaluates the expected return after taking a specific initial action $a$.
Therefore, it excludes the entropy penalty for that first fixed action:
$$Q_\tau^\pi(s,a) = r(s,a) + \mathbb{E}_{\pi} \left[ \sum_{t=1}^\infty \gamma^t \big( r(s_t, a_t) - \tau \log \pi(a_t|s_t) \big) \;\middle|\;
s_0 = s, a_0 = a \right].$$
These definitions satisfy the recursive soft Bellman relations:
\begin{align*}
V_\tau^\pi(s) &= \sum_a \pi(a|s) \big( Q_\tau^\pi(s,a) - \tau \log \pi(a|s) \big), \\
Q_\tau^\pi(s,a) &= r(s,a) + \gamma \mathbb{E}_{s' \sim \mathcal{P}(\cdot|s,a)}[V_\tau^\pi(s')].
\end{align*}
The regularized advantage function is defined as $A_\tau^\pi(s,a) = Q_\tau^\pi(s,a) - \tau \log\pi(a|s) - V_\tau^\pi(s)$.

We employ a log-linear softmax policy parameterized by $\omega \in \mathbb{R}^d$, operating over a given feature map $\phi(s,a) \in \mathbb{R}^d$:
$$\pi_\omega(a|s) = \frac{\exp(\omega^\top \phi(s,a))}{\sum_{a' \in \mathcal{A}} \exp(\omega^\top \phi(s,a'))}.$$
This can be written as
$\pi_\omega(a|s) = \exp(\omega^\top \phi(s,a))/ Z_\omega(s)$ using the normalizing constant $Z_\omega(s) \triangleq \sum_{a'} \exp(\omega^\top \phi(s,a'))$.
We assume that, for each $\tau\in(0,\tau_{max}]$, the parametric regularized objective admits a finite maximizer
$\omega^*_\tau\in\mathbb{R}^d$. Let $\omega^*_\tau\in\operatorname*{argmax}_{\omega\in\mathbb{R}^d}J_\tau(\pi_\omega)$
denote such an optimal parameter vector, and let $\pi_{\omega^*_\tau}$ be the corresponding regularized optimal policy within the parametric family.
By abuse of notation, we also use $J_\tau(\omega) \triangleq J_\tau(\pi_\omega)$.
For a training policy $\pi_t$, we define the parametric regularized performance gap as $\text{Gap}_t \triangleq J_\tau(\pi_{\omega^*_\tau}) - J_\tau(\pi_t)$.
We denote the discounted state visitation distribution of the regularized parametric optimum $\pi_{\omega^*_\tau}$ as $d^{\omega^*_\tau} \triangleq d^{\pi_{\omega^*_\tau}}$.

Furthermore, for any $\tau > 0$, let $\pi^*_\tau$ denote the unique globally optimal policy for the regularized MDP over the entire probability simplex, defined as the soft Bellman-optimal policy that maximizes the regularized state-value $V_\tau^\pi(s)$ for every state $s$. Thus $\pi^*_\tau$ maximizes the scalar objective $J_\tau(\pi)$ for any initial distribution, including $\mu$.
By definition, the performance of the parametric optimum is bounded by the global optimum, such that $J_\tau(\pi_{\omega^*_\tau}) \le J_\tau(\pi^*_\tau)$.
We define the global regularized performance gap as $\text{Gap}_t^\dag \triangleq J_\tau(\pi^*_\tau) - J_\tau(\pi_t)$, distinct from $\text{Gap}_t$.
We denote the discounted state visitation distribution of the regularized globally optimal policy $\pi^*_\tau$ as $d^*_\tau \triangleq d^{\pi^*_\tau}$,
and denote the state-value and action-value functions of $\pi^*_\tau$ as $V^*_\tau \triangleq V^{\pi^*_\tau}_\tau$ and $Q^*_\tau \triangleq Q^{\pi^*_\tau}_\tau$.

We aim to analyze the performance of entropy-regularized policy optimization in maximizing the unregularized objective while tuning the temperature. To formalize the limit as the temperature drops ($\tau \to 0$), let $\pi^*_0$ denote a globally optimal policy for the unregularized MDP, defined as maximizing the unregularized state-value $V_0^\pi(s)$ for every state $s$. By standard MDP theory, such a statewise global optimum always exists and can be chosen to be deterministic.
We denote the discounted state visitation distribution of $\pi^*_0$ as $d^*_0 \triangleq d^{\pi^*_0}$,
and denote the state-value and action-value functions of $\pi^*_0$ as $V^*_0 \triangleq V^{\pi^*_0}_0$ and $Q^*_0 \triangleq Q^{\pi^*_0}_0$.

We primarily evaluate algorithmic performance against the global optimum, defining the global unregularized performance gap as $J_0(\pi^*_0) - J_0(\pi_t)$. To facilitate theoretical analysis, we assume a Minimal Action Gap on the unregularized advantage function for $\pi^*_0$ (Assumption~\ref{ass:action-gap}). From this structural property, we derive an Exponential Translation Bound (Theorem~\ref{thm:suboptimal-mass}) to control the unregularized metric $J_0(\pi^*_0) - J_0(\pi_t)$ via the regularized metric $\text{Gap}_t^\dag$.

Depending on the asymptotic behavior of the regularized optimal policies as $\tau \to 0$, we focus on two primary regimes for the optimization landscape:
\begin{itemize}
    \item \textit{Stochastic Regime:} As $\tau \to 0$, the sequence of regularized optimal policies $\pi_{\omega^*_\tau}$ converges to an unregularized parametric optimum $\pi_{\omega^*_0}$ with $\omega^*_\tau \to \omega^*_0 \in \mathbb{R}^d$.
    Consequently, under Bounded Features (Assumption~\ref{ass:bounded-features}), for sufficiently small $\tau$, the distributions $\pi_{\omega^*_\tau}(\cdot|s)$ remain uniformly interior to the probability simplex for all states $s$, so that the action randomness does not vanish. To capture this persistent stochasticity and rule out redundant or action-independent feature directions, we assume that the centered feature covariance matrix (i.e., Fisher information matrix) at the parametric optimum $\pi_{\omega^*_\tau}$ remains uniformly positive-definite, independent of temperature $\tau$ (Assumption~\ref{ass:pd-fim}). In this regime, we also define the parametric unregularized performance gap as $J_0(\pi_{\omega^*_0}) - J_0(\pi_t)$, distinct from $J_0(\pi^*_0) - J_0(\pi_t)$.

    \item \textit{Deterministic Regime:} As $\tau \to 0$, the sequence of regularized optimal policies $\pi_{\omega^*_\tau}$ approaches an unregularized parametric optimum
    that is a deterministic policy in the closure of the log-linear family. This deterministic collapse causes the action randomness to vanish and the Fisher information matrix to degenerate. A highly expressive log-linear family that perfectly captures the deterministic global optimum $\pi^*_0$ naturally falls into this regime. However, deterministic collapse can also occur in a restricted family if its best unregularized policy in the closure is a suboptimal deterministic policy.
\end{itemize}

We analyze the two regimes sequentially in Parts I and II, both evaluating convergence against the global optimum $\pi^*_0$, based on the Minimal Action Gap assumption.
In Part I, we tackle the Stochastic Regime by establishing a Parameter-Space Polyak-\L{}ojasiewicz (PL) Condition (Lemma~\ref{lem:pl-condition})
and deriving an Actor Progress Bound (Lemma~\ref{lem:pl-ascent}) through the smoothness of the regularized objective and a Positive-Definite Fisher Information assumption.
In Part II, we address the Deterministic Regime, where the degeneration of the Fisher information matrix breaks the optimization geometry required for the Actor Progress Bound. To circumvent this, we pivot to a Policy Mirror Descent (PMD) framework, measuring optimization progress via Kullback-Leibler (KL) divergence (Lemma~\ref{lem:pmd-progress}) and establishing a PMD Actor Progress Bound (Lemma~\ref{lem:pmd-telescoping-l2}).
Alongside these differences in measuring actor progress, the critic tracking is also analyzed in different manners: Part I couples the critic's target drift to the objective descent via the Euclidean parameter geometry, whereas Part II bounds the target drift using a decoupled, worst-case parameter step.
Finally, complementing our primary analysis, Appendices~\ref{app:linear-entropy} and \ref{app:warm-start-linear-entropy} evaluate the unregularized performance gap in, respectively, the Stochastic and Deterministic Regimes using a standard linear entropy penalty. This alternative approach completely bypasses the Minimal Action Gap assumption, providing robust baseline convergence rates.

By the boundedness of the Shannon entropy $H(\pi(\cdot|s))$, we state as a lemma the fact that the regularized value functions are unconditionally bounded for any policy.

\begin{lem}[Bounded Regularized Values] \label{lem:bounded-values}
For any policy $\pi$, any state $s$, and any action $a$, the true regularized state-value and action-value are bounded:
$$0 \le V_\tau^\pi(s) \le \frac{1 + \tau \log |\mathcal{A}|}{1-\gamma}, \quad 0 \le Q_\tau^\pi(s,a) \le \frac{1 + \tau \log |\mathcal{A}|}{1-\gamma}.$$
Consequently for any temperature $\tau \in (0, \tau_{max}]$, we have $V_\tau^\pi(s) \le V_{max}$ and $Q_\tau^\pi(s,a) \le V_{max}$ globally,
where $V_{max} \triangleq \frac{1 + \tau_{max} \log |\mathcal{A}|}{1-\gamma}$.
\end{lem}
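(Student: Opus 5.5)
The plan is to rewrite $V_\tau^\pi$ as an expected discounted sum of per-state quantities, bound each term pointwise, and then obtain the $Q$ bounds from the soft Bellman relation.

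First, using the tower property conditionally on $s_t$, the term $-\tau\log\pi(a_t|s_t)$ averaged over $a_t\sim\pi(\cdot|s_t)$ equals $\tau H(\pi(\cdot|s_t))$. This gives
\begin{align*}
V_\tau^\pi(s)=\mathbb{E}_\pi\Big[\sum_{t=0}^\infty \gamma^t\big(\mathbb{E}_{a\sim\pi}[r(s_t,a)]+\tau H(\pi(\cdot|s_t))\big)\,\Big|\, s_0=s\Big].
\end{align*}
The interchange of sum and expectation needs a little care, because $-\log\pi(a|s)$ is unbounded when $\pi(a|s)$ is small. I handle it with Tonelli: every summand is nonnegative, since $r\ge 0$ and $-\log\pi\ge 0$. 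For each state, $0\le r\le 1$ and $0\le H(\pi(\cdot|s))\le \log|\mathcal{A}|$; the latter is the standard maximum-entropy bound, for example by Jensen's inequality applied to the concave $\log$. Hence each summand lies in $[0,\,1+\tau\log|\mathcal{A}|]$. Summing the geometric series yields $0\le V_\tau^\pi(s)\le (1+\tau\log|\mathcal{A}|)/(1-\gamma)$.

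For $Q$, I use the relation $Q_\tau^\pi(s,a)=r(s,a)+\gamma\,\mathbb{E}_{s'}[V_\tau^\pi(s')]$. Nonnegativity is immediate. For the upper bound,
\begin{align*}
Q_\tau^\pi(s,a)\le 1+\gamma\,\frac{1+\tau\log|\mathcal{A}|}{1-\gamma}\le \frac{1+\tau\log|\mathcal{A}|}{1-\gamma},
\end{align*}
where the last step holds because $1\le 1+\tau\log|\mathcal{A}|$. Alternatively, one can apply the same series argument directly, noting that the first step carries no entropy term.

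Finally, the bound is monotone in $\tau$. So for every $\tau\le\tau_{max}$ it is dominated by $V_{max}$.

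The only mildly delicate point is the justification of exchanging expectation and infinite sum when $\log\pi$ may be unbounded. Nonnegativity together with the monotone convergence theorem settles it.
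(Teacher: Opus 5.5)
Your proposal is correct and follows essentially the same route as the paper: rewrite $V_\tau^\pi$ as a discounted sum of per-step terms in $[0,\,1+\tau\log|\mathcal{A}|]$, sum the geometric series, and handle $Q_\tau^\pi$ by noting that the first step has reward in $[0,1]$ and no entropy term. Your main $Q$ argument goes through the soft Bellman relation $1+\gamma\frac{1+\tau\log|\mathcal{A}|}{1-\gamma}\le\frac{1+\tau\log|\mathcal{A}|}{1-\gamma}$ instead, which is an equivalent minor variant. Your Tonelli justification, using that $-\log\pi\ge 0$, is a useful rigor step that the paper leaves implicit.
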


\subsection{Uncentered Natural Actor-Critic Algorithm}

We consider an entropy-regularized Natural Policy Gradient (NPG) algorithm with compatible function approximation in an actor-critic framework.
An explicit policy (the \textit{actor}) is maintained to select actions, while a value function estimator (the \textit{critic}) is learned to evaluate the natural policy gradient for guiding the policy update.
For a training policy $\pi_t \triangleq \pi_{\omega_t}$ parameterized by the actor parameter $\omega_t$, the ideal critic parameter $\theta^*_\tau(\omega_t) \in \mathbb{R}^d$ is defined as the minimizer of the Mean Squared Error (MSE) objective in the regularized $Q$-value projection:
$$\theta^*_\tau(\omega_t) \triangleq \arg\min_{\theta \in \mathbb{R}^d} \mathbb{E}_{s \sim d^{\pi_t}, a \sim \pi_t} \left[ \big( Q_\tau^{\pi_t}(s,a) - \theta^\top \phi(s,a) \big)^2 \right].$$
The inherent approximation error is $\epsilon_t(s,a) \triangleq Q_\tau^{\pi_t}(s,a) - \theta^*_\tau(\omega_t)^\top \phi(s,a)$.
For the entropy-regularized objective, the ideal uncentered NPG update incorporates the entropy penalty directly into the policy parameter space:
$$\omega_{t+1} = \omega_t + \eta_t \big( \theta^*_\tau(\omega_t) - \tau \omega_t \big),$$
where $\eta_t > 0$ is the actor step size. This formulation adapts standard compatible NPG \citep{kakade2001natural, peters2008natural} to use uncentered $Q$-values and features. Under exact $Q$-function realizability, this coincides with the standard entropy-regularized NPG direction. Otherwise, it incurs an approximation bias quantified later (Lemma~\ref{lem:uncentered-gradient}). See Appendix~\ref{app:critic-comparison} for a detailed derivation and comparison.
The critic parameter $\theta^*_\tau(\omega_t)$ can be viewed to both parameterize $Q_\tau^{\pi_t}(\cdot)$ and define the ideal actor update.


In practice, we concurrently train an actual critic parameter to track the ideal target $\theta^*_\tau(\omega_t)$ using stochastic samples.
Specifically, we define the filtration $\mathcal{F}_t$ to include the algorithmic trajectory up to the parameters $(\theta_t, \omega_t)$.
At each iteration $t$, conditionally on $\mathcal{F}_t$, we draw a sample $(s_t, a_t) \sim d^{\pi_t} \times \pi_t$ (referred to as \textit{conditionally independent sampling}) and obtain an unbiased estimate $\hat{Q}_t$ of the regularized $Q$-value such that $\mathbb{E}[\hat{Q}_t \mid \mathcal{F}_t, s_t, a_t] = Q_\tau^{\pi_t}(s_t, a_t)$.
We define the stochastic gradient of the critic update (corresponding to the gradient of half the MSE objective) as:
\begin{align}
g_t^{cr}(s_t, a_t) = \big( \theta_t^\top \phi(s_t, a_t) - \hat{Q}_t \big) \phi(s_t, a_t). \label{eq:grad-cr}
\end{align}
Using this sample gradient, the critic parameter is updated to $\theta_{t+1}$ via stochastic gradient descent (SGD), and the resulting critic tracking error is evaluated as $e_t \triangleq \theta_{t+1} - \theta^*_\tau(\omega_t)$.
To ensure the unbiasedness of $\hat{Q}_t$, geometrically stopped Monte Carlo rollouts can be used as described in \citet[Algorithm~3]{yuan2023linear}. Because such a rollout simulates a trajectory with multiple raw transitions until termination, translating the number of SGD updates into raw environment interactions incurs an additional $\mathcal{O}((1-\gamma)^{-1})$ expected-horizon factor, which, for simplicity, is suppressed in our sample complexity discussion.

Alternatively, as in Temporal Difference (TD) learning \citep{sutton2018reinforcement,szepesvari2010algorithms}, $\hat{Q}_t$ can be constructed using bootstrapping, which incurs an approximation bias.
The update of $\theta_{t+1}$ using these two choices for $\hat{Q}_t$ corresponds directly to the standard Monte Carlo and TD methods for action-value evaluation.
While our theoretical analysis assumes unbiased Monte Carlo estimates to isolate the core dynamics, we leave the study of bootstrapping estimates to future work.

We employ an alternating update scheme: the actor parameter is updated to $\omega_{t+1}$ using the freshly updated critic $\theta_{t+1}$. Accordingly, we define the effective gradient for the actor update as:
\begin{align}
g_t^{ac} \triangleq \theta_{t+1} - \tau \omega_t. \label{eq:grad-ac}
\end{align}
While early actor-critic theory often employs a simultaneous update scheme---where the actor evaluates its update using the concurrent critic $\theta_t$ \citep[e.g.,][]{konda2000actor, bhatnagar2009natural, wu2020finite}---our alternating update reflects the sequential execution flow of modern practical implementations. Nevertheless, we note that our theoretical analysis and convergence guarantees can be readily applied to the simultaneous update algorithm.

\begin{algorithm}[t]
\caption{Single-Loop, Entropy-Regularized, Uncentered Natural Actor-Critic}
\label{alg:uncentered-nac}
\begin{algorithmic}[1]
\REQUIRE Initial actor parameter $\omega_0$ satisfying $\|\omega_0\|_2 \le \frac{B_\theta}{\tau_{max}}$, initial critic parameter $\theta_0$ satisfying $\|\theta_0\|_2\le B_\theta$, temperature $\tau \in (0,\tau_{max}]$, critic projection radius $R_\theta \, (=B_\theta)$, step-size schedules $\{\eta_t\}_{t \ge 0}$ and $\{\alpha_t\}_{t \ge 0}$.
\FOR{$t = 0, 1, \dots, T-1$}
    \STATE \textbf{Sampling:} Draw a state-action pair $(s_t, a_t) \sim d^{\pi_t} \times \pi_t$ under the current policy $\pi_t \triangleq \pi_{\omega_t}$.
    \STATE \textbf{Evaluation:} Obtain an unbiased estimate $\hat{Q}_t$ of the regularized action-value $Q_\tau^{\pi_t}(s_t, a_t)$.
    \STATE \textbf{Critic Update:} Compute the stochastic critic gradient \eqref{eq:grad-cr} and update via constrained SGD:
    $$\theta_{t+1} = \Pi_{R_\theta} \Big[ \theta_t - \alpha_t g_t^{cr}(s_t, a_t) \Big].$$ \vspace*{-.3in}
    \STATE \textbf{Actor Update:} Update the actor parameter using the effective actor gradient \eqref{eq:grad-ac}: 
    $$\omega_{t+1}  = \omega_t + \eta_t g_t^{ac} = \omega_t + \eta_t \big( \theta_{t+1} - \tau \omega_t \big)$$  \vspace*{-.4in}
\ENDFOR
\ENSURE The output policy sequence $\{\pi_t\}_{t=0}^T$.
\end{algorithmic}
\end{algorithm}

In summary, the entropy-regularized, uncentered Natural Actor-Critic (NAC) algorithm executes these coupled updates iteratively as a \textit{single-loop} method (Algorithm~\ref{alg:uncentered-nac}). Instead of employing a double-loop architecture that freezes the actor to perform multiple critic updates, we continuously interleave a single critic update with a single actor update at every iteration. To guarantee algorithmic stability, the critic is constrained to a ball of radius $R_\theta$ (taking $R_\theta = B_\theta$ for the structural constant $B_\theta$ in Lemma~\ref{lem:structural-bounds} later). We reserve the unsubscripted expectation $\mathbb{E}[\cdot]$ to denote the expectation over the randomness of the training algorithm (i.e., the dynamic sampling process).

We emphasize two core aspects of our algorithmic design: (i) using uncentered $Q$-values and uncentered features, and (ii) realizing the NPG update directly via the action-value parameter iterate $\theta_{t+1}$, instead of solving the least-squares projection in a double-loop architecture.
While the use of $Q$-values in NPG (often referred to as Q-NPG) is considered in prior literature \citep{agarwal2021theory, yuan2023linear}, such theoretical analyses generally do not study estimating the NPG via the action-value parameter iterates in a single-loop scheme.
We refer readers to Appendix~\ref{app:critic-comparison} for a further comparison between this uncentered algorithmic choice and standard centered NPG algorithms with compatible function approximation.

In Appendix~\ref{app:nu_sampling}, we extend our theoretical analysis to accommodate exploratory $\nu$-sampling by introducing generalized off-policy mismatch bounds. While the main text focuses on on-policy sampling from the objective's initial distribution $\mu$, employing a restart distribution $\nu$ is a standard mechanism to improve the state-action coverage for training the critic \citep{agarwal2021theory, yuan2023linear}. Subtle differences emerge across our Part I and Part II analyses concerning how the concentrability and mismatch bounds are invoked. Nevertheless, subject to these generalized assumptions, the asymptotic convergence rates remain identical to those in the main text.

\subsection{Structural Assumptions} \label{sec:struc-assumptions}

Before detailing the specific conditions, we make a general note on the structural assumptions used throughout this paper. To accommodate our goal of establishing unregularized convergence as the temperature drops ($\tau \to 0$), we formulate several assumptions (Assumptions~\ref{ass:bounded-q}--\ref{ass:approximation-error} in this section, and Assumptions~\ref{ass:pd-fim}--\ref{ass:optimal-uncentered} later) requiring their respective structural constants to be independent of $\tau \in (0, \tau_{max}]$. However, for the regularized convergence with a fixed temperature $\tau > 0$, our theoretical results remain valid even if these involved constants exhibit a $\tau$-dependence.

For analytical simplicity, to control the magnitudes of the actor and critic updates, we first assume the feature representations of the state-action pairs are bounded.

\begin{ass}[Bounded Features] \label{ass:bounded-features}
$\sup_{s,a} \|\phi(s,a)\|_2 \le B_\phi$.
\end{ass}

Algorithm~\ref{alg:uncentered-nac} relies on stochastic estimates of the regularized action-values. As shown in Lemma~\ref{lem:bounded-values}, the true regularized action-values $Q_\tau^{\pi_t}(s,a)$ are bounded by the constant $V_{max}$ which is independent of $\tau \in (0, \tau_{max}]$.
For analytical simplicity, we assume a similar uniform, $\tau$-independent bound on the second moment of the unbiased estimate $\hat{Q}_t$.

\begin{ass}[Bounded Q-value Second Moment] \label{ass:bounded-q}
There exists a constant $Q_{max} > 0$, independent of the temperature $\tau \in (0, \tau_{max}]$, such that $\mathbb{E}[\hat{Q}_t^2 \mid \mathcal{F}_t, s_t, a_t] \le Q_{max}^2$ almost surely for all iterations $t \ge 0$ under Algorithm~\ref{alg:uncentered-nac}. By the Tower Property, this directly implies $\mathbb{E}[\hat{Q}_t^2 \mid \mathcal{F}_t] \le Q_{max}^2$. Consistent with the true regularized $Q$-values (Lemma~\ref{lem:bounded-values}), we assume $Q_{max} = \mathcal{O}((1-\gamma)^{-1})$.
\end{ass}

To ensure the critic's tracking error contracts stably across both the stochastic and deterministic regimes, we require the uncentered feature moment matrix to remain uniformly positive-definite for all training policies. For any policy $\pi$, let $\bar{\Sigma}_{unc}(\pi) \triangleq \mathbb{E}_{s \sim d^\pi, a \sim \pi}[\phi(s,a) \phi(s,a)^\top]$.

\begin{ass}[Positive-Definite Uncentered Feature Moment] \label{ass:pd-uncentered}
There exists a constant $\kappa > 0$, independent of the temperature $\tau \in (0, \tau_{max}]$, such that for all training policies $\{\pi_t\}_{t \ge 0}$ under Algorithm~\ref{alg:uncentered-nac}:
$$ \bar{\Sigma}_{unc}(\pi_t) \succeq \kappa I, $$
where $I$ is the identity matrix, and $A \succeq B$ denotes that $A - B$ is a positive semi-definite matrix.
\end{ass}

By combining Assumptions~\ref{ass:bounded-features}, \ref{ass:bounded-q}, \ref{ass:pd-uncentered}, and the bounded regularized values (Lemma~\ref{lem:bounded-values}), we establish temperature-independent bounds on the ideal uncentered critic, the actor parameter iterates, and the actor and critic update directions.

\begin{lem}[Structural Bounds of the Algorithm] \label{lem:structural-bounds}
(i) Under Assumptions~\ref{ass:bounded-features} and \ref{ass:pd-uncentered}, the ideal uncentered critic is bounded for all training policies by the $\tau$-independent constant $B_\theta$:
$$ \|\theta^*_\tau(\omega_t)\|_2 \le \frac{B_\phi V_{max}}{\kappa} \triangleq B_\theta. $$
Additionally under Assumption~\ref{ass:bounded-q}, the critic update direction is bounded by a temperature-independent constant $G_{cr} > 0$:
\begin{align*}
\mathbb{E}\big[\|g_t^{cr}(s_t,a_t)\|_2^2 \mid \mathcal{F}_t\big] & \le 2 B_\phi^2 \big( B_\theta^2 B_\phi^2 + Q_{max}^2 \big) \triangleq G_{cr}^2.
\end{align*}
(ii) Under Assumptions~\ref{ass:bounded-features} and \ref{ass:pd-uncentered}, by setting the critic projection radius to $R_\theta = B_\theta$, if the actor step size satisfies $\eta_t \le 1/\tau$ and the actor is initialized such that $\|\omega_0\|_2 \le \frac{B_\theta}{\tau_{max}}$ (for example, $\omega_0=0$), then the actor iterates $\omega_t$ are bounded for all $t \ge 0$ by:
$$ \|\omega_t\|_2 \le \frac{B_\theta}{\tau}. $$
Moreover, the actor update direction is bounded by a temperature-independent constant $G_{ac} > 0$:
\begin{align*}
\|g_t^{ac}\|_2 &\le 2 B_\theta \triangleq G_{ac}.
\end{align*}
\end{lem}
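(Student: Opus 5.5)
For part (i), I will write down the normal equations of the least-squares projection. With $\bar{\Sigma}_{unc}(\pi_t)$ invertible by Assumption~\ref{ass:pd-uncentered}, the minimizer is
$$\theta^*_\tau(\omega_t)=\bar{\Sigma}_{unc}(\pi_t)^{-1}\,\mathbb{E}_{s\sim d^{\pi_t},a\sim\pi_t}\big[\phi(s,a)Q_\tau^{\pi_t}(s,a)\big].$$
The spectral norm of the inverse is at most $1/\kappa$. By Assumption~\ref{ass:bounded-features} and Lemma~\ref{lem:bounded-values}, the cross-moment vector has norm at most $B_\phi V_{max}$, via Jensen's inequality for the norm. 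Together these give $\|\theta^*_\tau(\omega_t)\|_2\le B_\phi V_{max}/\kappa=B_\theta$.

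Still in part (i), I bound the critic gradient second moment. Since $\|\theta_t\|_2\le R_\theta=B_\theta$ (by projection for $t\ge1$, and by initialization for $t=0$), I have
$$\|g_t^{cr}\|_2^2\le B_\phi^2\,(\theta_t^\top\phi-\hat{Q}_t)^2\le 2B_\phi^2\big(B_\theta^2B_\phi^2+\hat{Q}_t^2\big),$$
using $(x-y)^2\le 2x^2+2y^2$. Taking the conditional expectation and applying Assumption~\ref{ass:bounded-q} through the tower property yields $G_{cr}^2$.

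For part (ii), I argue by induction on $t$. Rewrite the actor update as
$$\omega_{t+1}=(1-\eta_t\tau)\,\omega_t+\eta_t\tau\,(\theta_{t+1}/\tau).$$
Since $\eta_t\le1/\tau$, this is a convex combination of $\omega_t$ and $\theta_{t+1}/\tau$. The projection gives $\|\theta_{t+1}\|_2\le B_\theta$, so $\|\theta_{t+1}/\tau\|_2\le B_\theta/\tau$. The base case is $\|\omega_0\|_2\le B_\theta/\tau_{max}\le B_\theta/\tau$, because $\tau\le\tau_{max}$. The triangle inequality then propagates the bound $B_\theta/\tau$ to every $t$. Finally, $\|g_t^{ac}\|_2\le\|\theta_{t+1}\|_2+\tau\|\omega_t\|_2\le 2B_\theta$.

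None of these steps is a real obstacle; the argument is routine bookkeeping. The points needing care are the invertibility and spectral bound in (i), the fact that the $t=0$ critic bound relies on the initialization $\|\theta_0\|_2\le B_\theta$, and the convex-combination form in (ii). The choice $R_\theta=B_\theta$ matters for the constants to match, but not for the logic of the argument.
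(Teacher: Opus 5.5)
Your proposal is correct and follows the paper's proof essentially step for step: the normal-equation form $\theta^*_\tau(\omega_t)=\bar{\Sigma}_{unc}(\pi_t)^{-1}\mathbb{E}[\phi Q_\tau^{\pi_t}]$ with $\|\bar{\Sigma}_{unc}(\pi_t)^{-1}\|_2\le 1/\kappa$ and Jensen for (i), the $(x-y)^2\le 2x^2+2y^2$ bound with the tower property for $G_{cr}^2$, and induction on $\omega_{t+1}=(1-\tau\eta_t)\omega_t+\eta_t\theta_{t+1}$ for (ii), where your convex-combination phrasing is the same triangle-inequality computation. You also point out that the $t=0$ case of the critic-gradient bound needs the initialization $\|\theta_0\|_2\le B_\theta$ from Algorithm~\ref{alg:uncentered-nac}; the paper's write-up leaves this implicit.
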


The following lemma establishes the Lipschitz continuity of the ideal uncentered critic mapping $\theta^*_\tau(\omega)$, subject to the positive-definiteness of the uncentered feature moment matrix at the associated parameter $\omega$. Consequently, by Assumption~\ref{ass:pd-uncentered}, this lemma can be applied to the parameters of any two training policies generated by Algorithm~\ref{alg:uncentered-nac}.
We explicitly define a Lipschitz constant through the maximum temperature $\tau_{max}$ to ensure asymptotic stability as $\tau \to 0$.

\begin{lem}[Lipschitz Ideal Uncentered Critic] \label{lem:lipschitz-critic}
For any $\kappa > 0$, define the parameter set $\Omega_\kappa \triangleq \{\omega \in \mathbb{R}^d : \bar{\Sigma}_{unc}(\pi_\omega) \succeq \kappa I\}$.
Under Assumption~\ref{ass:bounded-features}, the ideal uncentered critic mapping $\theta^*_\tau(\omega)$ is $L_{\theta,\tau}$-Lipschitz continuous over $\Omega_\kappa$. Specifically, the Lipschitz constant is bounded by:
$$L_{\theta,\tau} \triangleq \frac{2 B_\phi^2 (1 + \tau \log |\mathcal{A}|)}{\kappa (1-\gamma)^2} \left( 3 + \frac{B_\phi^2}{\kappa} \right),$$
such that for any $\omega, \omega' \in \Omega_\kappa$, $\|\theta^*_\tau(\omega) - \theta^*_\tau(\omega')\|_2 \le L_{\theta,\tau} \|\omega - \omega'\|_2$. Consequently, for any temperature $\tau \in (0, \tau_{max}]$, the mapping $\theta^*_\tau(\omega)$ is $L_\theta$-Lipschitz continuous over $\Omega_\kappa$, where $L_\theta \triangleq L_{\theta,\tau_{max}}$.
\end{lem}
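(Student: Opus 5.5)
The plan is to use the closed form of the ideal critic and a resolvent-type perturbation identity. For $\omega\in\Omega_\kappa$, the first-order condition of the least-squares problem gives $\theta^*_\tau(\omega)=\Sigma(\omega)^{-1}b(\omega)$. Here $\Sigma(\omega)\triangleq\bar\Sigma_{unc}(\pi_\omega)$ and $b(\omega)\triangleq\mathbb{E}_{s\sim d^{\pi_\omega},a\sim\pi_\omega}[Q^{\pi_\omega}_\tau(s,a)\phi(s,a)]$. For $\omega,\omega'\in\Omega_\kappa$ I would write
\[
\theta^*_\tau(\omega)-\theta^*_\tau(\omega')=\Sigma(\omega)^{-1}\big(b(\omega)-b(\omega')\big)+\Sigma(\omega)^{-1}\big(\Sigma(\omega')-\Sigma(\omega)\big)\theta^*_\tau(\omega').
\]
This uses $\|\Sigma(\omega)^{-1}\|_2\le 1/\kappa$ and, as in Lemma~\ref{lem:structural-bounds}(i), $\|\theta^*_\tau(\omega')\|_2\le B_\phi(1+\tau\log|\mathcal{A}|)/((1-\gamma)\kappa)$. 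The second term will produce the $B_\phi^2/\kappa$ piece of the constant, and the first term the constant $3$.

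The basic ingredient is Lipschitz control of the state-action sampling measure $\nu_\omega\triangleq d^{\pi_\omega}\times\pi_\omega$ in total variation.
\begin{itemize}
\item \emph{Policy perturbation.} Since $\nabla_\omega\log\pi_\omega(a|s)=\phi(s,a)-\mathbb{E}_{a'\sim\pi_\omega}\phi(s,a')$ has norm at most $2B_\phi$, the mean value theorem gives two bounds: $|\log\pi_\omega(a|s)-\log\pi_{\omega'}(a|s)|\le 2B_\phi\|\omega-\omega'\|_2$, and $\|\pi_\omega(\cdot|s)-\pi_{\omega'}(\cdot|s)\|_1\le 2B_\phi\|\omega-\omega'\|_2$. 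A sharper constant $B_\phi$ for the $L_1$ bound might come from bounding the derivative of $\sum_a|\pi(a|s)-\pi'(a|s)|$ directly.
\item \emph{State visitation.} The standard resolvent identity $d^\pi-d^{\pi'}=\gamma\,(d^{\pi}P^{\pi}-d^{\pi}P^{\pi'})(I-\gamma P^{\pi'})^{-1}$, with the factor $(1-\gamma)$ absorbed, gives $\|d^{\pi}-d^{\pi'}\|_1\le\frac{\gamma}{1-\gamma}\sup_s\|\pi(\cdot|s)-\pi'(\cdot|s)\|_1$.
\item \emph{Joint measure.} Combining the two, $\|\nu_\omega-\nu_{\omega'}\|_1\le\frac{1}{1-\gamma}\sup_s\|\pi_\omega-\pi_{\omega'}\|_1$.
\end{itemize}
With this, $\|\Sigma(\omega)-\Sigma(\omega')\|_2\le B_\phi^2\|\nu_\omega-\nu_{\omega'}\|_1=\mathcal{O}(B_\phi^3\|\omega-\omega'\|_2/(1-\gamma))$. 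Multiplying by $\|\Sigma(\omega)^{-1}\|_2$ and $\|\theta^*_\tau(\omega')\|_2$ then yields a term of the form $B_\phi^2(1+\tau\log|\mathcal{A}|)B_\phi^2/(\kappa^2(1-\gamma)^2)$, after matching constants.

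For $b$, I would split
\[
b(\omega)-b(\omega')=\mathbb{E}_{\nu_\omega}\big[(Q^{\pi_\omega}_\tau-Q^{\pi_{\omega'}}_\tau)\phi\big]+\Big(\mathbb{E}_{\nu_\omega}-\mathbb{E}_{\nu_{\omega'}}\Big)\big[Q^{\pi_{\omega'}}_\tau\phi\big].
\]
The second piece is at most $B_\phi\frac{1+\tau\log|\mathcal{A}|}{1-\gamma}\|\nu_\omega-\nu_{\omega'}\|_1$ by Lemma~\ref{lem:bounded-values}. The first piece needs a sup-norm Lipschitz bound on the regularized $Q$-function, which I expect to be the main obstacle because of the entropy term.

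To get that bound, I would apply the performance-difference style identity
\[
Q^\pi_\tau-Q^{\pi'}_\tau=\gamma P(I-\gamma P^{\pi})^{-1}\Big[\textstyle\sum_a(\pi-\pi')(a|\cdot)\big(Q^{\pi'}_\tau-\tau\log\pi'\big)(\cdot,a)-\tau\sum_a\pi(a|\cdot)\big(\log\pi-\log\pi'\big)(\cdot,a)\Big].
\]
The second bracket is bounded by $2\tau B_\phi\|\omega-\omega'\|_2$ via the log-policy Lipschitz bound. The first bracket is the delicate one, since $\log\pi'$ is unbounded. I would handle it by centering: subtract $V^{\pi'}_\tau(s)$, which is allowed because $\sum_a(\pi-\pi')=0$. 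This writes the first bracket as $\sum_a(\pi-\pi')A^{\pi'}_\tau$. Then I would bound $\sum_a\pi\,(-\log\pi')$ and $\sum_a\pi'\log\pi'$ separately; the first is a cross-entropy, which can be controlled by the entropy plus a KL term. Alternatively, I would use the mean value theorem along $\omega_u=\omega'+u(\omega-\omega')$. Differentiating $V^{\pi_{\omega_u}}_\tau$ in $u$ gives the regularized policy gradient $\frac{1}{1-\gamma}\mathbb{E}[A^{\pi_u}_\tau\nabla\log\pi_u]$, and $\mathbb{E}_{a\sim\pi}[|A_\tau^\pi|]$ is bounded by $\mathcal{O}((1+\tau\log|\mathcal{A}|)/(1-\gamma))$ because $\mathbb{E}_\pi[-\log\pi]\le\log|\mathcal{A}|$. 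This route gives $\|Q^{\pi_\omega}_\tau-Q^{\pi_{\omega'}}_\tau\|_\infty\le c\,B_\phi(1+\tau\log|\mathcal{A}|)(1-\gamma)^{-2}\|\omega-\omega'\|_2$.

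Summing the three contributions, with factor $1/\kappa$ from $\Sigma^{-1}$, and collecting constants should give the displayed $L_{\theta,\tau}$. Finally, I would note that $L_{\theta,\tau}$ is increasing in $\tau$, which yields the $\tau_{max}$ version.
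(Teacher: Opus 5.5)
Your proposal is correct, and its architecture matches the paper's proof. It uses the same resolvent identity, with $\|\Sigma(\cdot)^{-1}\|_2\le 1/\kappa$ invoked only at the two endpoints, so the possible non-convexity of $\Omega_\kappa$ is harmless. It makes the same split of $b(\omega)-b(\omega')$ into a $Q$-change term and a measure-change term. With your $2B_\phi$ bound on $\|\pi_\omega(\cdot|s)-\pi_{\omega'}(\cdot|s)\|_1$, it also recovers exactly the paper's constants $L_\Sigma=\frac{2B_\phi^3}{1-\gamma}$ and $L_b\le\frac{(4\gamma+2)B_\phi^2 r_{max}}{(1-\gamma)^2}\le\frac{6B_\phi^2 r_{max}}{(1-\gamma)^2}$, where $r_{max}=1+\tau\log|\mathcal{A}|$.

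The one real difference is how the change of sampling measure is controlled. You prove the finite-difference bound $\|\nu_\omega-\nu_{\omega'}\|_1\le\frac{1}{1-\gamma}\sup_s\|\pi_\omega(\cdot|s)-\pi_{\omega'}(\cdot|s)\|_1$ through the visitation resolvent. The paper instead differentiates $\mathbb{E}_{d^{\pi_\omega},\pi_\omega}[f]$ with the policy gradient theorem to obtain \eqref{eq:gen-deriv-limit}, then applies the mean value theorem. These are the integrated and infinitesimal forms of the same $\frac{2B_\phi}{1-\gamma}$ estimate. Yours is a bit more elementary for the $\bar\Sigma$ and measure-change parts, since it avoids interchanging differentiation with the discounted sum. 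The paper's single derivative formula handles $\bar\Sigma$ and $b$ uniformly.

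For the $Q$-term, your second route is equivalent to Step~(ii) of the proof of Lemma~\ref{lem:smoothness}. That route applies the mean value theorem along the segment, using the regularized policy gradient and $\mathbb{E}_{\pi}[-\tau\log\pi]\le\tau\log|\mathcal{A}|$. The paper cites that step for $\|\nabla_\omega Q^{\pi_\omega}_\tau\|_2\le\frac{4\gamma B_\phi r_{max}}{(1-\gamma)^2}$. This bound holds on all of $\mathbb{R}^d$, so it does not matter if the segment leaves $\Omega_\kappa$.

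Drop your first route as written, though. Bounding the cross-entropy $\sum_a\pi(-\log\pi')$ and $\sum_a\pi'\log\pi'$ separately throws away the cancellation between them. It then gives only an $O(\tau\log|\mathcal{A}|)$ bound on the entropy part of the bracket, not an $O(\|\omega-\omega'\|_2)$ one. It can be repaired using $|\pi(a|s)-\pi'(a|s)|\le(e^{2B_\phi\|\omega-\omega'\|_2}-1)\pi'(a|s)$. That bounds the entropy part by $(e^{2B_\phi\|\omega-\omega'\|_2}-1)\tau H(\pi'(\cdot|s))$, which gives a Lipschitz bound after subdividing the segment. The derivative route is cleaner.
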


Under compatible function approximation, the expressivity of the ideal uncentered critic is tied to the feature representation of the log-linear policy class. We assume this shared architecture is sufficiently expressive such that the critic $L_2$ approximation error remains uniformly bounded across the sequence of training policies $\{\pi_t\}_{t \ge 0}$. For the global convergence in Part I (Section~\ref{sec:unregularized-complexity}), we additionally require this bound to hold at the regularized parametric optimum $\pi_{\omega^*_\tau}$.

\begin{ass}[Critic $L_2$ Approximation Error] \label{ass:approximation-error}
The MSE of the ideal uncentered critic is bounded by $\epsilon_{app}$ for all training policies $\{\pi_t\}_{t \ge 0}$ under Algorithm~\ref{alg:uncentered-nac}:
$$ \mathbb{E}_{d^{\pi_t}, \pi_t} [ \epsilon_t(s,a)^2 ] \le \epsilon_{app}. $$
Additionally, for the global analysis in Part I (specifically, Lemma~\ref{lem:global-class-error}), we assume this bound holds at the regularized parametric optimum:
$$ \mathbb{E}_{d^{\omega^*_\tau}, \pi_{\omega^*_\tau}} [ \epsilon_*(s,a)^2 ] \le \epsilon_{app}, $$
where $\epsilon_*(s,a) \triangleq Q_\tau^{\pi_{\omega^*_\tau}}(s,a) - \theta^*_\tau(\omega^*_\tau)^\top \phi(s,a)$.
For analytical simplicity, we assume the bound $\epsilon_{app}$ is independent of the temperature $\tau \in (0, \tau_{max}]$.
\end{ass}

\begin{rem}[$L_\infty$ Approximation and Concentrability]
Because the sample-based critic in Algorithm~\ref{alg:uncentered-nac} is trained via stochastic gradient descent, the $L_2$ error ($\epsilon_{app}$) is the native and practically aligned metric for our analysis. Our theoretical guarantees rely on density ratio bounds (Assumptions~\ref{ass:parametric-density} and \ref{ass:density-ratios}) to manage various distribution shifts, including shifting this $L_2$ error across state measures, and to facilitate associated $\chi^2$-to-KL divergence translations. In Appendix~\ref{app:linfty-bounds}, we demonstrate that if the function approximation error is instead uniformly bounded in the $L_\infty$ norm, then the global joint concentrability constant ($C_{joint}^\dag$) can be eliminated in the Deterministic Regime (Part II). We also isolate a fundamental geometric bottleneck that prevents the full elimination of the parametric joint concentrability constant ($C_{joint}$) in the Stochastic Regime (Part I).
\end{rem}

To facilitate our study of unregularized convergence, we introduce a structural assumption regarding the advantage function of the unregularized MDP. By basic MDP theory, the unregularized optimal value functions $V^*_0$ and $Q^*_0$ are unique. We assume there exists a positive minimal action gap separating the best action from all suboptimal actions at every state. This is a standard regularity condition in tabular MDPs but may be violated in non-tabular MDPs. Due to this positive margin, there is a unique optimal action at each state. This uniqueness facilitates the derivation of the simple policy entropy bound (Lemma~\ref{lem:entropy-bound}) used in the proof of Theorem~\ref{thm:suboptimal-mass}.

\begin{ass}[Minimal Action Gap] \label{ass:action-gap}
There exists a positive minimal action gap $\Delta > 0$ such that for all states $s \in \mathcal{S}$, there is a unique optimal action $a^*(s) = \arg\max_{a \in \mathcal{A}} Q^*_0(s, a)$, and for all suboptimal actions $a \neq a^*(s)$, the unregularized advantage is bounded from below by $\Delta$:
$$ V^*_0(s) - Q^*_0(s,a) \ge \Delta. $$
Consequently, there exists a unique unregularized optimal policy $\pi^*_0$ that maximizes the value function at every state. This policy is deterministic and satisfies $\pi^*_0(a^*(s)|s) = 1$ for all $s \in \mathcal{S}$.
\end{ass}

\section{Suboptimality Identities and Translation Bounds}  \label{sec:suboptimalty-exp-translation}

\subsection{Suboptimality Identities and Decompositions} \label{sec:suboptimality}

In this section, we derive the suboptimality identities and decompositions that govern algorithmic progress.
First, we present the standard performance difference lemma, adapted for the entropy-regularized objective
\citep{kakade2002approximately, mei2020global,cayci2024convergence}.

\begin{lem}[Regularized Performance Difference] \label{lem:pdl}
For any two policies $\pi$ and $\tilde{\pi}$, the difference in their regularized objectives is given by:
\begin{align*}
J_\tau(\tilde{\pi}) - J_\tau(\pi) &= \frac{1}{1-\gamma} \mathbb{E}_{s \sim d^{\tilde{\pi}}} \left[ \sum_{a \in \mathcal{A}} \tilde{\pi}(a|s) \big( Q_\tau^\pi(s,a) - \tau \log \tilde{\pi}(a|s) \big) - V_\tau^\pi(s) \right].
\end{align*}
\end{lem}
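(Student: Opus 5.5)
The plan is to prove the identity by telescoping along a trajectory of $\tilde{\pi}$, using the soft Bellman relations from Section~\ref{sec:reg-objective} for $\pi$. First I write $J_\tau(\tilde{\pi})$ as the expected discounted sum of regularized rewards $r(s_t,a_t) - \tau\log\tilde{\pi}(a_t|s_t)$ along a trajectory generated by $\tilde{\pi}$ from $s_0\sim\mu$. I also write $J_\tau(\pi) = \mathbb{E}_{s_0\sim\mu}[V_\tau^\pi(s_0)]$. Then I insert the telescoping sum $\sum_{t\ge0}\gamma^t\big(\gamma V_\tau^\pi(s_{t+1}) - V_\tau^\pi(s_t)\big) = -V_\tau^\pi(s_0)$, with expectation taken under the $\tilde{\pi}$-trajectory. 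This gives
\begin{align*}
J_\tau(\tilde{\pi}) - J_\tau(\pi) = \mathbb{E}_{\tilde{\pi},s_0\sim\mu}\Big[\sum_{t=0}^\infty \gamma^t\big(r(s_t,a_t) - \tau\log\tilde{\pi}(a_t|s_t) + \gamma V_\tau^\pi(s_{t+1}) - V_\tau^\pi(s_t)\big)\Big].
\end{align*}
The interchange of sums and expectations is justified because every term is bounded: Lemma~\ref{lem:bounded-values} bounds $V_\tau^\pi$ by $V_{max}$, and the entropy term has finite conditional expectation since $-\sum_a\tilde{\pi}\log\tilde{\pi}\le\log|\mathcal{A}|$. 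Dominated convergence then applies with the geometric weights. The telescoping requires $\gamma^t V_\tau^\pi(s_t)\to0$, which again follows from boundedness.

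Next I condition on $(s_t,a_t)$ and use the tower property. By the Markov property and the relation $Q_\tau^\pi(s,a)=r(s,a)+\gamma\mathbb{E}_{s'}[V_\tau^\pi(s')]$, the term $r(s_t,a_t)+\gamma V_\tau^\pi(s_{t+1})$ can be replaced by $Q_\tau^\pi(s_t,a_t)$. Conditioning further on $s_t$ and averaging $a_t\sim\tilde{\pi}(\cdot|s_t)$, the summand becomes
\begin{align*}
f(s_t) \triangleq \sum_a\tilde{\pi}(a|s_t)\big(Q_\tau^\pi(s_t,a)-\tau\log\tilde{\pi}(a|s_t)\big)-V_\tau^\pi(s_t).
\end{align*}
Finally, the definition of $d^{\tilde{\pi}}$ gives $\sum_t\gamma^t\mathbb{E}[f(s_t)] = \frac{1}{1-\gamma}\mathbb{E}_{s\sim d^{\tilde{\pi}}}[f(s)]$ for bounded measurable $f$. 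This is exactly the stated formula.

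The only delicate step is the measure-theoretic bookkeeping on a possibly infinite state space. This covers the integrability of the $\log\tilde{\pi}$ term, which can be unbounded where $\tilde{\pi}(a|s)$ is small, and the exchange of the infinite sum with expectation. I would handle it by first taking the conditional expectation over actions, which makes each per-state summand bounded by $V_{max}+\tau\log|\mathcal{A}|+1+\gamma V_{max}$, and only then summing with the geometric weights.
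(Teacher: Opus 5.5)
Your proposal is correct and is essentially the paper's argument. The paper starts from the soft Bellman equation for $\tilde{\pi}$, derives a one-step recursion for $V^{\tilde{\pi}}_\tau - V^{\pi}_\tau$, and unrolls it along the $\tilde{\pi}$-trajectory. You instead telescope $V^{\pi}_\tau$ directly from the trajectory definition of $J_\tau(\tilde{\pi})$, which is the same computation organized differently, and you also spell out the integrability and sum--expectation interchange that the paper's ``unrolling'' step leaves implicit.
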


Next, we present a geometric identity relating the regularized performance gap of $\pi$ against the regularized global optimum $\pi^*_\tau$ to the state-averaged \textit{reverse} KL divergence $\mathbb{E}_{s \sim d^\pi} [ D_{KL}(\pi  \| \pi^*_\tau)]$, as in \citet[Lemma 26]{mei2020global}.
For any two policies $\pi$ and $\tilde{\pi}$, the KL divergence at state $s$ is defined as $D_{KL}(\tilde{\pi}(\cdot|s) \| \pi(\cdot|s)) \triangleq \sum_{a \in \mathcal{A}} \tilde{\pi}(a|s) \log \frac{\tilde{\pi}(a|s)}{\pi(a|s)}$.
Recall that $\pi^*_\tau$ is the unique soft Bellman-optimal policy for the regularized MDP. Because this identity relies on the statewise optimality of $\pi^*_\tau$, it would not in general hold if $\pi^*_\tau$ is replaced by the parametric optimum $\pi_{\omega^*_\tau}$.

\begin{lem}[Global Regularized Suboptimality Identity] \label{lem:optim-identity}
For any policy $\pi$, the performance gap between $\pi$ and the regularized global optimum $\pi^*_\tau$ is proportional to the state-averaged reverse KL divergence under the state visitation distribution $d^\pi$ of policy $\pi$:
$$ J_\tau(\pi^*_\tau) - J_\tau(\pi) = \frac{\tau}{1-\gamma} \mathbb{E}_{s \sim d^\pi} \big[ D_{KL}(\pi(\cdot|s) \| \pi^*_\tau(\cdot|s)) \big]. $$
\end{lem}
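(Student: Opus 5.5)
The plan is to apply the Regularized Performance Difference Lemma~\ref{lem:pdl} with the roles chosen so that the outer expectation is over $d^\pi$. Concretely, take $\tilde{\pi} = \pi$ and use $\pi^*_\tau$ as the reference policy whose $Q$- and $V$-functions appear inside. This gives
\begin{align*}
J_\tau(\pi) - J_\tau(\pi^*_\tau) = \frac{1}{1-\gamma}\mathbb{E}_{s\sim d^{\pi}}\Big[\sum_a \pi(a|s)\big(Q^*_\tau(s,a) - \tau\log\pi(a|s)\big) - V^*_\tau(s)\Big].
\end{align*}
Everything then reduces to a statewise computation of the bracket. The state measure is already $d^\pi$, as required.

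The key structural input is the soft Bellman optimality characterization of $\pi^*_\tau$. For every state $s$,
\begin{align*}
\pi^*_\tau(a|s) = \exp\big((Q^*_\tau(s,a) - V^*_\tau(s))/\tau\big),
\end{align*}
or equivalently $V^*_\tau(s) = \tau\log\sum_a \exp(Q^*_\tau(s,a)/\tau)$. I will justify this from statewise optimality: $V^*_\tau(s) = \max_{p}\sum_a p(a)\big(Q^*_\tau(s,a) - \tau\log p(a)\big)$, and the maximizer of this entropy-regularized linear objective over the simplex is the Gibbs distribution, with the log-sum-exp value as its maximum. This is the step that uses global, statewise optimality, and it is why the identity fails for $\pi_{\omega^*_\tau}$.

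With this characterization, $Q^*_\tau(s,a) = V^*_\tau(s) + \tau\log\pi^*_\tau(a|s)$. Substituting into the bracket gives
\begin{align*}
\sum_a \pi(a|s)\big(V^*_\tau(s) + \tau\log\pi^*_\tau(a|s) - \tau\log\pi(a|s)\big) - V^*_\tau(s) = -\tau D_{KL}\big(\pi(\cdot|s)\,\|\,\pi^*_\tau(\cdot|s)\big),
\end{align*}
since $\sum_a\pi(a|s)=1$. Negating both sides yields the claim.

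The main obstacle is only the rigorous statement of the soft-optimal Gibbs form, since $\mathcal{S}$ may be infinite. I would invoke the standard fact that the soft Bellman optimality operator is a $\gamma$-contraction in sup norm, bounded by Lemma~\ref{lem:bounded-values}. Its fixed point $Q^*_\tau$ induces the Gibbs policy, which is then the unique statewise optimum $\pi^*_\tau$. Finiteness of $\mathcal{A}$ makes the per-state maximization elementary, and all terms are bounded, so the expectations are well defined.
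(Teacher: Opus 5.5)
Your proposal is correct and matches the paper's proof essentially step for step. The paper likewise derives $\tau\log\pi^*_\tau(a|s) = Q^*_\tau(s,a) - V^*_\tau(s)$ from the statewise soft Bellman maximization (rewriting it as a KL divergence to the Gibbs distribution), then applies Lemma~\ref{lem:pdl} with $\pi^*_\tau$ as the reference policy and the expectation over $d^\pi$; your contraction remark for infinite $\mathcal{S}$ is extra rigor that the paper leaves implicit.
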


Finally, we prove useful decompositions of the regularized performance difference for a training policy $\pi_t$ against any comparator policy $\pi$, such as $\pi_{\omega^*_\tau}$ or $\pi^*_\tau$. This lemma provides two distinct algebraic separations tailored to our two optimization regimes. The Ideal Critic Decomposition isolates the ideal algorithmic progress, which we bound in Part I to establish the Parameter-Space PL Condition (Lemma~\ref{lem:pl-condition}). Conversely, the Actual Critic Decomposition isolates the actor update direction alongside the critic tracking error. In Part II, we bound this algorithmic progress via the Bregman Three-Point identity (Lemma~\ref{lem:pmd-progress}) and substitute it back into the Actual Critic Decomposition to establish the PMD Actor Progress Bound (Lemma~\ref{lem:pmd-telescoping-l2}). For notational clarity, for any state $s$, we define the inner product over the discrete action space between any two functions $f$ and $g$ as $\langle f, g \rangle_{\mathcal{A}} \triangleq \sum_{a \in \mathcal{A}} f(s,a) g(s,a)$.

\begin{lem}[Regularized Suboptimality Decompositions] \label{lem:sub-decomp}
For the log-linear policy parameterization, the regularized performance difference between a training policy $\pi_t$ and a comparator policy $\pi$ admits two exact decompositions, depending on whether it is evaluated using the ideal critic $\theta^*_\tau(\omega_t)$ or the actual critic $\theta_{t+1}$:

\noindent \textit{(i) Ideal Critic Decomposition (Part I):}
\begin{align*}
(1-\gamma) \big( J_\tau(\pi) - J_\tau(\pi_t) \big) &= \underbrace{-\tau \mathbb{E}_{d^\pi}[D_{KL}(\pi \| \pi_t)]}_{\text{Restorative Entropy Force}} + \underbrace{\mathbb{E}_{d^\pi} \Big[ \langle (\theta^*_\tau(\omega_t) - \tau \omega_t)^\top \phi, \pi - \pi_t \rangle_{\mathcal{A}} \Big]}_{\text{Ideal Algorithmic Progress}} \\
&\quad + \underbrace{\mathbb{E}_{d^\pi} \big[ \langle \epsilon_t, \pi - \pi_t \rangle_{\mathcal{A}} \big]}_{\text{Approximation Bias}}.
\end{align*}

\noindent \textit{(ii) Actual Critic Decomposition (Part II):}
\begin{align*}
(1-\gamma) \big( J_\tau(\pi) - J_\tau(\pi_t) \big) &= -\tau \mathbb{E}_{d^\pi}[D_{KL}(\pi \| \pi_t)] + \underbrace{\mathbb{E}_{d^\pi} \Big[ \langle (\theta_{t+1} - \tau \omega_t)^\top \phi, \pi - \pi_t \rangle_{\mathcal{A}} \Big]}_{\text{Algorithmic Progress}} \\
&\quad + \underbrace{\mathbb{E}_{d^\pi} \Big[ \langle -e_t^\top \phi, \pi - \pi_t \rangle_{\mathcal{A}} \Big]}_{\text{Critic Tracking Error}} + \mathbb{E}_{d^\pi} \big[ \langle \epsilon_t, \pi - \pi_t \rangle_{\mathcal{A}} \big].
\end{align*}
\end{lem}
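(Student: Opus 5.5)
We start from the Regularized Performance Difference (Lemma~\ref{lem:pdl}) with $\tilde\pi=\pi$ and base policy $\pi_t$. Using $V_\tau^{\pi_t}(s)=\sum_a\pi_t(a|s)\big(Q_\tau^{\pi_t}(s,a)-\tau\log\pi_t(a|s)\big)$, the integrand becomes
\[
\sum_a \pi(a|s)\big(Q_\tau^{\pi_t}(s,a)-\tau\log\pi(a|s)\big)-\sum_a\pi_t(a|s)\big(Q_\tau^{\pi_t}(s,a)-\tau\log\pi_t(a|s)\big).
\]
We then add and subtract $\tau\sum_a\pi(a|s)\log\pi_t(a|s)$. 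The terms $-\tau\sum_a\pi\log\pi+\tau\sum_a\pi\log\pi_t$ combine to $-\tau D_{KL}(\pi\|\pi_t)$, which is the Restorative Entropy Force. What remains is $\langle Q_\tau^{\pi_t}-\tau\log\pi_t,\ \pi-\pi_t\rangle_{\mathcal A}$.

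Next we use the log-linear structure, $\log\pi_t(a|s)=\omega_t^\top\phi(s,a)-\log Z_{\omega_t}(s)$. The normalizer $\log Z_{\omega_t}(s)$ does not depend on $a$. Since $\sum_a(\pi-\pi_t)(a|s)=0$, it drops out of the inner product, so the remainder equals $\langle Q_\tau^{\pi_t}-\tau\omega_t^\top\phi,\ \pi-\pi_t\rangle_{\mathcal A}$. This cancellation is the only nontrivial point, and it is the step that lets the entropy term appear as $-\tau\omega_t$ in parameter space.

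For (i), we substitute $Q_\tau^{\pi_t}=\theta^*_\tau(\omega_t)^\top\phi+\epsilon_t$, which is the definition of $\epsilon_t$. This splits the remainder into the Ideal Algorithmic Progress and the Approximation Bias. For (ii), we further write $\theta^*_\tau(\omega_t)=\theta_{t+1}-e_t$ using $e_t=\theta_{t+1}-\theta^*_\tau(\omega_t)$. This yields the Algorithmic Progress term with $\theta_{t+1}-\tau\omega_t$ and the Critic Tracking Error $\langle -e_t^\top\phi,\pi-\pi_t\rangle_{\mathcal A}$.

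Finally, we multiply by $1-\gamma$ and take $\mathbb{E}_{s\sim d^\pi}$. Linearity of expectation is justified because every term is integrable: $Q$ is bounded by Lemma~\ref{lem:bounded-values}, the features are bounded by Assumption~\ref{ass:bounded-features}, and the parameters are finite. The KL term may be $+\infty$ only if $\pi_t$ has zero mass where $\pi$ does not, which cannot happen for a softmax policy. No genuine obstacle is expected beyond tracking these cancellations carefully.
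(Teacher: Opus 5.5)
Your proposal is correct and follows the paper's proof essentially step for step. You apply the regularized performance difference lemma, add and subtract $\tau\log\pi_t$ to extract $-\tau D_{KL}(\pi\|\pi_t)$ together with $\langle Q_\tau^{\pi_t}-\tau\log\pi_t,\pi-\pi_t\rangle_{\mathcal A}$, cancel $\log Z_t(s)$ using $\sum_a(\pi-\pi_t)=0$, and then substitute $Q_\tau^{\pi_t}=\theta^*_\tau(\omega_t)^\top\phi+\epsilon_t$ followed by $\theta^*_\tau(\omega_t)=\theta_{t+1}-e_t$. Your added integrability remark is a small justification that the paper leaves implicit; the per-state KL is in fact uniformly bounded, since the finite $\omega_t$ and bounded features bound $-\log\pi_t$.
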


\begin{rem}[Dual Perspectives of the Performance Difference Lemma] \label{rem:dual-pdl}
The proofs of Lemma~\ref{lem:optim-identity} and Lemma~\ref{lem:sub-decomp} highlight an interesting difference in how the Regularized Performance Difference Lemma (Lemma~\ref{lem:pdl}) is applied. In Lemma~\ref{lem:optim-identity}, we evaluate the performance gap using the advantage function of the global optimum $\pi^*_\tau$, with expectations taken over the \textit{current} policy's state visitation distribution ($d^\pi$). Because the global optimum satisfies the soft Bellman equations, this ``backward'' application collapses the advantage terms, isolating the exact reverse KL divergence $D_{KL}(\pi \| \pi^*_\tau)$. In contrast, Lemma~\ref{lem:sub-decomp} applies the lemma in the standard ``forward'' direction. It evaluates the gap using the advantage function of the \textit{current} policy $\pi$, taking expectations over the \textit{comparator} policy's state visitation distribution ($d^{\tilde{\pi}}$). This forward application exposes the inner product between the comparator's action distribution and the current policy's advantages, which naturally decomposes into the forward KL divergence and the algorithmic progress terms that drive both the Part I and Part II analyses.
\end{rem}

\subsection{Exponential Translation Bounds} \label{sec:exp-translation}

In this section, we establish a bridge connecting the global regularized progress of a training policy to its ultimate unregularized performance under the Minimal Action Gap (Assumption~\ref{ass:action-gap}). By evaluating the performance via the \textit{reverse} KL divergence (as established in Lemma~\ref{lem:optim-identity}), we bound the global unregularized performance gap using the global regularized performance gap. Crucially, this yields an \textit{exponential} translation bound that bypasses the linear $\mathcal{O}(\tau)$ entropy penalty that traditionally bottlenecks unregularized bounds.

To execute this translation, we first present a lemma which bounds the Shannon entropy of any policy as a function of the probability mass it places on the suboptimal actions.

\begin{lem}[Policy Entropy Decomposition Bound] \label{lem:entropy-bound}
For any state $s$ and policy $\pi(\cdot|s)$, let $a^*(s)$ be an arbitrary designated action, and let $q(s) \triangleq \sum_{a \neq a^*(s)} \pi(a|s)$ be the probability mass on all other actions.
The pointwise Shannon entropy of the policy is bounded by:
$$ H(\pi(\cdot|s)) \le - q(s) \log q(s) - (1-q(s)) \log(1-q(s)) + q(s) \log |\mathcal{A}|. $$
\end{lem}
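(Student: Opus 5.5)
The bound follows from the grouping (chain rule) property of Shannon entropy, followed by a uniform bound on the entropy of the conditional distribution over the non-designated actions. Fix a state $s$ and abbreviate $p_a \triangleq \pi(a|s)$ and $q \triangleq q(s)$.

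\textbf{Edge cases.} If $q = 0$, all mass sits on $a^*(s)$, so $H(\pi(\cdot|s)) = 0$. The right-hand side is also $0$ under the convention $0\log 0 = 0$. If $q = 1$, the binary term vanishes and $H(\pi(\cdot|s))$ equals the entropy of a distribution on the $|\mathcal{A}|-1$ other actions. That entropy is at most $\log(|\mathcal{A}|-1) \le \log|\mathcal{A}|$. We may therefore assume $q \in (0,1)$.

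\textbf{Step 1 (grouping).} Split the sum defining $H$ into the designated action and the rest:
\[
H(\pi(\cdot|s)) = -(1-q)\log(1-q) - \sum_{a\neq a^*(s)} p_a \log p_a .
\]
For $a \ne a^*(s)$, write $p_a = q\,\rho_a$, where $\rho_a \triangleq p_a/q$ is a probability distribution on $\mathcal{A}\setminus\{a^*(s)\}$. Then
\[
-\sum_{a\neq a^*(s)} p_a\log p_a = -q\log q\sum_{a\ne a^*(s)}\rho_a - q\sum_{a\ne a^*(s)}\rho_a\log\rho_a = -q\log q + q\,H(\rho),
\]
using $\sum_{a\ne a^*(s)}\rho_a = 1$. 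This gives the exact identity
\[
H(\pi(\cdot|s)) = h(q) + q\,H(\rho), \qquad h(q) \triangleq -q\log q-(1-q)\log(1-q).
\]

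\textbf{Step 2 (bounding the conditional entropy).} Since $\rho$ is supported on $|\mathcal{A}|-1$ actions, $H(\rho) \le \log(|\mathcal{A}|-1)$. This follows from Jensen's inequality applied to the concave logarithm, or equivalently from the maximality of the uniform distribution. Hence $H(\rho) \le \log|\mathcal{A}|$, and multiplying by $q \ge 0$ yields the claimed bound.

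The only point needing care is the handling of zero-probability actions and of $q\in\{0,1\}$, which the conventions and edge cases above settle. The argument never uses the optimality of $a^*(s)$, which is consistent with the statement allowing an arbitrary designated action.
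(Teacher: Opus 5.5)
Your proof is correct and follows the same route as the paper. The paper obtains $H(\pi(\cdot|s)) = H(I) + H(\pi(\cdot|s)\mid I)$ from the chain rule with the binary indicator $I=\mathbf{1}\{a\neq a^*(s)\}$ and bounds $H(\pi(\cdot|s)\mid I=1)\le\log(|\mathcal{A}|-1)$; your grouping identity $H(\pi(\cdot|s)) = h(q) + q\,H(\rho)$ is that same decomposition written out explicitly, and your treatment of the edge cases $q\in\{0,1\}$ is a welcome extra detail.
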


Next, to facilitate application of Assumption~\ref{ass:action-gap}, we provide a lemma which establishes a uniform bound on how much the regularized and unregularized optimal value functions deviate from one another, driven purely by the maximum possible entropy over the action space.
Recall that $V^*_0$ and $Q^*_0$ denote the optimal state-value and action-value functions for the unregularized MDP, and $V^*_\tau$ and $Q^*_\tau$ denote the optimal value functions for the regularized MDP.

\begin{lem}[Uniform Regularized Value Bound] \label{lem:reg-value-bound}
The regularized optimal values bound from above the unregularized optimal values, and the uniform approximation error is bounded by the maximum discounted entropy:
$$ 0 \le V^*_\tau(s) - V^*_0(s) \le \frac{\tau \log |\mathcal{A}|}{1-\gamma}, $$
$$ 0 \le Q^*_\tau(s,a) - Q^*_0(s,a) \le \frac{\tau \log |\mathcal{A}|}{1-\gamma}, $$
for all states $s \in \mathcal{S}$ and actions $a \in \mathcal{A}$.
\end{lem}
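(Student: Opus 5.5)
The plan is to sandwich the two optimal value functions using per-policy comparisons, without invoking any fixed-point or contraction machinery. The key observation is that for any fixed policy $\pi$, the regularized value decomposes as the unregularized value plus a discounted entropy term:
$$V_\tau^\pi(s) = V_0^\pi(s) + \tau\, \mathbb{E}_\pi\Big[\sum_{t\ge 0}\gamma^t H(\pi(\cdot|s_t)) \,\Big|\, s_0=s\Big].$$
This follows from the definition of $V_\tau^\pi$ by taking the conditional expectation of $-\log\pi(a_t|s_t)$ given $s_t$. Since $0\le H\le \log|\mathcal{A}|$, this yields $V_0^\pi(s)\le V_\tau^\pi(s)\le V_0^\pi(s)+\frac{\tau\log|\mathcal{A}|}{1-\gamma}$ for every policy $\pi$ and state $s$. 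The same argument applied to the $Q$-functions gives $Q_0^\pi(s,a)\le Q_\tau^\pi(s,a)\le Q_0^\pi(s,a)+\frac{\gamma\tau\log|\mathcal{A}|}{1-\gamma}$. This is slightly stronger than needed, because the first action carries no entropy term.

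For the lower bounds, I use the regularized optimality of $\pi^*_\tau$. Since $\pi^*_\tau$ maximizes $V_\tau^\pi(s)$ statewise over all policies, taking $\pi=\pi^*_0$ gives
$$V^*_\tau(s)\ge V_\tau^{\pi^*_0}(s)\ge V_0^{\pi^*_0}(s)=V^*_0(s).$$
For the $Q$-functions, the soft Bellman relation $Q^*_\tau(s,a)=r(s,a)+\gamma\mathbb{E}_{s'}[V^*_\tau(s')]$ and the standard unregularized relation $Q^*_0(s,a)=r(s,a)+\gamma\mathbb{E}_{s'}[V^*_0(s')]$ share the same reward term. Subtracting them gives
$$Q^*_\tau-Q^*_0=\gamma\,\mathbb{E}_{s'}[V^*_\tau-V^*_0],$$
which is nonnegative.

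For the upper bounds, I use the unregularized optimality of $\pi^*_0$:
$$V^*_\tau(s)=V_\tau^{\pi^*_\tau}(s)\le V_0^{\pi^*_\tau}(s)+\frac{\tau\log|\mathcal{A}|}{1-\gamma}\le V^*_0(s)+\frac{\tau\log|\mathcal{A}|}{1-\gamma}.$$
Feeding this into the same $Q$-difference identity gives
$$Q^*_\tau-Q^*_0\le \frac{\gamma\tau\log|\mathcal{A}|}{1-\gamma}\le\frac{\tau\log|\mathcal{A}|}{1-\gamma}.$$

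There is no serious obstacle. The only points requiring care are, first, justifying the entropy decomposition of $V_\tau^\pi$ (a tower-property step that is legitimate since the rewards are bounded and the entropies are bounded), and second, confirming that the regularized Bellman relation for $Q^*_\tau$ excludes the first-step entropy term. The latter is exactly how $Q_\tau^\pi$ is defined in Section~\ref{sec:reg-objective}.
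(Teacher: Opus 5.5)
Your proposal is correct and follows the paper's proof essentially step for step. Both arguments use the per-policy entropy sandwich $V_0^\pi\le V_\tau^\pi\le V_0^\pi+\frac{\tau\log|\mathcal{A}|}{1-\gamma}$, compare against $\pi^*_0$ for the lower bound and $\pi^*_\tau$ for the upper bound, and subtract the two Bellman equations to get $Q^*_\tau-Q^*_0=\gamma\,\mathbb{E}_{s'}[V^*_\tau-V^*_0]$. Your per-policy $Q$ bound is correct but not needed, as you noted.
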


With these tools in place, we now establish the core translation mechanism. By exploiting the Minimal Action Gap for the unregularized MDP, we prove that for sufficiently small temperatures, the global unregularized performance gap is bounded by the global regularized performance gap plus an exponentially small translation tail.

\begin{thm}[Exponential Translation of Regularized Suboptimality] \label{thm:suboptimal-mass}
Under Assumption~\ref{ass:action-gap}, if the temperature is sufficiently small such that $\tau \le \frac{\Delta}{2 \log C_\gamma}$, then the unregularized performance gap is bounded in terms of the state-averaged reverse KL divergence:
$$ J_0(\pi^*_0) - J_0(\pi_t) \le \frac{A_{max}}{1-\gamma} \left( \frac{4 \tau}{\Delta} \mathbb{E}_{d^{\pi_t}} \big[ D_{KL}(\pi_t(\cdot|s) \| \pi^*_\tau(\cdot|s)) \big] + \sqrt{C_\gamma} \exp\left(-\frac{\Delta}{2\tau}\right) \right),$$
where $A_{max} \triangleq \sup_{s,a} (V^*_0(s) - Q^*_0(s,a))$ is the maximum optimality gap, satisfying $A_{max} \le \frac{1}{1-\gamma}$,
and $C_\gamma \triangleq e |\mathcal{A}|^{\frac{3-\gamma}{1-\gamma}}$ such that $\log C_\gamma = \frac{2 \log |\mathcal{A}|}{1-\gamma} + \log |\mathcal{A}| + 1$.
Equivalently, the unregularized performance gap is bounded in terms of the regularized performance gap $\text{Gap}_t^\dag \triangleq J_\tau(\pi^*_\tau) - J_\tau(\pi_t)$:
$$ J_0(\pi^*_0) - J_0(\pi_t) \le \frac{4 A_{max}}{\Delta} \text{Gap}_t^\dag + \frac{A_{max}\sqrt{C_\gamma}}{1-\gamma} \exp\left(-\frac{\Delta}{2\tau}\right). $$
\end{thm}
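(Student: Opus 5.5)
Apply the unregularized performance difference lemma with $\pi^*_0$ as the reference policy and expectations over $d^{\pi_t}$:
$$(1-\gamma)\big(J_0(\pi^*_0)-J_0(\pi_t)\big)=\mathbb{E}_{s\sim d^{\pi_t}}\Big[\sum_a \pi_t(a|s)\big(V^*_0(s)-Q^*_0(s,a)\big)\Big]\le A_{max}\,\mathbb{E}_{d^{\pi_t}}[q_t(s)],$$
where $q_t(s)\triangleq\sum_{a\neq a^*(s)}\pi_t(a|s)$ is the suboptimal mass of $\pi_t$. It therefore suffices to show, statewise,
$$q_t(s)\le \frac{4\tau}{\Delta}D_{KL}(\pi_t(\cdot|s)\|\pi^*_\tau(\cdot|s))+\sqrt{C_\gamma}\,e^{-\Delta/(2\tau)}.$$
Taking $\mathbb{E}_{d^{\pi_t}}$ then gives the first display. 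The second display follows from Lemma~\ref{lem:optim-identity}, which gives $\frac{\tau}{1-\gamma}\mathbb{E}_{d^{\pi_t}}[D_{KL}]=\text{Gap}_t^\dag$.

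First I would control the suboptimal mass of $\pi^*_\tau$. Its mass is $\pi^*_\tau(a|s)=\exp((Q^*_\tau(s,a)-V^*_\tau(s))/\tau)$. Using Lemma~\ref{lem:reg-value-bound} twice (once for $Q^*_\tau$, once for $V^*_\tau\ge V^*_0$) and Assumption~\ref{ass:action-gap}, for $a\neq a^*(s)$,
$$Q^*_\tau(s,a)-V^*_\tau(s)\le Q^*_0(s,a)+\tfrac{\tau\log|\mathcal{A}|}{1-\gamma}-V^*_0(s)\le-\Delta+\tfrac{\tau\log|\mathcal{A}|}{1-\gamma}.$$
Summing over the $|\mathcal{A}|-1$ suboptimal actions gives
$$p(s)\triangleq\sum_{a\ne a^*}\pi^*_\tau(a|s)\le |\mathcal{A}|^{1+1/(1-\gamma)}e^{-\Delta/\tau},$$
and this is at most $C_\gamma e^{-\Delta/\tau}$. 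A sharper version could use the soft Bellman identity $V^*_\tau=\tau\log\sum_a e^{Q^*_\tau/\tau}\ge Q^*_\tau(s,a^*)$. The constant in $C_\gamma$ looks generous enough to absorb slack, including the extra entropy factor from Lemma~\ref{lem:entropy-bound}.

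Next I would lower-bound the KL divergence in terms of $q=q_t(s)$. By the data-processing inequality, coarse-graining to the binary event $\{a\ne a^*\}$ gives
$$D_{KL}(\pi_t\|\pi^*_\tau)\ge d(q\|p)=q\log\tfrac{q}{p}+(1-q)\log\tfrac{1-q}{1-p}\ge q\log\tfrac1p-h(q),$$
where $h(q)=-q\log q-(1-q)\log(1-q)$ is the binary entropy. This step is where Lemma~\ref{lem:entropy-bound}, or the binary entropy directly, would enter. Substituting $\log(1/p)\ge \Delta/\tau-\log C_\gamma$, which is at least $\Delta/(2\tau)$ under the condition $\tau\le\Delta/(2\log C_\gamma)$, yields
$$q\,\tfrac{\Delta}{2\tau}\le D_{KL}+h(q).$$

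The main obstacle is disposing of the $h(q)$ term to reach the clean form. I would split into two cases. If $q\le\sqrt{C_\gamma}e^{-\Delta/(2\tau)}$, the claim is immediate. Otherwise $\log(1/q)<\Delta/(2\tau)-\tfrac12\log C_\gamma$. In that case I would bound $h(q)\le q\log(1/q)+q$, and use the sharper inequality $d(q\|p)\ge q\log(q/p)-q$, which follows from $(1-q)\log\frac{1-q}{1-p}\ge -q$. This gives
$$D_{KL}\ge q\big(\log\tfrac1p-\log\tfrac1q-1\big)\ge q\big(\tfrac{\Delta}{\tau}-\log C_\gamma-\tfrac{\Delta}{2\tau}+\tfrac12\log C_\gamma-1\big).$$
The bracket is at least $\tfrac{\Delta}{2\tau}-\tfrac32\log C_\gamma$, which is $\gtrsim\tfrac{\Delta}{4\tau}$ after tightening the constant so that $\log C_\gamma\le\Delta/(6\tau)$ effectively holds. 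It may instead be necessary to rebalance the bound on $p$ against the square root so that exactly the factor $4\tau/\Delta$ appears. Either case then gives
$$q\le \tfrac{4\tau}{\Delta}D_{KL}+\sqrt{C_\gamma}e^{-\Delta/(2\tau)},$$
and finishing the constant bookkeeping here is the delicate part.
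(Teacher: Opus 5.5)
Your reduction to the statewise claim $q_t(s)\le\frac{4\tau}{\Delta}D_{KL}(\pi_t\|\pi^*_\tau)+\sqrt{C_\gamma}e^{-\Delta/(2\tau)}$ is correct. So are your bound $p(s)\le|\mathcal{A}|^{1+1/(1-\gamma)}e^{-\Delta/\tau}$ and the data-processing bound $D_{KL}(\pi_t\|\pi^*_\tau)\ge d(q\|p)$. That last step is a legitimate alternative to the paper's route, which writes $D_{KL}$ as cross-entropy minus entropy, bounds each suboptimal $-\log\pi^*_\tau(a|s)$ separately, and invokes Lemma~\ref{lem:entropy-bound}. Coarse-graining to the Bernoulli pair $(q,p)$ is cleaner, and it collects the $\log|\mathcal{A}|$ from summing over suboptimal actions rather than from the conditional entropy. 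Both routes then finish with the same two-case split at $\sqrt{C_\gamma}e^{-\Delta/(2\tau)}$.

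The gap is that your final step, as written, does not deliver the stated constants. Inserting the crude bound $\log(1/p)\ge\frac{\Delta}{\tau}-\log C_\gamma$ into $D_{KL}\ge q(\log\frac1p-\log\frac1q-1)$ leaves the bracket $\frac{\Delta}{2\tau}-\frac12\log C_\gamma-1$. Under the hypothesis $\log C_\gamma\le\frac{\Delta}{2\tau}$, this is only guaranteed to be $\ge\frac{\Delta}{4\tau}-1$. Your further loosening to $\frac{\Delta}{2\tau}-\frac32\log C_\gamma$ can even be negative: it equals $-\frac{\Delta}{4\tau}$ at $\tau=\frac{\Delta}{2\log C_\gamma}$. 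Replacing the hypothesis by $\log C_\gamma\le\frac{\Delta}{6\tau}$, or rebalancing the exponent, proves a different theorem, so neither is an admissible repair.

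The fix is to stop charging all of $\log C_\gamma$ to $p$, and to absorb every constant before splitting, as the paper does. Use your sharper estimate $\log(1/p)\ge\frac{\Delta}{\tau}-\log|\mathcal{A}|-\frac{\log|\mathcal{A}|}{1-\gamma}$ together with $(1-q)\log\frac{1-q}{1-p}\ge(1-q)\log(1-q)\ge-q$. This gives $D_{KL}\ge q\log q+q\big(\frac{\Delta}{\tau}-\log|\mathcal{A}|-\frac{\log|\mathcal{A}|}{1-\gamma}-1\big)\ge q\log q+qB$ with $B\triangleq\frac{\Delta}{\tau}-\log C_\gamma$. The second inequality holds because $\log C_\gamma$ exceeds $\log|\mathcal{A}|+\frac{\log|\mathcal{A}|}{1-\gamma}+1$ by $\frac{\log|\mathcal{A}|}{1-\gamma}\ge0$.

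Now split according to whether $q\ge e^{-B/2}=\sqrt{C_\gamma}e^{-\Delta/(2\tau)}$. If so, $q\log q\ge-\frac{B}{2}q$ gives $D_{KL}\ge\frac{B}{2}q\ge\frac{\Delta}{4\tau}q$, since the hypothesis gives $B\ge\frac{\Delta}{2\tau}$. Otherwise $q<\sqrt{C_\gamma}e^{-\Delta/(2\tau)}$ directly. This yields exactly the statewise claim, and hence both displays, with the paper's constants; your route even leaves $\frac{\log|\mathcal{A}|}{1-\gamma}$ of room to spare in $\log C_\gamma$.
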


Theorem~\ref{thm:suboptimal-mass} requires the temperature to fall below a specific structural threshold. To ensure this translation mechanism can be safely applied even when the temperature might be high, we extend the exponential bound to hold for all $\tau > 0$.

\begin{cor}[Universal Unregularized Suboptimality Bound] \label{cor:universal-suboptimal-mass}
Under Assumption~\ref{ass:action-gap}, the exponential translation bounds established in Theorem~\ref{thm:suboptimal-mass} can be extended to hold for all $\tau > 0$.
Specifically, with the universal tail coefficient $C_{tail} \triangleq \frac{A_{max} C_\gamma}{1-\gamma}$, the unregularized performance gap is bounded in terms of the state-averaged reverse KL divergence for all $\tau > 0$:
$$ J_0(\pi^*_0) - J_0(\pi_t) \le \frac{4 A_{max} \tau}{\Delta(1-\gamma)} \mathbb{E}_{d^{\pi_t}} \big[ D_{KL}(\pi_t(\cdot|s) \| \pi^*_\tau(\cdot|s)) \big] + C_{tail} \exp\left(-\frac{\Delta}{2\tau}\right). $$
Equivalently, it is bounded in terms of the regularized performance gap $\text{Gap}_t^\dag$ for all $\tau > 0$:
$$ J_0(\pi^*_0) - J_0(\pi_t) \le \frac{4 A_{max}}{\Delta} \text{Gap}_t^\dag + C_{tail} \exp\left(-\frac{\Delta}{2\tau}\right). $$
\end{cor}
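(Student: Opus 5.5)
We split into two cases according to whether $\tau \le \frac{\Delta}{2\log C_\gamma}$. In the first case the corollary follows directly from Theorem~\ref{thm:suboptimal-mass}. Its tail coefficient is $\frac{A_{max}\sqrt{C_\gamma}}{1-\gamma}$, and since $C_\gamma = e|\mathcal{A}|^{\frac{3-\gamma}{1-\gamma}} \ge e > 1$ we have $\sqrt{C_\gamma}\le C_\gamma$, so this coefficient is at most $C_{tail}$. The KL term in the theorem is identical to the one in the corollary once $A_{max}/(1-\gamma)$ is distributed. The $\text{Gap}_t^\dag$ form then follows from Lemma~\ref{lem:optim-identity}, which gives $\frac{\tau}{1-\gamma}\mathbb{E}_{d^{\pi_t}}[D_{KL}(\pi_t\|\pi^*_\tau)] = \text{Gap}_t^\dag$.

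In the complementary case $\tau > \frac{\Delta}{2\log C_\gamma}$, we have $\frac{\Delta}{2\tau} < \log C_\gamma$, and therefore
\[
C_{tail}\exp\left(-\frac{\Delta}{2\tau}\right) > \frac{A_{max} C_\gamma}{1-\gamma}\cdot \frac{1}{C_\gamma} = \frac{A_{max}}{1-\gamma}.
\]
We then bound the left-hand side trivially. By the unregularized performance difference lemma (Lemma~\ref{lem:pdl} with $\tau=0$, applied backward at $\pi^*_0$),
\[
J_0(\pi^*_0)-J_0(\pi_t) = \frac{1}{1-\gamma}\mathbb{E}_{s\sim d^{\pi_t}}\Big[\sum_a \pi_t(a|s)\big(V^*_0(s)-Q^*_0(s,a)\big)\Big] \le \frac{A_{max}}{1-\gamma}.
\]
The first term on the right-hand side of the corollary is nonnegative: KL divergences are nonnegative, and equivalently $\text{Gap}_t^\dag\ge 0$ by the global optimality of $\pi^*_\tau$. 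Hence the right-hand side is at least $C_{tail}\exp(-\Delta/(2\tau)) > \frac{A_{max}}{1-\gamma}$, which dominates the left-hand side. This proves both forms for all $\tau>0$.

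The only step needing care is the trivial bound in the second case. Either we use the performance-difference expression above, or we note that $A_{max}$ is defined as the supremum of the optimality gaps; $A_{max}\le 1/(1-\gamma)$ is not needed, only the identity expressing the gap as a $d^{\pi_t}$-average of advantages weighted by $1/(1-\gamma)$. Everything else is bookkeeping on constants, in particular $\sqrt{C_\gamma}\le C_\gamma$.
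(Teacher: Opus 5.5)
Your proposal is correct and follows essentially the same route as the paper's proof: a cold/hot split at $\tau = \frac{\Delta}{2\log C_\gamma}$. In the cold case the paper invokes Theorem~\ref{thm:suboptimal-mass} with $\sqrt{C_\gamma}\le C_\gamma$. In the hot case it uses the trivial bound $J_0(\pi^*_0)-J_0(\pi_t)\le \frac{A_{max}}{1-\gamma}$ together with $C_\gamma\exp\left(-\frac{\Delta}{2\tau}\right) > 1$ and the nonnegativity of the KL/$\text{Gap}_t^\dag$ term. The only difference is that the paper obtains the trivial bound via the suboptimal mass $q_t(s)\le 1$ from Step~(i) of the theorem, whereas you bound the performance-difference average of advantages directly by $A_{max}$; both give the same bound.
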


\part{Stochastic Regime} \label{part:stochastic}

In the first part of our analysis, we investigate the convergence of Algorithm~\ref{alg:uncentered-nac} in the \textit{Stochastic Regime}. As described in Section~\ref{sec:setup}, this regime is characterized by the sequence of regularized optimal policies $\pi_{\omega^*_\tau}$ converging to an unregularized parametric optimum $\pi_{\omega^*_0}$ with $\omega^*_\tau \to \omega^*_0 \in \mathbb{R}^d$ as the temperature drops ($\tau \to 0$). This behavior can naturally arise, for example, when the restricted log-linear family lacks the expressivity to represent the deterministic global optimum $\pi^*_0$, and the best available parametric approximation leads to stochastic action selection. Under Bounded Features (Assumption~\ref{ass:bounded-features}), for sufficiently small $\tau$, the distributions $\pi_{\omega^*_\tau}$ remain uniformly interior to the probability simplex, so that the action randomness does not vanish.

We formally capture this persistent stochasticity and rule out redundant or action-independent feature directions through the expected centered feature covariance matrix, defined for any policy $\pi$ as $\bar{\Sigma}_{cen}(\pi) \triangleq \mathbb{E}_{s \sim d^\pi} \big[ \Cov_{a \sim \pi(\cdot|s)}(\phi(s,a)) \big]$. This matrix corresponds to the Fisher information matrix evaluated over $d^\pi$. The positive-definiteness of this matrix at the parametric optimum $\pi_{\omega^*_\tau}$ provides the essential curvature which, when combined with the smoothness of the regularized objective, allows us to establish the Actor Progress Bound (Lemma~\ref{lem:pl-ascent}). This progress bound serves a dual purpose in our joint convergence analysis: it is used directly to control the target drift in the Coupled Critic Tracking (Lemma~\ref{lem:critic-tracking-pl}), and, when combined with a Parameter-Space PL Condition (Lemma \ref{lem:pl-condition}), it establishes a Coupled Actor Recurrence (Lemma~\ref{lem:coupled-pl-descent}) over the performance gap.

\begin{ass}[Positive-Definite Fisher Information] \label{ass:pd-fim}
Define the temperature-dependent minimum Fisher eigenvalue at the parametric optimum as
$ \lambda_\tau \triangleq \lambda_{\min}\big(\bar{\Sigma}_{cen}(\pi_{\omega^*_\tau})\big)$,
where
$\bar{\Sigma}_{cen}(\pi_{\omega^*_\tau}) = \mathbb{E}_{s \sim d^{\omega^*_\tau}} \big[ \Cov_{a \sim \pi_{\omega^*_\tau}(\cdot|s)}(\phi(s,a)) \big]$.
There exists a constant $\lambda > 0$ such that
$\lambda_\tau \ge \lambda$ for any $\tau \in (0,\tau_{max}]$.
Equivalently, for any $\tau \in (0,\tau_{max}]$,
$$ \bar{\Sigma}_{cen}(\pi_{\omega^*_\tau})= \mathbb{E}_{s \sim d^{\omega^*_\tau}} \big[ \Cov_{a \sim \pi_{\omega^*_\tau}}(\phi(s,a)) \big] \succeq  \lambda I .$$
\end{ass}

For completeness, we note that the Fisher information $\bar{\Sigma}_{cen}(\pi^*_\tau)$ at the global optimum $\pi^*_\tau$ cannot remain uniformly positive-definite as $\tau \to 0$ under the Minimal Action Gap assumption. As $\tau\to 0$, $\pi^*_\tau$ collapses to the deterministic unregularized optimum $\pi^*_0$, causing the action randomness to vanish and the Fisher information matrix to degenerate. Thus, uniform positive-definiteness (Assumption~\ref{ass:pd-fim}) is only viable for the parametric optimum $\pi_{\omega^*_\tau}$ even in the Stochastic Regime.

To connect distributional properties of the parametric optimum $\pi_{\omega^*_\tau}$ to the trajectory of policies $\{\pi_t\}_{t \ge 0}$ generated during training, we require that, under Algorithm~\ref{alg:uncentered-nac}, sufficient coverage is maintained over the target state-action visitation measure.

\begin{ass}[Parametric Joint Concentrability] \label{ass:parametric-density}
Under Algorithm~\ref{alg:uncentered-nac}, the joint state-action density ratio of the regularized parametric optimum $\pi_{\omega^*_\tau}$ with respect to the training policy $\pi_t$ is bounded for all $t \ge 0$:
$$ \sup_{s,a} \frac{d^{\omega^*_\tau}(s) \pi_{\omega^*_\tau}(a|s)}{d^{\pi_t}(s) \pi_t(a|s)} \le C_{joint}, $$
where $C_{joint} > 0$ is a constant independent of the temperature $\tau \in (0, \tau_{max}]$.
\end{ass}

We make several comments on the two preceding assumptions. First, the joint concentrability bound is naturally implied by the conjunction of separate state and policy concentrability bounds. Specifically, if the following holds for some constants $(C,W)$ independent of $\tau \in (0, \tau_{max}]$:
\begin{itemize}
    \item \textit{Parametric State Concentrability:} $\|d^{\omega^*_\tau} / d^{\pi_t}\|_\infty \le C$,
    \item \textit{Parametric Policy Concentrability:} $\sup_{s,a} \frac{\pi_{\omega^*_\tau}(a|s)}{\pi_t(a|s)} \le W$,
\end{itemize}
then by the multiplicative rule of density ratios, the joint concentrability bound is satisfied with $C_{joint} = C W$.
Conversely, because $\mathbb{E}_{a \sim \pi_t} \big[ \frac{\pi_{\omega^*_\tau}(a|s)}{\pi_t(a|s)} \big] = 1$ at each state $s$, the joint concentrability directly implies the state concentrability bound with $C = C_{joint}$: $\|d^{\omega^*_\tau} / d^{\pi_t}\|_\infty \le C_{joint}$.
However, the joint formulation allows the local policy ratio $\frac{\pi_{\omega^*_\tau}(a|s)}{\pi_t(a|s)}$ to be unbounded so long as the associated state density ratio $\frac{d^{\omega^*_\tau}(s)}{ d^{\pi_t}(s)}$ is sufficiently small.

Second, for analysis of entropy-regularized MDPs, some prior works \citep{mei2020global, cayci2024convergence} combine the state concentrability $C$ with a minimum action probability condition: $\inf_{t,s,a} \pi_t(a|s) \ge c_\tau$ for a constant $c_\tau > 0$ depending on temperature $\tau$. This condition directly implies a policy ratio bound of $\frac{\pi_{\omega^*_\tau}(a|s)}{\pi_t(a|s)} \le \frac{1}{c_\tau}$. Similar to \citet[Lemma~3.2]{cayci2024convergence}, the actor iterate bound (Lemma~\ref{lem:structural-bounds}) leads to a $\tau$-dependent bound: $c_\tau = \frac{1}{|\mathcal{A}|} \exp\left(-\frac{2 B_\theta B_\phi}{\tau}\right)$. However, combining $C$ with $W=\frac{1}{c_\tau}$ and relying on $C_{joint} = C W$ would introduce an exponential $\tau$-dependence into our structural constants as $\tau\to 0$. Alternatively, a policy concentrability bound $W$ independent of $\tau$ could be satisfied by requiring $\inf_{t,s,a} \pi_t(a|s) \ge c$ for a constant $c > 0$ independent of $\tau$. While such a uniform lower bound on $\pi_t(a|s)$ is conceptually consistent with the persistent stochasticity that characterizes the Stochastic Regime, our joint concentrability formulation acts as a crucial relaxation, allowing even unbounded local policy ratios as $\tau \to 0$.

Third, Assumption~\ref{ass:parametric-density} is utilized in exactly two places in our Part I analysis. It is first used in the proof of the Parameter-Space PL Condition (Lemma~\ref{lem:pl-condition}) to facilitate the action-space $\chi^2$-to-KL divergence translation while simultaneously shifting the state measure. It is then used in the proof of the Actor Progress Bound (Lemma~\ref{lem:pl-ascent}) in conjunction with the Positive-Definite Fisher Information (Assumption~\ref{ass:pd-fim}). Notably, this is the \textit{only} place where Assumption~\ref{ass:pd-fim} is invoked. While Assumption~\ref{ass:pd-fim} ensures the uniform positive-definiteness of the Fisher information matrix under the parametric optimum $\pi_{\omega^*_\tau}$, the joint concentrability bound dynamically extends this geometric stability to all generated policies. As shown in Step (iii) of the proof for Lemma~\ref{lem:pl-ascent}, the centered covariance for any training policy $\pi_t$ satisfies $\bar{\Sigma}_{cen}(\pi_t) \succeq \frac{1}{C_{joint}} \bar{\Sigma}_{cen}(\pi_{\omega^*_\tau})$, which implies $ \bar{\Sigma}_{cen}(\pi_t) \succeq \frac{\lambda}{C_{joint}} I$. Note that because $\bar{\Sigma}_{unc}(\pi_t) \succeq \bar{\Sigma}_{cen}(\pi_t)$, the Positive-Definite Uncentered Feature Moment (Assumption~\ref{ass:pd-uncentered}) is then satisfied by taking$\kappa = \frac{\lambda}{C_{joint}}$ in the Stochastic Regime. Nevertheless, for clarity, we keep track of the dependence on $\kappa$ and $\lambda$ separately in Part I.

\textbf{Connections to Related Works.}
Assumptions~\ref{ass:pd-fim} and \ref{ass:parametric-density} adapt standard regularity and concentrability requirements to the parametric setting. First, a positive-definite Fisher information matrix is required in several analyses \citep{liu2020improved, wang2024nonasymptotic}. As discussed in Appendix~\ref{app:Liu-et-al}, \cite{liu2020improved} require this matrix to be uniformly positive-definite for all training policies, whereas Assumption~\ref{ass:pd-fim} localizes this requirement to the parametric optimum $\pi_{\omega^*_\tau}$ and then dynamically extends the property to all training policies via Assumption~\ref{ass:parametric-density}. Second, joint concentrability bounds are implicitly or explicitly used in several analyses \citep{liu2020improved, wang2024nonasymptotic}. For example, \cite{liu2020improved} implicitly bundle a \textit{global} counterpart of this requirement into a transferred approximation error bound evaluated at the unregularized global optimum. In contrast, our analysis utilizes the \textit{parametric} joint concentrability to transfer the on-policy approximation error to that under the parametric optimum. Finally, in the on-policy case ($\nu = \mu \times \pi_t$), the concentrability assumption in \cite{agarwal2021theory} corresponds to the unregularized counterpart of our parametric joint concentrability, when the comparator policy is set to the parametric optimum. Note that in Appendix~\ref{app:Agarwal-et-al}, this is discussed when the comparator policy is set to the global optimum, to relate their assumption to our global joint concentrability (Assumption~\ref{ass:density-ratios}).

Finally, we point out an intrinsic constraint in the Stochastic Regime under the stated assumptions, particularly the conjunction of the Positive-Definite Fisher Information and the Minimal Action Gap. As $\tau\to0$, the Minimal Action Gap drives the regularized global optimum $\pi^*_\tau$ toward the unregularized deterministic optimum $\pi^*_0$, whereas the Fisher lower bound requires the regularized parametric optimum $\pi_{\omega^*_\tau}$ to retain nonvanishing action randomness. At the same time, the approximation-error assumption, together with the Fisher lower bound and the global-to-parametric joint concentrability, controls the regularized gap between $\pi^*_\tau$ and $\pi_{\omega^*_\tau}$ by a term proportional to $\epsilon_{app}/\tau$ (Lemma~\ref{lem:global-class-error}). These properties jointly imply that $\epsilon_{app}$ cannot be arbitrarily small independently of $\tau$ under the Part-I assumptions. This is formalized in Lemma~\ref{lem:part-I-compatibility},

\section{Parameter-Space Polyak-\L{}ojasiewicz Condition} \label{sec:pl-geometry}

In this section, we establish a fundamental geometric property that contributes to algorithmic convergence in the Stochastic Regime: the parameter-space Polyak-\L{}ojasiewicz (PL) condition. Unlike standard PL conditions that bound suboptimality via objective gradients, our formulation operates within the parameter space. It bounds the parametric regularized performance gap ($\text{Gap}_t$) via the Euclidean distance between the temperature-scaled actor parameter ($\tau \omega_t$) and the ideal uncentered critic target ($\theta^*_\tau(\omega_t)$).

We first introduce a general statistical inequality. Under bounded density ratios as in Assumption~\ref{ass:parametric-density}, the following lemma bounds the $\chi^2$ divergence by the KL divergence, providing the central mechanism to control action-space distribution shifts.

\begin{lem}[$\chi^2$ to KL Divergence Bound] \label{lem:chi2-KL}
For any two probability measures $P$ and $Q$ over a probability space $\mathcal{X}$, if the density ratio is bounded such that $\sup_{x \in \mathcal{X}} \frac{\dif P}{\dif Q}(x) \le M$, then:
$$\chi^2(P \| Q)\le 2 M D_{KL}(P \| Q),$$
where  $\chi^2(P \| Q) \triangleq \int_{\mathcal{X}} \left( \frac{\dif P}{\dif Q} - 1 \right)^2 \dif Q $
and $ D_{KL}(P \| Q) \triangleq \int_{\mathcal{X}} \log \left( \frac{\dif P}{\dif Q} \right) \dif P $.
\end{lem}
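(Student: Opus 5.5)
The plan is to reduce the inequality to a pointwise comparison between the integrands of $\chi^2$ and $D_{KL}$, both written in terms of the likelihood ratio $r(x) \triangleq \frac{\dif P}{\dif Q}(x) \in [0,M]$. Since $\int (r-1)\,\dif Q = 0$, we can write
$$D_{KL}(P\|Q) = \int r\log r \,\dif Q = \int \big(r\log r - r + 1\big)\,\dif Q,$$
where the integrand $h(r) \triangleq r\log r - r + 1$ is nonnegative and convex, with $h(1)=h'(1)=0$. Likewise $\chi^2(P\|Q)=\int (r-1)^2\,\dif Q$. It therefore suffices to show the scalar inequality
$$(r-1)^2 \le 2M\, h(r) \quad \text{for all } r\in[0,M],$$
and then integrate against $Q$.

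For the scalar inequality we use Taylor's theorem with integral remainder around $r=1$. Because $h''(u)=1/u$,
$$h(r) = \int_1^r \frac{r-u}{u}\,\dif u = (r-1)^2 \int_0^1 \frac{1-s}{1+s(r-1)}\,\dif s.$$
For $r\in[0,M]$, every point $u=1+s(r-1)$ on the segment between $1$ and $r$ satisfies $u \le \max(1,r) \le M$. Here we note that $M\ge 1$ automatically, because $\int r\,\dif Q = 1$ forces $\sup r \ge 1$. Hence $1/u \ge 1/M$, which gives $h(r) \ge (r-1)^2\cdot\frac{1}{M}\int_0^1(1-s)\,\dif s = \frac{(r-1)^2}{2M}$. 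The endpoint $r=0$ is handled by continuity: $h(0)=1 \ge \frac{1}{2M}$ holds directly since $M\ge1$.

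Integrating this pointwise bound over $Q$ yields $\chi^2(P\|Q)\le 2M\,D_{KL}(P\|Q)$. Two measure-theoretic remarks complete the argument. First, the bounded density ratio guarantees $P\ll Q$, so both divergences are well defined via $r$. Second, the rewriting of $D_{KL}$ with the added $-r+1$ term is legitimate because $h \ge 0$ and $\int |r-1|\,\dif Q<\infty$.

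The main obstacle is the curvature lower bound $h''\ge 1/M$ on the relevant range. It is easy to misstate this step by bounding $1/u$ only on $[1,r]$ and ignoring the case $r<1$. That case is exactly where the observation $M\ge 1$ is needed.
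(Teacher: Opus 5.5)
Your proposal is correct and follows the same route as the paper: reduce to the pointwise inequality $(x-1)^2 \le 2M(x\log x - x + 1)$ for $x\in[0,M]$, then integrate against $Q$, using $\int (x-1)\,\dif Q = 0$ to identify the right-hand side with $2M\,D_{KL}(P\|Q)$. The paper only asserts that scalar inequality, whereas you prove it via the integral Taylor remainder with $h''(u)=1/u\ge 1/M$, and you make explicit that this requires $M\ge 1$ (automatic, since $\int \frac{\dif P}{\dif Q}\,\dif Q=1$), a point the paper leaves implicit.
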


With this tool in place, we now establish the PL condition. We apply the Ideal Critic Decomposition from Lemma~\ref{lem:sub-decomp}(i) to the parametric optimum $\pi = \pi_{\omega^*_\tau}$. After bounding the Ideal Algorithmic Progress using H\"{o}lder's and Pinsker's inequalities, and the Approximation Bias via the Cauchy-Schwarz inequality alongside the $\chi^2$-to-KL bound, we apply Young's inequality to decouple the terms. This yields positive penalty terms that offset the Restorative Entropy Force, leading to the desired parameter-space bound. Throughout, we denote the state-averaged forward KL divergence from this parametric optimum $\pi_{\omega^*_\tau}$ as $D_t \triangleq \mathbb{E}_{d^{\omega^*_\tau}}[D_{KL}(\pi_{\omega^*_\tau} \| \pi_t)]$.

\begin{lem}[Parameter-Space PL Condition] \label{lem:pl-condition}
Under Assumptions \ref{ass:bounded-features}, \ref{ass:approximation-error}, and \ref{ass:parametric-density},
the parametric regularized performance gap satisfies the PL condition, penalized by the MSE of the approximation error:
$$\text{Gap}_t \le \frac{C_{PL}}{\tau} \|\theta^*_\tau(\omega_t) - \tau \omega_t\|_2^2 + \frac{C_{err}}{\tau} \epsilon_{app},$$
where $C_{PL} \triangleq \frac{B_\phi^2}{1-\gamma}$ and $C_{err} \triangleq \frac{C_{joint}}{1-\gamma}$.
\end{lem}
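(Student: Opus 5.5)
We apply the Ideal Critic Decomposition of Lemma~\ref{lem:sub-decomp}(i) with comparator $\pi = \pi_{\omega^*_\tau}$. Writing $g_t \triangleq \theta^*_\tau(\omega_t) - \tau\omega_t$ and $D_t = \mathbb{E}_{d^{\omega^*_\tau}}[D_{KL}(\pi_{\omega^*_\tau}\|\pi_t)]$, this gives
\begin{align*}
(1-\gamma)\,\text{Gap}_t = -\tau D_t + \mathbb{E}_{d^{\omega^*_\tau}}\big[\langle g_t^\top\phi, \pi_{\omega^*_\tau}-\pi_t\rangle_{\mathcal{A}}\big] + \mathbb{E}_{d^{\omega^*_\tau}}\big[\langle \epsilon_t, \pi_{\omega^*_\tau}-\pi_t\rangle_{\mathcal{A}}\big].
\end{align*}
The plan is to bound each of the last two terms by a quantity of the form $(\text{stuff})\cdot\sqrt{\cdot}$ and then use Young's inequality so that each produces at most $\frac{\tau}{2}D_t$. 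These contributions are then absorbed by the restorative term $-\tau D_t$.

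\textbf{Progress term.} At each $s$, Hölder gives $|\langle g_t^\top\phi,\pi_{\omega^*_\tau}-\pi_t\rangle_{\mathcal{A}}| \le \max_a|g_t^\top\phi(s,a)|\,\|\pi_{\omega^*_\tau}(\cdot|s)-\pi_t(\cdot|s)\|_1$. The first factor is at most $B_\phi\|g_t\|_2$ by Assumption~\ref{ass:bounded-features}. By Pinsker, the second factor is at most $\sqrt{2D_{KL}(\pi_{\omega^*_\tau}\|\pi_t)(s)}$. Applying Jensen over $s\sim d^{\omega^*_\tau}$ yields the bound $B_\phi\|g_t\|_2\sqrt{2D_t}$. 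Young's inequality then gives $\le \frac{B_\phi^2}{\tau}\|g_t\|_2^2 + \frac{\tau}{2}D_t$.

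\textbf{Bias term.} Write the bias term as $\mathbb{E}_{s\sim d^{\omega^*_\tau},a\sim\pi_t}[\epsilon_t(s,a)(\rho(s,a)-1)]$ with $\rho = \pi_{\omega^*_\tau}/\pi_t$. Cauchy–Schwarz gives $\le \sqrt{\mathbb{E}_{d^{\omega^*_\tau},\pi_t}[\epsilon_t^2]}\cdot\sqrt{\mathbb{E}_{d^{\omega^*_\tau}}[\chi^2(\pi_{\omega^*_\tau}\|\pi_t)(s)]}$. This is the main obstacle: we must shift the state measure in the first factor while controlling the $\chi^2$ in the second with only the joint ratio bound $C_{joint}$. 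Separate state and policy ratios are not assumed bounded, so Lemma~\ref{lem:chi2-KL} cannot be applied naively statewise.

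To handle this, I would redistribute the weights before applying Cauchy–Schwarz. Write the bias term as $\mathbb{E}_{d^{\pi_t},\pi_t}\big[\epsilon_t\cdot \frac{d^{\omega^*_\tau}}{d^{\pi_t}}(\rho-1)\big]$. Cauchy–Schwarz then gives $\sqrt{\epsilon_{app}}\cdot\sqrt{\mathbb{E}_{d^{\pi_t}}\big[(\tfrac{d^{\omega^*_\tau}}{d^{\pi_t}})^2\chi^2(\pi_{\omega^*_\tau}\|\pi_t)(s)\big]}$, using Assumption~\ref{ass:approximation-error} on-policy. Next we need $\frac{d^{\omega^*_\tau}}{d^{\pi_t}}\chi^2(s)\le 2C_{joint}D_{KL}(s)$. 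Apply Lemma~\ref{lem:chi2-KL}'s inequality in the pointwise form $(\rho-1)^2 \le 2\max(\rho,1)\,(\rho\log\rho-\rho+1)$, which sums to $\chi^2\le 2M\,D_{KL}$. Multiply by the state ratio $c(s)=d^{\omega^*_\tau}(s)/d^{\pi_t}(s)$, noting that $c\rho\le C_{joint}$ and $c\le C_{joint}$ (the latter by the state-concentrability consequence). This gives $c(s)\chi^2(s)\le 2C_{joint}D_{KL}(s)$. The remaining factor $c(s)$ converts the $d^{\pi_t}$ expectation back into $d^{\omega^*_\tau}$, so the bias is at most $\sqrt{\epsilon_{app}}\sqrt{2C_{joint}D_t}$. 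Young's inequality then gives $\le \frac{C_{joint}}{\tau}\epsilon_{app} + \frac{\tau}{2}D_t$.

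\textbf{Combining.} Summing the three pieces, the $\frac{\tau}{2}D_t$ terms cancel $-\tau D_t$ exactly, giving $(1-\gamma)\text{Gap}_t\le \frac{B_\phi^2}{\tau}\|g_t\|_2^2+\frac{C_{joint}}{\tau}\epsilon_{app}$. Dividing by $1-\gamma$ yields the claimed constants $C_{PL}$ and $C_{err}$.
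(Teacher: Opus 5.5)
Your proof is correct, and outside the bias step it matches the paper's proof. The decomposition at $\pi=\pi_{\omega^*_\tau}$ is the same, the H\"older--Pinsker--Jensen--Young bound on the progress term is identical, and the two $\frac{\tau}{2}D_t$ terms cancel $-\tau D_t$ in the same way, giving the same $C_{PL}$ and $C_{err}$.

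In the bias step the paper does apply Lemma~\ref{lem:chi2-KL} statewise: it simply lets the constant depend on $s$, taking $M=W(s)\triangleq\max_a\pi_{\omega^*_\tau}(a|s)/\pi_t(a|s)$, which is finite because $\pi_t>0$. A state-space Cauchy--Schwarz then moves $W(s)$ onto the \emph{error} factor $\mathbb{E}_{d^{\omega^*_\tau}}\big[W(s)\,\mathbb{E}_{\pi_t}[\epsilon_t^2]\big]$. That factor is transferred to $d^{\pi_t}$ using $\frac{d^{\omega^*_\tau}(s)}{d^{\pi_t}(s)}W(s)=\max_a\frac{d^{\omega^*_\tau}(s)\pi_{\omega^*_\tau}(a|s)}{d^{\pi_t}(s)\pi_t(a|s)}\le C_{joint}$. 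So the obstacle you flag is resolved there without changing the order of Cauchy--Schwarz.

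You instead keep the error factor exactly on-policy through one joint Cauchy--Schwarz under $d^{\pi_t}\times\pi_t$. You then push the state ratio $c(s)$ into the divergence factor with the per-action weight $c\max(\rho,1)=\max(c\rho,c)\le C_{joint}$. Both ingredients check out. Your pointwise inequality is the paper's analytic inequality with $M=\max(x,1)$, and $c\le C_{joint}$ follows by averaging $c\rho$ over $\pi_t$.

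In the on-policy setting $\sup_a c(s)\max(\rho,1)=c(s)W(s)$, since $W(s)\ge 1$, so the two routes need exactly the same constant. The paper's arrangement gives one reusable template, a $W$-weighted error term moved by a single density ratio. The same template recurs in Lemma~\ref{lem:global-class-error}, Lemma~\ref{lem:pmd-telescoping-l2}, and the $\nu$-sampling appendix.

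Your arrangement is slightly sharper off-policy. With weight $w=\frac{d^{\omega^*_\tau}(s)\pi_t(a|s)}{d^{\pi_t}_\nu(s,a)}$ it needs only
$\sup w\max(\rho,1)=\max\big(\sup\frac{d^{\omega^*_\tau}\pi_{\omega^*_\tau}}{d^{\pi_t}_\nu},\,\sup\frac{d^{\omega^*_\tau}\pi_t}{d^{\pi_t}_\nu}\big)$.
That is, it drops the extra $\max_{a'}$ local-policy factor in the joint mismatch conditions of Assumption~\ref{ass:generalized-off-policy}. For condition (iv), the paper itself notes that this factor makes the condition more restrictive than the two mismatch bounds of Agarwal et al.
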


\begin{rem}[Comparison with Standard PL Conditions] \label{rem:pl-comparison}
In classical optimization, the standard PL condition bounds the suboptimality via the Euclidean gradient norm. When combined with objective smoothness, the PL condition guarantees linear convergence. In the context of \textit{tabular} MDPs with a softmax parameterization, \cite{mei2020global} established two distinct Euclidean PL conditions: first for unregularized MDPs (which requires a sufficient exploration assumption) and second for entropy-regularized MDPs. Specifically, for the regularized MDP evaluated with identical initial distributions for the objective and training ($\mu= \rho$), their PL condition (Eq.\ 27) translated to our notation is:
$$ J_\tau(\pi^*_\tau) - J_\tau(\pi_\omega) \le \frac{|\mathcal{S}| \cdot \|d^{\pi^*_\tau} / d^{\pi_\omega} \|_\infty}{2\tau \min_{s} \mu(s) \min_{s,a} \pi_\omega(a|s)^2} \|\nabla_\omega J_\tau(\omega)\|_2^2. $$
This coefficient scales with the state-space size $|\mathcal{S}|$ and the state distribution mismatch, and requires positive initial state coverage ($\min_s \mu(s) > 0$).

In contrast, Lemma \ref{lem:pl-condition} establishes a \textit{parameter-space} PL condition tailored for compatible function approximation. Instead of the objective gradient, we bound the suboptimality directly via the Euclidean distance between the $\tau$-scaled actor iterate and the ideal critic target: $\|\theta^*_\tau(\omega_t) - \tau \omega_t\|_2^2$. Under exact parameterization ($\epsilon_{app} = 0$), our PL condition reduces to
$\text{Gap}_t \le \frac{B_\phi^2}{(1-\gamma)\tau } \|\theta^*_\tau(\omega_t) - \tau \omega_t\|_2^2$, eliminating the dependencies on $|\mathcal{S}|$, $\min_s \mu(s)$, $\min_{s,a}\pi_\omega(a|s)$, and $\|d^{\pi^*_\tau} / d^{\pi_\omega} \|_\infty$.
\end{rem}

\section{Actor Progress and Critic Tracking} \label{sec:ascent-tracking}

In this section, we analyze the actor and critic updates of Algorithm~\ref{alg:uncentered-nac}. First, by combining the global smoothness of the regularized objective with the Positive-Definite Fisher Information assumption, we establish the Actor Progress Bound (Lemma~\ref{lem:pl-ascent}), bounding the step-by-step reduction in the parametric performance gap up to the uncentered bias and critic tracking penalties. Next, we demonstrate how this progress bound and the parameter-space PL condition establish a fundamental two-way coupling: the critic's tracking error penalizes the Coupled Actor Recurrence (Lemma~\ref{lem:coupled-pl-descent}), while the actor's reduction in the performance gap dynamically bounds the target drift in the Coupled Critic Tracking (Lemma~\ref{lem:critic-tracking-pl}). This bidirectional dynamic lays the groundwork for the joint convergence analyzed in Section~\ref{sec:pl-convergence}.

\subsection{Uncentered Gradients and Objective Smoothness}

A notable characteristic of our algorithm is that the ideal critic $\theta^*_\tau(\omega_t)$ is trained on uncentered features $\phi(s,a)$. Because of this uncentered representation, mapping the resulting parameter update back to the exact policy gradient introduces a structural bias. We first quantify this bias.
By the first-order optimality condition for the ideal critic $\theta^*_\tau(\omega_t)$, the approximation error $\epsilon_t(s,a)$ is orthogonal to the raw features under the training distribution:
\begin{equation} \label{eq:mse-orthogonality}
\mathbb{E}_{s \sim d^{\pi_t}, a \sim \pi_t} \big[ \phi(s,a) \epsilon_t(s,a) \big] = 0.
\end{equation}
Let $\tilde{\phi}_t(s,a) \triangleq \phi(s,a) - \mathbb{E}_{a' \sim \pi_t}[\phi(s,a')]$ denote the action-centered feature.
Then the expected centered feature covariance under the training distribution can be expressed as
$ \bar{\Sigma}_{cen}(\pi_t) = \mathbb{E}_{d^{\pi_t}, \pi_t} \big[ \tilde{\phi}_t(s,a) \tilde{\phi}_t(s,a)^\top \big]$.

\begin{lem}[Uncentered Gradient Identity] \label{lem:uncentered-gradient}
The exact policy gradient evaluated using the ideal uncentered critic decomposes into a centered gradient vector and a structural bias vector:
$$\nabla J_\tau(\omega_t) = \frac{1}{1-\gamma} \bar{\Sigma}_{cen}(\pi_t) \big( \theta^*_\tau(\omega_t) - \tau \omega_t \big) + \frac{1}{1-\gamma} E_{bias},$$
where the bias vector is defined by the feature mean $\bar{\phi}_t(s) \triangleq \mathbb{E}_{\pi_t}[\phi(s,a')]$ and the residual error:
$$E_{bias} \triangleq -\mathbb{E}_{s \sim d^{\pi_t}} \Big[ \bar{\phi}_t(s) \mathbb{E}_{a \sim \pi_t}[\epsilon_t(s,a) \mid s] \Big].$$
Under Assumption~\ref{ass:approximation-error}, its norm is bounded by the critic approximation error: $\|E_{bias}\|_2 \le B_\phi \sqrt{\epsilon_{app}}$.
\end{lem}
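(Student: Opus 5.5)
We start from the entropy-regularized policy gradient theorem for the log-linear softmax family:
$$\nabla J_\tau(\omega_t)=\frac{1}{1-\gamma}\mathbb{E}_{s\sim d^{\pi_t},a\sim\pi_t}\big[\nabla\log\pi_t(a|s)\,A_\tau^{\pi_t}(s,a)\big].$$
Here $\nabla\log\pi_t(a|s)=\tilde\phi_t(s,a)$. We may equivalently write the weight as $Q_\tau^{\pi_t}(s,a)-\tau\log\pi_t(a|s)$. The state-dependent baseline $V_\tau^{\pi_t}(s)$ drops out because $\mathbb{E}_{a\sim\pi_t}[\tilde\phi_t(s,a)\mid s]=0$. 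If this form needs justification, we would differentiate $J_\tau$ directly: expand the soft Bellman relations and use $\sum_a\nabla\pi_t(a|s)=0$ to remove the extra term $-\tau\sum_a\pi_t\nabla\log\pi_t$.

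Next we substitute the two exact identities
$$Q_\tau^{\pi_t}(s,a)=\theta^*_\tau(\omega_t)^\top\phi(s,a)+\epsilon_t(s,a),\qquad \tau\log\pi_t(a|s)=\tau\omega_t^\top\phi(s,a)-\tau\log Z_{\omega_t}(s).$$
The $\log Z$ term depends only on $s$, so it is annihilated by the centered score. The weight therefore becomes
$$(\theta^*_\tau(\omega_t)-\tau\omega_t)^\top\phi(s,a)+\epsilon_t(s,a).$$
Replacing $\phi$ by $\tilde\phi_t$ in the linear part changes it only by a function of $s$, which is again killed by the centered score. Hence the linear part contributes $\mathbb{E}[\tilde\phi_t\tilde\phi_t^\top](\theta^*_\tau-\tau\omega_t)=\bar\Sigma_{cen}(\pi_t)(\theta^*_\tau-\tau\omega_t)$.

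The remaining error contribution is $\mathbb{E}_{d^{\pi_t},\pi_t}[\tilde\phi_t\,\epsilon_t]$. We split it as
$$\mathbb{E}[\phi\,\epsilon_t]-\mathbb{E}\big[\bar\phi_t(s)\,\epsilon_t(s,a)\big].$$
The first term vanishes by the orthogonality condition \eqref{eq:mse-orthogonality}. Conditioning the second term on $s$ gives exactly $E_{bias}$.

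For the norm bound we use Jensen's inequality. Since $\|\bar\phi_t(s)\|_2\le B_\phi$ by Assumption~\ref{ass:bounded-features},
$$\|E_{bias}\|_2\le B_\phi\,\mathbb{E}_{d^{\pi_t}}\big|\mathbb{E}_{\pi_t}[\epsilon_t\mid s]\big|\le B_\phi\sqrt{\mathbb{E}_{d^{\pi_t},\pi_t}[\epsilon_t^2]}\le B_\phi\sqrt{\epsilon_{app}},$$
where the last step uses Assumption~\ref{ass:approximation-error}.

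The main obstacle is rigorously establishing the regularized policy gradient formula, especially the handling of the entropy term's derivative. It must also be justified for general, possibly infinite, state spaces, where interchanging differentiation with the infinite sum relies on the boundedness in Lemma~\ref{lem:bounded-values}. Everything after that step is algebra.
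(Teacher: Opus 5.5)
Your proposal is correct and follows the paper's proof essentially step by step. Both arguments use the entropy-regularized policy gradient with weight $Q_\tau^{\pi_t}-\tau\log\pi_t$, the log-linear substitution that removes $\log Z$, and the split $\phi=\tilde{\phi}_t+\bar{\phi}_t$ to isolate $\bar{\Sigma}_{cen}(\pi_t)(\theta^*_\tau(\omega_t)-\tau\omega_t)$; both then use uncentered orthogonality to reduce the residual to $E_{bias}$ and Jensen/Cauchy--Schwarz for the norm bound. The paper simply cites the regularized policy gradient theorem, so the step you flag as the main obstacle is taken as given there.
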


Next, we establish that the entropy-regularized objective $J_\tau(\omega)$ is globally smooth with respect to the log-linear policy parameters. This smoothness property, combined with the uncentered gradient identity and the positive-definite Fisher information, is instrumental in obtaining the Actor Progress Bound for analyzing algorithmic convergence. We explicitly define a smoothness constant through the maximum temperature $\tau_{max}$ to ensure asymptotic stability as $\tau \to 0$.

\begin{lem}[Smoothness of the Regularized Objective] \label{lem:smoothness}
Under Assumption~\ref{ass:bounded-features}, the entropy-regularized objective $J_\tau(\omega)$ is $L_{J,\tau}$-smooth with respect to the parameters $\omega$. Specifically, the smoothness constant is bounded by:
$$L_{J,\tau} \triangleq \frac{16 B_\phi^2 (1 + \tau \log |\mathcal{A}|) + 4 \tau B_\phi^2}{(1-\gamma)^3},$$
such that for any $\omega, \omega' \in \mathbb{R}^d$, we have $\|\nabla J_\tau(\omega) - \nabla J_{\tau} (\omega')\|_2 \le L_{J,\tau} \|\omega - \omega'\|_2$.
Consequently, for any temperature $\tau \in (0, \tau_{max}]$, the regularized objective $J_\tau(\omega)$ is $L_J$-smooth, where $L_J \triangleq L_{J,\tau_{max}}$.
\end{lem}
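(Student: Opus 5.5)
I would prove smoothness by bounding the Hessian $\nabla^2 J_\tau(\omega)$ uniformly in operator norm: $\|\nabla^2 J_\tau(\omega)\|_2 \le L_{J,\tau}$ for every $\omega$. The mean value theorem then gives the Lipschitz bound on $\nabla J_\tau$. Following the standard route for softmax and log-linear policies (as in \citet{mei2020global} and \citet{agarwal2021theory}), I would fix $\omega$ and a unit direction $u\in\mathbb{R}^d$, set $\omega(\alpha)=\omega+\alpha u$, and show that $|\frac{\dif^2}{\dif\alpha^2} J_\tau(\omega(\alpha))|_{\alpha=0}| \le L_{J,\tau}$. I would write the objective as $J_\tau(\omega(\alpha)) = \mathbb{E}_{s_0\sim\mu}[V_\tau^{\pi_\alpha}(s_0)]$ and use the matrix form $V_\alpha = (I-\gamma P_\alpha)^{-1} r_\alpha$. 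Here $P_\alpha(s'|s)=\sum_a \pi_\alpha(a|s)\mathcal{P}(s'|s,a)$ and $r_\alpha(s)=\sum_a\pi_\alpha(a|s)(r(s,a)-\tau\log\pi_\alpha(a|s))$. Because $\mathcal{S}$ may be infinite, I would read $(I-\gamma P_\alpha)^{-1}=\sum_k \gamma^k P_\alpha^k$ as an operator on bounded functions with $\|\cdot\|_\infty$-norm at most $(1-\gamma)^{-1}$.

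The first step is to record the per-state derivatives of the log-linear policy. Let $\tilde\phi_\alpha(s,a)=\phi(s,a)-\mathbb{E}_{a'\sim\pi_\alpha}[\phi(s,a')]$. Then $\frac{\dif}{\dif\alpha}\pi_\alpha(a|s)=\pi_\alpha(a|s)\, u^\top\tilde\phi_\alpha(s,a)$, and $\frac{\dif^2}{\dif\alpha^2}\pi_\alpha(a|s)=\pi_\alpha(a|s)\big[(u^\top\tilde\phi_\alpha)^2-\mathrm{Var}_{\pi_\alpha}(u^\top\phi)\big]$. Since $|u^\top\tilde\phi_\alpha|\le 2B_\phi$, this gives $\sum_a|\frac{\dif}{\dif\alpha}\pi_\alpha(a|s)|\le 2B_\phi$ and $\sum_a|\frac{\dif^2}{\dif\alpha^2}\pi_\alpha(a|s)|\le 2\cdot 4B_\phi^2$, up to the constant bookkeeping. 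It follows that $\|\frac{\dif}{\dif\alpha}P_\alpha\|_\infty\le 2B_\phi$ and $\|\frac{\dif^2}{\dif\alpha^2}P_\alpha\|_\infty\le 8B_\phi^2$ as operators on bounded functions.

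Next I would differentiate $V_\alpha$ twice using $M_\alpha=(I-\gamma P_\alpha)^{-1}$ and $M'=\gamma M P' M$. This gives
$$V''=2\gamma^2 MP'MP'Mr+\gamma MP''Mr+2\gamma MP'Mr'+Mr''.$$
The three reward-free terms are bounded using the entropy-augmented reward: $\|Mr_\alpha\|_\infty=\|V_\alpha\|_\infty\le(1+\tau\log|\mathcal{A}|)/(1-\gamma)$ by Lemma~\ref{lem:bounded-values}. This yields contributions of order $B_\phi^2(1+\tau\log|\mathcal{A}|)/(1-\gamma)^3$, and these should assemble into the $16B_\phi^2(1+\tau\log|\mathcal{A}|)$ part of the constant.

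The main obstacle is the entropy terms $r'_\alpha$ and $r''_\alpha$, because $\log\pi_\alpha$ is unbounded as the policy approaches determinism. For the first derivative I would use the identity $\frac{\dif}{\dif\alpha}\sum_a\pi\log\pi=\sum_a \pi'\log\pi$, which holds since $\sum_a\pi'=0$. Then $\log\pi_\alpha(a|s)=\omega^\top\phi(s,a)-\log Z$, and the $\log Z$ part drops out because $\sum_a\pi'=0$. What remains is $\mathrm{Cov}_{\pi}(u^\top\phi,\omega^\top\phi)$, which involves $\|\omega\|$ and is not uniformly bounded. To avoid $\|\omega\|$, I would instead bound $|\sum_a \pi\,(u^\top\tilde\phi)\log\pi|\le 2B_\phi H(\pi)\le 2B_\phi\log|\mathcal{A}|$, or use the fact that $\sum_a\pi|\log\pi|\,|\tilde\phi|$ is controlled by an entropy-type quantity. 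This term sits inside $r$ and is absorbed into the $(1+\tau\log|\mathcal{A}|)$ factor. For $r''$, the term $\sum_a\pi'' \log\pi$ can be handled the same way with a $4B_\phi^2\log|\mathcal{A}|$-type bound. The remaining term $\sum_a (\pi')^2/\pi=\mathrm{Var}_\pi(u^\top\phi)\le B_\phi^2$ is bounded and explains the separate $4\tau B_\phi^2$ summand. I expect the delicate point to be bounding $\sum_a\pi\,|\log\pi|\,(u^\top\tilde\phi)^2$ uniformly. I would try to use $x|\log x|\le$ const together with $|\mathcal{A}|$, and otherwise fall back on the $|\mathcal{A}|$-dependent constant. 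In the end the constants are collected with $(1-\gamma)^{-1}\le(1-\gamma)^{-3}$, and the bound is taken at $\tau=\tau_{max}$ for the uniform $L_J$.
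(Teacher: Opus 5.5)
Your proposal is correct, but it organizes the computation differently from the paper. The paper works with full gradients and Hessians and closes Bellman-type recursions over $\sup_s$. It first shows $\|\nabla V_\tau^{\pi_\omega}(s)\|_2 \le \frac{4B_\phi r_{max}}{1-\gamma} + \gamma \sup_{s'}\|\nabla V_\tau^{\pi_\omega}(s')\|_2$, keeping $Q_\tau^{\pi_\omega}-\tau\log\pi_\omega$ together as a local payoff. It then solves an analogous recursion for $\nabla^2 V_\tau^{\pi_\omega}$, whose local terms are $\sum_a \nabla^2\pi_\omega\,(Q-\tau\log\pi_\omega)$, the cross terms $\sum_a\nabla\pi_\omega \nabla Q^\top$ and their transposes, and $-\tau\Cov_{\pi_\omega}(\phi)$.

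You instead differentiate the resolvent form $V_\alpha=(I-\gamma P_\alpha)^{-1}r_\alpha$ twice along a line. This splits the policy dependence between the kernel $P_\alpha$ and the entropy-augmented reward $r_\alpha$. The key mechanism is the same in both routes. Every occurrence of $\log\pi$ is weighted by $\pi$ times a bounded feature factor, so it is controlled by $H(\pi)\le\log|\mathcal{A}|$. The only new $\tau$-term is the Fisher variance: $\sum_a(\pi')^2/\pi$ in your version, $-\tau\Cov_{\pi_\omega}(\phi)$ in the paper's.

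Your route is fully explicit, so you never need to assume a priori that $\sup_s\|\nabla^2 V(s)\|_2<\infty$ in order to solve the recursion. With your own bounds, even the loose $\sum_a|\pi''|\le 8B_\phi^2$, the four terms of $V''$ sum to $\frac{8B_\phi^2(1+\tau\log|\mathcal{A}|)}{(1-\gamma)^3}+\frac{\tau B_\phi^2}{1-\gamma}$, because $\gamma^2+2\gamma(1-\gamma)+(1-\gamma)^2=1$. That is comfortably inside $L_{J,\tau}$. The paper's route has a different side benefit: it produces the uniform bound on $\nabla_\omega Q_\tau^{\pi_\omega}$ that the Lipschitz-critic lemma later reuses.

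Two small corrections.

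First, only two of the four terms in $V''$ involve the undifferentiated $Mr_\alpha=V_\alpha$. The term $2\gamma MP'Mr'$ is controlled instead by your entropy-derivative bound $|r'_\alpha|\le 2B_\phi(1+\tau\log|\mathcal{A}|)$.

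Second, the ``delicate point'' you flag is not delicate. Since $(u^\top\tilde\phi_\alpha)^2\le 4B_\phi^2$ pointwise, you get $\sum_a\pi|\log\pi|(u^\top\tilde\phi_\alpha)^2\le 4B_\phi^2 H(\pi)\le 4B_\phi^2\log|\mathcal{A}|$ by the same entropy weighting you already used. The paper bounds $\sum_a\|\nabla^2\pi_\omega(a|s)\|_2\,|\tau\log\pi_\omega(a|s)|$ in exactly this way, using $\|\nabla^2\pi_\omega(a|s)\|_2\le 5B_\phi^2\,\pi_\omega(a|s)$. Do not fall back on $x|\log x|\le e^{-1}$: that would replace $\log|\mathcal{A}|$ by a multiple of $|\mathcal{A}|$, and you would no longer get the stated constant.
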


\subsection{Actor Progress and Algorithmic Coupling} \label{sec:actor-descent}

We now analyze the step-by-step progress of the actor parameter update. By linking the actor update direction to the policy gradient (Lemma~\ref{lem:uncentered-gradient}), we substitute this relation into the quadratic smoothness bound (Lemma~\ref{lem:smoothness}) and leverage the positive-definite Fisher information to establish a progress inequality: the actor parameter update decreases the parametric performance gap, offset only by the critic tracking error and the uncentered approximation bias.

\begin{lem}[Actor Progress Bound] \label{lem:pl-ascent}
Let $Z_t \triangleq \mathbb{E}[\|e_t\|_2^2]$ be the expected critic tracking error.
Under Assumptions \ref{ass:bounded-features}, \ref{ass:approximation-error}, \ref{ass:pd-fim}, and \ref{ass:parametric-density},
if the step size satisfies $\eta_t \le \frac{\lambda}{2 L_J C_{joint} (1-\gamma)}$, the actor's update guarantees an expected progress inequality for the parametric gap:
$$\mathbb{E}[\text{Gap}_{t+1}] \le \mathbb{E}[\text{Gap}_t] - \eta_t \rho_{ac} \mathbb{E}[\|\theta_{t+1} - \tau \omega_t\|_2^2] + \eta_t C_Z Z_t + \eta_t C_{bias} \epsilon_{app},$$
where the descent coefficient is $\rho_{ac} = \frac{\lambda}{8 C_{joint} (1-\gamma)}$, the critic tracking penalty is $C_Z = \frac{2 B_\phi^2}{1-\gamma}$, and the uncentered bias penalty is $C_{bias} = \frac{2 C_{joint} B_\phi^2}{\lambda (1-\gamma)}$.
\end{lem}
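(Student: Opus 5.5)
Since $J_\tau$ is $L_J$-smooth (Lemma~\ref{lem:smoothness}) and we maximize, the ascent lemma applied to $\omega_{t+1}=\omega_t+\eta_t g_t^{ac}$ gives
$$J_\tau(\omega_{t+1}) \ge J_\tau(\omega_t) + \eta_t \langle \nabla J_\tau(\omega_t), g_t^{ac}\rangle - \tfrac{L_J\eta_t^2}{2}\|g_t^{ac}\|_2^2 .$$
Equivalently, $\text{Gap}_{t+1}\le \text{Gap}_t - \eta_t\langle \nabla J_\tau(\omega_t), g_t^{ac}\rangle + \tfrac{L_J\eta_t^2}{2}\|g_t^{ac}\|_2^2$. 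The plan is to lower-bound the inner product using Lemma~\ref{lem:uncentered-gradient}. Write $u_t\triangleq\theta^*_\tau(\omega_t)-\tau\omega_t$ and $F_t\triangleq\bar\Sigma_{cen}(\pi_t)$. Then $g_t^{ac}=u_t+e_t$, since $e_t=\theta_{t+1}-\theta^*_\tau(\omega_t)$, and
$$(1-\gamma)\nabla J_\tau(\omega_t)=F_t u_t+E_{bias}.$$

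First I need the curvature lower bound $F_t\succeq \frac{\lambda}{C_{joint}}I$, announced as Step (iii) in the text. For each $s$, the conditional covariance is a minimum over centering vectors: $\Cov_{\pi_t}(\phi)\preceq \mathbb{E}_{\pi_t}[(\phi-c)(\phi-c)^\top]$ for any $c$. So I write $v^\top F_t v=\mathbb{E}_{d^{\pi_t},\pi_t}[(v^\top\tilde\phi_t)^2]$ and change measure using the joint density ratio $\le C_{joint}$, i.e.\ $d^{\pi_t}\pi_t\ge C_{joint}^{-1}d^{\omega^*_\tau}\pi_{\omega^*_\tau}$ pointwise. This gives
$$v^\top F_t v\ge C_{joint}^{-1}\,\mathbb{E}_{d^{\omega^*_\tau},\pi_{\omega^*_\tau}}\big[(v^\top(\phi-\bar\phi_t))^2\big]\ge C_{joint}^{-1}\,v^\top\bar\Sigma_{cen}(\pi_{\omega^*_\tau})v\ge \tfrac{\lambda}{C_{joint}}\|v\|_2^2 .$$
The middle inequality is the minimality of the variance over centerings, and the last uses Assumption~\ref{ass:pd-fim}. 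I also need the upper bound $F_t\preceq B_\phi^2 I$.

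Next I expand $(1-\gamma)\langle\nabla J_\tau,g_t^{ac}\rangle=\langle F_tu_t,u_t\rangle+\langle F_tu_t,e_t\rangle+\langle E_{bias},g_t^{ac}\rangle$ and bound the three terms with Young's inequality. Set $\mu\triangleq\lambda/C_{joint}$.
\begin{itemize}
\item The first term is at least $\mu\|u_t\|^2$.
\item For the cross term, $\langle F_tu_t,e_t\rangle\ge-\frac{\mu}{4}\|u_t\|^2-\frac{\|F_t\|^2}{\mu}\|e_t\|^2$. I would rather use the $F_t$-inner-product form $\ge-\frac12 u^\top F u-\frac12 e^\top F e$, which gives $-\frac12\langle F_tu_t,u_t\rangle-\frac{B_\phi^2}{2}\|e_t\|^2$. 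This matches $C_Z\propto B_\phi^2$ with no $\lambda$.
\item The bias term is at least $-\frac{\mu}{8}\|g_t^{ac}\|^2-\frac{2}{\mu}B_\phi^2\epsilon_{app}$. This produces $C_{bias}=2C_{joint}B_\phi^2/(\lambda(1-\gamma))$, consistent with the statement.
\end{itemize}
After these steps I am left with $\frac{\mu}{2}\|u_t\|^2$. I convert it to the actual direction via $\|u_t\|^2\ge\frac12\|g_t^{ac}\|^2-\|e_t\|^2$, which yields $\frac{\mu}{4}\|g_t^{ac}\|^2-\frac{\mu}{2}\|e_t\|^2$. The $\|e_t\|^2$ leftovers combine into at most $2B_\phi^2\|e_t\|^2$, because $\mu\le B_\phi^2$ (as $F_t\preceq B_\phi^2 I$).

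Finally, the step-size condition $\eta_t\le\frac{\lambda}{2L_JC_{joint}(1-\gamma)}$ makes $\frac{L_J\eta_t^2}{2}\|g_t^{ac}\|^2\le \eta_t\frac{\mu}{4(1-\gamma)}\|g_t^{ac}\|^2$. I then take expectations over the sampling, so that $Z_t=\mathbb{E}\|e_t\|^2$ appears. Collecting terms should leave a net $-\eta_t\frac{\mu}{8(1-\gamma)}\|g_t^{ac}\|^2$, which is $\rho_{ac}$. The main obstacle is the bookkeeping of the Young constants so that the $\|g\|^2$ coefficients net to exactly $\mu/8$: the quantities $\mu/4$ gained, $\mu/8$ lost to the bias, and up to $\mu/4$ lost to smoothness must be balanced. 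If they do not close, I would tighten the bias split to $\mu/16$ and accept a different constant. The covariance change-of-measure in the first step is the only conceptually delicate point, and the variational characterization of the variance handles it.
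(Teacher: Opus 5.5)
Your curvature transfer $\bar\Sigma_{cen}(\pi_t)\succeq \frac{\lambda}{C_{joint}}I$ is correct and matches the paper's Step (iii). So are the upper bound $\bar\Sigma_{cen}(\pi_t)\preceq B_\phi^2 I$ (valid, and tighter than the $4B_\phi^2$ the paper uses) and the bias split. The gap is the bookkeeping, which does not close, and your proposed fallback cannot rescue it.

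The culprit is the conversion $\|u_t\|^2\ge\frac12\|g_t^{ac}\|^2-\|e_t\|^2$. After the cross term you hold $\frac{\mu}{2}\|u_t\|^2$, and this step halves it to $\frac{\mu}{4}\|g_t^{ac}\|^2$. At the largest admissible step $\eta_t=\frac{\mu}{2L_J(1-\gamma)}$, the smoothness penalty $\frac{L_J\eta_t^2}{2}\|g_t^{ac}\|^2$ equals $\eta_t\frac{\mu}{4(1-\gamma)}\|g_t^{ac}\|^2$, which cancels the entire gain. The net coefficient of $\eta_t\|g_t^{ac}\|_2^2/(1-\gamma)$ in the gap recursion is therefore $-\frac{\mu}{4}+\frac{\mu}{8}+\frac{\mu}{4}=+\frac{\mu}{8}$: an increase, not a decrease. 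Shrinking the bias allocation to $\mu/16$, or to anything smaller, only moves this coefficient toward $0$ and never makes it negative. So under the stated step-size condition your route yields no positive $\rho_{ac}$, let alone $\frac{\lambda}{8C_{joint}(1-\gamma)}$.

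The missing idea is to polarize around the actual direction rather than the ideal one. Write $u_t=g_t^{ac}-e_t$, so that
\[
\langle F_tu_t,g_t^{ac}\rangle=\|g_t^{ac}\|_{F_t}^2-\langle F_te_t,g_t^{ac}\rangle\ge\tfrac12\|g_t^{ac}\|_{F_t}^2-\tfrac12\|e_t\|_{F_t}^2\ge\tfrac{\mu}{2}\|g_t^{ac}\|_2^2-\tfrac{B_\phi^2}{2}\|e_t\|_2^2 .
\]
This is exactly the paper's identity $a^\top\Sigma b\ge\frac12\|b\|_\Sigma^2-\frac12\|b-a\|_\Sigma^2$ with $b=g_t^{ac}$. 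It keeps the full $\frac{\mu}{2}$, so $\frac{\mu}{2}-\frac{\mu}{8}-\frac{\mu}{4}=\frac{\mu}{8}$ gives precisely $\rho_{ac}$. The only tracking term is then $\frac{B_\phi^2}{2(1-\gamma)}\|e_t\|_2^2$, which is within $C_Z$. With this replacement, the rest of your argument goes through as written.
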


The progress bound in Lemma~\ref{lem:pl-ascent} relies on the expected parameter distance $\mathbb{E}[\|\theta_{t+1} - \tau \omega_t\|_2^2]$, which is difficult to control directly over the entire trajectory. To resolve this, we invoke the Parameter-Space PL Condition (Lemma~\ref{lem:pl-condition}) to bridge the physical parameter update back to the performance gap. This yields a coupled linear recurrence where optimization progress is intrinsically tied to the critic's tracking accuracy.

\begin{lem}[Coupled Actor Recurrence] \label{lem:coupled-pl-descent}
Under Assumptions \ref{ass:bounded-features}, \ref{ass:approximation-error}, \ref{ass:pd-fim}, and \ref{ass:parametric-density},
if the step size satisfies $\eta_t \le \frac{\lambda}{2 L_J C_{joint} (1-\gamma)}$, the parametric regularized performance gap satisfies an expected linear recurrence:
$$\mathbb{E}[\text{Gap}_{t+1}] \le (1 - \rho_{gap} \eta_t) \mathbb{E}[\text{Gap}_t] + \eta_t \tilde{C}_Z Z_t + \eta_t C_{\epsilon,gap} \epsilon_{app},$$
where the effective recurrence parameter is $\rho_{gap} \triangleq \frac{\rho_{ac} \tau}{2 C_{PL}}$, the critic-tracking coupling is $\tilde{C}_Z \triangleq \rho_{ac} + C_Z$, and the total approximation penalty is $C_{\epsilon,gap} \triangleq \frac{\rho_{ac} C_{err}}{2 C_{PL}} + C_{bias}$.
\end{lem}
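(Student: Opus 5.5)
The plan is to start from the Actor Progress Bound (Lemma~\ref{lem:pl-ascent}) and convert its descent term $-\eta_t\rho_{ac}\mathbb{E}[\|\theta_{t+1}-\tau\omega_t\|_2^2]$ into a multiple of $-\mathbb{E}[\text{Gap}_t]$. The bridge is the Parameter-Space PL Condition (Lemma~\ref{lem:pl-condition}). The PL condition, however, involves the \emph{ideal} distance $\|\theta^*_\tau(\omega_t)-\tau\omega_t\|_2^2$ rather than the actual one. So the first step is to relate the two via the critic tracking error $e_t=\theta_{t+1}-\theta^*_\tau(\omega_t)$.

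For this, write $\theta^*_\tau(\omega_t)-\tau\omega_t = (\theta_{t+1}-\tau\omega_t) - e_t$ and use $\|x-y\|^2\le 2\|x\|^2+2\|y\|^2$. This gives
\[
\|\theta^*_\tau(\omega_t)-\tau\omega_t\|_2^2 \le 2\|\theta_{t+1}-\tau\omega_t\|_2^2 + 2\|e_t\|_2^2 ,
\]
or equivalently
\[
-\|\theta_{t+1}-\tau\omega_t\|_2^2 \le -\tfrac12\|\theta^*_\tau(\omega_t)-\tau\omega_t\|_2^2 + \|e_t\|_2^2 .
\]
Next, Lemma~\ref{lem:pl-condition} rearranged gives $\|\theta^*_\tau(\omega_t)-\tau\omega_t\|_2^2 \ge \frac{\tau}{C_{PL}}\text{Gap}_t - \frac{C_{err}}{C_{PL}}\epsilon_{app}$. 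Taking expectations, the descent term is bounded as
\[
-\eta_t\rho_{ac}\mathbb{E}\|\theta_{t+1}-\tau\omega_t\|_2^2 \le -\eta_t\frac{\rho_{ac}\tau}{2C_{PL}}\mathbb{E}[\text{Gap}_t] + \eta_t\frac{\rho_{ac}C_{err}}{2C_{PL}}\epsilon_{app} + \eta_t\rho_{ac}Z_t .
\]

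Substituting this into Lemma~\ref{lem:pl-ascent}, whose step-size condition is exactly the one assumed here, and collecting terms yields the stated recurrence. The $\text{Gap}_t$ coefficient is $(1-\rho_{gap}\eta_t)$ with $\rho_{gap}=\rho_{ac}\tau/(2C_{PL})$. The tracking coefficient is $\rho_{ac}+C_Z=\tilde C_Z$. The approximation coefficient is $\frac{\rho_{ac}C_{err}}{2C_{PL}}+C_{bias}=C_{\epsilon,gap}$. These constants match the statement exactly, which confirms the factor-2 splitting is the intended one.

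The only delicate point is the direction of inequalities. The PL bound must be applied to a \emph{negative} term, so it is used as a lower bound on the ideal distance; this is valid pointwise and survives expectation by linearity. Also, $Z_t=\mathbb{E}\|e_t\|^2$ uses the same $e_t$ defined after the critic update, matching Lemma~\ref{lem:pl-ascent}. No additional assumptions beyond those listed are needed, since Lemma~\ref{lem:pl-condition} uses Assumptions~\ref{ass:bounded-features}, \ref{ass:approximation-error}, and \ref{ass:parametric-density}.
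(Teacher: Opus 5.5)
Your proposal is correct and follows the paper's proof essentially step for step. The paper also inverts the Parameter-Space PL Condition, bridges the ideal and actual distances via $\|\theta^*_\tau(\omega_t)-\tau\omega_t\|_2^2 \le 2\|\theta_{t+1}-\tau\omega_t\|_2^2 + 2\|e_t\|_2^2$, and substitutes the result into the Actor Progress Bound to obtain the constants $\rho_{gap}$, $\tilde{C}_Z$, and $C_{\epsilon,gap}$.
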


\subsection{Coupled Critic Tracking} \label{sec:critic-tracking}

We turn to analyzing the critic tracking error, $e_t \triangleq \theta_{t+1} - \theta^*_\tau(\omega_t)$. Because the actor is updated concurrently with the critic, the ideal target $\theta^*_\tau(\omega_t)$ continuously shifts as the policy evolves. To bound this target drift, we invoke the Lipschitz continuity of the ideal critic (Lemma~\ref{lem:lipschitz-critic}). Furthermore, we use the Actor Progress Bound (Lemma~\ref{lem:pl-ascent}) to connect this target drift back to the objective performance gap. By enforcing a two-timescale separation ($\alpha_{t+1} = \xi \eta_t$), we combine the geometric contraction of the strongly convex critic update with this objective-coupled drift penalty to form a recursive bound on the critic tracking error.

\begin{lem}[Coupled Critic Tracking] \label{lem:critic-tracking-pl}
Let the critic step size be $\alpha_{t+1} = \xi \eta_t$ for all $t \ge 0$, with an arbitrary initialization $\alpha_0$.
Under Assumptions \ref{ass:bounded-features}--\ref{ass:approximation-error} and \ref{ass:pd-fim}--\ref{ass:parametric-density},
if the actor step size satisfies $\eta_t \le \min\left( \frac{\lambda}{2 L_J C_{joint} (1-\gamma)}, \frac{1}{2\xi\kappa} \right)$, then
the expected squared tracking error $Z_t \triangleq \mathbb{E}[\|e_t\|_2^2] = \mathbb{E}[\|\theta_{t+1} - \theta^*_\tau(\omega_t)\|_2^2]$ satisfies the coupled recurrence:
$$Z_{t+1} \le (1 - \rho_{cr} \eta_t) Z_t + C_{gap} \big( \mathbb{E}[\text{Gap}_t] - \mathbb{E}[\text{Gap}_{t+1}] \big) + \xi^2 \eta_t^2 G_{cr}^2 + \eta_t C_{\epsilon,cr} \epsilon_{app},$$
where the native recurrence rate is $\rho_{cr} \triangleq \xi \kappa - \frac{L_\theta^2 C_Z}{\xi \kappa \rho_{ac}}$, the objective coupling constant is $C_{gap} \triangleq \frac{L_\theta^2}{\xi \kappa \rho_{ac}}$, and the approximation penalty is $C_{\epsilon,cr} \triangleq C_{gap} C_{bias}$.
\end{lem}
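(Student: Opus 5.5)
We bound $Z_{t+1}=\mathbb{E}\|\theta_{t+2}-\theta^*_\tau(\omega_{t+1})\|_2^2$ by first decomposing around the previous target. Write $\theta_{t+2}-\theta^*_\tau(\omega_{t+1}) = \big(\theta_{t+2}-\theta^*_\tau(\omega_{t+1})\big)$, where $\theta_{t+2}=\Pi_{R_\theta}[\theta_{t+1}-\alpha_{t+1} g^{cr}_{t+1}]$. The target $\theta^*_\tau(\omega_{t+1})$ lies in the ball of radius $B_\theta=R_\theta$ by Lemma~\ref{lem:structural-bounds}(i). Non-expansiveness of the projection therefore gives
$$\|\theta_{t+2}-\theta^*_\tau(\omega_{t+1})\|_2^2\le \|\theta_{t+1}-\theta^*_\tau(\omega_{t+1})-\alpha_{t+1}g^{cr}_{t+1}\|_2^2 .$$
Conditioning on $\mathcal{F}_{t+1}$, the mean of $g^{cr}_{t+1}$ is $\bar{\Sigma}_{unc}(\pi_{t+1})(\theta_{t+1}-\theta^*_\tau(\omega_{t+1}))$. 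This follows from the normal equations defining $\theta^*_\tau(\omega_{t+1})$ and the unbiasedness of $\hat Q_{t+1}$. Assumption~\ref{ass:pd-uncentered} then yields the standard contraction
$$\mathbb{E}\|\cdot\|^2\le (1-2\alpha_{t+1}\kappa)\,\mathbb{E}\|\theta_{t+1}-\theta^*_\tau(\omega_{t+1})\|_2^2+\alpha_{t+1}^2G_{cr}^2,$$
where the second moment is controlled by $G_{cr}^2$ from Lemma~\ref{lem:structural-bounds}(i). With $\alpha_{t+1}=\xi\eta_t$, this produces the $\xi^2\eta_t^2G_{cr}^2$ term.

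Next we handle the target drift. Write $\theta_{t+1}-\theta^*_\tau(\omega_{t+1})=e_t+\big(\theta^*_\tau(\omega_t)-\theta^*_\tau(\omega_{t+1})\big)$. Young's inequality with weight $\xi\kappa\eta_t$ gives
$$\|\cdot\|^2\le(1+\xi\kappa\eta_t)\|e_t\|^2+\Big(1+\tfrac{1}{\xi\kappa\eta_t}\Big)L_\theta^2\|\omega_{t+1}-\omega_t\|^2 ,$$
using Lemma~\ref{lem:lipschitz-critic}, which applies since both $\omega_t,\omega_{t+1}\in\Omega_\kappa$. The step condition $\eta_t\le 1/(2\xi\kappa)$ gives $(1-2\xi\kappa\eta_t)(1+\xi\kappa\eta_t)\le 1-\xi\kappa\eta_t$, and $(1-2\xi\kappa\eta_t)(1+1/(\xi\kappa\eta_t))\le 1/(\xi\kappa\eta_t)$. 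So the drift contribution is at most $\frac{L_\theta^2}{\xi\kappa\eta_t}\|\omega_{t+1}-\omega_t\|^2=\frac{L_\theta^2\eta_t}{\xi\kappa}\|\theta_{t+1}-\tau\omega_t\|_2^2$.

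The main step is to convert this drift into gap decrease using Lemma~\ref{lem:pl-ascent}. Rearranging that lemma gives
$$\eta_t\,\mathbb{E}\|\theta_{t+1}-\tau\omega_t\|_2^2\le \frac{1}{\rho_{ac}}\big(\mathbb{E}[\mathrm{Gap}_t]-\mathbb{E}[\mathrm{Gap}_{t+1}]\big)+\frac{\eta_tC_Z}{\rho_{ac}}Z_t+\frac{\eta_tC_{bias}}{\rho_{ac}}\epsilon_{app}.$$
Multiplying by $L_\theta^2/(\xi\kappa)$ produces three terms:
\begin{itemize}
\item the coupling term with $C_{gap}=L_\theta^2/(\xi\kappa\rho_{ac})$;
\item an extra $\eta_t\frac{L_\theta^2C_Z}{\xi\kappa\rho_{ac}}Z_t$, which combines with $(1-\xi\kappa\eta_t)Z_t$ to give exactly $(1-\rho_{cr}\eta_t)Z_t$;
\item the penalty $\eta_tC_{gap}C_{bias}\epsilon_{app}$.
\end{itemize}
The Actor Progress Bound requires the first step-size constraint, which is why it appears among the hypotheses.

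The hard part is the bookkeeping of expectations and the Young weight. The exact constant $\rho_{cr}=\xi\kappa-\cdots$ requires the weight $\xi\kappa\eta_t$, and we must check that the factor $(1-2\xi\kappa\eta_t)$ can be safely dropped. We also need to confirm that the contraction step is valid with conditional expectations. The point is that $\theta_{t+1}$, $\omega_{t+1}$ are $\mathcal{F}_{t+1}$-measurable, so the cross term vanishes before the drift decomposition is taken in full expectation.
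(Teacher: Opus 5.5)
Your proposal is correct and follows the paper's proof essentially step for step. The steps are: projection non-expansiveness, the conditional SGD contraction $(1-2\alpha_{t+1}\kappa)$, and Young's inequality with weight $\alpha_{t+1}\kappa=\xi\kappa\eta_t$ on the target drift. Then the Lipschitz bound of Lemma~\ref{lem:lipschitz-critic}, and finally the rearranged Actor Progress Bound (Lemma~\ref{lem:pl-ascent}), which converts $\eta_t\mathbb{E}\|\theta_{t+1}-\tau\omega_t\|_2^2$ into the gap difference and yields exactly $\rho_{cr}$, $C_{gap}$, and $C_{\epsilon,cr}$. One small point: the factor $1-2\xi\kappa\eta_t$ is not dropped. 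Rather, the condition $\eta_t\le 1/(2\xi\kappa)$ is what makes it nonnegative, so that the Young upper bound on $\|\theta_{t+1}-\theta^*_\tau(\omega_{t+1})\|_2^2$ may be substituted into the contraction.
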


\section{Fixed-Temperature Convergence (Stochastic Regime)} \label{sec:pl-convergence}

We show that Algorithm~\ref{alg:uncentered-nac} achieves an $\tilde{\mathcal{O}}(T^{-1})$ average-iterate and an $\mathcal{O}(T^{-1})$ last-iterate convergence rate for the regularized performance gap.
This is accomplished by fusing the Coupled Actor Recurrence and Coupled Critic Tracking established in Section~\ref{sec:ascent-tracking} into a joint Lyapunov function that drives a linear recurrence for the overall algorithm.

\subsection{Joint Actor-Critic System} \label{sec:coupled-system}

We review the main results from Section \ref{sec:actor-descent} and \ref{sec:critic-tracking} to establish the joint system. We implement a two-timescale step size scheme: $\alpha_{t+1} = \xi \eta_t$ for a timescale separation constant $\xi \ge 1$.
By the Coupled Actor Recurrence (Lemma~\ref{lem:coupled-pl-descent}), the expected performance gap evolves according to:
\begin{equation} \label{eq:actor-recursion}
\mathbb{E}[\text{Gap}_{t+1}] \le (1 - \rho_{gap} \eta_t) \mathbb{E}[\text{Gap}_t] + \eta_t \tilde{C}_Z Z_t + \eta_t C_{\epsilon,gap} \epsilon_{app},
\end{equation}
where
$\rho_{gap} \triangleq \frac{\rho_{ac} \tau}{2 C_{PL}} = \frac{\lambda \tau}{16 C_{joint} B_\phi^2}$,
$\tilde{C}_Z \triangleq \rho_{ac} + C_Z = \frac{\lambda}{8 C_{joint} (1-\gamma)} + \frac{2 B_\phi^2}{1-\gamma}$, and
$C_{\epsilon,gap} \triangleq \frac{\rho_{ac} C_{err}}{2 C_{PL}} + C_{bias} = \frac{\lambda}{16 B_\phi^2 (1-\gamma)} + \frac{2 C_{joint} B_\phi^2}{\lambda (1-\gamma)}$.
Simultaneously, by the Coupled Critic Tracking (Lemma~\ref{lem:critic-tracking-pl}), the critic's expected tracking error $Z_t$ evolves according to:
\begin{equation} \label{eq:critic-recursion}
Z_{t+1} \le (1 - \rho_{cr} \eta_t) Z_t + C_{gap} \big( \mathbb{E}[\text{Gap}_t] - \mathbb{E}[\text{Gap}_{t+1}] \big) + \xi^2 \eta_t^2 G_{cr}^2 + \eta_t C_{\epsilon,cr} \epsilon_{app},
\end{equation}
where
$\rho_{cr} \triangleq \xi \kappa - \frac{L_\theta^2 C_Z}{\xi \kappa \rho_{ac}} = \xi \kappa - \frac{16 C_{joint} B_\phi^2 L_\theta^2}{\xi \kappa \lambda}$,
$C_{gap} \triangleq \frac{L_\theta^2}{\xi \kappa \rho_{ac}} = \frac{8 C_{joint} (1-\gamma) L_\theta^2}{\xi \kappa \lambda}$,
$C_{\epsilon,cr} \triangleq C_{gap} C_{bias} = \frac{16 C_{joint}^2 B_\phi^2 L_\theta^2}{\xi \kappa \lambda^2}$, and
$G_{cr}^2$ is a constant bound on $\mathbb{E}[\Vert{}g_t^{cr}\Vert{}_2^2 \mid \mathcal{F}_t]$ (Lemma~\ref{lem:structural-bounds}).

We construct a joint Lyapunov function that linearly fuses the expected performance gap with the expected critic tracking error:
$$\mathcal{L}_t \triangleq \mathbb{E}[\text{Gap}_t] + \beta \big( Z_t + C_{gap} \mathbb{E}[\text{Gap}_t] \big),$$
where $\beta > 0$ is a coupling constant.

\begin{thm}[Coupled Lyapunov Recurrence] \label{thm:coupled-recurrence}
Suppose that Assumptions \ref{ass:bounded-features}--\ref{ass:approximation-error} and \ref{ass:pd-fim}--\ref{ass:parametric-density} hold,
and the step sizes satisfy the conditions established for the actor and critic updates (Lemmas~\ref{lem:coupled-pl-descent} and \ref{lem:critic-tracking-pl}): $\eta_t \le \min\left( \frac{\lambda}{2 L_J C_{joint} (1-\gamma)}, \frac{1}{2\xi\kappa} \right)$ and $\alpha_{t+1} = \xi\eta_t $.
By choosing a timescale ratio $\xi \ge \frac{1}{\kappa} \left( 2\rho_{gap} + 2 L_\theta \sqrt{\frac{C_Z}{\rho_{ac}}} \right)$ and setting the Lyapunov coupling constant to $\beta \triangleq \frac{2 \tilde{C}_Z}{\rho_{cr}}$, the joint Lyapunov function satisfies an expected linear recurrence:
$$\mathcal{L}_{t+1} \le (1 - \rho_{\mathcal{L}} \eta_t) \mathcal{L}_t + \eta_t^2 \Sigma_{\mathcal{L}} + \eta_t C_{\epsilon,\mathcal{L}} \epsilon_{app},$$
where the joint rate is $\rho_{\mathcal{L}} \triangleq \frac{\rho_{gap}}{1 + \beta C_{gap}}$, the joint variance is $\Sigma_{\mathcal{L}} \triangleq \beta \xi^2 G_{cr}^2$, and the joint bias is $C_{\epsilon,\mathcal{L}} \triangleq C_{\epsilon,gap} + \beta C_{\epsilon,cr}$.
\end{thm}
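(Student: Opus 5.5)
The plan is to add $\beta$ times the critic recurrence \eqref{eq:critic-recursion} to the actor recurrence \eqref{eq:actor-recursion}, after regrouping $\mathcal{L}_{t+1}$ so that the telescoping term $C_{gap}(\mathbb{E}[\text{Gap}_t]-\mathbb{E}[\text{Gap}_{t+1}])$ cancels. Write $G_t \triangleq \mathbb{E}[\text{Gap}_t]$. Then
$$\mathcal{L}_{t+1} = G_{t+1} + \beta Z_{t+1} + \beta C_{gap} G_{t+1}.$$

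\textbf{Step 1: apply the critic recurrence.} Insert Lemma~\ref{lem:critic-tracking-pl} into $\beta Z_{t+1}$. Its term $\beta C_{gap}(G_t - G_{t+1})$ exactly absorbs $\beta C_{gap} G_{t+1}$, which gives
$$\mathcal{L}_{t+1} \le G_{t+1} + \beta(1-\rho_{cr}\eta_t) Z_t + \beta C_{gap} G_t + \beta\xi^2\eta_t^2 G_{cr}^2 + \eta_t \beta C_{\epsilon,cr}\epsilon_{app}.$$
This cancellation is the reason for the particular form of $\mathcal{L}_t$.

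\textbf{Step 2: apply the actor recurrence.} Bound the remaining $G_{t+1}$ using Lemma~\ref{lem:coupled-pl-descent}:
$$G_{t+1} \le (1-\rho_{gap}\eta_t) G_t + \eta_t\tilde{C}_Z Z_t + \eta_t C_{\epsilon,gap}\epsilon_{app}.$$
With $\beta = 2\tilde{C}_Z/\rho_{cr}$ we have $\eta_t \tilde{C}_Z = \beta\rho_{cr}\eta_t/2$. The $Z_t$ coefficient therefore becomes $\beta(1-\rho_{cr}\eta_t/2)$. The $G_t$ coefficient is
$$1+\beta C_{gap} - \rho_{gap}\eta_t = (1+\beta C_{gap})(1-\rho_{\mathcal{L}}\eta_t),$$
by the definition $\rho_{\mathcal{L}} = \rho_{gap}/(1+\beta C_{gap})$.

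\textbf{Step 3: check the rate condition.} It remains to show $\rho_{cr}/2 \ge \rho_{\mathcal{L}}$. Since $G_t \ge 0$ (because $\omega^*_\tau$ maximizes $J_\tau$) and $Z_t\ge 0$, this yields $\mathcal{L}_{t+1} \le (1-\rho_{\mathcal{L}}\eta_t)\mathcal{L}_t$ plus the variance and bias terms. Those terms collect to $\eta_t^2\Sigma_{\mathcal{L}}$ with $\Sigma_{\mathcal{L}}=\beta\xi^2G_{cr}^2$, and to $\eta_t(C_{\epsilon,gap}+\beta C_{\epsilon,cr})\epsilon_{app}$.

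This step is the only genuine obstacle: we must show that the timescale condition makes $\rho_{cr}$ positive and large enough. Set $a=\xi\kappa$ and $b = L_\theta^2 C_Z/\rho_{ac}$, so that $\rho_{cr}=a-b/a$. The hypothesis on $\xi$ reads $a \ge 2\rho_{gap}+2\sqrt{b}$. In particular $a\ge 2\sqrt b$, so $b/a \le \sqrt b/2$. Hence
$$\rho_{cr} \ge 2\rho_{gap} + \tfrac{3}{2}\sqrt b \ge 2\rho_{gap}.$$
This gives $\rho_{cr}>0$, so $\beta>0$ is well defined. It also gives $\rho_{cr}/2 \ge \rho_{gap}\ge\rho_{\mathcal{L}}$, since $\beta C_{gap}\ge 0$.

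Finally, the step-size bound $\eta_t\le 1/(2\xi\kappa)$ ensures that both contraction factors $1-\rho_{cr}\eta_t/2$ and $1-\rho_{\mathcal{L}}\eta_t$ are nonnegative. This is because $\rho_{cr}\le\xi\kappa$ and $\rho_{\mathcal{L}}\le\rho_{cr}/2\le\xi\kappa$. The step-size hypotheses of Lemmas~\ref{lem:coupled-pl-descent} and \ref{lem:critic-tracking-pl} are exactly those assumed in the theorem, so both lemmas apply.
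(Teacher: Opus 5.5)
Your proposal is correct and follows essentially the same route as the paper. You add $\beta$ times the critic recurrence to the actor recurrence so the $C_{gap}$ telescoping term cancels, use $\beta = 2\tilde{C}_Z/\rho_{cr}$ to reduce the critic-side condition to $\rho_{\mathcal{L}} \le \rho_{cr}/2$, and verify $\rho_{cr} \ge 2\rho_{gap}$ from the timescale condition (you via the direct estimate $b/a \le \sqrt{b}/2$, the paper via the quadratic root). Your remarks that $G_t \ge 0$ and that the contraction factors are nonnegative are true but not needed, since the $G_t$ coefficients match exactly and the recurrence holds regardless of the multiplier's sign.
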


We point out that the algebraic recurrence in Theorem~\ref{thm:coupled-recurrence} is valid regardless of the sign of the multiplier $(1 - \rho_{\mathcal{L}} \eta_t)$. In the convergence guarantees later, the average-iterate analysis (Theorem~\ref{thm:average-iterate}) accommodates this recurrence via a telescoping sequence. In contrast, the last-iterate analysis (Theorem~\ref{thm:last-iterate}) requires the multiplier to be non-negative to ensure pointwise geometric contraction via Chung's Lemma. This condition is explicitly satisfied by enforcing a sufficiently large initial step offset $t_0 \ge \rho_{\mathcal{L}} c_\eta$ in the step-size schedule.

\begin{rem}[Structural Order Evaluation] \label{rem:order-evaluation}
To facilitate the convergence analysis, we explicitly trace the structural orders of the system's composite constants with respect to the temperature $\tau$, the uncentered feature moment minimum eigenvalue $\kappa$, the Fisher information minimum eigenvalue $\lambda$, and the effective horizon $(1-\gamma)^{-1}$. Throughout this discussion, the stated orders refer to the structural constants and upper-envelope constants defined in the preceding analysis.
\begin{itemize}
    \item \textit{Effective Gap Recurrence:} The effective recurrence parameter scales linearly the temperature and this minimum eigenvalue: $\rho_{gap} = \frac{\lambda \tau}{16 C_{joint} B_\phi^2}=\Theta(\lambda \tau)$.
    Because the horizon dependencies of $\rho_{ac}$ and $C_{PL}$ cancel, $\rho_{gap}$ is independent of $(1-\gamma)$.

\item \textit{Timescale Separation:} The joint recurrence requires $\xi \ge \frac{1}{\kappa} \big(2\rho_{gap} + 2 L_\theta \sqrt{\frac{C_Z}{\rho_{ac}}} \big)$. From Lemma~\ref{lem:lipschitz-critic}, the ideal critic's Lipschitz constant evaluates to $L_\theta = \Theta((1-\gamma)^{-2}\kappa^{-2})$. Because $C_Z=\Theta((1-\gamma)^{-1})$ and $\rho_{ac}=\Theta((1-\gamma)^{-1}\lambda)$, the target drift term $S \triangleq L_\theta \sqrt{\frac{C_Z}{\rho_{ac}}}$ scales as $\Theta((1-\gamma)^{-2}\kappa^{-2}\lambda^{-1/2})$.
        Because $\rho_{gap} = \Theta(\tau)$, the $2\rho_{gap}$ term is dominated by $S$. For concreteness, we set $\xi = \frac{c}{\kappa}(2\rho_{gap} + 2S)$ for a constant $c \ge 1$, yielding the temperature-independent order $\xi = \Theta((1-\gamma)^{-2}\kappa^{-3}\lambda^{-1/2})$.

    \item \textit{Critic Recurrence:} As $\tau \to 0$, the gap recurrence term vanishes ($\rho_{gap} \to 0$). By setting our timescale ratio to $\xi = \frac{c}{\kappa}(2\rho_{gap} + 2S)$, we obtain $\kappa \xi \to 2cS$. Substituting this limit into the native critic recurrence yields $\rho_{cr} \to 2cS - \frac{S^2}{2cS} = (2c - \frac{1}{2c})S$. Since $c \ge 1$, this guarantees $\rho_{cr} = \Theta((1-\gamma)^{-2}\kappa^{-2}\lambda^{-1/2})$, safely bounded away from zero.

    \item \textit{Coupling Constants:} Because $\rho_{cr} = \Theta((1-\gamma)^{-2}\kappa^{-2}\lambda^{-1/2})$ and $\tilde{C}_Z = \Theta((1-\gamma)^{-1})$, the required Lyapunov weight is $\beta = \frac{2\tilde{C}_Z}{\rho_{cr}} = \Theta((1-\gamma)\kappa^2\lambda^{1/2})$. Substituting $L_\theta$ and $\xi$, the objective coupling evaluates to $C_{gap} = \frac{8 C_{joint} (1-\gamma) L_\theta^2}{\xi \kappa \lambda} = \Theta((1-\gamma)^{-1}\kappa^{-2}\lambda^{-1/2})$. The horizon dependencies cancel, yielding a structural invariant: $\beta C_{gap} = \Theta(1)$.

    \item \textit{Variance and Bias:}
    The critic-gradient second moment is upper-bounded by the envelope $G_{cr}^2=2B_\phi^2(B_\theta^2B_\phi^2+Q_{max}^2)$. Because
    $B_\theta=\Theta((1-\gamma)^{-1}\kappa^{-1})$ and $Q_{max}=\mathcal{O}((1-\gamma)^{-1})$, this envelope satisfies
    $G_{cr}^2=\Theta((1-\gamma)^{-2}\kappa^{-2})$.
    Thus, the joint tracking variance evaluates to
    $\Sigma_{\mathcal{L}}=\beta\xi^2G_{cr}^2
    =\Theta((1-\gamma)^{-5}\kappa^{-6}\lambda^{-1/2})$.
        The total approximation penalty evaluates to $C_{\epsilon,gap} = \frac{\lambda}{16 B_\phi^2 (1-\gamma)} + C_{bias}$. The first term scaling as $\Theta((1-\gamma)^{-1}\lambda)$ is dominated by the second term $C_{bias}= \frac{2 C_{joint} B_\phi^2}{\lambda (1-\gamma)}$, scaling as $\Theta((1-\gamma)^{-1}\lambda^{-1})$. Thus, $C_{\epsilon,gap} = \Theta((1-\gamma)^{-1}\lambda^{-1})$.
        Because $\beta C_{gap} = \Theta(1)$, the joint bias evaluates to $C_{\epsilon,\mathcal{L}} = C_{\epsilon,gap} + \beta C_{gap} C_{bias} = \Theta((1-\gamma)^{-1}\lambda^{-1})$.

    \item \textit{Joint Recurrence Rate:} Because $\beta C_{gap} = \Theta(1)$, the denominator $(1+\beta C_{gap})$ reduces to a constant multiplier. The joint Lyapunov recurrence rate preserves the native actor's effective recurrence rate: $\rho_{\mathcal{L}} = \frac{\rho_{gap}}{1+\beta C_{gap}} = \Theta(\lambda \tau)$.
\end{itemize}
\end{rem}

\subsection{Average-Iterate Regularized Convergence}  \label{sec:pl-average}

Based on the joint Lyapunov recurrence established in Section~\ref{sec:coupled-system}, we show that Algorithm~\ref{alg:uncentered-nac} achieves an $\tilde{\mathcal{O}}(T^{-1})$ average-iterate convergence to the regularized parametric optimum (up to the approximation bias) by deploying a diminishing Robbins-Monro step-size schedule. We first establish a general telescoping bound for the resulting average sequence.

\begin{lem}[Telescoping Robbins-Monro Sequence] \label{lem:telescoping-sequence}
Let $\{x_t\}_{t=0}^\infty$ be a non-negative sequence satisfying the recursive bound:
$$x_{t+1} \le (1 - r \eta_t) x_t + a \eta_t^2 + b \eta_t, \quad t \ge 0,$$
where $r>0$, $a>0$, and $b \ge 0$.
Assume the diminishing step size is $\eta_t = \frac{c_\eta}{t + t_0}$, with an initial offset $t_0 \ge 1$ and a numerator chosen such that $c \triangleq r c_\eta > 1$.
Then, the average sequence is bounded by:
$$\frac{1}{T} \sum_{t=0}^{T-1} x_t \le \frac{t_0 - 1}{T(c - 1)} x_0 + \frac{a c_\eta^2}{c - 1} \left( \frac{1 + \log T}{T} \right) + \frac{b c_\eta}{c - 1}.$$
\end{lem}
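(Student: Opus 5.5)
Since $r\eta_t = c/(t+t_0)$, the plan is to rearrange the recursion so that the contraction term is isolated, and then telescope with weights that make it collapse. From $x_{t+1} \le (1-r\eta_t)x_t + a\eta_t^2 + b\eta_t$ we get
$$ r\eta_t x_t \le x_t - x_{t+1} + a\eta_t^2 + b\eta_t .$$
Dividing by $r\eta_t = c/(t+t_0)$ gives
$$ x_t \le \frac{t+t_0}{c}(x_t - x_{t+1}) + \frac{a c_\eta}{c}\,\eta_t + \frac{b}{r}. $$
This form holds regardless of the sign of $1-r\eta_t$, consistent with the paper's remark that the average-iterate analysis does not need a nonnegative multiplier.

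Summing from $t=0$ to $T-1$, we apply summation by parts to the first term:
$$ \sum_{t=0}^{T-1}(t+t_0)(x_t-x_{t+1}) = t_0 x_0 + \sum_{t=1}^{T-1} x_t - (T-1+t_0)x_T \le t_0x_0 + \sum_{t=0}^{T-1}x_t - x_0 ,$$
using $x_T\ge 0$. Writing $S=\sum_{t=0}^{T-1}x_t$, this yields
$$ S \le \frac{(t_0-1)x_0 + S}{c} + \frac{a c_\eta}{c}\sum_{t=0}^{T-1}\eta_t + \frac{bT}{r}. $$
Moving $S/c$ to the left multiplies $S$ by $(c-1)/c$, which is positive since $c>1$. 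Multiplying through by $c/(c-1)$ gives
$$ S \le \frac{t_0-1}{c-1}x_0 + \frac{a c_\eta}{c-1}\sum_{t=0}^{T-1}\eta_t + \frac{bcT}{r(c-1)}, $$
and because $c/r = c_\eta$, the last term equals $\frac{b c_\eta T}{c-1}$.

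It remains to bound the harmonic sum. With $t_0\ge1$,
$$ \sum_{t=0}^{T-1}\frac{1}{t+t_0} \le \sum_{k=1}^{T}\frac1k \le 1+\log T ,$$
so $\sum_t \eta_t \le c_\eta(1+\log T)$. Dividing by $T$ then gives exactly the claimed three terms.

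The main point needing care is the summation by parts. The boundary coefficient must come out as $t_0-1$ rather than $t_0$, which is where the $-x_0$ generated by the index shift is absorbed. The nonnegativity of $x_T$ must be used to drop the final term. The remaining steps are routine.
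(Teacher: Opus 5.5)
Your proof is correct and follows essentially the same route as the paper: you divide the recursion by $r\eta_t = c/(t+t_0)$, telescope with summation by parts, drop the nonnegative terminal term $x_T$, absorb $\frac{1}{c}\sum_t x_t$ into the left side using $c>1$, and bound the harmonic sum by $1+\log T$ using $t_0\ge 1$. The bookkeeping also matches. The boundary coefficient $t_0-1$ comes from the index shift, and the constants $\frac{a c_\eta^2}{c-1}$ and $\frac{b c_\eta}{c-1}$ follow from $c/r=c_\eta$.
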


\begin{thm}[Average-Iterate Regularized Convergence] \label{thm:average-iterate}
For Algorithm~\ref{alg:uncentered-nac}, suppose that Assumptions \ref{ass:bounded-features}--\ref{ass:approximation-error} and \ref{ass:pd-fim}--\ref{ass:parametric-density} hold.
Let the actor step size be $\eta_t = \frac{c_\eta}{t + t_0}$, with a schedule numerator $c_\eta > \frac{1}{\rho_{\mathcal{L}}}$ and an initial offset $t_0 \ge \max\left(1, \frac{c_\eta}{\eta_{max}}\right)$, and let the critic step size be $\alpha_{t+1} = \xi \eta_t$, with a timescale ratio $\xi$ chosen as in Remark~\ref{rem:order-evaluation},
where $\eta_{max} \triangleq \min\left( \frac{\lambda}{2 L_J C_{joint} (1-\gamma)}, \frac{1}{2\xi\kappa} \right)$ is the maximum allowable actor step size.
Then, the average Lyapunov sequence is bounded by:
$$\frac{1}{T} \sum_{t=0}^{T-1} \mathcal{L}_t \le \frac{t_0 - 1}{T (\rho_{\mathcal{L}} c_\eta - 1)} \mathcal{L}_0 + \frac{c_\eta^2 \Sigma_{\mathcal{L}}}{\rho_{\mathcal{L}} c_\eta - 1} \left( \frac{1 + \log T}{T} \right) + \frac{c_\eta C_{\epsilon,\mathcal{L}}}{\rho_{\mathcal{L}} c_\eta - 1} \epsilon_{app}.$$
By choosing the step-size parameters $c_\eta = \Theta\big(\frac{1}{\lambda\tau}\big)$ and $t_0 = \Theta\big( \frac{1}{(1-\gamma)^2 \lambda^2 \tau} + \frac{1}{(1-\gamma)^2 \kappa^2 \lambda^{3/2}\tau} \big)$ subject to the stated conditions, the expected average regularized performance gap satisfies the convergence rate:
$$ \frac{1}{T} \sum_{t=0}^{T-1} \mathbb{E}[\text{Gap}_t] \le \mathcal{O}\left( \frac{1}{(1-\gamma)^5 \kappa^6 \lambda^{5/2} \tau^2} \frac{\log T}{T} \right) + \mathcal{O}\left( \frac{\epsilon_{app}}{(1-\gamma) \lambda^2 \tau} \right). $$
\end{thm}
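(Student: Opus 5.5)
The plan is to apply the Telescoping Robbins-Monro Sequence lemma (Lemma~\ref{lem:telescoping-sequence}) directly to the Lyapunov sequence $x_t = \mathcal{L}_t$. The Coupled Lyapunov Recurrence (Theorem~\ref{thm:coupled-recurrence}) gives exactly the required form, with $r = \rho_{\mathcal{L}}$, $a = \Sigma_{\mathcal{L}}$ and $b = C_{\epsilon,\mathcal{L}}\epsilon_{app}$. Before invoking the lemma I will check its hypotheses.
\begin{itemize}
\item Non-negativity: $\mathcal{L}_t \ge 0$ because $\text{Gap}_t \ge 0$ (as $\omega^*_\tau$ maximizes $J_\tau$), $Z_t \ge 0$, and $\beta, C_{gap} > 0$.
\item Step-size cap: $t_0 \ge c_\eta/\eta_{max}$ gives $\eta_t \le \eta_{max}$ for all $t \ge 0$, which is the step-size condition of Lemmas~\ref{lem:coupled-pl-descent} and \ref{lem:critic-tracking-pl}.
\item Recurrence constants: the choice of $\xi$ in Remark~\ref{rem:order-evaluation} satisfies the condition of Theorem~\ref{thm:coupled-recurrence} (taking $c\ge1$), so $\rho_{cr}>0$ and $\beta$ is well defined.
\item Rate condition: $c_\eta > 1/\rho_{\mathcal{L}}$ gives $c = \rho_{\mathcal{L}}c_\eta > 1$.
\end{itemize}
As noted after Theorem~\ref{thm:coupled-recurrence}, the sign of $1-\rho_{\mathcal{L}}\eta_t$ does not matter for the telescoping argument. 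Substituting $r$, $a$ and $b$ into Lemma~\ref{lem:telescoping-sequence} then yields the displayed bound on $\frac{1}{T}\sum_t \mathcal{L}_t$ verbatim.

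To pass to the gap, I use $\mathbb{E}[\text{Gap}_t] \le \mathcal{L}_t/(1+\beta C_{gap}) \le \mathcal{L}_t$. The remaining work is order tracking with Remark~\ref{rem:order-evaluation}: $\rho_{\mathcal{L}} = \Theta(\lambda\tau)$, $\Sigma_{\mathcal{L}} = \Theta((1-\gamma)^{-5}\kappa^{-6}\lambda^{-1/2})$ and $C_{\epsilon,\mathcal{L}} = \Theta((1-\gamma)^{-1}\lambda^{-1})$. I choose $c_\eta = 2/\rho_{\mathcal{L}} = \Theta(1/(\lambda\tau))$, so that $\rho_{\mathcal{L}}c_\eta - 1 = 1$. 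The three terms then become:
\begin{itemize}
\item Variance: $c_\eta^2\Sigma_{\mathcal{L}} = \Theta\big((1-\gamma)^{-5}\kappa^{-6}\lambda^{-5/2}\tau^{-2}\big)$, which multiplies $(1+\log T)/T$.
\item Bias: $c_\eta C_{\epsilon,\mathcal{L}}\epsilon_{app} = \Theta\big(\epsilon_{app}/((1-\gamma)\lambda^2\tau)\big)$.
\item Initialization: $(t_0-1)\mathcal{L}_0/T$.
\end{itemize}

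For the offset, $t_0$ must be at least $c_\eta/\eta_{max} = c_\eta\max\big(2L_JC_{joint}(1-\gamma)/\lambda,\ 2\xi\kappa\big)$. By Lemma~\ref{lem:smoothness}, $L_J = \Theta((1-\gamma)^{-3})$, so the first entry is $\Theta((1-\gamma)^{-2}\lambda^{-1})$. From $\xi\kappa = \Theta((1-\gamma)^{-2}\kappa^{-2}\lambda^{-1/2})$, the second entry is of that order. Multiplying by $c_\eta$ gives the stated $t_0$.

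For $\mathcal{L}_0$:
\begin{itemize}
\item $\text{Gap}_0 \le V_{max}/(1-\gamma)$ by Lemma~\ref{lem:bounded-values}, so it is $\mathcal{O}((1-\gamma)^{-1})$ up to the usual $\mu$-normalization.
\item $Z_0 \le 4B_\theta^2$ by the projection and Lemma~\ref{lem:structural-bounds}.
\item $\beta C_{gap} = \Theta(1)$ and $\beta = \Theta((1-\gamma)\kappa^2\lambda^{1/2})$.
\end{itemize}
Hence $\mathcal{L}_0 = \mathcal{O}((1-\gamma)^{-1})$.

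The main obstacle is confirming that the initialization term $t_0\mathcal{L}_0/T$ is dominated by the variance term. Its order is $\mathcal{O}\big(((1-\gamma)^{-3}\lambda^{-2}\tau^{-1} + (1-\gamma)^{-3}\kappa^{-2}\lambda^{-3/2}\tau^{-1})/T\big)$. Compared with $(1-\gamma)^{-5}\kappa^{-6}\lambda^{-5/2}\tau^{-2}$, it carries lower powers of $(1-\gamma)^{-1}$, $\kappa^{-1}$, $\lambda^{-1}$ and $\tau^{-1}$. This uses $\kappa, \lambda \lesssim B_\phi^2$, which holds since the matrices have trace at most $B_\phi^2$, and $\tau \le \tau_{max}$. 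So it is absorbed into the first $\mathcal{O}(\cdot)$ term, giving the stated rate.
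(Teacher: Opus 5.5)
Your proposal is correct and follows the paper's proof essentially step for step. Like the paper, you check the step-size cap so that Theorem~\ref{thm:coupled-recurrence} applies, invoke Lemma~\ref{lem:telescoping-sequence} with $x_t=\mathcal{L}_t$, $r=\rho_{\mathcal{L}}$, $a=\Sigma_{\mathcal{L}}$, $b=C_{\epsilon,\mathcal{L}}\epsilon_{app}$, take $c_\eta=c/\rho_{\mathcal{L}}$ (your $c=2$), and pass to the gap via $\mathbb{E}[\text{Gap}_t]\le\mathcal{L}_t$ using the orders in Remark~\ref{rem:order-evaluation}.

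One small slip: since $J_\tau(\pi)=\mathbb{E}_{s_0\sim\mu}[V_\tau^\pi(s_0)]\in[0,V_{max}]$, the correct bound is $\text{Gap}_0\le V_{max}=\mathcal{O}((1-\gamma)^{-1})$, not $V_{max}/(1-\gamma)$. Even the looser bound would still leave the initialization term dominated by the variance term, so the conclusion is unaffected.
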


\subsection{Last-Iterate Regularized Convergence} \label{sec:pl-last}

While the average-iterate sequence provides a robust empirical estimator, we further prove that the terminal policy converges to the regularized parametric optimum at an $\mathcal{O}(T^{-1})$ rate (up to the approximation bias). Because our step-size schedule explicitly enforces a non-negative multiplier, the joint Lyapunov recurrence yields a formal linear contraction. We first present a standard variant of Chung's Lemma \citep{chung1954stochastic} to bound the resulting recursive sequence.

\begin{lem}[Chung's Lemma for Robbins-Monro Sequences] \label{lem:chungs-lemma}
Let $\{x_t\}_{t=0}^\infty$ be a non-negative sequence satisfying the recursive bound:
$$x_{t+1} \le \left(1 - \frac{c}{t + t_0}\right) x_t + \frac{a}{(t + t_0)^2} + \frac{b}{t+t_0}, \quad t \ge 0,$$
where the constants satisfy $c > 1$, $a > 0$, $b \ge 0$, and $t_0 \ge c$.
Then, the sequence is bounded by:
$$x_t \le \frac{v}{t + t_0} + \frac{b}{c-1}, \quad t \ge 0,$$
where $v \triangleq \max\left( t_0 x_0, \frac{a}{c - 1} \right)$.
\end{lem}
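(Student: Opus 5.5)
We argue by induction on $t$, showing directly that $x_t \le \frac{v}{t+t_0} + \frac{b}{c-1}$ for all $t \ge 0$. A natural first step is to subtract off the bias level. Set $y_t \triangleq x_t - \frac{b}{c-1}$. Since
$$\frac{b}{c-1} - \frac{c}{t+t_0}\cdot\frac{b}{c-1} + \frac{b}{t+t_0} = \frac{b}{c-1} - \frac{b}{t+t_0},$$
the recursion becomes
$$y_{t+1} \le \Bigl(1-\frac{c}{t+t_0}\Bigr) y_t + \frac{a}{(t+t_0)^2} - \frac{b}{t+t_0}.$$
Because $b \ge 0$, the last term can be dropped, giving $y_{t+1} \le (1-\frac{c}{t+t_0}) y_t + \frac{a}{(t+t_0)^2}$. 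The hypothesis $t_0 \ge c$ guarantees that the multiplier $1-\frac{c}{t+t_0}$ is non-negative for every $t\ge 0$. This is what allows an upper bound on $y_t$ to be propagated forward, even when $y_t$ is negative.

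The base case is immediate: $y_0 \le x_0 \le \frac{t_0 x_0}{t_0} \le \frac{v}{t_0}$. For the inductive step, assume $y_t \le \frac{v}{t+t_0}$ and write $k \triangleq t+t_0$. Non-negativity of the multiplier gives
$$y_{t+1} \le \Bigl(1-\frac{c}{k}\Bigr)\frac{v}{k} + \frac{a}{k^2} = \frac{v}{k} - \frac{(c-1)v + (v-a)}{k^2}\cdot\frac{1}{1}\;\;\text{(rearranged as }\tfrac{v}{k}-\tfrac{cv-a}{k^2}\text{)}.$$
It therefore suffices to show $\frac{v}{k} - \frac{cv-a}{k^2} \le \frac{v}{k+1}$. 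This is equivalent to $\frac{v}{k(k+1)} \le \frac{cv-a}{k^2}$, i.e., $v\,k \le (cv-a)(k+1)$. By definition of $v$ we have $a \le (c-1)v$, so $cv - a \ge v$. Hence $(cv-a)(k+1) \ge v(k+1) \ge vk$, which closes the induction.

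Adding back $\frac{b}{c-1}$ gives the claimed bound $x_t \le \frac{v}{t+t_0} + \frac{b}{c-1}$. The only delicate point is the sign issue when $y_t < 0$, i.e., when $x_t$ lies below the bias level. It is handled precisely by the condition $t_0 \ge c$, which keeps the multiplier $1-\frac{c}{t+t_0}$ non-negative at every step. The constant $v$ is chosen so that both the initialization ($v \ge t_0 x_0$) and the algebraic step ($v \ge \frac{a}{c-1}$) go through.
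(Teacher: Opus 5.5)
Your induction is correct. It is the same direct argument the paper uses for its sibling result, Lemma~\ref{lem:fractional-chung-bias}; the paper gives no separate proof of this lemma and cites it as a standard variant of Chung's lemma. Your shift $y_t = x_t - \frac{b}{c-1}$ is only a cosmetic repackaging of carrying the bias level through the induction.

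There is one harmless algebra slip. The displayed identity should read $\frac{b}{c-1} - \frac{b}{(c-1)(t+t_0)}$, not $\frac{b}{c-1} - \frac{b}{t+t_0}$. The leftover term is still non-positive and you discard it, so the argument is unaffected. If you shift by $\frac{b}{c}$ instead, this term cancels exactly, and the same proof gives the sharper bias level $\frac{b}{c}$ in place of $\frac{b}{c-1}$.
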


\begin{thm}[Last-Iterate Regularized Convergence] \label{thm:last-iterate}
For Algorithm~\ref{alg:uncentered-nac} with $T$ replaced by $T+1$, suppose that Assumptions \ref{ass:bounded-features}--\ref{ass:approximation-error} and \ref{ass:pd-fim}--\ref{ass:parametric-density} hold.
Consider the actor and critic step-size schedules defined in Theorem~\ref{thm:average-iterate}, with the additional requirement that the initial offset satisfies $t_0 \ge \rho_{\mathcal{L}} c_\eta$. The Lyapunov sequence of the last iterate is bounded by:
$$\mathcal{L}_T \le \frac{v}{T + t_0} + \frac{c_\eta C_{\epsilon,\mathcal{L}}}{\rho_{\mathcal{L}} c_\eta - 1} \epsilon_{app},$$
where the terminal constant is defined as $v \triangleq \max\left( t_0 \mathcal{L}_0, \frac{c_\eta^2 \Sigma_{\mathcal{L}}}{\rho_{\mathcal{L}} c_\eta - 1} \right)$.
By choosing the step-size parameters $c_\eta = \Theta\big(\frac{1}{\lambda\tau}\big)$ and $t_0 = \Theta\big( \frac{1}{(1-\gamma)^2 \lambda^2 \tau} + \frac{1}{(1-\gamma)^2 \kappa^2 \lambda^{3/2}\tau} \big)$ subject to the stated conditions, the expected last-iterate regularized performance gap satisfies the convergence rate:
$$ \mathbb{E}[\text{Gap}_T] \le \mathcal{O}\left( \frac{1}{(1-\gamma)^5 \kappa^6 \lambda^{5/2} \tau^2 T} \right) + \mathcal{O}\left( \frac{\epsilon_{app}}{(1-\gamma) \lambda^2 \tau} \right). $$
\end{thm}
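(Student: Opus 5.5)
The plan is to apply Chung's Lemma (Lemma~\ref{lem:chungs-lemma}) to the Lyapunov sequence $x_t = \mathcal{L}_t$, using the joint recurrence of Theorem~\ref{thm:coupled-recurrence}. The step-size conditions of Theorem~\ref{thm:average-iterate} make the recurrence valid at every $t \ge 0$. Indeed, $t_0 \ge c_\eta/\eta_{max}$ gives $\eta_t = c_\eta/(t+t_0) \le \eta_{max}$, and $\alpha_{t+1} = \xi\eta_t$ with $\xi$ as in Remark~\ref{rem:order-evaluation}. Substituting $\eta_t = c_\eta/(t+t_0)$ yields
$$\mathcal{L}_{t+1} \le \Big(1 - \frac{\rho_{\mathcal{L}} c_\eta}{t+t_0}\Big)\mathcal{L}_t + \frac{c_\eta^2 \Sigma_{\mathcal{L}}}{(t+t_0)^2} + \frac{c_\eta C_{\epsilon,\mathcal{L}}\epsilon_{app}}{t+t_0},$$
which is exactly the form required by Lemma~\ref{lem:chungs-lemma} with $c = \rho_{\mathcal{L}} c_\eta > 1$, $a = c_\eta^2\Sigma_{\mathcal{L}}$ and $b = c_\eta C_{\epsilon,\mathcal{L}}\epsilon_{app}$.

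Next I check the lemma's hypotheses. The sequence is nonnegative because $\mathcal{L}_t$ is a nonnegative combination of $\mathbb{E}[\text{Gap}_t] \ge 0$ and $Z_t \ge 0$; here $\text{Gap}_t \ge 0$ since $\omega^*_\tau$ maximizes $J_\tau$ over $\mathbb{R}^d$, and $\beta, C_{gap} > 0$. The additional requirement $t_0 \ge \rho_{\mathcal{L}} c_\eta = c$ is exactly the hypothesis $t_0 \ge c$ of the lemma, and it makes the multiplier nonnegative. Chung's Lemma then gives $\mathcal{L}_T \le v/(T+t_0) + b/(c-1)$ with $v = \max(t_0\mathcal{L}_0, a/(c-1))$, which is the displayed bound. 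Running the algorithm for $T+1$ iterations ensures that $\mathcal{L}_T$, which involves $Z_T$ and hence $\theta_{T+1}$, is defined.

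For the rate, I use $\mathbb{E}[\text{Gap}_T] \le \mathcal{L}_T/(1+\beta C_{gap}) \le \mathcal{L}_T$. I then plug in the orders from Remark~\ref{rem:order-evaluation}: $\rho_{\mathcal{L}} = \Theta(\lambda\tau)$, $\Sigma_{\mathcal{L}} = \Theta((1-\gamma)^{-5}\kappa^{-6}\lambda^{-1/2})$ and $C_{\epsilon,\mathcal{L}} = \Theta((1-\gamma)^{-1}\lambda^{-1})$. I choose $c_\eta = 2/\rho_{\mathcal{L}} = \Theta(1/(\lambda\tau))$, so that $c-1 = 1$. Then $a/(c-1) = \Theta((1-\gamma)^{-5}\kappa^{-6}\lambda^{-5/2}\tau^{-2})$, and the bias term becomes $c_\eta C_{\epsilon,\mathcal{L}}\epsilon_{app} = \mathcal{O}(\epsilon_{app}/((1-\gamma)\lambda^2\tau))$.

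The main obstacle is showing that the $t_0\mathcal{L}_0$ term inside $v$ is dominated by the variance term. I would take $t_0 = \max(1, c_\eta/\eta_{max}, \rho_{\mathcal{L}}c_\eta)$. Since $\rho_{\mathcal{L}}c_\eta = 2$, this reduces to $c_\eta/\eta_{max}$, which has the order stated in Theorem~\ref{thm:average-iterate}, using $\eta_{max}^{-1} = \Theta((1-\gamma)\lambda^{-1} + \xi\kappa)$. The initial Lyapunov value is bounded using Lemma~\ref{lem:bounded-values} and Lemma~\ref{lem:structural-bounds}. Specifically, $\mathbb{E}[\text{Gap}_0] \le \mathcal{O}((1-\gamma)^{-1})$ and $Z_0 \le 4B_\theta^2$, so $\mathcal{L}_0$ is polynomial in $(1-\gamma)^{-1}$ and $\kappa^{-1}$. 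One then checks that $t_0\mathcal{L}_0$ carries at most $\tau^{-1}$ together with lower powers of $(1-\gamma)^{-1}$ and $\kappa^{-1}$ than $a/(c-1)$, so it is absorbed into the stated $\mathcal{O}(\cdot)$ term.
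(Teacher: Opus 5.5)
Your proposal is correct and follows the paper's proof essentially step for step: you substitute $\eta_t=c_\eta/(t+t_0)$ into the joint Lyapunov recurrence, invoke Chung's Lemma with $c=\rho_{\mathcal{L}}c_\eta$, $a=c_\eta^2\Sigma_{\mathcal{L}}$ and $b=c_\eta C_{\epsilon,\mathcal{L}}\epsilon_{app}$, use $\mathbb{E}[\text{Gap}_T]\le\mathcal{L}_T$, and check that $t_0\mathcal{L}_0$ with $\mathcal{L}_0=\mathcal{O}((1-\gamma)^{-1})$ is dominated by the variance term (your explicit nonnegativity check of $\mathcal{L}_t$ is a small addition the paper leaves implicit). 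One bookkeeping slip: since $L_J=\Theta((1-\gamma)^{-3})$, the correct order is $\eta_{max}^{-1}=\Theta\big((1-\gamma)^{-2}\lambda^{-1}+\xi\kappa\big)$, not $\Theta\big((1-\gamma)\lambda^{-1}+\xi\kappa\big)$, and only the corrected expression yields the stated order of $t_0$.
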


\vspace{.1in}
The additional iteration in Theorem~\ref{thm:last-iterate} is required only by the indexing of the critic tracking error. Specifically, $\mathcal{L}_T$ contains $Z_T=\mathbb{E}[\|\theta_{T+1}-\theta^*_\tau(\omega_T)\|_2^2]$, so that one additional critic update at $t=T$ is needed to establish the bound for the policy $\pi_T$. This extra update changes the iteration count from $T$ to $T+1$ and has no effect on the stated asymptotic convergence rate.

\begin{rem}[Equivalence of Average and Last-Iterate Rates] \label{rem:regularized-equivalence}
It is noteworthy that the last-iterate convergence rate (Theorem~\ref{thm:last-iterate}) asymptotically matches the average-iterate rate (Theorem~\ref{thm:average-iterate}) up to a logarithmic factor, both achieving an optimal $\tilde{\mathcal{O}}(T^{-1})$ dependence.
Under a two-timescale step-size scheme, the actor's parameter-space PL condition and the objective smoothness combine to facilitate a joint Lyapunov recurrence for the actor-critic system.
In classical unconstrained optimization, the standard PL condition and objective smoothness are known to enable optimal $\tilde{\mathcal{O}}(T^{-1})$ or $\mathcal{O}(T^{-1})$ convergence rates for both the average and last iterates of stochastic gradient descent \citep{karimi2016linear, khaled2023better}.
For our algorithm under the $\mathcal{O}(1/t)$ diminishing step-size schedule, this joint recurrence similarly leads to optimal rates for both the last-iterate and average-iterate convergence.
\end{rem}

\section{Unregularized Global Convergence (Stochastic Regime)} \label{sec:unregularized-complexity}

While Section~\ref{sec:pl-convergence} establishes the convergence of Algorithm~\ref{alg:uncentered-nac} under a fixed regularization temperature $\tau$, our ultimate goal is to solve the unregularized MDP. Recall from Section~\ref{sec:reg-objective} that $\pi_{\omega^*_0}$ denotes an unregularized parametrically optimal policy, and $\pi^*_0$ denotes the unregularized globally optimal policy. To establish convergence against this global optimum, we first transition our convergence guarantees from being against $\pi_{\omega^*_\tau}$ to against $\pi^*_\tau$. We then feed this extended global regularized bound directly into the Exponential Translation Bounds (Section~\ref{sec:exp-translation}) under Assumption~\ref{ass:action-gap}. By tuning the temperature $\tau$ to balance the regularized optimization error against the exponentially small translation tail, this bypasses the standard linear entropy penalty, extracting the optimal $\tilde{\mathcal{O}}(T^{-1})$ global convergence rates for the unregularized problem.

For environments where the Minimal Action Gap condition (Assumption~\ref{ass:action-gap}) is violated, we provide a robust fallback analysis in Appendix~\ref{app:linear-entropy} that utilizes the standard linear entropy penalty to guarantee $\tilde{\mathcal{O}}(T^{-1/3})$ unregularized convergence rates.

\subsection{Global Convergence Bounds} \label{sec:global-convergence}

We have so far bounded the convergence of Algorithm~\ref{alg:uncentered-nac} relative to the regularized parametrically optimal policy $\pi_{\omega^*_\tau}$. We now extend these bounds to the regularized globally optimal policy $\pi^*_\tau$ over the entire probability simplex. We begin by formalizing the density ratio bounds of the regularized global optimum relative to the parametric optimum.

\begin{ass}[Global-to-Parametric Joint Concentrability] \label{ass:global-parametric-density}
The joint state-action density ratio of the regularized global optimum $\pi^*_\tau$ with respect to the parametric optimum $\pi_{\omega^*_\tau}$ is bounded:
$$ \sup_{s,a} \frac{d^*_\tau(s) \pi^*_\tau(a|s)}{d^{\omega^*_\tau}(s) \pi_{\omega^*_\tau}(a|s)} \le C_{joint}^*, $$
where $C_{joint}^*>0$ is a constant independent of the temperature $\tau \in (0, \tau_{max}]$.
\end{ass}

By leveraging the suboptimality decomposition at the parametric optimum, we prove that the regularized performance difference between the regularized global optimum and its parametric counterpart is tightly controlled by the uncentered critic's approximation error.

\begin{lem}[Global Class Approximation Error] \label{lem:global-class-error}
Under Assumptions \ref{ass:bounded-features}, \ref{ass:approximation-error}, \ref{ass:pd-fim}, and \ref{ass:global-parametric-density},
the regularized performance gap between the regularized globally optimal policy $\pi^*_\tau$ and the regularized parametrically optimal policy $\pi_{\omega^*_\tau}$ is bounded by:
$$J_\tau(\pi^*_\tau) - J_\tau(\pi_{\omega^*_\tau}) \le \frac{C_{class}}{\tau} \epsilon_{app},$$
where $C_{class} \triangleq \frac{B_\phi^4}{(1-\gamma) \lambda^2} + \frac{C_{joint}^*}{1-\gamma}$.
\end{lem}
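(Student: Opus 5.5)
The plan is to reuse the Ideal Critic Decomposition (Lemma~\ref{lem:sub-decomp}(i)), with the parametric optimum $\pi_{\omega^*_\tau}$ playing the role of the training policy and the regularized global optimum $\pi^*_\tau$ as the comparator. The derivation of that decomposition uses only the log-linear form of the policy and the definition of the ideal critic, so it holds for any parameter, in particular $\omega^*_\tau$. With $v_* \triangleq \theta^*_\tau(\omega^*_\tau) - \tau\omega^*_\tau$ and $D^\dag \triangleq \mathbb{E}_{d^*_\tau}[D_{KL}(\pi^*_\tau\|\pi_{\omega^*_\tau})]$, this gives
\begin{align*}
(1-\gamma)\big(J_\tau(\pi^*_\tau) - J_\tau(\pi_{\omega^*_\tau})\big) = -\tau D^\dag + \mathbb{E}_{d^*_\tau}\big[\langle v_*^\top\phi, \pi^*_\tau - \pi_{\omega^*_\tau}\rangle_{\mathcal{A}}\big] + \mathbb{E}_{d^*_\tau}\big[\langle \epsilon_*, \pi^*_\tau - \pi_{\omega^*_\tau}\rangle_{\mathcal{A}}\big].
\end{align*}
The strategy is to bound each of the two positive terms by $\frac{\tau}{2}D^\dag$ plus an $\mathcal{O}(\epsilon_{app}/\tau)$ remainder. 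The restorative entropy force then cancels both $\frac{\tau}{2}D^\dag$ pieces.

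\textbf{Progress term.} Since $\omega^*_\tau$ maximizes $J_\tau$ over $\mathbb{R}^d$, we have $\nabla J_\tau(\omega^*_\tau)=0$. The Uncentered Gradient Identity (Lemma~\ref{lem:uncentered-gradient}) evaluated at $\omega^*_\tau$ then gives $\bar{\Sigma}_{cen}(\pi_{\omega^*_\tau}) v_* = -E_{bias}$. Its bias bound $\|E_{bias}\|_2 \le B_\phi\sqrt{\epsilon_{app}}$ applies here because Assumption~\ref{ass:approximation-error} explicitly covers the parametric optimum. Assumption~\ref{ass:pd-fim} then yields $\|v_*\|_2 \le B_\phi\sqrt{\epsilon_{app}}/\lambda$.

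Pointwise, H\"older and Pinsker give
\[
|\langle v_*^\top\phi,\pi^*_\tau-\pi_{\omega^*_\tau}\rangle_{\mathcal{A}}| \le B_\phi\|v_*\|_2\sqrt{2D_{KL}(\pi^*_\tau\|\pi_{\omega^*_\tau})}.
\]
Averaging with Jensen and applying Young's inequality gives the bound $\frac{\tau}{2}D^\dag + \frac{B_\phi^2\|v_*\|_2^2}{\tau} \le \frac{\tau}{2}D^\dag + \frac{B_\phi^4\epsilon_{app}}{\lambda^2\tau}$. This produces the first piece of $C_{class}$.

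\textbf{Bias term.} I will rewrite the term as an expectation under $d^{\omega^*_\tau}\times\pi_{\omega^*_\tau}$:
\[
\mathbb{E}_{d^{\omega^*_\tau},\pi_{\omega^*_\tau}}\Big[\tfrac{d^*_\tau}{d^{\omega^*_\tau}}\,\epsilon_*\,\Big(\tfrac{\pi^*_\tau}{\pi_{\omega^*_\tau}}-1\Big)\Big].
\]
Cauchy--Schwarz bounds this by $\sqrt{\epsilon_{app}}\cdot\sqrt{\mathbb{E}_{d^{\omega^*_\tau}}[(d^*_\tau/d^{\omega^*_\tau})^2\,\chi^2(\pi^*_\tau\|\pi_{\omega^*_\tau})]}$.

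The main obstacle is that the action-ratio bound is only joint. I will handle it statewise: Assumption~\ref{ass:global-parametric-density} gives $\sup_a \pi^*_\tau(a|s)/\pi_{\omega^*_\tau}(a|s) \le M_s \triangleq C^*_{joint}\, d^{\omega^*_\tau}(s)/d^*_\tau(s)$. Lemma~\ref{lem:chi2-KL} at each state with $M=M_s$ then gives
\[
(d^*_\tau/d^{\omega^*_\tau})^2\chi^2 \le 2C^*_{joint}\,(d^*_\tau/d^{\omega^*_\tau})\,D_{KL}(\pi^*_\tau\|\pi_{\omega^*_\tau}).
\]
The factor $d^*_\tau/d^{\omega^*_\tau}$ converts the outer expectation back to $d^*_\tau$, so the square-root factor is at most $\sqrt{2C^*_{joint}D^\dag}$; states with $d^*_\tau(s)=0$ contribute nothing. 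Young's inequality then bounds the bias term by $\frac{\tau}{2}D^\dag + \frac{C^*_{joint}\epsilon_{app}}{\tau}$.

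\textbf{Conclusion.} Summing the three contributions, the $-\tau D^\dag$ term absorbs both $\frac{\tau}{2}D^\dag$ pieces. Dividing by $(1-\gamma)$ yields $J_\tau(\pi^*_\tau)-J_\tau(\pi_{\omega^*_\tau}) \le \frac{C_{class}}{\tau}\epsilon_{app}$ with $C_{class}=\frac{B_\phi^4}{(1-\gamma)\lambda^2}+\frac{C^*_{joint}}{1-\gamma}$, as stated.
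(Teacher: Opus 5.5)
Your proposal is correct and follows the paper's proof essentially step for step: the same Ideal Critic Decomposition with $\pi_{\omega^*_\tau}$ as the training policy, the same zero-gradient plus Fisher argument giving $\|v_*\|_2 \le B_\phi\sqrt{\epsilon_{app}}/\lambda$, and the same two Young splits, both absorbed by the restorative $-\tau D^\dag$ term. The only difference is cosmetic: in the bias term you use a single Cauchy--Schwarz under $d^{\omega^*_\tau}\times\pi_{\omega^*_\tau}$ and feed the state-dependent ratio bound $M_s$ into Lemma~\ref{lem:chi2-KL}, whereas the paper uses action-space and then state-space Cauchy--Schwarz with $W^*(s)=\max_a \pi^*_\tau(a|s)/\pi_{\omega^*_\tau}(a|s)$ on the error side before invoking Assumption~\ref{ass:global-parametric-density}; both routes reach the same bound $\sqrt{2C^*_{joint}\epsilon_{app}D^\dag}$.
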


Because the global regularized performance gap decomposes as $\text{Gap}_t^\dag = \text{Gap}_t + \big(J_\tau(\pi^*_\tau) - J_\tau(\pi_{\omega^*_\tau})\big)$, Lemma~\ref{lem:global-class-error} guarantees that the global performance gap inherits the same convergence rates established for the parametric performance gap in Theorems~\ref{thm:average-iterate} and \ref{thm:last-iterate}, subject only to an additional representation penalty.

\begin{cor}[Global Regularized Performance Bound] \label{cor:global-reg-convergence}
Under Assumptions \ref{ass:bounded-features}, \ref{ass:approximation-error}, \ref{ass:pd-fim}, and \ref{ass:global-parametric-density},
for any training policy $\pi_t$, the global regularized performance gap is bounded by the parametric regularized performance gap plus the class approximation penalty:
$$\text{Gap}_t^\dag \le \text{Gap}_t + \frac{C_{class}}{\tau} \epsilon_{app}.$$
\end{cor}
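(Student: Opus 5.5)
The corollary follows by adding and subtracting $J_\tau(\pi_{\omega^*_\tau})$ in the definition of the global gap. For any training policy $\pi_t$,
\begin{align*}
\text{Gap}_t^\dag = J_\tau(\pi^*_\tau) - J_\tau(\pi_t) = \big(J_\tau(\pi_{\omega^*_\tau}) - J_\tau(\pi_t)\big) + \big(J_\tau(\pi^*_\tau) - J_\tau(\pi_{\omega^*_\tau})\big).
\end{align*}
The first bracket is $\text{Gap}_t$ by definition. The identity is exact and holds pathwise for each realization of $\omega_t$, so it survives taking $\mathbb{E}[\cdot]$ over the algorithm's randomness.

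Next I will bound the second bracket with Lemma~\ref{lem:global-class-error}. Under Assumptions~\ref{ass:bounded-features}, \ref{ass:approximation-error}, \ref{ass:pd-fim}, and \ref{ass:global-parametric-density}, that lemma gives $J_\tau(\pi^*_\tau) - J_\tau(\pi_{\omega^*_\tau}) \le \frac{C_{class}}{\tau}\epsilon_{app}$. This quantity is deterministic and does not depend on $t$, because it compares only two fixed policies: the global optimum $\pi^*_\tau$ and the parametric optimum $\pi_{\omega^*_\tau}$. Substituting it into the decomposition yields the stated inequality.

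I expect essentially no obstacle, since the whole argument is a single telescoping identity. The one point to check is that the hypotheses listed in the corollary are exactly those of Lemma~\ref{lem:global-class-error}. In particular, the approximation-error bound at the parametric optimum in Assumption~\ref{ass:approximation-error} is the ingredient that lemma needs, and no property of the training policy $\pi_t$ is used. As a sanity check on signs, the second bracket is nonnegative because $J_\tau(\pi_{\omega^*_\tau}) \le J_\tau(\pi^*_\tau)$, so $\text{Gap}_t^\dag \ge \text{Gap}_t$, consistent with the inequality going in the stated direction.
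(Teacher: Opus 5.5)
Your proposal is correct and matches the paper's proof: it adds and subtracts $J_\tau(\pi_{\omega^*_\tau})$, identifies one bracket as $\text{Gap}_t$, and bounds the other, which is independent of $t$, by Lemma~\ref{lem:global-class-error} under the same four assumptions. The paper does exactly this in one line.
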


The class approximation bound (Lemma~\ref{lem:global-class-error}), together with the Positive-Definite Fisher Information in the Stochastic Regime and the Minimal Action Gap, reveals an intrinsic compatibility between the approximation error $\epsilon_{app}$ and the temperature $\tau$, as formalized by the following lemma.

\begin{lem}[Part-I Approximation-Temperature Compatibility] \label{lem:part-I-compatibility}
Under Assumptions~\ref{ass:bounded-features}, \ref{ass:approximation-error}--\ref{ass:pd-fim}, and \ref{ass:global-parametric-density}, for any temperature $\tau\in(0,\tau_{max}]$, we have
$$
\epsilon_{app}\ge C_{comp}\tau,
$$
where
$$C_{comp} \triangleq
\frac{q_0\Delta}{2(1-\gamma)C_{class}}
\min\left\{1,
\frac{q_0\Delta}{2\tau_{max}\log|\mathcal{A}|}
\right\}, \quad
q_0\triangleq\frac{d\lambda}{4B_\phi^2},
$$
and $C_{class}$ is defined in Lemma~\ref{lem:global-class-error}. Consequently, arbitrarily small $\epsilon_{app}$, independent of temperature $\tau\in (0,\tau_{max}]$, are incompatible with the stated Part-I assumptions, and the exact-realizability case $\epsilon_{app}=0$ is excluded in the global analysis under the Stochastic Regime.
\end{lem}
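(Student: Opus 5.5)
The plan is to squeeze the class gap $J_\tau(\pi^*_\tau)-J_\tau(\pi_{\omega^*_\tau})$ between a lower bound forced by Fisher positivity together with the Minimal Action Gap, and the upper bound $C_{class}\epsilon_{app}/\tau$ from Lemma~\ref{lem:global-class-error}. Throughout, write $\pi=\pi_{\omega^*_\tau}$, $d=d^{\omega^*_\tau}$, and let $q(s)\triangleq 1-\pi(a^*(s)|s)$ be the suboptimal mass of $\pi$.

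\emph{Step 1 (Fisher positivity forces suboptimal mass).} For each $s$, the trace of the action covariance is at most the second moment about any fixed point. Taking that point to be $\phi(s,a^*(s))$ gives
\[
\operatorname{tr}\Cov_{a\sim\pi}(\phi(s,a))\le \sum_{a\ne a^*(s)}\pi(a|s)\|\phi(s,a)-\phi(s,a^*(s))\|_2^2\le 4B_\phi^2\, q(s).
\]
Averaging over $d$ and using Assumption~\ref{ass:pd-fim} yields $d\lambda\le \operatorname{tr}\bar\Sigma_{cen}(\pi)\le 4B_\phi^2\,\mathbb{E}_d[q(s)]$. Hence $\bar q\triangleq\mathbb{E}_d[q]\ge q_0$.

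\emph{Step 2 (lower bound on the class gap).} Since $\pi^*_\tau$ maximizes $J_\tau$ and entropy is nonnegative, $J_\tau(\pi^*_\tau)\ge J_\tau(\pi^*_0)\ge J_0(\pi^*_0)$. Also $J_\tau(\pi)=J_0(\pi)+\frac{\tau}{1-\gamma}\mathbb{E}_d[H(\pi(\cdot|s))]\le J_0(\pi)+\frac{\tau\log|\mathcal{A}|}{1-\gamma}$. Next apply the unregularized performance difference lemma (Lemma~\ref{lem:pdl} with $\tau=0$) in the ``backward'' direction, using the advantage of $\pi^*_0$ under $d^{\pi}=d$:
\[
J_0(\pi^*_0)-J_0(\pi)=\frac{1}{1-\gamma}\mathbb{E}_d\Big[\sum_a\pi(a|s)\big(V^*_0(s)-Q^*_0(s,a)\big)\Big]\ge \frac{\Delta\,\bar q}{1-\gamma}
\]
by Assumption~\ref{ass:action-gap}. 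Combining this with Step 1 gives
\[
J_\tau(\pi^*_\tau)-J_\tau(\pi)\ge \frac{q_0\Delta-\tau\log|\mathcal{A}|}{1-\gamma}.
\]

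\emph{Step 3 (combine and handle large $\tau$).} Let $\bar\tau\triangleq q_0\Delta/(2\log|\mathcal{A}|)$. If $\tau\le\bar\tau$, the right side of Step 2 is at least $q_0\Delta/(2(1-\gamma))$. Lemma~\ref{lem:global-class-error} then gives $\epsilon_{app}\ge \frac{q_0\Delta}{2(1-\gamma)C_{class}}\tau$. If $\tau>\bar\tau$, the key point is that $\epsilon_{app}$ and all structural constants are $\tau$-independent. So the bound just proved at the admissible temperature $\tau'=\bar\tau<\tau_{max}$ still holds, giving $\epsilon_{app}\ge\frac{q_0\Delta}{2(1-\gamma)C_{class}}\bar\tau$. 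Since $\bar\tau\ge\bar\tau\,\tau/\tau_{max}=\frac{q_0\Delta}{2\tau_{max}\log|\mathcal{A}|}\tau$, this yields the second branch of the minimum. In both cases $\epsilon_{app}\ge C_{comp}\tau>0$, which excludes $\epsilon_{app}=0$. Equivalently, $\tau\le\min(\tau_{max},\bar\tau)$ can be evaluated directly.

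\emph{Main obstacle.} The delicate step is Step 2. One must apply the performance difference lemma with the measure $d^{\omega^*_\tau}$, so that it matches the measure under which Fisher positivity controls the mass $\bar q$. One must also make sure that the crude entropy bound $H\le\log|\mathcal{A}|$, rather than Lemma~\ref{lem:entropy-bound}, suffices to produce exactly the stated constant.
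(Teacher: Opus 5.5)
Your proof is correct and follows essentially the same route as the paper. The trace bound gives $\mathbb{E}_{d^{\omega^*_\tau}}[q]\ge q_0$; the backward performance-difference bound combined with $H\le\log|\mathcal{A}|$ gives the class-gap lower bound; and $\tau$-independence of $\epsilon_{app}$ and of the constants lets you transfer the bound obtained at a reference temperature to all $\tau\in(0,\tau_{max}]$.

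The only differences are cosmetic. You split into the cases $\tau\le\bar\tau$ and $\tau>\bar\tau$, whereas the paper evaluates once at $\min\{\tau_{max},\bar\tau\}$. You use $J_\tau(\pi^*_0)\ge J_0(\pi^*_0)$ instead of the equality that holds because $\pi^*_0$ is deterministic. Also avoid using $d$ both for the visitation measure and for the feature dimension in $d\lambda$.
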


\subsection{Average-Iterate Unregularized Convergence}

We now map the global regularized bounds to the unregularized objective. Unlike the parametric bounds analyzed in Appendix~\ref{app:linear-entropy}---which are bottlenecked by the linear $\mathcal{O}(\tau)$ entropy penalty of the training policy---we utilize the Universal Exponential Translation Bound (Corollary~\ref{cor:universal-suboptimal-mass}). Because this translation absorbs the policy entropy, we tune the fixed temperature $\tau$ to decrease much more slowly, optimally balancing the regularized optimization error against the exponentially small translation tail to extract an $\tilde{\mathcal{O}}(T^{-1})$ convergence rate.

\begin{thm}[Unregularized Average-Iterate Convergence] \label{thm:unreg-average-iterate-optimal}
For Algorithm~\ref{alg:uncentered-nac}, suppose that Assumptions \ref{ass:bounded-features}--\ref{ass:global-parametric-density} hold.
Define the two-stage temperature:
$$
\tau_T \triangleq \max\left(
\frac{\Delta}{2 \log (C_\gamma T)},
\frac{\Delta}{2 \log(C_\gamma(1+\epsilon_{app}^{-1}))}
\right).
$$
If $T$ is sufficiently large and $\epsilon_{app}$ is sufficiently small such that $\tau_T \le \tau_{max}$,
then by choosing the step-size parameters as in Theorem~\ref{thm:average-iterate} evaluated at $\tau = \tau_T$ subject to the stated conditions, the expected average global unregularized performance gap satisfies the convergence rate:
\begin{align*}
\frac{1}{T} \sum_{t=0}^{T-1} \mathbb{E} \big[ J_0(\pi^*_0) - J_0(\pi_t) \big]
&\le \mathcal{O}\left(\frac{(\log T)/(1-\gamma)^2 + (\log T)^3}{(1-\gamma)^6 \kappa^6 \lambda^{5/2} \Delta^3 T} \right) \\
& \quad + \mathcal{O}\left( \frac{(1-\gamma)^{-1}+\log(1+\epsilon_{app}^{-1})}{(1-\gamma)^2 \lambda^2 \Delta^2} \epsilon_{app} \right) \\
&= \tilde{\mathcal{O}}\left( \frac{1}{T} \right)
+ \tilde{\mathcal{O}}(\epsilon_{app}).
\end{align*}
\end{thm}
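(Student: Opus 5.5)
The plan is to chain three earlier results at the fixed temperature $\tau=\tau_T$. These are the Universal Exponential Translation (Corollary~\ref{cor:universal-suboptimal-mass}), the Global Regularized Performance Bound (Corollary~\ref{cor:global-reg-convergence}), and the Average-Iterate Regularized Convergence (Theorem~\ref{thm:average-iterate}). After chaining, I will choose $\tau_T$ to balance the exponential tail against the polynomial dependence of the optimization and approximation errors on $1/\tau$. Since $\tau_T\le\tau_{max}$ by hypothesis, all $\tau$-independent structural constants remain valid.

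First, I apply Corollary~\ref{cor:universal-suboptimal-mass} pointwise to each $\pi_t$, take expectations over the algorithm, and average over $t=0,\dots,T-1$:
\[
\frac1T\sum_{t}\mathbb{E}[J_0(\pi^*_0)-J_0(\pi_t)]\le \frac{4A_{max}}{\Delta}\,\frac1T\sum_t \mathbb{E}[\text{Gap}_t^\dag]+C_{tail}e^{-\Delta/(2\tau_T)} .
\]
Then Corollary~\ref{cor:global-reg-convergence} replaces $\text{Gap}_t^\dag$ by $\text{Gap}_t+C_{class}\epsilon_{app}/\tau_T$. The averaged parametric gap is bounded by the rate in Theorem~\ref{thm:average-iterate}, with step sizes evaluated at $\tau_T$. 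This gives
\[
\mathcal{O}\Big(\tfrac{\log T}{(1-\gamma)^5\kappa^6\lambda^{5/2}\tau_T^2T}\Big)+\mathcal{O}\Big(\tfrac{\epsilon_{app}}{(1-\gamma)\lambda^2\tau_T}\Big)+\tfrac{C_{class}\epsilon_{app}}{\tau_T}.
\]
Here $C_{class}=\mathcal{O}((1-\gamma)^{-1}\lambda^{-2})$, absorbing $B_\phi,C^*_{joint}$. Multiplying by $4A_{max}/\Delta$ with $A_{max}\le(1-\gamma)^{-1}$ adds a factor $(1-\gamma)^{-1}\Delta^{-1}$.

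Second, I handle the tail using the two-stage definition of $\tau_T$. Because $\tau_T\ge \Delta/(2\log(C_\gamma T))$, we have $e^{-\Delta/(2\tau_T)}\le 1/(C_\gamma T)$. The tail is therefore at most $A_{max}/((1-\gamma)T)\le (1-\gamma)^{-2}T^{-1}$, which is absorbed in the first displayed term. Because $\tau_T$ is a maximum of two values, $1/\tau_T\le 2\log(C_\gamma T)/\Delta$ and also $1/\tau_T\le 2\log(C_\gamma(1+\epsilon_{app}^{-1}))/\Delta$. I use the first inequality, squared, in the optimization term. Since $\log C_\gamma=\Theta((1-\gamma)^{-1}\log|\mathcal A|)$, we get $\log(C_\gamma T)^2\lesssim (1-\gamma)^{-2}+(\log T)^2$. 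Multiplying by $\log T$ yields the numerator $(\log T)/(1-\gamma)^2+(\log T)^3$, with overall factor $(1-\gamma)^{-6}\kappa^{-6}\lambda^{-5/2}\Delta^{-3}$. I use the second inequality in the approximation terms. Then $\log(C_\gamma(1+\epsilon_{app}^{-1}))\lesssim(1-\gamma)^{-1}+\log(1+\epsilon_{app}^{-1})$, which produces the stated $\epsilon_{app}$ term with factor $(1-\gamma)^{-2}\lambda^{-2}\Delta^{-2}$.

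The main obstacle is the bookkeeping that confirms the step-size conditions of Theorem~\ref{thm:average-iterate} hold at $\tau_T$. In particular, $c_\eta=\Theta(1/(\lambda\tau_T))$ and $t_0$ grow only logarithmically in $T$. I must check that the $t_0\mathcal{L}_0/T$ term stays within $\tilde{\mathcal{O}}(T^{-1})$ and is dominated by the variance term. Here $\mathcal{L}_0=\mathcal{O}(\mathrm{poly}((1-\gamma)^{-1},\kappa^{-1}))$ and is independent of $\tau$ by Lemma~\ref{lem:bounded-values} and Lemma~\ref{lem:structural-bounds}. I expect the existing $\tau^{-2}$ form from Theorem~\ref{thm:average-iterate} to already absorb this term, so only the substitution of $1/\tau_T$ remains. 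Last, I pass from $\mathcal{L}_t$ to $\mathbb{E}[\text{Gap}_t]$ using $\mathcal{L}_t\ge\mathbb{E}[\text{Gap}_t]$, which is already built into that theorem.
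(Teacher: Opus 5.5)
Your chaining of Corollary~\ref{cor:universal-suboptimal-mass}, Corollary~\ref{cor:global-reg-convergence} and Theorem~\ref{thm:average-iterate} matches the paper. Your use of the two branches of $\tau_T$ to bound the polynomial terms is also valid: $1/\tau_T\le 2\log(C_\gamma T)/\Delta$ in the optimization term and $1/\tau_T\le 2\log(C_\gamma(1+\epsilon_{app}^{-1}))/\Delta$ in the approximation terms. It is even slightly cleaner than the paper's regime-by-regime substitution.

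The step that fails is the entropy tail. The map $\tau\mapsto\exp(-\Delta/(2\tau))$ is increasing, so $\tau_T\ge \Delta/(2\log(C_\gamma T))$ gives $\exp(-\Delta/(2\tau_T))\ge 1/(C_\gamma T)$, which is the opposite of what you wrote. A lower bound on $\tau_T$ bounds $1/\tau_T$ from above but bounds the tail only from below. The tail is instead set by whichever branch attains the maximum: $\exp(-\Delta/(2\tau_T))=\max\{(C_\gamma T)^{-1},\ \epsilon_{app}/(C_\gamma(1+\epsilon_{app}))\}$. In the approximation-limited phase, $\log T>\log(1+\epsilon_{app}^{-1})$, the second branch is active. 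There the tail equals $\frac{A_{max}}{1-\gamma}\cdot\frac{\epsilon_{app}}{1+\epsilon_{app}}$, which is strictly larger than $\frac{A_{max}}{(1-\gamma)T}$ and does not decay with $T$. So your claim that the tail is at most $(1-\gamma)^{-2}T^{-1}$, and hence absorbed into the $\tilde{\mathcal{O}}(T^{-1})$ term, is false in that phase.

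The fix is what the paper does. Split into the sample-limited phase $\log T\le\log(1+\epsilon_{app}^{-1})$, where $\tau_T=\Delta/(2\log(C_\gamma T))$ and the tail is at most $(1-\gamma)^{-2}T^{-1}$. Then treat the approximation-limited phase, where $\tau_T=\Delta/(2\log(C_\gamma(1+\epsilon_{app}^{-1})))$ and the tail is at most $(1-\gamma)^{-2}\epsilon_{app}$. That last piece must be charged to the $\tilde{\mathcal{O}}(\epsilon_{app})$ term, not the $T^{-1}$ term. Equivalently, without splitting, bound the tail by $(1-\gamma)^{-2}(T^{-1}+\epsilon_{app})$.

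This is the trade-off the second branch encodes. Capping $1/\tau_T$ stops the $\epsilon_{app}/\tau_T$ bias from growing like $\epsilon_{app}\log T$, but it freezes the exponential tail at an $\mathcal{O}(\epsilon_{app})$ floor. With this correction the rest of your argument goes through and gives the stated bound.
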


\begin{rem}[Admissible Approximation-Error Range in Part I] \label{rem:admissible-eps-app}
The phrase ``$\epsilon_{app}$ sufficiently small'' in Theorem~\ref{thm:unreg-average-iterate-optimal} refers to the requirement that the approximation-controlled branch of the two-stage temperature remain below $\tau_{max}$. This restriction needs to be considered together with the Part-I compatibility condition in Lemma~\ref{lem:part-I-compatibility}. Indeed, by the definition of $\tau_T$, we have
$\tau_T\ge\frac{\Delta}{2\log(C_\gamma(1+\epsilon_{app}^{-1}))}$.
Then the compatibility condition
$\epsilon_{app}\ge C_{comp}\tau_T$ implies
$$
\epsilon_{app}\log\!\big(C_\gamma(1+\epsilon_{app}^{-1})\big)
\ge\frac{C_{comp}\Delta}{2},
$$
which excludes arbitrarily small $\epsilon_{app}$ because $\epsilon_{app}\log(1+\epsilon^{-1}_{app})\to 0$ as $\epsilon_{app}\to0$.
Meanwhile, the requirement $\tau_T\le\tau_{max}$ imposes an upper-side restriction on $\epsilon_{app}$:
$$
\log(C_\gamma(1+\epsilon_{app}^{-1}))
\ge\frac{\Delta}{2\tau_{max}}.
$$
Thus, the Part-I assumptions and the selected temperature jointly constrain the admissible range of $\epsilon_{app}$. The statement of Theorem~\ref{thm:unreg-average-iterate-optimal} is understood over this range.
\end{rem}

\subsection{Last-Iterate Unregularized Convergence}

We extend our optimal unregularized convergence bounds to the final output policy of Algorithm~\ref{alg:uncentered-nac}. Because the last-iterate regularized bound (Theorem~\ref{thm:last-iterate}) avoids the $\log T$ penalty present in the average-iterate bound, applying the Exponential Translation mechanism leads to a sharper unregularized convergence rate (by a factor of $\log T$) for the final iteration.

\begin{thm}[Unregularized Last-Iterate Convergence] \label{thm:unreg-last-iterate-optimal}
For Algorithm~\ref{alg:uncentered-nac} with $T$ replaced by $T+1$, suppose that Assumptions \ref{ass:bounded-features}--\ref{ass:global-parametric-density} hold.
Consider the same two-stage temperature as in Theorem~\ref{thm:unreg-average-iterate-optimal}:
$$ \tau_T \triangleq \max\left( \frac{\Delta}{2 \log(C_\gamma T)}, \frac{\Delta}{2 \log(C_\gamma(1+\epsilon_{app}^{-1}))} \right). $$
If $T$ is sufficiently large and $\epsilon_{app}$ is sufficiently small such that $\tau_T \le \tau_{max}$, then by choosing the step-size parameters as in Theorem~\ref{thm:last-iterate} evaluated at $\tau = \tau_T$ subject to the stated conditions, the expected last-iterate global unregularized performance gap satisfies the convergence rate:
\begin{align*}
\mathbb{E} \big[ J_0(\pi^*_0) - J_0(\pi_T) \big]
&\le \mathcal{O}\left( \frac{(1-\gamma)^{-2} + (\log T)^2}{(1-\gamma)^6 \kappa^6 \lambda^{5/2} \Delta^3 T} \right)
+ \mathcal{O}\left( \frac{(1-\gamma)^{-1}+\log(1+\epsilon_{app}^{-1})}{(1-\gamma)^2 \lambda^2 \Delta^2} \epsilon_{app} \right) \\
&= \tilde{\mathcal{O}}\left( \frac{1}{T} \right) + \tilde{\mathcal{O}}(\epsilon_{app}).
\end{align*}
\end{thm}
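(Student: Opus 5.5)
The plan mirrors the average-iterate argument, with Theorem~\ref{thm:last-iterate} in place of Theorem~\ref{thm:average-iterate}. Fix $\tau=\tau_T$ and apply Corollary~\ref{cor:universal-suboptimal-mass} pathwise to the final policy $\pi_T$, then take expectations:
$$\mathbb{E}[J_0(\pi^*_0)-J_0(\pi_T)] \le \frac{4A_{max}}{\Delta}\mathbb{E}[\text{Gap}_T^\dag] + C_{tail}\exp\!\Big(-\frac{\Delta}{2\tau_T}\Big).$$
Corollary~\ref{cor:global-reg-convergence} then gives $\mathbb{E}[\text{Gap}_T^\dag]\le \mathbb{E}[\text{Gap}_T]+C_{class}\epsilon_{app}/\tau_T$. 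Theorem~\ref{thm:last-iterate}, run for $T+1$ iterations at $\tau=\tau_T$ (admissible because $\tau_T\le\tau_{max}$), bounds $\mathbb{E}[\text{Gap}_T]$ by $\mathcal{O}\big(((1-\gamma)^5\kappa^6\lambda^{5/2}\tau_T^2T)^{-1}\big)+\mathcal{O}\big(\epsilon_{app}/((1-\gamma)\lambda^2\tau_T)\big)$. I will check that $\mathbb{E}[\text{Gap}_T]\le\mathcal{L}_T$ holds, which is immediate because $\beta,C_{gap},Z_T\ge0$. I will also track the $\tau$-dependence of $t_0\mathcal{L}_0$ inside $v$: $t_0=\Theta(\tau^{-1})$ and $\mathcal{L}_0$ is bounded by $V_{max}$-type constants, so that term is dominated.

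Next I handle the tail using the two-stage temperature. Since $\tau_T\le\Delta/(2\log(C_\gamma T))$ fails only in the max branch, I use the two lower bounds on $\tau_T$. From $\tau_T\ge \Delta/(2\log(C_\gamma T))$ we get $\exp(-\Delta/(2\tau_T))\le 1/(C_\gamma T)$, so the tail is at most $A_{max}/((1-\gamma)T)\le (1-\gamma)^{-2}T^{-1}$. Conversely, $1/\tau_T\le \frac{2}{\Delta}\min\{\log(C_\gamma T),\log(C_\gamma(1+\epsilon_{app}^{-1}))\}$. For the optimization term I use $1/\tau_T^2\le 4\log^2(C_\gamma T)/\Delta^2$, together with $\log C_\gamma=\mathcal{O}((1-\gamma)^{-1}+\log T)$. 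Multiplying by $4A_{max}/\Delta\le 4/((1-\gamma)\Delta)$ gives $\mathcal{O}\big(((1-\gamma)^{-2}+(\log T)^2)/((1-\gamma)^6\kappa^6\lambda^{5/2}\Delta^3T)\big)$. Here the $(1-\gamma)^{-2}$ comes from $\log^2 C_\gamma$, and the $1/T$ tail is absorbed into the same expression.

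For the bias terms, the two $\epsilon_{app}/\tau_T$ contributions have coefficients $(1-\gamma)^{-1}\lambda^{-2}$ from the theorem and $C_{class}=\mathcal{O}((1-\gamma)^{-1}\lambda^{-2})$ (treating $B_\phi$ and $C^*_{joint}$ as constants). For these I use $1/\tau_T\le\frac{2}{\Delta}\log(C_\gamma(1+\epsilon_{app}^{-1}))=\mathcal{O}\big(((1-\gamma)^{-1}+\log(1+\epsilon_{app}^{-1}))/\Delta\big)$, which is precisely why the second branch of the max is included. Multiplying by $A_{max}/\Delta$ yields the stated $\mathcal{O}\big(((1-\gamma)^{-1}+\log(1+\epsilon_{app}^{-1}))\epsilon_{app}/((1-\gamma)^2\lambda^2\Delta^2)\big)$.

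The main obstacle is bookkeeping rather than anything conceptual. I need to verify that the step-size conditions of Theorem~\ref{thm:last-iterate}, namely $c_\eta>1/\rho_{\mathcal{L}}$, $t_0\ge\rho_{\mathcal{L}}c_\eta$ and $t_0\ge c_\eta/\eta_{max}$, remain valid at $\tau_T$ with hidden constants independent of $\tau$. I also need the $\tau$-powers to combine correctly: exactly $\tau^{-2}$ in the variance term, including the contribution of $v$ through $c_\eta^2\Sigma_{\mathcal{L}}/(\rho_{\mathcal{L}}c_\eta-1)=\Theta(\tau^{-2})$. Remark~\ref{rem:order-evaluation} supplies these orders. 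Finally, the bound is understood over the admissible $\epsilon_{app}$ range of Remark~\ref{rem:admissible-eps-app}.
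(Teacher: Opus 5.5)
Your handling of the optimization and bias terms is correct. Using the single inequality $1/\tau_T\le\frac{2}{\Delta}\min\{\log(C_\gamma T),\log(C_\gamma(1+\epsilon_{app}^{-1}))\}$ is a tidy equivalent of the paper's split into a sample-limited phase ($\log T\le\log(1+\epsilon_{app}^{-1})$) and an approximation-limited phase; in each phase the paper uses exactly the complementary logarithmic comparison.

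The gap is in the entropy tail. You claim that $\tau_T\ge\Delta/(2\log(C_\gamma T))$ gives $\exp(-\Delta/(2\tau_T))\le 1/(C_\gamma T)$, but the inequality runs the other way. The tail is increasing in $\tau$, so a lower bound on $\tau_T$ gives a \emph{lower} bound on the tail. Because $\tau_T$ is a maximum, in fact
\[
\exp\Big(-\frac{\Delta}{2\tau_T}\Big)=\max\Big\{\frac{1}{C_\gamma T},\ \frac{\epsilon_{app}}{C_\gamma(1+\epsilon_{app})}\Big\}.
\]
In the approximation-limited phase $T>1+\epsilon_{app}^{-1}$ the second entry strictly dominates. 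There the tail equals $\frac{A_{max}}{1-\gamma}\cdot\frac{\epsilon_{app}}{1+\epsilon_{app}}$, not $\mathcal{O}(T^{-1})$. So your bound $(1-\gamma)^{-2}T^{-1}$ is false exactly when the second branch of the max is active. That branch does two things, and your proposal sees only the first: it caps the $1/\tau_T$ blow-up of the bias terms, but it also leaves a tail floor of order $\epsilon_{app}$.

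The fix is short. Bound the tail by $C_{tail}$ times the sum of the two entries, which gives at most $(1-\gamma)^{-2}(T^{-1}+\epsilon_{app})$. The $T^{-1}$ part goes into the optimization envelope. The $\epsilon_{app}/(1-\gamma)^2$ part is absorbed into the $\tilde{\mathcal{O}}(\epsilon_{app})$ approximation envelope, which is what the paper does in its Regime~2 computation.

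This absorption holds with the displayed constants. The trace argument in the proof of Lemma~\ref{lem:part-I-compatibility}, together with $J_0(\pi^*_0)-J_0(\pi_{\omega^*_\tau})\le(1-\gamma)^{-1}$, gives $\lambda\Delta\le 4B_\phi^2/d$. Hence $\epsilon_{app}/(1-\gamma)^2\lesssim\epsilon_{app}/((1-\gamma)^3\lambda^2\Delta^2)$.

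A minor slip: you need $\log(C_\gamma T)=\mathcal{O}((1-\gamma)^{-1}+\log T)$, not the same statement for $\log C_\gamma$.
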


\begin{rem}[Power of Minimal Action Gap] \label{rem:unreg-action-gap-power}
Comparing Theorems~\ref{thm:unreg-average-iterate-optimal} and \ref{thm:unreg-last-iterate-optimal} against Appendix~\ref{app:linear-entropy} signifies the theoretical power of the Minimal Action Gap (Assumption~\ref{ass:action-gap}).
By leveraging the deterministic nature of the unregularized MDP to unlock the Exponential Translation Bounds (Section \ref{sec:exp-translation}), this mechanism enables the slower $\tau \propto (\log T)^{-1}$ temperature tuning and accelerates the unregularized convergence rate to $\tilde{\mathcal{O}}(T^{-1})$. Consequently, Algorithm~\ref{alg:uncentered-nac} escapes the $\mathcal{O}(\tau)$ linear entropy penalty that would otherwise restrict the unregularized convergence to $\tilde{\mathcal{O}}(T^{-1/3})$.
\end{rem}

\part{Deterministic Regime} \label{part:deterministic}

In the second part, we investigate the convergence of Algorithm~\ref{alg:uncentered-nac} in the \textit{Deterministic Regime}. As established in Section~\ref{sec:setup}, this regime is characterized by the sequence of regularized optimal policies $\pi_{\omega^*_\tau}$ approaching an unregularized parametric optimum that is a deterministic policy in the closure of the log-linear family as the temperature drops ($\tau \to 0$). This behavior naturally arises when a highly expressive log-linear family perfectly captures the deterministic unregularized global optimum $\pi^*_0$, though it can also occur in a restricted family if the best policy in its closure is a suboptimal deterministic policy.

As this deterministic collapse occurs, the target probabilities for suboptimal actions decay to zero. Therefore, the action randomness vanishes and the minimum eigenvalue of the centered feature covariance collapses ($\lambda_\tau \to 0$), resulting in a degenerate Euclidean geometry. This degeneration violates the positive-definite Fisher Information assumption used to establish the Actor Progress Bound (Lemma~\ref{lem:pl-ascent}), which alongside the Parameter-Space PL Condition (Lemma~\ref{lem:pl-condition}) is essential for establishing the two-way coupling of the actor-critic system (Lemmas~\ref{lem:coupled-pl-descent} and \ref{lem:critic-tracking-pl}). Consequently, the convergence analysis in Part I breaks down in the Deterministic Regime.

To survive the deterministic limit, our analysis abandons parameter-distance metrics, such as $\|\theta^*_\tau(\omega_t) - \tau \omega_t\|_2$ in Lemmas~\ref{lem:pl-condition} and \ref{lem:pl-ascent}.
Instead, we leverage the Policy Mirror Descent (PMD) framework for the actor parameter update to evaluate algorithmic progress against the regularized globally optimal policy $\pi^*_\tau$ via the Kullback-Leibler (KL) divergence, bypassing the need for a positive-definite Fisher information matrix at the parametric optimum $\pi_{\omega^*_\tau}$.
Furthermore, because the Actor Progress Bound is no longer available to couple the critic's target drift to the actor's performance gap, we decouple the critic tracking from the actor progress and bound the target drift using a worst-case parameter step. By carefully balancing the decoupled step sizes, we control the competing algorithmic components to guarantee convergence.

To manage distribution shifts relative to the regularized global optimum $\pi^*_\tau$, we require that, under Algorithm~\ref{alg:uncentered-nac}, sufficient coverage is maintained over the target state-action visitation measure. Note that Assumption~\ref{ass:density-ratios} is utilized in exactly one place in our Part II analysis: in Step (ii) of the proof of the PMD Actor Progress Bound (Lemma~\ref{lem:pmd-telescoping-l2}), where it facilitates applying the action-space $\chi^2$-to-KL divergence bound and shifting the state measure to control the approximation bias.

\begin{ass}[Global Joint Concentrability] \label{ass:density-ratios}
Under Algorithm~\ref{alg:uncentered-nac}, the joint state-action density ratio of the regularized global optimum $\pi^*_\tau$ with respect to the training policy $\pi_t$ is bounded for all $t\ge 0$:
$$ \sup_{s,a} \frac{d^*_\tau(s) \pi^*_\tau(a|s)}{d^{\pi_t}(s) \pi_t(a|s)} \le C_{joint}^\dag, $$
where $C_{joint}^\dag > 0$ is a constant independent of the temperature $\tau \in (0, \tau_{max}]$.
\end{ass}

We make several comments related to Assumption~\ref{ass:density-ratios}. First, similarly to Assumption~\ref{ass:parametric-density}, the global joint concentrability bound is naturally implied by the conjunction of separate global state and policy concentrability bounds. Specifically, if the following density ratio bounds hold for some constants $(C^\dag,W^\dag)$ independent of $\tau \in (0, \tau_{max}]$:
\begin{itemize}
    \item \textit{Global State Concentrability:} $\|d^*_\tau / d^{\pi_t}\|_\infty \le C^\dag$,
    \item \textit{Global Policy Concentrability:} $\sup_{s,a} \frac{\pi^*_\tau(a|s)}{\pi_t(a|s)} \le W^\dag$,
\end{itemize}
then by the multiplicative rule of density ratios, the joint concentrability bound is satisfied with $C_{joint}^\dag = C^\dag W^\dag$.
Conversely, because $\mathbb{E}_{a \sim \pi_t} \big[ \frac{\pi^*_\tau(a|s)}{\pi_t(a|s)} \big] = 1$ at each state $s$, the joint concentrability directly implies the global state concentrability bound with $C^\dag = C_{joint}^\dag$: $\|d^*_\tau / d^{\pi_t}\|_\infty \le C_{joint}^\dag$.
However, the joint formulation allows the local policy ratio $\frac{\pi^*_\tau(a|s)}{\pi_t(a|s)}$ to be unbounded so long as the associated state density ratio $\frac{d^*_\tau(s)}{ d^{\pi_t}(s)}$ is sufficiently small.

Second, a simple sufficient condition for the global policy concentrability bound $W^\dag$ is a minimum action probability along the algorithmic trajectory.
Specifically, if $\inf_{t,s,a} \pi_t(a|s) \ge c$ for some constant $c>0$ independent of $\tau$, then
the policy concentrability condition holds with $W^\dag = 1/c$. However, such a uniform lower bound on $\pi_t(a|s)$ can be unrealistic in the Deterministic Regime, where probabilities of suboptimal actions may naturally vanish as $\tau \to 0$. Our joint concentrability formulation is more flexible: it allows the local policy ratio $\frac{\pi^*_\tau(a|s)}{\pi_t(a|s)}$ to become large provided that it is compensated by a sufficiently small state density ratio $\frac{d^*_\tau(s)}{d^{\pi_t}(s)}$.

Third, while Assumption~\ref{ass:parametric-density} in Part I bounds the algorithmic trajectory relative to the parametric optimum, Assumption~\ref{ass:density-ratios} bounds it directly relative to the global optimum. If $\pi^*_\tau$ were replaced by $\pi_{\omega^*_\tau}$ in Assumption~\ref{ass:density-ratios}, the entire analysis of regularized convergence in Section \ref{sec:regularized-conv-deterministic} would remain valid, yielding convergence bounds for the parametric gap $\text{Gap}_t$. However, the bound on the regularized performance difference between $\pi^*_\tau$ and $\pi_{\omega^*_\tau}$ (Lemma~\ref{lem:global-class-error}) breaks down because $\lambda_\tau\to 0$ as $\tau \to 0$ in the Deterministic Regime. Without this structural bridge, the Exponential Translation (Theorem~\ref{thm:suboptimal-mass}) could not be applied to derive unregularized convergence bounds. Evaluating the trajectory directly against the global optimum bypasses this broken bridge. Furthermore, by the multiplicative rule of density ratios, this global assumption is naturally satisfied with $C_{joint}^\dag = C_{joint} C_{joint}^*$ if both the Parametric Joint Concentrability (Assumption~\ref{ass:parametric-density}) and the Global-to-Parametric Joint Concentrability (Assumption~\ref{ass:global-parametric-density}) hold simultaneously.

\textbf{Connections to Related Works.}
Assumption~\ref{ass:density-ratios} serves as the regularized generalization of the standard concentrability and mismatch bounds widely utilized in the unregularized MDP literature \citep{agarwal2021theory, yuan2023linear}. While \cite{agarwal2021theory} rely on $L_\infty$ bounds that directly represent the unregularized counterpart of our assumption, \cite{yuan2023linear} employ $L_2$-based relaxations of these density ratios. Additionally, as noted in Part I, some analyses implicitly bundle this global distribution shift into a transferred approximation error bound evaluated under the global optimum \citep{liu2020improved}. In Appendix~\ref{app:related-works}, we provide an extended discussion of these existing concentrability conditions, comparing them against our generalized mismatch bounds while accommodating exploratory $\nu$-sampling.

To analyze critic tracking, we still require the Positive-Definite Uncentered Feature Moment (Assumption~\ref{ass:pd-uncentered}).
For pedagogical purposes, to contrast with the Positive-Definite Fisher Information (Assumption~\ref{ass:pd-fim}) in Part I, we introduce a static assumption on the uncentered feature moment matrix evaluated at the regularized global optimum.

\begin{ass}[Positive-Definite Uncentered Feature Moment at Optimum] \label{ass:optimal-uncentered}
There exists a constant $\kappa^* > 0$, independent of the temperature $\tau \in (0, \tau_{max}]$, such that
$$ \bar{\Sigma}_{unc}(\pi^*_\tau) = \mathbb{E}_{d^*_\tau, \pi^*_\tau}[\phi(s,a) \phi(s,a)^\top] \succeq \kappa^* I. $$
\end{ass}

When combined with the Global Density Ratio Bounds (Assumption~\ref{ass:density-ratios}), this static property dynamically guarantees the required positive-definiteness along the entire algorithmic trajectory. Specifically, for any training policy $\pi_t$ and any vector $v$, Assumptions~\ref{ass:density-ratios} and
\ref{ass:optimal-uncentered} together guarantee:
$$v^\top \bar{\Sigma}_{unc}(\pi_t) v \ge \frac{1}{C_{joint}^\dag} v^\top \bar{\Sigma}_{unc}(\pi^*_\tau) v \ge \frac{\kappa^*}{C_{joint}^\dag} \|v\|_2^2.$$
Then the Positive-Definite Uncentered Feature Moment (Assumption~\ref{ass:pd-uncentered}) introduced in Section~\ref{sec:struc-assumptions} is satisfied by taking $\kappa = \frac{\kappa^*}{C_{joint}^\dag}$ in the Deterministic Regime.

This Positive-Definite Uncentered Feature Moment at Optimum differs sharply from the Positive-Definite Fisher Information (Assumption~\ref{ass:pd-fim}) used in Part I. While Assumption~\ref{ass:optimal-uncentered} evaluates the moment at the global optimum $\pi^*_\tau$ rather than the parametric optimum $\pi_{\omega^*_\tau}$, the fundamental distinction lies in the matrix structure. In Part I, positive-definiteness is assumed on the \textit{centered} feature covariance matrix, which degenerates ($\lambda_\tau \to 0$) as the temperature $\tau \to 0$ in the Deterministic Regime. However, the \textit{uncentered} feature moment matrix can remain positive-definite even for deterministic policies, safely anchoring the critic updates in this regime.

Finally, it is worth noting that the PMD analysis developed in this part is agnostic to the value of $\lambda_\tau$ and is therefore applicable to the Stochastic Regime as well.
However, when $\lambda_\tau$ is bounded away from 0 as $\tau\to 0$, the analysis in Part I leverages the positive-definite Fisher Information to establish the Actor Progress Bound, which combines with the PL condition to extract significantly faster unregularized convergence rates.
Because the PMD framework does not exploit this strict Euclidean curvature, it yields comparatively slower unregularized rates.
Therefore, this PMD analysis is exclusively required for---and specifically tailored to---the Deterministic Regime.

\section{Policy Mirror Descent} \label{sec:pmd-geometry}

To bypass the degenerate Euclidean geometry in the Deterministic Regime, we analyze Algorithm~\ref{alg:uncentered-nac} through the lens of Policy Mirror Descent (PMD) \citep{shani2020adaptive, lan2023policy, yuan2023linear}, evaluating algorithmic progress directly on the probability simplex.
Standard tabular PMD executes independent, state-by-state updates on the probability simplex. While applying this to function approximation typically incurs projection errors, the log-linear policy parameterization circumvents this issue \citep{yuan2023linear}, establishing that NPG methods correspond exactly to unconstrained PMD with an appropriate functional gradient. We demonstrate how our specific entropy-regularized, uncentered NPG update also exhibits this structural equivalence.

In the exact PMD framework, a policy is updated by moving along a functional gradient $g(s,a)$ while penalizing the KL divergence from the current policy. For a step size $\eta_t$, this executes the following functional optimization over the probability simplex $\Delta(\mathcal{A})$:
$$ \pi_{t+1}(\cdot|s) = \mathop{\arg\max}_{p \in \Delta(\mathcal{A})} \left\{ \langle g(s,\cdot), p \rangle_{\mathcal{A}} - \frac{1}{\eta_t} D_{KL}(p \| \pi_t(\cdot|s)) \right\}. $$
This yields the closed-form multiplicative update: $\pi_{t+1}(a|s) \propto \pi_t(a|s) \exp(\eta_t g(s,a))$.
As shown in Appendix~\ref{app:critic-comparison}, our actor parameter update $\omega_{t+1} = \omega_t + \eta_t (\theta_{t+1} - \tau \omega_t)$ is derived from the NPG direction.
Under the log-linear structure $\pi_t(a|s) \propto \exp(\omega_t^\top \phi(s,a))$, this translates directly to the probability simplex as:
$$ \pi_{t+1}(a|s) \propto \exp\big( (\omega_t + \eta_t (\theta_{t+1} - \tau \omega_t))^\top \phi(s,a) \big) = \pi_t(a|s) \exp\big( \eta_t (\theta_{t+1} - \tau \omega_t)^\top \phi(s,a) \big). $$
Therefore, our parameter update intrinsically executes an exact PMD step. The effective functional gradient evaluates exactly to $(g_t^{ac})^\top \phi(s,a) = (\theta_{t+1} - \tau \omega_t)^\top \phi(s,a)$.

Consequently, the \textit{algorithmic progress} term $\langle (\theta_{t+1} - \tau \omega_t)^\top \phi, \pi^*_\tau - \pi_t \rangle_{\mathcal{A}}$ from the Actual Critic Decomposition (Lemma~\ref{lem:sub-decomp}(ii)) aligns with the PMD functional gradient. Because our update embodies an exact PMD step, this progress term can be bounded using the Bregman Three-Point identity, as derived in the following lemma.

\begin{lem}[PMD Algorithmic Progress] \label{lem:pmd-progress}
Under Assumptions~\ref{ass:bounded-features} and \ref{ass:pd-uncentered}, if the actor step size satisfies $\eta_t \le 1/\tau$, the algorithmic progress inner product satisfies the Bregman Three-Point inequality for any state $s$:
$$\langle (\theta_{t+1} - \tau \omega_t)^\top \phi, \pi^*_\tau - \pi_t \rangle_{\mathcal{A}} \le \frac{D_{KL}(\pi^*_\tau(\cdot|s) \| \pi_t(\cdot|s)) - D_{KL}(\pi^*_\tau(\cdot|s) \| \pi_{t+1}(\cdot|s))}{\eta_t} + \underbrace{\frac{\eta_t}{2} G_{ac}^2 B_\phi^2}_{\text{PMD Local Error}}.$$
\end{lem}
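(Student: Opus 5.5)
The approach is the standard mirror-descent three-point argument, applied statewise to the multiplicative update established just above. Fix a state $s$ and write $g(a) \triangleq (\theta_{t+1} - \tau\omega_t)^\top \phi(s,a)$. From the log-linear structure we have the exact identity $\log \pi_{t+1}(a|s) - \log \pi_t(a|s) = \eta_t g(a) - \log Z_t(s)$, where $Z_t(s) \triangleq \sum_{a'} \pi_t(a'|s) \exp(\eta_t g(a'))$. Taking the expectation of this identity under $\pi^*_\tau(\cdot|s)$ gives
$$\eta_t \langle g, \pi^*_\tau \rangle_{\mathcal{A}} = D_{KL}(\pi^*_\tau \| \pi_t) - D_{KL}(\pi^*_\tau \| \pi_{t+1}) + \log Z_t(s).$$
It therefore remains to show that $\log Z_t(s) \le \eta_t \langle g, \pi_t \rangle_{\mathcal{A}} + \frac{\eta_t^2}{2} G_{ac}^2 B_\phi^2$. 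Subtracting $\eta_t\langle g,\pi_t\rangle_{\mathcal{A}}$ from both sides and dividing by $\eta_t$ then yields exactly the claimed inequality, with the $\frac{\eta_t}{2}G_{ac}^2B_\phi^2$ term playing the role of the PMD Local Error.

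The key step is this log-partition bound, and I would obtain it from Hoeffding's lemma. Under $a \sim \pi_t(\cdot|s)$, the random variable $g(a)$ is bounded. By Cauchy--Schwarz and Assumption~\ref{ass:bounded-features}, $|g(a)| \le \|g_t^{ac}\|_2 B_\phi \le G_{ac} B_\phi$. Here the bound $\|g_t^{ac}\|_2 \le G_{ac} = 2B_\theta$ comes from Lemma~\ref{lem:structural-bounds}(ii), which is exactly where the hypotheses Assumption~\ref{ass:pd-uncentered} and $\eta_t \le 1/\tau$ enter: they ensure $\|\theta_{t+1}\|_2 \le B_\theta$ via the projection, and $\|\tau\omega_t\|_2 \le B_\theta$.

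So $g(a)$ lies in an interval of length $2G_{ac}B_\phi$, and Hoeffding's lemma gives
$$\log \mathbb{E}_{\pi_t}[e^{\eta_t g}] \le \eta_t \mathbb{E}_{\pi_t}[g] + \frac{\eta_t^2 (2G_{ac}B_\phi)^2}{8} = \eta_t \langle g, \pi_t\rangle_{\mathcal{A}} + \frac{\eta_t^2}{2} G_{ac}^2 B_\phi^2.$$
This gives the constant exactly.

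An alternative to Hoeffding is the variational form: $\log Z_t = \eta_t\langle g,\pi_{t+1}\rangle - D_{KL}(\pi_{t+1}\|\pi_t)$, combined with Pinsker's inequality and Hölder's inequality, namely $\eta_t\langle g,\pi_{t+1}-\pi_t\rangle \le \eta_t\|g\|_\infty\|\pi_{t+1}-\pi_t\|_1$, followed by maximizing the resulting quadratic in $\|\pi_{t+1}-\pi_t\|_1$. This also produces $\frac{\eta_t^2}{2}\|g\|_\infty^2$, so either route works.

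The only real care needed is the boundedness of $g$. The argument must use the uniform iterate bound rather than anything that depends on $\pi^*_\tau$, and it must use no expectation over sampling, since the statement holds pathwise for each $s$. No support issues arise, because $\pi_t$ and $\pi_{t+1}$ are strictly positive softmax policies, so all the KL terms are finite.
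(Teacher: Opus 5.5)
Your proposal is correct and follows essentially the same route as the paper. Taking the $\pi^*_\tau$-expectation of the log-ratio identity is the paper's Bregman three-point identity with the local term $D_{KL}(\pi_t\|\pi_{t+1}) = \log Z_t(s) - \eta_t\langle g,\pi_t\rangle_{\mathcal{A}}$ already folded in, and both arguments then close with Hoeffding's lemma on the log-partition ratio, using $|g|\le G_{ac}B_\phi$ from Lemma~\ref{lem:structural-bounds}(ii), to obtain the identical constant $\frac{\eta_t^2}{2}G_{ac}^2B_\phi^2$.
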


\begin{rem}[Conceptual PMD vs Algebraic Proofs]
While we frame our actor update as conceptually executing a PMD step to motivate the shift to KL geometry in the Deterministic Regime, our mathematical proofs do not invoke the PMD representation (i.e., the $\arg\max$ projection over the probability simplex). Standard PMD analyses \citep{shani2020adaptive, lan2023policy, yuan2023linear} rely on the variational inequalities (first-order optimality conditions) of this projection, which bound algorithmic progress by evaluating the functional gradient against the \textit{updated} policy $\pi_{t+1}$ (i.e., $\langle g_t, \pi^*_\tau - \pi_{t+1} \rangle_{\mathcal{A}}$). In contrast, similar to \cite{agarwal2021theory}, our analysis in Lemma~\ref{lem:pmd-progress} substitutes the exact log-linear parameter update directly into the Bregman Three-Point identity, evaluating the gradient against the \textit{current} policy $\pi_t$. This approach isolates the exact local step error $D_{KL}(\pi_t \| \pi_{t+1})$ and bounds it via Hoeffding's Lemma, allowing us to exploit the PMD geometry without relying on the variational inequalities.
\end{rem}

\section{Fixed-Temperature Convergence (Deterministic Regime)}  \label{sec:regularized-conv-deterministic}

We show that Algorithm~\ref{alg:uncentered-nac} achieves an $\mathcal{O}(T^{-2/3})$ average-iterate and an $\tilde{\mathcal{O}}(T^{-1/3})$ last-iterate convergence rate for the global regularized performance gap. This is accomplished by synthesizing the actor's PMD geometry, established in Section~\ref{sec:pmd-geometry}, with the uncoupled critic tracking dynamics developed in this section, dynamically balancing the decoupled diminishing step sizes to control the competing forces from the actor and critic updates.

\subsection{Actor Progress and Value Telescoping}

To track global progress, we recall the global regularized performance gap, $\text{Gap}_t^\dag \triangleq J_\tau(\pi^*_\tau) - J_\tau(\pi_t)$, introduced in Section~\ref{sec:reg-objective}, and we denote the state-averaged forward KL divergence from the global optimum $\pi^*_\tau$ as $D_t^\dag \triangleq \mathbb{E}_{d^*_\tau}[D_{KL}(\pi^*_\tau \| \pi_t)]$, distinct from $D_t$ used in Part I.

A notable difference distinguishes this analysis from that in Part I. In Part I, we rely on the Positive-Definite Fisher Information at parametric optimum $\pi_{\omega^*_\tau}$ (Assumption~\ref{ass:pd-fim}) to establish the Actor Progress Bound, ensuring algorithmic progress along the Euclidean parameter update. This Euclidean progress is then combined with the parameter-space PL condition to obtain the Coupled Actor Recurrence.
In Part II, as $\tau \to 0$, the deterministic collapse forces the minimum eigenvalue of the Fisher Information to vanish ($\lambda_\tau \to 0$), causing this Euclidean geometry to fail.

In contrast, the PMD geometry directly tracks the KL divergence. This allows us to directly bound the step-by-step algorithmic progress in the Actual Critic Decomposition from Lemma~\ref{lem:sub-decomp}(ii), thereby bypassing the need for a non-degenerate Fisher information matrix. We first derive the fundamental single-step inequality, and subsequently apply a value telescoping argument to extract the average-iterate performance gap.

\begin{lem}[PMD Actor Progress Bound] \label{lem:pmd-telescoping-l2}
Let $Z_t \triangleq \mathbb{E}[\|e_t\|_2^2]$ be the expected critic tracking error.
Under Assumptions~\ref{ass:bounded-features}, \ref{ass:pd-uncentered}, \ref{ass:approximation-error}, and \ref{ass:density-ratios}, if the actor step size satisfies $\eta_t \le 1/\tau$, the expected global performance gap satisfies the single-step inequality:
$$(1-\gamma) \mathbb{E}[\text{Gap}_t^\dag] \le -\frac{3\tau}{4} \mathbb{E}[D_t^\dag] + \frac{\mathbb{E}[D_t^\dag] - \mathbb{E}[D_{t+1}^\dag]}{\eta_t} + \frac{\eta_t}{2} G_{ac}^2 B_\phi^2 + \frac{4 C_{joint}^\dag}{\tau} \epsilon_{app} + \frac{4 B_\phi^2}{\tau} Z_t.$$
Notably, the joint concentrability constant $C_{joint}^\dag$ only penalizes the approximation bias, keeping the tracking error penalty completely free of concentrability assumptions.
\end{lem}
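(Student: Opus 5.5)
We start from the Actual Critic Decomposition, Lemma~\ref{lem:sub-decomp}(ii), with comparator $\pi=\pi^*_\tau$. This gives
\begin{align*}
(1-\gamma)\text{Gap}_t^\dag &= -\tau D_t^\dag + \mathbb{E}_{d^*_\tau}\big[\langle g_t^{ac\top}\phi,\pi^*_\tau-\pi_t\rangle_{\mathcal{A}}\big] \\
&\quad + \mathbb{E}_{d^*_\tau}\big[\langle -e_t^\top\phi,\pi^*_\tau-\pi_t\rangle_{\mathcal{A}}\big] + \mathbb{E}_{d^*_\tau}\big[\langle \epsilon_t,\pi^*_\tau-\pi_t\rangle_{\mathcal{A}}\big].
\end{align*}
The goal is to bound the last three terms, spending at most $\tau/4$ of the restorative force $-\tau D_t^\dag$. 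We then take full expectations.

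\textbf{Algorithmic progress.} We apply Lemma~\ref{lem:pmd-progress} pointwise in $s$ and integrate against $d^*_\tau$. Since $d^*_\tau$ is a fixed measure independent of $t$, this directly gives $(D_t^\dag-D_{t+1}^\dag)/\eta_t+\frac{\eta_t}{2}G_{ac}^2B_\phi^2$. The condition $\eta_t\le 1/\tau$ is what Lemma~\ref{lem:pmd-progress} requires.

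\textbf{Critic tracking error.} At each state we bound the inner product by Hölder: $|\langle e_t^\top\phi,\pi^*_\tau-\pi_t\rangle_{\mathcal{A}}|\le \|e_t\|_2B_\phi\|\pi^*_\tau-\pi_t\|_1$. Pinsker then gives $\|\pi^*_\tau-\pi_t\|_1\le\sqrt{2D_{KL}(\pi^*_\tau\|\pi_t)}$. Applying Jensen over $d^*_\tau$ yields $B_\phi\|e_t\|_2\sqrt{2D_t^\dag}$. Young's inequality with weight $\tau/8$ gives $\frac{\tau}{8}D_t^\dag+\frac{4B_\phi^2}{\tau}\|e_t\|_2^2$. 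Because $\|e_t\|$ is a single vector, no change of measure is needed here. This is why $C_{joint}^\dag$ does not appear in this term.

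\textbf{Approximation bias.} This is the main obstacle, since $\epsilon_t$ is controlled only in $L_2(d^{\pi_t}\times\pi_t)$. We write
$$\langle\epsilon_t,\pi^*_\tau-\pi_t\rangle_{\mathcal{A}}=\mathbb{E}_{a\sim\pi_t}\big[\epsilon_t(s,a)(r_s(a)-1)\big],\qquad r_s(a)=\pi^*_\tau(a|s)/\pi_t(a|s).$$
Applying Cauchy--Schwarz at each state gives $\sqrt{\mathbb{E}_{\pi_t}[\epsilon_t^2]}\sqrt{\chi^2(\pi^*_\tau\|\pi_t)}$. A second Cauchy--Schwarz over $d^*_\tau$ leaves the factor $\mathbb{E}_{d^*_\tau}\mathbb{E}_{\pi_t}[\epsilon_t^2]$. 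We shift this to $d^{\pi_t}$ using the state concentrability implied by Assumption~\ref{ass:density-ratios}, bounding it by $C_{joint}^\dag\epsilon_{app}$.

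For the $\chi^2$ term, we need Lemma~\ref{lem:chi2-KL} but the density ratio $r_s$ may be unbounded. We therefore weight by states: the quantity $\frac{d^*_\tau(s)}{d^{\pi_t}(s)} r_s(a)$ is bounded by $C_{joint}^\dag$. One option is to apply Lemma~\ref{lem:chi2-KL} to the joint measures on $\mathcal{S}\times\mathcal{A}$ after a Cauchy--Schwarz directly over $d^{\pi_t}\times\pi_t$. The cleaner route is to run Cauchy--Schwarz jointly: write the term as $\mathbb{E}_{d^*_\tau,\pi_t}[\epsilon_t(r-1)]$, then bound $\mathbb{E}_{d^*_\tau,\pi_t}[\epsilon_t^2]\le C_{joint}^\dag\epsilon_{app}$ and $\mathbb{E}_{d^*_\tau}[\chi^2]\le 2\sup r\cdot D_t^\dag$.

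The difficulty is obtaining a constant free of $\sup r$. I would try to split the factor as $\sqrt{C^\dag_{joint}}$ on the error side and $\sqrt{2D_t^\dag}$ on the divergence side, using the joint ratio to absorb $r$. Concretely, I would estimate $\mathbb{E}_{d^*_\tau,\pi^*_\tau}[\epsilon_t^2]\le C_{joint}^\dag\epsilon_{app}$ and handle the $\pi_t$-part separately via $\pi^*_\tau-\pi_t$ weighted appropriately. The target is a bound of the form $\sqrt{2C_{joint}^\dag\epsilon_{app}D_t^\dag}$ up to constants.

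Young's inequality with weight $\tau/8$ then gives $\frac{\tau}{8}D_t^\dag+\frac{4C_{joint}^\dag}{\tau}\epsilon_{app}$. Summing the pieces, $-\tau+\frac{\tau}{8}+\frac{\tau}{8}=-\frac{3\tau}{4}$, and taking expectations yields the claim.
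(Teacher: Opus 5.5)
Your decomposition, the algorithmic-progress step via Lemma~\ref{lem:pmd-progress}, and the H\"older--Pinsker--Jensen--Young treatment of the tracking term with weight $\tau/8$ are exactly the paper's. The gap is the approximation-bias term: you identify it as the obstacle but never bound it.

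\textbf{Why your routes do not close.} The route you call cleaner does the state-space Cauchy--Schwarz before converting $\chi^2$ to KL. The local ratio $r_s$ then sits on the divergence factor, integrated against $d^*_\tau$, where no density ratio is available to offset it. You are left with $2\sup_{s,a}r_s(a)\,D_t^\dag$, and $\sup_{s,a}r_s(a)$ is not bounded by any $\tau$-independent constant under Assumption~\ref{ass:density-ratios}; that assumption is built precisely to allow large local ratios where the state ratio is small. Applying Lemma~\ref{lem:chi2-KL} to the joint measures $d^*_\tau\times\pi^*_\tau$ and $d^{\pi_t}\times\pi_t$ does not help either. The resulting joint KL contains the state-marginal term $D_{KL}(d^*_\tau\|d^{\pi_t})$, which the restorative force $-\tau D_t^\dag$ cannot absorb. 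Your last suggestion is too vague to check. Bounding the $\pi^*_\tau$- and $\pi_t$-parts separately loses the $\sqrt{D_t^\dag}$ factor and leaves an $\mathcal{O}(\sqrt{\epsilon_{app}})$ term. Weighting Cauchy--Schwarz by $\pi^*_\tau+\pi_t$ and using $\sum_a(\pi^*_\tau-\pi_t)^2/(\pi^*_\tau+\pi_t)\le 2D_{KL}(\pi^*_\tau\|\pi_t)$ does close, but it gives $8C^\dag_{joint}/\tau$ instead of the stated $4C^\dag_{joint}/\tau$.

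\textbf{The missing idea.} Apply Lemma~\ref{lem:chi2-KL} pointwise in $s$ with the state-dependent constant $W^\dag(s)\triangleq\max_a\pi^*_\tau(a|s)/\pi_t(a|s)$, which is finite because $\pi_t$ has full support. Then move $W^\dag(s)$ onto the error factor before integrating over states:
\[
\langle\epsilon_t,\pi^*_\tau-\pi_t\rangle_{\mathcal{A}}
\le\sqrt{2D_{KL}\big(\pi^*_\tau(\cdot|s)\,\|\,\pi_t(\cdot|s)\big)}\;\sqrt{W^\dag(s)\,\mathbb{E}_{a\sim\pi_t}\big[\epsilon_t(s,a)^2\big]}.
\]
Cauchy--Schwarz over $d^*_\tau$ gives $\sqrt{2D_t^\dag}\,\sqrt{\mathbb{E}_{d^*_\tau}[W^\dag(s)\,\mathbb{E}_{a\sim\pi_t}[\epsilon_t(s,a)^2]]}$. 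Changing measure to $d^{\pi_t}$ uses exactly $\frac{d^*_\tau(s)}{d^{\pi_t}(s)}W^\dag(s)=\max_a\frac{d^*_\tau(s)\pi^*_\tau(a|s)}{d^{\pi_t}(s)\pi_t(a|s)}\le C^\dag_{joint}$, so the second factor is at most $C^\dag_{joint}\epsilon_{app}$. The local ratio is only ever used multiplied by the state ratio at the same state, which is exactly what the joint assumption controls. Young's inequality with weight $\tau/8$ then gives $\frac{\tau}{8}D_t^\dag+\frac{4C^\dag_{joint}}{\tau}\epsilon_{app}$, and the rest of your argument goes through.

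Your first option also works if you run Cauchy--Schwarz under $d^{\pi_t}\times\pi_t$ and apply Lemma~\ref{lem:chi2-KL} per state rather than to the joint measures. The factor $\frac{d^*_\tau}{d^{\pi_t}}W^\dag$ then sits on the divergence side and is bounded the same way.
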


To extract the average-iterate performance gap from the single-step PMD bound, we need to manage the accumulated tracking and approximation errors. To prevent these errors from being amplified, we choose a diminishing step-size schedule $\eta_t = \frac{c_\eta}{t+t_0}$ to cancel the restorative KL divergence drift, creating a clean telescoping sequence across the training horizon.

\begin{lem}[PMD Value-Telescoping Bound] \label{lem:telescoping-pmd}
Let the actor step size be $\eta_t = \frac{c_\eta}{t+t_0}$, with a numerator constant chosen such that $c_\eta \ge \frac{4}{3\tau}$, and an initial offset satisfying $t_0 \ge c_\eta \tau$.
Under Assumptions~\ref{ass:bounded-features}, \ref{ass:pd-uncentered}, \ref{ass:approximation-error}, and \ref{ass:density-ratios}, the expected average regularized performance gap is bounded by:
$$ (1-\gamma) \frac{1}{T} \sum_{t=0}^{T-1} \mathbb{E}[\text{Gap}_t^\dag] \le \frac{t_0-1}{c_\eta T} D_0^\dag + \frac{1}{T} \sum_{t=0}^{T-1} \mathcal{E}_{noise}(t), $$
where the noise envelope is defined as $\mathcal{E}_{noise}(t) \triangleq \frac{\eta_t}{2} G_{ac}^2 B_\phi^2 + \frac{4 C_{joint}^\dag}{\tau} \epsilon_{app} + \frac{4 B_\phi^2}{\tau} Z_t$.
\end{lem}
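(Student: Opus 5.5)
The plan is to sum the single-step inequality of Lemma~\ref{lem:pmd-telescoping-l2} over $t=0,\dots,T-1$ and handle the KL terms by summation by parts (Abel summation). First check applicability: since $\eta_t = \frac{c_\eta}{t+t_0} \le \frac{c_\eta}{t_0}$ and $t_0 \ge c_\eta \tau$, we get $\eta_t \le 1/\tau$ for all $t \ge 0$. Hence Lemma~\ref{lem:pmd-telescoping-l2} holds at every iteration under the stated assumptions:
$$(1-\gamma)\mathbb{E}[\text{Gap}_t^\dag] \le -\tfrac{3\tau}{4}\mathbb{E}[D_t^\dag] + \frac{\mathbb{E}[D_t^\dag]-\mathbb{E}[D_{t+1}^\dag]}{\eta_t} + \mathcal{E}_{noise}(t).$$
Summing over $t$, the noise envelopes simply accumulate into $\sum_t \mathcal{E}_{noise}(t)$. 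All remaining work is on the KL part.

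For the KL part, write $x_t \triangleq \mathbb{E}[D_t^\dag] \ge 0$ and $1/\eta_t = (t+t_0)/c_\eta$. Rearranging the telescoping sum gives
$$\sum_{t=0}^{T-1}\frac{x_t - x_{t+1}}{\eta_t} = \frac{x_0}{\eta_0} + \sum_{t=1}^{T-1} x_t\Big(\frac{1}{\eta_t}-\frac{1}{\eta_{t-1}}\Big) - \frac{x_T}{\eta_{T-1}}.$$
The increments are constant, $\frac{1}{\eta_t}-\frac{1}{\eta_{t-1}} = \frac{1}{c_\eta}$. Adding the restorative drift $-\frac{3\tau}{4}\sum_t x_t$, each $t\ge1$ term carries the coefficient $\frac{1}{c_\eta}-\frac{3\tau}{4} \le 0$, because $c_\eta \ge \frac{4}{3\tau}$. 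Since $x_t \ge 0$, these terms can be dropped, and so can $-x_T/\eta_{T-1} \le 0$.

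The $t=0$ term is $x_0\big(\frac{t_0}{c_\eta} - \frac{3\tau}{4}\big) \le x_0 \frac{t_0-1}{c_\eta}$, again using $\frac{3\tau}{4} \ge \frac{1}{c_\eta}$. Finally, $x_0 = D_0^\dag$ is deterministic because $\omega_0$ is fixed. Dividing the whole summed inequality by $T$ yields the stated bound.

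There is no substantive obstacle. The only care needed is getting the boundary terms of the summation by parts right, in particular extracting the $t_0-1$ rather than $t_0$ in the initial term. The nonnegativity of $D_t^\dag$ (a KL divergence) is what justifies discarding the interior and terminal terms.
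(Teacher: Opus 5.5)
Your proposal is correct and is essentially the paper's argument. The paper multiplies the single-step bound by $\eta_t$, relaxes the coefficient $1-\frac{3\tau}{4}\eta_t$ to $1-\frac{1}{t+t_0}$ using $\frac{3\tau}{4}c_\eta\ge 1$, then multiplies by $t+t_0$ and telescopes exactly; this is the same bookkeeping as your summation by parts with the nonpositive interior coefficients $\frac{1}{c_\eta}-\frac{3\tau}{4}$ discarded, giving the same $\frac{t_0-1}{c_\eta T}D_0^\dag$ boundary term and the same nonpositive terminal term that is dropped.
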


\subsection{Uncoupled Critic Tracking}

With the actor's PMD progress established, we bound the remaining critic tracking error. Here, another important difference from Part I arises in controlling the critic target drift. In Part I, the Actor Progress Bound---enabled by the Positive-Definite Fisher Information (Assumption~\ref{ass:pd-fim})---bounds the Euclidean parameter step by the expected reduction in the performance gap, establishing the Coupled Critic Tracking. In Part II, the deterministic collapse ($\lambda_\tau \to 0$) severs this objective coupling. Consequently, we bound the ideal critic target drift using the worst-case, uncoupled parameter step. From Lemma~\ref{lem:structural-bounds}(ii), the absolute change in the actor parameter is bounded by the temperature-independent constant $G_{ac}$: $\|\omega_{t+1} - \omega_t\|_2 = \eta_t \| \theta_{t+1} - \tau \omega_t\|_2 \le \eta_t G_{ac}$.

The telescoping bound from Lemma~\ref{lem:telescoping-pmd} dictates that the actor step size decays as $\eta_t = \mathcal{O}(t^{-1})$.
To facilitate fast convergence, the critic step size $\alpha_{t+1}$ needs to be carefully tuned.
As shown in the proof of Lemma~\ref{lem:decoupled-critic}, the expected tracking error $Z_t \triangleq \mathbb{E}[\|e_t\|_2^2]$ satisfies the recurrence
$$ Z_{t+1} \le (1 - \alpha_{t+1} \kappa) Z_t + \underbrace{\alpha_{t+1}^2 G_{cr}^2}_{\text{SGD Error}} + \underbrace{\frac{1}{\alpha_{t+1} \kappa} L_\theta^2 \eta_t^2 G_{ac}^2}_{\text{Target Drift}}. $$
This recurrence reveals two competing, uncoupled noise forces: the SGD error scaling as $\mathcal{O}(\alpha_{t+1}^2)$ and the target drift scaling as $\mathcal{O}(\eta_t^2 / \alpha_{t+1})$.
To balance these two forces ($\alpha_{t+1}^2$ vs $\eta_t^2 / \alpha_{t+1}$), we set $\alpha_{t+1} \propto \eta_t^{2/3}$. Given the actor decay $\eta_t = \mathcal{O}(t^{-1})$, this yields the critic schedule $\alpha_{t+1} = \mathcal{O}(t^{-2/3})$.
The choice $\alpha_{t+1} \propto \eta_t^{2/3}$ not only ensures that both noise components decay as $\mathcal{O}(t^{-4/3})$, but also balances their dependence on the temperature $\tau$: this fractional coupling guarantees that both the SGD error and the target drift share an identical $\Theta(\tau^{-4/3})$ temperature dependence.
See the order evaluation of the noise envelope $K$ in the proof of Theorem~\ref{thm:reg-conv-l2}.

To formally prove that the critic's tracking error achieves this balanced $\mathcal{O}(t^{-2/3})$ rate, we require a specialized stochastic approximation tool. Unlike standard Robbins-Monro sequences that decay as $\mathcal{O}(1/t)$, our critic operates on a slower fractional timescale. The following lemma provides the pointwise convergence guarantee for this fractional geometry.

\begin{lem}[Fractional Chung's Lemma] \label{lem:fractional-chung}
Let $\{x_t\}_{t=0}^\infty$ be a non-negative sequence satisfying the recursive bound:
$$x_{t+1} \le \left(1 - \frac{c}{(t+t_0)^{2/3}}\right) x_t + \frac{K}{(t+t_0)^{4/3}}, \quad t \ge 0,$$
where $c > 0$, $K \ge 0$, and the initial offset satisfies $t_0 \ge \max\left( c^{3/2}, \left(\frac{4}{3c}\right)^3 \right)$. Then, the sequence is bounded pointwise by:
$$x_t \le \frac{v}{(t+t_0)^{2/3}}, \quad t \ge 0,$$
where the bounding constant is defined as $v \triangleq \max\left(t_0^{2/3} x_0, \frac{2K}{c}\right)$.
\end{lem}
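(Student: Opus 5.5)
We argue by induction on $t$ that $x_t \le v/(t+t_0)^{2/3}$. Write $n \triangleq t+t_0$. The base case $t=0$ holds because $v \ge t_0^{2/3} x_0$.

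For the inductive step, first note that $t_0 \ge c^{3/2}$ gives $n^{2/3} \ge c$. Hence the multiplier $1 - c/n^{2/3}$ is non-negative, and we may substitute the inductive hypothesis into the recursion:
$$x_{t+1} \le \Big(1-\frac{c}{n^{2/3}}\Big)\frac{v}{n^{2/3}} + \frac{K}{n^{4/3}} = \frac{v}{n^{2/3}} - \frac{cv - K}{n^{4/3}}.$$
Since $v \ge 2K/c$, we have $K \le cv/2$, so $cv - K \ge cv/2$. Therefore
$$x_{t+1} \le \frac{v}{n^{2/3}} - \frac{cv}{2n^{4/3}}.$$
It remains to show that the right-hand side is at most $v/(n+1)^{2/3}$. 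That is, we need
$$\frac{1}{n^{2/3}} - \frac{1}{(n+1)^{2/3}} \le \frac{c}{2n^{4/3}}.$$

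We bound the left-hand side using concavity. The function $x\mapsto x^{-2/3}$ is convex, so its decrease over $[n,n+1]$ is at most the derivative magnitude at the left endpoint:
$$n^{-2/3} - (n+1)^{-2/3} \le \tfrac{2}{3} n^{-5/3}.$$
It therefore suffices that $\tfrac23 n^{-5/3} \le \tfrac{c}{2} n^{-4/3}$, which is equivalent to $n^{1/3} \ge \frac{4}{3c}$, i.e.\ $n \ge (4/(3c))^3$. This holds because $n \ge t_0 \ge (4/(3c))^3$, which closes the induction.

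The only delicate point is this final comparison of consecutive powers. If $v=0$, which forces $x_0=0$ and $K=0$, the claim is trivial. Otherwise the argument is routine and uses each of the two offset conditions exactly once: $t_0 \ge c^{3/2}$ for non-negativity of the multiplier, and $t_0 \ge (4/(3c))^3$ for the power comparison.
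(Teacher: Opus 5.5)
Your proof is correct and matches the paper's argument essentially step for step: induction, non-negativity of the multiplier from $t_0\ge c^{3/2}$, the tangent-line bound $(n+1)^{-2/3}\ge n^{-2/3}-\tfrac{2}{3}n^{-5/3}$ from convexity, and then $t_0\ge (4/(3c))^3$ together with $v\ge 2K/c$ to close the step. The only differences are cosmetic: you apply $K\le cv/2$ before the power comparison rather than after, and you write ``concavity'' where you mean convexity of $x\mapsto x^{-2/3}$.
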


With the fractional convergence tool established, we now bound the uncoupled critic tracking error. By evaluating the constrained SGD update alongside the worst-case target drift, we construct the recursive sequence required to apply the fractional tracking limit.

\begin{lem}[Pointwise Critic Tracking Error] \label{lem:decoupled-critic}
Suppose that Assumptions~\ref{ass:bounded-features}--\ref{ass:pd-uncentered} hold.
For any independent constants $c_\eta, c_\alpha > 0$, let the actor and critic step sizes be $\eta_t = \frac{c_\eta}{t+t_0}$ and $\alpha_{t+1} = \frac{c_\alpha}{(t+t_0)^{2/3}}$ for all $t \ge 0$, with an arbitrary initialization $\alpha_0$
and an initial offset $t_0 \ge \max\left( c_\eta \tau, (2 c_\alpha \kappa)^{3/2}, (\frac{4}{3 c_\alpha \kappa} )^3 \right)$.
Then the expected tracking error $Z_t \triangleq \mathbb{E}[\|e_t\|_2^2]$ is bounded pointwise for all $t \ge 0$ by:
$$Z_t \le \frac{v}{(t+t_0)^{2/3}},$$
where the bounding coefficient is $v \triangleq \max\left( t_0^{2/3} Z_0, \frac{2K}{c_\alpha \kappa} \right)$,
and the noise envelope constant is $K \triangleq c_\alpha^2 G_{cr}^2 + \frac{L_\theta^2 c_\eta^2 G_{ac}^2}{c_\alpha \kappa}$.
\end{lem}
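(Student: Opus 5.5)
The argument has two stages. First we derive the one-step recurrence for $Z_t$ displayed just before Lemma~\ref{lem:fractional-chung}. Then we verify the hypotheses of the Fractional Chung's Lemma with $c = c_\alpha\kappa$ and $K$ as stated.

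\textbf{Step 1 (one-step recurrence).} Write
\[
e_{t+1} = \theta_{t+2} - \theta^*_\tau(\omega_{t+1}) = \bigl(\theta_{t+2} - \theta^*_\tau(\omega_t)\bigr) + \bigl(\theta^*_\tau(\omega_t) - \theta^*_\tau(\omega_{t+1})\bigr).
\]
For the first piece, use that $\|\theta^*_\tau(\omega_{t+1})\|_2 \le B_\theta = R_\theta$ by Lemma~\ref{lem:structural-bounds}(i), so projection onto the ball is nonexpansive toward $\theta^*_\tau(\omega_{t+1})$. It is cleaner to project toward the new target directly:
\[
\|\theta_{t+2}-\theta^*_\tau(\omega_{t+1})\|_2^2 \le \|\theta_{t+1}-\theta^*_\tau(\omega_{t+1}) - \alpha_{t+1} g^{cr}_{t+1}\|_2^2 .
\]
Condition on $\mathcal{F}_{t+1}$, which contains $\theta_{t+1}$ and $\omega_{t+1}$. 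The expected gradient is $\bar\Sigma_{unc}(\pi_{t+1})\bigl(\theta_{t+1}-\theta^*_\tau(\omega_{t+1})\bigr)$; this uses the normal equations and unbiasedness of $\hat Q$. Expanding the square, and using Assumption~\ref{ass:pd-uncentered} for the cross term and $G_{cr}^2$ for the second moment, gives
\[
\mathbb{E}\|e_{t+1}\|^2 \le (1-2\alpha_{t+1}\kappa)\,\mathbb{E}\|\theta_{t+1}-\theta^*_\tau(\omega_{t+1})\|^2 + \alpha_{t+1}^2G_{cr}^2 .
\]

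\textbf{Step 2 (target drift).} Next,
\[
\|\theta_{t+1}-\theta^*_\tau(\omega_{t+1})\|_2 \le \|e_t\|_2 + \|\theta^*_\tau(\omega_t)-\theta^*_\tau(\omega_{t+1})\|_2 .
\]
Lemma~\ref{lem:lipschitz-critic} applies because both $\omega_t$ and $\omega_{t+1}$ lie in $\Omega_\kappa$, and Lemma~\ref{lem:structural-bounds}(ii) applies because $\eta_t \le 1/\tau$ follows from $t_0 \ge c_\eta\tau$. Together they bound the drift by $L_\theta\eta_tG_{ac}$. Young's inequality $(a+b)^2 \le (1+\epsilon)a^2 + (1+1/\epsilon)b^2$ with $\epsilon = \alpha_{t+1}\kappa$ then gives
\[
(1-2\alpha\kappa)(1+\alpha\kappa) \le 1-\alpha\kappa
\quad\text{and}\quad
(1-2\alpha\kappa)\Bigl(1+\tfrac{1}{\alpha\kappa}\Bigr) \le \tfrac{1}{\alpha\kappa},
\]
valid when $\alpha_{t+1}\kappa \le 1/2$. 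That condition holds because $t_0 \ge (2c_\alpha\kappa)^{3/2}$. The result is
\[
Z_{t+1} \le (1-\alpha_{t+1}\kappa)Z_t + \alpha_{t+1}^2G_{cr}^2 + \frac{L_\theta^2\eta_t^2G_{ac}^2}{\alpha_{t+1}\kappa}.
\]

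\textbf{Step 3 (plug in schedules).} Substituting the schedules, $\alpha_{t+1}^2 = c_\alpha^2(t+t_0)^{-4/3}$ and
\[
\frac{\eta_t^2}{\alpha_{t+1}} = \frac{c_\eta^2}{c_\alpha}(t+t_0)^{-2+2/3} = \frac{c_\eta^2}{c_\alpha}(t+t_0)^{-4/3}.
\]
So the recurrence has exactly the form required by Lemma~\ref{lem:fractional-chung}, with $c = c_\alpha\kappa$ and $K = c_\alpha^2G_{cr}^2 + L_\theta^2c_\eta^2G_{ac}^2/(c_\alpha\kappa)$. The offset conditions $t_0 \ge c^{3/2}$ and $t_0 \ge (4/(3c))^3$ are implied by the assumed ones, so the lemma gives the claim with $v$ as stated.

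\textbf{Main obstacle.} The delicate point is the indexing and measurability in Steps 1–2. $Z_t$ involves $\theta_{t+1}$ but the target $\theta^*_\tau(\omega_t)$, so I must carefully condition on $\mathcal{F}_{t+1}$ when evaluating the critic gradient and use the new target $\theta^*_\tau(\omega_{t+1})$. I also need to ensure the projection step is nonexpansive toward a point inside the ball. Finally, the Young-inequality constants must come out exactly as $(1-\alpha\kappa)$ and $1/(\alpha\kappa)$, which requires the factor-2 contraction from the strongly convex step.
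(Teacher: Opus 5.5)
Your proposal is correct and follows the paper's proof essentially step for step. The shared sequence is: nonexpansive projection toward the new target $\theta^*_\tau(\omega_{t+1})$; the $(1-2\alpha_{t+1}\kappa)$ SGD contraction conditioned on $\mathcal{F}_{t+1}$; Young's inequality with parameter $\alpha_{t+1}\kappa$, justified by $t_0\ge(2c_\alpha\kappa)^{3/2}$; the uncoupled drift bound $L_\theta\eta_tG_{ac}$, justified by $t_0\ge c_\eta\tau$; and the Fractional Chung's Lemma with $c=c_\alpha\kappa$. The only wrinkle is that the opening decomposition of $e_{t+1}$ through $\theta^*_\tau(\omega_t)$ in your Step 1 is never used and can be dropped, since you, like the paper, immediately work with the intermediate error $\theta_{t+1}-\theta^*_\tau(\omega_{t+1})$ instead.
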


\subsection{Average-Iterate Regularized Convergence}

We combine the PMD Value-Telescoping Bound (Lemma~\ref{lem:telescoping-pmd}) and the Pointwise Critic Tracking Error (Lemma~\ref{lem:decoupled-critic}) to bound the total algorithmic error. The expected performance gap decomposes into an initialization error, a PMD local error, the critic tracking error, and the approximation bias.
By evaluating these terms under the selected step-size schedule,
the following theorem establishes a $\mathcal{O}(T^{-2/3})$ average-iterate convergence rate for the regularized global objective.

\begin{thm}[Average-Iterate Regularized Convergence] \label{thm:reg-conv-l2}
For Algorithm~\ref{alg:uncentered-nac}, suppose that Assumptions~\ref{ass:bounded-features}--\ref{ass:approximation-error} and \ref{ass:density-ratios} hold.
For independent constants $c_\eta \ge \frac{4}{3\tau}$ and $\zeta > 0$, let the actor and critic step sizes be $\eta_t = \frac{c_\eta}{t+t_0}$ and $\alpha_{t+1} = \frac{c_\alpha}{(t+t_0)^{2/3}}$ for all $t\ge 0$, with $c_\alpha \triangleq \zeta c_\eta^{2/3}$ and an initial offset $t_0 \ge \max\left( c_\eta \tau, (2 c_\alpha \kappa)^{3/2}, (\frac{4}{3 c_\alpha \kappa} )^3 \right)$.
Then the expected average regularized performance gap is bounded by:
\begin{align*}
\frac{1}{T} \sum_{t=0}^{T-1} \mathbb{E}[\text{Gap}_t^\dag] \le \frac{1}{1-\gamma} \Bigg[ &\underbrace{ \frac{t_0-1}{c_\eta T} D_0^\dag }_{\text{PMD Initialization Error}} + \underbrace{ \frac{c_\eta G_{ac}^2 B_\phi^2}{2 T} (1 + \log T) }_{\text{PMD Local Error}} \\
&+ \underbrace{ \frac{12 B_\phi^2 v}{\tau} T^{-2/3} \left(1 + \frac{t_0}{T}\right)^{1/3} }_{\text{Critic Tracking Error}} + \underbrace{ \frac{4 C_{joint}^\dag}{\tau} \epsilon_{app} }_{\text{Approximation Bias}} \Bigg].
\end{align*}
By choosing the step-size constants $c_\eta = \Theta(\tau^{-1})$ and $\zeta = \Theta(\kappa^{-1})$ and the initial offset $t_0 = \Theta(\tau^{-1})$ subject to the stated conditions, the asymptotic convergence rate is:
\begin{align*}
\frac{1}{T} \sum_{t=0}^{T-1} \mathbb{E}[\text{Gap}_t^\dag] &\le \underbrace{ \mathcal{O}\left( \frac{1}{(1-\gamma)^7 \kappa^6 \tau^{5/3}} T^{-2/3} \right) }_{\text{Primary Tracking Error}} + \underbrace{ \tilde{\mathcal{O}}\left( \frac{1}{(1-\gamma)^7 \kappa^6 \tau^2} T^{-1} \right) }_{\text{Transient Convergence Error}} + \underbrace{ \mathcal{O}\left( \frac{\epsilon_{app}}{(1-\gamma)\tau} \right) }_{\text{Approximation Bias}} .
\end{align*}
\end{thm}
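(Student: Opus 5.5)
The plan is to combine Lemma~\ref{lem:telescoping-pmd} with Lemma~\ref{lem:decoupled-critic}, evaluating each sum term by term and then substituting the stated orders of the step-size constants. The step conditions line up directly. The requirement $c_\eta \ge \frac{4}{3\tau}$ and the offset bound $t_0 \ge c_\eta\tau$ give both the hypotheses of Lemma~\ref{lem:telescoping-pmd} and $\eta_t \le 1/\tau$. The remaining offset conditions are exactly those of Lemma~\ref{lem:decoupled-critic} with $c_\alpha=\zeta c_\eta^{2/3}$. Dividing the bound of Lemma~\ref{lem:telescoping-pmd} by $(1-\gamma)$ keeps the initialization term $\frac{t_0-1}{c_\eta T}D_0^\dag$ and the approximation term $\frac{4C_{joint}^\dag}{\tau}\epsilon_{app}$ unchanged. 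Only the two averaged sums in $\mathcal{E}_{noise}(t)$ remain to be controlled.

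\textbf{PMD local error.} We have $\frac1T\sum_{t<T}\eta_t = \frac{c_\eta}{T}\sum_{t<T}\frac{1}{t+t_0}$. Since $t_0\ge1$, this is at most $\frac{c_\eta}{T}\sum_{k=1}^{T}\frac1k \le \frac{c_\eta}{T}(1+\log T)$, which gives the stated term.

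\textbf{Tracking error.} Plugging $Z_t\le v(t+t_0)^{-2/3}$ into the sum gives
\[
\sum_{t<T}(t+t_0)^{-2/3}\le \int_{t_0-1}^{T+t_0-1}x^{-2/3}\,dx\le 3(T+t_0)^{1/3}.
\]
Dividing by $T$ yields $3T^{-2/3}(1+t_0/T)^{1/3}$, and multiplying by $4B_\phi^2 v/\tau$ gives the constant $12$.

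\textbf{Order evaluation.} This is the main bookkeeping obstacle. Take $c_\eta=\Theta(\tau^{-1})$, $\zeta=\Theta(\kappa^{-1})$, and $t_0=\Theta(\tau^{-1})$; then $c_\alpha=\Theta(\kappa^{-1}\tau^{-2/3})$.
\begin{itemize}
\item Offset compatibility: $c_\alpha\kappa=\Theta(\tau^{-2/3})$, so $(2c_\alpha\kappa)^{3/2}=\Theta(\tau^{-1})$ and $(4/(3c_\alpha\kappa))^3=\Theta(\tau^2)$. Both are therefore compatible with $t_0=\Theta(\tau^{-1})$.
\item Noise envelope: from $G_{cr}^2=\Theta((1-\gamma)^{-2}\kappa^{-2})$ we get $c_\alpha^2G_{cr}^2=\Theta((1-\gamma)^{-2}\kappa^{-4}\tau^{-4/3})$.
\item Drift term: using $L_\theta=\Theta((1-\gamma)^{-2}\kappa^{-2})$ and $G_{ac}=\Theta((1-\gamma)^{-1}\kappa^{-1})$, the drift part $\frac{L_\theta^2c_\eta^2G_{ac}^2}{c_\alpha\kappa}=\Theta((1-\gamma)^{-6}\kappa^{-6}\tau^{-4/3})$ dominates, so $K=\Theta((1-\gamma)^{-6}\kappa^{-6}\tau^{-4/3})$.
\item Tracking coefficient: $2K/(c_\alpha\kappa)=\Theta((1-\gamma)^{-6}\kappa^{-6}\tau^{-2/3})$. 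The other candidate $t_0^{2/3}Z_0\le t_0^{2/3}\cdot 4B_\theta^2=\Theta((1-\gamma)^{-2}\kappa^{-2}\tau^{-2/3})$ is smaller. Hence $v=\Theta((1-\gamma)^{-6}\kappa^{-6}\tau^{-2/3})$, and the tracking term becomes $\frac{v}{(1-\gamma)\tau}T^{-2/3}=\mathcal{O}((1-\gamma)^{-7}\kappa^{-6}\tau^{-5/3}T^{-2/3})$.
\item Correction factor: $(1+t_0/T)^{1/3}\le 1+(t_0/T)^{1/3}$. The extra piece $T^{-2/3}(t_0/T)^{1/3}=t_0^{1/3}T^{-1}$ contributes $\mathcal{O}((1-\gamma)^{-7}\kappa^{-6}\tau^{-2}T^{-1})$.
\item Transient terms: the initialization term is $\frac{t_0}{c_\eta T}D_0^\dag$. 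Bounding $D_0^\dag$ via $\|\omega_0\|_2\le B_\theta/\tau_{max}$ gives $\mathcal{O}(1)$ in $\tau$ up to structural constants. The local error is $\tilde{\mathcal{O}}((1-\gamma)^{-3}\kappa^{-2}\tau^{-1}T^{-1})$. Both are absorbed into the transient $\tilde{\mathcal{O}}((1-\gamma)^{-7}\kappa^{-6}\tau^{-2}T^{-1})$ term.
\end{itemize}

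The approximation bias is immediate. The delicate point is confirming that the drift component of $K$ indeed dominates and that every offset condition holds simultaneously at $t_0=\Theta(\tau^{-1})$.
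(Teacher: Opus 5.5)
Your proposal is correct and follows the paper's proof essentially step for step. You telescope via Lemma~\ref{lem:telescoping-pmd}, insert the pointwise bound of Lemma~\ref{lem:decoupled-critic}, bound the harmonic and fractional sums (using $t_0\ge c_\eta\tau\ge 4/3$ to get $t_0\ge 1$), and then carry out the same order bookkeeping for $K$, $v$, the $(1+t_0/T)^{1/3}$ correction, and the transient terms under $c_\eta=\Theta(\tau^{-1})$, $\zeta=\Theta(\kappa^{-1})$, $t_0=\Theta(\tau^{-1})$.
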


\subsection{Last-Iterate Regularized Convergence}

While the telescoping sequence extracts an accelerated $\mathcal{O}(T^{-2/3})$ rate for the average policy, we now establish the convergence of the final output policy. Because the PMD framework bounds algorithmic progress via the forward KL divergence, we evaluate the last-iterate performance gap by passing the terminal divergence through Pinsker's inequality. To accomplish this, we establish the pointwise geometric contraction of the forward KL divergence. We first introduce a specialized variant of Chung's Lemma that handles fractional noise and an additive bias.

\begin{lem}[Chung's Lemma with Fractional Noise] \label{lem:fractional-chung-bias}
Let $\{x_t\}_{t=0}^\infty$ be a non-negative sequence satisfying the recursive bound:
$$ x_{t+1} \le \left(1 - \frac{c}{t+t_0}\right) x_t + \frac{A}{(t+t_0)^{5/3}} + \frac{B}{t+t_0}, \quad t \ge 0, $$
where $c \ge 1$, $A \ge 0$, $B \ge 0$, and the initial offset satisfies $t_0 \ge c$. Then, the sequence is bounded pointwise by:
$$ x_t \le \frac{u}{(t+t_0)^{2/3}} + \frac{B}{c}, \quad t \ge 0 ,$$
where the bounding coefficient is $u \triangleq \max\left( t_0^{2/3} x_0, \frac{3A}{3c-2} \right)$.
\end{lem}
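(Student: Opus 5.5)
The proof goes by induction on $t$. We show that $y_t \triangleq x_t - B/c$ satisfies $y_t \le u/(t+t_0)^{2/3}$, which is equivalent to the stated bound.

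First we remove the bias. Since $t_0 \ge c$, the multiplier $1 - c/(t+t_0)$ lies in $[0,1)$ for all $t\ge 0$. Subtracting $B/c$ from both sides of the recursion gives
\begin{align*}
x_{t+1} - \frac{B}{c} \le \Big(1-\frac{c}{t+t_0}\Big)\Big(x_t - \frac{B}{c}\Big) + \frac{A}{(t+t_0)^{5/3}},
\end{align*}
because $\big(1-\tfrac{c}{t+t_0}\big)\tfrac{B}{c} + \tfrac{B}{t+t_0} = \tfrac{B}{c}$. So $y_{t+1} \le (1-\tfrac{c}{t+t_0})\,y_t + A/(t+t_0)^{5/3}$. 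The multiplier is non-negative, so the inductive upper bound on $y_t$ carries over to the product even when $y_t$ is negative; no sign condition on $y_t$ is needed.

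The base case is immediate: $y_0 \le x_0 \le t_0^{2/3}x_0/t_0^{2/3} \le u/t_0^{2/3}$.

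For the inductive step, write $n = t+t_0$ and assume $y_t \le u n^{-2/3}$. Then
\begin{align*}
y_{t+1} \le \frac{u}{n^{2/3}} - \frac{cu}{n^{5/3}} + \frac{A}{n^{5/3}},
\end{align*}
and we need the right-hand side to be at most $u(n+1)^{-2/3}$. By convexity of $x\mapsto x^{-2/3}$, we have $(n+1)^{-2/3} \ge n^{-2/3} - \tfrac{2}{3}n^{-5/3}$. It therefore suffices that $-cu + A \le -\tfrac{2}{3}u$, that is, $u(c - \tfrac{2}{3}) \ge A$, that is, $u \ge \tfrac{3A}{3c-2}$. This holds by the definition of $u$, and $3c-2>0$ since $c\ge 1$.

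This tangent-line inequality is the only delicate point, and the direction of the convexity bound is what makes it work: the true step $u(n+1)^{-2/3}$ is at least the linearized one. Putting the steps together gives $x_t \le u/(t+t_0)^{2/3} + B/c$ for all $t\ge 0$.
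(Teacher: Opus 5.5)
Your proof is correct and follows the paper's argument: induction on $t$, using the tangent-line bound $(t+t_0+1)^{-2/3} \ge (t+t_0)^{-2/3} - \tfrac{2}{3}(t+t_0)^{-5/3}$ to reduce the step to $u \ge \frac{3A}{3c-2}$. The only cosmetic difference is that you remove the bias up front via $y_t = x_t - B/c$, whereas the paper keeps $B/c$ in the inductive hypothesis and cancels it during the step with the same identity $\bigl(1-\frac{c}{t+t_0}\bigr)\frac{B}{c} + \frac{B}{t+t_0} = \frac{B}{c}$.
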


Next, we derive the pointwise contraction of the forward KL divergence. By isolating the divergence within the single-step PMD Actor Progress Bound and substituting the Pointwise Critic Tracking Error, we construct a recurrence relation that aligns with our specialized Chung's Lemma to establish the convergence of the forward KL divergence.

\begin{lem}[Last-Iterate Forward KL Convergence] \label{lem:last-iterate-dt}
Under the identical assumptions and step-size conditions of Theorem~\ref{thm:reg-conv-l2}, the expected state-averaged forward KL divergence of the final iterate is bounded by:
$$ \mathbb{E}[D_T^\dag] \le \frac{u}{(T+t_0)^{2/3}} + \frac{16 C_{joint}^\dag}{3\tau^2} \epsilon_{app}, $$
where the bounding coefficient is $u \triangleq \max\left(t_0^{2/3} D_0^\dag, \frac{3A}{3c-2}\right)$, with $c \triangleq \frac{3\tau}{4}c_\eta$ and $A \triangleq \frac{c_\eta^2 G_{ac}^2 B_\phi^2}{2} + \frac{4 c_\eta B_\phi^2 v}{\tau}$.
By choosing the step-size constants $c_\eta = \Theta(\tau^{-1})$ and $\zeta = \Theta(\kappa^{-1})$ and the initial offset $t_0 = \Theta(\tau^{-1})$ subject to the stated conditions, the asymptotic convergence rate is:
$$ \mathbb{E}[D_T^\dag] \le \mathcal{O}\left( \frac{1}{(1-\gamma)^6 \kappa^6 \tau^{8/3}} T^{-2/3} \right) + \mathcal{O}\left( \frac{\epsilon_{app}}{\tau^2} \right). $$
\end{lem}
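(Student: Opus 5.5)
The plan is to rearrange the single-step PMD Actor Progress Bound (Lemma~\ref{lem:pmd-telescoping-l2}) into a recurrence for $x_t \triangleq \mathbb{E}[D_t^\dag]$. Since $\text{Gap}_t^\dag \ge 0$, the left-hand side of that lemma is nonnegative. Multiplying through by $\eta_t$ and isolating $\mathbb{E}[D_{t+1}^\dag]$ gives
\begin{align*}
\mathbb{E}[D_{t+1}^\dag] \le \Big(1 - \tfrac{3\tau}{4}\eta_t\Big)\mathbb{E}[D_t^\dag] + \frac{\eta_t^2}{2} G_{ac}^2 B_\phi^2 + \frac{4 B_\phi^2 \eta_t}{\tau} Z_t + \frac{4 C_{joint}^\dag \eta_t}{\tau}\epsilon_{app}.
\end{align*}
With $\eta_t = c_\eta/(t+t_0)$ the contraction factor becomes $1 - c/(t+t_0)$ with $c = \frac{3\tau}{4}c_\eta$. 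The condition $c_\eta \ge \frac{4}{3\tau}$ gives $c\ge1$, and $t_0 \ge c_\eta\tau \ge c$ gives the offset requirement of Lemma~\ref{lem:fractional-chung-bias}.

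Next I substitute the Pointwise Critic Tracking Error (Lemma~\ref{lem:decoupled-critic}), $Z_t \le v/(t+t_0)^{2/3}$. Its hypotheses are implied by those of Theorem~\ref{thm:reg-conv-l2}. The tracking term then becomes $\frac{4 c_\eta B_\phi^2 v}{\tau}(t+t_0)^{-5/3}$. The PMD local term is $\frac{c_\eta^2 G_{ac}^2 B_\phi^2}{2}(t+t_0)^{-2}$, which I bound by $(t+t_0)^{-5/3}$ times the same coefficient because $t+t_0\ge1$. Together these give exactly $A/(t+t_0)^{5/3}$ with the stated $A$. The bias term is $B/(t+t_0)$ with $B = 4C_{joint}^\dag c_\eta\epsilon_{app}/\tau$. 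Applying Lemma~\ref{lem:fractional-chung-bias} yields $\mathbb{E}[D_T^\dag] \le u/(T+t_0)^{2/3} + B/c$, and $B/c = \frac{4C^\dag_{joint}c_\eta\epsilon_{app}/\tau}{3\tau c_\eta/4} = \frac{16 C_{joint}^\dag}{3\tau^2}\epsilon_{app}$, as claimed.

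For the asymptotic order, take $c_\eta=\Theta(\tau^{-1})$, so that $c=\Theta(1)$ and $3c-2=\Theta(1)$, together with $t_0=\Theta(\tau^{-1})$. I expect the main obstacle to be tracking the order of $v$ from Lemma~\ref{lem:decoupled-critic}. With $c_\alpha=\zeta c_\eta^{2/3}=\Theta(\kappa^{-1}\tau^{-2/3})$, $G_{cr}^2=\Theta((1-\gamma)^{-2}\kappa^{-2})$, $L_\theta=\Theta((1-\gamma)^{-2}\kappa^{-2})$ and $G_{ac}=\Theta((1-\gamma)^{-1}\kappa^{-1})$, the envelope is $K=\Theta((1-\gamma)^{-6}\kappa^{-5}\tau^{-4/3})$ (the drift term dominates in $(1-\gamma)$). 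Hence $2K/(c_\alpha\kappa)=\Theta((1-\gamma)^{-6}\kappa^{-5}\tau^{-2/3})$, and I will need to check that $t_0^{2/3}Z_0$ is of lower order. The tracking part of $A$ is then $c_\eta v/\tau = \Theta((1-\gamma)^{-6}\kappa^{-5}\tau^{-8/3})$. The local part of $A$ is $\Theta((1-\gamma)^{-2}\kappa^{-2}\tau^{-2})$, which is smaller. Finally, $t_0^{2/3}D_0^\dag=\mathcal{O}(\tau^{-2/3})$ times the initial divergence is also dominated, the $\kappa^{-5}$ is loosened to $\kappa^{-6}$, and this gives the stated $\mathcal{O}\big((1-\gamma)^{-6}\kappa^{-6}\tau^{-8/3}T^{-2/3}\big)$ rate.
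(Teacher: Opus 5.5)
Your argument is the paper's proof. Like the paper, you drop the nonnegative gap in Lemma~\ref{lem:pmd-telescoping-l2}, multiply by $\eta_t$, insert $Z_t\le v(t+t_0)^{-2/3}$, absorb the $(t+t_0)^{-2}$ term into $(t+t_0)^{-5/3}$, apply Lemma~\ref{lem:fractional-chung-bias}, and compute $B/c$ the same way.

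One slip in the order bookkeeping: $c_\alpha\kappa=\zeta\kappa\,c_\eta^{2/3}=\Theta(\tau^{-2/3})$ carries no $\kappa$. The drift part of $K$ is therefore $L_\theta^2c_\eta^2G_{ac}^2/(c_\alpha\kappa)=\Theta((1-\gamma)^{-6}\kappa^{-6}\tau^{-4/3})$, not $\kappa^{-5}$. So $v$ and $A$ genuinely carry $\kappa^{-6}$, and the final $\kappa^{-6}$ is the actual order of this analysis rather than a loosening.

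The two domination checks you left open are closed as in the paper. Both $\theta_1$ and $\theta^*_\tau(\omega_0)$ lie in the $B_\theta$-ball, so $Z_0\le(2B_\theta)^2$. Also $D_0^\dag\le 2B_\theta B_\phi/\tau_{max}+\log|\mathcal{A}|$. These give $t_0^{2/3}Z_0=\mathcal{O}(\tau^{-2/3}\kappa^{-2}(1-\gamma)^{-2})$ and $t_0^{2/3}D_0^\dag=\mathcal{O}(\tau^{-2/3}\kappa^{-1}(1-\gamma)^{-1})$, both dominated.
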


With the geometric contraction of the forward KL divergence established, we translate this divergence back into the global performance gap. While the Exact Suboptimality Identity (Lemma~\ref{lem:optim-identity}) captures the suboptimality via the \textit{reverse} KL divergence under the training state distribution $d^{\pi_t}$, our global convergence analysis tracks the \textit{forward} KL divergence under the optimal state distribution $d^*_\tau$ to manage distribution shift.  To bridge this analytical divide, we build on Lemma~\ref{lem:sub-decomp} and establish an upper bound on the performance gap through the forward KL divergence.

\begin{lem}[Forward KL Bound on Regularized Suboptimality] \label{lem:pinsker-gap}
Under Assumptions~\ref{ass:bounded-features} and \ref{ass:pd-uncentered},
if the actor step size satisfies $\eta_t \le 1/\tau$, then
the global regularized performance gap is bounded by the state-averaged forward KL divergence under $d^*_\tau$:
$$J_\tau(\pi^*_\tau) - J_\tau(\pi_t) \le \frac{\sqrt{2} M_\tau}{1-\gamma} \sqrt{\mathbb{E}_{s \sim d^*_\tau}[D_{KL}(\pi^*_\tau(\cdot|s) \| \pi_t(\cdot|s))]},$$
where the capacity bound is $M_\tau \triangleq V_{max} + 2 B_\theta B_\phi + \tau \log |\mathcal{A}|$.
Consequently, for any temperature $\tau \in (0,\tau_{max}]$,
$$J_\tau(\pi^*_\tau) - J_\tau(\pi_t) \le \frac{\sqrt{2} M_{max}}{1-\gamma} \sqrt{\mathbb{E}_{s \sim d^*_\tau}[D_{KL}(\pi^*_\tau(\cdot|s) \| \pi_t(\cdot|s))]},$$
where $M_{max} \triangleq V_{max} + 2 B_\theta B_\phi + \tau_{max} \log |\mathcal{A}|$.
\end{lem}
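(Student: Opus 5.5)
We start from the Regularized Performance Difference (Lemma~\ref{lem:pdl}) with comparator $\tilde\pi=\pi^*_\tau$ and base policy $\pi_t$, which expresses $(1-\gamma)(J_\tau(\pi^*_\tau)-J_\tau(\pi_t))$ as an expectation over $s\sim d^*_\tau$ of
$$\sum_a \pi^*_\tau(a|s)\big(Q_\tau^{\pi_t}(s,a)-\tau\log\pi^*_\tau(a|s)\big)-V_\tau^{\pi_t}(s).$$
Using the soft Bellman relation $V_\tau^{\pi_t}(s)=\sum_a\pi_t(a|s)\big(Q_\tau^{\pi_t}(s,a)-\tau\log\pi_t(a|s)\big)$, the integrand becomes $\langle f_t(s,\cdot),\pi^*_\tau-\pi_t\rangle_{\mathcal{A}}-\tau D_{KL}(\pi^*_\tau\|\pi_t)$, where $f_t(s,a)\triangleq Q_\tau^{\pi_t}(s,a)-\tau\log\pi_t(a|s)$. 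The KL term is nonpositive and can simply be dropped. This avoids the decompositions of Lemma~\ref{lem:sub-decomp}, so no approximation error or critic term appears. That matches the statement, which only assumes Assumptions~\ref{ass:bounded-features} and \ref{ass:pd-uncentered}, used solely to bound $\omega_t$.

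Next we bound the inner product by Hölder and Pinsker: $\langle f_t,\pi^*_\tau-\pi_t\rangle_{\mathcal{A}}\le \operatorname{osc}_a f_t(s,\cdot)\cdot\|\pi^*_\tau-\pi_t\|_{TV}$. Since centering $f_t$ by any constant (in $a$) is free, it suffices to bound $\sup_a|f_t(s,a)-c(s)|$ by $M_\tau$ and use $\|\cdot\|_1\le\sqrt{2D_{KL}}$. For the constant we take $c(s)=\tau\log Z_{\omega_t}(s)$, so $-\tau\log\pi_t(a|s)-c(s)=-\tau\omega_t^\top\phi(s,a)$. By Lemma~\ref{lem:structural-bounds}(ii) (requiring $\eta_t\le 1/\tau$ and Assumption~\ref{ass:pd-uncentered}), $\|\omega_t\|_2\le B_\theta/\tau$, hence $|\tau\omega_t^\top\phi|\le B_\theta B_\phi$. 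Together with $0\le Q_\tau^{\pi_t}\le V_{max}$ from Lemma~\ref{lem:bounded-values}, this gives, after a mild recentering, $\sup_a|f_t-c'|\le V_{max}+B_\theta B_\phi$, which is already within $M_\tau$.

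The main point requiring care is how the total-variation constant is arranged. With $|\langle g,p-q\rangle|\le \|g\|_\infty\|p-q\|_1$ and $\|p-q\|_1\le\sqrt{2D_{KL}(p\|q)}$, we get a per-state bound $M_\tau\sqrt{2D_{KL}(\pi^*_\tau\|\pi_t)}$. The generous constant $M_\tau=V_{max}+2B_\theta B_\phi+\tau\log|\mathcal{A}|$ absorbs any crude centering, for instance bounding $|\tau\omega_t^\top\phi|$ and $|\tau\log Z|$ relative to $\tau\log|\mathcal{A}|$ separately, so we may proceed crudely. Finally, we take the expectation over $d^*_\tau$ and apply Jensen, $\mathbb{E}\sqrt{X}\le\sqrt{\mathbb{E}X}$, then divide by $1-\gamma$. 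The uniform version follows from $M_\tau\le M_{max}$ for $\tau\le\tau_{max}$.
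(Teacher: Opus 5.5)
Your proposal is correct and follows essentially the same route as the paper: the KL-extraction identity (step (i) of the proof of Lemma~\ref{lem:sub-decomp}, which you rederive directly from Lemma~\ref{lem:pdl} and the soft Bellman relation), then dropping $-\tau D_{KL}$, H\"older plus Pinsker, the bound $\|\omega_t\|_2\le B_\theta/\tau$ from Lemma~\ref{lem:structural-bounds}(ii), and finally Jensen. The one difference is that you center by $\tau\log Z_t(s)$ using $\sum_a(\pi^*_\tau-\pi_t)=0$. This spares you from bounding $|\tau\log Z_t(s)|$ separately, as the paper does, and gives the tighter constant $V_{max}+B_\theta B_\phi\le M_\tau$.
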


By applying this Forward KL Bound to the terminal divergence in Lemma~\ref{lem:last-iterate-dt}, we extract a $\mathcal{O}(T^{-1/3})$ last-iterate convergence rate for the global regularized performance gap.

\begin{thm}[Last-Iterate Regularized Convergence] \label{thm:reg-last-iterate-gap}
Under the identical assumptions and step-size conditions of Theorem~\ref{thm:reg-conv-l2}, the expected last-iterate regularized performance gap is bounded by:
$$ \mathbb{E}[\text{Gap}_T^\dag] \le \frac{\sqrt{2 u} M_{max}}{1-\gamma} \frac{1}{(T+t_0)^{1/3}} + \frac{4 M_{max} \sqrt{2 C_{joint}^\dag}}{\sqrt{3} \tau (1-\gamma)} \sqrt{\epsilon_{app}}, $$
where $M_{max}$ is defined in Lemma~\ref{lem:pinsker-gap}, and $u$ is the structural constant defined in Lemma~\ref{lem:last-iterate-dt}.
By choosing the step-size parameters $c_\eta = \Theta(\tau^{-1})$ and $\zeta = \Theta(\kappa^{-1})$ and the initial offset $t_0 = \Theta(\tau^{-1})$ subject to the stated conditions, the asymptotic convergence rate is:
$$ \mathbb{E}[\text{Gap}_T^\dag] \le \mathcal{O}\left(\frac{1}{(1-\gamma)^5 \kappa^4 \tau^{4/3}} T^{-1/3}\right) + \mathcal{O}\left(\frac{1}{(1-\gamma)^2 \kappa \tau} \sqrt{\epsilon_{app}}\right). $$
\end{thm}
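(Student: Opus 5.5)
The plan is to combine Lemma~\ref{lem:pinsker-gap} with Lemma~\ref{lem:last-iterate-dt}, using Jensen's inequality to move the expectation inside the square root. Lemma~\ref{lem:pinsker-gap} holds pathwise for $\pi_T$, because the step-size condition $\eta_t\le 1/\tau$ is implied by $t_0\ge c_\eta\tau$. So almost surely
$$\text{Gap}_T^\dag \le \frac{\sqrt{2}M_{max}}{1-\gamma}\sqrt{D_T^\dag},$$
where $D_T^\dag$ is evaluated for the random iterate $\omega_T$. Taking the algorithmic expectation and using concavity of the square root gives $\mathbb{E}[\text{Gap}_T^\dag]\le \frac{\sqrt{2}M_{max}}{1-\gamma}\sqrt{\mathbb{E}[D_T^\dag]}$.

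Next, I will substitute the bound from Lemma~\ref{lem:last-iterate-dt},
$$\mathbb{E}[D_T^\dag]\le \frac{u}{(T+t_0)^{2/3}}+\frac{16C_{joint}^\dag}{3\tau^2}\epsilon_{app},$$
and split the root with $\sqrt{x+y}\le\sqrt{x}+\sqrt{y}$. The first piece gives $\frac{\sqrt{2u}M_{max}}{(1-\gamma)(T+t_0)^{1/3}}$. The second gives
$$\frac{\sqrt2 M_{max}}{1-\gamma}\cdot\frac{4\sqrt{C_{joint}^\dag}}{\sqrt3\,\tau}\sqrt{\epsilon_{app}}=\frac{4M_{max}\sqrt{2C_{joint}^\dag}}{\sqrt3\,\tau(1-\gamma)}\sqrt{\epsilon_{app}},$$
which matches the stated constant exactly.

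For the asymptotic rate, I will track orders. Lemma~\ref{lem:last-iterate-dt} gives $u=\mathcal{O}((1-\gamma)^{-6}\kappa^{-6}\tau^{-8/3})$, so $\sqrt u=\mathcal{O}((1-\gamma)^{-3}\kappa^{-3}\tau^{-4/3})$. For $M_{max}=V_{max}+2B_\theta B_\phi+\tau_{max}\log|\mathcal{A}|$, the bound $B_\theta=\Theta((1-\gamma)^{-1}\kappa^{-1})$ gives $M_{max}=\mathcal{O}((1-\gamma)^{-1}\kappa^{-1})$. Multiplying by $(1-\gamma)^{-1}$ yields $(1-\gamma)^{-5}\kappa^{-4}\tau^{-4/3}T^{-1/3}$, using $(T+t_0)^{-1/3}\le T^{-1/3}$. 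The bias term similarly becomes $\mathcal{O}((1-\gamma)^{-2}\kappa^{-1}\tau^{-1}\sqrt{\epsilon_{app}})$, with $C_{joint}^\dag$ absorbed as a problem constant.

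The only delicate point is justifying that the pathwise application is valid. This requires the iterate bound of Lemma~\ref{lem:structural-bounds}, which underlies the capacity $M_\tau$, to hold at every step. That bound in turn needs $\eta_t\le1/\tau$ for all $t$, which is guaranteed since $\eta_t\le c_\eta/t_0\le 1/\tau$. Everything else is routine order bookkeeping.
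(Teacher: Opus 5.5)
Your proposal is correct and follows the paper's proof: apply Lemma~\ref{lem:pinsker-gap} pathwise to the realized $\pi_T$, use Jensen's inequality to bound $\mathbb{E}[\sqrt{D_T^\dag}]$ by $\sqrt{\mathbb{E}[D_T^\dag]}$, substitute the terminal bound from Lemma~\ref{lem:last-iterate-dt}, split with $\sqrt{x+y}\le\sqrt{x}+\sqrt{y}$, and then track orders through $\sqrt{u}=\Theta((1-\gamma)^{-3}\kappa^{-3}\tau^{-4/3})$ and $M_{max}=\Theta((1-\gamma)^{-1}\kappa^{-1})$. Your constant check and your justification of $\eta_t\le 1/\tau$ via $t_0\ge c_\eta\tau$ also match the paper.
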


\begin{rem}[Methodological Divergence of Average and Last-Iterate Bounds] \label{rem:methodological-divergence}
The single PMD inequality derived in Lemma~\ref{lem:pmd-telescoping-l2} serves as the common foundation for both our average-iterate and last-iterate convergence guarantees, but algebraic constraints force the two metrics to take different paths. The core expected inequality is structured as:
$$ (1-\gamma) \mathbb{E}[\text{Gap}_t^\dag] + \frac{3\tau}{4} \mathbb{E}[D_t^\dag] \le \frac{\mathbb{E}[D_t^\dag] - \mathbb{E}[D_{t+1}^\dag]}{\eta_t} + \mathcal{E}_{noise}(t), $$
where the noise envelope $\mathcal{E}_{noise}(t) = \mathcal{O}(\eta_t) + \mathcal{O}(Z_t/\tau) + \mathcal{O}(\epsilon_{app}/\tau)$ encapsulates the PMD local error, the critic tracking error, and the approximation bias.

For the \textit{average-iterate bound}, telescoping this inequality over $T$ steps utilizing the diminishing schedule $\eta_t = \frac{c_\eta}{t+t_0}$ neutralizes the step-size division by multiplying through by $(t+t_0)$ as shown in the proof. Because the uncancelled boundary term from the telescoping resolves to the initial divergence scaled by the iteration budget ($\frac{t_0-1}{c_\eta T} D_0^\dag$), we balance this transient divergence penalty against the noise, unlocking the accelerated $\mathcal{O}(T^{-2/3})$ regularized rate.

For the \textit{last-iterate bound}, however, direct extraction is impossible. Without telescoping, the terminal performance gap $\mathbb{E}[\text{Gap}_T^\dag]$ would be bounded by $\frac{\mathbb{E}[D_T^\dag]}{\eta_T}$. As the policy sequence converges, the divergence $\mathbb{E}[D_T^\dag]$ only decays to a noise floor of $\mathcal{O}(T^{-2/3}) + \mathcal{O}(\epsilon_{app})$ as shown in Lemma~\ref{lem:last-iterate-dt}. Dividing this floor by the decaying step size $\eta_T = \mathcal{O}(T^{-1})$ is catastrophic: the amplified noise contains components scaling as $T^{1/3}$ and $T \epsilon_{app}$, blowing up to infinity.

To achieve pointwise convergence, we abandon the $\frac{1}{\eta_t}$ translation. Instead, we first prove that the divergence $\mathbb{E}[D_T^\dag]$ undergoes a geometric contraction to the noise floor via Chung's Lemma ($\mathbb{E}[D_T^\dag] \le \mathcal{O}(T^{-2/3}) + \mathcal{O}(\epsilon_{app})$), and subsequently translate this divergence back to the value gap via Pinsker's inequality ($\mathbb{E}[\text{Gap}_T^\dag] \le \mathcal{O}(1) \sqrt{\mathbb{E}[D_T^\dag]}$). This sidesteps the $\eta_T^{-1}$ amplification, but taking the square root slows the last-iterate regularized rate to $\mathcal{O}(T^{-1/3})$.
\end{rem}

\section{Unregularized Global Convergence (Deterministic Regime)}  \label{sec:unregularized-conv-deterministic}

We show that Algorithm~\ref{alg:uncentered-nac} achieves an $\tilde{\mathcal{O}}(T^{-2/3})$ average-iterate and an $\tilde{\mathcal{O}}(T^{-1/3})$ last-iterate convergence rate for the unregularized performance gap. This is accomplished by applying the exponential translation mechanism alongside a two-stage temperature schedule, which balances the regularized optimization error against an exponentially decaying target tail to successfully bypass the standard linear entropy bottleneck.

\subsection{Average-Iterate Unregularized Convergence}

We now map the global regularized bounds to the unregularized objective. By utilizing the Universal Exponential Translation Bound (Corollary~\ref{cor:universal-suboptimal-mass}), we absorb the policy entropy into an exponentially small translation tail, allowing us to tune the fixed temperature $\tau$ to decrease much more slowly. By optimally balancing the regularized optimization error against this exponential term, we bypass the standard linear entropy bottleneck to extract an accelerated $\tilde{\mathcal{O}}(T^{-2/3})$ average-iterate convergence rate for the unregularized problem.

\begin{thm}[Unregularized Average-Iterate Convergence] \label{thm:PMD-unreg-average}
For Algorithm~\ref{alg:uncentered-nac}, suppose that Assumptions~\ref{ass:bounded-features}--\ref{ass:action-gap} and \ref{ass:density-ratios} hold.
Define the two-stage temperature:
$$ \tau_T \triangleq \max\left( \frac{\Delta}{2 \log(C_\gamma T^{2/3})}, \frac{\Delta}{2 \log(C_\gamma(1+\epsilon_{app}^{-1}))} \right). $$
If $T$ is sufficiently large and $\epsilon_{app}$ is sufficiently small such that $\tau_T \le \tau_{max}$, then by choosing the step-size parameters as in Theorem~\ref{thm:reg-conv-l2} evaluated at $\tau = \tau_T$ subject to the stated conditions, the expected average global unregularized performance gap satisfies the convergence rate:
\begin{align*}
\frac{1}{T} \sum_{t=0}^{T-1} \mathbb{E} \big[ J_0(\pi^*_0) - J_0(\pi_t) \big]
&\le \mathcal{O}\left( \frac{(1-\gamma)^{-5/3} + (\log T)^{5/3}}{(1-\gamma)^8 \kappa^6 \Delta^{8/3} T^{2/3}} \right) \\
&\quad + \mathcal{O}\left( \frac{(1-\gamma)^{-1} + \log(1+\epsilon_{app}^{-1})}{(1-\gamma)^2 \Delta^2} \epsilon_{app} \right) \\
&= \tilde{\mathcal{O}}\left( T^{-2/3} \right) + \tilde{\mathcal{O}}(\epsilon_{app}).
\end{align*}
In the case of $\epsilon_{app}=0$, we interpret $\tau_T = \frac{\Delta}{2\log(C_\gamma T^{2/3})}$ and $\epsilon_{app}\log(1+\epsilon_{app}^{-1})=0$.
\end{thm}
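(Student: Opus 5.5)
We combine Corollary~\ref{cor:universal-suboptimal-mass} with Theorem~\ref{thm:reg-conv-l2}, evaluated at the two-stage temperature $\tau=\tau_T$. Corollary~\ref{cor:universal-suboptimal-mass} holds pointwise for every $t$ and every $\tau>0$. Taking expectations and averaging over $t=0,\dots,T-1$ gives
\[
\frac{1}{T}\sum_{t=0}^{T-1}\mathbb{E}\big[J_0(\pi^*_0)-J_0(\pi_t)\big]\le \frac{4A_{max}}{\Delta}\cdot\frac{1}{T}\sum_{t=0}^{T-1}\mathbb{E}[\text{Gap}_t^\dag]+C_{tail}\exp\Big(-\frac{\Delta}{2\tau_T}\Big).
\]
Since $\tau_T\le\tau_{max}$ by hypothesis, the assumptions of Theorem~\ref{thm:reg-conv-l2} hold at $\tau=\tau_T$, with step-size constants $c_\eta=\Theta(\tau_T^{-1})$, $\zeta=\Theta(\kappa^{-1})$ and $t_0=\Theta(\tau_T^{-1})$. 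The average regularized gap is then bounded by the primary tracking term $\mathcal{O}((1-\gamma)^{-7}\kappa^{-6}\tau_T^{-5/3}T^{-2/3})$, the transient term $\tilde{\mathcal{O}}((1-\gamma)^{-7}\kappa^{-6}\tau_T^{-2}T^{-1})$, and the bias term $\mathcal{O}(\epsilon_{app}/((1-\gamma)\tau_T))$. We multiply these by $4A_{max}/\Delta\le 4/((1-\gamma)\Delta)$.

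\textbf{Tail term.} Write $\tau_T=\max(\tau_1,\tau_2)$, where $\tau_1=\Delta/(2\log(C_\gamma T^{2/3}))$ and $\tau_2=\Delta/(2\log(C_\gamma(1+\epsilon_{app}^{-1})))$. Because $\tau_T\ge\tau_1$ and $\tau_T\ge\tau_2$, we get $\exp(-\Delta/(2\tau_T))\le\min\{1/(C_\gamma T^{2/3}),\,1/(C_\gamma(1+\epsilon_{app}^{-1}))\}$. Multiplying by $C_{tail}=A_{max}C_\gamma/(1-\gamma)$ cancels $C_\gamma$. The tail is therefore at most $\frac{1}{(1-\gamma)^2}\min\{T^{-2/3},\epsilon_{app}\}$, using $1/(1+\epsilon_{app}^{-1})\le\epsilon_{app}$. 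This is absorbed into either of the two claimed terms.

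\textbf{Regularized terms.} Since $\tau_T\le\tau_1$ and $\tau_T\le\tau_2$ fail in general, we use the reciprocal instead. We have $\tau_T^{-1}=\min(\tau_1^{-1},\tau_2^{-1})$, so $\tau_T^{-1}\le\tau_1^{-1}=2\log(C_\gamma T^{2/3})/\Delta$ and $\tau_T^{-1}\le \tau_2^{-1}$. Also $\log C_\gamma=\mathcal{O}((1-\gamma)^{-1})$, so $\log(C_\gamma T^{2/3})=\mathcal{O}((1-\gamma)^{-1}+\log T)$.
\begin{itemize}
\item Tracking term: bound $\tau_T^{-5/3}\lesssim((1-\gamma)^{-5/3}+(\log T)^{5/3})\Delta^{-5/3}$. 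Together with the prefactor $(1-\gamma)^{-1}\Delta^{-1}$, this gives the first displayed term $\frac{(1-\gamma)^{-5/3}+(\log T)^{5/3}}{(1-\gamma)^8\kappa^6\Delta^{8/3}T^{2/3}}$.
\item Transient term: it is of order $\tau_T^{-2}T^{-1}$, which is polylog$\cdot T^{-1}$ and hence dominated by the $T^{-2/3}$ term for large $T$.
\item Bias term: use $\tau_T^{-1}\le\tau_2^{-1}=2\log(C_\gamma(1+\epsilon_{app}^{-1}))/\Delta=\mathcal{O}(((1-\gamma)^{-1}+\log(1+\epsilon_{app}^{-1}))/\Delta)$. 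This yields $\mathcal{O}\big(\frac{(1-\gamma)^{-1}+\log(1+\epsilon_{app}^{-1})}{(1-\gamma)^2\Delta^2}\epsilon_{app}\big)$.
\end{itemize}
When $\epsilon_{app}=0$, the bias and second tail terms vanish and only $\tau_1$ is used, consistent with the stated interpretation.

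\textbf{Main obstacle.} The delicate step is ensuring that the constants hidden in Theorem~\ref{thm:reg-conv-l2} depend on $\tau$ only through the displayed powers. In particular, $D_0^\dag$ and $v$ must not introduce additional $\tau$-dependence. The critic-tracking coefficient $v$ involves $t_0^{2/3}Z_0$ with $t_0=\Theta(\tau^{-1})$, and the PMD initialization term is $(t_0-1)D_0^\dag/(c_\eta T)=\mathcal{O}(D_0^\dag/T)$. For $\omega_0=0$ we have $D_0^\dag\le\log|\mathcal{A}|$, so this term is $\mathcal{O}(T^{-1})$. I would re-verify that the order evaluation in Theorem~\ref{thm:reg-conv-l2} already tracked these factors, so that substituting $\tau=\tau_T$ introduces only logarithmic factors, rather than re-deriving them.
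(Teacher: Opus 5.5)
Your route is the paper's: apply Corollary~\ref{cor:universal-suboptimal-mass} pointwise, average over $t$, substitute Theorem~\ref{thm:reg-conv-l2} at $\tau=\tau_T$, and bound each term. For the regularized terms, writing $\tau_T^{-1}=\min(\tau_1^{-1},\tau_2^{-1})$ is a valid way to avoid the paper's case split. You use $\tau_1^{-1}$ for the tracking and transient terms and $\tau_2^{-1}$ for the bias term. The paper splits into a sample-limited and an approximation-limited regime, but in each regime it uses exactly the inequality $\tau_T^{-1}\le\tau_i^{-1}$ for the inactive branch. So your argument has the same content, and your treatment of the regularized terms is correct.

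The tail step, however, has the monotonicity backwards, so the inequality you state is false. The factor $\exp(-\Delta/(2\tau))$ is \emph{increasing} in $\tau$. From $\tau_T\ge\tau_1$ you get $\Delta/(2\tau_T)\le\log(C_\gamma T^{2/3})$, hence $\exp(-\Delta/(2\tau_T))\ge 1/(C_\gamma T^{2/3})$, not $\le$. Since $\tau_T$ is the larger of $\tau_1,\tau_2$, in fact
\[
\exp\Big(-\frac{\Delta}{2\tau_T}\Big)=\max\Big\{\frac{1}{C_\gamma T^{2/3}},\ \frac{1}{C_\gamma(1+\epsilon_{app}^{-1})}\Big\}.
\]
The tail is therefore $\frac{1}{(1-\gamma)^2}\max\{T^{-2/3},\,\epsilon_{app}/(1+\epsilon_{app})\}$, not the minimum. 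This is the real cost of the two-stage schedule. Raising $\tau$ to the floor $\tau_2$ stops the $\tau^{-1}$ growth of the bias, but it makes the tail as large as order $\epsilon_{app}$, which is the paper's approximation-limited regime.

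The fix is immediate. Bound the maximum by the sum $\frac{1}{(1-\gamma)^2}(T^{-2/3}+\epsilon_{app})$, then absorb the $T^{-2/3}$ piece into the first displayed term and the $\epsilon_{app}$ piece into the second. Your phrase ``absorbed into either of the two claimed terms'' should become ``absorbed into their sum.'' With that correction the proof goes through.

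Your ``main obstacle'' is already settled by the stated rate in Theorem~\ref{thm:reg-conv-l2}, which makes the $\tau$-dependence explicit. That proof also bounds $D_0^\dag\le 2B_\theta B_\phi/\tau_{max}+\log|\mathcal{A}|$ for any admissible $\omega_0$, so you do not need $\omega_0=0$.
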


\begin{rem}[Unification with Double-Loop Architectures] \label{rem:double-loop-unification}
Balancing the average-iterate regularized convergence rate established in Theorem~\ref{thm:reg-conv-l2} with the standard linear entropy penalty leads to an $\mathcal{O}(T^{-1/4})$ average-iterate unregularized convergence rate for our single-loop algorithm. Interestingly, the same $\mathcal{O}(T_{total}^{-1/4})$ rate was established by \citet{agarwal2021theory} for a double-loop Q-NPG algorithm (see the extended discussion in Appendix~\ref{app:Agarwal-et-al}). However, their algorithm utilizes a \textit{cold-start} inner critic, which requires a specific balancing of the outer-loop iterations $T$ and inner-loop iterations $N$ such that $N = T = T_{total}^{1/2}$ to achieve this convergence rate.

To reconcile our single-loop algorithm ($N=1$) with their double-loop configuration ($N=T_{total}^{1/2}$), we provide a unified analysis of a \textit{warm-start} double-loop architecture in Appendix~\ref{app:warm-start-double-loop}. We demonstrate that by initializing the inner critic with the parameter from the previous outer loop, the convergence rate is no longer restricted to a single loop split. Under the standard linear entropy penalty, the warm-start analysis yields a uniform $\tilde{\mathcal{O}}(T_{total}^{-1/4})$ convergence envelope across $N\in[1,\sqrt{T_{total}}]$, thereby connecting the single-loop and balanced double-loop endpoints. Under our Exponential Translation mechanism, the accelerated $\tilde{\mathcal{O}}(T_{total}^{-2/3})$ convergence rate remains invariant across the window $N\in[1,T_{total}^{2/9}]$.
\end{rem}

\subsection{Last-Iterate Unregularized Convergence}

We now extend the unregularized convergence bounds to the final output policy of Algorithm~\ref{alg:uncentered-nac}.
By applying the Exponential Translation mechanism to the terminal policy using the two-stage temperature schedule as in Theorem~\ref{thm:PMD-unreg-average}, we demonstrate an $\tilde{\mathcal{O}}(T^{-1/3})$ last-iterate convergence rate for the unregularized problem, bypassing the standard linear entropy penalty.

\begin{thm}[Unregularized Last-Iterate Convergence] \label{thm:PMD-unreg-last}
For Algorithm~\ref{alg:uncentered-nac}, suppose that Assumptions~\ref{ass:bounded-features}--\ref{ass:action-gap} and \ref{ass:density-ratios} hold.
Consider the same two-stage temperature as in Theorem~\ref{thm:PMD-unreg-average}:
$$ \tau_T \triangleq \max\left( \frac{\Delta}{2 \log(C_\gamma T^{2/3})}, \frac{\Delta}{2 \log(C_\gamma(1+\epsilon_{app}^{-1}))} \right). $$
If $T$ is sufficiently large and $\epsilon_{app}$ is sufficiently small such that $\tau_T \le \tau_{max}$, then by choosing the step-size parameters as in Theorem~\ref{thm:reg-last-iterate-gap} evaluated at $\tau = \tau_T$ subject to the stated conditions, the expected last-iterate unregularized performance gap satisfies the convergence rate:
\begin{align*}
\mathbb{E} \big[ J_0(\pi^*_0) - J_0(\pi_T) \big]
&\le \mathcal{O}\left( \frac{(1-\gamma)^{-4/3} + (\log T)^{4/3}}{(1-\gamma)^6 \kappa^4 \Delta^{7/3} T^{1/3}} \right)
 + \mathcal{O}\left( \frac{(1-\gamma)^{-1} + \log(1+\epsilon_{app}^{-1})}{(1-\gamma)^3 \kappa \Delta^2} \sqrt{\epsilon_{app}} \right) \\
&= \tilde{\mathcal{O}}\left( T^{-1/3} \right) + \tilde{\mathcal{O}}(\sqrt{\epsilon_{app}}).
\end{align*}
In the case of $\epsilon_{app}=0$, we interpret $\tau_T = \frac{\Delta}{2\log(C_\gamma T^{2/3})}$ and $\sqrt{\epsilon_{app}}\log(1+\epsilon_{app}^{-1})=0$.
\end{thm}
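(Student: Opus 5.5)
The plan is to combine the Universal Exponential Translation Bound (Corollary~\ref{cor:universal-suboptimal-mass}) with the last-iterate regularized bound of Theorem~\ref{thm:reg-last-iterate-gap}, evaluated at $\tau=\tau_T$. Corollary~\ref{cor:universal-suboptimal-mass} applied to $\pi_T$, followed by taking expectations, gives
$$\mathbb{E}\big[J_0(\pi^*_0)-J_0(\pi_T)\big]\le \frac{4A_{max}}{\Delta}\,\mathbb{E}[\text{Gap}_T^\dag]+C_{tail}\exp\Big(-\frac{\Delta}{2\tau_T}\Big).$$
For the first term, Theorem~\ref{thm:reg-last-iterate-gap} gives
$$\mathbb{E}[\text{Gap}_T^\dag]\le \mathcal{O}\big((1-\gamma)^{-5}\kappa^{-4}\tau_T^{-4/3}T^{-1/3}\big)+\mathcal{O}\big((1-\gamma)^{-2}\kappa^{-1}\tau_T^{-1}\sqrt{\epsilon_{app}}\big).$$
The step-size conditions ($c_\eta\ge 4/(3\tau)$, $t_0\ge c_\eta\tau$, etc.) are satisfiable at $\tau=\tau_T$ because $\tau_T\le\tau_{max}$. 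All structural constants ($B_\theta$, $G_{ac}$, $M_{max}$, $L_\theta$, $C^\dag_{joint}$) are uniform in $\tau\le\tau_{max}$, so the only $\tau$-dependence is the explicit one.

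Next, handle the tail. Because $\tau_T$ is a maximum, $\exp(-\Delta/(2\tau_T))$ is at most the minimum of $(C_\gamma T^{2/3})^{-1}$ and $(C_\gamma(1+\epsilon_{app}^{-1}))^{-1}$. Hence $C_{tail}\exp(-\Delta/(2\tau_T))\le \frac{A_{max}}{1-\gamma}\min\{T^{-2/3},\epsilon_{app}\}$. This is dominated by both the $T^{-1/3}$ and $\sqrt{\epsilon_{app}}$ terms, using $\epsilon_{app}\le\sqrt{\epsilon_{app}}$ for small $\epsilon_{app}$, or alternatively $\epsilon_{app}\le \sqrt{\epsilon_{app}}\,\mathcal{O}(1)$ from boundedness of $\epsilon_{app}$ by $V_{max}^2$.

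For the inverse powers of $\tau_T$, use the bound
$$\tau_T^{-1}\le \frac{2}{\Delta}\big[\log(C_\gamma T^{2/3})+\log(C_\gamma(1+\epsilon_{app}^{-1}))\big].$$
Split into cases according to which branch attains the max, so that each error term is paired with its own logarithm. In the optimization term, $\tau_T^{-4/3}\le \mathcal{O}(\Delta^{-4/3}(\log C_\gamma+\log T)^{4/3})$ when the first branch is active. Using $\log C_\gamma=\Theta((1-\gamma)^{-1})$ and $(a+b)^{4/3}\lesssim a^{4/3}+b^{4/3}$, this gives $\Delta^{-4/3}\big((1-\gamma)^{-4/3}+(\log T)^{4/3}\big)$. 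When the second branch is active, $\tau_T$ is larger and the bound is smaller. Multiplying by $4A_{max}/\Delta$ with $A_{max}\le(1-\gamma)^{-1}$ produces the displayed first term, with $\Delta^{-7/3}$ and $(1-\gamma)^{-6}\kappa^{-4}$.

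For the approximation term, $\tau_T^{-1}\sqrt{\epsilon_{app}}\le\frac{2}{\Delta}\big(\log C_\gamma+\log(1+\epsilon_{app}^{-1})+\log(C_\gamma T^{2/3})\big)\sqrt{\epsilon_{app}}$. This is the step I expect to be the main obstacle. To avoid a $\log T$ factor multiplying $\sqrt{\epsilon_{app}}$, use the case split directly. If the $\epsilon_{app}$-branch is active, then $\tau_T^{-1}=\frac{2}{\Delta}\log(C_\gamma(1+\epsilon_{app}^{-1}))$. If the $T$-branch is active, then $\tau_T^{-1}\le\frac{2}{\Delta}\log(C_\gamma(1+\epsilon_{app}^{-1}))$ anyway, since $\tau_T$ is the larger of the two candidates. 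So in either case $\tau_T^{-1}\le \frac{2}{\Delta}\big((1-\gamma)^{-1}\mathcal{O}(1)+\log(1+\epsilon_{app}^{-1})\big)$. Multiplying by $4A_{max}/\Delta$ and $(1-\gamma)^{-2}\kappa^{-1}$ gives the second displayed term. The same monotonicity trick also bounds the optimization term by the $T$-branch logarithm.

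Finally, the case $\epsilon_{app}=0$ follows by dropping the second branch, under the stated convention.
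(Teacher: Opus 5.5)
Your route matches the paper's. You apply Corollary~\ref{cor:universal-suboptimal-mass} to $\pi_T$, insert Theorem~\ref{thm:reg-last-iterate-gap} at $\tau=\tau_T$, and bound $\tau_T^{-4/3}$ by the $T$-branch logarithm and $\tau_T^{-1}$ by the $\epsilon_{app}$-branch logarithm. Your observation that $\tau_T$ dominates each branch is equivalent to the paper's two-regime split, whose regime conditions just record which branch attains the maximum. Your version gives both bounds at once and is slightly cleaner.

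There is one wrong step, in the tail. The map $\tau\mapsto\exp(-\Delta/(2\tau))$ is increasing, and $\tau_T$ is the larger candidate. So the same monotonicity that helps you with $\tau_T^{-1}$ works against you here. The quantity $\exp(-\Delta/(2\tau_T))$ is the \emph{maximum}, not the minimum, of $(C_\gamma T^{2/3})^{-1}$ and $(C_\gamma(1+\epsilon_{app}^{-1}))^{-1}$. Hence $C_{tail}\exp(-\Delta/(2\tau_T))=\frac{A_{max}}{1-\gamma}\max\{T^{-2/3},\,\epsilon_{app}/(1+\epsilon_{app})\}$, which is not dominated by both displayed terms simultaneously.

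The fix is one line. Bound the maximum by the sum, then absorb $\frac{A_{max}}{1-\gamma}T^{-2/3}$ into the $T^{-1/3}$ term and $\frac{A_{max}}{1-\gamma}\frac{\epsilon_{app}}{1+\epsilon_{app}}\le\frac{A_{max}}{1-\gamma}\sqrt{\epsilon_{app}}$ into the $\sqrt{\epsilon_{app}}$ term. The inequality $\epsilon/(1+\epsilon)\le\sqrt{\epsilon}$ holds for every $\epsilon\ge 0$, so you need neither smallness nor boundedness of $\epsilon_{app}$. This is exactly what the paper's per-regime tail computations do. With that correction your proof is complete.
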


\subsection{Application to Tabular Setting} \label{sec:tabular}

To illustrate the strength of our theory, we establish new unregularized convergence guarantees for the tabular setting. In this setting, both the state space $\mathcal{S}$ and the action space $\mathcal{A}$ are finite, and the log-linear policy uses the one-hot feature encoding $\phi(s,a) \in \mathbb{R}^{|\mathcal{S}||\mathcal{A}|}$, i.e., the basis vector with $1$ at the index corresponding to $(s,a)$ and $0$ elsewhere. We synthesize the unregularized convergence bounds (Theorems~\ref{thm:PMD-unreg-average} and \ref{thm:PMD-unreg-last}) with the exploratory $\nu$-sampling extension (Appendix~\ref{app:nu_sampling}) and the concentrability-free $L_\infty$ approximation analysis (Appendix~\ref{app:linfty-bounds}). We demonstrate that Algorithm~\ref{alg:uncentered-nac} with a uniformly positive restart distribution $\nu$ achieves $\tilde{\mathcal{O}}(T^{-2/3})$ average-iterate and $\tilde{\mathcal{O}}(T^{-1/3})$ last-iterate convergence rates, under the Minimal Action Gap (Assumption~\ref{ass:action-gap}) as the only primitive condition, but without relying on any concentrability assumptions.

\begin{thm}[Unregularized Convergence in the Tabular Setting] \label{thm:tabular-convergence}
Consider a tabular MDP equipped with the one-hot feature representation. For Algorithm~\ref{alg:uncentered-nac} with an exploratory restart distribution $\nu$ satisfying $\min_{s,a}\nu(s,a)>0$ and the unbiased $Q$-value estimator $\hat{Q}_t$ constructed via geometrically stopped Monte Carlo rollouts, suppose that the Minimal Action Gap (Assumption~\ref{ass:action-gap}) holds.
Define
$ \kappa_{exp} \triangleq (1-\gamma)\min_{s,a}\nu(s,a)$
and the $T$-dependent temperature
$$ \tau_T \triangleq \frac{\Delta}{2\log(C_\gamma T^{2/3})}. $$
If $T$ is sufficiently large such that $\tau_T\le\tau_{max}$, then by choosing the step-size parameters as in Theorem~\ref{thm:linfty-reg-conv}, with $\kappa$ replaced by $\kappa_{exp}$ and evaluated at $\tau=\tau_T$, the expected global unregularized performance gap satisfies:
\begin{align*}
\frac{1}{T}\sum_{t=0}^{T-1}\mathbb{E}\big[J_0(\pi^*_0)-J_0(\pi_t)\big]
&\le \mathcal{O}\left(
\frac{(1-\gamma)^{-5/3}+(\log T)^{5/3}}
{(1-\gamma)^8\kappa_{exp}^6\Delta^{8/3}}
T^{-2/3}
\right), \\
\mathbb{E}\big[J_0(\pi^*_0)-J_0(\pi_T)\big]
&\le \mathcal{O}\left(
\frac{(1-\gamma)^{-4/3}+(\log T)^{4/3}}
{(1-\gamma)^6\kappa_{exp}^4\Delta^{7/3}}
T^{-1/3}
\right).
\end{align*}
\end{thm}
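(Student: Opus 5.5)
The plan is to read Theorem~\ref{thm:tabular-convergence} as a specialization of Theorems~\ref{thm:PMD-unreg-average} and \ref{thm:PMD-unreg-last}. I would run them in their $\nu$-sampling form from Appendix~\ref{app:nu_sampling} and with the $L_\infty$ variant from Appendix~\ref{app:linfty-bounds}, using Theorem~\ref{thm:linfty-reg-conv} in place of Theorem~\ref{thm:reg-conv-l2}. The job is then to check that, under one-hot features and a uniformly positive restart distribution, every structural assumption holds automatically with $\epsilon_{app}=0$, with $\kappa$ replaced by $\kappa_{exp}$, and with no concentrability constant remaining. Given that, the two-stage temperature reduces to $\tau_T=\Delta/(2\log(C_\gamma T^{2/3}))$, and the stated rates follow by substitution, exactly as in the $\epsilon_{app}=0$ cases of those theorems.

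\textbf{Verifying the assumptions.} I would proceed as follows.
\begin{itemize}
\item Bounded Features holds with $B_\phi=1$.
\item For Bounded Q-value Second Moment, geometrically stopped Monte Carlo rollouts give an unbiased $\hat Q_t$ whose second moment is $\mathcal{O}((1-\gamma)^{-2})$ uniformly in $\tau\le\tau_{max}$. The entropy term $-\tau\log\pi_t$ is bounded by $\tau\|\omega_t\|$-type quantities, handled via the actor bound $\|\omega_t\|_2\le B_\theta/\tau$ of Lemma~\ref{lem:structural-bounds}, or more simply by the standard bounded-reward rollout argument in \citet{yuan2023linear}.
\item For the positive-definite uncentered moment, under $\nu$-sampling the critic samples from the visitation measure started at $\nu$. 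This measure places mass at least $(1-\gamma)\nu(s,a)$ on each pair, because the $t=0$ term of the discounted sum contributes that much. With one-hot features the moment matrix is diagonal with these masses as entries, so it dominates $\kappa_{exp}I$ for every training policy and every $\tau$.
\item For exact realizability, the one-hot span contains every function of $(s,a)$. Hence $Q_\tau^{\pi_t}$ is represented exactly under any sampling measure with full support, the approximation error vanishes identically, and in particular $\epsilon_{app}=0$ in $L_\infty$.
\end{itemize}

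\textbf{Removing concentrability.} I would invoke the concentrability-free $L_\infty$ argument of Appendix~\ref{app:linfty-bounds}, which removes $C_{joint}^\dag$ in the Deterministic Regime. There $C_{joint}^\dag$ entered only through the approximation-bias term of Lemma~\ref{lem:pmd-telescoping-l2}, and that term is zero here. The tracking-error penalty in Lemma~\ref{lem:pmd-telescoping-l2} is already free of concentrability, but under $\nu$-sampling it must be transferred from the $\nu$-started sampling measure to $d^*_\tau$. I expect this to be the main obstacle: confirming that the generalized mismatch bounds of Appendix~\ref{app:nu_sampling} cost only the factor $1/\kappa_{exp}$ and nothing else. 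In the tabular case the error term $e_t^\top\phi(s,\cdot)$ is a coordinatewise bounded vector, and $\|e_t\|_2^2$ controls it pointwise, so I would bound $\mathbb{E}_{d^*_\tau}\langle e_t^\top\phi,\pi^*_\tau-\pi_t\rangle$ via Cauchy--Schwarz in Euclidean norm without any density ratio. This is what should keep the constants as powers of $\kappa_{exp}$. I would also check that the PMD analysis of Lemmas~\ref{lem:pmd-progress}--\ref{lem:pinsker-gap} still uses the objective distribution $\mu$ for $J_\tau$, with $\nu$ affecting only the critic.

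\textbf{Assembly.} Once these checks are done, I would take Theorem~\ref{thm:linfty-reg-conv} at $\tau=\tau_T$ with $\epsilon_{app}=0$ and feed it into Corollary~\ref{cor:universal-suboptimal-mass}. The average-iterate bound uses the $T^{-2/3}\tau^{-5/3}$ tracking term. Since $\tau_T^{-1}=\Theta(((1-\gamma)^{-1}+\log T)/\Delta)$, the factor $4A_{max}/\Delta$ with $A_{max}\le(1-\gamma)^{-1}$ gives the stated $(1-\gamma)^{-5/3}+(\log T)^{5/3}$ numerator, and the exponential tail is $C_{tail}e^{-\Delta/(2\tau_T)}=\mathcal{O}(T^{-2/3})$ and is absorbed. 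The last-iterate bound is obtained the same way from the $T^{-1/3}\tau^{-4/3}$ bound of the last-iterate analogue, with a tail of order $T^{-2/3}$, which is dominated.
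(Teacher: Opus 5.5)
Your proposal is correct and follows essentially the paper's proof: it checks $B_\phi=1$, the second-moment bound for the geometrically stopped rollout, $\bar{\Sigma}_{unc}(d^{\pi_t}_\nu)=\operatorname{diag}(d^{\pi_t}_\nu(s,a))\succeq\kappa_{exp}I$, and $\epsilon_\infty=0$, and then invokes Corollary~\ref{cor:linfty-nu-sampling}, where the one-stage and two-stage temperatures coincide once $\epsilon_\infty=0$. Two small remarks: first, the obstacle you anticipate does not arise, because Step~(iii) of Lemma~\ref{lem:pmd-telescoping-l2} already bounds the tracking penalty through $|e_t^\top\phi(s,a)|\le B_\phi\|e_t\|_2$ with no change of measure; second, for $Q_{max}$ you should use the paper's statewise-entropy rollout ($|\hat{Q}_t|\le(1+\tau_{max}\log|\mathcal{A}|)(L+1)$ for geometric rollout length $L$, giving $Q_{max}=\sqrt{2}V_{max}$), not the actor-norm bound, which inflates $Q_{max}$ by a factor of order $(1-\gamma)^{-1}\kappa_{exp}^{-1}$ and so contradicts the $\mathcal{O}((1-\gamma)^{-2})$ second moment you state (this is harmless for the final rate, since the target-drift part of $K$ dominates).
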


\begin{rem}[Breaking Tabular Stochastic Barrier] \label{rem:tabular-comparison}
For unregularized tabular MDPs, standard sample-based NPG and PMD algorithms---such as those analyzed by \citet{shani2020adaptive}, \citet{agarwal2021theory}, and \citet{lan2023policy}---yield an $\mathcal{O}(T_{total}^{-1/2})$ convergence rate (or $\mathcal{O}(1/\epsilon^2)$ sample complexity). While the minimax lower bound for sample-based tabular MDPs is known to be $\mathcal{O}(T_{total}^{-1/2})$ by worst-case environments with vanishing margins \citep{azar2013minimax}, our analysis operates in the instance-dependent regime. By running the stochastic updates in the entropy-regularized space (extracting the restorative $-\tau D_t^\dag$ contraction) and exploiting a positive Minimal Action Gap ($\Delta > 0$), our Exponential Translation mechanism bypasses this worst-case statistical barrier, mapping the regularized progress back to the unregularized objective to accelerate the average-iterate convergence rate to $\tilde{\mathcal{O}}(T_{total}^{-2/3})$. It remains an interesting open question whether the last-iterate rate can also be improved to $\tilde{\mathcal{O}}(T_{total}^{-2/3})$ via alternative analytical techniques or algorithmic designs.
\end{rem}

\addcontentsline{toc}{section}{References} 

\bibliographystyle{apalike}
\bibliography{NPG-draft}

\newpage

\appendix

\section{Algorithm Comparison} \label{app:critic-comparison}

In this section, we provide a detailed mathematical comparison between our uncentered algorithm and standard entropy-regularized Natural Policy Gradient (NPG) algorithms operating with compatible function approximation. Specifically, we contrast our approach with the standard advantage projection and the $Q$-value projection onto centered features in the entropy-regularized NPG with averaging studied by \cite{cayci2024convergence}.

\subsection{Standard NPG with Advantage-Driven, Centered Critic}
For a log-linear policy $\pi_{\omega_t} \propto \exp(\omega_t^\top \phi)$, the score function corresponds exactly to the centered feature vector $\tilde{\phi}_t(s,a) \triangleq \phi(s,a) - \mathbb{E}_{a' \sim \pi_t(\cdot|s)}[\phi(s,a')]$. For compatible function approximation, the standard approach evaluates the update direction by finding the optimal parameter $\theta^{*A}_\tau(\omega_t)$ that minimizes the mean squared error between the true regularized advantage and the centered feature approximation:
$$ \theta^{*A}_\tau (\omega_t) \triangleq \arg\min_{\theta \in \mathbb{R}^d} \mathbb{E}_{s \sim d^{\pi_t}, a \sim \pi_t} \left[ \big( A_\tau^{\pi_t}(s,a) - \theta^\top \tilde{\phi}_t(s,a) \big)^2 \right]. $$
Because the state-dependent baseline $V_\tau^{\pi_t}(s)$ is orthogonal to the centered features $\tilde{\phi}_t(s,a)$ under the action expectation, this projection can be equivalently defined directly without the state-value function:
\begin{align}
\theta^{*A}_\tau (\omega_t) = \arg\min_{\theta \in \mathbb{R}^d} \mathbb{E}_{s \sim d^{\pi_t}, a \sim \pi_t} \left[ \big( Q_\tau^{\pi_t}(s,a) - \tau \log \pi_t(a|s) - \theta^\top \tilde{\phi}_t(s,a) \big)^2 \right]. \label{eq:adv-proj-Q}
\end{align}
Throughout our discussion, when the relevant least-squares minimizers are not unique, the stated parameter equalities should be interpreted as corresponding relations between the minimizer sets (or the induced fitted functions).

By the Policy Gradient Theorem, the exact Euclidean gradient of the entropy-regularized objective evaluates to \citep{sutton1999policy}:
\begin{align}
 \nabla J_\tau(\omega_t) = \frac{1}{1-\gamma} \bar{\Sigma}_{cen}(\pi_t) \theta^{*A}_\tau(\omega_t),  \label{eq:policy-grad}
\end{align}
where $\bar{\Sigma}_{cen}(\pi_t) = \mathbb{E}_{s \sim d^{\pi_t}, a \sim \pi_t} \big[ \tilde{\phi}_t(s,a) \tilde{\phi}_t(s,a)^\top \big]$
is the Fisher information matrix (the centered feature covariance).
By preconditioning, the natural policy gradient is $\bar{\Sigma}_{cen}(\pi_t)^{-1} \nabla J_\tau(\omega_t) \propto \theta^{*A}_\tau(\omega_t)$
if the Fisher information matrix is invertible (or a generalized inverse is used if not invertible).
Thus, the standard NPG parameter update is defined directly by this advantage-driven, centered critic \citep{kakade2001natural, peters2008natural}:
$$ \omega_{t+1} = \omega_t + \eta_t \theta^{*A}_\tau(\omega_t). $$

\subsection{NPG with Q-Value-Driven, Centered Critic}
A closely related variant projects the regularized action-values directly onto the centered features. We define the $Q$-value-driven, centered critic as:
$$ \theta^{*Q}_\tau (\omega_t) \triangleq \arg\min_{\theta \in \mathbb{R}^d} \mathbb{E}_{s \sim d^{\pi_t}, a \sim \pi_t} \left[ \big( Q_\tau^{\pi_t}(s,a) - \theta^\top \tilde{\phi}_t(s,a) \big)^2 \right]. $$
Because any state-dependent mean $\mathbb{E}_{a'}[Q_\tau^{\pi_t}(s,a')]$ is orthogonal to $\tilde{\phi}_t(s,a)$, projecting the uncentered $Q$-values is identical to projecting the centered $Q$-values ($\tilde{Q}_\tau^{\pi_t}(s,a) \triangleq Q_\tau^{\pi_t}(s,a) - \mathbb{E}_{a'}[Q_\tau^{\pi_t}(s,a')]$) in the entropy-regularized NPG with averaging analyzed by \cite{cayci2024convergence}:
$$ \theta^{*CHS}_\tau (\omega_t) \triangleq \arg\min_{\theta \in \mathbb{R}^d} \mathbb{E}_{s \sim d^{\pi_t}, a \sim \pi_t} \left[ \big( \tilde{Q}_\tau^{\pi_t}(s,a) - \theta^\top \tilde{\phi}_t(s,a) \big)^2 \right]. $$
Therefore, $\theta^{*Q}_\tau(\omega_t) = \theta^{*CHS}_\tau(\omega_t)$, even in the presence of approximation error.

Next, we substitute the expression $\tau \log \pi_t(a|s) = \tau \omega_t^\top \phi(s,a) - \tau \log Z_t(s)$ into the equivalent advantage projection in \eqref{eq:adv-proj-Q}.
The orthogonality of $ \log Z_t(s)$ with the centered features $\tilde{\phi}_t(s,a)$ reveals that projecting the entropy penalty onto $\tilde{\phi}_t(s,a)$ yields exactly $-\tau \omega_t$. By the linearity of the least-squares projection, this establishes a geometric relationship between the advantage and $Q$-value projections (onto centered features):
\begin{align}
\theta^{*A}_\tau (\omega_t) = \theta^{*Q}_\tau (\omega_t) - \tau \omega_t.  \label{eq:proj-A-Q}
\end{align}
Consequently, the standard NPG update can be equivalently written in terms of the centered $Q$-value projection as:
$$ \omega_{t+1} = \omega_t + \eta_t \big( \theta^{*Q}_\tau(\omega_t) - \tau \omega_t \big). $$

\subsection{Equivalence under Exact Parameterization}
While the centered critic parameters maintain algebraic relationships with each other unconditionally, they can be related to our uncentered critic $\theta^*_\tau(\omega_t)$ under the idealized assumption of zero approximation error.

\begin{lem}[Critic Equivalence under Exact Parameterization] \label{lem:critic-equiv}
Assume there is no approximation error in the uncentered critic, such that $Q_\tau^{\pi_t}(s,a) = \theta^*_\tau(\omega_t)^\top \phi(s,a)$ holds exactly for all states and actions. Then the centered projections satisfy:
$$ \theta^{*Q}_\tau(\omega_t) = \theta^{*CHS}_\tau(\omega_t)= \theta^*_\tau(\omega_t) \quad
\text{and} \quad \theta^{*A}_\tau(\omega_t) = \theta^*_\tau(\omega_t) - \tau \omega_t. $$
\end{lem}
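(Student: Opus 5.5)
The plan is to exploit the assumed exact realizability, $Q_\tau^{\pi_t}(s,a) = \theta^*_\tau(\omega_t)^\top \phi(s,a)$ for all $(s,a)$. Under it, each least-squares problem has an explicit candidate that reaches zero (or the minimal) residual, and this candidate can be identified algebraically. Throughout, I read parameter equalities as statements about minimizer sets or fitted functions, in line with the convention stated above.

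\textbf{Step 1 (the identity $\theta^{*Q}_\tau=\theta^*_\tau$).} First center the realizable $Q$-function at each state:
$$\tilde{Q}_\tau^{\pi_t}(s,a) = Q_\tau^{\pi_t}(s,a) - \mathbb{E}_{a'\sim\pi_t}[Q_\tau^{\pi_t}(s,a')] = \theta^*_\tau(\omega_t)^\top \tilde\phi_t(s,a).$$
Hence $\theta=\theta^*_\tau(\omega_t)$ achieves zero residual in the objective defining $\theta^{*CHS}_\tau$, so it is a minimizer. The equality $\theta^{*Q}_\tau=\theta^{*CHS}_\tau$ was already shown unconditionally. The argument uses the orthogonality $\mathbb{E}_{a\sim\pi_t}[\tilde\phi_t(s,a)\mid s]=0$: the state-dependent mean of $Q$ contributes only a cross-term that vanishes. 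This gives $\theta^{*Q}_\tau=\theta^{*CHS}_\tau=\theta^*_\tau(\omega_t)$.

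If $\bar\Sigma_{cen}(\pi_t)$ is singular, the minimizer is not unique. In that case I would state the claim as equality of the fitted functions $\theta^\top\tilde\phi_t$, or equivalently say that $\theta^*_\tau$ lies in the minimizer set. I expect this to be the only delicate point, and it is handled by the interpretive convention.

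\textbf{Step 2 (the advantage critic).} Apply the relation \eqref{eq:proj-A-Q}, $\theta^{*A}_\tau=\theta^{*Q}_\tau-\tau\omega_t$, which follows from linearity of the projection together with $\tau\log\pi_t = \tau\omega_t^\top\phi - \tau\log Z_t(s)$. Substituting Step 1 gives $\theta^{*A}_\tau=\theta^*_\tau(\omega_t)-\tau\omega_t$.

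As a direct check, the residual in \eqref{eq:adv-proj-Q} at $\theta=\theta^*_\tau-\tau\omega_t$ is
$$Q-\tau\log\pi_t - (\theta^*_\tau-\tau\omega_t)^\top\tilde\phi_t,$$
which equals a purely state-dependent function: $\mathbb{E}_{a'}[Q(s,a')]-\tau\mathbb{E}_{a'}[\omega_t^\top\phi(s,a')]+\tau\log Z_t(s)$. Such a function is orthogonal to the centered features. Therefore the normal equations hold, and this $\theta$ is a minimizer.
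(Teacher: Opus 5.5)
Your proof is correct and follows the paper's argument. You center the realizable $Q$-function to get $\tilde{Q}_\tau^{\pi_t} = \theta^*_\tau(\omega_t)^\top \tilde{\phi}_t$, conclude $\theta^{*CHS}_\tau(\omega_t) = \theta^*_\tau(\omega_t)$, and then invoke the unconditional relations $\theta^{*Q}_\tau = \theta^{*CHS}_\tau$ and \eqref{eq:proj-A-Q}. Your extra check that the residual at $\theta^*_\tau(\omega_t) - \tau\omega_t$ is purely state-dependent, and hence orthogonal to $\tilde{\phi}_t$, is a valid self-contained verification of the second identity, and your treatment of non-unique minimizers matches the paper's stated convention.
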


\begin{prf}
Substituting the exact parameterization into the centered $Q$-value yields:
\begin{align*}
\tilde{Q}_\tau^{\pi_t}(s,a) &= \theta^*_\tau(\omega_t)^\top \phi(s,a) - \mathbb{E}_{a' \sim \pi_t} \big[ \theta^*_\tau(\omega_t)^\top \phi(s,a') \big] \\
&= \theta^*_\tau(\omega_t)^\top \big( \phi(s,a) - \mathbb{E}_{a' \sim \pi_t} [\phi(s,a')] \big) \\
&= \theta^*_\tau(\omega_t)^\top \tilde{\phi}_t(s,a).
\end{align*}
Because $\tilde{Q}_\tau^{\pi_t}(s,a)$ is perfectly captured by the centered features with parameter $\theta^*_\tau(\omega_t)$, the projection trivially yields $\theta^{*CHS}_\tau(\omega_t) = \theta^*_\tau(\omega_t)$.
As established in the text,
$\theta^{*Q}_\tau(\omega_t) =\theta^{*CHS}_\tau(\omega_t)$ and $\theta^{*A}_\tau (\omega_t) = \theta^{*Q}_\tau (\omega_t) - \tau \omega_t$.
Substituting these identities yields $\theta^{*Q}_\tau(\omega_t) = \theta^*_\tau(\omega_t)$
and $\theta^{*A}_\tau(\omega_t) = \theta^*_\tau(\omega_t) - \tau \omega_t$, completing the proof.
\end{prf}

It is important to note that this exact parameter-space equivalence between the uncentered and centered critics relies on the absence of approximation error. In general, under compatible function approximation with non-zero approximation error, this equivalence does not hold, and the respective critic parameters
may differ because they solve different projection problems.

\subsection{Algorithmic Design and Comparison}
The equivalence relation $\theta^{*A}_\tau(\omega_t) = \theta^*_\tau(\omega_t) - \tau \omega_t$ holds exactly under exact parameterization. Motivated by this idealized equivalence, our algorithm explicitly chooses to define the parameter update using the uncentered critic parameter directly:
$$ \omega_{t+1} = \omega_t + \eta_t \big( \theta^*_\tau(\omega_t) - \tau \omega_t \big). $$

In practice, operating with the uncentered critic parameter $\theta^*_\tau(\omega_t)$ incurs a structural approximation bias relative to the exact NPG direction.
However, as we established in the Uncentered Gradient Identity (Lemma~\ref{lem:uncentered-gradient}), the true policy gradient decomposes as:
$$ \nabla J_\tau(\omega_t) = \frac{1}{1-\gamma} \bar{\Sigma}_{cen}(\pi_t) \big( \theta^*_\tau(\omega_t) - \tau \omega_t \big) + \frac{1}{1-\gamma} E_{bias}. $$
Multiplying by the inverse Fisher information matrix $\bar{\Sigma}_{cen}(\pi_t)^{-1}$ reveals that our chosen update direction, $\theta^*_\tau(\omega_t) - \tau \omega_t$,
differs from the scaled exact natural gradient by the residual term $\bar{\Sigma}_{cen}(\pi_t)^{-1} E_{bias}$, which is bounded by the approximation error of the uncentered representation $\theta^*_\tau(\omega_t)^\top \phi(s,a)$ in the Actor Progress Bound (Lemma~\ref{lem:pl-ascent}).

\begin{rem}[Centered vs. Uncentered Critic] \label{rem:centered-vs-uncentered}
Our analysis in Part I can be readily extended to a centered NAC algorithm (e.g., utilizing the centered $Q$-value critic $\theta^{*Q}_\tau$). By using the centered critic, the policy gradient evaluates to the exact natural policy gradient update direction without structural bias (from \eqref{eq:policy-grad} and \eqref{eq:proj-A-Q}):
$$ \nabla J_\tau(\omega_t) = \frac{1}{1-\gamma} \bar{\Sigma}_{cen}(\pi_t) \big( \theta^{*Q}_\tau(\omega_t) - \tau \omega_t \big). $$
Consequently, the Actor Progress Bound can be established free of the $\epsilon_{app}$ bias term. However, the corresponding PL condition still involves an $\epsilon_{app}$ term. Therefore, the resulting Coupled Actor Recurrence inequality retains a similar $\epsilon_{app}$ dependence, yielding similar convergence rates to those of the uncentered NAC algorithm in the Stochastic Regime.

In contrast, the centered and uncentered formulations behave differently in the Deterministic Regime considered in Part II. As the temperature drops ($\tau \to 0$), the deterministic collapse forces the minimum eigenvalue of the centered feature covariance to vanish ($\lambda_\tau \to 0$), eliminating the uniform positive-definiteness needed to control the centered least-squares projection. Consequently, the centered NAC formulation would not be covered by our Part II analysis under the current assumptions. Our use of the uncentered critic circumvents this degenerating geometry by relying instead on the Positive-Definite Uncentered Feature Moment (Assumption~\ref{ass:pd-uncentered}), while introducing a residual term that can be controlled by the critic approximation error.
\end{rem}

\section{Generalization to Exploratory $\nu$-Sampling}
\label{app:nu_sampling}

In the main text, Algorithm~\ref{alg:uncentered-nac} generates trajectories starting from the objective's initial state distribution $\mu$, effectively sampling from the on-policy visitation measure. In this section, we generalize our theoretical framework to accommodate algorithms that utilize an exploratory initial state-action restart distribution, $\nu(s,a)$, a standard mechanism of using off-policy data to improve action-space exploration for the critic \citep{agarwal2021theory, yuan2023linear}.

\subsection{Generalized Sampling Measure and Critic Assumptions} \label{app:generalized-sampling}

Under generalized $\nu$-sampling, at each iteration $t$, Algorithm~\ref{alg:uncentered-nac} draws an initial state-action pair $(s_0, a_0) \sim \nu$, executes $a_0$, and thereafter follows the current policy $\pi_t$. This induces the generalized discounted training measure:
$$d^{\pi_t}_\nu(s,a) \triangleq (1-\gamma) \sum_{k=0}^\infty \gamma^k \mathbb{P} \big(s_k = s, a_k = a \mid (s_0, a_0) \sim \nu, a_{1:k} \sim \pi_t \big).$$
Because all subsequent probabilities in the geometric sum are non-negative, this definition provides a simple, policy-independent lower bound on state-action coverage by the first step: $d^{\pi_t}_\nu(s,a) \ge (1-\gamma)\nu(s,a)$.
In the on-policy specialization where the restart distribution at iteration $t$ is $\nu_t(s,a)=\mu(s)\pi_t(a|s)$, the training measure reduces to the standard on-policy distribution: $d^{\pi_t}_{\nu_t}(s,a)=d^{\pi_t}(s)\pi_t(a|s)$.

Because the sample-based critic is trained via SGD over trajectories drawn from $d^{\pi_t}_\nu$, it natively minimizes the least-squares error weighted by this specific distribution. Consequently, the ideal tracking target is redefined to align with the training measure:
$$\theta^*_\tau(\omega_t) \triangleq \arg\min_{\theta \in \mathbb{R}^d} \mathbb{E}_{(s,a) \sim d^{\pi_t}_\nu} \left[ \big( Q_\tau^{\pi_t}(s,a) - \theta^\top \phi(s,a) \big)^2 \right].$$
By the first-order optimality condition, the approximation error $\epsilon_t(s,a) \triangleq Q_\tau^{\pi_t}(s,a) - \theta^*_\tau(\omega_t)^\top \phi(s,a)$ is orthogonal to the features under the generalized training measure: $\mathbb{E}_{d^{\pi_t}_\nu}[\phi \epsilon_t] = 0$.
The critic tracking error is defined as before, $e_t \triangleq \theta_{t+1} - \theta^*_\tau(\omega_t)$, but now relative to the redefined ideal critic.

By the critic's new training measure, the following assumption replaces the Critic $L_2$ Approximation Error (Assumption~\ref{ass:approximation-error}) from the main text.

\begin{ass}[Critic $L_2$ Approximation Error under Exploratory Training Measure] \label{ass:generalized-approximation-error}
The MSE of the ideal uncentered critic under the exploratory training measure is bounded by $\epsilon_{app}$ for all training policies $\{\pi_t\}_{t\ge0}$ under Algorithm~\ref{alg:uncentered-nac}:
$$
\mathbb{E}_{d^{\pi_t}_\nu}[\epsilon_t(s,a)^2]\le\epsilon_{app}.
$$
Additionally, for the global analysis in Part I, we assume the same bound holds at the regularized parametric optimum:
$$
\mathbb{E}_{d^{\omega^*_\tau}_\nu}[\epsilon_*(s,a)^2]\le\epsilon_{app},
$$
where $d^{\omega^*_\tau}_\nu \triangleq d^{\pi_{\omega^*_\tau}}_\nu$ and $\epsilon_*(s,a)\triangleq Q_\tau^{\pi_{\omega^*_\tau}}(s,a)-\theta^*_\tau(\omega^*_\tau)^\top\phi(s,a)$. As in Assumption~\ref{ass:approximation-error}, we assume $\epsilon_{app}$ is independent of $\tau\in(0,\tau_{max}]$.
\end{ass}

For any state-action distribution $d$, such as $\nu$ and $d^{\pi_t}_\nu$, we define the uncentered feature moment matrices evaluated under $d$ as:
$$ \bar{\Sigma}_{unc}(d) \triangleq \mathbb{E}_{(s,a) \sim d} \big[ \phi(s,a) \phi(s,a)^\top \big].$$
The following assumption replaces the Positive-Definite Uncentered Feature Moment (Assumption~\ref{ass:pd-uncentered}) from the main text.
Note that the Positive-Definite Fisher Information (Assumption~\ref{ass:pd-fim}) remains necessary in Part I, as the actor's update is governed by the objective's on-policy distribution and unaffected by the critic's exploratory training measure.

\begin{ass}[Positive-Definite Moment under Exploratory Training Measure] \label{ass:generalized-nu-exp}
We assume that the uncentered feature moment matrix evaluated under the training measure is uniformly positive-definite. Specifically, there exists a constant $\kappa_{exp} > 0$, independent of the temperature $\tau \in (0, \tau_{max}]$,
such that for all $t \ge 0$ under Algorithm~\ref{alg:uncentered-nac}:
$$ \bar{\Sigma}_{unc}(d^{\pi_t}_\nu) \succeq \kappa_{exp} I. $$
\end{ass}

Because $d^{\pi_t}_\nu \ge (1-\gamma)\nu$, this assumption holds natively if the restart distribution itself supports the feature space, i.e., $\bar{\Sigma}_{unc}(\nu) \succeq \kappa_0 I$, yielding $\kappa_{exp} = (1-\gamma)\kappa_0$.

\textbf{Impact of $\kappa_{exp}$ on Structural Bounds:}
Because the transition to $\nu$-sampling modifies the underlying training measure, we briefly note its impact on the structural bounds:
\begin{itemize}
    \item[(i)] \textit{Critic and Actor Bounds:} In the main text, the bound on the ideal critic is $B_\theta = \frac{B_\phi V_{max}}{\kappa}$. Under $\nu$-sampling, this definition updates to $\frac{B_\phi V_{max}}{\kappa_{exp}}$. The actor gradient bound, $G_{ac} = 2B_\theta$, changes accordingly. Both bounds remain finite, temperature-independent constants.

\item[(ii)] \textit{Lipschitz Critic ($L_\theta$):}
The general derivative bound \eqref{eq:gen-deriv-limit} used in the proof of Lemma~\ref{lem:lipschitz-critic} continues to hold when the on-policy measure is replaced by $d_\nu^{\pi_\omega}$. Indeed, the restart distribution $\nu$ is independent of $\omega$, and hence only the subsequent policy-dependent actions contribute score-function terms, whose discounted sum is bounded by the same $\frac{2B_\phi}{1-\gamma}$ factor.
Therefore, the ideal critic $\theta^*_\tau(\omega_t)$ remains Lipschitz continuous as in Lemma~\ref{lem:lipschitz-critic}, with $\kappa$ replaced by $\kappa_{exp}$ in the Lipschitz constant $L_{\theta,\tau}$ and $L_\theta \triangleq L_{\theta,\tau_{max}}$.
\end{itemize}

\subsection{Generalized Off-Policy Mismatches}

Under exploratory $\nu$-sampling, the critic approximation error is controlled under the training measure $d^{\pi_t}_\nu$, whereas the actor progress and comparator-based performance bounds are evaluated under different state-action measures. We therefore introduce generalized off-policy mismatch conditions to control the resulting changes of measure.

\begin{ass}[Generalized Off-Policy Mismatches] \label{ass:generalized-off-policy}
We assume the following density ratios are bounded by finite constants independent of the temperature $\tau\in(0,\tau_{max}]$:
\begin{enumerate}
    \item[(i)] \textit{Training and Optimum Off-Policy Mismatch (For Part I):}
$$
\sup_{t\ge0}\sup_{s,a}
\frac{d^{\pi_t}(s)\pi_t(a|s)}{d^{\pi_t}_\nu(s,a)}
\le C_{\mu,\nu},
\qquad
\sup_{s,a}
\frac{d^{\omega^*_\tau}(s)\pi_{\omega^*_\tau}(a|s)}
{d^{\omega^*_\tau}_\nu(s,a)}
\le C_{\mu,\nu}.
$$
    \item[(ii)] \textit{Parametric Off-Policy Joint Mismatch (For Part I):}
    $$ \sup_{t\ge0}\sup_{s, a} \left[ \frac{d^{\omega^*_\tau}(s) \pi_t(a|s)}{d^{\pi_t}_\nu(s,a)} \max_{a'} \frac{\pi_{\omega^*_\tau}(a'|s)}{\pi_t(a'|s)} \right] \le C_{joint,\nu}. $$
    \item[(iii)] \textit{Global-to-Parametric Off-Policy Joint Mismatch (For Part I):}
    $$\sup_{s,a}
    \left[
    \frac{d^*_\tau(s)\pi_{\omega^*_\tau}(a|s)}
    {d^{\omega^*_\tau}_\nu(s,a)}
    \max_{a'}\frac{\pi^*_\tau(a'|s)}
    {\pi_{\omega^*_\tau}(a'|s)}
    \right]
    \le C_{joint,\nu}^*.
    $$
    \item[(iv)] \textit{Global Off-Policy Joint Mismatch (For Part II):}
    $$ \sup_{t\ge0}\sup_{s, a} \left[ \frac{d^*_\tau(s) \pi_t(a|s)}{d^{\pi_t}_\nu(s,a)} \max_{a'} \frac{\pi^*_\tau(a'|s)}{\pi_t(a'|s)} \right] \le C_{joint,\nu}^\dag. $$
\end{enumerate}
\end{ass}

In Part I, the off-policy mismatch bounds $C_{\mu, \nu}$, $C_{joint,\nu}$, and $C_{joint,\nu}^*$ operate alongside the parametric joint concentrability bound $C_{joint}$ (Assumption~\ref{ass:parametric-density}).
In Part II, the off-policy mismatch bound $C_{joint,\nu}^\dag$ generalizes the global joint concentrability bound $C_{joint}^\dag$ (Assumption~\ref{ass:density-ratios}).
We make three comments on the generalized off-policy mismatches.

\textbf{Reduction to the Main Text.}
Our main-text setting is recovered by choosing the on-policy restart distribution $\nu_t(s,a)=\mu(s)\pi_t(a|s)$ for each training policy and, at the parametric optimum, $\nu^*_\tau(s,a)=\mu(s)\pi_{\omega^*_\tau}(a|s)$. Then
$d^{\pi_t}_{\nu_t}(s,a)=d^{\pi_t}(s)\pi_t(a|s)$ and
$d^{\pi_{\omega^*_\tau}}_{\nu^*_\tau}(s,a)=d^{\omega^*_\tau}(s)\pi_{\omega^*_\tau}(a|s)$.
Consequently, the Training and Optimum Off-Policy Mismatch holds with $C_{\mu,\nu}=1$, while the Parametric, Global-to-Parametric, and Global Off-Policy Joint Mismatches reduce to our original joint concentrability bounds $C_{joint}$, $C_{joint}^*$, and $C_{joint}^\dag$ in Assumptions~\ref{ass:parametric-density}, \ref{ass:global-parametric-density}, and \ref{ass:density-ratios} respectively.

\textbf{Relation to Separate Mismatch Bounds:} Similarly to our discussions of Assumption~\ref{ass:parametric-density} and \ref{ass:density-ratios} in the main text, the parametric off-policy joint mismatch (Assumption~\ref{ass:generalized-off-policy}(ii)) is naturally implied by the conjunction of separate off-policy mismatch and policy concentrability bounds. Specifically, if the following holds for some constants $(C_\nu, W)$ independent of $\tau \in (0, \tau_{max}]$:
\begin{itemize}
    \item \textit{Parametric Off-Policy Mismatch:}
    $ \sup_{s, a} \frac{d^{\omega^*_\tau}(s) \pi_t(a|s)}{d^{\pi_t}_\nu(s,a)} \le C_\nu$,
    \item \textit{Parametric Policy Concentrability:}
    $ \sup_{s,a} \frac{\pi_{\omega^*_\tau}(a|s)}{\pi_t(a|s)} \le W$,
\end{itemize}
then by the multiplicative rule, the joint bound is satisfied with $C_{joint,\nu} = C_\nu W$.
Conversely, because $\mathbb{E}_{a' \sim \pi_t} \big[ \frac{\pi_{\omega^*_\tau}(a'|s)}{\pi_t(a'|s)} \big] = 1$, Assumption~\ref{ass:generalized-off-policy}(ii)
implies the parametric off-policy mismatch bound with $C_\nu = C_{joint,\nu}$:
$$ \sup_{s,a} \frac{d^{\omega^*_\tau}(s) \pi_t(a|s)}{d^{\pi_t}_\nu(s,a)} \le C_{joint,\nu}. $$
Furthermore, by taking $a' = a$ inside the supremum, Assumption~\ref{ass:generalized-off-policy}(ii) also implies:
$$ \sup_{s, a} \frac{d^{\omega^*_\tau}(s) \pi_{\omega^*_\tau}(a|s)}{d^{\pi_t}_\nu(s,a)} \le C_{joint,\nu}. $$

Identical relations hold for the global-to-parametric off-policy joint mismatch (Assumption~\ref{ass:generalized-off-policy}(iii))
and for the global off-policy joint mismatch (Assumption~\ref{ass:generalized-off-policy}(iv)).
In particular, the latter joint bound is satisfied with $C_{joint,\nu}^\dag = C_\nu^\dag W^\dag$ under the conjunction of the following density ratio bounds for some constants $(C_\nu^\dag, W^\dag)$ independent of $\tau \in (0, \tau_{max}]$:
\begin{itemize}
    \item \textit{Global Off-Policy Mismatch:}
    $ \sup_{s, a} \frac{d^*_\tau(s) \pi_t(a|s)}{d^{\pi_t}_\nu(s,a)} \le C_\nu^\dag$,
    \item \textit{Global Policy Concentrability:}
    $ \sup_{s,a} \frac{\pi^*_\tau(a|s)}{\pi_t(a|s)} \le W^\dag$.
\end{itemize}
Conversely, similar to the parametric case, Assumption~\ref{ass:generalized-off-policy}(iv) implies the global off-policy mismatch bound with $C_\nu^\dag = C_{joint,\nu}^\dag$:
\begin{align}
\sup_{s, a} \frac{d^*_\tau(s) \pi_t(a|s)}{d^{\pi_t}_\nu(s,a)} \le C_{joint,\nu}^\dag.  \label{eq:global-mismatch-nu-a}
\end{align}
By taking $a' = a$, Assumption~\ref{ass:generalized-off-policy}(iv) also implies:
\begin{align}
\sup_{s, a} \frac{d^*_\tau(s) \pi^*_\tau(a|s)}{d^{\pi_t}_\nu(s,a)} \le C_{joint,\nu}^\dag. \label{eq:global-mismatch-nu-b}
\end{align}

\textbf{Elimination of Independent Policy Concentrability:}
By embedding the local policy mismatches
$$
W(s)\triangleq\max_{a'}\frac{\pi_{\omega^*_\tau}(a'|s)}{\pi_t(a'|s)},\qquad
W^*(s)\triangleq\max_{a'}\frac{\pi^*_\tau(a'|s)}{\pi_{\omega^*_\tau}(a'|s)},\qquad
W^\dag(s)\triangleq\max_{a'}\frac{\pi^*_\tau(a'|s)}{\pi_t(a'|s)}
$$
into the generalized joint mismatch assumptions
$C_{joint,\nu}$, $C_{joint,\nu}^*$, and $C_{joint,\nu}^\dag$ respectively, we avoid requiring separate policy concentrability bounds, for example $W$ and $W^\dag$, under exploratory $\nu$-sampling. This mirrors the consolidation in the main text: the local policy ratios themselves need not be uniformly bounded, provided the evaluation measures, such as $d^{\omega^*_\tau}(s) \pi_t(a|s)$ and $d^*_\tau(s) \pi_t(a|s)$, are sufficiently small relative to the exploratory training measure $d^{\pi_t}_\nu(s,a)$.

\subsection{Impact on Part I (Stochastic Regime)}

In Part I, algorithmic progress is evaluated under the on-policy discounted state-action measure $d^{\pi_t}\times\pi_t$, induced by the objective’s initial state distribution $\mu$. If the critic is instead trained with an exploratory restart distribution $\nu$, a distribution mismatch occurs between the actor's on-policy evaluation metrics and the critic's training measure.

\textbf{Extension of Uncentered Gradient Identity (Lemma~\ref{lem:uncentered-gradient}):}
Under exploratory $\nu$-sampling, the critic orthogonality condition $\mathbb{E}_{d^{\pi_t}_\nu}[\phi\epsilon_t]=0$ holds under the training measure rather than the objective's on-policy measure $d^{\pi_t}\times\pi_t$. Consequently, the term $\mathbb{E}_{d^{\pi_t},\pi_t}[\phi\epsilon_t]$ no longer vanishes in the policy gradient decomposition, yielding
$$
\nabla J_\tau(\omega_t)
=\frac{1}{1-\gamma}\bar{\Sigma}_{cen}(\pi_t)\big(\theta^*_\tau(\omega_t)-\tau\omega_t\big)
+\frac{1}{1-\gamma}E_{bias}
+\frac{1}{1-\gamma}E_{mismatch},
$$
where the mismatch term is defined as $E_{mismatch} \triangleq \mathbb{E}_{d^{\pi_t}, \pi_t} \big[ \phi(s,a) \epsilon_t(s,a) \big]$.

\textbf{Propagation of $C_{\mu, \nu}$ into Actor Progress Bound (Lemma~\ref{lem:pl-ascent}):}
Because both the new mismatch term $E_{mismatch}$ and the original approximation bias $E_{bias}$ are defined as expectations under the objective's on-policy measure $d^{\pi_t} \times \pi_t$, we shift the measure to $d^{\pi_t}_\nu$ using the Training Off-Policy Mismatch $C_{\mu, \nu}$ (Assumption~\ref{ass:generalized-off-policy}(i)). The mismatch term is bounded as:
$$\|E_{mismatch}\|_2 \le B_\phi \sqrt{C_{\mu, \nu} \mathbb{E}_{d^{\pi_t}_\nu} [\epsilon_t^2]} \le B_\phi \sqrt{C_{\mu, \nu} \epsilon_{app}}.$$
Similarly, the original bias term is bounded by $\|E_{bias}\|_2 \le B_\phi \sqrt{C_{\mu, \nu} \epsilon_{app}}$.
Bounding the inner products $\langle E_{mismatch}, \Delta \omega_t\rangle$ and $\langle E_{bias}, \Delta \omega_t\rangle$ using Young's inequality leads to a progress bound where the overall bias constant $C_{bias}$ is inflated by a factor proportional to $C_{\mu, \nu}$.
In contrast, the joint concentrability bound $C_{joint}$ is used to lower-bound the on-policy Fisher information matrix ($\bar{\Sigma}_{cen}(\pi_t) \succeq \frac{\lambda}{C_{joint}} I$), which involves evaluating the current policy directly against the parametric optimum and is independent of the critic's exploratory measure $\nu$. Thus the original $C_{joint}$ bound (Assumption~\ref{ass:parametric-density}) applies without modification.

\textbf{Propagation into PL Condition (Lemma~\ref{lem:pl-condition}):}
In Step (ii) of the PL Condition proof, when bounding the approximation bias term, we first define the local policy mismatch $W(s) \triangleq \max_{a'} \frac{\pi_{\omega^*_\tau}(a'|s)}{\pi_t(a'|s)}$. Applying the Cauchy-Schwarz inequality over the action and state spaces, alongside the $\chi^2$ bound, isolates the error term $\mathbb{E}_{d^{\omega^*_\tau}} \big[ W(s) \mathbb{E}_{\pi_t}[\epsilon_t^2] \big]$. To control this term, we shift the expectation from the measure $d^{\omega^*_\tau} \times \pi_t$ to the training measure $d^{\pi_t}_\nu$ via the Parametric Off-Policy Joint Mismatch $C_{joint,\nu}$ (Assumption~\ref{ass:generalized-off-policy}(ii)):
\begin{align*} \mathbb{E}_{d^{\omega^*_\tau}} \big[ W(s) \mathbb{E}_{\pi_t}[\epsilon_t^2] \big] &= \mathbb{E}_{d^{\pi_t}_\nu} \left[ \left( \frac{d^{\omega^*_\tau}(s) \pi_t(a|s)}{d^{\pi_t}_\nu(s,a)} W(s) \right) \epsilon_t(s,a)^2 \right] \\
& \le C_{joint,\nu} \mathbb{E}_{d^{\pi_t}_\nu} [ \epsilon_t(s,a)^2 ] \le C_{joint,\nu} \epsilon_{app}.
\end{align*}
For the ideal algorithmic progress term, the bound in Step (i) relies on H\"{o}lder's and Pinsker's inequalities and remains unaffected.
Repeating the remaining steps in the proof of Lemma~\ref{lem:pl-condition} therefore yields the same PL condition with
$C_{err}$ replaced by
$C_{err,\nu}\triangleq\frac{C_{joint,\nu}}{1-\gamma}$.

\textbf{Propagation into Global Class Approximation Error (Lemma~\ref{lem:global-class-error}):}
At the parametric optimum $\pi_{\omega^*_\tau}$, the extended Uncentered Gradient Identity gives
$$
0=\bar{\Sigma}_{cen}(\pi_{\omega^*_\tau})
\big(\theta^*_\tau(\omega^*_\tau)-\tau\omega^*_\tau\big)
+E_{bias}^*+E_{mismatch}^*,
$$
where $E_{bias}^*$ and $E_{mismatch}^*$ are defined at $\pi_{\omega^*_\tau}$, analogously to $E_{bias}$ and $E_{mismatch}$.
By the Optimum Off-Policy Mismatch $C_{\mu,\nu}$ and Assumption~\ref{ass:generalized-approximation-error},
$\|E_{bias}^*\|_2\le B_\phi\sqrt{C_{\mu,\nu}\epsilon_{app}}$ and
$\|E_{mismatch}^*\|_2\le B_\phi\sqrt{C_{\mu,\nu}\epsilon_{app}}$,
and hence the Positive-Definite Fisher Information assumption gives
$$
\|\theta^*_\tau(\omega^*_\tau)-\tau\omega^*_\tau\|_2
\le\frac{2B_\phi}{\lambda}\sqrt{C_{\mu,\nu}\epsilon_{app}}.
$$
For the approximation bias term in Step (iii) of Lemma~\ref{lem:global-class-error}, defining
$W^*(s)\triangleq\max_{a'}\frac{\pi^*_\tau(a'|s)}{\pi_{\omega^*_\tau}(a'|s)}$,
the Global-to-Parametric Off-Policy Joint Mismatch $C_{joint,\nu}^*$ yields
\begin{align*}
\mathbb{E}_{d^*_\tau}\big[W^*(s)\mathbb{E}_{\pi_{\omega^*_\tau}}[\epsilon_*^2]\big]
&=\mathbb{E}_{d^{\omega^*_\tau}_\nu}
\left[
\left(
\frac{d^*_\tau(s)\pi_{\omega^*_\tau}(a|s)}
{d^{\omega^*_\tau}_\nu(s,a)}
W^*(s)
\right)\epsilon_*(s,a)^2
\right] \\
&\le C_{joint,\nu}^*
\mathbb{E}_{d^{\omega^*_\tau}_\nu}[\epsilon_*^2]
\le C_{joint,\nu}^*\epsilon_{app}.
\end{align*}
Repeating the remaining steps in the proof of Lemma~\ref{lem:global-class-error} therefore yields
$$
J_\tau(\pi^*_\tau)-J_\tau(\pi_{\omega^*_\tau})
\le\frac{C_{class,\nu}}{\tau}\epsilon_{app},
$$
where
$$
C_{class,\nu}
\triangleq
\frac{4B_\phi^4C_{\mu,\nu}}{(1-\gamma)\lambda^2}
+\frac{C_{joint,\nu}^*}{1-\gamma}.
$$

The lower-bound argument in Lemma~\ref{lem:part-I-compatibility} is unaffected by the exploratory critic training measure, yielding
$$
\epsilon_{app}\ge C_{comp,\nu}\tau, \quad \tau\in(0,\tau_{max}],
$$
where
$$C_{comp,\nu} \triangleq
\frac{q_0\Delta}{2(1-\gamma)C_{class,\nu}}
\min\left\{1, \frac{q_0\Delta}{2\tau_{max}\log|\mathcal{A}|} \right\},
\quad
q_0\triangleq\frac{d\lambda}{4B_\phi^2}.
$$
Thus, the Part-I approximation-temperature compatibility persists under exploratory $\nu$-sampling, with the compatibility constant modified through $C_{class,\nu}$.

\textbf{Impact on Critic Tracking:}
The critic tracking analysis relies on the orthogonality based on the ideal critic. Taking the conditional expectation of the critic SGD step yields:
$$\mathbb{E}[g_{t+1}^{cr} \mid \mathcal{F}_{t+1}] = \bar{\Sigma}_{unc}(d^{\pi_{t+1}}_\nu) \tilde{e}_t - \mathbb{E}_{d^{\pi_{t+1}}_\nu} \big[ \phi(s,a) \epsilon_{t+1}(s,a) \big],$$
where $\tilde{e}_t \triangleq \theta_{t+1} - \theta^*_\tau(\omega_{t+1})$.
Redefining the ideal critic and the associated errors to match the $\nu$-measure (Appendix~\ref{app:generalized-sampling}) zeros out the cross-term, ensuring the expected update remains unbiased. By Assumption~\ref{ass:generalized-nu-exp}, the tracking argument in the main text applies with $\kappa$ replaced by $\kappa_{exp}$, yielding the corresponding contraction factor $1-\alpha_{t+1}\kappa_{exp}$ in the final tracking recurrence.

\textbf{Conclusion for Part I:}
Under exploratory $\nu$-sampling, the uncentered feature moment lower bound $\kappa$ is replaced by $\kappa_{exp}$. The off-policy shifts associated with critic approximation along the algorithmic trajectory are controlled by $C_{\mu,\nu}$ and $C_{joint,\nu}$, while the global class approximation error is controlled by $C_{\mu,\nu}$ and $C_{joint,\nu}^*$. The original joint concentrability $C_{joint}$ remains responsible for lower-bounding the on-policy Fisher information matrix. The Part-I approximation-temperature compatibility persists, with $C_{class}$ and $C_{comp}$ replaced by $C_{class,\nu}$ and $C_{comp,\nu}$, respectively. Subject to these generalized assumptions, the corresponding replacement of the critic-side structural constants, and the resulting admissible range for $\epsilon_{app}$, the Part I analysis carries over. With respect to the iteration budget $T$, the final average-iterate and last-iterate unregularized convergence rates remain $\tilde{\mathcal{O}}(T^{-1})$ as in the main text.

\subsection{Impact on Part II (Deterministic Regime)}

In Part II, algorithmic progress is evaluated against the globally optimal policy $\pi^*_\tau$, rather than the parametrically optimal policy $\pi_{\omega^*_\tau}$ analyzed in Part I. Consequently, when shifting the expectation of the approximation error from the target evaluation measure to the training measure, the analysis relies on the Global Off-Policy Joint Mismatch evaluated under $d^*_\tau$ (Assumption~\ref{ass:generalized-off-policy}(iv)).

\textbf{Extension of PMD Actor Progress Bound (Lemma~\ref{lem:pmd-telescoping-l2}):}
In Step (ii) of the PMD Actor Progress Bound proof, when bounding the approximation bias term, we first define the local policy mismatch $W^\dag(s) \triangleq \max_{a'} \frac{\pi^*_\tau(a'|s)}{\pi_t(a'|s)}$. Applying the Cauchy-Schwarz inequality over the action and state spaces, alongside the $\chi^2$ bound, isolates the error term $\mathbb{E}_{d^*_\tau} \big[ W^\dag(s) \mathbb{E}_{\pi_t}[\epsilon_t^2] \big]$. To control this term, we shift the expectation from the measure $d^*_\tau \times \pi_t$ to the training measure $d^{\pi_t}_\nu$ via the Global Off-Policy Joint Mismatch $C_{joint,\nu}^\dag$ (Assumption~\ref{ass:generalized-off-policy}(iv)):
\begin{align*}
\mathbb{E}_{d^*_\tau} \big[ W^\dag(s) \mathbb{E}_{\pi_t}[\epsilon_t^2] \big] &= \mathbb{E}_{d^{\pi_t}_\nu} \left[ \left( \frac{d^*_\tau(s) \pi_t(a|s)}{d^{\pi_t}_\nu(s,a)} W^\dag(s) \right) \epsilon_t(s,a)^2 \right] \\
&\le C_{joint,\nu}^\dag \mathbb{E}_{d^{\pi_t}_\nu} [ \epsilon_t(s,a)^2 ] \le C_{joint,\nu}^\dag \epsilon_{app}.
\end{align*}
For the tracking error penalty, the bound in Step (iii) remains unaffected without requiring any state or policy mismatch bounds.

\textbf{Extension of Critic Tracking Bound (Lemma~\ref{lem:decoupled-critic}):}
The analysis of critic tracking relies on the same orthogonality mechanism established in Part I. As shown above, redefining the ideal critic zeroes out the cross-term. The proof of Lemma~\ref{lem:decoupled-critic} therefore applies with the original feature moment lower bound $\kappa$ replaced by the exploratory restart parameter $\kappa_{exp}$, yielding the corresponding contraction factor $1-\alpha_{t+1}\kappa_{exp}$ in the final tracking recurrence.

\textbf{Conclusion for Part II:}
Under exploratory $\nu$-sampling, the critic's training geometry is governed by the exploratory feature moment lower bound $\kappa_{exp}$ in place of $\kappa$, while the off-policy shift of the critic approximation error is controlled by the Global Off-Policy Joint Mismatch $C_{joint,\nu}^\dag$ in place of the joint concentrability bound $C_{joint}^\dag$. By these replacements, the PMD Actor Progress and critic tracking arguments hold as in the main text, with only the corresponding structural constants modified. Consequently, the subsequent Part II analysis applies directly. With respect to the iteration budget $T$, the final average-iterate and last-iterate unregularized convergence rates remain $\tilde{\mathcal{O}}(T^{-2/3})$ and $\tilde{\mathcal{O}}(T^{-1/3})$, respectively as in the main text.

\section{Concentrability-Free Analysis via $L_\infty$ Approximation} \label{app:linfty-bounds}

In the main text, our analysis relies on bounding the critic's approximation error via the $L_2$ mean squared error ($\epsilon_{app}$ in Assumption~\ref{ass:approximation-error}), which aligns natively with the stochastic gradient descent updates of the critic. However, evaluating this $L_2$ approximation bias incurs penalties from the joint concentrability bounds ($C_{joint}$ and $C_{joint}^\dag$) to handle the state measure shifts and the action-space $\chi^2$-to-KL divergence translations.
Note that these joint concentrability constants, $C_{joint}$ and $C_{joint}^\dag$, are absent from, respectively, the algorithmic progress term ($\|\theta^*_\tau(\omega_t) - \tau \omega_t\|_2^2$)
and the tracking error term ($Z_t$) in Lemmas~\ref{lem:pl-condition} and \ref{lem:pmd-telescoping-l2}.

In this section, we demonstrate that by substituting the $L_2$ error assumption with an $L_\infty$ error bound, we completely eliminate the remaining joint concentrability constants attached to the approximation bias, freeing the Deterministic Regime (Part II) from all concentrability dependencies. We also isolate the fundamental bottlenecks that prevent full elimination in our current analysis of the Stochastic Regime (Part I).

\begin{ass}[$L_\infty$ Critic Approximation Error] \label{ass:linfty-error}
The absolute approximation error of the ideal uncentered critic is uniformly bounded by $\epsilon_\infty$ across the state-action space for all training policies $\{\pi_t\}_{t \ge 0}$ under Algorithm~\ref{alg:uncentered-nac}:
$$ \sup_{s,a} |\epsilon_t(s,a)| \le \epsilon_\infty. $$
\end{ass}

\subsection{Complete Concentrability Elimination in Part II}

In Part II, algorithmic progress is driven by the PMD geometry over the probability simplex. We show that under the $L_\infty$ approximation error assumption, the global joint concentrability bound $C_{joint}^\dag$ is removed from the PMD Actor Progress Bound (Lemma~\ref{lem:pmd-telescoping-l2}).

\begin{lem}[Concentrability-Free PMD Actor Progress] \label{lem:linfty-pmd}
Under Assumptions~\ref{ass:bounded-features}, \ref{ass:pd-uncentered}, and \ref{ass:linfty-error}, if the actor step size satisfies $\eta_t \le 1/\tau$, the expected global performance gap satisfies the single-step inequality without any dependence on $C_{joint}^\dag$:
$$(1-\gamma) \mathbb{E}[\text{Gap}_t^\dag] \le -\frac{7\tau}{8} \mathbb{E}[D_t^\dag] + \frac{\mathbb{E}[D_t^\dag] - \mathbb{E}[D_{t+1}^\dag]}{\eta_t} + \frac{\eta_t}{2} G_{ac}^2 B_\phi^2 + 2\epsilon_\infty + \frac{4 B_\phi^2}{\tau} Z_t,$$
where $Z_t \triangleq \mathbb{E}[\|e_t\|_2^2]$ is the expected critic tracking error.
Notably, this completely avoids the $\tau^{-1}$ penalty on the bias seen in the $L_2$ analysis.
\end{lem}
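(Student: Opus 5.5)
The plan is to redo the proof of Lemma~\ref{lem:pmd-telescoping-l2} unchanged except for the approximation-bias term. Start from the Actual Critic Decomposition, Lemma~\ref{lem:sub-decomp}(ii), with comparator $\pi=\pi^*_\tau$, and take expectations over $d^*_\tau$:
\begin{align*}
(1-\gamma)\text{Gap}_t^\dag &= -\tau D_t^\dag + \mathbb{E}_{d^*_\tau}\big[\langle (\theta_{t+1}-\tau\omega_t)^\top\phi, \pi^*_\tau-\pi_t\rangle_{\mathcal{A}}\big] \\
&\quad + \mathbb{E}_{d^*_\tau}\big[\langle -e_t^\top\phi, \pi^*_\tau-\pi_t\rangle_{\mathcal{A}}\big] + \mathbb{E}_{d^*_\tau}\big[\langle \epsilon_t, \pi^*_\tau-\pi_t\rangle_{\mathcal{A}}\big].
\end{align*}
The algorithmic progress term is handled statewise by Lemma~\ref{lem:pmd-progress}, which requires $\eta_t\le 1/\tau$. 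After averaging over $d^*_\tau$ it produces $(D_t^\dag - D_{t+1}^\dag)/\eta_t + \frac{\eta_t}{2}G_{ac}^2B_\phi^2$. Here $d^*_\tau$ is a fixed measure, so this averaging is legitimate.

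For the tracking term, use H\"older's inequality: $|\langle e_t^\top\phi,\pi^*_\tau-\pi_t\rangle_{\mathcal{A}}| \le \|e_t\|_2 B_\phi \|\pi^*_\tau-\pi_t\|_1$. Then apply Pinsker, $\|\pi^*_\tau-\pi_t\|_1\le\sqrt{2D_{KL}(\pi^*_\tau\|\pi_t)}$. Next apply Young's inequality in the form $\|e_t\|_2 B_\phi\sqrt{2D}\le \frac{\tau}{8}D + \frac{4B_\phi^2}{\tau}\|e_t\|_2^2$. This uses $\sqrt{2}ab\le \frac{\tau}{8}a^2+\frac{4}{\tau}b^2$, which holds because $2\sqrt{\tfrac{1}{32}}=\tfrac{1}{2\sqrt2}\ge$ the needed constant after absorbing the $\sqrt2$. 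This step matches Step (iii) of the $L_2$ proof and needs no concentrability.

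The bias term is the only new step. Instead of Cauchy--Schwarz and the $\chi^2$-to-KL bound (Lemma~\ref{lem:chi2-KL}), bound it pointwise by
$$|\langle\epsilon_t,\pi^*_\tau-\pi_t\rangle_{\mathcal{A}}|\le \sup_{s,a}|\epsilon_t(s,a)|\cdot\|\pi^*_\tau(\cdot|s)-\pi_t(\cdot|s)\|_1\le 2\epsilon_\infty,$$
using Assumption~\ref{ass:linfty-error} and $\|p-q\|_1\le 2$. This produces the constant $2\epsilon_\infty$. It needs no change of measure and consumes none of the restorative force.

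Collecting terms, the only deduction from $-\tau D_t^\dag$ is $\frac{\tau}{8}D_t^\dag$ from the tracking term, which leaves $-\frac{7\tau}{8}D_t^\dag$. Finally, take the total expectation over the algorithm's randomness. Since $\omega_t$ and $\theta_{t+1}$ are random, the inequality holds pathwise and integrates directly, giving $Z_t=\mathbb{E}\|e_t\|_2^2$.

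I expect the main care point to be confirming that Lemma~\ref{lem:pmd-progress} and the bounds $\|g_t^{ac}\|_2\le G_{ac}$ hold pathwise. These rely on Lemma~\ref{lem:structural-bounds}(ii) and Assumption~\ref{ass:pd-uncentered}, which are assumed here. Beyond that, the remaining work is verifying the Young's constant so that exactly $\tau/8$ is consumed.
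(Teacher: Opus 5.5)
Your proposal is correct and follows the paper's proof essentially step for step. It uses the Actual Critic Decomposition at $\pi^*_\tau$, Lemma~\ref{lem:pmd-progress} for the progress term, the unchanged H\"older--Pinsker--Young bound $\frac{\tau}{8}D_t^\dag + \frac{4B_\phi^2}{\tau}\|e_t\|_2^2$ for the tracking term, and the H\"older bound $2\epsilon_\infty$ (via $\|\pi^*_\tau-\pi_t\|_1\le 2$) for the bias, which leaves $-\frac{7\tau}{8}D_t^\dag$.

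The only slip is your arithmetic justification of the Young step. The correct check is AM--GM: $\frac{\tau}{8}a^2+\frac{4}{\tau}b^2 \ge 2\sqrt{\frac{\tau}{8}\cdot\frac{4}{\tau}}\,ab=\sqrt{2}\,ab$, which holds with equality in the constant. The quantity $2\sqrt{1/32}$ you cite is smaller than $\sqrt{2}$, so it does not establish the bound.
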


\begin{prf}
We revisit inequality \eqref{eq:PMD-progress} from the proof of Lemma~\ref{lem:pmd-telescoping-l2}. As shown in Step (iii) of that proof, the tracking error penalty is bounded by $\frac{\tau}{8}D_t^\dag + \frac{4 B_\phi^2}{\tau} \|e_t\|_2^2$, without any concentrability constants. Therefore, we only need to re-evaluate the approximation bias term.

By H\"{o}lder's inequality over the action space and Assumption~\ref{ass:linfty-error}, the approximation bias term is bounded:
$$ \mathbb{E}_{d^*_\tau} \Big[ \langle \epsilon_t, \pi^*_\tau - \pi_t \rangle_{\mathcal{A}} \Big] \le \mathbb{E}_{d^*_\tau} \Big[ \max_a |\epsilon_t(s,a)| \|\pi^*_\tau - \pi_t\|_1 \Big] \le 2\epsilon_\infty, $$
where we use the fact that the $L_1$ distance is bounded by $\|\pi^*_\tau - \pi_t\|_1 \le 2$. This bound requires no concentrability bounds (i.e., it is completely free of $C_{joint}^\dag$).

Substituting this approximation bias bound and the established tracking error bound back into inequality \eqref{eq:PMD-progress} (which provides $-\tau D_t^\dag$ and the algorithmic progress) yields:
$$(1-\gamma) \text{Gap}_t^\dag \le -\tau D_t^\dag + \frac{D_t^\dag - D_{t+1}^\dag}{\eta_t} + \frac{\eta_t}{2} G_{ac}^2 B_\phi^2 + 2\epsilon_\infty + \left( \frac{\tau}{8}D_t^\dag + \frac{4 B_\phi^2}{\tau} \|e_t\|_2^2 \right).$$
Combining the $D_t^\dag$ terms, $-\tau D_t^\dag + \frac{\tau}{8} D_t^\dag=-\frac{7\tau}{8} D_t^\dag$, and taking the total expectation $\mathbb{E}[\cdot]$ over the algorithm trajectory completes the proof.
\end{prf}

By replacing Lemma~\ref{lem:pmd-telescoping-l2} with Lemma~\ref{lem:linfty-pmd} in the Part II analysis, the subsequent convergence bounds hold without requiring any concentrability assumptions. We formalize these regularized convergence bounds below.

\begin{thm}[Concentrability-Free Regularized Convergence] \label{thm:linfty-reg-conv}
For Algorithm~\ref{alg:uncentered-nac}, suppose that Assumptions~\ref{ass:bounded-features}--\ref{ass:pd-uncentered} and \ref{ass:linfty-error} hold.
For independent constants $c_\eta \ge \frac{4}{3\tau}$ and $\zeta > 0$, let the actor and critic step sizes be $\eta_t = \frac{c_\eta}{t+t_0}$ and $\alpha_{t+1} = \frac{c_\alpha}{(t+t_0)^{2/3}}$, with $c_\alpha \triangleq \zeta c_\eta^{2/3}$ and an initial offset $t_0 \ge \max\left( c_\eta \tau, (2 c_\alpha \kappa)^{3/2}, (\frac{4}{3 c_\alpha \kappa})^3 \right)$.
By choosing the step-size constants $c_\eta = \Theta(\tau^{-1})$ and $\zeta = \Theta(\kappa^{-1})$ and the initial offset $t_0 = \Theta(\tau^{-1})$ subject to the stated conditions, the expected global regularized performance gap satisfies the following asymptotic rates for the average iterate and the last iterate, respectively:
\begin{align*}
\frac{1}{T} \sum_{t=0}^{T-1} \mathbb{E}[\text{Gap}_t^\dag]
&\le \mathcal{O}\left( \frac{1}{(1-\gamma)^7 \kappa^6 \tau^{5/3}} T^{-2/3} \right)
+ \tilde{\mathcal{O}}\left( \frac{1}{(1-\gamma)^7 \kappa^6 \tau^2} T^{-1} \right)
+ \mathcal{O}\left( \frac{\epsilon_\infty}{1-\gamma} \right), \\
\mathbb{E}[\text{Gap}_T^\dag]
&\le \mathcal{O}\left( \frac{1}{(1-\gamma)^5 \kappa^4 \tau^{4/3}} T^{-1/3} \right)
+ \mathcal{O}\left( \frac{\sqrt{\epsilon_\infty} }{(1-\gamma)^2 \kappa \tau^{1/2}} \right).
\end{align*}
\end{thm}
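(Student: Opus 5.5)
The plan is to repeat the Part II argument verbatim, replacing Lemma~\ref{lem:pmd-telescoping-l2} by Lemma~\ref{lem:linfty-pmd}.

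\textbf{Average iterate.} The only differences from Lemma~\ref{lem:pmd-telescoping-l2} are two. First, the coefficient of the restorative term improves from $-\tfrac{3\tau}{4}$ to $-\tfrac{7\tau}{8}$. Second, the bias term $\tfrac{4C_{joint}^\dag}{\tau}\epsilon_{app}$ becomes $2\epsilon_\infty$.

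I will rerun the telescoping argument of Lemma~\ref{lem:telescoping-pmd} under the schedule $\eta_t=\tfrac{c_\eta}{t+t_0}$ with $c_\eta\ge\tfrac{4}{3\tau}$. Since $-\tfrac{7\tau}{8}D_t^\dag\le-\tfrac{3\tau}{4}D_t^\dag$, the condition $c_\eta\ge\tfrac{4}{3\tau}$ still cancels the drift $\tfrac{1}{\eta_t}-\tfrac{1}{\eta_{t-1}}=\tfrac{1}{c_\eta}$. The sum therefore telescopes exactly as before. This gives
\[
(1-\gamma)\frac1T\sum_{t=0}^{T-1}\mathbb{E}[\text{Gap}_t^\dag]\le\frac{t_0-1}{c_\eta T}D_0^\dag+\frac1T\sum_{t=0}^{T-1}\Big(\frac{\eta_t}{2}G_{ac}^2B_\phi^2+2\epsilon_\infty+\frac{4B_\phi^2}{\tau}Z_t\Big).
\]

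The critic tracking in Lemma~\ref{lem:decoupled-critic} uses only Assumptions~\ref{ass:bounded-features}--\ref{ass:pd-uncentered} and never involves $\epsilon_{app}$. Hence $Z_t\le v(t+t_0)^{-2/3}$ holds unchanged. Summing then gives the same PMD local, initialization and tracking terms as in Theorem~\ref{thm:reg-conv-l2}, with the approximation bias replaced by $\tfrac{2\epsilon_\infty}{1-\gamma}$. The order evaluation under $c_\eta,t_0=\Theta(\tau^{-1})$ and $\zeta=\Theta(\kappa^{-1})$ is identical to that theorem, since those terms do not depend on the bias. This yields the stated $T^{-2/3}$ and $\tilde{\mathcal{O}}(T^{-1})$ terms.

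\textbf{Last iterate.} Rearranging Lemma~\ref{lem:linfty-pmd} and dropping $\text{Gap}_t^\dag\ge0$ gives
\[
\mathbb{E}[D_{t+1}^\dag]\le\Big(1-\tfrac{7\tau}{8}\eta_t\Big)\mathbb{E}[D_t^\dag]+\frac{\eta_t^2}{2}G_{ac}^2B_\phi^2+\frac{4B_\phi^2}{\tau}\eta_tZ_t+2\eta_t\epsilon_\infty.
\]
I will weaken the contraction factor to $1-\tfrac{3\tau}{4}\eta_t$ so that the constants match Lemma~\ref{lem:last-iterate-dt}. I then substitute $Z_t\le v(t+t_0)^{-2/3}$. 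This puts the recurrence in the form of Lemma~\ref{lem:fractional-chung-bias} with $c=\tfrac{3\tau}{4}c_\eta\ge1$, the same $A$ as before, and $B=2c_\eta\epsilon_\infty$. That lemma gives
\[
\mathbb{E}[D_T^\dag]\le\frac{u}{(T+t_0)^{2/3}}+\frac{8\epsilon_\infty}{3\tau}.
\]

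Next I apply Lemma~\ref{lem:pinsker-gap}, which needs only Assumptions~\ref{ass:bounded-features} and \ref{ass:pd-uncentered} together with $\eta_t\le1/\tau$, the latter ensured by $t_0\ge c_\eta\tau$. Using Jensen's inequality and $\sqrt{x+y}\le\sqrt x+\sqrt y$, I get
\[
\mathbb{E}[\text{Gap}_T^\dag]\le\frac{\sqrt2M_{max}}{1-\gamma}\Big(\frac{\sqrt u}{(T+t_0)^{1/3}}+\sqrt{\frac{8\epsilon_\infty}{3\tau}}\Big).
\]
Here $u$ is the same constant as in Theorem~\ref{thm:reg-last-iterate-gap}, so the $T^{-1/3}$ term matches that theorem.

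\textbf{Main obstacle.} The only real work is tracking the order of the bias term. We have $M_{max}=\mathcal{O}(B_\theta)=\mathcal{O}((1-\gamma)^{-1}\kappa^{-1})$, so the bias contributes $\mathcal{O}\big(\tfrac{\sqrt{\epsilon_\infty}}{(1-\gamma)^2\kappa\tau^{1/2}}\big)$, which is the stated form. The average-iterate bias $\mathcal{O}(\epsilon_\infty/(1-\gamma))$ is immediate and carries no $\tau^{-1}$ factor. I expect no other step to differ materially from the $L_2$ proofs.
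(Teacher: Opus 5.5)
Your proposal is correct and follows the paper's proof essentially step for step. The average iterate reruns the PMD telescoping with Lemma~\ref{lem:linfty-pmd} in place of Lemma~\ref{lem:pmd-telescoping-l2}, and the last iterate applies Lemma~\ref{lem:fractional-chung-bias} to the KL recurrence and then Lemma~\ref{lem:pinsker-gap}. The one deviation is that you weaken the contraction to $1-\tfrac{3\tau}{4}\eta_t$ so you can reuse the constants of Lemma~\ref{lem:last-iterate-dt} verbatim, whereas the paper keeps $c_\infty=\tfrac{7\tau c_\eta}{8}$; this only changes the KL floor from the paper's $\tfrac{16}{7\tau}\epsilon_\infty$ to your $\tfrac{8}{3\tau}\epsilon_\infty$, which is immaterial for the stated orders.
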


\begin{prf}
(i) \textit{Average-Iterate Bound.}
We repeat the PMD value-telescoping argument used in Theorem~\ref{thm:reg-conv-l2}, replacing the PMD Actor Progress Bound (Lemma~\ref{lem:pmd-telescoping-l2}) by its concentrability-free counterpart, Lemma~\ref{lem:linfty-pmd}. The only substantive difference is that the $L_2$ approximation term $\mathcal{O}(\epsilon_{app}/\tau)$ in the single-step PMD inequality is replaced by the direct $L_\infty$ penalty $2\epsilon_\infty$. Consequently, the average-iterate convergence bound remains the same as in Theorem~\ref{thm:reg-conv-l2}, except with $\mathcal{O}(\frac{\epsilon_{app}}{(1-\gamma)\tau})$ replaced by $\mathcal{O}(\frac{\epsilon_\infty}{1-\gamma})$.

(ii) \textit{Last-Iterate Bound.}
For the last iterate, the $L_\infty$ approximation error enters differently because the performance gap is recovered through the terminal forward KL divergence. Dropping the non-negative performance gap in Lemma~\ref{lem:linfty-pmd} and multiplying by $\eta_t$ gives
\begin{align*}
\mathbb{E}[D_{t+1}^\dag]
\le \left(1-\frac{7\tau}{8}\eta_t\right)\mathbb{E}[D_t^\dag]
+ \frac{\eta_t^2}{2}G_{ac}^2B_\phi^2
+ \frac{4B_\phi^2\eta_t}{\tau}Z_t
+ 2\eta_t\epsilon_\infty.
\end{align*}
Using $\eta_t=\frac{c_\eta}{t+t_0}$ and $Z_t\le\frac{v}{(t+t_0)^{2/3}}$, this recurrence takes the form of Lemma~\ref{lem:fractional-chung-bias} with parameters $c=c_\infty$, $A=A_\infty$, and $B=B_\infty$:
$$ c_\infty \triangleq \frac{7\tau c_\eta}{8}, \qquad A_\infty \triangleq \frac{c_\eta^2G_{ac}^2B_\phi^2}{2}+\frac{4c_\eta B_\phi^2v}{\tau}, \qquad B_\infty \triangleq 2c_\eta\epsilon_\infty. $$
Since $c_\infty>1$ (implied by $c_\eta\ge\frac{4}{3\tau}$) and $t_0 \ge c_\infty$ (implied by $t_0\ge c_\eta\tau$), Chung's Lemma yields
$$ \mathbb{E}[D_T^\dag]
\le \frac{u_\infty}{(T+t_0)^{2/3}}+\frac{B_\infty}{c_\infty}
= \frac{u_\infty}{(T+t_0)^{2/3}}+\frac{16}{7\tau}\epsilon_\infty, $$
where $u_\infty \triangleq \max\left(t_0^{2/3}D_0^\dag,\frac{3A_\infty}{3c_\infty-2}\right)$. Under the stated step-size choices, the same structural calculation as in Lemma~\ref{lem:last-iterate-dt} gives
$$ u_\infty=\mathcal{O}\left(\frac{1}{(1-\gamma)^6\kappa^6\tau^{8/3}}\right). $$

Finally, applying the Forward KL Bound (Lemma~\ref{lem:pinsker-gap}), Jensen's inequality, and $\sqrt{x+y}\le\sqrt{x}+\sqrt{y}$ gives
$$ \mathbb{E}[\text{Gap}_T^\dag]
\le \frac{\sqrt{2u_\infty}M_{max}}{1-\gamma}\frac{1}{(T+t_0)^{1/3}}
+ \frac{4\sqrt{2}M_{max}}{\sqrt{7}(1-\gamma)\tau^{1/2}}\sqrt{\epsilon_\infty}. $$
Using $M_{max}=\mathcal{O}((1-\gamma)^{-1}\kappa^{-1})$ from the proof of Theorem~\ref{thm:reg-last-iterate-gap} yields
the stated last-iterate convergence bound, completing the proof.
\end{prf}

For the average iterate, the $L_\infty$ approximation bias in Theorem~\ref{thm:linfty-reg-conv} is independent of the temperature $\tau$. Therefore, unlike the $L_2$ analysis in the main text, a one-stage temperature can be chosen solely to balance the regularized convergence error against the exponential translation tail. For the last iterate, however, the forward-KL analysis introduces a residual $\tau^{-1/2}\sqrt{\epsilon_\infty}$ approximation term. To prevent this factor from growing as the temperature decreases with $T$, we employ a two-stage temperature for the last-iterate guarantee.

\begin{thm}[Concentrability-Free Unregularized Convergence] \label{thm:linfty-unreg-conv}
For Algorithm~\ref{alg:uncentered-nac}, suppose that Assumptions~\ref{ass:bounded-features}--\ref{ass:pd-uncentered}, \ref{ass:action-gap}, and \ref{ass:linfty-error} hold.

For the average iterate, define the one-stage temperature
$$ \tau_T^{avg} \triangleq \frac{\Delta}{2\log(C_\gamma T^{2/3})}. $$
If $T$ is sufficiently large such that $\tau_T^{avg}\le\tau_{max}$, then by choosing the step-size parameters as in Theorem~\ref{thm:linfty-reg-conv} evaluated at $\tau=\tau_T^{avg}$ subject to the stated conditions, the expected average global unregularized performance gap satisfies
\begin{align*}
\frac{1}{T}\sum_{t=0}^{T-1}\mathbb{E}\big[J_0(\pi^*_0)-J_0(\pi_t)\big]
&\le \mathcal{O}\left( \frac{(1-\gamma)^{-5/3}+(\log T)^{5/3}}{(1-\gamma)^8\kappa^6\Delta^{8/3}T^{2/3}} \right)
+ \mathcal{O}\left( \frac{\epsilon_\infty}{(1-\gamma)^2\Delta} \right) \\
&= \tilde{\mathcal{O}}(T^{-2/3})+\mathcal{O}(\epsilon_\infty).
\end{align*}

For the last iterate, define the two-stage temperature
$$ \tau_T^{last} \triangleq \max\left( \frac{\Delta}{2\log(C_\gamma T^{2/3})}, \frac{\Delta}{2\log(C_\gamma(1+\epsilon_\infty^{-1}))} \right). $$
If $T$ is sufficiently large and $\epsilon_\infty$ is sufficiently small such that $\tau_T^{last}\le\tau_{max}$, then by choosing the step-size parameters as in Theorem~\ref{thm:linfty-reg-conv} evaluated at $\tau=\tau_T^{last}$ subject to the stated conditions, the expected last-iterate global unregularized performance gap satisfies
\begin{align*}
\mathbb{E}\big[J_0(\pi^*_0)-J_0(\pi_T)\big]
&\le \mathcal{O}\left( \frac{(1-\gamma)^{-4/3}+(\log T)^{4/3}}{(1-\gamma)^6\kappa^4\Delta^{7/3}T^{1/3}} \right) \\
&\quad + \mathcal{O}\left( \frac{(1-\gamma)^{-1/2}+\sqrt{\log(1+\epsilon_\infty^{-1})}}{(1-\gamma)^3\kappa\Delta^{3/2}}\sqrt{\epsilon_\infty} \right) \\
&= \tilde{\mathcal{O}}(T^{-1/3})+\tilde{\mathcal{O}}(\sqrt{\epsilon_\infty}).
\end{align*}
In the case of $\epsilon_\infty=0$, we interpret $\tau_T^{last}=\frac{\Delta}{2\log(C_\gamma T^{2/3})}$ and $\epsilon_\infty\log(1+\epsilon_\infty^{-1})=0$.
\end{thm}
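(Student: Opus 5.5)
The proof combines the concentrability-free regularized bounds of Theorem~\ref{thm:linfty-reg-conv} with the Universal Exponential Translation Bound (Corollary~\ref{cor:universal-suboptimal-mass}), and then tunes $\tau$. The average and last iterates are handled separately.

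\textbf{Average iterate.} I average the second inequality of Corollary~\ref{cor:universal-suboptimal-mass} over $t$ and take expectations:
\[
\frac{1}{T}\sum_{t<T}\mathbb{E}[J_0(\pi^*_0)-J_0(\pi_t)]\le \frac{4A_{max}}{\Delta}\frac{1}{T}\sum_{t<T}\mathbb{E}[\text{Gap}_t^\dag]+C_{tail}e^{-\Delta/(2\tau)}.
\]
With $\tau=\tau_T^{avg}$ we have $e^{-\Delta/(2\tau)}=C_\gamma^{-1}T^{-2/3}$, so the tail equals $A_{max}T^{-2/3}/(1-\gamma)=\mathcal{O}((1-\gamma)^{-2}T^{-2/3})$. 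Next I insert the average bound of Theorem~\ref{thm:linfty-reg-conv}, using $A_{max}\le(1-\gamma)^{-1}$ and
\[
\tau^{-1}=\frac{2\log(C_\gamma T^{2/3})}{\Delta}=\mathcal{O}\!\left(\frac{(1-\gamma)^{-1}+\log T}{\Delta}\right),
\]
which holds because $\log C_\gamma=\mathcal{O}((1-\gamma)^{-1})$. The three regularized terms then bound as follows.
\begin{itemize}
\item The primary term becomes $\mathcal{O}\big(((1-\gamma)^{-1}+\log T)^{5/3}(1-\gamma)^{-8}\kappa^{-6}\Delta^{-8/3}T^{-2/3}\big)$. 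Splitting $(x+y)^{5/3}\lesssim x^{5/3}+y^{5/3}$ gives the displayed form.
\item The transient $T^{-1}$ term carries $\tau^{-2}$ and $\log T$ factors. It is dominated by the primary term for large $T$, since $(\log T)^{3}T^{-1/3}\to0$; this is what "$T$ sufficiently large" absorbs.
\item The $L_\infty$ bias $\mathcal{O}(\epsilon_\infty/(1-\gamma))$ is multiplied by $4A_{max}/\Delta$, giving $\mathcal{O}(\epsilon_\infty/((1-\gamma)^2\Delta))$ with no $\tau$ dependence. This is why a one-stage temperature suffices here.
\end{itemize}

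\textbf{Last iterate.} I apply Corollary~\ref{cor:universal-suboptimal-mass} at $t=T$ with $\tau=\tau_T^{last}$ and use the last-iterate bound of Theorem~\ref{thm:linfty-reg-conv}. For the tail, since $\tau_T^{last}\ge\Delta/(2\log(C_\gamma T^{2/3}))$ and the other branch holds as well, I use
\[
e^{-\Delta/(2\tau)}=\min\!\big(C_\gamma^{-1}T^{-2/3},\,C_\gamma^{-1}(1+\epsilon_\infty^{-1})^{-1}\big),
\]
which is bounded by either branch. In particular the tail is $\mathcal{O}((1-\gamma)^{-2}T^{-2/3})$, which is dominated by the $T^{-1/3}$ term. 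The main step uses $\tau^{-1}\le 2\log(C_\gamma T^{2/3})/\Delta$, because $\tau$ is a max of two branches and so is at least either one. The $T^{-1/3}$ term, multiplied by $A_{max}/\Delta$, then gives
\[
\mathcal{O}\big(((1-\gamma)^{-1}+\log T)^{4/3}(1-\gamma)^{-6}\kappa^{-4}\Delta^{-7/3}T^{-1/3}\big).
\]
For the approximation term $\mathcal{O}\big(\sqrt{\epsilon_\infty}\,\tau^{-1/2}/((1-\gamma)^2\kappa)\big)$, I instead use the second branch: $\tau^{-1/2}\le\sqrt{2\log(C_\gamma(1+\epsilon_\infty^{-1}))/\Delta}$. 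Expanding $\log C_\gamma=\mathcal{O}((1-\gamma)^{-1})$ and using $\sqrt{x+y}\le\sqrt x+\sqrt y$, then multiplying by $4A_{max}/\Delta$, yields
\[
\mathcal{O}\big(((1-\gamma)^{-1/2}+\sqrt{\log(1+\epsilon_\infty^{-1})})\sqrt{\epsilon_\infty}/((1-\gamma)^3\kappa\Delta^{3/2})\big).
\]
When $\epsilon_\infty=0$, only the first branch remains and the approximation term vanishes.

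\textbf{Main obstacle.} The delicate point is the bookkeeping of the two-stage temperature. The $T$-term must be bounded using the $T$-branch of $\tau^{-1}$, and the $\epsilon$-term using the $\epsilon$-branch. Both are legitimate because $\tau$ is the max of the two branches. The step-size admissibility conditions hold because they are stated relative to $\tau$. The requirement $\tau\le\tau_{max}$ is an explicit assumption.
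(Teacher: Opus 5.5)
Your route matches the paper's: apply Corollary~\ref{cor:universal-suboptimal-mass}, insert the average-iterate and last-iterate bounds of Theorem~\ref{thm:linfty-reg-conv}, and substitute the temperature. The average-iterate part is correct. For the last iterate, your branch-wise bounds $\tau^{-4/3}\le(2\log(C_\gamma T^{2/3})/\Delta)^{4/3}$ and $\tau^{-1/2}\le(2\log(C_\gamma(1+\epsilon_\infty^{-1}))/\Delta)^{1/2}$ are valid, and they neatly replace the paper's split into sample-limited and approximation-limited regimes.

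The tail step, however, is wrong. The tail $C_{tail}e^{-\Delta/(2\tau)}$ is \emph{increasing} in $\tau$. Since $\tau_T^{last}$ is the maximum of the two branches,
\[
e^{-\Delta/(2\tau_T^{last})}=\max\Big(C_\gamma^{-1}T^{-2/3},\;C_\gamma^{-1}\frac{\epsilon_\infty}{1+\epsilon_\infty}\Big),
\]
not the minimum. It is therefore bounded \emph{below} by each branch, not above. Your closing justification (``both are legitimate because $\tau$ is the max'') holds only for terms that decrease in $\tau$. Concretely, fix $\epsilon_\infty>0$ and take $T^{2/3}>1+\epsilon_\infty^{-1}$. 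The temperature then locks at the $\epsilon$-branch, and the tail equals $\frac{A_{max}}{1-\gamma}\frac{\epsilon_\infty}{1+\epsilon_\infty}$, which does not decay in $T$ at all. So your claim that the tail is $\mathcal{O}((1-\gamma)^{-2}T^{-2/3})$ and dominated by the $T^{-1/3}$ term is false.

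The repair is short. Bound the maximum by the sum, so the tail is at most $(1-\gamma)^{-2}\big(T^{-2/3}+\frac{\epsilon_\infty}{1+\epsilon_\infty}\big)$. Then use $\frac{\epsilon_\infty}{1+\epsilon_\infty}\le\frac12\sqrt{\epsilon_\infty}$ to absorb the second piece into the stated $\sqrt{\epsilon_\infty}$ term. That term's coefficient $(1-\gamma)^{-7/2}\kappa^{-1}\Delta^{-3/2}$ dominates $(1-\gamma)^{-2}$, because $\Delta\le A_{max}\le(1-\gamma)^{-1}$ and $\kappa\le B_\phi^2$. This is what the paper's approximation-limited regime does: it bounds the tail by $\mathcal{O}(\epsilon)$ and subsumes it into the $\tilde{\mathcal{O}}(\sqrt{\epsilon})$ bias. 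With this correction your argument is complete.
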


\begin{prf}
(i) \textit{Average-Iterate Bound.}
By the Universal Unregularized Suboptimality Bound (Corollary~\ref{cor:universal-suboptimal-mass}) and the average-iterate bound in Theorem~\ref{thm:linfty-reg-conv},
\begin{align*}
\frac{1}{T}\sum_{t=0}^{T-1}\mathbb{E}\big[J_0(\pi^*_0)-J_0(\pi_t)\big]
\le &\ \mathcal{O}\left(\frac{T^{-2/3}}{(1-\gamma)^8\kappa^6\Delta(\tau_T^{avg})^{5/3}}\right)
+\tilde{\mathcal{O}}\left(\frac{T^{-1}}{(1-\gamma)^8\kappa^6\Delta(\tau_T^{avg})^2}\right) \\
&+\mathcal{O}\left(\frac{\epsilon_\infty}{(1-\gamma)^2\Delta}\right)
+C_{tail}\exp\left(-\frac{\Delta}{2\tau_T^{avg}}\right).
\end{align*}
Substituting $\tau_T^{avg}=\frac{\Delta}{2\log(C_\gamma T^{2/3})}$ and
following similar calculation as in the proof of Theorem~\ref{thm:PMD-unreg-average}
leads to the stated average-iterate bound.

(ii) \textit{Last-Iterate Bound.}
We first consider $\epsilon_\infty>0$. When $\epsilon_\infty=0$, the approximation term vanishes and the result follows from the sample-limited temperature $\tau_T^{last}=\frac{\Delta}{2\log(C_\gamma T^{2/3})}$.

By Corollary~\ref{cor:universal-suboptimal-mass} and the last-iterate bound in Theorem~\ref{thm:linfty-reg-conv},
\begin{align*}
\mathbb{E}\big[J_0(\pi^*_0)-J_0(\pi_T)\big]
\le &\ \mathcal{O}\left(\frac{T^{-1/3}}{(1-\gamma)^6\kappa^4\Delta(\tau_T^{last})^{4/3}}\right)
+\mathcal{O}\left(\frac{\sqrt{\epsilon_\infty}}{(1-\gamma)^3\kappa\Delta(\tau_T^{last})^{1/2}}\right) \\
&+C_{tail}\exp\left(-\frac{\Delta}{2\tau_T^{last}}\right).
\end{align*}
Substituting the two-stage temperature $\tau_T^{last}$ and following similar calculation as in the proof of Theorem~\ref{thm:PMD-unreg-last}
leads to the stated last-iterate bound.
\end{prf}

\subsection{Generalization to Exploratory $\nu$-Sampling (Deterministic Regime)}

We extend the concentrability-free unregularized convergence bounds in the Deterministic Regime (Part II) to accommodate the exploratory $\nu$-sampling introduced in Appendix~\ref{app:nu_sampling}. Because the $L_\infty$ approximation bound bypasses the need for state measure shifts, this extension does not require the Global Off-Policy Joint Mismatch (Assumption~\ref{ass:generalized-off-policy}(iv)),
while it relies on the positive-definite moment condition in Assumption~\ref{ass:generalized-nu-exp}, replacing the main-text lower bound $\kappa$ by $\kappa_{exp}$. Therefore, Theorem~\ref{thm:linfty-unreg-conv} applies directly with this replacement, retaining the one-stage temperature for the average iterate and the two-stage temperature for the last iterate.

\begin{cor}[Concentrability-Free Unregularized Convergence under $\nu$-Sampling] \label{cor:linfty-nu-sampling}
For Algorithm~\ref{alg:uncentered-nac} with exploratory $\nu$-sampling, suppose that Assumptions~\ref{ass:bounded-features}--\ref{ass:bounded-q}, \ref{ass:action-gap}, \ref{ass:generalized-nu-exp}, and \ref{ass:linfty-error} hold.

For the average iterate, define the one-stage temperature
$$ \tau_T^{avg} \triangleq \frac{\Delta}{2\log(C_\gamma T^{2/3})}. $$
If $T$ is sufficiently large such that $\tau_T^{avg}\le\tau_{max}$, then by choosing the step-size parameters as in Theorem~\ref{thm:linfty-reg-conv} with $\kappa$ replaced by $\kappa_{exp}$ and evaluated at $\tau=\tau_T^{avg}$, the expected average global unregularized performance gap satisfies
\begin{align*}
\frac{1}{T}\sum_{t=0}^{T-1}\mathbb{E}\big[J_0(\pi^*_0)-J_0(\pi_t)\big]
&\le \mathcal{O}\left( \frac{(1-\gamma)^{-5/3}+(\log T)^{5/3}}{(1-\gamma)^8\kappa_{exp}^6\Delta^{8/3}T^{2/3}} \right)
+ \mathcal{O}\left( \frac{\epsilon_\infty}{(1-\gamma)^2\Delta} \right) \\
&= \tilde{\mathcal{O}}(T^{-2/3})+\mathcal{O}(\epsilon_\infty).
\end{align*}

For the last iterate, define the two-stage temperature
$$ \tau_T^{last} \triangleq \max\left( \frac{\Delta}{2\log(C_\gamma T^{2/3})}, \frac{\Delta}{2\log(C_\gamma(1+\epsilon_\infty^{-1}))} \right). $$
If $T$ is sufficiently large and $\epsilon_\infty$ is sufficiently small such that $\tau_T^{last}\le\tau_{max}$, then by choosing the step-size parameters as in Theorem~\ref{thm:linfty-reg-conv} with $\kappa$ replaced by $\kappa_{exp}$ and evaluated at $\tau=\tau_T^{last}$, the expected last-iterate global unregularized performance gap satisfies
\begin{align*}
\mathbb{E}\big[J_0(\pi^*_0)-J_0(\pi_T)\big]
&\le \mathcal{O}\left( \frac{(1-\gamma)^{-4/3}+(\log T)^{4/3}}{(1-\gamma)^6\kappa_{exp}^4\Delta^{7/3}T^{1/3}} \right) \\
&\quad + \mathcal{O}\left( \frac{(1-\gamma)^{-1/2}+\sqrt{\log(1+\epsilon_\infty^{-1})}}{(1-\gamma)^3\kappa_{exp}\Delta^{3/2}}\sqrt{\epsilon_\infty} \right) \\
&= \tilde{\mathcal{O}}(T^{-1/3})+\tilde{\mathcal{O}}(\sqrt{\epsilon_\infty}).
\end{align*}
In the case of $\epsilon_\infty=0$, we interpret $\tau_T^{last}=\frac{\Delta}{2\log(C_\gamma T^{2/3})}$ and $\epsilon_\infty\log(1+\epsilon_\infty^{-1})=0$.
\end{cor}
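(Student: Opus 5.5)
The plan is to show that the argument of Theorem~\ref{thm:linfty-unreg-conv} goes through unchanged once two ingredients are re-verified under exploratory $\nu$-sampling. These are (a) the concentrability-free PMD Actor Progress inequality (Lemma~\ref{lem:linfty-pmd}) and (b) the Pointwise Critic Tracking bound (Lemma~\ref{lem:decoupled-critic}), with $\kappa$ replaced by $\kappa_{exp}$.

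For (a), the actor update is still the exact log-linear PMD step. Hence Lemma~\ref{lem:sub-decomp}(ii), applied with comparator $\pi^*_\tau$ under $d^*_\tau$, and Lemma~\ref{lem:pmd-progress} hold verbatim. They depend only on the actor parameterization and the boundedness $\|\theta_{t+1}\|_2\le B_\theta$, $\|\omega_t\|_2\le B_\theta/\tau$, where $B_\theta$ is now defined via $\kappa_{exp}$ as in item (i) of Appendix~\ref{app:generalized-sampling}. The tracking-error penalty bound $\frac{\tau}{8}D_t^\dag+\frac{4B_\phi^2}{\tau}\|e_t\|_2^2$ involves no change of measure. The approximation bias is bounded pointwise by Hölder as $\max_a|\epsilon_t(s,a)|\,\|\pi^*_\tau-\pi_t\|_1\le 2\epsilon_\infty$. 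Here $\epsilon_t$ is the residual of the $\nu$-redefined ideal critic, and Assumption~\ref{ass:linfty-error} is read for that critic. Since this bound is uniform in $s$, no state shift appears, so the Global Off-Policy Joint Mismatch is not needed. The resulting single-step inequality is identical to Lemma~\ref{lem:linfty-pmd}.

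For (b), the redefined ideal critic satisfies orthogonality $\mathbb{E}_{d^{\pi_t}_\nu}[\phi\epsilon_t]=0$. The expected SGD direction is therefore $\bar\Sigma_{unc}(d^{\pi_{t+1}}_\nu)\tilde e_t$ with $\bar\Sigma_{unc}(d^{\pi_{t+1}}_\nu)\succeq\kappa_{exp}I$ by Assumption~\ref{ass:generalized-nu-exp}, which gives the contraction $1-\alpha_{t+1}\kappa_{exp}$. The gradient second moment $G_{cr}^2$ is controlled by Assumption~\ref{ass:bounded-q} together with the rollout estimator. Target drift is controlled by the Lipschitz property with $\kappa\to\kappa_{exp}$, via item (ii) of Appendix~\ref{app:generalized-sampling}. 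Lemma~\ref{lem:fractional-chung} then yields $Z_t\le v/(t+t_0)^{2/3}$ with $\kappa_{exp}$ in $v$ and $K$.

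Given (a) and (b), I will repeat the proof of Theorem~\ref{thm:linfty-reg-conv}. For the average iterate this means value telescoping as in Lemma~\ref{lem:telescoping-pmd}. For the last iterate it means Lemma~\ref{lem:fractional-chung-bias} followed by Lemma~\ref{lem:pinsker-gap}, using $M_{max}=\mathcal{O}((1-\gamma)^{-1}\kappa_{exp}^{-1})$. Every constant transforms by $\kappa\mapsto\kappa_{exp}$. I then combine with Corollary~\ref{cor:universal-suboptimal-mass}, which holds because Assumption~\ref{ass:action-gap} is intrinsic to the MDP. Finally I substitute $\tau_T^{avg}$ and $\tau_T^{last}$ exactly as in the proof of Theorem~\ref{thm:linfty-unreg-conv}; the tail terms become $\mathcal{O}(T^{-2/3})$ and $\mathcal{O}(\epsilon_\infty)$. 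The main point needing care is (a). One must confirm that the Actual Critic Decomposition uses the $\nu$-defined $\theta^*_\tau(\omega_t)$ purely as an algebraic splitting $Q_\tau^{\pi_t}=\theta^{*\top}_\tau\phi+\epsilon_t$. It never invokes on-policy orthogonality, so the change of training measure is invisible to the actor analysis. The remaining work is bookkeeping of the constants.
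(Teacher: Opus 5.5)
Your proposal is correct and follows the paper's own route. The paper argues that the $L_\infty$ bias bound removes every change of state measure, so Assumption~\ref{ass:generalized-off-policy}(iv) is never needed, and that critic tracking only requires Assumption~\ref{ass:generalized-nu-exp}; hence Theorem~\ref{thm:linfty-unreg-conv} applies directly with $\kappa$ replaced by $\kappa_{exp}$, and your re-verification of Lemmas~\ref{lem:linfty-pmd} and \ref{lem:decoupled-critic} (including the point that the $\nu$-defined critic enters the decomposition only as an algebraic split) spells out what the paper asserts in one paragraph.
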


\subsection{Impact on Part I and Geometric Bottleneck}

Applying the same $L_\infty$ mechanism to the Stochastic Regime removes the joint concentrability assumption from the Parameter-Space PL Condition (Lemma~\ref{lem:pl-condition}) and improves the bias scaling, bypassing the $\tau^{-1}$ amplification associated with $\epsilon_{app}$.

\begin{lem}[Concentrability-Free PL Condition] \label{lem:linfty-pl-condition}
Under Assumptions~\ref{ass:bounded-features} and \ref{ass:linfty-error}, the parametric regularized performance gap satisfies the PL condition without any dependence on $C_{joint}$:
$$\text{Gap}_t \le \frac{B_\phi^2}{2(1-\gamma)\tau} \|\theta^*_\tau(\omega_t) - \tau \omega_t\|_2^2 + \frac{2}{1-\gamma} \epsilon_\infty.$$
\end{lem}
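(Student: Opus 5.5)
We follow the proof of Lemma~\ref{lem:pl-condition} and change only the bound on the approximation bias. First apply the Ideal Critic Decomposition (Lemma~\ref{lem:sub-decomp}(i)) with comparator $\pi=\pi_{\omega^*_\tau}$. Since $\text{Gap}_t=J_\tau(\pi_{\omega^*_\tau})-J_\tau(\pi_t)$, this gives
\begin{align*}
(1-\gamma)\text{Gap}_t &= -\tau D_t + \mathbb{E}_{d^{\omega^*_\tau}}\Big[\langle (\theta^*_\tau(\omega_t)-\tau\omega_t)^\top\phi, \pi_{\omega^*_\tau}-\pi_t\rangle_{\mathcal{A}}\Big] + \mathbb{E}_{d^{\omega^*_\tau}}\big[\langle \epsilon_t, \pi_{\omega^*_\tau}-\pi_t\rangle_{\mathcal{A}}\big],
\end{align*}
where $D_t=\mathbb{E}_{d^{\omega^*_\tau}}[D_{KL}(\pi_{\omega^*_\tau}\|\pi_t)]$.

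Next bound the Ideal Algorithmic Progress term. Write $g\triangleq\theta^*_\tau(\omega_t)-\tau\omega_t$. By H\"older's inequality over actions, $|\langle g^\top\phi,\pi_{\omega^*_\tau}-\pi_t\rangle_{\mathcal{A}}|\le B_\phi\|g\|_2\,\|\pi_{\omega^*_\tau}-\pi_t\|_1$. Pinsker's inequality then gives $\|\pi_{\omega^*_\tau}-\pi_t\|_1\le\sqrt{2D_{KL}(\pi_{\omega^*_\tau}\|\pi_t)}$ statewise. Applying Young's inequality $ab\le \frac{a^2}{2\tau}+\frac{\tau b^2}{2}$ with $a=B_\phi\|g\|_2$ and $b=\sqrt{2D_{KL}}$, each state contributes at most $\frac{B_\phi^2}{2\tau}\|g\|_2^2+\tau D_{KL}(\pi_{\omega^*_\tau}\|\pi_t)$. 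Taking expectation under $d^{\omega^*_\tau}$, the KL part is exactly $\tau D_t$, which cancels the Restorative Entropy Force $-\tau D_t$. The full weight of the restorative term is therefore spent here. This is why the coefficient $\frac{B_\phi^2}{2\tau}$ appears rather than the $\frac{B_\phi^2}{\tau}$ of Lemma~\ref{lem:pl-condition}: there, half of the restorative term was needed to absorb the $\chi^2$ approximation penalty.

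Then bound the Approximation Bias directly under Assumption~\ref{ass:linfty-error}, without any change of measure. Statewise, $|\langle\epsilon_t,\pi_{\omega^*_\tau}-\pi_t\rangle_{\mathcal{A}}|\le \max_a|\epsilon_t(s,a)|\,\|\pi_{\omega^*_\tau}-\pi_t\|_1\le 2\epsilon_\infty$. Its expectation under $d^{\omega^*_\tau}$ is at most $2\epsilon_\infty$, with no density ratio and hence no $C_{joint}$ and no $\chi^2$-to-KL step.

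Combining the three pieces gives $(1-\gamma)\text{Gap}_t\le \frac{B_\phi^2}{2\tau}\|g\|_2^2+2\epsilon_\infty$, and dividing by $1-\gamma$ yields the claim. No step is a real obstacle. The one point to check is that the Young split uses up all of $-\tau D_t$, which is allowed because the bias bound needs no KL slack.
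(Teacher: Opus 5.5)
Your proposal is correct and follows the paper's proof essentially step for step. You use the Ideal Critic Decomposition at $\pi_{\omega^*_\tau}$, then H\"older, Pinsker and a Young split that consumes the full $-\tau D_t$ (giving the $\frac{B_\phi^2}{2\tau}$ coefficient), and finally the direct $L_\infty$ H\"older bound $2\epsilon_\infty$ on the bias. The only cosmetic difference is that you apply Young statewise before averaging, whereas the paper first uses Jensen to reach $\sqrt{2D_t}$ and then applies Young; both give the same constants.
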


\begin{prf}
By the Ideal Critic Decomposition (Lemma~\ref{lem:sub-decomp}(i)) evaluated at $\pi = \pi_{\omega^*_\tau}$, the gap is:
$$(1-\gamma) \text{Gap}_t = -\tau D_t + \mathbb{E}_{d^{\omega^*_\tau}} \Big[ \langle (\theta^*_\tau(\omega_t) - \tau \omega_t)^\top \phi, \pi_{\omega^*_\tau} - \pi_t \rangle_{\mathcal{A}} \Big] + \mathbb{E}_{d^{\omega^*_\tau}} \big[ \langle \epsilon_t, \pi_{\omega^*_\tau} - \pi_t \rangle_{\mathcal{A}} \big].$$
By H\"{o}lder's inequality and Assumption~\ref{ass:linfty-error}, the approximation bias is bounded:
$$ \mathbb{E}_{d^{\omega^*_\tau}} \big[ \langle \epsilon_t, \pi_{\omega^*_\tau} - \pi_t \rangle_{\mathcal{A}} \big] \le \mathbb{E}_{d^{\omega^*_\tau}} \Big[ \max_a |\epsilon_t(s,a)| \|\pi_{\omega^*_\tau} - \pi_t\|_1 \Big] \le 2\epsilon_\infty. $$
For the ideal algorithmic progress, we revisit the bound established in Step (i) of the proof of Lemma~\ref{lem:pl-condition}:
$$\mathbb{E}_{d^{\omega^*_\tau}} \Big[ \langle (\theta^*_\tau(\omega_t) - \tau \omega_t)^\top \phi, \pi_{\omega^*_\tau} - \pi_t \rangle_{\mathcal{A}} \Big] \le B_\phi \|\theta^*_\tau(\omega_t) - \tau \omega_t\|_2 \sqrt{2 D_t}.$$
We decouple this product using Young's inequality ($xy \le \frac{\tau}{2}x^2 + \frac{1}{2\tau}y^2$) with $x = \sqrt{2D_t}$ and $y = B_\phi \|\theta^*_\tau(\omega_t) - \tau \omega_t\|_2$:
$$ B_\phi \|\theta^*_\tau(\omega_t) - \tau \omega_t\|_2 \sqrt{2 D_t} \le \tau D_t + \frac{B_\phi^2}{2\tau} \|\theta^*_\tau(\omega_t) - \tau \omega_t\|_2^2. $$
Substituting these bounds back into the decomposition, the $\tau D_t$ term cancels the restorative entropy force $-\tau D_t$. Dividing the entire inequality by $1-\gamma$ yields the stated result.
\end{prf}

\textbf{The $C_{joint}$ Geometric Bottleneck:} While Lemma~\ref{lem:linfty-pl-condition} eliminates $C_{joint}$ from the PL condition, the Stochastic Regime cannot be completely freed from this concentrability bound under our current assumptions. As derived in the Actor Progress Bound (Lemma~\ref{lem:pl-ascent}), establishing Euclidean parameter progress requires lower-bounding the eigenvalues of the on-policy Fisher information matrix: $\bar{\Sigma}_{cen}(\pi_t)$. Because the curvature guarantee $\lambda$ (Assumption~\ref{ass:pd-fim}) is anchored at the parametric optimum $\pi_{\omega^*_\tau}$, we are forced to shift the measure: $\bar{\Sigma}_{cen}(\pi_t) \succeq \frac{1}{C_{joint}} \bar{\Sigma}_{cen}(\pi_{\omega^*_\tau}) \succeq \frac{\lambda}{C_{joint}} I$. This mapping of the current policy's state-action measure to the optimal policy's measure introduces the joint concentrability constant $C_{joint}$ as a geometric bottleneck. However, if it were assumed that $\bar{\Sigma}_{cen}(\pi_t)$ is uniformly positive-definite across the training trajectory (analogous to the uncentered feature moment in Assumption~\ref{ass:pd-uncentered}), this reliance on $C_{joint}$ would be bypassed.

\section{Related Works}  \label{app:related-works}

\subsection{Agarwal et al. (2021)} \label{app:Agarwal-et-al}

\cite{agarwal2021theory} studied an unregularized Q-NPG algorithm with a double-loop architecture and an exploratory $\nu$-sampling measure.
We mainly discuss their Corollary 21 and compare it with our Part II results under generalized $\nu$-sampling (Appendix \ref{app:nu_sampling}).

\textbf{Unregularized Q-NPG Algorithm.}
Translated to our notation, the unregularized Q-NPG algorithm updates the actor parameter $\omega_t$ via:
$$ \omega_{t+1} = \omega_t + \eta_t \theta_{t+1}, $$
where the critic parameter $\theta_{t+1}$ approximates the ideal minimizer
$ \theta^*_0(\omega_t) \triangleq \arg\min_{\theta \in \mathbb{R}^d} L(\theta; \omega_t, d^{\pi_t}_\nu)$,
and the MSE objective in the unregularized $Q$-value projection is
$$ L(\theta; \omega_t, d^{\pi_t}_\nu) \triangleq \mathbb{E}_{(s,a) \sim d^{\pi_t}_\nu} \Big[ \big(Q_0^{\pi_t}(s,a) - \theta^\top \phi(s,a) \big)^2 \Big] $$
under the exploratory $\nu$-sampling measure as described in Appendix \ref{app:nu_sampling}.

To estimate this $Q$-value projection from samples, \cite{agarwal2021theory} employ a double-loop architecture (Algorithm~\ref{alg:double-loop-npg}). The algorithm performs $T$ outer-loop actor updates. Within each outer-loop iteration $t$, the actor parameter $\omega_t$ is frozen, and an inner loop executes $N$ stochastic gradient descent (SGD) steps to train the critic.
The total number of SGD updates is therefore $T_{total} = T \times N$. As with our Algorithm~\ref{alg:uncentered-nac}, translating this into raw environment interactions incurs an additional $\mathcal{O}((1-\gamma)^{-1})$ expected-horizon factor due to the Monte Carlo rollouts.

\begin{algorithm}[t]
\caption{Double-Loop Unregularized Q-NPG \citep{agarwal2021theory,yuan2023linear}}
\label{alg:double-loop-npg}
\begin{algorithmic}[1]
\REQUIRE Initial actor parameter $\omega_0$, exploratory distribution $\nu$, inner-loop iterations $N$, outer-loop iterations $T$, step sizes $\{\eta_t\}_{t \ge 0}$ and $\{\alpha_t\}_{t \ge 0}$.
\FOR{$t = 0, 1, \dots, T-1$}
    \STATE \textbf{Actor Freeze:} Fix the current policy $\pi_t \triangleq \pi_{\omega_t}$.
    \STATE Initialize inner critic parameter $\theta_t^{(0)}$.
    \FOR{$k = 0, 1, \dots, N-1$}
        \STATE \textbf{Sampling:} Draw a state-action pair $(s_k, a_k) \sim d^{\pi_t}_\nu$.
        \STATE \textbf{Evaluation:} Obtain an unbiased estimate $\hat{Q}_k$ for the unregularized $Q$-value $Q_0^{\pi_t}(s_k, a_k)$.
        \STATE \textbf{Critic SGD:} Update the inner critic parameter (possibly with additional projection):
        $$\theta_t^{(k+1)} = \theta_t^{(k)} + \alpha_t \big( \hat{Q}_k - \theta_t^{(k)\top} \phi(s_k, a_k) \big) \phi(s_k, a_k)$$
    \ENDFOR
    \STATE \textbf{Critic Assignment:} Set the outer-loop critic parameter $\theta_{t+1} = \theta_t^{(N)}$ (or an average of the inner iterates). 
    \STATE \textbf{Actor Update:} Update the actor parameter:
    $$\omega_{t+1} = \omega_t + \eta_t \theta_{t+1}$$
\ENDFOR
\ENSURE The output policy sequence $\{\pi_t\}_{t=0}^T$.
\end{algorithmic}
\end{algorithm}

\textbf{Notation Mapping.}
To bridge \cite{agarwal2021theory} to our generalized $\nu$-sampling analysis (Appendix~\ref{app:nu_sampling}), we map the error bounds used in \cite{agarwal2021theory} to our notation.
\begin{itemize}
\item \textit{Statistical Estimation Error ($\epsilon_{stat}$):} \cite{agarwal2021theory} define $\epsilon_{stat}$ to bound the excess risk of the sample-based critic parameter $\theta_{t+1}$ (denoted $w^{(t)}$ in their work) compared to the ideal least-squares minimizer $\theta^*_0(\omega_t)$, evaluated under the $\nu$-induced training distribution:
$$ \mathcal{E}_{stat}(t) \triangleq \mathbb{E} \left[ L(\theta_{t+1}; \omega_t, d^{\pi_t}_\nu) - L(\theta^*_0(\omega_t); \omega_t, d^{\pi_t}_\nu) \right] \le \epsilon_{stat}. $$
Because $\theta^*_0(\omega_t)$ is a minimum of this quadratic objective, the excess risk evaluates exactly to the expected squared tracking error weighted by the uncentered feature moment matrix under the $\nu$-measure:
$$ \mathcal{E}_{stat}(t)
=\mathbb{E} \Big[ (\theta_{t+1} - \theta^*_0(\omega_t))^\top \bar{\Sigma}_{unc}(d^{\pi_t}_\nu) (\theta_{t+1} - \theta^*_0(\omega_t)) \Big] . $$
Under the feature bound $\bar{\Sigma}_{unc}(d^{\pi_t}_\nu)\preceq B_\phi^2 I$ and the exploratory positive-definiteness condition $\bar{\Sigma}_{unc}(d^{\pi_t}_\nu)\succeq\kappa_{exp}I$, we have
$$
\frac{\mathcal{E}_{stat}(t)}{B_\phi^2}
\le
\mathbb{E}\big[\|\theta_{t+1}-\theta^*_0(\omega_t)\|_2^2\big]
\le
\frac{\mathcal{E}_{stat}(t)}{\kappa_{exp}}
\le
\frac{\epsilon_{stat}}{\kappa_{exp}}.
$$
Thus, their $\epsilon_{stat}$ directly controls our expected critic tracking error.

For our comparison, we focus on the strengthened statistical setting in Remark 27 of \cite{agarwal2021theory}, where a positive lower bound on the minimum eigenvalue of the training feature moment matrix yields $\epsilon_{stat}=\mathcal{O}(d/N)$. This is consistent with our exploratory positive-definiteness assumption $\bar{\Sigma}_{unc}(d^{\pi_t}_\nu)\succeq\kappa_{exp}I$, and we absorb the feature dimension $d$ into the structural constants to write $\epsilon_{stat}=\mathcal{O}(1/N)$.

\item \textit{Approximation Error ($\epsilon_{approx}$):} \cite{agarwal2021theory} also define $\epsilon_{approx}$ as the inherent mean squared error of the ideal linear fit under the $\nu$-induced training distribution:
$$ \mathbb{E}_{(s,a) \sim d^{\pi_t}_\nu} \Big[ \big( Q_0^{\pi_t}(s,a) - \theta^*_0(\omega_t)^\top \phi(s,a) \big)^2 \Big] = \mathbb{E}_{d^{\pi_t}_\nu} \big[ \epsilon_t(s,a)^2 \big] \le \epsilon_{approx}. $$
Hence, their $\epsilon_{approx}$ bound corresponds directly to our $\epsilon_{app}$ bound.
\end{itemize}

\textbf{Structural Assumptions.}
To control the distribution mismatch, Corollary 21 of \cite{agarwal2021theory} requires a single uniform bound:
$$ \sup_{s,a} \frac{d^*_0(s) \frac{1}{|\mathcal{A}|}}{\nu(s,a)} \le C_{AKLM}. $$
While \cite{agarwal2021theory} allow $d^*_0$ to be an arbitrary comparator state visitation distribution, for concreteness, we take $d^*_0$ to be that associated with the unregularized global optimum $\pi^*_0$.
Note that because bounding the density ratio inherently bounds the feature moment matrix, their $C_{AKLM}$ assumption implies their Assumption 6.2, which requires a finite relative condition number $\sup_{\theta} \frac{\theta^\top \bar{\Sigma}_{unc}(d^*_0 \times \frac{1}{|\mathcal{A}|}) \theta}{\theta^\top \bar{\Sigma}_{unc}(\nu) \theta} < \infty$.

As shown in their analysis and replicated in Eq.~\eqref{eq:Agarwal-PDL}, bounding the error term $\mathbb{E}_{d^*_0} \big[ \langle err_t, \pi^*_0 - \pi_t \rangle_{\mathcal{A}} \big]$ only requires two specific mismatch bounds:
$$ \sup_{s,a} \frac{d^*_0(s) \pi_t(a|s) }{\nu(s,a)} \le \tilde{C}_{AKLM} \quad
\text{and} \quad \sup_{s,a} \frac{d^*_0(s) \pi^*_0(a|s) }{\nu(s,a)} \le \tilde{C}_{AKLM}. $$
Because $d^{\pi_t}_\nu(s,a) \ge (1-\gamma)\nu(s,a)$, these mismatch bounds yield
\begin{align}
\sup_{s,a} \frac{d^*_0(s) \pi_t(a|s) }{d^{\pi_t}_\nu(s,a)} \le \frac{\tilde{C}_{AKLM}}{1-\gamma} \quad
\text{and} \quad \sup_{s,a} \frac{d^*_0(s) \pi^*_0(a|s) }{d^{\pi_t}_\nu(s,a)} \le \frac{\tilde{C}_{AKLM}}{1-\gamma}. \label{eq:Agarwal-et-al-concen}
\end{align}
Their original $1/|\mathcal{A}|$ formulation acts as a conservative, policy-independent sufficient condition for these bounds.
The two mismatch bounds in \eqref{eq:Agarwal-et-al-concen} correspond to the unregularized limits of those in \eqref{eq:global-mismatch-nu-a} and \eqref{eq:global-mismatch-nu-b} respectively, deduced from our Assumption~\ref{ass:generalized-off-policy}(iv). Due to the presence of the additional local policy mismatch term, the unregularized limit of our Assumption~\ref{ass:generalized-off-policy}(iv) is, in general, more restrictive than the two bounds in \eqref{eq:Agarwal-et-al-concen}.

In the on-policy specialization where, at each iteration $t$, the restart distribution is chosen as $\nu_t(s,a)=\mu(s)\pi_t(a|s)$, the induced training measure reduces to$
d^{\pi_t}_{\nu_t}(s,a)=d^{\pi_t}(s)\pi_t(a|s)$.
Under this specialization, the two bounds reduce to:
$$
\sup_s\frac{d^*_0(s)}{d^{\pi_t}(s)}
\le\frac{\tilde{C}_{AKLM}}{1-\gamma}
\quad\text{and}\quad
\sup_{s,a}\frac{d^*_0(s)\pi^*_0(a|s)}
{d^{\pi_t}(s)\pi_t(a|s)}
\le\frac{\tilde{C}_{AKLM}}{1-\gamma},
$$
respectively. The second bound, which implies the first, represents the exact unregularized counterpart to our global joint concentrability assumption (Assumption~\ref{ass:density-ratios}).

\textbf{Convergence Bounds.}
\cite{agarwal2021theory}, Corollary 21, establishes the following bound for the best-iterate performance gap 
by employing a constant actor step size $\eta_t = \eta \propto 1/\sqrt{T}$:
\begin{align}
\mathbb{E}\left[\min_{t<T} \big\{ J_0(\pi^*_0) - J_0(\pi_t) \big\}\right] \le \mathcal{O}\left(\frac{1}{\sqrt{T}}\right) + \mathcal{O}\left(\sqrt{\epsilon_{stat}}\right) + \mathcal{O}\left(\sqrt{\epsilon_{app}}\right), \label{eq:conv-bound-Agarwal}
\end{align}
where the proof also yields the corresponding average-iterate bound.
Substituting the strongly-convex inner-loop decay $\epsilon_{stat} = \mathcal{O}(1/N)$, the bound evaluates to:
$$ \mathbb{E}\left[\min_{t<T} \big\{ J_0(\pi^*_0) - J_0(\pi_t) \big\}\right] \le \mathcal{O}\left(\frac{1}{\sqrt{T}}\right) + \mathcal{O}\left(\frac{1}{\sqrt{N}}\right) + \mathcal{O}\left(\sqrt{\epsilon_{app}}\right), $$
as mentioned in \cite{agarwal2021theory}, Remark 27.

To minimize the total algorithmic error subject to a fixed budget of SGD updates $T_{total} = T \times N$, the outer-loop decay is balanced against the inner-loop statistical error by setting $1/\sqrt{T} \approx 1/\sqrt{N}$, yielding $T = N = \sqrt{T_{total}}$.
Substituting these balanced sizes back shows that the unregularized performance gap of the double-loop Q-NPG algorithm is bounded by:
$$ \mathbb{E}\left[\min_{t<T} \big\{ J_0(\pi^*_0) - J_0(\pi_t) \big\}\right] \le \mathcal{O}\left(\frac{1}{T_{total}^{1/4}}\right) + \mathcal{O}\left(\sqrt{\epsilon_{app}}\right). $$

\textbf{Main Steps of Analysis.}
The analysis in \cite{agarwal2021theory} relies on the Performance Difference Lemma (PDL) and the Bregman Three-Point Identity. By PDL, the unregularized performance gap decomposes into the algorithmic update direction and the error $err_t(s,a) \triangleq Q_0^{\pi_t}(s,a) - \theta_{t+1}^\top \phi(s,a)$, notably lacking a negative KL divergence term:
\begin{align}
(1-\gamma) \big( J_0(\pi^*_0) - J_0(\pi_t) \big) = \mathbb{E}_{d^*_0} \Big[ \langle \theta_{t+1}^\top \phi, \pi^*_0 - \pi_t \rangle_{\mathcal{A}} \Big] + \mathbb{E}_{d^*_0} \Big[ \langle err_t, \pi^*_0 - \pi_t \rangle_{\mathcal{A}} \Big]. \label{eq:Agarwal-PDL}
\end{align}
By substituting the log-linear parameterization into the Bregman Three-Point Identity evaluated against the \textit{current} policy $\pi_t$ (i.e., $\langle \log \pi_{t+1} - \log \pi_t, \pi^*_0 - \pi_t \rangle_{\mathcal{A}} = D_{KL}(\pi^*_0 \| \pi_t) - D_{KL}(\pi^*_0 \| \pi_{t+1}) + D_{KL}(\pi_t \| \pi_{t+1})$),
the first term is directly related to the reduction in KL divergences:
$$ \eta_t \langle \theta_{t+1}^\top \phi, \pi^*_0 - \pi_t \rangle_{\mathcal{A}} = D_{KL}(\pi^*_0 \| \pi_t) - D_{KL}(\pi^*_0 \| \pi_{t+1}) + D_{KL}(\pi_t \| \pi_{t+1}). $$
Bounding the local distance $D_{KL}(\pi_t \| \pi_{t+1})$ by $\mathcal{O}(\eta_t^2)$ leads to the inequality (which acts similarly to our PMD Algorithmic Progress bound):
$$ \langle \theta_{t+1}^\top \phi, \pi^*_0 - \pi_t \rangle_{\mathcal{A}} \le \frac{1}{\eta_t} \Big( D_{KL}(\pi^*_0 \| \pi_t) - D_{KL}(\pi^*_0 \| \pi_{t+1}) \Big) + \mathcal{O}(\eta_t). $$
Taking the expectation over $d^*_0$ and substituting this into the PDL equation \eqref{eq:Agarwal-PDL} yields a progress bound (mirroring inequality \eqref{eq:PMD-progress} in our proof of PMD Actor Progress Bound):
\begin{align}
(1-\gamma) \big( J_0(\pi^*_0) - J_0(\pi_t) \big) \le \frac{D_t^\dag-D_{t+1}^\dag}{\eta_t} + \mathbb{E}_{d^*_0} \Big[ \langle err_t, \pi^*_0 - \pi_t \rangle_{\mathcal{A}} \Big] + \mathcal{O}(\eta_t), \label{eq:PMD-progress-Agarwal}
\end{align}
where $D_t^\dag \triangleq \mathbb{E}_{d^*_0}[D_{KL}(\pi^*_0 \| \pi_t)]$, serving as the unregularized counterpart to $D_t^\dag$ in our analysis.
Averaging this inequality over $T$ iterations with a constant step size $\eta$ yields a bound of $\frac{D_0^\dag}{\eta T} + \mathcal{O}(\eta)$ up to the remaining error term.
For this error term, \cite{agarwal2021theory} apply Cauchy-Schwarz over the state-action space (e.g., $\mathbb{E}_{d^*_0, \pi^*_0}[err_t] \le \sqrt{\mathbb{E}_{d^*_0, \pi^*_0}[err_t^2]}$), trapping both the statistical estimation and approximation errors under a square root ($\mathcal{O}(\sqrt{\epsilon_{stat}}) + \mathcal{O}(\sqrt{\epsilon_{app}})$).
Combining these components yields exactly the convergence bound stated in \eqref{eq:conv-bound-Agarwal}.

\textbf{Comparison.}
Because both the analysis in \cite{agarwal2021theory} and our Part II analysis operate within the PMD geometry, we compare the resulting average-iterate bounds. In contrast to their $\mathcal{O}(T_{total}^{-1/4})+\mathcal{O}(\sqrt{\epsilon_{app}})$ bound, our single-loop average-iterate bound (Theorem~\ref{thm:PMD-unreg-average}) achieves an accelerated $\tilde{\mathcal{O}}(T_{total}^{-2/3})$ rate and a linear $\tilde{\mathcal{O}}(\epsilon_{app})$ bias dependence.

This improvement is driven by three core analytical choices in Part II:
\begin{itemize}
\item relying on entropy regularization to extract a Restorative Entropy Force term, which introduces a geometric contraction that enables PMD Value-Telescoping with diminishing step sizes ($\eta_t = \mathcal{O}(t^{-1})$) to bypass the $1/\sqrt{T}$ barrier;

\item exploiting the Global Joint Concentrability assumption or, with $\nu$-sampling, the Generalized Off-Policy Mismatch assumption, to simultaneously manage the state measure shift and facilitate the action-space $\chi^2$-to-KL divergence translation, decoupling the error inner product via Young's inequality after Cauchy-Schwarz and absorbing the resulting divergence penalty back into the Restorative Entropy Force; and

\item deriving the exponential translation mechanism under the Minimal Action Gap assumption to map regularized progress back to the unregularized objective, bypassing the standard linear entropy penalty.
\end{itemize}

\subsection{Yuan et al. (2023)} \label{app:Yuan-et-al}

\cite{yuan2023linear} analyzed the unregularized Q-NPG algorithm similar to that in \cite{agarwal2021theory}. The algorithm utilizes a double-loop architecture and trains the critic using an exploratory $\nu$-sampling measure as shown in Algorithm~\ref{alg:double-loop-npg}.

\textbf{Notation and Assumptions.}
To bridge \cite{yuan2023linear} to our framework, we align their structural assumptions and error definitions with our notation:
\begin{itemize}
    \item \textit{Distribution Mismatch ($\vartheta_\mu$):} They define a state distribution mismatch coefficient $\vartheta_\mu \triangleq \frac{1}{1-\gamma} \| d^*_0 / \mu \|_\infty$, which satisfies $\vartheta_\mu \ge 1/(1-\gamma)$. While \cite{yuan2023linear} allow $d^*_0$ to be an arbitrary comparator state visitation distribution, for concreteness, we take $d^*_0$ to be that associated with the unregularized global optimum $\pi^*_0$. Because $d^{\pi_t}(s) \ge (1-\gamma)\mu(s)$, this coefficient also satisfies $\vartheta_\mu \ge \| d^*_0 / d^{\pi_t} \|_\infty$. This serves as the unregularized counterpart to the implied state concentrability bound $\| d^*_\tau / d^{\pi_t} \|_\infty \le C_{joint}^\dag$ from our Assumption~\ref{ass:density-ratios}.

    \item \textit{$L_2$-Concentrability:} Their concentrability assumption (Assumption 6 in their paper) is postulated for several state-action distributions in terms of the $L_2$ norm of the density ratio over $d^{\pi_t}_\nu$ instead of the $L_\infty$ norm, including among others
        $$ \mathbb{E}_{(s,a)\sim d^{\pi_t}_\nu} \left[\left( \frac{d^*_0(s) \pi_t(a|s)}{d^{\pi_t}_\nu(s,a)} \right)^2 \right] \le C_{YDGLX} ,
        \quad \mathbb{E}_{(s,a)\sim d^{\pi_t}_\nu} \left[\left( \frac{d^*_0(s) \pi^*_0(a|s)}{d^{\pi_t}_\nu(s,a)} \right)^2 \right] \le C_{YDGLX} .$$
        These bounds correspond to the unregularized $L_2$ limits of the $L_\infty$ mismatch bounds in \eqref{eq:global-mismatch-nu-a} and \eqref{eq:global-mismatch-nu-b} respectively, deduced from our Assumption~\ref{ass:generalized-off-policy}(iv).

    \item \textit{Error Terms ($\epsilon_{stat}$ and $\epsilon_{app}$):} Their statistical estimation error $\epsilon_{stat}$ and linear approximation error $\epsilon_{approx}$ map identically to those defined in the \cite{agarwal2021theory} above.
\end{itemize}

\textbf{Convergence Bounds.}
Under these structural assumptions, their Theorem 3 establishes that for a geometrically increasing step size satisfying $\eta_{t+1} \ge \frac{1}{\gamma}\eta_t$ (with an initialization $\eta_0 \ge \frac{1-\gamma}{\gamma} D_{KL}(\pi^*_0 \| \pi_0)$), the expected unregularized performance gap at iteration $T$ is bounded by:
\begin{align}
\mathbb{E} \big[ J_0(\pi^*_0) - J_0(\pi_T) \big] \le \frac{2}{1-\gamma} \left(1 - \frac{1}{\vartheta_\mu}\right)^T + \mathcal{O}\left( \sqrt{\epsilon_{stat}} + \sqrt{\epsilon_{app}} \right). \label{eq:conv-bound-Yuan}
\end{align}
Substituting the standard inner-loop critic decay $\epsilon_{stat} = \mathcal{O}(1/N)$ into Theorem 3 yields the performance gap in terms of the algorithmic loops:
$$ \mathbb{E} \big[ J_0(\pi^*_0) - J_0(\pi_T) \big] \le \mathcal{O}\left( \left(1 - \frac{1}{\vartheta_\mu}\right)^T \right) + \mathcal{O}\left( \frac{1}{\sqrt{N}} \right) + \mathcal{O}\left( \sqrt{\epsilon_{app}} \right). $$

To extract the final sample complexity (Corollary 1), Yuan et al. tune the outer-loop size $T$ and inner-loop size $N$ to achieve a target convergence error $\epsilon$:
\begin{itemize}
    \item \textit{Outer Loop ($T$):} To ensure the exponentially decaying convergence error falls below $\epsilon$, the number of actor steps scales as: $T = \mathcal{O}\left( \log(1/\epsilon) \right)$.
    \item \textit{Inner Loop ($N$):} To ensure the square-rooted statistical error $\mathcal{O}(1/\sqrt{N})$ falls below $\epsilon$, the number of critic SGD steps per actor update grows as: $N = \mathcal{O}(1/\epsilon^2)$.
\end{itemize}
From these choices, the total budget of SGD updates required is $T_{total} = T \times N = \mathcal{O}\left( \frac{1}{\epsilon^2} \log\frac{1}{\epsilon} \right)$. Inverting this relationship with respect to $T_{total}$, their Corollary 1 implies that the unregularized performance gap of the double-loop Q-NPG algorithm is bounded by:
$$ \mathbb{E} \big[ J_0(\pi^*_0) - J_0(\pi_T) \big] \le \tilde{\mathcal{O}}\left( \frac{1}{\sqrt{T_{total}}} \right) + \mathcal{O}\left( \sqrt{\epsilon_{app}} \right). $$

\textbf{Main Steps of Analysis.}
To bypass the $1/\sqrt{T}$ bottleneck, \cite{yuan2023linear} analyze the unregularized PMD update via its first-order optimality condition (i.e., $\eta_t \langle \theta_{t+1}^\top \phi, \pi^*_0 - \pi_{t+1} \rangle_{\mathcal{A}} \le \langle \log \pi_{t+1} - \log \pi_t, \pi^*_0 - \pi_{t+1} \rangle_{\mathcal{A}}$).
Applying the Bregman Three-Point Identity evaluated against the \textit{next} policy $\pi_{t+1}$ (i.e., $\langle \log \pi_{t+1} - \log \pi_t, \pi^*_0 - \pi_{t+1} \rangle_{\mathcal{A}} = D_{KL}(\pi^*_0 \| \pi_t) - D_{KL}(\pi^*_0 \| \pi_{t+1}) - D_{KL}(\pi_{t+1} \| \pi_t)$)
and dropping the non-positive local distance, $-D_{KL}(\pi_{t+1} \| \pi_t)$, yields a one-step bound:
\begin{align*}
\eta_t \langle \theta_{t+1}^\top \phi, \pi^*_0 - \pi_{t+1} \rangle_{\mathcal{A}}
& \le D_{KL}(\pi^*_0 \| \pi_t) - D_{KL}(\pi^*_0 \| \pi_{t+1}) - D_{KL}(\pi_{t+1} \| \pi_t) \\
& \le D_{KL}(\pi^*_0 \| \pi_t) - D_{KL}(\pi^*_0 \| \pi_{t+1}).
\end{align*}
Then they split the inner product: $\langle \theta_{t+1}^\top \phi, \pi^*_0 - \pi_{t+1} \rangle_{\mathcal{A}}=
\langle \theta_{t+1}^\top \phi, \pi^*_0 - \pi_t \rangle_{\mathcal{A}} + \langle \theta_{t+1}^\top \phi, \pi_t - \pi_{t+1} \rangle_{\mathcal{A}}$.
Taking the expectation of the first term over $d_0^*$ and applying PDL yields
$(1-\gamma) \text{Gap}_t^\dag - \mathbb{E}_{d^*_0} \Big[ \langle err_t, \pi^*_0 - \pi_t \rangle_{\mathcal{A}} \Big]$
as in \eqref{eq:Agarwal-PDL}, where $\text{Gap}_t^\dag = J_0(\pi^*_0) - J_0(\pi_t)$.
For the second term $\langle \theta_{t+1}^\top \phi, \pi_t - \pi_{t+1} \rangle_{\mathcal{A}}$, which is non-positive, they shift the state measure to $d^{\pi_{t+1}}$ via the mismatch coefficient $\vartheta_\mu$. Applying PDL again to this shifted term yields the bound:
$$ \mathbb{E}_{d^*_0} \Big[ \langle \theta_{t+1}^\top \phi, \pi_t - \pi_{t+1} \rangle_{\mathcal{A}} \Big] \ge \vartheta_\mu \left( (1-\gamma)(\text{Gap}_{t+1}^\dag - \text{Gap}_t^\dag) - \mathbb{E}_{d^{\pi_{t+1}}} \Big[ \langle err_t, \pi_t - \pi_{t+1} \rangle_{\mathcal{A}} \Big] \right). $$
Substituting these back and rearranging yields a contractive recurrence for the \textit{next} gap:
\begin{align*}
(1-\gamma) \text{Gap}_{t+1}^\dag \le &\left(1 - \frac{1}{\vartheta_\mu}\right) (1-\gamma) \text{Gap}_t^\dag + \frac{1}{\eta_t \vartheta_\mu} \Big( D_{KL}(\pi^*_0 \| \pi_t) - D_{KL}(\pi^*_0 \| \pi_{t+1}) \Big) \\
&+ \frac{1}{\vartheta_\mu} \mathbb{E}_{d^*_0} \Big[ \langle err_t, \pi^*_0 - \pi_t \rangle_{\mathcal{A}} \Big] + \mathbb{E}_{d^{\pi_{t+1}}} \Big[ \langle err_t, \pi_t - \pi_{t+1} \rangle_{\mathcal{A}} \Big].
\end{align*}
By deploying geometrically increasing step sizes ($\eta_t \propto \gamma^{-t}$), \cite{yuan2023linear} telescope this recursion to derive a linear $\mathcal{O}((1 - 1/\vartheta_\mu)^T)$ outer-loop convergence rate as stated in \eqref{eq:conv-bound-Yuan}.
By choosing a highly unbalanced $N$ and $T$ ($N \gg T$), they achieve an overall last-iterate convergence rate of $\mathcal{O}(1/\sqrt{T_{total}})$ in terms of the total number of SGD updates $T_{total} = N \times T$.


\textbf{Comparison.}
We compare the analysis in \cite{yuan2023linear} with our Part II analysis. In contrast to their unbalanced double-loop structure and exponentially growing step-size schedule ($\eta_t \propto \gamma^{-t}$), our single-loop algorithm ($N=1, T=T_{total}$) uses a stable, diminishing $\mathcal{O}(t^{-1})$ step-size schedule. Furthermore, compared to their $\tilde{\mathcal{O}}(T_{total}^{-1/2})+\mathcal{O}(\sqrt{\epsilon_{app}})$ bound, our theory leads to an $\tilde{\mathcal{O}}(T_{total}^{-2/3})+\tilde{\mathcal{O}}(\epsilon_{app})$ average-iterate bound, and an $\tilde{\mathcal{O}}(T_{total}^{-1/3})+\tilde{\mathcal{O}}(\sqrt{\epsilon_{app}})$ last-iterate bound for the unregularized MDP.

As discussed in the comparison with \cite{agarwal2021theory}, our theoretical analysis in Part II achieves this by (1) exploiting entropy regularization to extract a Restorative Entropy Force, (2) utilizing the Global Joint Concentrability assumption or, with $\nu$-sampling, the Generalized Off-Policy Mismatch assumption, to simultaneously manage the state measure shift and facilitate the action-space $\chi^2$-to-KL divergence translation, and (3) applying the exponential translation mechanism to bypass the standard linear entropy penalty.

\subsection{Liu et al. (2020)} \label{app:Liu-et-al}

\cite{liu2020improved} analyzed an unregularized NPG algorithm.
Similar to the NPG algorithm in \cite{agarwal2021theory}, their algorithm utilizes a double-loop architecture but operates with a horizon-independent, constant outer-loop step size by exploiting a positive-definite Fisher information matrix assumption to achieve faster convergence.

\textbf{Unregularized NPG Algorithm.}
Translated to our notation, the double-loop NPG algorithm studied by \cite{liu2020improved} updates the actor parameter $\omega_t$ via:
$$ \omega_{t+1} = \omega_t + \eta \theta_{t+1}, $$
where the critic parameter $\theta_{t+1}$ approximates the ideal minimizer in the unregularized advantage projection (similar to $\theta^{*A}_\tau(\omega_t)$ in Appendix~\ref{app:critic-comparison}):
$$ \theta^{*A}_0(\omega_t) \triangleq \arg\min_{\theta \in \mathbb{R}^d} \mathbb{E}_{d^{\pi_t}, \pi_t} \left[ \big( A_0^{\pi_t}(s,a) - \theta^\top \tilde{\phi}_t(s,a) \big)^2 \right]. $$
By the Policy Gradient Theorem, this projection is proportional to the natural policy gradient direction: $\theta^{*A}_0(\omega_t) = (1-\gamma) \bar{\Sigma}_{cen}(\pi_t)^{-1} \nabla J_0(\omega_t)$. The algorithm estimates this direction by solving the regression problem over $N$ inner-loop SGD steps via conditionally independent sampling with fixed policy $\pi_t$ (Algorithm~\ref{alg:double-loop-npg-liu}). The total number of SGD updates is $T_{total} = T \times N$.

\begin{algorithm}[!t]
\caption{Double-Loop Unregularized NPG \citep{liu2020improved}}
\label{alg:double-loop-npg-liu}
\begin{algorithmic}[1]
\REQUIRE Initial actor parameter $\omega_0$, inner-loop iterations $N$, outer-loop iterations $T$, constant actor step size $\eta$, inner critic step size $\alpha$.
\FOR{$t = 0, 1, \dots, T-1$}
    \STATE \textbf{Actor Freeze:} Fix the current policy $\pi_t \triangleq \pi_{\omega_t}$.
    \STATE Initialize inner critic parameter $\theta_t^{(0)} = 0$.
    \FOR{$k = 0, 1, \dots, N-1$}
        \STATE \textbf{Sampling:} Draw state-action pairs $(s_k, a_k) \sim d^{\pi_t} \times \pi_t$.
        \STATE \textbf{Evaluation:} Obtain an unbiased estimate $\hat{A}_k$ for the advantage $A_0^{\pi_t}(s_k, a_k)$.
        \STATE \textbf{Critic SGD:} Update the critic parameter:
        $$\theta_t^{(k+1)} = \theta_t^{(k)} + \alpha \Big( \hat{A}_k - \theta_t^{(k)\top} \tilde{\phi}_t(s_k, a_k)\Big) \tilde{\phi}_t(s_k, a_k)$$
    \ENDFOR
    \STATE \textbf{Critic Assignment:} Set the outer-loop critic parameter as the average of the inner iterates: $\theta_{t+1} = \frac{1}{N} \sum_{k=1}^N \theta_t^{(k)}$.
    \STATE \textbf{Actor Update:} Update the actor parameter:
    $$\omega_{t+1} = \omega_t + \eta \theta_{t+1}$$
\ENDFOR
\ENSURE The output policy sequence $\{\pi_t\}_{t=0}^T$.
\end{algorithmic}
\end{algorithm}

\textbf{Notation and Assumptions.}
To bridge \cite{liu2020improved} to our framework, we align their structural assumptions with our notation:
\begin{itemize}
    \item \textit{Positive-Definite Fisher Information:} Their Assumption 2.1 requires the Fisher information matrix $\bar{\Sigma}_{cen}(\pi_\omega)$ to be uniformly positive-definite for all $\omega\in\mathbb{R}^d$.
        This serves a similar purpose to Assumption~\ref{ass:pd-fim} in Part I.\ In fact, their proof also holds provided
        $\bar{\Sigma}_{cen}(\pi_t) \succeq \mu_F I$ for all training policies $\{\pi_t\}_{t \ge 0}$.
        This condition is satisfied if $\bar{\Sigma}_{cen}(\pi_{\omega^*_0})$ is positive-definite at the parametric optimum $\pi_{\omega^*_0}$ and the joint concentrability bound holds: $\sup_{s,a} \frac{d^{\omega^*_0}(s)\pi_{\omega^*_0}(a|s)}{d^{\pi_t}(s) \pi_t(a|s)} \le C_{joint, 0}$ for a constant $C_{joint, 0}>0$, where $d^{\omega^*_0} \triangleq d^{\pi_{\omega^*_0}}$.
        As noted below Assumption~\ref{ass:pd-fim}, the Fisher information $\bar{\Sigma}_{cen}(\pi^*_0)$ is usually degenerate even in the Stochastic Regime
        because the global optimum $\pi^*_0$ is deterministic.

    \item \textit{Transferred Approximation Error ($\epsilon_{bias}$):} Their Assumption 4.4 defines the approximation error directly under the state-action distribution of the global optimum $\pi^*_0$:
    $$ \mathbb{E}_{d^*_0, \pi^*_0} \Big[ \big( A_0^{\pi_t}(s,a) - \theta^{*A}_0(\omega_t)^\top \tilde{\phi}_t(s,a) \big)^2 \Big] \le \epsilon_{bias}. $$
    This implicitly bundles the $L_2$ approximation error ($\epsilon_{app}$) and the global joint concentrability bound ($\sup_{s,a} \frac{d^*_0(s)\pi^*_0(a|s)}{d^{\pi_t}(s) \pi_t(a|s)} \le C_{joint, 0}^\dag$ for a constant $C_{joint, 0}^\dag>0$) into a single ``transferred'' constant, explaining the absence of explicit distribution mismatch coefficients in their convergence bounds.
    Here, $C_{joint, 0}$ parallels $C_{joint}$ in our Part I analysis, while the direct global-to-current mismatch $C_{joint, 0}^\dag$ corresponds to the combined shift controlled by $C_{joint} C_{joint}^*$.

    \item \textit{Statistical Estimation Error:} The convergence error of their inner-loop SGD is bounded by $\mathbb{E}[\|\theta_{t+1} - \theta^{*A}_0(\omega_t)\|_2^2] = \mathcal{O}(1/N)$, relying on the strong convexity provided by the positive-definite Fisher information in the advantage regression problem.
\end{itemize}

\textbf{Convergence Bounds.}
Under these assumptions, their Theorem 4.9 establishes the following bound for the average unregularized performance gap with a constant step size $\eta = \Theta(1)$:
\begin{align}
\frac{1}{T} \sum_{t=0}^{T-1} \mathbb{E} \big[ J_0(\pi^*_0) - J_0(\pi_t) \big] \le \mathcal{O}\left( \frac{1}{\eta T} \right) + \mathcal{O}\left( \frac{1}{\sqrt{N}} \right) + \mathcal{O}\left( \sqrt{\epsilon_{bias}} \right) . \label{eq:conv-bound-Liu}
\end{align}
To balance the optimization error against the inner-loop statistical error for a fixed budget of SGD updates $T_{total} = T \times N$, they tune the loop sizes such that $T = \mathcal{O}(T_{total}^{1/3})$ and $N = \mathcal{O}(T_{total}^{2/3})$. This yields the overall convergence bound:
$$ \frac{1}{T} \sum_{t=0}^{T-1} \mathbb{E} \big[ J_0(\pi^*_0) - J_0(\pi_t) \big] \le \mathcal{O}\left( T_{total}^{-1/3} \right) + \mathcal{O}\left( \sqrt{\epsilon_{bias}} \right). $$

\textbf{Main Steps of Analysis.}
The theoretical analysis in \cite{liu2020improved} employs an interesting set of techniques, mixing Policy Mirror Descent (PMD) bounding (akin to our Part II) with objective smoothness and the positive-definite Fisher information (akin to our Part I).

First, they derive a bound on the performance gap (Proposition 4.5) that is structurally similar to averaging the inequality \eqref{eq:PMD-progress-Agarwal} in \cite{agarwal2021theory} and \eqref{eq:PMD-progress} in our analysis:
\begin{align}
\frac{1}{T} \sum_{t=0}^{T-1} \big( J_0(\pi^*_0) - J_0(\pi_t) \big) \le \frac{\sqrt{\epsilon_{bias}}}{1-\gamma} + \frac{D_0^\dag}{\eta T} + \frac{G}{T}\sum_{t=0}^{T-1} \|\theta_{t+1} - \theta^{*A}_0(\omega_t)\|_2 + \frac{M \eta}{2 T} \sum_{t=0}^{T-1} \|\theta_{t+1}\|_2^2, \label{eq:PMD-progress-Liu}
\end{align}
where $G$ is an upper bound of $\|\nabla \log \pi_\omega\|$ and $M$ is a Lipschitz constant of $\nabla \log \pi_\omega$ in $\omega$.

Next, to control the accumulated step penalty $\sum_{t=0}^{T-1} \|\theta_{t+1}\|_2^2$, they employ a smoothness expansion over the objective $J_0$. After substituting the actor update $\omega_{t+1} = \omega_t + \eta \theta_{t+1}$, the step-wise progress is bounded by:
$$ J_0(\omega_{t+1}) - J_0(\omega_t) \ge \eta \langle \nabla J_0(\omega_t), \theta_{t+1} \rangle - \frac{L_J \eta^2}{2} \|\theta_{t+1}\|_2^2, $$
where $L_J$ is a smoothness constant of the objective $J_0(\omega)$ (i.e., a Lipschitz constant of its gradient).
Then they decompose the actual update $\theta_{t+1}$ into the ideal target $\theta^{*A}_0(\omega_t) = (1-\gamma) \bar{\Sigma}_{cen}(\pi_t)^{-1} \nabla J_0(\omega_t)$ and the tracking error $\theta_{t+1} - \theta^{*A}_0(\omega_t)$. By invoking the positive-definite Fisher information ($\bar{\Sigma}_{cen}(\pi_t) \succeq \mu_F I$) and bounded centered features ($\bar{\Sigma}_{cen}(\pi_t) \preceq G^2 I$), the ideal progress is lower-bounded ($\langle \nabla J_0(\omega_t), \theta^{*A}_0(\omega_t) \rangle \ge \frac{1-\gamma}{G^2} \|\nabla J_0(\omega_t)\|_2^2$), and the ideal step penalty is upper-bounded ($\|\theta^{*A}_0(\omega_t)\|_2^2 \le \frac{(1-\gamma)^2}{\mu_F^2} \|\nabla J_0(\omega_t)\|_2^2$).

Substituting these bounds and telescoping the step-wise progress bound controls the average gradient norm $\frac{1}{T} \sum_{t=0}^{T-1} \|\nabla J_0(\omega_t)\|_2^2$ by the total objective progress $\mathcal{O}\big(\frac{J_0(\omega_T) - J_0(\omega_0)}{T}\big)$ up to the average expected tracking error $\frac{1}{T} \sum_{t=0}^{T-1} \mathbb{E}[\|\theta_{t+1} - \theta^{*A}_0(\omega_t)\|_2^2]$.
Because the inner-loop SGD drives this tracking error down, this bound on the average gradient norm subsequently shows that the accumulated step penalty in the PMD bound \eqref{eq:PMD-progress-Liu}, $\frac{\eta}{T} \sum_{t=0}^{T-1} \|\theta_{t+1}\|_2^2$, is $\mathcal{O}\big(\frac{1}{\eta T}\big)$ (up to the tracking error). With a constant step size $\eta = \Theta(1)$, this leads to the stated convergence bound \eqref{eq:conv-bound-Liu}, thereby accelerating the outer-loop convergence to $\mathcal{O}(1/T)$ and the overall rate to $\mathcal{O}(T_{total}^{-1/3})$.

\textbf{Comparison.}
While \cite{liu2020improved} improves upon the $\mathcal{O}(T_{total}^{-1/4})$ rate of \cite{agarwal2021theory}, their analysis is appropriate only within the Stochastic Regime due to the reliance on a uniformly positive-definite Fisher information matrix. Hence, it is meaningful to compare their theoretical framework with our Part I analysis.
\begin{itemize}
    \item \textit{Roles of the Fisher Information.} As NPG is used, the positive-definite Fisher information serves two distinct purposes in their framework. First, it provides the strong convexity when solving the advantage value projection. Second, it provides a mechanism to control the step penalty from the smoothness expansion. In contrast, for our uncentered algorithm (Q-NPG), we rely on the positive-definite uncentered feature moment ($\kappa > 0$) for the inner-loop critic convergence,
    while employing the Fisher information ($\lambda > 0$) for one purpose: to lower-bound the algorithmic progress in the smoothness expansion for the outer loop.

    \item \textit{Gradient Space vs Parameter Space.} \cite{liu2020improved} choose to maintain and bound the average gradient norm $\|\nabla J_0(\omega_t)\|_2^2$ in the gradient space. In contrast, our proof relates this objective progress to the parameter distance $\| \theta_{t+1}-\tau \omega_t\|_2^2$ as in the Actor Progress Bound (Lemma~\ref{lem:pl-ascent}). In addition, we establish a Parameter-Space PL Condition, relating the performance gap directly to the parameter distance $\| \theta^*_\tau(\omega_t)-\tau \omega_t\|_2^2$.

    \item \textit{Telescoping vs Joint Lyapunov.} To finalize their convergence guarantees, \cite{liu2020improved} still use telescoping to bound the average gap. Our approach takes a fundamentally different path: we use the Actor Progress Bound to derive a Coupled Actor Recurrence and a Coupled Critic Tracking, leading to a joint Lyapunov analysis.
\end{itemize}
Ultimately, via entropy regularization and the Exponential Translation mechanism, our Part I analysis achieves an optimal $\tilde{\mathcal{O}}(T_{total}^{-1})$ unregularized rate for both average-iterate and last-iterate convergence (up to the approximation error) in the Stochastic Regime, accelerating beyond the $\mathcal{O}(T_{total}^{-1/3})$ rate established in \cite{liu2020improved}.

\subsection{Cayci et al. (2024)} \label{app:cayci-et-al}

\cite{cayci2024convergence} studied an entropy-regularized NPG algorithm with averaging. Translated to our notation, their analysis operates under a fixed regularization temperature $\tau > 0$ and projects the regularized action-values onto centered features.
See Appendix \ref{app:critic-comparison} for a discussion of $Q$-value projections in entropy-regularized NPG algorithms.

\textbf{Entropy-Regularized NPG Algorithm.}
\cite{cayci2024convergence} employ a double-loop architecture as shown in Algorithm~\ref{alg:cayci-npg}: an inner Temporal Difference (TD) learning subroutine to estimate the centered $Q$-values ($\tilde{Q}_\tau^{\pi_t}$), followed by an inner, constrained SGD subroutine to project these centered $Q$-values onto the centered features ($\tilde\phi_t$).

The exact minimizer in this centered $Q$-value projection is defined as
$$\theta^{*R}_\tau (\omega_t) = \arg\min_{\|\theta\|_2 \le R} \overline{L}(\theta, \omega_t)\triangleq \mathbb{E}_{d^{\pi_t} \times \pi_t} \big[ \big( \tilde{Q}_\tau^{\pi_t}(s,a) -\theta^\top \tilde{\phi}_t(s,a)\big)^2 \big],$$
where $R>0$ is a projection radius,
$\tilde{\phi}_t(s,a) \triangleq \nabla \log \pi_t(a|s)=\phi(s,a) - \mathbb{E}_{a' \sim \pi_t}[\phi(s,a')]$ denotes the centered features
and
$\tilde{Q}_\tau^{\pi_t}(s,a) \triangleq Q_\tau^{\pi_t}(s,a) - \mathbb{E}_{a' \sim \pi_t}[Q_\tau^{\pi_t}(s,a')]$ denotes the centered  $Q$-value.
See Appendix \ref{app:critic-comparison} for a discussion on the relationship between our ideal critic $\theta^*_\tau (\omega_t)$
and the unconstrained version of $\theta^{*R}_\tau (\omega_t)$, denoted as $\theta^{*CHS}_\tau (\omega_t)$.

\begin{algorithm}[t]
\caption{Sample-Based Entropy-Regularized NPG with Averaging \citep{cayci2024convergence}}
\label{alg:cayci-npg}
\begin{algorithmic}[1]
\REQUIRE Projection radius $R > 0$, regularization $\tau > 0$, TD iterations $K$, projection iterations $N$, step-sizes $\eta_t = \frac{1}{\tau(t+1)}$.
\STATE \textbf{Initialization:} $\omega_0 = 0$.
\FOR{$t = 0, 1, \dots, T-1$}
    \STATE \textbf{Actor Freeze:} Fix the current policy $\pi_t \triangleq \pi_{\omega_t}$.
    \STATE \textbf{TD Learning (Critic):} Run $K$ steps of sample-based TD learning to obtain an estimator $\hat{Q}_\tau^{\pi_t}$ for the regularized $Q$-value $Q_\tau^{\pi_t}$. Compute the centered estimator $\hat{\tilde{Q}}_\tau^{\pi_t}(s,a) = \hat{Q}_\tau^{\pi_t}(s,a) - \mathbb{E}_{a'\sim \pi_t}\hat{Q}_\tau^{\pi_t}(s,a')$
    for the centered $Q$-value $\tilde{Q}_\tau^{\pi_t}(s,a)$.
    \STATE \textbf{SGD Projection (Critic):} Run $N$ steps of constrained SGD using $\hat{\tilde{Q}}_\tau^{\pi_t}$ to obtain an estimate $\theta_{t+1}$ for the exact minimizer $\theta^{*R}_\tau (\omega_t)$, constrained to the ball $\|\theta_{t+1}\|_2 \le R$.
    \STATE \textbf{Actor Update:} Update the actor parameter:
    $$\omega_{t+1} = \omega_t + \eta_t (\theta_{t+1} - \tau \omega_t) .$$
\ENDFOR
\ENSURE The output policy sequence $\{\pi_t\}_{t=0}^T$.
\end{algorithmic}
\end{algorithm}

\textbf{Notation and Assumptions.}
To bridge \cite{cayci2024convergence} to our framework, we align their structural assumptions and error definitions with our notation:
\begin{itemize}
    \item \textit{Projection Radius ($R$):} In their analysis of constrained SGD (Proposition 4.5), the projection radius $R$ is assumed to satisfy a structural lower bound ($R > \bar{R}$). This is similar to our constrained SGD, where the projection radius $R_\theta$ is tied to the structural parameter $B_\theta$.

    \item \textit{Minimum Action Probability ($p_{min}$):} The global minimum action probability over the trajectory is defined as $p_{min} \triangleq \inf_{t,s,a} \pi_t(a|s)$.
    For a fixed $\tau$, because the critic is constrained in the inner SGD subroutine ($\|\theta_{t+1}\|_2 \le R$), the actor's update rule structurally guarantees that the actor parameter iterate remains bounded: $\|\omega_t\|_2 \le R/\tau$ for all $t \ge 0$. As a result, $p_{min}$ is bounded by $p_{min} \ge \frac{1}{|\mathcal{A}|} \exp(-2R/\tau) > 0$. See a related discussion after Assumption~\ref{ass:parametric-density}.

\item \textit{Distribution Mismatch ($\|d^{\omega^*_\tau} / \mu\|_\infty$):} \cite{cayci2024convergence} use a state distribution mismatch coefficient $\|d^{\omega^*_\tau} / \mu\|_\infty$, relative to the parametric optimum $\pi_{\omega^*_\tau}$.
        Because $d^{\pi_t}(s) \ge (1-\gamma)\mu(s)$, this coefficient upper bounds (up to a scaling factor) the implied state concentrability bound $\| d^{\omega^*_\tau} / d^{\pi_t} \|_\infty \le C_{joint}$ from our Assumption~\ref{ass:parametric-density}.

    \item \textit{Constrained Approximation Error ($\epsilon(R)$):}
    Their approximation bias is defined as $\epsilon(R) \triangleq \sup_{t \ge 0} \overline{L}(\theta^{*R}_t, \omega_t)$.
    This is comparable to our approximation error $\epsilon_{app}$.

    \item \textit{Statistical Error ($\epsilon_{stat}$):}
    Their $\epsilon_{critic}$ bounds the mean squared error of the inner TD evaluation subroutine, and $\epsilon_{actor}$ bounds the convergence error of the inner SGD projection subroutine. The sum of these two variances ($\epsilon_{critic} + \epsilon_{actor}$) maps directly to our total expected statistical tracking error, $\epsilon_{stat}$.
\end{itemize}

\textbf{Convergence Bounds.}
Under these definitions, their Proposition 4.1 establishes the following bound for the best-iterate, parametric regularized performance gap (which also applies to the average-iterate gap)
by employing the step-size schedule $\eta_t = \frac{1}{\tau(t+1)}$:
\begin{align}
\mathbb{E} \Big[ \min_{t < T} \big\{ J_\tau(\pi_{\omega^*_\tau}) - J_\tau(\pi_t) \big\} \Big] \le \mathcal{O}\left( \frac{\log T}{\tau T} \right) +  \mathcal{O}\left(  \sqrt{ \frac{\epsilon(R) + \epsilon_{stat}}{p_{min}}  } \right). \label{eq:conv-bound-Cayci}
\end{align}

To optimize the total error subject to a fixed budget of stochastic updates, let $K$ and $N$ denote the numbers of inner TD and SGD steps, respectively. For simplicity, we take $K=\Theta(N)$, so that the total stochastic-update budget satisfies $T_{total}=T(K+N)=\Theta(TN)$. Under the full-rank assumptions described in their Remark 4.6, taking $K=\Theta(N)$ gives $\epsilon_{stat}=\tilde{\mathcal{O}}(1/N)$. Balancing the outer-loop decay $\mathcal{O}(T^{-1})$ against the inner-loop statistical error $\mathcal{O}(N^{-1/2})$ requires $T\approx\sqrt{N}$.
This yields $N \approx T_{total}^{2/3}$ and $T \approx T_{total}^{1/3}$. Substituting these balanced loop sizes back into the gap bound establishes an overall convergence rate for the regularized objective:
$$\mathbb{E} \Big[ \min_{t < T} \big\{ J_\tau(\pi_{\omega^*_\tau}) - J_\tau(\pi_t) \big\} \Big] \le \mathcal{O}\left( \frac{\log T}{\tau T_{total}^{1/3}} \right) +  \mathcal{O}\left(  \sqrt{ \frac{\epsilon(R)}{p_{min}}  } \right),$$
as mentioned in \cite{cayci2024convergence}, Remark 4.6.

\textbf{Main Steps of Analysis.}
The core analysis in \cite{cayci2024convergence} (Theorem~3.6) relies on a Lyapunov drift inequality (their Lemma~3.5):
\begin{align}
D_{t+1} - D_t \le -\eta_t \tau D_t - \eta_t (1-\gamma) \big( J_\tau(\pi_{\omega^*_\tau}) - J_\tau(\pi_t) \big) + \eta_t \mathbb{E}_{d^{\omega^*_\tau}} \Big[ \langle err_t, \pi_{\omega^*_\tau} - \pi_t \rangle_{\mathcal{A}} \Big] + \mathcal{O}(\eta_t^2), \label{eq:PMD-progress-Cayci}
\end{align}
where $D_t \triangleq \mathbb{E}_{d^{\omega^*_\tau}} [D_{KL}(\pi_{\omega^*_\tau} \| \pi_t)]$ serves as the potential function, and $err_t(s,a) \triangleq \tilde{Q}_\tau^{\pi_t}(s,a) - \theta_{t+1}^\top \tilde{\phi}_t(s,a)$ encompasses both the approximation bias and the statistical error.

To bound the error inner product, \cite{cayci2024convergence} apply the Cauchy-Schwarz inequality over the state-action space and, in the first term, shift the state measure to the training distribution via the mismatch coefficient $\|d^{\omega^*_\tau} / \mu\|_\infty$:
$$ \mathbb{E}_{d^{\omega^*_\tau}} \Big[ \langle err_t, \pi_{\omega^*_\tau} - \pi_t \rangle_{\mathcal{A}} \Big] \le \sqrt{ \frac{\| d^{\omega^*_\tau}/ \mu\|_\infty }{1-\gamma} \cdot \big( \epsilon(R) + \epsilon_{stat} \big) } \cdot \sqrt{ \mathbb{E}_{d^{\omega^*_\tau}} \big[ \chi^2(\pi_{\omega^*_\tau}(\cdot|s) \| \pi_t(\cdot|s)) \big] }. $$
Then, they control the $\chi^2$-divergence term via the inequality $\chi^2(\pi_{\omega^*_\tau}(\cdot|s) \| \pi_t(\cdot|s)) \le 2/p_{min}$ for any $s$, depending on the minimum action probability $p_{min}$. Substituting this bound into the drift inequality \eqref{eq:PMD-progress-Cayci} and telescoping over $T$ iterations with the diminishing step-size $\eta_t = \mathcal{O}((\tau t)^{-1})$ leads to their final bound described in \eqref{eq:conv-bound-Cayci}.

\textbf{Comparison for Regularized MDP.}
Up to the distinction between $D_t$ and $D_t^\dag$ and the precise definitions of $err_t(s,a)$,
the drift inequality \eqref{eq:PMD-progress-Cayci} is mathematically equivalent to inequality \eqref{eq:PMD-progress} in our proof of the PMD Actor Progress Bound, both of which mirror the Bregman progress bound \eqref{eq:PMD-progress-Agarwal} in \cite{agarwal2021theory}. A crucial difference between \cite{cayci2024convergence} and our analysis for the regularized MDP lies in how the error inner product is decoupled and bounded. Their Cauchy-Schwarz argument traps both the approximation and statistical errors under a square root, which is then amplified by the $1/\sqrt{p_{min}}$ factor.

In contrast, our Part II analysis avoids both the square-root barrier and $1/\sqrt{p_{min}}$ amplification in deriving the PMD Actor Progress Bound. By exploiting the global joint concentrability assumption (Assumption~\ref{ass:density-ratios}), we simultaneously manage the state measure shift and facilitate the action-space $\chi^2$-to-KL divergence translation. This translation allows us to fully decouple the error inner product via Young's inequality after Cauchy-Schwarz, absorbing the resulting divergence penalty back into the restorative entropy force $-\eta_t \tau D_t^\dag$. This approach frees the approximation and statistical errors from the square-root trap and removes the $1/\sqrt{p_{min}}$ amplification.

Furthermore, to circumvent the double-loop architecture, our analysis employs the fractional Chung's lemma to balance the actor and critic step sizes and derive an uncoupled Critic Tracking Error bound. For our single-loop algorithm ($N=1, T=T_{total}$), our theory in Part II establishes an accelerated average-iterate $\mathcal{O}(T_{total}^{-2/3}) + \mathcal{O}(\epsilon_{app})$ bound on the regularized performance gap. This improves upon their double-loop $\tilde{\mathcal{O}}(T_{total}^{-1/3}) + \mathcal{O}(\sqrt{\epsilon_{app}})$ bound.

\textbf{Implications for Unregularized MDP.}
\cite{cayci2024convergence} restrict their analysis entirely to the fixed-$\tau$ regularized MDP, and do not provide explicit bounds on the unregularized performance gap.
Nevertheless, we discuss implications of their analysis for solving the unregularized MDP.

Extending their guarantees to the unregularized objective in our Deterministic Regime encounters two degeneracies as $\tau\to0$. First, the optimal policy $\pi_{\omega^*_\tau}$ approaches a deterministic distribution, so that $p_{min}\to0$ and their $1/\sqrt{p_{min}}$ error factor deteriorates. Second, the minimum eigenvalue of the centered feature covariance satisfies $\lambda_\tau\to0$, eliminating a temperature-uniform strong-convexity constant for the centered least-squares $Q$-value projection and causing the constants underlying the inner-loop rate $\epsilon_{stat} = \mathcal{O}(1/N)$ to deteriorate.

In the Stochastic Regime, provided $p_{min}$ and $\lambda_\tau$ remain bounded away from zero,
their fixed-temperature regularized bound could be combined with a temperature-tuning argument.
The resulting convergence rate, based on their $\tilde{\mathcal{O}}(T_{total}^{-1/3})$ regularized rate up to the approximation error term, would still be slower than ours in Part I: either $\tilde{\mathcal{O}}(T_{total}^{-1/6})$ under the standard linear entropy penalty, or $\tilde{\mathcal{O}}(T_{total}^{-1/3})$ if combined with our Exponential Translation Bounds.

\section{Technical Details} \label{app:tech-details}

\subsection{Proofs for Section~\ref{sec:setup}}

\begin{prf}[Proof of Lemma \ref{lem:bounded-values} (Bounded Regularized Values)]
By definition, the regularized state-value is:
\begin{align*}
V_\tau^\pi(s) &= \mathbb{E}_{\pi} \left[ \sum_{t=0}^\infty \gamma^t \big( r(s_t, a_t) - \tau \log \pi(a_t|s_t) \big) \;\middle|\; s_0 = s \right] \\
& = \mathbb{E}_{\pi} \left[ \sum_{t=0}^\infty \gamma^t \Big( \mathbb{E}_{a_t \sim \pi}[r(s_t, a_t)|s_t] + \tau H(\pi(\cdot|s_t)) \Big) \;\middle|\; s_0 = s \right].
\end{align*}
Since the reward is bounded in $[0,1]$ and the entropy of a discrete distribution over $\mathcal{A}$ is bounded by $0 \le H \le \log |\mathcal{A}|$, the inner term for each step is bounded between $0$ and $1 + \tau \log |\mathcal{A}|$. Evaluating the geometric series yields $0\le V_\tau^\pi(s) \le \frac{1 + \tau \log |\mathcal{A}|}{1-\gamma}$.

For the action-value $Q_\tau^\pi(s,a)$, the initial step $t=0$ uses a fixed action $a$ without an instantaneous entropy penalty, meaning the $t=0$ term is simply $0 \le r(s,a) \le 1 \,(\le 1 + \tau \log |\mathcal{A}|)$.
Summing the remaining discounted terms yields $0 \le Q_\tau^\pi(s,a) \le \frac{1+\tau\log|\mathcal{A}|}{1-\gamma}$, the same as for $V_\tau^\pi(s)$.
\end{prf}

\begin{prf}[Proof of Lemma \ref{lem:structural-bounds} (Structural Bounds of the Algorithm)]
(i) \textit{Ideal Critic Bound.} By definition, the ideal uncentered critic is a solution to the uncentered least-squares objective: $\theta^*_\tau(\omega_t) = \bar{\Sigma}_{unc}(\pi_t)^{-1} \mathbb{E}_{d^{\pi_t}, \pi_t} [\phi(s,a) Q_\tau^{\pi_t}(s,a)]$.
By Assumption~\ref{ass:pd-uncentered}, the spectral norm of the inverse feature moment matrix is bounded by $\|\bar{\Sigma}_{unc}(\pi_t)^{-1}\|_2 \le \frac{1}{\kappa}$.
For the target vector, applying Jensen's inequality alongside the Bounded Features (Assumption~\ref{ass:bounded-features}) and Bounded Regularized Values (Lemma~\ref{lem:bounded-values}) yields:
$$ \big\| \mathbb{E}_{d^{\pi_t}, \pi_t} [\phi(s,a) Q_\tau^{\pi_t}(s,a)] \big\|_2 \le \mathbb{E}_{d^{\pi_t}, \pi_t} \Big[ \|\phi(s,a)\|_2 \big| Q_\tau^{\pi_t}(s,a) \big| \Big] \le B_\phi V_{max}. $$
Taking the norm of the ideal critic identity and applying the sub-multiplicative property establishes the bound:
$$ \|\theta^*_\tau(\omega_t)\|_2 \le \|\bar{\Sigma}_{unc}(\pi_t)^{-1}\|_2 \big\| \mathbb{E}_{d^{\pi_t}, \pi_t} [\phi(s,a) Q_\tau^{\pi_t}(s,a)] \big\|_2 \le \frac{B_\phi V_{max}}{\kappa} = B_\theta. $$

(ii) \textit{Actor Iterate Bound.} Because the actual critic is projected onto $R_\theta = B_\theta$, we have $\|\theta_{t+1}\|_2 \le B_\theta$.
The actor update rule is $\omega_{t+1} = \omega_t + \eta_t (\theta_{t+1} - \tau \omega_t) = (1 - \tau \eta_t)\omega_t + \eta_t \theta_{t+1}$.
Assuming inductively that $\|\omega_t\|_2 \le \frac{B_\theta}{\tau}$, and noting $(1 - \tau \eta_t) \ge 0$ since $\eta_t \le 1/\tau$, the triangle inequality yields:
$$\|\omega_{t+1}\|_2 \le (1 - \tau \eta_t)\|\omega_t\|_2 + \eta_t \|\theta_{t+1}\|_2 \le (1 - \tau \eta_t) \frac{B_\theta}{\tau} + \eta_t B_\theta = \frac{B_\theta}{\tau}.$$
Since $\tau \le \tau_{max}$, the initialization satisfies $\|\omega_0\|_2 \le \frac{B_\theta}{\tau_{max}} \le \frac{B_\theta}{\tau}$. Therefore, by induction, the bound holds for all $t \ge 0$.

(iii) \textit{Update Direction Bounds.} For the critic gradient, by the bounded feature assumption ($\|\phi(s,a)\|_2 \le B_\phi$) and the projection step ($\|\theta_t\|_2 \le B_\theta$), we bound the squared norm using the inequality $(x+y)^2 \le 2x^2 + 2y^2$:
$$ \|g_t^{cr}(s_t,a_t)\|_2^2 = \big|\theta_t^\top \phi_t - \hat{Q}_t\big|^2 \|\phi_t\|_2^2 \le 2 B_\phi^2 \left( (\theta_t^\top \phi_t)^2 + \hat{Q}_t^2 \right) \le 2 B_\phi^2 \left( B_\theta^2 B_\phi^2 + \hat{Q}_t^2 \right). $$
Taking the conditional expectation and substituting the $Q$-value second moment bound $\mathbb{E}[\hat{Q}_t^2 \mid \mathcal{F}_t] \le Q_{max}^2$ yields the bound $G_{cr}^2$:
$$ \mathbb{E}\big[\|g_t^{cr}(s_t,a_t)\|_2^2 \mid \mathcal{F}_t \big] \le 2 B_\phi^2 \left( B_\theta^2 B_\phi^2 + Q_{max}^2 \right) = G_{cr}^2. $$
For the actor's update direction, we apply the triangle inequality and leverage the actor iterate bound $\|\omega_t\|_2 \le \frac{B_\theta}{\tau}$ from the previous step:
\begin{align*}
\|g_t^{ac}\|_2 = \|\theta_{t+1} - \tau \omega_t\|_2 &\le \|\theta_{t+1}\|_2 + \tau \|\omega_t\|_2  \\
&  \le B_\theta + \tau \frac{B_\theta}{\tau} = 2 B_\theta = G_{ac}.
\end{align*}
Crucially, the temperature $\tau$ cancels out the $1/\tau$ dependence in the actor iterate bound.
\end{prf}

\begin{prf}[Proof of Lemma \ref{lem:lipschitz-critic} (Lipschitz Ideal Uncentered Critic)]
By definition, the ideal uncentered critic is $\theta^*_\tau(\omega) = \bar{\Sigma}(\omega)^{-1} b(\omega)$, where $\bar{\Sigma}(\omega) \triangleq \bar{\Sigma}_{unc}(\pi_\omega)$ and $b(\omega) \triangleq \mathbb{E}_{d^{\pi_\omega}, \pi_\omega}[\phi(s,a) Q_\tau^{\pi_\omega}(s,a)]$.
By the Policy Gradient Theorem, the derivative of any expectation over the discounted state-action visitation measure evaluates as:
\begin{align*}
\nabla_\omega \mathbb{E}_{d^{\pi_\omega}, \pi_\omega}[f(s,a)] &= \mathbb{E}_{d^{\pi_\omega}, \pi_\omega}[\nabla_\omega f(s,a)] + \mathbb{E}_{d^{\pi_\omega}, \pi_\omega} \Big[ \nabla_\omega \log \pi_\omega(a|s) Q_f^{\pi_\omega}(s,a) \Big]
\end{align*}
where $Q_f^{\pi_\omega}(s,a) \triangleq \mathbb{E}_{\pi_\omega} \big[ \sum_{t=0}^\infty \gamma^t f(s_t,a_t) \mid s_0=s, a_0=a \big]$.
For a log-linear policy, the score function is bounded by $\|\nabla_\omega \log \pi_\omega\|_2 = \|\phi(s,a) - \mathbb{E}_{\pi_\omega}[\phi]\|_2 \le 2 B_\phi$. Furthermore, the accumulated sum is bounded by the supremum over the entire state-action space: $\|Q_f^{\pi_\omega}(s,a)\|_2 \le \frac{1}{1-\gamma} \sup_{s',a'} \|f(s',a')\|_2$. Taking the norm and applying the triangle inequality yields the general derivative bound:
\begin{equation} \label{eq:gen-deriv-limit}
\|\nabla_\omega \mathbb{E}_{d^{\pi_\omega}, \pi_\omega}[f]\|_2 \le \mathbb{E}_{d^{\pi_\omega}, \pi_\omega}[\|\nabla_\omega f\|_2] + \frac{2 B_\phi}{1-\gamma} \sup_{s,a} \|f(s,a)\|_2.
\end{equation}

We bound the global Lipschitz continuity of the individual components $\bar{\Sigma}(\omega)$ and $b(\omega)$:
\begin{itemize}
    \item \textit{Moment Matrix:} Because the raw feature $\phi(s,a)$ is independent of $\omega$, we have $\nabla_\omega (\phi \phi^\top) = 0$. Applying \eqref{eq:gen-deriv-limit} yields a global bound on the gradient: $\|\nabla_\omega \bar{\Sigma}(\omega)\|_2 \le 0 + \frac{2 B_\phi}{1-\gamma} \sup_{s,a} \|\phi \phi^\top\|_2 \le \frac{2 B_\phi^3}{1-\gamma} \triangleq L_\Sigma$. By the Mean Value Theorem, for any $\omega, \omega' \in \mathbb{R}^d$, $\|\bar{\Sigma}(\omega) - \bar{\Sigma}(\omega')\|_2 \le L_\Sigma \|\omega - \omega'\|_2$.
    \item \textit{Target Vector:} For $f(s,a) = \phi(s,a) Q_\tau^{\pi_\omega}(s,a)$, we have $\nabla_\omega f = \phi (\nabla_\omega Q_\tau^{\pi_\omega})^\top$. Let $r_{max} \triangleq 1 + \tau \log |\mathcal{A}|$. From Lemma~\ref{lem:bounded-values}, $|Q_\tau^{\pi_\omega}| \le \frac{r_{max}}{1-\gamma}$.
        From Step (ii) in the proof of Lemma~\ref{lem:smoothness}, $\|\nabla_\omega Q_\tau^{\pi_\omega}\|_2 \le \frac{4 \gamma B_\phi r_{max}}{(1-\gamma)^2}$. Applying \eqref{eq:gen-deriv-limit} gives a global bound: $\|\nabla_\omega b(\omega)\|_2 \le \frac{4 \gamma B_\phi^2 r_{max}}{(1-\gamma)^2} + \frac{2 B_\phi^2 r_{max}}{(1-\gamma)^2} \le \frac{6 B_\phi^2 r_{max}}{(1-\gamma)^2} \triangleq L_b$. Thus, for any $\omega, \omega' \in \mathbb{R}^d$, $\|b(\omega) - b(\omega')\|_2 \le L_b \|\omega - \omega'\|_2$.
\end{itemize}

To compute the difference in the ideal critic, we use the algebraic resolvent identity $A^{-1} - B^{-1} = A^{-1}(B - A)B^{-1}$ to expand $\theta^*_\tau(\omega) - \theta^*_\tau(\omega')$:
\begin{align*}
\theta^*_\tau(\omega) - \theta^*_\tau(\omega') &= \bar{\Sigma}(\omega)^{-1} b(\omega) - \bar{\Sigma}(\omega')^{-1} b(\omega') \\
&= \bar{\Sigma}(\omega)^{-1} \big[b(\omega) - b(\omega')\big] + \big[\bar{\Sigma}(\omega)^{-1} - \bar{\Sigma}(\omega')^{-1}\big] b(\omega') \\
&= \bar{\Sigma}(\omega)^{-1} \big[b(\omega) - b(\omega')\big] + \bar{\Sigma}(\omega)^{-1} \big[\bar{\Sigma}(\omega') - \bar{\Sigma}(\omega)\big] \bar{\Sigma}(\omega')^{-1} b(\omega').
\end{align*}
Taking the norm and applying the triangle and sub-multiplicative properties:
\begin{align*}
\|\theta^*_\tau(\omega) - \theta^*_\tau(\omega')\|_2 &\le \|\bar{\Sigma}(\omega)^{-1}\|_2 \|b(\omega) - b(\omega')\|_2 + \|\bar{\Sigma}(\omega)^{-1}\|_2 \|\bar{\Sigma}(\omega') - \bar{\Sigma}(\omega)\|_2 \|\bar{\Sigma}(\omega')^{-1}\|_2 \|b(\omega')\|_2.
\end{align*}
For two parameters $\omega$ and $\omega'$ satisfying $\bar{\Sigma}(\omega) \succeq \kappa I$ and $\bar{\Sigma}(\omega') \succeq \kappa I$, we have the spectral limits $\|\bar{\Sigma}(\omega)^{-1}\|_2 \le \frac{1}{\kappa}$ and $\|\bar{\Sigma}(\omega')^{-1}\|_2 \le \frac{1}{\kappa}$. The target vector is bounded by $\|b(\omega')\|_2 \le \frac{B_\phi r_{max}}{1-\gamma}$.
Substituting our Lipschitz components yields:
\begin{align*}
\|\theta^*_\tau(\omega) - \theta^*_\tau(\omega')\|_2 &\le \left( \frac{1}{\kappa} \right) L_b \|\omega - \omega'\|_2 + \left( \frac{1}{\kappa} \right)^2 L_\Sigma \|\omega - \omega'\|_2 \left( \frac{B_\phi r_{max}}{1-\gamma} \right) \\
&= \left[ \frac{1}{\kappa} \frac{6 B_\phi^2 r_{max}}{(1-\gamma)^2} + \frac{1}{\kappa^2} \frac{2 B_\phi^3}{1-\gamma} \frac{B_\phi r_{max}}{1-\gamma} \right] \|\omega - \omega'\|_2.
\end{align*}
Factoring out the shared constant $\frac{2 B_\phi^2 r_{max}}{\kappa (1-\gamma)^2}$ yields exactly $L_{\theta,\tau}$, completing the proof.
\end{prf}

\subsection{Proofs for Section \ref{sec:suboptimalty-exp-translation}}

\begin{prf}[Proof of Lemma \ref{lem:pdl} (Regularized Performance Difference)]
For any policy $\tilde{\pi}$, the soft state-value function satisfies the regularized Bellman equation:
$$ V^{\tilde{\pi}}_\tau(s) = \mathbb{E}_{a \sim \tilde{\pi}(\cdot|s)} \left[ r(s,a) - \tau \log \tilde{\pi}(a|s) + \gamma \mathbb{E}_{s' \sim \mathcal{P}(\cdot|s,a)} [V^{\tilde{\pi}}_\tau(s')] \right]. $$
We subtract the soft value function of a reference policy $V^\pi_\tau(s)$ from both sides. To create a telescoping sum, we add and subtract $\gamma \mathbb{E}_{s'} [V^\pi_\tau(s')]$ inside the expectation:
$$ V^{\tilde{\pi}}_\tau(s) - V^\pi_\tau(s) = \mathbb{E}_{a \sim \tilde{\pi}} \left[ r(s,a) - \tau \log \tilde{\pi}(a|s) + \gamma \mathbb{E}_{s'}[V^{\tilde{\pi}}_\tau(s') - V^\pi_\tau(s')] + \gamma \mathbb{E}_{s'}[V^\pi_\tau(s')] \right] - V^\pi_\tau(s). $$
By the definition of the soft action-value function, $Q^\pi_\tau(s,a) = r(s,a) + \gamma \mathbb{E}_{s'}[V^\pi_\tau(s')]$. Substituting this simplifies the expression to:
$$ V^{\tilde{\pi}}_\tau(s) - V^\pi_\tau(s) = \mathbb{E}_{a \sim \tilde{\pi}} \big[ Q^\pi_\tau(s,a) - \tau \log \tilde{\pi}(a|s) - V^\pi_\tau(s) \big] + \gamma \mathbb{E}_{a \sim \tilde{\pi}, s' \sim \mathcal{P}} \big[ V^{\tilde{\pi}}_\tau(s') - V^\pi_\tau(s') \big]. $$
By unrolling this recursive relationship over the infinite horizon trajectory generated by following $\tilde{\pi}$ from a fixed initial state $s_0$, the discounted future differences evaluate to the expectation over $d_{s_0}^{\tilde{\pi}}$, the discounted state visitation distribution conditioned on $s_0$:
$$ V^{\tilde{\pi}}_\tau(s_0) - V^\pi_\tau(s_0) = \frac{1}{1-\gamma} \mathbb{E}_{s \sim d_{s_0}^{\tilde{\pi}}} \left[ \sum_{a \in \mathcal{A}} \tilde{\pi}(a|s) \big( Q^\pi_\tau(s,a) - \tau \log \tilde{\pi}(a|s) - V^\pi_\tau(s) \big) \right]. $$
Taking the expectation over the initial state distribution $s_0 \sim \mu$ yields the global regularized objective difference $J_\tau(\tilde{\pi}) - J_\tau(\pi)$ on the left-hand side. Because $d^{\tilde{\pi}}(s) = \mathbb{E}_{s_0 \sim \mu}[d_{s_0}^{\tilde{\pi}}(s)]$, we obtain the final identity:
$$ J_\tau(\tilde{\pi}) - J_\tau(\pi) = \frac{1}{1-\gamma} \mathbb{E}_{s \sim d^{\tilde{\pi}}} \left[ \sum_{a \in \mathcal{A}} \tilde{\pi}(a|s) \big( Q^\pi_\tau(s,a) - \tau \log \tilde{\pi}(a|s) - V^\pi_\tau(s) \big) \right], $$
completing the proof.
\end{prf}

\begin{prf}[Proof of Lemma \ref{lem:optim-identity} (Global Regularized Suboptimality Identity)]
(i) \textit{Optimal Soft Advantage and Boltzmann Target.}
By definition, the regularized optimal value function maximizes the soft Bellman equation over all possible probability distributions $\tilde{\pi}(\cdot|s)$:
$$ V^*_\tau(s) = \max_{\tilde{\pi}} \sum_a \tilde{\pi}(a|s) \big[ Q^*_\tau(s,a) - \tau \log \tilde{\pi}(a|s) \big]. $$
By factoring out $\tau$ and introducing the normalizing partition function $Z(s) \triangleq \sum_{a'} \exp(Q^*_\tau(s,a')/\tau)$, we
rewrite the term inside the brackets:
$$  \tau \sum_a \tilde{\pi}(a|s) \left[ \log \left( \frac{\exp(Q^*_\tau(s,a)/\tau) / Z(s)}{\tilde{\pi}(a|s)} \right) + \log Z(s) \right]. $$
Recognizing that $\frac{\exp(Q^*_\tau(s,a)/\tau)}{Z(s)}$ is exactly the Boltzmann distribution, and noting that $\sum_a \tilde{\pi}(a|s) \log Z(s) = \log Z(s)$ because probabilities sum to $1$, the maximum reduces to:
$$ V^*_\tau(s) = \max_{\tilde{\pi}} \left[ \tau \log Z(s) - \tau D_{KL}\left(\tilde{\pi}(\cdot|s) \;\middle\|\; \frac{\exp(Q^*_\tau(s,\cdot)/\tau)}{Z(s)}\right) \right]. $$
Because the Kullback-Leibler divergence is non-negative and is zero if and only if the distributions match, this expression is uniquely maximized when $\tilde{\pi}$ is exactly the Boltzmann policy $\pi^*_\tau(a|s) = \frac{\exp(Q^*_\tau(s,a)/\tau)}{Z(s)}$.
Substituting this optimal policy back zeroes out the KL divergence, yielding the identity for the regularized optimal state-value and action-value:
$$ V^*_\tau(s) = \tau \log Z(s) = \tau \log \sum_{a'} \exp \left( \frac{Q^*_\tau(s,a')}{\tau} \right). $$
Consequently, the optimal Boltzmann policy can be equivalently written as $\pi^*_\tau(a|s) = \frac{\exp(Q^*_\tau(s,a)/\tau)}{\exp(V^*_\tau(s)/\tau)}$. Taking the logarithm and multiplying by $\tau$ yields an exact relation for the optimal log-probability:
$$ \tau \log \pi^*_\tau(a|s) = Q^*_\tau(s,a) - V^*_\tau(s). $$
That is, the regularized advantage is zero: $A_\tau^{\pi^*_\tau}(s,a)=Q^*_\tau(s,a) - \tau \log \pi^*_\tau(a|s) - V^*_\tau(s) = 0$.

(ii) \textit{Synthesis via KL Divergence.}
By the Regularized Performance Difference (Lemma \ref{lem:pdl}) evaluated between $\pi$ and $\pi^*_\tau$, the objective difference is:
$$ J_\tau(\pi) - J_\tau(\pi^*_\tau) = \frac{1}{1-\gamma} \mathbb{E}_{s \sim d^\pi} \left[ \sum_a \pi(a|s) \big( Q^*_\tau(s,a) - \tau \log \pi(a|s) - V^*_\tau(s) \big) \right]. $$
We substitute the preceding identity for $Q^*_\tau(s,a) - V^*_\tau(s)$ into this objective difference:
$$ J_\tau(\pi) - J_\tau(\pi^*_\tau) = \frac{1}{1-\gamma} \mathbb{E}_{s \sim d^\pi} \left[ \sum_a \pi(a|s) \big( \tau \log \pi^*_\tau(a|s) - \tau \log \pi(a|s) \big) \right]. $$
Factoring out $-\tau$ isolates the definition of the Kullback-Leibler divergence:
$$ J_\tau(\pi) - J_\tau(\pi^*_\tau) = - \frac{\tau}{1-\gamma} \mathbb{E}_{s \sim d^\pi} \left[ \sum_a \pi(a|s) \log \frac{\pi(a|s)}{\pi^*_\tau(a|s)} \right] = - \frac{\tau}{1-\gamma} \mathbb{E}_{s \sim d^\pi} \big[ D_{KL}(\pi(\cdot|s) \| \pi^*_\tau(\cdot|s)) \big]. $$
This matches the stated identity, completing the proof.
\end{prf}

\begin{prf}[Proof of Lemma \ref{lem:sub-decomp} (Regularized Suboptimality Decompositions)]
(i) \textit{Extracting KL Divergence.} Starting from the Regularized Performance Difference (Lemma \ref{lem:pdl}) evaluated between the comparator $\pi$ and the training policy $\pi_t$, we add and subtract $\tau \log \pi_t$ to extract the KL divergence:
\begin{align*}
(1-\gamma) \big( J_\tau(\pi) - J_\tau(\pi_t) \big) &= \mathbb{E}_{d^\pi} \left[ \sum_a \pi \big( Q_\tau^{\pi_t} - \tau \log \pi \big) - V_\tau^{\pi_t} \right] \\
&= \mathbb{E}_{d^\pi} \left[ \sum_a \pi \big( Q_\tau^{\pi_t} - \tau \log \pi_t \big) - \tau D_{KL}(\pi \| \pi_t) - V_\tau^{\pi_t} \right].
\end{align*}
Substituting $V_\tau^{\pi_t}(s) = \sum_a \pi_t(a|s) \big( Q_\tau^{\pi_t}(s,a) - \tau \log \pi_t(a|s) \big)$ combines the action summations into a single inner product:
$$(1-\gamma) \big( J_\tau(\pi) - J_\tau(\pi_t) \big) = -\tau \mathbb{E}_{d^\pi}[D_{KL}(\pi \| \pi_t)] + \mathbb{E}_{d^\pi} \Big[ \langle Q_\tau^{\pi_t} - \tau \log \pi_t, \pi - \pi_t \rangle_{\mathcal{A}} \Big].$$

(ii) \textit{Injecting the Policy Parameterization.} We substitute the log-linear identity of the actor policy, $\tau \log \pi_t(a|s) = \tau \omega_t^\top \phi(s,a) - \tau \log Z_t(s)$. Because $\tau \log Z_t(s)$ is a state-dependent baseline, it acts as a constant in the action inner product and cancels against $\sum_a (\pi - \pi_t) = 0$:
$$(1-\gamma) \big( J_\tau(\pi) - J_\tau(\pi_t) \big) = -\tau \mathbb{E}_{d^\pi}[D_{KL}(\pi \| \pi_t)] + \mathbb{E}_{d^\pi} \Big[ \langle Q_\tau^{\pi_t} - \tau \omega_t^\top \phi, \pi - \pi_t \rangle_{\mathcal{A}} \Big].$$

(iii) \textit{Ideal Critic Substitution.} We substitute the ideal uncentered critic $Q_\tau^{\pi_t}(s,a) = \theta^*_\tau(\omega_t)^\top \phi(s,a) + \epsilon_t(s,a)$ into the inner product, which yields the first stated decomposition:
$$(1-\gamma) \big( J_\tau(\pi) - J_\tau(\pi_t) \big) = -\tau \mathbb{E}_{d^\pi}[D_{KL}(\pi \| \pi_t)] + \mathbb{E}_{d^\pi} \Big[ \langle (\theta^*_\tau(\omega_t) - \tau \omega_t)^\top \phi, \pi - \pi_t \rangle_{\mathcal{A}} \Big] + \mathbb{E}_{d^\pi} \big[ \langle \epsilon_t, \pi - \pi_t \rangle_{\mathcal{A}} \big].$$

(iv) \textit{Actual Critic Separation.} By definition, the critic tracking error is $e_t = \theta_{t+1} - \theta^*_\tau(\omega_t)$. We substitute $\theta^*_\tau(\omega_t) = \theta_{t+1} - e_t$ directly into the ideal algorithmic progress:
$$(\theta^*_\tau(\omega_t) - \tau \omega_t)^\top \phi = (\theta_{t+1} - \tau \omega_t)^\top \phi - e_t^\top \phi.$$
Substituting this identity into the ideal critic decomposition separates the algorithmic progress driven by the physical update $\theta_{t+1}$ from the statistical tracking deviation $e_t$, yielding the second stated decomposition and completing the proof.
\end{prf}

\begin{prf}[Proof of Lemma \ref{lem:entropy-bound} (Policy Entropy Decomposition Bound)]
We evaluate the entropy of the action distribution by considering a two-step hierarchical decision process for a given state $s$.
We define a binary indicator variable $I \in \{0, 1\}$ such that $I = 0$ if the selected action is $a^*(s)$, and $I = 1$ if the selected action is any $a \neq a^*(s)$.
The probabilities given state $s$ are $P(I=0 \mid s) = 1 - q(s)$ and $P(I=1 \mid s) = q(s)$.
Because $I$ is a deterministic function of the chosen action, the joint entropy equals the entropy of the policy: $H(\pi(\cdot|s), I) = H(\pi(\cdot|s))$.
By the Chain Rule of Entropy, the joint entropy can be decomposed into the marginal entropy of the indicator and the conditional entropy of the action:
$$ H(\pi(\cdot|s)) = H(I) + H(\pi(\cdot|s) \mid I). $$

We bound each term independently:
\begin{itemize}
    \item \textit{Binary Entropy ($H(I)$):} The indicator is a Bernoulli random variable with parameter $q(s)$.
    Its entropy is exactly the binary entropy function:
    $$ H(I) = - q(s) \log q(s) - (1 - q(s)) \log (1 - q(s)). $$
    \item \textit{Conditional Entropy ($H(\pi(\cdot|s) \mid I)$):} By definition, $H(\pi(\cdot|s) \mid I) = P(I=0 \mid s)H(\pi(\cdot|s) \mid I=0) + P(I=1 \mid s)H(\pi(\cdot|s) \mid I=1)$.
    If $I=0$, the action is known to be $a^*(s)$, meaning the uncertainty $H(\pi(\cdot|s) \mid I=0) = 0$.
    If $I=1$, the action is distributed among the $|\mathcal{A}| - 1$ remaining actions.
    The maximum possible entropy for a distribution over $|\mathcal{A}| - 1$ elements is $\log(|\mathcal{A}| - 1)$.
    Therefore, the conditional entropy is bounded by:
    $$ H(\pi(\cdot|s) \mid I) \le (1 - q(s)) \cdot 0 + q(s) \log(|\mathcal{A}| - 1) \le q(s) \log |\mathcal{A}|. $$
\end{itemize}
Substituting these two bounds back into the Chain Rule decomposition yields the stated result.
\end{prf}

\begin{prf}[Proof of Lemma \ref{lem:reg-value-bound} (Uniform Regularized Value Bound)]
For any arbitrary policy $\pi$, let $V^\pi_0(s)$ be its standard unregularized expected return, defined as:
$$ V^\pi_0(s) = \mathbb{E}_\pi \left[ \sum_{t=0}^\infty \gamma^t r(s_t, a_t) \;\middle|\; s_0=s \right]. $$
Let $V^\pi_\tau(s)$ be its entropy-regularized expected return, defined as:
$$ V^\pi_\tau(s) = V^\pi_0(s) + \tau \mathbb{E}_\pi \left[ \sum_{t=0}^\infty \gamma^t H(\pi(\cdot|s_t)) \;\middle|\; s_0=s \right]. $$
Because the Shannon entropy is bounded by $0 \le H(\pi(\cdot|s)) \le \log |\mathcal{A}|$, the discounted sum of entropy is bounded between $0$ and $\frac{\log |\mathcal{A}|}{1-\gamma}$.
Thus, for any policy $\pi$, the regularized and unregularized values satisfy:
\begin{equation} \label{eq:reg-value-bound}
V^\pi_0(s) \le V^\pi_\tau(s) \le V^\pi_0(s) + \frac{\tau \log |\mathcal{A}|}{1-\gamma}.
\end{equation}

We evaluate the optimal policies against each other to establish tight upper and lower bounds on $V^*_\tau(s)$.
\begin{itemize}
    \item \textit{Lower Bound:} Because $\pi^*_\tau$ maximizes the regularized objective, its regularized value is at least as high as the regularized value of the unregularized globally optimal policy $\pi^*_0$.
    Thus, $V^*_\tau(s) = V^{\pi^*_\tau}_\tau(s) \ge V^{\pi^*_0}_\tau(s)$. Applying \eqref{eq:reg-value-bound} to $\pi^*_0$ yields $V^{\pi^*_0}_\tau(s) \ge V^{\pi^*_0}_0(s) = V^*_0(s)$.
    Therefore, $V^*_\tau(s) \ge V^*_0(s)$.
    \item \textit{Upper Bound:} Because $\pi^*_0$ maximizes the unregularized objective, its unregularized value is at least as high as the unregularized value of the regularized globally optimal policy $\pi^*_\tau$.
    Thus, $V^*_0(s) = V^{\pi^*_0}_0(s) \ge V^{\pi^*_\tau}_0(s)$. Applying \eqref{eq:reg-value-bound} to the soft policy $\pi^*_\tau$ gives $V^*_\tau(s) \le V^{\pi^*_\tau}_0(s) + \frac{\tau \log |\mathcal{A}|}{1-\gamma}$.
    Substituting the first inequality yields $V^*_\tau(s) \le V^*_0(s) + \frac{\tau \log |\mathcal{A}|}{1-\gamma}$.
\end{itemize}
Combining the bounds proves the result for the state-value functions.
For the action-value functions, we invoke the standard Bellman equations: $Q^\pi_0(s,a) = r(s,a) + \gamma \mathbb{E}_{s'}[V^\pi_0(s')]$.
Because the reward and transition dynamics are identical, subtracting the two equations yields:
$$ Q^*_\tau(s,a) - Q^*_0(s,a) = \gamma \mathbb{E}_{s' \sim \mathcal{P}(\cdot|s,a)} \big[ V^*_\tau(s') - V^*_0(s') \big]. $$
Since $0 \le V^*_\tau(s') - V^*_0(s') \le \frac{\tau \log |\mathcal{A}|}{1-\gamma}$ for all states, the expectation preserves the bound.
Thus $0 \le Q^*_\tau(s,a) - Q^*_0(s,a) \le \frac{\gamma \tau \log |\mathcal{A}|}{1-\gamma} \le \frac{\tau \log |\mathcal{A}|}{1-\gamma}$, completing the proof.
\end{prf}

\begin{prf}[Proof of Theorem \ref{thm:suboptimal-mass} (Exponential Translation of Regularized Suboptimality)]
The proof proceeds in four steps.
First, we upper-bound the unregularized performance gap by the expected suboptimal probability mass.
Second, we express the regularized performance gap as the reverse KL divergence and lower-bound its cross-entropy and policy entropy components using the suboptimal mass.
Third, we combine these lower bounds and invert the resulting non-linear inequality to isolate the suboptimal mass.
Finally, we synthesize these results to translate regularized algorithmic progress into bounds on the unregularized performance gap.

(i) \textit{Unregularized Performance Difference.}
By the standard Performance Difference Lemma applied to the unregularized objective $J_0$:
$$ J_0(\pi^*_0) - J_0(\pi_t) = \frac{1}{1-\gamma} \mathbb{E}_{s \sim d^{\pi_t}} \left[ \sum_a \pi_t(a|s) \big( V^*_0(s) - Q^*_0(s,a) \big) \right]. $$
Because the rewards are bounded in $[0,1]$, we have $V^*_0(s) \le \sum_{t=0}^\infty \gamma^t \cdot 1 = \frac{1}{1-\gamma}$ and $Q^*_0(s,a) \ge 0$.
The maximum optimality gap $A_{max} \triangleq \sup_{s,a} (V^*_0(s) - Q^*_0(s,a))$ is bounded by $\frac{1}{1-\gamma}$.
Under the Minimal Action Gap (Assumption~\ref{ass:action-gap}), the unregularized globally optimal policy $\pi^*_0$ concentrates all probability mass on the unique optimal action $a^*(s)$, yielding $V^*_0(s) - Q^*_0(s,a^*(s)) = 0$.
For all suboptimal actions $a \neq a^*(s)$, we substitute the maximum gap $A_{max}$.
Factoring this out bounds the unregularized performance gap by:
$$ J_0(\pi^*_0) - J_0(\pi_t) \le \frac{A_{max}}{1-\gamma} \mathbb{E}_{d^{\pi_t}} \left[ \sum_{a \neq a^*(s)} \pi_t(a|s) \right] = \frac{A_{max}}{1-\gamma} \mathbb{E}_{d^{\pi_t}} [q_t(s)], $$
where $q_t(s) \triangleq \sum_{a \neq a^*(s)} \pi_t(a|s)$ is the probability mass on suboptimal actions.

(ii) \textit{Reverse KL Identity and Bounds.}
By the Global Regularized Suboptimality Identity (Lemma~\ref{lem:optim-identity}), the global regularized performance gap is proportional to the state-averaged reverse KL divergence:
$$ \text{Gap}_t^\dag = \frac{\tau}{1-\gamma} \mathbb{E}_{d^{\pi_t}} \big[ D_{KL}(\pi_t(\cdot|s) \| \pi^*_\tau(\cdot|s)) \big]. $$
To link this divergence to the suboptimal mass $q_t(s)$, we expand the pointwise reverse KL divergence into the cross-entropy and the negative policy entropy:
$$ D_{KL}(\pi_t \| \pi^*_\tau) = \sum_a \pi_t(a|s) \log \frac{\pi_t(a|s)}{\pi^*_\tau(a|s)} = - \sum_a \pi_t(a|s) \log \pi^*_\tau(a|s) - H(\pi_t(\cdot|s)). $$

We independently lower-bound the cross-entropy and negative entropy terms using the suboptimal probability mass.
\begin{itemize}
    \item \textit{Cross-Entropy:}
    The regularized globally optimal policy $\pi^*_\tau(a|s) \propto \exp(Q^*_\tau(s,a)/\tau)$ naturally assigns exponentially small probability to suboptimal actions.
    Under Assumption~\ref{ass:action-gap}, the action gap satisfies $Q^*_0(s,a^*(s)) - Q^*_0(s,a) \ge \Delta$ for $a \neq a^*(s)$.
    By the Uniform Regularized Value Bound (Lemma~\ref{lem:reg-value-bound}), the regularized and unregularized $Q$-values are bounded by $\|Q^*_\tau - Q^*_0\|_\infty \le \frac{\tau \log |\mathcal{A}|}{1-\gamma}$.
    Then the regularized globally optimal policy's probability of a suboptimal action $a$ is bounded by:
    \begin{align}
    \pi^*_\tau(a|s) &\le \frac{\pi^*_\tau(a|s)}{\pi^*_\tau(a^*(s)|s)} = \frac{\exp(Q^*_\tau(s,a)/\tau)}{\exp(Q^*_\tau(s,a^*(s))/\tau)} \le \exp\left( - \frac{\Delta}{\tau} + \frac{2 \log |\mathcal{A}|}{1-\gamma} \right). \label{eq:action-prob-min-gap}
    \end{align}
    Taking the negative logarithm yields $-\log \pi^*_\tau(a|s) \ge \frac{\Delta}{\tau} - \frac{2 \log |\mathcal{A}|}{1-\gamma}$.
    The unregularized globally optimal action term is $-\log \pi^*_\tau(a^*(s)|s) \ge 0$. Dropping it lower-bounds the cross-entropy:
    $$ - \sum_a \pi_t(a|s) \log \pi^*_\tau(a|s) \ge - \sum_{a \neq a^*(s)} \pi_t(a|s) \log \pi^*_\tau(a|s) \ge q_t(s) \left( \frac{\Delta}{\tau} - \frac{2 \log |\mathcal{A}|}{1-\gamma} \right). $$

    \item \textit{Negative Entropy:} We invoke the Policy Entropy Decomposition Bound (Lemma~\ref{lem:entropy-bound}), designating the unique optimal action $a^*(s)$ under Assumption~\ref{ass:action-gap} as the single anchor:
    $$ - H(\pi_t(\cdot|s)) \ge q_t(s) \log q_t(s) + (1-q_t(s)) \log(1-q_t(s)) - q_t(s) \log |\mathcal{A}|. $$
    Using the inequality $(1-x)\log(1-x) \ge -x$ for $x \in [0,1]$, we bound the binary entropy term:
    $$ - H(\pi_t(\cdot|s)) \ge q_t(s) \log q_t(s) - q_t(s) - q_t(s) \log |\mathcal{A}|. $$
\end{itemize}

(iii) \textit{Inverting KL Bound for Suboptimal Mass.}
Summing the cross-entropy and negative entropy lower bounds yields the combined inequality:
$$ D_{KL}(\pi_t \| \pi^*_\tau) \ge q_t(s) \log q_t(s) + q_t(s) \left( \frac{\Delta}{\tau} - \left[ \frac{2 \log |\mathcal{A}|}{1-\gamma} + \log |\mathcal{A}| + 1 \right] \right). $$
Substituting our definition $\log C_\gamma \triangleq \frac{2 \log |\mathcal{A}|}{1-\gamma} + \log |\mathcal{A}| + 1$ and letting $B \triangleq \frac{\Delta}{\tau} - \log C_\gamma$, we obtain $D_{KL}(\pi_t \| \pi^*_\tau) \ge q_t(s) B + q_t(s) \log q_t(s)$.
Then we bound $q_t \triangleq q_t(s)$ by partitioning its domain $[0,1]$:
\begin{itemize}
    \item \textit{Case 1:} $q_t \ge e^{-B/2}$.
    Then $\log q_t \ge -B/2$. Substituting this yields $D_{KL} \ge q_t B - q_t(B/2) = q_t (B/2)$, which implies $q_t \le \frac{2}{B} D_{KL}(\pi_t \| \pi^*_\tau)$.
    \item \textit{Case 2:} $q_t < e^{-B/2}$. The mass is trivially bounded by the exponential threshold $q_t < e^{-B/2} = e^{\frac{1}{2} \log C_\gamma} \exp(-\frac{\Delta}{2\tau}) = \sqrt{C_\gamma} \exp(-\frac{\Delta}{2\tau})$.
\end{itemize}
Combining these cases yields $q_t(s) \le \frac{2}{\frac{\Delta}{\tau} - \log C_\gamma} D_{KL}(\pi_t \| \pi^*_\tau) + \sqrt{C_\gamma} \exp\left(-\frac{\Delta}{2\tau}\right)$.

(iv) \textit{Synthesis.}
From the threshold assumption $\tau \le \frac{\Delta}{2 \log C_\gamma}$, the denominator satisfies $\frac{\Delta}{\tau} - \log C_\gamma \ge \frac{\Delta}{2\tau}$.
Substituting this and taking the expectation over $d^{\pi_t}$ yields:
$$ \mathbb{E}_{d^{\pi_t}}[q_t(s)] \le \frac{4 \tau}{\Delta} \mathbb{E}_{d^{\pi_t}} \big[ D_{KL}(\pi_t \| \pi^*_\tau) \big] + \sqrt{C_\gamma} \exp\left(-\frac{\Delta}{2\tau}\right). $$
Substituting this mass bound into the unregularized performance difference from Step (i) yields the first stated result.
Finally, substituting the Global Regularized Suboptimality Identity ($\frac{\tau}{1-\gamma} \mathbb{E}[D_{KL}] = \text{Gap}_t^\dag$) into this $J_0$ bound completes the proof.
\end{prf}

\begin{prf}[Proof of Corollary \ref{cor:universal-suboptimal-mass} (Universal Unregularized Suboptimality Bound)]
We partition the global temperature space into two regimes.
\begin{itemize}
\item \textit{Case 1 (Cold Regime):} $\tau \le \frac{\Delta}{2 \log C_\gamma}$.
The bounds hold by Theorem~\ref{thm:suboptimal-mass}.
Because $C_\gamma \ge 1$, replacing the $\frac{A_{max} \sqrt{C_\gamma}}{1-\gamma}$ tail coefficient with the larger coefficient $C_{tail} \triangleq \frac{A_{max} C_\gamma}{1-\gamma}$ preserves both inequalities.
\item \textit{Case 2 (Hot Regime):} $\tau > \frac{\Delta}{2 \log C_\gamma}$.
In this regime, the temperature is too high to guarantee the positive linear relationship when inverting the KL bound.
However, as established in Step (i) of Theorem~\ref{thm:suboptimal-mass}, $ J_0(\pi^*_0) - J_0(\pi_t) \le \frac{A_{max}}{1-\gamma} \mathbb{E}_{d^{\pi_t}} [q_t(s)] $.
Since probability mass evaluates to at most 1, $J_0(\pi^*_0) - J_0(\pi_t) \le \frac{A_{max}}{1-\gamma}$ trivially holds.
We rearrange the regime condition $\tau > \frac{\Delta}{2 \log C_\gamma}$ to $1 < C_\gamma \exp\left(-\frac{\Delta}{2\tau}\right)$, and substitute this into our trivial global bound:
$$ J_0(\pi^*_0) - J_0(\pi_t) \le \frac{A_{max}}{1-\gamma} \le \frac{A_{max} C_\gamma}{1-\gamma} \exp\left(-\frac{\Delta}{2\tau}\right) = C_{tail} \exp\left(-\frac{\Delta}{2\tau}\right). $$
Because the state-averaged KL divergence and the regularized performance gap are both non-negative, adding either the $\frac{4 A_{max} \tau}{\Delta(1-\gamma)} \mathbb{E}[D_{KL}]$ term or the $\frac{4 A_{max}}{\Delta} \text{Gap}_t^\dag$ term to the right side naturally maintains the respective inequality.
\end{itemize}
Combining the two cases completes the proof.
\end{prf}

\subsection{Proofs for Section \ref{sec:pl-geometry}}

\begin{prf}[Proof of Lemma~\ref{lem:chi2-KL} ($\chi^2$ to KL Divergence Bound)]
We use the analytic inequality that for any $x \in [0, M]$,
$$(x-1)^2 \le 2 M (x \log x - x + 1).$$
We substitute $x = \frac{\dif P}{\dif Q}$ and integrate with respect to $Q$ over the space $\mathcal{X}$:
$$\int_{\mathcal{X}} \left( \frac{\dif P}{\dif Q} - 1 \right)^2 \dif Q \le 2 M \int_{\mathcal{X}} \left( \frac{\dif P}{\dif Q} \log \frac{\dif P}{\dif Q} - \frac{\dif P}{\dif Q} + 1 \right) \dif Q.$$
By definition, the left-hand side is the $\chi^2(P\|Q)$ divergence, whereas the right-hand side evaluates to $2 M D_{KL}(P \| Q)$, completing the proof.
\end{prf}

\begin{prf}[Proof of Lemma \ref{lem:pl-condition} (Parameter-Space PL Condition)]
We separately bound the Ideal Algorithmic Progress and the Approximation Bias from the Ideal Critic Decomposition evaluated at $\pi = \pi_{\omega^*_\tau}$ in Lemma~\ref{lem:sub-decomp}(i):
\begin{align*}
(1-\gamma)\text{Gap}_t = -\tau D_t + \underbrace{\mathbb{E}_{d^{\omega^*_\tau}} \Big[ \langle (\theta^*_\tau(\omega_t) - \tau \omega_t)^\top \phi, \pi_{\omega^*_\tau} - \pi_t \rangle_{\mathcal{A}} \Big]}_{\text{Ideal Algorithmic Progress}}
+ \underbrace{\mathbb{E}_{d^{\omega^*_\tau}} \big[ \langle \epsilon_t, \pi_{\omega^*_\tau} - \pi_t \rangle_{\mathcal{A}} \big]}_{\text{Approximation Bias}}.
\end{align*}

(i) \textit{Bounding Ideal Algorithmic Progress.} For any state $s$, we apply H\"{o}lder's inequality over the action space and Pinsker's inequality ($\|\pi_{\omega^*_\tau} - \pi_t\|_1 \le \sqrt{2 D_{KL}(\pi_{\omega^*_\tau} \| \pi_t)}$) to the inner product:
\begin{align*}
\langle (\theta^*_\tau(\omega_t) - \tau \omega_t)^\top \phi, \pi_{\omega^*_\tau} - \pi_t \rangle_{\mathcal{A}} &\le \max_a \big| (\theta^*_\tau(\omega_t) - \tau \omega_t)^\top \phi(s,a) \big| \|\pi_{\omega^*_\tau} - \pi_t\|_1 \\
&\le B_\phi \|\theta^*_\tau(\omega_t) - \tau \omega_t\|_2 \sqrt{2 D_{KL}(\pi_{\omega^*_\tau} \| \pi_t)}.
\end{align*}
Taking the expectation over $d^{\omega^*_\tau}(s)$ and applying Jensen's Inequality, $\mathbb{E}_{d^{\omega^*_\tau}}[\sqrt{D_{KL}}] \le \sqrt{\mathbb{E}_{d^{\omega^*_\tau}}[D_{KL}]} = \sqrt{D_t}$, yields:
$$\mathbb{E}_{d^{\omega^*_\tau}} \Big[ \langle (\theta^*_\tau(\omega_t) - \tau \omega_t)^\top \phi, \pi_{\omega^*_\tau} - \pi_t \rangle_{\mathcal{A}} \Big] \le B_\phi \|\theta^*_\tau(\omega_t) - \tau \omega_t\|_2 \sqrt{2 D_t}.$$
We decouple the resulting product using Young's Inequality ($xy \le \frac{\tau}{2}x^2 + \frac{1}{2\tau}y^2$) with $x = \sqrt{D_t}$ and $y = \sqrt{2} B_\phi \|\theta^*_\tau(\omega_t) - \tau \omega_t\|_2$:
$$\mathbb{E}_{d^{\omega^*_\tau}} \Big[ \langle (\theta^*_\tau(\omega_t) - \tau \omega_t)^\top \phi, \pi_{\omega^*_\tau} - \pi_t \rangle_{\mathcal{A}} \Big] \le \frac{\tau}{2} D_t + \frac{B_\phi^2}{\tau} \|\theta^*_\tau(\omega_t) - \tau \omega_t\|_2^2.$$

(ii) \textit{Bounding Approximation Bias.} For any state $s$, we define the local policy mismatch $W(s) \triangleq \max_a \frac{\pi_{\omega^*_\tau}(a|s)}{\pi_t(a|s)}$.
We apply Action-Space Cauchy-Schwarz inequality:
\begin{align*}
\langle \epsilon_t, \pi_{\omega^*_\tau} - \pi_t \rangle_{\mathcal{A}} &= \sum_a \pi_t(a|s) \epsilon_t(s,a) \left( \frac{\pi_{\omega^*_\tau}(a|s)}{\pi_t(a|s)} - 1 \right) \\
&\le \sqrt{ \mathbb{E}_{a \sim \pi_t}[\epsilon_t(s,a)^2] } \sqrt{ \sum_a \pi_t(a|s) \left( \frac{\pi_{\omega^*_\tau}(a|s)}{\pi_t(a|s)} - 1 \right)^2 }.
\end{align*}
The second square root yields the $\chi^2$-divergence. By applying the $\chi^2$ bound (Lemma~\ref{lem:chi2-KL})
$\chi^2(\pi_{\omega^*_\tau}(\cdot|s) \| \pi_t(\cdot|s)) \le 2 W(s) D_{KL}(\pi_{\omega^*_\tau}(\cdot|s) \| \pi_t(\cdot|s))$
pointwise at $s$, we obtain:
$$\langle \epsilon_t, \pi_{\omega^*_\tau} - \pi_t \rangle_{\mathcal{A}} \le \sqrt{ \mathbb{E}_{a \sim \pi_t}[\epsilon_t(s,a)^2] } \sqrt{ 2 W(s) D_{KL}(\pi_{\omega^*_\tau}(\cdot|s) \| \pi_t(\cdot|s)) }.$$
Taking the expectation over $d^{\omega^*_\tau}(s)$ and applying State-Space Cauchy-Schwarz:
\begin{align*}
\mathbb{E}_{d^{\omega^*_\tau}} \big[ \langle \epsilon_t, \pi_{\omega^*_\tau} - \pi_t \rangle_{\mathcal{A}} \big] &\le \mathbb{E}_{d^{\omega^*_\tau}} \Big[ \sqrt{ 2 D_{KL}(\pi_{\omega^*_\tau}(\cdot|s) \| \pi_t(\cdot|s)) } \sqrt{ W(s) \mathbb{E}_{a \sim \pi_t}[\epsilon_t(s,a)^2] } \Big] \\
&\le \sqrt{ 2 D_t } \sqrt{ \mathbb{E}_{d^{\omega^*_\tau}} \big[ W(s) \mathbb{E}_{\pi_t}[\epsilon_t^2] \big] }.
\end{align*}
To control the second root, we expand the expectation over the state space and apply the joint concentrability bound $\frac{d^{\omega^*_\tau}(s)}{d^{\pi_t}(s)} W(s) = \max_a \frac{d^{\omega^*_\tau}(s) \pi_{\omega^*_\tau}(a|s)}{d^{\pi_t}(s) \pi_t(a|s)} \le C_{joint}$ (Assumption~\ref{ass:parametric-density}):
\begin{align*}
\mathbb{E}_{d^{\omega^*_\tau}} \big[ W(s) \mathbb{E}_{\pi_t}[\epsilon_t^2] \big] &= \mathbb{E}_{d^{\pi_t}} \left[ \left( \frac{d^{\omega^*_\tau}(s)}{d^{\pi_t}(s)} W(s) \right) \mathbb{E}_{a \sim \pi_t}[\epsilon_t(s,a)^2] \right] \\
&\le C_{joint} \mathbb{E}_{d^{\pi_t}} \Big[ \mathbb{E}_{a \sim \pi_t}[\epsilon_t(s,a)^2] \Big] \\
&= C_{joint} \mathbb{E}_{d^{\pi_t}, \pi_t}[\epsilon_t^2] \le C_{joint} \epsilon_{app},
\end{align*}
where the last step uses Assumption~\ref{ass:approximation-error}.
Applying Young's Inequality ($xy \le \frac{\tau}{2}x^2 + \frac{1}{2\tau}y^2$) with $x = \sqrt{D_t}$ and $y = \sqrt{2 C_{joint} \epsilon_{app}}$ yields:
$$\mathbb{E}_{d^{\omega^*_\tau}} \big[ \langle \epsilon_t, \pi_{\omega^*_\tau} - \pi_t \rangle_{\mathcal{A}} \big] \le \frac{\tau}{2} D_t + \frac{C_{joint}}{\tau} \epsilon_{app}.$$

(iii) \textit{Synthesis.} We substitute the decoupled ideal algorithmic progress and approximation bias bounds back into the Ideal Critic Decomposition (Lemma \ref{lem:sub-decomp}) for $\pi = \pi_{\omega^*_\tau}$:
\begin{align*}
(1-\gamma) \text{Gap}_t &\le -\tau D_t + \left( \frac{\tau}{2} D_t + \frac{B_\phi^2}{\tau} \|\theta^*_\tau(\omega_t) - \tau \omega_t\|_2^2 \right) + \left( \frac{\tau}{2} D_t + \frac{C_{joint}}{\tau} \epsilon_{app} \right) \\
&= \frac{B_\phi^2}{\tau} \|\theta^*_\tau(\omega_t) - \tau \omega_t\|_2^2 + \frac{C_{joint}}{\tau} \epsilon_{app}.
\end{align*}
Remarkably, the positive $\tau D_t$ penalties from Young's inequality perfectly cancel the $-\tau D_t$ restorative entropy force. Dividing both sides by $1-\gamma$ yields the stated PL condition.
\end{prf}

\subsection{Proofs for Section \ref{sec:ascent-tracking}}

\begin{prf}[Proof of Lemma~\ref{lem:uncentered-gradient} (Uncentered Gradient Identity)]
(i) \textit{Gradient Expansion.} By the Policy Gradient Theorem for entropy-regularized MDPs, the exact gradient is:
$$\nabla J_\tau(\omega_t) = \frac{1}{1-\gamma} \mathbb{E}_{d^{\pi_t}, \pi_t} \Big[ \tilde{\phi}_t(s,a) \big( Q_\tau^{\pi_t}(s,a) - \tau \log \pi_t(a|s) \big) \Big].$$
We substitute the ideal uncentered critic $Q_\tau^{\pi_t} = \theta^*_\tau(\omega_t)^\top \phi + \epsilon_t$ and the log-linear policy identity $\tau \log \pi_t = \tau \omega_t^\top \phi - \tau \log Z_t$.
Because $\tau \log Z_t(s)$ depends only on the state, it vanishes when multiplied by the action-centered features $\tilde{\phi}_t(s,a)$ and taking the action expectation:
$$\nabla J_\tau(\omega_t) = \frac{1}{1-\gamma} \mathbb{E}_{d^{\pi_t}, \pi_t} \Big[ \tilde{\phi}_t(s,a) \big( (\theta^*_\tau(\omega_t) - \tau \omega_t)^\top \phi(s,a) + \epsilon_t(s,a) \big) \Big].$$

(ii) \textit{Feature Decomposition.} We separate the uncentered features via $\phi(s,a) = \tilde{\phi}_t(s,a) + \bar{\phi}_t(s)$.
For the first term in the expectation:
$$\mathbb{E}_{d^{\pi_t}, \pi_t} \big[ \tilde{\phi}_t \phi^\top \big] (\theta^*_\tau(\omega_t) - \tau \omega_t) = \mathbb{E}_{d^{\pi_t}, \pi_t} \big[ \tilde{\phi}_t (\tilde{\phi}_t + \bar{\phi}_t)^\top \big] (\theta^*_\tau(\omega_t) - \tau \omega_t) = \bar{\Sigma}_{cen}(\pi_t) (\theta^*_\tau(\omega_t) - \tau \omega_t),$$
where the cross-term $\mathbb{E}_{d^{\pi_t}, \pi_t}[\tilde{\phi}_t \bar{\phi}_t^\top]$ is zero because $\mathbb{E}_{a\sim\pi_t}[\tilde{\phi}_t] = 0$.

(iii) \textit{Bias Extraction.} For the second term, we again substitute $\tilde{\phi}_t = \phi - \bar{\phi}_t$:
$$\mathbb{E}_{d^{\pi_t}, \pi_t} \big[ \tilde{\phi}_t \epsilon_t \big] = \mathbb{E}_{d^{\pi_t}, \pi_t} \big[ \phi \epsilon_t \big] - \mathbb{E}_{d^{\pi_t}, \pi_t} \big[ \bar{\phi}_t \epsilon_t \big].$$
By the uncentered orthogonality condition \eqref{eq:mse-orthogonality}, the first term is exactly zero.
The remaining term evaluates to $E_{bias}$. Applying Cauchy-Schwarz and Jensen's inequality to the bias definition yields $\|E_{bias}\|_2 \le \mathbb{E}[\|\bar{\phi}_t\|_2 |\epsilon_t|] \le B_\phi \sqrt{\mathbb{E}[\epsilon_t^2]} \le B_\phi \sqrt{\epsilon_{app}}$, completing the proof.
\end{prf}

\begin{prf}[Proof of Lemma~\ref{lem:smoothness} (Smoothness of the Regularized Objective)]
We establish the smoothness bound via recursive differentiation of the value function.

(i) \textit{Local Policy Derivatives.} For the log-linear softmax policy, the gradient of the log-probability is the centered feature: $\nabla \log \pi_\omega(a|s) = \phi(s,a) - \mathbb{E}_{\pi_\omega}[\phi]$.
Because $\|\phi\|_2 \le B_\phi$, we have $\|\nabla \log \pi_\omega\|_2 \le 2 B_\phi$.
Consequently, the gradient of the policy itself is bounded by:
$$\sum_a \|\nabla \pi_\omega(a|s)\|_2 = \sum_a \pi_\omega(a|s) \|\nabla \log \pi_\omega(a|s)\|_2 \le 2 B_\phi.$$
For the Hessian, applying the product rule to $\nabla \pi_\omega = \pi_\omega \nabla \log \pi_\omega$ yields $\nabla^2 \pi_\omega = \pi_\omega \nabla^2 \log \pi_\omega + \frac{1}{\pi_\omega} \nabla \pi_\omega \nabla \pi_\omega^\top$.
The Hessian of the log-policy is the negative feature covariance: $\nabla^2 \log \pi_\omega(a|s) = - \Cov_{a' \sim \pi_\omega}(\phi(s,a'))$.
Since $\|\phi\|_2 \le B_\phi$, the spectral norm of this covariance matrix is bounded by $B_\phi^2$.
Thus, the local policy Hessian is bounded by:
$$\sum_a \|\nabla^2 \pi_\omega(a|s)\|_2 \le \sum_a \pi_\omega(a|s) \big( B_\phi^2 + (2 B_\phi)^2 \big) = 5 B_\phi^2.$$
Finally, by Lemma~\ref{lem:bounded-values}, the regularized action-value is globally bounded in magnitude by $|Q_\tau^{\pi_\omega}(s,a)| \le \frac{1 + \tau \log |\mathcal{A}|}{1-\gamma} \triangleq \frac{r_{max}}{1-\gamma}$.

(ii) \textit{Recursive Value Gradient.} By definition, $V_\tau^{\pi_\omega}(s) = \sum_a \pi_\omega(a|s) \big( Q_\tau^{\pi_\omega}(s,a) - \tau \log \pi_\omega(a|s) \big)$.
By the product rule, its gradient satisfies the recursive identity:
\begin{align*}
\nabla V_\tau^{\pi_\omega}(s) &= \sum_a \nabla \pi_\omega(a|s) \big( Q_\tau^{\pi_\omega}(s,a) - \tau \log \pi_\omega(a|s) \big) + \sum_a \pi_\omega(a|s) \big( \nabla Q_\tau^{\pi_\omega}(s,a) - \tau \nabla \log \pi_\omega(a|s) \big)\\
& = \sum_a \nabla \pi_\omega(a|s) \big( Q_\tau^{\pi_\omega}(s,a) - \tau \log \pi_\omega(a|s) \big) + \gamma \sum_a \pi_\omega(a|s) \mathbb{E}_{s' \sim \mathcal{P}(\cdot|s,a)} [ \nabla V_\tau^{\pi_\omega}(s') ],
\end{align*}
where $\sum_a \pi_\omega(a|s) \nabla \log \pi_\omega(a|s) = \sum_a \nabla \pi_\omega(a|s) = 0$.
Taking the vector norm and substituting the identity $\nabla \pi_\omega = \pi_\omega \nabla \log \pi_\omega$:
$$\|\nabla V_\tau^{\pi_\omega}(s)\|_2 \le \sum_a \pi_\omega(a|s) \|\nabla \log \pi_\omega(a|s)\|_2 \big| Q_\tau^{\pi_\omega}(s,a) - \tau \log \pi_\omega(a|s) \big| + \gamma \sup_{s'} \|\nabla V_\tau^{\pi_\omega}(s')\|_2.$$
From the score bound, we have $\|\nabla \log \pi_\omega(a|s)\|_2 \le 2 B_\phi$. To bound the absolute value term, we apply the triangle inequality. First, $|Q_\tau^{\pi_\omega}(s,a)| \le \frac{r_{max}}{1-\gamma}$. Second, the expected absolute entropy evaluates to the Shannon entropy: $\sum_a \pi_\omega(a|s) |\tau \log \pi_\omega(a|s)| = \tau H(\pi_\omega(\cdot|s)) \le \tau \log |\mathcal{A}| \le \frac{r_{max}}{1-\gamma}$. Thus, the entire first term is bounded by $2 B_\phi \big( \frac{2 r_{max}}{1-\gamma} \big)$, yielding:
$$\|\nabla V_\tau^{\pi_\omega}(s)\|_2 \le \frac{4 B_\phi r_{max}}{1-\gamma} + \gamma \sup_{s'} \|\nabla V_\tau^{\pi_\omega}(s')\|_2.$$
Solving this recurrence relation across the state space geometrically bounds the state-value gradient by $\|\nabla V_\tau^{\pi_\omega}\|_2 \le \frac{4 B_\phi r_{max}}{(1-\gamma)^2}$.
Because the transition dynamics are independent of $\omega$, the Q-function gradient is $\nabla Q_\tau^{\pi_\omega}(s,a) = \gamma \mathbb{E}_{s' \sim \mathcal{P}(\cdot|s,a)} [ \nabla V_\tau^{\pi_\omega}(s')]$.
Taking the norm yields
$\|\nabla Q_\tau^{\pi_\omega}\|_2 \le \frac{4 \gamma B_\phi r_{max}}{(1-\gamma)^2}$.

(iii) \textit{Recursive Value Hessian.} By the multivariate product rule, the Hessian gains one additional term from the derivative of $\nabla \pi_\omega$:
\begin{align*}
\nabla^2 V_\tau^{\pi_\omega}(s) &= \sum_a \nabla^2 \pi_\omega(a|s) \big( Q_\tau^{\pi_\omega}(s,a) - \tau \log \pi_\omega(a|s) \big) \\
&\quad + \sum_a \nabla \pi_\omega(a|s) \nabla Q_\tau^{\pi_\omega}(s,a)^\top + \sum_a \nabla Q_\tau^{\pi_\omega}(s,a) \nabla \pi_\omega(a|s)^\top\\
&\quad - \tau \sum_a \pi_\omega(a|s) \nabla \log \pi_\omega(a|s) \nabla \log \pi_\omega(a|s)^\top + \gamma \sum_a \pi_\omega(a|s) \mathbb{E}_{s' \sim \mathcal{P}(\cdot|s,a)} [ \nabla^2 V_\tau^{\pi_\omega}(s') ].
\end{align*}
We take the spectral norm of both sides via the triangle inequality.
The second and third are transposes of each other, meaning their spectral norms are identical and contribute equally.
The fourth term is the negative feature covariance scaled by $\tau$, bounded by $\tau \|\Cov_{\pi_\omega}(\phi)\|_2 \le 4 \tau B_\phi^2$.
Substituting the uniform bounds from Steps (i) and (ii) into this recurrence yields:
$$\|\nabla^2 V_\tau^{\pi_\omega}(s)\|_2 \le 5 B_\phi^2 \left( \frac{2 r_{max}}{1-\gamma} \right) + 2 (2 B_\phi) \frac{4 \gamma B_\phi r_{max}}{(1-\gamma)^2} + 4 \tau B_\phi^2 + \gamma \sup_{s'} \|\nabla^2 V_\tau^{\pi_\omega}(s')\|_2.$$
Solving this recurrence relation gives:
$$\|\nabla^2 V_\tau^{\pi_\omega}\|_2 \le \frac{10 B_\phi^2 r_{max}}{(1-\gamma)^2} + \frac{16 \gamma B_\phi^2 r_{max}}{(1-\gamma)^3} + \frac{4 \tau B_\phi^2}{1-\gamma}
\le \frac{10 + 6\gamma}{(1-\gamma)^3} B_\phi^2 r_{max} + \frac{4 \tau B_\phi^2}{(1-\gamma)^3}.$$
Since $\gamma < 1$, we have $10 + 6\gamma \le 16$, yielding the final uniform Hessian bound:
$$\|\nabla^2 V_\tau^{\pi_\omega}\|_2 \le \frac{16 B_\phi^2 r_{max} + 4 \tau B_\phi^2}{(1-\gamma)^3}.$$

(iv) \textit{Objective Smoothness.} The algorithm objective is the expected value under the initial state distribution $\mu$, so $J_\tau(\omega) = \mathbb{E}_{s_0 \sim \mu} [V_\tau^{\pi_\omega}(s_0)]$.
By Jensen's inequality, the spectral norm of the objective's Hessian is bounded by the maximum spectral norm of the value Hessian over the state space.
Therefore, $\|\nabla^2 J_\tau(\omega)\|_2 \le L_J$, guaranteeing the objective is globally $L_J$-smooth.
\end{prf}

\begin{prf}[Proof of Lemma~\ref{lem:pl-ascent} (Actor Progress Bound)]
(i) \textit{Smoothness Expansion.}
By the global smoothness of the regularized objective (Lemma~\ref{lem:smoothness}), the step-wise progress is governed by:
$$J_\tau(\omega_{t+1}) - J_\tau(\omega_t) \ge \eta_t \langle \nabla J_\tau(\omega_t), \Delta \omega_t \rangle - \frac{L_J \eta_t^2}{2} \|\Delta \omega_t\|_2^2,$$
where $\Delta \omega_t =\theta_{t+1} - \tau \omega_t$ such that $\omega_{t+1}-\omega_t = \eta_t \Delta \omega_t$.
We substitute the Uncentered Gradient Identity (Lemma~\ref{lem:uncentered-gradient}) into the inner product:
$$\langle \nabla J_\tau, \Delta \omega_t \rangle = \frac{1}{1-\gamma} \langle \bar{\Sigma}_{cen}(\pi_t) (\theta^*_\tau(\omega_t) - \tau \omega_t), \Delta \omega_t \rangle + \frac{1}{1-\gamma} \langle E_{bias}, \Delta \omega_t \rangle.$$

(ii) \textit{Decoupling Centered Matrix Term.} For any positive semi-definite symmetric matrix $\Sigma$, $a^\top \Sigma b = \frac{1}{2}\|b\|_\Sigma^2 + \frac{1}{2}\|a\|_\Sigma^2 - \frac{1}{2}\|b-a\|_\Sigma^2 \ge \frac{1}{2}\|b\|_\Sigma^2 - \frac{1}{2}\|b-a\|_\Sigma^2$.
Setting $a = \theta^*_\tau(\omega_t) - \tau \omega_t$ and $b = \Delta \omega_t$, we have $b-a = \theta_{t+1} - \theta^*_\tau(\omega_t) = e_t$.
Applying this yields:
$$\frac{1}{1-\gamma} \langle \bar{\Sigma}_{cen}(\pi_t) (\theta^*_\tau(\omega_t) - \tau \omega_t), \Delta \omega_t \rangle \ge \underbrace{ \frac{1}{2(1-\gamma)} \|\Delta \omega_t\|_{\bar{\Sigma}_{cen}(\pi_t)}^2 }_{\text{Algorithmic Progress}} - \underbrace{ \frac{1}{2(1-\gamma)} \|e_t\|_{\bar{\Sigma}_{cen}(\pi_t)}^2 }_{\text{Tracking Penalty}}.$$

(iii) \textit{Lower-Bounding Algorithmic Progress and Upper-Bounding Tracking Penalty.}
To bound the eigenvalues of $\bar{\Sigma}_{cen}(\pi_t)$ for the algorithmic progress, we shift the measure from $d^{\pi_t} \times \pi_t$ to $d^{\omega^*_\tau} \times \pi_{\omega^*_\tau}$ via the joint concentrability bound $C_{joint}$ (Assumption~\ref{ass:parametric-density}). For any vector $v$, we have:
$$ v^\top \bar{\Sigma}_{cen}(\pi_t) v = \mathbb{E}_{d^{\pi_t}, \pi_t} \big[ (v^\top \tilde{\phi}_t)^2 \big]
\ge \frac{1}{C_{joint}} \mathbb{E}_{d^{\omega^*_\tau}, \pi_{\omega^*_\tau}} \big[ (v^\top \tilde{\phi}_t)^2 \big].$$
Note that $\tilde{\phi}_t(s,a) = \phi(s,a) - \bar{\phi}_t(s)$ is centered around the training mean $\bar{\phi}_t(s)$, distinct from the feature mean under the parametric optimum $\bar{\phi}^{\omega^*_\tau}(s) \triangleq \mathbb{E}_{a \sim \pi_{\omega^*_\tau}(\cdot|s)}[\phi(s,a)]$. For any state $s$, because the expected squared distance of a random variable is minimized when centered around its true mean, this mismatched centering yields a lower bound over the action expectation:
$$\mathbb{E}_{a \sim \pi_{\omega^*_\tau}} \Big[ \big(v^\top (\phi(s,a) - \bar{\phi}_t(s))\big)^2 \;\Big|\; s \Big] \ge \mathbb{E}_{a \sim \pi_{\omega^*_\tau}} \Big[ \big(v^\top (\phi(s,a) - \bar{\phi}^{\omega^*_\tau}(s))\big)^2 \;\Big|\; s \Big].$$
Taking the outer expectation over the state distribution $s \sim d^{\omega^*_\tau}$ preserves this pointwise inequality. Therefore, $\bar{\Sigma}_{cen}(\pi_t) \succeq \frac{1}{C_{joint}} \bar{\Sigma}_{cen}(\pi_{\omega^*_\tau})$. Applying the Positive-Definite Fisher Information (Assumption~\ref{ass:pd-fim}), $\bar{\Sigma}_{cen}(\pi_{\omega^*_\tau}) \succeq \lambda I$, yields the lower bound $ \bar{\Sigma}_{cen}(\pi_t) \succeq \frac{\lambda}{C_{joint}} I$.

For the tracking penalty term, by the bounded feature assumption (Assumption~\ref{ass:bounded-features}), the centered covariance is directly upper-bounded: $\bar{\Sigma}_{cen}(\pi_t) \preceq 4 B_\phi^2 I$.
Substituting these respective spectrum limits into the decoupling inequality yields:
$$\frac{1}{1-\gamma} \langle \bar{\Sigma}_{cen}(\pi_t) (\theta^*_\tau(\omega_t) - \tau \omega_t), \Delta \omega_t \rangle \ge \frac{\lambda}{2 C_{joint} (1-\gamma)} \|\Delta \omega_t\|_2^2 - \frac{2 B_\phi^2}{1-\gamma} \|e_t\|_2^2.$$

(iv) \textit{Decoupling Uncentered Bias Term.} We decouple the uncentered bias inner product using Young's inequality ($xy \le \frac{c}{2}x^2 + \frac{1}{2c}y^2$) with carefully chosen $c = \frac{\lambda}{4 C_{joint}}$:
$$\left| \frac{1}{1-\gamma} \langle E_{bias}, \Delta \omega_t \rangle \right| \le \frac{1}{1-\gamma} \|E_{bias}\|_2 \|\Delta \omega_t\|_2 \le \frac{\lambda}{8 C_{joint} (1-\gamma)} \|\Delta \omega_t\|_2^2 + \frac{2 C_{joint}}{\lambda (1-\gamma)} \|E_{bias}\|_2^2.$$
Using the bound $\|E_{bias}\|_2^2 \le B_\phi^2 \epsilon_{app}$ from Lemma~\ref{lem:uncentered-gradient} gives:
$$\left| \frac{1}{1-\gamma} \langle E_{bias}, \Delta \omega_t \rangle \right| \le \frac{\lambda}{8 C_{joint} (1-\gamma)} \|\Delta \omega_t\|_2^2 + \frac{2 C_{joint} B_\phi^2}{\lambda (1-\gamma)} \epsilon_{app}.$$

(v) \textit{Synthesis.} We substitute the decoupled components back into the smoothness bound:
\begin{align*}
& \quad J_\tau(\omega_{t+1}) - J_\tau(\omega_t) \\
& \ge \eta_t \left( \frac{\lambda}{2 C_{joint} (1-\gamma)} - \frac{\lambda}{8 C_{joint} (1-\gamma)} - \frac{L_J \eta_t}{2} \right) \|\Delta \omega_t\|_2^2 - \eta_t C_Z \|e_t\|_2^2 - \eta_t C_{bias} \epsilon_{app}.
\end{align*}
By setting the step size $\eta_t \le \frac{\lambda}{2 L_J C_{joint} (1-\gamma)}$, we guarantee $\frac{L_J \eta_t}{2} \le \frac{\lambda}{4 C_{joint} (1-\gamma)}$.
The coefficient associated with $\eta_t \|\Delta \omega_t\|_2^2$ is lower bounded by:
$$\frac{\lambda}{2 C_{joint} (1-\gamma)} - \frac{\lambda}{8 C_{joint} (1-\gamma)} - \frac{\lambda}{4 C_{joint} (1-\gamma)} = \frac{\lambda}{8 C_{joint} (1-\gamma)} \triangleq \rho_{ac}.$$
Substituting $J_\tau(\omega_{t+1}) - J_\tau(\omega_t) = \text{Gap}_t - \text{Gap}_{t+1}$ and taking the expectation completes the proof.
\end{prf}

\begin{prf}[Proof of Lemma~\ref{lem:coupled-pl-descent} (Coupled Actor Recurrence)]
We establish the expected linear recurrence by coupling the Actor Progress Bound with the objective's PL geometry and the actor's parameter update.

(i) \textit{Inverting PL Geometry.} The term $\|\theta^*_\tau(\omega_t) - \tau \omega_t\|_2^2$ measures the distance of the actor's scaled parameter to the ideal uncentered critic parameter.
We invert the Parameter-Space PL Condition (Lemma \ref{lem:pl-condition}) to lower-bound this ideal target distance:
$$\|\theta^*_\tau(\omega_t) - \tau \omega_t\|_2^2 \ge \frac{\tau}{C_{PL}} \text{Gap}_t - \frac{C_{err}}{C_{PL}} \epsilon_{app}.$$

(ii) \textit{Bridging to Actor's Parameter Update.} We relate the ideal target distance to the actor's update direction $\Delta \omega_t=\theta_{t+1} - \tau \omega_t$ via the triangle inequality: $\|\theta^*_\tau(\omega_t) - \tau \omega_t\|_2^2 \le 2 \|\theta_{t+1} - \tau \omega_t\|_2^2 + 2 \|e_t\|_2^2$, where the tracking residual is $e_t = \theta_{t+1} - \theta^*_\tau(\omega_t)$.
Substituting this into our inverted PL bound and rearranging provides a lower bound on the actor's actual update norm:
$$\|\theta_{t+1} - \tau \omega_t\|_2^2 \ge \frac{\tau}{2 C_{PL}} \text{Gap}_t - \frac{C_{err}}{2 C_{PL}} \epsilon_{app} - \|e_t\|_2^2.$$

(iii) \textit{Synthesis.} We take the expectation of this bound and substitute it directly into the Actor Progress Bound (Lemma~\ref{lem:pl-ascent}):
$$\mathbb{E}[\text{Gap}_{t+1}] \le \mathbb{E}[\text{Gap}_t] - \eta_t \rho_{ac} \left( \frac{\tau}{2 C_{PL}} \mathbb{E}[\text{Gap}_t] - \frac{C_{err}}{2 C_{PL}} \epsilon_{app} - Z_t \right) + \eta_t C_Z Z_t + \eta_t C_{bias} \epsilon_{app}.$$
Grouping the coefficients for $\mathbb{E}[\text{Gap}_t]$, $Z_t$, and $\epsilon_{app}$ exactly yields the composite constants $\rho_{gap}$, $\tilde{C}_Z$, and $C_{\epsilon,gap}$, establishing the stated linear descent.
\end{prf}

\begin{prf}[Proof of Lemma~\ref{lem:critic-tracking-pl} (Coupled Critic Tracking)]
(i) \textit{Unprojected Tracking Error.} We bound the expected tracking error at the next step, $Z_{t+1} = \mathbb{E}[\|e_{t+1}\|_2^2]$, where $e_{t+1} = \theta_{t+2} - \theta^*_\tau(\omega_{t+1})$.
By definition, $\theta_{t+2} = \Pi_{R_\theta} \big( \theta_{t+1} - \alpha_{t+1} g_{t+1}^{cr}(s_{t+1},a_{t+1}) \big)$.
By Lemma~\ref{lem:structural-bounds} and the choice $R_\theta = B_\theta$, the ideal critic resides within the capacity ball, $\theta^*_\tau(\omega_{t+1}) = \Pi_{R_\theta}(\theta^*_\tau(\omega_{t+1}))$.
Because Euclidean projection onto a convex set is non-expansive, the error is bounded by the unprojected distance:
$$ \|e_{t+1}\|_2^2 \le \left\| \theta_{t+1} - \alpha_{t+1} g_{t+1}^{cr}(s_{t+1},a_{t+1}) - \theta^*_\tau(\omega_{t+1}) \right\|_2^2 = \left\| \tilde{e}_t - \alpha_{t+1} g_{t+1}^{cr}(s_{t+1},a_{t+1}) \right\|_2^2. $$
where $\tilde{e}_t \triangleq \theta_{t+1} - \theta^*_\tau(\omega_{t+1})$ denotes the intermediate tracking error prior to the next critic update.

(ii) \textit{SGD Contraction.} We take the conditional expectation with respect to the filtration $\mathcal{F}_{t+1}$.
Because the parameters $\theta_{t+1}$ and $\omega_{t+1}$ are determined by $\mathcal{F}_{t+1}$, the intermediate distance prior to the SGD update, $\tilde{e}_t = \theta_{t+1} - \theta^*_\tau(\omega_{t+1})$, is deterministic conditioned on $\mathcal{F}_{t+1}$.
Expanding the squared norm yields:
$$ \left\| \tilde{e}_t - \alpha_{t+1} g_{t+1}^{cr} \right\|_2^2 = \|\tilde{e}_t\|_2^2 - 2 \alpha_{t+1} \langle \tilde{e}_t, g_{t+1}^{cr} \rangle + \alpha_{t+1}^2 \|g_{t+1}^{cr}\|_2^2. $$
Because the state-action pair is sampled from the current policy's visitation measure $(s_{t+1}, a_{t+1}) \sim d^{\pi_{t+1}} \times \pi_{t+1}$ and $\hat{Q}_{t+1}$ is a conditionally unbiased estimator such that $\mathbb{E}[\hat{Q}_{t+1} \mid \mathcal{F}_{t+1}, s_{t+1}, a_{t+1}] = Q_\tau^{\pi_{t+1}}(s_{t+1}, a_{t+1})$, we apply the Tower Property to evaluate the expected stochastic gradient:
\begin{align*}
&\quad \mathbb{E}[g_{t+1}^{cr}(s_{t+1},a_{t+1}) \mid \mathcal{F}_{t+1}] \\
&= \mathbb{E} \Big[ \mathbb{E} \big[ \big( \theta_{t+1}^\top \phi(s_{t+1}, a_{t+1}) - \hat{Q}_{t+1} \big) \phi(s_{t+1}, a_{t+1}) \mid \mathcal{F}_{t+1}, s_{t+1}, a_{t+1} \big] \;\Big|\; \mathcal{F}_{t+1} \Big] \\
&= \mathbb{E} \Big[ \phi(s_{t+1}, a_{t+1}) \phi(s_{t+1}, a_{t+1})^\top \theta_{t+1} - \phi(s_{t+1}, a_{t+1}) Q_\tau^{\pi_{t+1}}(s_{t+1}, a_{t+1}) \;\Big|\; \mathcal{F}_{t+1} \Big].
\end{align*}
Because given $\mathcal{F}_{t+1}$, $(s_{t+1}, a_{t+1})$ is sampled from $d^{\pi_{t+1}} \times \pi_{t+1}$, this expected value evaluates to the gradient of the expected least-squares objective:
\begin{align*}
\mathbb{E}[g_{t+1}^{cr}(s_{t+1},a_{t+1}) \mid \mathcal{F}_{t+1}] &= \mathbb{E}_{(s,a) \sim d^{\pi_{t+1}} \times \pi_{t+1}} \big[ \phi(s,a) \phi(s,a)^\top \theta_{t+1} - \phi(s,a) Q_\tau^{\pi_{t+1}}(s,a) \big] \\
&= \bar{\Sigma}_{unc}(\pi_{t+1}) \theta_{t+1} - b(\pi_{t+1}),
\end{align*}
where $\bar{\Sigma}_{unc}(\pi_{t+1}) \triangleq \mathbb{E}_{d^{\pi_{t+1}}, \pi_{t+1}}[\phi(s,a) \phi(s,a)^\top]$ and $b(\pi_{t+1}) \triangleq \mathbb{E}_{d^{\pi_{t+1}}, \pi_{t+1}}[\phi(s,a) Q_\tau^{\pi_{t+1}}(s,a)]$.
By the first-order condition for the ideal critic, $\bar{\Sigma}_{unc}(\pi_{t+1}) \theta^*_\tau(\omega_{t+1}) = b(\pi_{t+1})$.
Substituting this identity yields $\mathbb{E}[g_{t+1}^{cr}(s_{t+1},a_{t+1}) \mid \mathcal{F}_{t+1}] = \bar{\Sigma}_{unc}(\pi_{t+1}) (\theta_{t+1} - \theta^*_\tau(\omega_{t+1})) = \bar{\Sigma}_{unc}(\pi_{t+1}) \tilde{e}_t$.

Substituting this expected gradient back into the inner product yields the quadratic form $\langle \tilde{e}_t, \bar{\Sigma}_{unc}(\pi_{t+1}) \tilde{e}_t \rangle$. By the Uncentered Feature Positive-Definiteness (Assumption~\ref{ass:pd-uncentered}), this is bounded below by $\kappa \|\tilde{e}_t\|_2^2$.
Applying the gradient second moment bound $\mathbb{E}[\|g_{t+1}^{cr}\|_2^2 \mid \mathcal{F}_{t+1}] \le G_{cr}^2$ (Lemma~\ref{lem:structural-bounds}(i)), the expected SGD step contracts as:
$$ \mathbb{E}[\|e_{t+1}\|_2^2 \mid \mathcal{F}_{t+1}] \le (1 - 2\alpha_{t+1} \kappa) \|\tilde{e}_t\|_2^2 + \alpha_{t+1}^2 G_{cr}^2. $$

(iii) \textit{Young's Inequality.} We decompose the intermediate error $\tilde{e}_t$ by separating $e_t = \theta_{t+1} - \theta^*_\tau(\omega_t)$ from the target drift:
$$ \tilde{e}_t = e_t + \big(\theta^*_\tau(\omega_t) - \theta^*_\tau(\omega_{t+1})\big). $$
Applying Young's Inequality ($\|x+y\|_2^2 \le (1+p)\|x\|_2^2 + (1+1/p)\|y\|_2^2$) with parameter $p = \alpha_{t+1} \kappa$ yields:
$$ \|\tilde{e}_t\|_2^2 \le (1 + \alpha_{t+1} \kappa) \|e_t\|_2^2 + \left(1 + \frac{1}{\alpha_{t+1} \kappa}\right) \|\theta^*_\tau(\omega_t) - \theta^*_\tau(\omega_{t+1})\|_2^2. $$
From the step-size condition $\alpha_{t+1} = \xi \eta_t \le \frac{1}{2\kappa}$, the SGD contraction multiplier is non-negative ($1 - 2\alpha_{t+1}\kappa \ge 0$). This allows us to substitute the upper bound for $\|\tilde{e}_t\|_2^2$ into the SGD contraction from Step (ii). Applying the relaxations $(1 - 2\alpha_{t+1}\kappa)(1 + \alpha_{t+1}\kappa) \le 1 - \alpha_{t+1}\kappa$ and $(1 - 2\alpha_{t+1}\kappa)(1 + \frac{1}{\alpha_{t+1} \kappa}) \le \frac{1-\alpha_{t+1}\kappa}{\alpha_{t+1}\kappa} \le \frac{1}{\alpha_{t+1} \kappa}$ gives:
$$ \mathbb{E}[\|e_{t+1}\|_2^2 \mid \mathcal{F}_{t+1}] \le (1 - \alpha_{t+1} \kappa) \|e_t\|_2^2 + \frac{1}{\alpha_{t+1} \kappa} \|\theta^*_\tau(\omega_t) - \theta^*_\tau(\omega_{t+1})\|_2^2 + \alpha_{t+1}^2 G_{cr}^2. $$
Taking the total expectation $\mathbb{E}[\cdot]$ over the algorithmic trajectory yields the single-step recurrence:
$$ Z_{t+1} \le \underbrace{ (1 - \alpha_{t+1} \kappa) Z_t + \alpha_{t+1}^2 G_{cr}^2 }_{\text{SGD Error}} + \underbrace{ \frac{1}{\alpha_{t+1} \kappa} \mathbb{E} [ \|\theta^*_\tau(\omega_t) - \theta^*_\tau(\omega_{t+1})\|_2^2 ] }_{\text{Target Drift}} . $$

(iv) \textit{Controlling Target Drift.} Because the ideal critic is $L_\theta$-Lipschitz continuous (Lemma~\ref{lem:lipschitz-critic}), the target drift is bounded by the actor's update:
$$ \|\theta^*_\tau(\omega_t) - \theta^*_\tau(\omega_{t+1})\|_2^2 \le L_\theta^2 \|\omega_{t+1} - \omega_t\|_2^2 = L_\theta^2 \eta_t^2 \|\theta_{t+1} - \tau \omega_t\|_2^2. $$
We rearrange the Actor Progress Bound (Lemma~\ref{lem:pl-ascent}) to bound the expected magnitude of the actor update by the objective progress:
$$ \eta_t \mathbb{E}[\|\theta_{t+1} - \tau \omega_t\|_2^2] \le \frac{1}{\rho_{ac}} \big( \mathbb{E}[\text{Gap}_t] - \mathbb{E}[\text{Gap}_{t+1}] \big) + \frac{C_Z \eta_t}{\rho_{ac}} Z_t + \frac{C_{bias} \eta_t}{\rho_{ac}} \epsilon_{app}. $$
Combining these two bounds leads to:
$$ \mathbb{E} [ \|\theta^*_\tau(\omega_t) - \theta^*_\tau(\omega_{t+1})\|_2^2 ]
\le L_\theta^2 \eta_t\left( \frac{1}{\rho_{ac}} \big( \mathbb{E}[\text{Gap}_t] - \mathbb{E}[\text{Gap}_{t+1}] \big) + \frac{C_Z \eta_t}{\rho_{ac}} Z_t + \frac{C_{bias} \eta_t}{\rho_{ac}} \epsilon_{app} \right).$$

(v) \textit{Synthesis.} Substituting the target drift bound into the tracking recurrence yields:
$$ Z_{t+1} \le (1 - \alpha_{t+1} \kappa) Z_t + \frac{L_\theta^2 \eta_t}{\alpha_{t+1} \kappa} \left( \frac{1}{\rho_{ac}} \big( \mathbb{E}[\text{Gap}_t] - \mathbb{E}[\text{Gap}_{t+1}] \big) + \frac{C_Z \eta_t}{\rho_{ac}} Z_t + \frac{C_{bias} \eta_t}{\rho_{ac}} \epsilon_{app} \right) + \alpha_{t+1}^2 G_{cr}^2. $$
By setting $\alpha_{t+1} = \xi \eta_t$, the ratio evaluates to $\frac{\eta_t}{\alpha_{t+1}} = \frac{1}{\xi}$, and the critic decay evaluates to $\alpha_{t+1} \kappa = \xi \kappa \eta_t$. Substituting this simplifies the inequality to:
\begin{align*}
Z_{t+1} &\le \left(1 - \xi \kappa \eta_t\right) Z_t + \frac{L_\theta^2}{\xi \kappa \rho_{ac}} \big( \mathbb{E}[\text{Gap}_t] - \mathbb{E}[\text{Gap}_{t+1}] \big) \\
&\quad + \eta_t \frac{L_\theta^2 C_Z}{\xi \kappa \rho_{ac}} Z_t + \eta_t \frac{L_\theta^2 C_{bias}}{\xi \kappa \rho_{ac}} \epsilon_{app} + \xi^2 \eta_t^2 G_{cr}^2.
\end{align*}
Grouping the coefficients for $Z_t$, the objective difference, and $\epsilon_{app}$ exactly forms the composite constants $\rho_{cr}$, $C_{gap}$, and $C_{\epsilon,cr}$, completing the proof.
\end{prf}

\subsection{Proofs for Section~\ref{sec:pl-convergence}}

\begin{prf}[Proof of Theorem~\ref{thm:coupled-recurrence} (Coupled Lyapunov Recurrence)]
(i) \textit{Algebraic System Fusion.} We rearrange the Coupled Critic Tracking sequence (Lemma~\ref{lem:critic-tracking-pl}) to align the expected performance gap terms:
$$Z_{t+1} + C_{gap} \mathbb{E}[\text{Gap}_{t+1}] \le (1 - \rho_{cr} \eta_t) Z_t + C_{gap} \mathbb{E}[\text{Gap}_t] + \xi^2 \eta_t^2 G_{cr}^2 + \eta_t C_{\epsilon,cr} \epsilon_{app}.$$
We multiply this by the coupling constant $\beta > 0$ and add it to the Coupled Actor Recurrence inequality (Lemma~\ref{lem:coupled-pl-descent}):
\begin{align*}
\mathcal{L}_{t+1} &\le \mathbb{E}[\text{Gap}_t] - \rho_{gap} \eta_t \mathbb{E}[\text{Gap}_t] + \eta_t \tilde{C}_Z Z_t + \beta (1 - \rho_{cr} \eta_t) Z_t + \beta C_{gap} \mathbb{E}[\text{Gap}_t] \\
&\quad + \beta \xi^2 \eta_t^2 G_{cr}^2 + \eta_t (C_{\epsilon,gap} + \beta C_{\epsilon,cr}) \epsilon_{app}.
\end{align*}

(ii) \textit{Constraints for Joint Recurrence.}
To prove the joint recurrence $\mathcal{L}_{t+1} \le (1 - \rho_{\mathcal{L}} \eta_t) \mathcal{L}_t + \dots$, we group the coefficients for $\mathbb{E}[\text{Gap}_t]$ and $Z_t$ and demand that they are bounded by $(1 - \rho_{\mathcal{L}} \eta_t) \mathcal{L}_t$:
$$(1 - \rho_{\mathcal{L}} \eta_t) \mathcal{L}_t = (1 - \rho_{\mathcal{L}} \eta_t)(1 + \beta C_{gap}) \mathbb{E}[\text{Gap}_t] + (1 - \rho_{\mathcal{L}} \eta_t) \beta Z_t.$$
For this to hold, the effective decay rates for both components must match or exceed the target rate $\rho_{\mathcal{L}}$, yielding two independent constraints:
\begin{itemize}
    \item \textit{Actor Constraint:} $1 + \beta C_{gap} - \rho_{gap} \eta_t \le (1 - \rho_{\mathcal{L}} \eta_t)(1 + \beta C_{gap}) \implies \rho_{\mathcal{L}} \le \frac{\rho_{gap}}{1 + \beta C_{gap}}$.
    \item \textit{Critic Constraint:} $\beta - \beta \rho_{cr} \eta_t + \eta_t \tilde{C}_Z \le \beta - \beta \rho_{\mathcal{L}} \eta_t \implies \rho_{\mathcal{L}} \le \rho_{cr} - \frac{\tilde{C}_Z}{\beta}$.
\end{itemize}

(iii) \textit{Derivation of Joint Rate and Coupling Weight.} To maximize progress, we set the joint rate exactly to the actor's bottleneck: $\rho_{\mathcal{L}} \triangleq \frac{\rho_{gap}}{1 + \beta C_{gap}}$.
To satisfy the critic constraint, we ensure $\rho_{\mathcal{L}} \le \rho_{cr} - \frac{\tilde{C}_Z}{\beta}$.
For the algorithm to converge ($\rho_{\mathcal{L}} > 0$), we require $\beta > \frac{\tilde{C}_Z}{\rho_{cr}}$.
To guarantee stability with a safe margin, we double this minimum requirement by setting $\beta \triangleq \frac{2 \tilde{C}_Z}{\rho_{cr}}$.
Substituting this choice simplifies the critic constraint to $\rho_{\mathcal{L}} \le \frac{\rho_{cr}}{2}$.
Because $\rho_{\mathcal{L}} \le \rho_{gap}$, a sufficient condition for joint recurrence is simply $\rho_{cr} \ge 2\rho_{gap}$.

(iv) \textit{Timescale Separation Limit.} We substitute the native critic rate $\rho_{cr} \triangleq \xi \kappa - \frac{S^2}{\xi \kappa}$, where $S \triangleq L_\theta \sqrt{\frac{C_Z}{\rho_{ac}}}$, into the stability requirement:
$$\xi \kappa - \frac{S^2}{\xi \kappa} \ge 2\rho_{gap} \implies (\kappa \xi)^2 - 2\rho_{gap} (\kappa \xi) - S^2 \ge 0.$$
By the quadratic formula and $\sqrt{x+y} \le \sqrt{x}+\sqrt{y}$, this is satisfied for any $\kappa \xi \ge 2\rho_{gap} + S$.
However, the choice of $\xi$ dictates the joint tracking variance $\Sigma_{\mathcal{L}} = \beta \xi^2 G_{cr}^2$.
Substituting $\beta$, the variance physically scales as $\Sigma_{\mathcal{L}} \propto \frac{\xi^2}{\rho_{cr}} = \frac{\xi^3 \kappa}{(\kappa \xi)^2 - S^2}$.
To minimize this variance, we differentiate with respect to $\xi$ and set the numerator to zero:
$$3\xi^2\big((\kappa \xi)^2 - S^2\big) - \xi^3(2\kappa^2 \xi) = 0 \Longleftrightarrow (\kappa \xi)^2 = 3S^2.$$
This reveals that the theoretical minimum occurs at $\kappa \xi = \sqrt{3}S$.
For a simple and conservative structural bound, we explicitly define the threshold as $\kappa \xi \ge 2\rho_{gap} + 2S$.
Substituting the definition of $S$ yields the required structural lower bound on the timescale ratio $\xi$.

(v) \textit{Synthesis.} With $\beta$ and $\xi$ satisfying the joint recurrence constraints, we consolidate the remaining additive terms.
The joint variance evaluates to $\Sigma_{\mathcal{L}} = \beta \xi^2 G_{cr}^2$, and the joint bias groups to $C_{\epsilon,\mathcal{L}} \epsilon_{app} = (C_{\epsilon,gap} + \beta C_{\epsilon,cr}) \epsilon_{app}$, completing the proof.
\end{prf}

\begin{prf}[Proof of Lemma~\ref{lem:telescoping-sequence} (Telescoping Robbins-Monro Sequence)]
We rearrange the recursive bound to isolate the linear decay:
$$r \eta_t x_t \le x_t - x_{t+1} + a \eta_t^2 + b \eta_t.$$
Dividing both sides by $r \eta_t$ yields $x_t \le \frac{1}{r \eta_t} x_t - \frac{1}{r \eta_t} x_{t+1} + \frac{a}{r} \eta_t + \frac{b}{r}$.
We sum this inequality from $t=0$ to $T-1$. Substituting $\frac{1}{r\eta_t} = \frac{t+t_0}{c}$ and expanding the sum:
$$\sum_{t=0}^{T-1} x_t \le \frac{t_0}{c} x_0 - \frac{T-1+t_0}{c} x_T + \sum_{t=1}^{T-1} x_t \left( \frac{t+t_0}{c} - \frac{t-1+t_0}{c} \right) + \frac{a}{r} \sum_{t=0}^{T-1} \eta_t + \frac{b T}{r}.$$
Because $x_T \ge 0$, we drop the terminal term.
The difference inside the summation simplifies to $\frac{1}{c}$. To consolidate the $x_t$ sequences, we add and subtract $\frac{1}{c} x_0$:
$$\sum_{t=0}^{T-1} x_t \le \frac{t_0 - 1}{c} x_0 + \frac{1}{c} \sum_{t=0}^{T-1} x_t + \frac{a c_\eta}{r} \sum_{t=0}^{T-1} \frac{1}{t+t_0} + \frac{b T}{r}.$$
We subtract the inner sum from both sides to get $(1 - \frac{1}{c}) \sum x_t$, and multiply the entire inequality by $\frac{c}{c - 1}$:
$$\sum_{t=0}^{T-1} x_t \le \frac{t_0 - 1}{c - 1} x_0 + \frac{a c c_\eta}{r(c - 1)} \sum_{t=0}^{T-1} \frac{1}{t+t_0} + \frac{b c T}{r(c - 1)}.$$
Substituting $c = r c_\eta$, the coefficients simplify to $\frac{a c_\eta^2}{c-1}$ and $\frac{b c_\eta T}{c-1}$.
Because $t_0 \ge 1$, the harmonic sum is bounded by $\sum_{t=0}^{T-1} \frac{1}{t+t_0} \le 1 + \log T$.
Dividing the entire expression by $T$ completes the proof.
\end{prf}

\begin{prf}[Proof of Theorem~\ref{thm:average-iterate} (Average-Iterate Regularized Convergence)]
Because the initial offset $t_0 \ge \frac{c_\eta}{\eta_{max}}$ and the step size schedule is decreasing, the maximum step size occurs at $t=0$ and satisfies $\eta_t \le \eta_0 = \frac{c_\eta}{t_0} \le \eta_{max}$ for all $t \ge 0$. By the definition of $\eta_{max}$, this ensures the required step-size condition $\eta_t \le \min\left(\frac{\lambda}{2 L_J C_{joint} (1-\gamma)}, \frac{1}{2\xi\kappa} \right)$.
Therefore, the joint Lyapunov recurrence (Theorem~\ref{thm:coupled-recurrence}) is valid, meaning the Lyapunov sequence satisfies $\mathcal{L}_{t+1} \le (1 - \rho_{\mathcal{L}} \eta_t) \mathcal{L}_t + a \eta_t^2 + b \eta_t$, where $a = \Sigma_{\mathcal{L}}$ and $b = C_{\epsilon,\mathcal{L}} \epsilon_{app}$.

The remaining conditions on $t_0$ and $c_\eta$ guarantee the structural prerequisites for the Telescoping Robbins-Monro Sequence bound (Lemma~\ref{lem:telescoping-sequence}). The choice $c_\eta > \frac{1}{\rho_{\mathcal{L}}}$ satisfies the numerator requirement $c \triangleq \rho_{\mathcal{L}} c_\eta > 1$, and $t_0 \ge 1$ provides the required initial offset. Applying Lemma~\ref{lem:telescoping-sequence} with $x_t = \mathcal{L}_t$ and $r = \rho_{\mathcal{L}}$ establishes the bound on the average Lyapunov sequence:
$$\frac{1}{T} \sum_{t=0}^{T-1} \mathcal{L}_t \le \frac{t_0 - 1}{T (\rho_{\mathcal{L}} c_\eta - 1)} \mathcal{L}_0 + \frac{c_\eta^2 \Sigma_{\mathcal{L}}}{\rho_{\mathcal{L}} c_\eta - 1} \left( \frac{1 + \log T}{T} \right) + \frac{c_\eta C_{\epsilon,\mathcal{L}}}{\rho_{\mathcal{L}} c_\eta - 1} \epsilon_{app}.$$

To evaluate the structural orders, we recall from Remark~\ref{rem:order-evaluation} that $\rho_{\mathcal{L}} = \Theta(\lambda \tau)$, $\Sigma_{\mathcal{L}} = \Theta((1-\gamma)^{-5}\kappa^{-6}\lambda^{-1/2})$, and $C_{\epsilon,\mathcal{L}} = \Theta((1-\gamma)^{-1}\lambda^{-1})$.
We choose the schedule numerator $c_\eta = \frac{c}{\rho_{\mathcal{L}}}$ for a constant $c > 1$, meaning $c_\eta = \Theta(\frac{1}{\lambda \tau})$. Then the bounding factor $\rho_{\mathcal{L}} c_\eta - 1$ evaluates to $c - 1 = \Theta(1)$.
From Lemma~\ref{lem:smoothness}, the objective smoothness constant evaluates to $L_J = \Theta((1-\gamma)^{-3})$,
and the actor progress threshold evaluates to $\frac{\lambda}{2 L_J C_{joint} (1-\gamma)} = \Theta((1-\gamma)^2 \lambda)$.
Because the timescale ratio satisfies $\xi = \Theta((1-\gamma)^{-2}\kappa^{-3}\lambda^{-1/2})$, we have $\frac{1}{2\xi\kappa} = \Theta((1-\gamma)^2 \kappa^2 \lambda^{1/2})$.
Thus, the maximum allowable step size satisfies $\frac{1}{\eta_{max}} = \Theta\big( \frac{1}{(1-\gamma)^2 \lambda} + \frac{1}{(1-\gamma)^2 \kappa^2 \lambda^{1/2}} \big)$.
This allows us to satisfy the initial offset condition by choosing $t_0 = \Theta(\frac{c_\eta}{\eta_{max}}) = \Theta\big( \frac{1}{(1-\gamma)^2 \lambda^2 \tau}  + \frac{1}{(1-\gamma)^2 \kappa^2 \lambda^{3/2}\tau} \big)$.
\begin{itemize}
    \item \textit{Initialization Error:} The initial Lyapunov function is $\mathcal{L}_0 = \mathbb{E}[\text{Gap}_0] + \beta(Z_0 + C_{gap} \mathbb{E}[\text{Gap}_0]) = (1 + \beta C_{gap}) \mathbb{E}[\text{Gap}_0] + \beta Z_0$.
    By definition, $Z_0 = \mathbb{E}[\|\theta_1 - \theta^*_\tau(\omega_0)\|_2^2]$,
    which satisfies $Z_0 \le (2B_\theta)^2 = G_{ac}^2$
    because $\theta_1$ (by constrained SGD) and $\theta^*_\tau(\omega_0)$ (by Lemma~\ref{lem:structural-bounds}(i)) are in the ball of radius $B_\theta$.
    Since $G_{ac}^2 = \Theta(\kappa^{-2}(1-\gamma)^{-2})$, we have $Z_0=\mathcal{O}(\kappa^{-2}(1-\gamma)^{-2})$.
    Substituting $\beta = \Theta((1-\gamma)\kappa^2\lambda^{1/2})$ and $\beta C_{gap} = \Theta(1)$, we obtain $\mathcal{L}_0 = \mathcal{O}((1-\gamma)^{-1}) + \mathcal{O}((1-\gamma)^{-1}\lambda^{1/2}) = \mathcal{O}((1-\gamma)^{-1})$.
    The initialization multiplier evaluates to $\frac{t_0 - 1}{\rho_{\mathcal{L}} c_\eta - 1} = \Theta(t_0) = \Theta\big( \frac{1}{(1-\gamma)^2 \lambda^2 \tau} + \frac{1}{(1-\gamma)^2 \kappa^2 \lambda^{3/2}\tau} \big)$. Thus, the expected initialization term scales as $\mathcal{O}\big( \frac{1}{(1-\gamma)^3 \lambda^2 \tau T} + \frac{1}{(1-\gamma)^3 \kappa^2 \lambda^{3/2} \tau T} \big)$.

    \item \textit{Optimization Variance:} The variance multiplier evaluates to:
    $$ \frac{c_\eta^2 \Sigma_{\mathcal{L}}}{\rho_{\mathcal{L}} c_\eta - 1} = \Theta\left( \frac{1}{\lambda^2 \tau^2} \frac{1}{(1-\gamma)^5 \kappa^6 \lambda^{1/2}} \right) = \Theta\left( \frac{1}{(1-\gamma)^5 \kappa^6 \lambda^{5/2} \tau^2} \right). $$
    Multiplying by $\frac{1 + \log T}{T}$ yields an $\mathcal{O}(T^{-1} \log T)$ variance term, which dominates the $\mathcal{O}(T^{-1})$ initialization error above, taking account of the dependence on $\tau$, $\kappa$, $\lambda$, and $(1-\gamma)^{-1}$.

    \item \textit{Approximation Bias:} The bias multiplier evaluates to:
    $$ \frac{c_\eta C_{\epsilon,\mathcal{L}}}{\rho_{\mathcal{L}} c_\eta - 1} = \Theta\left( \frac{1}{\lambda \tau} \frac{1}{(1-\gamma) \lambda} \right) = \Theta\left( \frac{1}{(1-\gamma) \lambda^2 \tau} \right). $$
\end{itemize}

Because the expected regularized performance gap is upper-bounded by the joint Lyapunov function ($\mathbb{E}[\text{Gap}_t] \le \mathcal{L}_t$), summing these evaluated components and dropping the dominated initialization term yields the stated asymptotic convergence rate.
\end{prf}

\begin{prf}[Proof of Theorem~\ref{thm:last-iterate} (Last-Iterate Regularized Convergence)]
As established in the proof of Theorem~\ref{thm:average-iterate}, the step-size schedule guarantees the required structural condition for the actor update for all $t \ge 0$. Substituting $\eta_t = \frac{c_\eta}{t + t_0}$ directly into the joint Lyapunov recurrence (Theorem~\ref{thm:coupled-recurrence}) yields:
$$\mathcal{L}_{t+1} \le \left(1 - \frac{\rho_{\mathcal{L}} c_\eta}{t + t_0}\right) \mathcal{L}_t + \frac{c_\eta^2 \Sigma_{\mathcal{L}}}{(t + t_0)^2} + \frac{c_\eta C_{\epsilon,\mathcal{L}} \epsilon_{app}}{t+t_0}.$$
We apply Chung's Lemma (Lemma~\ref{lem:chungs-lemma}) by identifying the sequence $x_t = \mathcal{L}_t$, the contraction constant $c = \rho_{\mathcal{L}} c_\eta$, the variance constant $a = c_\eta^2 \Sigma_{\mathcal{L}}$, and the bias constant $b = c_\eta C_{\epsilon,\mathcal{L}} \epsilon_{app}$.
The choice $c_\eta > \frac{1}{\rho_{\mathcal{L}}}$ guarantees $c \triangleq \rho_{\mathcal{L}} c_\eta > 1$, and the additional schedule condition $t_0 \ge \rho_{\mathcal{L}} c_\eta$ satisfies the initial offset requirement $t_0 \ge c$, establishing the bound $\mathcal{L}_T \le \frac{v}{T + t_0} + \frac{b}{c-1}$.

To evaluate the structural orders, we rely on the identical choices of step-size parameters from Theorem~\ref{thm:average-iterate}: we choose $c_\eta = \frac{c}{\rho_{\mathcal{L}}}$ for a constant $c > 1$, meaning $c_\eta = \Theta(\frac{1}{\lambda \tau})$, and choose the initial offset $t_0 = \Theta\big( \frac{1}{(1-\gamma)^2 \lambda^2 \tau} + \frac{1}{(1-\gamma)^2 \kappa^2 \lambda^{3/2}\tau} \big)$, with the scaling constant for $t_0$ sufficiently large to guarantee the additional requirement $t_0 \ge \rho_{\mathcal{L}} c_\eta \,(=c)$.
\begin{itemize}
    \item \textit{Bounding Constant ($v$):} The terminal constant is $v = \max\left( t_0 \mathcal{L}_0, \frac{c_\eta^2 \Sigma_{\mathcal{L}}}{\rho_{\mathcal{L}} c_\eta - 1} \right)$.
    Substituting the bound $\mathcal{L}_0 = \mathcal{O}((1-\gamma)^{-1})$ established in the proof of Theorem~\ref{thm:average-iterate}, the initialization argument evaluates to $t_0 \mathcal{L}_0 = \mathcal{O}\Big( \frac{1}{(1-\gamma)^3 \lambda^2 \tau} + \frac{1}{(1-\gamma)^3 \kappa^2 \lambda^{3/2}\tau} \Big)$.
    As shown in the proof of Theorem~\ref{thm:average-iterate}, the variance argument evaluates to:
    $$ \frac{c_\eta^2 \Sigma_{\mathcal{L}}}{\rho_{\mathcal{L}} c_\eta - 1} = \Theta\left( \frac{1}{\lambda^2 \tau^2} \frac{1}{(1-\gamma)^5 \kappa^6 \lambda^{1/2}} \right) = \Theta\left( \frac{1}{(1-\gamma)^5 \kappa^6 \lambda^{5/2} \tau^2} \right). $$
    Taking account of the dependence on $\tau$, $\kappa$, $\lambda$, and $(1-\gamma)^{-1}$, the variance argument dominates the initialization argument. Thus, the bounding constant evaluates to $v = \Theta\left( \frac{1}{(1-\gamma)^5 \kappa^6 \lambda^{5/2} \tau^2} \right)$.

    \item \textit{Optimization Variance:} Dividing the bounding constant by $T + t_0$ yields the variance term, which is upper-bounded by $\frac{v}{T} = \mathcal{O}\left( \frac{1}{(1-\gamma)^5 \kappa^6 \lambda^{5/2} \tau^2 T} \right)$.

    \item \textit{Approximation Bias:} The bias term matches the average-iterate bound:
    $$ \frac{c_\eta C_{\epsilon,\mathcal{L}}}{\rho_{\mathcal{L}} c_\eta - 1} \epsilon_{app} = \Theta\left( \frac{\epsilon_{app}}{(1-\gamma) \lambda^2 \tau} \right). $$
\end{itemize}

Because $\mathbb{E}[\text{Gap}_T] \le \mathcal{L}_T$, summing the optimization variance and approximation bias yields the stated asymptotic last-iterate convergence rate.
\end{prf}

\subsection{Proofs for Section~\ref{sec:unregularized-complexity}}

\begin{prf}[Proof of Lemma~\ref{lem:global-class-error} (Global Class Approximation Error)]
We mirror the structure of the PL Condition (Lemma \ref{lem:pl-condition}) to bound the representation gap. Let $D_* \triangleq \mathbb{E}_{d^*_\tau}[D_{KL}(\pi^*_\tau \| \pi_{\omega^*_\tau})]$ denote the state-averaged forward KL divergence from the global optimum to the parametric optimum.

(i) \textit{Ideal Critic Decomposition.} We apply the Regularized Suboptimality Decompositions (Lemma~\ref{lem:sub-decomp}) evaluated at the training policy $\pi_t = \pi_{\omega^*_\tau}$ against the comparator $\pi = \pi^*_\tau$. Let $\epsilon_*(s,a)$ denote the approximation error of the ideal critic $\theta^*_\tau(\omega^*_\tau)$ trained under the distribution $d^{\omega^*_\tau} \times \pi_{\omega^*_\tau}$. The exact decomposition is:
\begin{align*}
(1-\gamma) \big( J_\tau(\pi^*_\tau) - J_\tau(\pi_{\omega^*_\tau}) \big) &= -\tau D_* + \mathbb{E}_{d^*_\tau} \Big[ \langle (\theta^*_\tau(\omega^*_\tau) - \tau \omega^*_\tau)^\top \phi, \pi^*_\tau - \pi_{\omega^*_\tau} \rangle_{\mathcal{A}} \Big] \\
&\quad + \mathbb{E}_{d^*_\tau} \big[ \langle \epsilon_*, \pi^*_\tau - \pi_{\omega^*_\tau} \rangle_{\mathcal{A}} \big].
\end{align*}

(ii) \textit{Bounding the Ideal Target Distance at Optimum.} For any fixed temperature $\tau > 0$, the regularized optimal parameter $\omega^*_\tau \in \mathbb{R}^d$ is finite.
As a finite, unconstrained maximizer of the parametric objective $J_\tau(\omega)$, its policy gradient is zero: $\nabla J_\tau(\omega^*_\tau) = 0$. Substituting this into the Uncentered Gradient Identity (Lemma~\ref{lem:uncentered-gradient}), we have:
$$0 = \bar{\Sigma}_{cen}(\pi_{\omega^*_\tau}) \big( \theta^*_\tau(\omega^*_\tau) - \tau \omega^*_\tau \big) + E_{bias}^*,$$
where $E_{bias}^*\triangleq-\mathbb{E}_{s\sim d^{\omega^*_\tau}}
\big[\bar{\phi}_*(s)\mathbb{E}_{a\sim\pi_{\omega^*_\tau}}[\epsilon_*(s,a)\mid s]\big]$, with $\bar{\phi}_*(s)\triangleq\mathbb{E}_{a\sim\pi_{\omega^*_\tau}}[\phi(s,a)]$.
Because this covariance is evaluated at the parametric optimum, the Positive-Definite Fisher Information (Assumption~\ref{ass:pd-fim}) applies directly without any density shift: $ \bar{\Sigma}_{cen}(\pi_{\omega^*_\tau}) \succeq \lambda I$. Rearranging and bounding the target distance yields:
$$\|\theta^*_\tau(\omega^*_\tau) - \tau \omega^*_\tau\|_2 \le \|\bar{\Sigma}_{cen}(\pi_{\omega^*_\tau})^{-1}\|_2 \|E_{bias}^*\|_2 \le \frac{B_\phi}{\lambda} \sqrt{\epsilon_{app}},$$
where the uniform bias bound is applied, $\|E_{bias}^*\|_2 \le B_\phi \sqrt{\epsilon_{app}}$.

(iii) \textit{Decoupling Inner Products.}
We apply H\"{o}lder's inequality over the action space and Pinsker's inequality ($\|\pi^*_\tau - \pi_{\omega^*_\tau}\|_1 \le \sqrt{2 D_{KL}(\pi^*_\tau \| \pi_{\omega^*_\tau})}$) to the ideal algorithmic progress term:
\begin{align*}
\langle (\theta^*_\tau(\omega^*_\tau) - \tau \omega^*_\tau)^\top \phi, \pi^*_\tau - \pi_{\omega^*_\tau} \rangle_{\mathcal{A}} &\le \max_a \big| (\theta^*_\tau(\omega^*_\tau) - \tau \omega^*_\tau)^\top \phi(s,a) \big| \|\pi^*_\tau - \pi_{\omega^*_\tau}\|_1 \\
&\le B_\phi \|\theta^*_\tau(\omega^*_\tau) - \tau \omega^*_\tau\|_2 \sqrt{2 D_{KL}(\pi^*_\tau \| \pi_{\omega^*_\tau})}.
\end{align*}
Taking the expectation over $d^*_\tau(s)$ and applying Jensen's Inequality ($\mathbb{E}_{d^*_\tau}[\sqrt{D_{KL}}] \le \sqrt{D_*}$):
$$ \mathbb{E}_{d^*_\tau} \Big[ \langle (\theta^*_\tau(\omega^*_\tau) - \tau \omega^*_\tau)^\top \phi, \pi^*_\tau - \pi_{\omega^*_\tau} \rangle_{\mathcal{A}} \Big] \le B_\phi \|\theta^*_\tau(\omega^*_\tau) - \tau \omega^*_\tau\|_2 \sqrt{2 D_*}. $$
Applying Young's Inequality ($xy \le \frac{\tau}{2}x^2 + \frac{1}{2\tau}y^2$) with $x = \sqrt{D_*}$ yields:
$$ \mathbb{E}_{d^*_\tau} \Big[ \langle (\theta^*_\tau(\omega^*_\tau) - \tau \omega^*_\tau)^\top \phi, \pi^*_\tau - \pi_{\omega^*_\tau} \rangle_{\mathcal{A}} \Big] \le \frac{\tau}{2} D_* + \frac{B_\phi^2}{\tau} \|\theta^*_\tau(\omega^*_\tau) - \tau \omega^*_\tau\|_2^2. $$

For the approximation bias term, for any state $s$, define the local policy mismatch $W^*(s) \triangleq \max_a \frac{\pi^*_\tau(a|s)}{\pi_{\omega^*_\tau}(a|s)}$. We apply Action-Space Cauchy-Schwarz and the $\chi^2$ bound (Lemma~\ref{lem:chi2-KL}):
$$ \langle \epsilon_*, \pi^*_\tau - \pi_{\omega^*_\tau} \rangle_{\mathcal{A}} \le \sqrt{ 2 W^*(s) D_{KL}(\pi^*_\tau(\cdot|s) \| \pi_{\omega^*_\tau}(\cdot|s)) } \sqrt{ \mathbb{E}_{a \sim \pi_{\omega^*_\tau}}[\epsilon_*(s,a)^2] }. $$
Taking the expectation over $d^*_\tau(s)$ and applying State-Space Cauchy-Schwarz:
\begin{align*}
\mathbb{E}_{d^*_\tau} \big[ \langle \epsilon_*, \pi^*_\tau - \pi_{\omega^*_\tau} \rangle_{\mathcal{A}} \big] &\le \mathbb{E}_{d^*_\tau} \Big[ \sqrt{ 2 D_{KL}(\pi^*_\tau(\cdot|s) \| \pi_{\omega^*_\tau}(\cdot|s)) } \sqrt{ W^*(s) \mathbb{E}_{a \sim \pi_{\omega^*_\tau}}[\epsilon_*(s,a)^2] } \Big] \\
&\le \sqrt{ 2 D_* } \sqrt{ \mathbb{E}_{d^*_\tau} \big[ W^*(s) \mathbb{E}_{a \sim \pi_{\omega^*_\tau}}[\epsilon_*^2] \big] }.
\end{align*}
To control the second root, we expand the expectation over the state space and apply the joint concentrability bound $\frac{d^*_\tau(s)}{d^{\omega^*_\tau}(s)} W^*(s) = \max_a \frac{d^*_\tau(s) \pi^*_\tau(a|s)}{d^{\omega^*_\tau}(s) \pi_{\omega^*_\tau}(a|s)} \le C_{joint}^*$ (Assumption~\ref{ass:global-parametric-density}):
\begin{align*}
\mathbb{E}_{d^*_\tau} \big[ W^*(s) \mathbb{E}_{a \sim \pi_{\omega^*_\tau}}[\epsilon_*^2] \big] &= \mathbb{E}_{d^{\omega^*_\tau}} \left[ \left( \frac{d^*_\tau(s)}{d^{\omega^*_\tau}(s)} W^*(s) \right) \mathbb{E}_{a \sim \pi_{\omega^*_\tau}}[\epsilon_*(s,a)^2] \right] \\
&\le C_{joint}^* \mathbb{E}_{d^{\omega^*_\tau}} \Big[ \mathbb{E}_{a \sim \pi_{\omega^*_\tau}}[\epsilon_*(s,a)^2] \Big] \\
&= C_{joint}^* \mathbb{E}_{d^{\omega^*_\tau}, \pi_{\omega^*_\tau}}[\epsilon_*^2] \le C_{joint}^* \epsilon_{app},
\end{align*}
where the last step uses Assumption~\ref{ass:approximation-error}.
Applying Young's Inequality ($xy \le \frac{\tau}{2}x^2 + \frac{1}{2\tau}y^2$) with $x = \sqrt{D_*}$ and $y = \sqrt{2 C_{joint}^* \epsilon_{app}}$ yields:
$$ \mathbb{E}_{d^*_\tau} \big[ \langle \epsilon_*, \pi^*_\tau - \pi_{\omega^*_\tau} \rangle_{\mathcal{A}} \big] \le \frac{\tau}{2} D_* + \frac{C_{joint}^*}{\tau} \epsilon_{app}. $$

(iv) \textit{Synthesis.} We substitute the decoupled algorithmic progress and approximation bias back into the suboptimality decomposition. The two positive $\frac{\tau}{2} D_*$ penalties cancel the restorative entropy force $-\tau D_*$:
\begin{align*}
(1-\gamma) \big( J_\tau(\pi^*_\tau) - J_\tau(\pi_{\omega^*_\tau}) \big) &\le \frac{B_\phi^2}{\tau} \|\theta^*_\tau(\omega^*_\tau) - \tau \omega^*_\tau\|_2^2 + \frac{C_{joint}^*}{\tau} \epsilon_{app}.
\end{align*}
Substituting the bound $\|\theta^*_\tau(\omega^*_\tau) - \tau \omega^*_\tau\|_2^2 \le \frac{B_\phi^2}{\lambda^2} \epsilon_{app}$ into the first term and dividing the entire inequality by $1-\gamma$ yields the stated constant $C_{class}$, completing the proof.
\end{prf}

\begin{prf}[Proof of Corollary~\ref{cor:global-reg-convergence} (Global Regularized Performance Bound)]
This follows directly from the algebraic decomposition $J_\tau(\pi^*_\tau) - J_\tau(\pi_t) = \big(J_\tau(\pi^*_\tau) - J_\tau(\pi_{\omega^*_\tau})\big) + \big(J_\tau(\pi_{\omega^*_\tau}) - J_\tau(\pi_t)\big)$. Substituting the bound from Lemma~\ref{lem:global-class-error} for the first term and the definition of $\text{Gap}_t$ for the second completes the proof.
\end{prf}

\begin{prf}[Proof of Lemma~\ref{lem:part-I-compatibility} (Part-I Approximation-Temperature Compatibility)]
(i) \textit{Lower-Bounding Suboptimal Mass.}
Define the suboptimal-action mass at state $s$ by $q_\tau(s)\triangleq 1-\pi_{\omega^*_\tau}(a^*(s)|s)$. We first show that the uniform Fisher lower bound forces $\pi_{\omega^*_\tau}$ to retain a nonvanishing suboptimal-action probability.
By Assumption~\ref{ass:pd-fim},
$\bar{\Sigma}_{cen}(\pi_{\omega^*_\tau})\succeq\lambda I$,
taking traces yields
$$d\lambda\le\mathbb{E}_{s\sim d^{\omega^*_\tau}}\left[\operatorname{tr}\Cov_{a\sim \pi_{\omega^*_\tau}(\cdot|s)}(\phi(s,a))\right].$$
For any fixed state $s$, the expected squared distance from the mean is no larger than that from any fixed vector. Taking the feature vector of the optimal action $\phi(s,a^*(s))$ as this reference and using Assumption~\ref{ass:bounded-features} gives
\begin{align*}
\operatorname{tr}\Cov_{a\sim \pi_{\omega^*_\tau}(\cdot|s)}(\phi(s,a))
&=\mathbb{E}_{a\sim \pi_{\omega^*_\tau}(\cdot|s)}\left[\|\phi(s,a)-\mathbb{E}_{\pi_{\omega^*_\tau}}[\phi(s,\cdot)]\|_2^2\right] \\
&\le\mathbb{E}_{a\sim \pi_{\omega^*_\tau}(\cdot|s)}\left[\|\phi(s,a)-\phi(s,a^*(s))\|_2^2\right] \\
&\le 4B_\phi^2q_\tau(s).
\end{align*}
Therefore,
$$\mathbb{E}_{s\sim d^{\omega^*_\tau}}[q_\tau(s)]\ge\frac{d\lambda}{4B_\phi^2}\triangleq q_0.$$

(ii) \textit{Lower-Bounding Global-to-Parametric Gap.}
Next, by the Performance Difference Lemma for the unregularized global optimum and the Minimal Action Gap (Assumption~\ref{ass:action-gap}),
\begin{align*}
J_0(\pi^*_0)-J_0(\pi_{\omega^*_\tau})
&=\frac{1}{1-\gamma}\mathbb{E}_{d^{\omega^*_\tau},\pi_{\omega^*_\tau}}\left[V^*_0(s)-Q^*_0(s,a)\right] \\
&\ge\frac{\Delta}{1-\gamma}\mathbb{E}_{d^{\omega^*_\tau}}[q_\tau(s)]
\ge\frac{q_0\Delta}{1-\gamma}.
\end{align*}
Because $\pi^*_\tau$ is globally optimal for the regularized objective and $\pi^*_0$ is deterministic,
\begin{align*}
J_\tau(\pi^*_\tau)-J_\tau(\pi_{\omega^*_\tau})
&\ge J_\tau(\pi^*_0)-J_\tau(\pi_{\omega^*_\tau}) \\
&=J_0(\pi^*_0)-J_0(\pi_{\omega^*_\tau})-\frac{\tau}{1-\gamma}\mathbb{E}_{s\sim d^{\omega^*_\tau}}[H(\pi_{\omega^*_\tau}(\cdot|s))] \\
&\ge\frac{q_0\Delta-\tau\log|\mathcal{A}|}{1-\gamma}.
\end{align*}
Hence, whenever $\tau\log|\mathcal{A}|\le q_0\Delta/2$,
$$J_\tau(\pi^*_\tau)-J_\tau(\pi_{\omega^*_\tau})\ge\frac{q_0\Delta}{2(1-\gamma)}.$$

(iii) \textit{Uniform Compatibility Bound.}
Define the reference temperature
$$
\bar{\tau}
\triangleq
\min\left\{
\tau_{max},
\frac{q_0\Delta}{2\log|\mathcal{A}|}
\right\}.
$$
By construction,
$\bar{\tau}\log|\mathcal{A}|\le q_0\Delta/2$.
Applying the preceding lower bound at $\tau=\bar{\tau}$ gives
$$
J_{\bar{\tau}}(\pi^*_{\bar{\tau}}) -
J_{\bar{\tau}}(\pi_{\omega^*_{\bar{\tau}}})
\ge
\frac{q_0\Delta}{2(1-\gamma)}.
$$
On the other hand, Lemma~\ref{lem:global-class-error} gives
$$
J_{\bar{\tau}}(\pi^*_{\bar{\tau}}) -
J_{\bar{\tau}}(\pi_{\omega^*_{\bar{\tau}}})
\le
\frac{C_{class}}{\bar{\tau}}\epsilon_{app}.
$$
Combining these inequalities yields
$$
\epsilon_{app} \ge
\frac{q_0\Delta}{2(1-\gamma)C_{class}}\bar{\tau}.
$$
Because $\epsilon_{app}$ and the structural constants are uniform over
$\tau\in(0,\tau_{max}]$, this bound obtained at the reference temperature
$\bar{\tau}$ constrains the same quantities throughout this temperature range.
For any $\tau\in(0,\tau_{max}]$, we have
$\bar{\tau}\ge(\bar{\tau}/\tau_{max})\tau$.
Therefore,
$$
\epsilon_{app} \ge
\frac{q_0\Delta}{2(1-\gamma)C_{class}}
\frac{\bar{\tau}}{\tau_{max}}\tau
= C_{comp}\tau,
$$
which completes the proof.
\end{prf}

\begin{prf}[Proof of Theorem~\ref{thm:unreg-average-iterate-optimal} (Unregularized Average-Iterate Convergence)]
We consider $\epsilon_{app}>0$ throughout, as required by the Part-I compatibility condition in Lemma~\ref{lem:part-I-compatibility}.

(i) \textit{Global Translation Bound.} Because the temperature $\tau_T$ may initially exceed the structural threshold required in Theorem~\ref{thm:suboptimal-mass}, we apply the Universal Unregularized Suboptimality Bound (Corollary~\ref{cor:universal-suboptimal-mass}). This guarantees the following inequality holds for any $\tau_T > 0$:
$$ J_0(\pi^*_0) - J_0(\pi_t) \le \frac{4 A_{max}}{\Delta} \text{Gap}_t^\dag(\tau_T) + C_{tail} \exp\left(-\frac{\Delta}{2\tau_T}\right). $$
Taking the expectation and averaging this bound over the trajectory from $t=0$ to $T-1$ yields:
$$ \frac{1}{T} \sum_{t=0}^{T-1} \mathbb{E} \big[ J_0(\pi^*_0) - J_0(\pi_t) \big] \le \frac{4 A_{max}}{\Delta} \left( \frac{1}{T} \sum_{t=0}^{T-1} \mathbb{E}[\text{Gap}_t^\dag(\tau_T)] \right) + C_{tail} \exp\left(-\frac{\Delta}{2\tau_T}\right). $$

(ii) \textit{Regularized Order Substitution.} By the Global Regularized Performance Bound (Corollary~\ref{cor:global-reg-convergence}), the expected global regularized gap is bounded by the parametric gap plus the representation penalty: $\mathbb{E}[\text{Gap}_t^\dag] \le \mathbb{E}[\text{Gap}_t] + \frac{C_{class}}{\tau_T} \epsilon_{app}$.
We substitute the average parametric regularized performance gap from Theorem~\ref{thm:average-iterate}:
$$ \frac{1}{T} \sum_{t=0}^{T-1} \mathbb{E}[\text{Gap}_t^\dag] \le \underbrace{\mathcal{O}\left( \frac{1}{(1-\gamma)^5 \kappa^6 \lambda^{5/2} \tau_T^2} \frac{\log T}{T} \right)}_{\text{Finite-Time Optimization Error}} + \underbrace{\mathcal{O}\left( \frac{\epsilon_{app}}{(1-\gamma) \lambda^2 \tau_T} \right) + \frac{C_{class}}{\tau_T} \epsilon_{app}}_{\text{Total Approximation Bias}}. $$
Because $C_{class} \triangleq \frac{B_\phi^4}{(1-\gamma) \lambda^2} + \frac{C_{joint}^*}{1-\gamma}= \mathcal{O}\big(\frac{1}{(1-\gamma)\lambda^2}\big)$, the representation penalty is absorbed into the approximation bias order.
Feeding this into the translation bound isolates the three competing components:
$$ \text{Total Error} \le \underbrace{ \mathcal{O}\left( \frac{A_{max}}{\Delta(1-\gamma)^5 \kappa^6 \lambda^{5/2} \tau_T^2} \frac{\log T}{T} \right) }_{\text{Finite-Time Optimization Error}} + \underbrace{ \mathcal{O}\left( \frac{A_{max} \epsilon_{app}}{\Delta(1-\gamma) \lambda^2 \tau_T} \right) }_{\text{Total Approximation Bias}} + \underbrace{ C_{tail} \exp\left(-\frac{\Delta}{2\tau_T}\right) }_{\text{Entropy Tail}}. $$

(iii) \textit{Regime 1: Sample-Limited Phase.}
When $\log T \le \log(1+\epsilon_{app}^{-1})$, the temperature is tuned to $\tau_T = \frac{\Delta}{2 \log(C_\gamma T)}$. Substituting this alongside $A_{max} \le \frac{1}{1-\gamma}$, $C_{tail} \triangleq \frac{A_{max} C_\gamma}{1-\gamma} \le \frac{C_\gamma}{(1-\gamma)^2}$, and $\log C_\gamma = \mathcal{O}(\frac{1}{1-\gamma})$ into the bounded components yields:
\begin{align*}
\text{Finite-Time Error}
&= \mathcal{O}\Big( \frac{1}{(1-\gamma)^6 \kappa^6 \lambda^{5/2} \Delta^3} (\log(C_\gamma T))^2 (\log T) T^{-1} \Big) \\
&= \mathcal{O}\Big( \frac{1}{(1-\gamma)^6 \kappa^6 \lambda^{5/2} \Delta^3} \Big( \frac{\log T}{(1-\gamma)^2} + (\log T)^3 \Big) T^{-1} \Big), \\
\text{Approximation Bias}
&= \mathcal{O}\Big( \frac{1}{(1-\gamma)^2 \lambda^2 \Delta^2} \log(C_\gamma T) \epsilon_{app} \Big) \\
&\le \mathcal{O}\Big( \frac{1}{(1-\gamma)^2 \lambda^2 \Delta^2} \Big( \frac{1}{1-\gamma} + \log(1+\epsilon_{app}^{-1}) \Big) \epsilon_{app} \Big), \\
\text{Entropy Tail}
&\le \frac{C_\gamma}{(1-\gamma)^2} \exp\big(-\log(C_\gamma T)\big) = \frac{1}{(1-\gamma)^2 T}.
\end{align*}
The regime condition $\log T \le \log(1+\epsilon_{app}^{-1})$ guarantees
$$ \log(C_\gamma T) = \log C_\gamma + \log T \le \log C_\gamma + \log(1+\epsilon_{app}^{-1}), $$
which gives the stated approximation-bias envelope.
The $C_\gamma$ prefactor cancels in the exponential tail, yielding a pure $\mathcal{O}(T^{-1})$ tail that is absorbed by the dominant $\tilde{\mathcal{O}}(T^{-1})$ optimization error.

(iv) \textit{Regime 2: Approximation-Limited Phase.}
When $\log T > \log(1+\epsilon_{app}^{-1})$, the temperature locks to the approximation-dependent floor:
$\tau_T = \frac{\Delta}{2\log(C_\gamma(1+\epsilon_{app}^{-1}))}$.
Substituting this fixed temperature into the bounded components yields:
\begin{align*}
\text{Finite-Time Error}
&= \mathcal{O}\Big( \frac{1}{(1-\gamma)^6 \kappa^6 \lambda^{5/2} \Delta^3} \big(\log(C_\gamma(1+\epsilon_{app}^{-1}))\big)^2 (\log T) T^{-1} \Big) \\
&\le \mathcal{O}\Big( \frac{1}{(1-\gamma)^6 \kappa^6 \lambda^{5/2} \Delta^3} \Big( \frac{\log T}{(1-\gamma)^2} + (\log T)^3 \Big) T^{-1} \Big), \\
\text{Approximation Bias}
&= \mathcal{O}\Big( \frac{1}{(1-\gamma)^2 \lambda^2 \Delta^2} \log(C_\gamma(1+\epsilon_{app}^{-1})) \epsilon_{app} \Big) \\
&= \mathcal{O}\Big( \frac{1}{(1-\gamma)^2 \lambda^2 \Delta^2} \Big( \frac{1}{1-\gamma} + \log(1+\epsilon_{app}^{-1}) \Big) \epsilon_{app} \Big), \\
\text{Entropy Tail}
&\le \frac{C_\gamma}{(1-\gamma)^2} \exp\big(-\log(C_\gamma(1+\epsilon_{app}^{-1}))\big)
= \frac{\epsilon_{app}/(1+\epsilon_{app})}{(1-\gamma)^2}
\le \frac{\epsilon_{app}}{(1-\gamma)^2}.
\end{align*}
For the finite-time error, the regime condition $\log(1+\epsilon_{app}^{-1}) < \log T$ implies
$$ \log(C_\gamma(1+\epsilon_{app}^{-1})) \le \log C_\gamma + \log T , $$
which yields the stated finite-time envelope.
The fixed temperature floor prevents the $\tau_T^{-1}$ approximation bias from growing with $T$, leading to the $\tilde{\mathcal{O}}(\epsilon_{app})$ rate. Simultaneously, the exponential entropy tail is bounded by the linear scale of the approximation error, yielding the $\mathcal{O}(\epsilon_{app})$ rate.

(v) \textit{Synthesis.}
Summing these evaluated components across the two regimes bounds the overall expected global unregularized performance gap by their respective envelopes, yielding:
\begin{align*}
\frac{1}{T} \sum_{t=0}^{T-1} \mathbb{E}\big[J_0(\pi^*_0)-J_0(\pi_t)\big]
&\le \mathcal{O}\left( \frac{(\log T)/(1-\gamma)^2 + (\log T)^3}{(1-\gamma)^6 \kappa^6 \lambda^{5/2} \Delta^3 T} \right) \\
&\quad + \mathcal{O}\left( \frac{1/(1-\gamma)+\log(1+\epsilon_{app}^{-1})}{(1-\gamma)^2 \lambda^2 \Delta^2} \epsilon_{app} \right).
\end{align*}
The approximation term is proportional to $\epsilon_{app}$ up to the logarithmic factor $\log(1+\epsilon_{app}^{-1})$. Thus, subject to the admissible range of $\epsilon_{app}$ in
Remark \ref{rem:admissible-eps-app}, the overall convergence rate can be summarized as $\tilde{\mathcal{O}}(T^{-1})+\tilde{\mathcal{O}}(\epsilon_{app})$, completing the proof.
\end{prf}

\begin{prf}[Proof of Theorem~\ref{thm:unreg-last-iterate-optimal} (Unregularized Last-Iterate Convergence)]
We consider $\epsilon_{app}>0$ throughout, as required by the Part-I compatibility condition in Lemma~\ref{lem:part-I-compatibility}.

(i) \textit{Global Translation Bound.} We apply the Universal Unregularized Suboptimality Bound (Corollary~\ref{cor:universal-suboptimal-mass}) directly to the terminal policy $\pi_T$. This guarantees the following inequality holds for any $\tau_T > 0$:
$$ J_0(\pi^*_0) - J_0(\pi_T) \le \frac{4 A_{max}}{\Delta} \text{Gap}_T^\dag(\tau_T) + C_{tail} \exp\left(-\frac{\Delta}{2\tau_T}\right). $$
Taking the expectation bounds the last-iterate global unregularized performance gap:
$$ \mathbb{E} \big[ J_0(\pi^*_0) - J_0(\pi_T) \big] \le \frac{4 A_{max}}{\Delta} \mathbb{E}[\text{Gap}_T^\dag(\tau_T)] + C_{tail} \exp\left(-\frac{\Delta}{2\tau_T}\right). $$

(ii) \textit{Regularized Order Substitution.} By the Global Regularized Performance Bound (Corollary~\ref{cor:global-reg-convergence}), the expected global regularized gap is bounded by the parametric gap plus the representation penalty: $\mathbb{E}[\text{Gap}_T^\dag] \le \mathbb{E}[\text{Gap}_T] + \frac{C_{class}}{\tau_T} \epsilon_{app}$.
We substitute the last-iterate parametric regularized performance gap from Theorem~\ref{thm:last-iterate}:
$$ \mathbb{E}[\text{Gap}_T^\dag] \le \underbrace{\mathcal{O}\left( \frac{1}{(1-\gamma)^5 \kappa^6 \lambda^{5/2} \tau_T^2 T} \right)}_{\text{Finite-Time Optimization Error}} + \underbrace{\mathcal{O}\left( \frac{\epsilon_{app}}{(1-\gamma) \lambda^2 \tau_T} \right) + \frac{C_{class}}{\tau_T} \epsilon_{app}}_{\text{Total Approximation Bias}}. $$
Because $C_{class} = \mathcal{O}\big(\frac{1}{(1-\gamma)\lambda^2}\big)$, the representation penalty is absorbed into the approximation bias order.
Feeding this into the translation bound isolates the three competing components:
$$ \text{Total Error} \le \underbrace{ \mathcal{O}\left( \frac{A_{max}}{\Delta(1-\gamma)^5 \kappa^6 \lambda^{5/2} \tau_T^2 T} \right) }_{\text{Finite-Time Optimization Error}} + \underbrace{ \mathcal{O}\left( \frac{A_{max} \epsilon_{app}}{\Delta(1-\gamma) \lambda^2 \tau_T} \right) }_{\text{Total Approximation Bias}} + \underbrace{ C_{tail} \exp\left(-\frac{\Delta}{2\tau_T}\right) }_{\text{Entropy Tail}}. $$

(iii) \textit{Regime 1: Sample-Limited Phase.}
When $\log T \le \log(1+\epsilon_{app}^{-1})$, the temperature is tuned to $\tau_T = \frac{\Delta}{2 \log(C_\gamma T)}$. Substituting this alongside $A_{max} \le \frac{1}{1-\gamma}$, $C_{tail} \triangleq \frac{A_{max} C_\gamma}{1-\gamma} \le \frac{C_\gamma}{(1-\gamma)^2}$, and $\log C_\gamma = \mathcal{O}(\frac{1}{1-\gamma})$ into the bounded components yields:
\begin{align*}
\text{Finite-Time Error}
&= \mathcal{O}\Big( \frac{1}{(1-\gamma)^6 \kappa^6 \lambda^{5/2} \Delta^3} (\log(C_\gamma T))^2 T^{-1} \Big) \\
&= \mathcal{O}\Big( \frac{1}{(1-\gamma)^6 \kappa^6 \lambda^{5/2} \Delta^3} \Big( \frac{1}{(1-\gamma)^2} + (\log T)^2 \Big) T^{-1} \Big), \\
\text{Approximation Bias}
&= \mathcal{O}\Big( \frac{1}{(1-\gamma)^2 \lambda^2 \Delta^2} \log(C_\gamma T) \epsilon_{app} \Big) \\
&\le \mathcal{O}\Big( \frac{1}{(1-\gamma)^2 \lambda^2 \Delta^2} \Big( \frac{1}{1-\gamma} + \log(1+\epsilon_{app}^{-1}) \Big) \epsilon_{app} \Big), \\
\text{Entropy Tail}
&\le \frac{C_\gamma}{(1-\gamma)^2} \exp\big(-\log(C_\gamma T)\big) = \frac{1}{(1-\gamma)^2 T}.
\end{align*}
The regime condition $\log T \le \log(1+\epsilon_{app}^{-1})$ guarantees
$$ \log(C_\gamma T) = \log C_\gamma + \log T \le \log C_\gamma + \log(1+\epsilon_{app}^{-1}), $$
which gives the stated approximation-bias envelope.
The $C_\gamma$ prefactor cancels in the exponential tail, which scales as $\mathcal{O}(T^{-1})$ and is absorbed by the dominant $\tilde{\mathcal{O}}(T^{-1})$ optimization error. Notably, the final optimization error is sharper than the average-iterate bound by a factor of $\log T$.

(iv) \textit{Regime 2: Approximation-Limited Phase.}
When $\log T > \log(1+\epsilon_{app}^{-1})$, the temperature locks to the approximation-dependent floor:
$\tau_T = \frac{\Delta}{2\log(C_\gamma(1+\epsilon_{app}^{-1}))}$.
Substituting this fixed temperature into the bounded components yields:
\begin{align*}
\text{Finite-Time Error}
&= \mathcal{O}\Big( \frac{1}{(1-\gamma)^6 \kappa^6 \lambda^{5/2} \Delta^3} \big(\log(C_\gamma(1+\epsilon_{app}^{-1}))\big)^2 T^{-1} \Big) \\
&\le \mathcal{O}\Big( \frac{1}{(1-\gamma)^6 \kappa^6 \lambda^{5/2} \Delta^3} \Big( \frac{1}{(1-\gamma)^2} + (\log T)^2 \Big) T^{-1} \Big), \\
\text{Approximation Bias}
&= \mathcal{O}\Big( \frac{1}{(1-\gamma)^2 \lambda^2 \Delta^2} \log(C_\gamma(1+\epsilon_{app}^{-1})) \epsilon_{app} \Big) \\
&= \mathcal{O}\Big( \frac{1}{(1-\gamma)^2 \lambda^2 \Delta^2} \Big( \frac{1}{1-\gamma} + \log(1+\epsilon_{app}^{-1}) \Big) \epsilon_{app} \Big), \\
\text{Entropy Tail}
&\le \frac{C_\gamma}{(1-\gamma)^2} \exp\big(-\log(C_\gamma(1+\epsilon_{app}^{-1}))\big)
= \frac{\epsilon_{app}/(1+\epsilon_{app})}{(1-\gamma)^2}
\le \frac{\epsilon_{app}}{(1-\gamma)^2}.
\end{align*}
For the finite-time error, the regime condition $\log(1+\epsilon_{app}^{-1}) < \log T$ implies
$$ \log(C_\gamma(1+\epsilon_{app}^{-1})) \le \log C_\gamma + \log T, $$
which yields the stated finite-time envelope.
The fixed temperature floor prevents the $\tau_T^{-1}$ approximation bias from growing with $T$, preserving the $\tilde{\mathcal{O}}(\epsilon_{app})$ rate. Simultaneously, the exponential entropy tail is bounded by the linear scale of the approximation error, yielding the $\mathcal{O}(\epsilon_{app})$ rate.

(v) \textit{Synthesis.}
Summing these evaluated components across the two regimes bounds the expected last-iterate global unregularized performance gap by their respective envelopes, yielding:
\begin{align*}
\mathbb{E}\big[J_0(\pi^*_0)-J_0(\pi_T)\big]
&\le \mathcal{O}\left( \frac{(1-\gamma)^{-2} + (\log T)^2}{(1-\gamma)^6 \kappa^6 \lambda^{5/2} \Delta^3 T} \right) \\
&\quad + \mathcal{O}\left( \frac{(1-\gamma)^{-1}+\log(1+\epsilon_{app}^{-1})}{(1-\gamma)^2 \lambda^2 \Delta^2} \epsilon_{app} \right).
\end{align*}
The approximation term is proportional to $\epsilon_{app}$ up to the logarithmic factor $\log(1+\epsilon_{app}^{-1})$. Thus, subject to the admissible range of $\epsilon_{app}$ in
Remark \ref{rem:admissible-eps-app}, the overall convergence rate can be summarized as $\tilde{\mathcal{O}}(T^{-1})+\tilde{\mathcal{O}}(\epsilon_{app})$, completing the proof.
\end{prf}

\subsection{Proofs for Section~\ref{sec:pmd-geometry}}

\begin{prf}[Proof of Lemma~\ref{lem:pmd-progress} (PMD Algorithmic Progress)]
(i) \textit{Bregman Identity.} The standard Bregman Three-Point identity over the probability simplex dictates that for any state $s$:
$$ D_{KL}(\pi^*_\tau \| \pi_{t+1}) = D_{KL}(\pi^*_\tau \| \pi_t) + D_{KL}(\pi_t \| \pi_{t+1}) - \langle \log \pi_{t+1} - \log \pi_t, \pi^*_\tau - \pi_t \rangle_{\mathcal{A}}. $$
By substituting the log-linear policy parameterization, the log-probability update is $\log \pi_{t+1}(a|s) - \log \pi_t(a|s) = \eta_t (g_t^{ac})^\top\phi(s,a) - \log \frac{Z_{t+1}(s)}{Z_t(s)}$. Substituting this into the inner product eliminates the state-dependent normalizer, as both policies sum to $1$ over the action space:
\begin{align*}
\langle \log \pi_{t+1} - \log \pi_t, \pi^*_\tau - \pi_t \rangle_{\mathcal{A}} &= \sum_a \big( \pi^*_\tau(a|s) - \pi_t(a|s) \big) \left( \eta_t (g_t^{ac})^\top\phi(s,a) - \log \frac{Z_{t+1}(s)}{Z_t(s)} \right) \\
&= \eta_t \langle (g_t^{ac})^\top\phi, \pi^*_\tau - \pi_t \rangle_{\mathcal{A}}.
\end{align*}
Rearranging the Bregman identity isolates the scaled algorithmic progress:
$$ \eta_t \langle (g_t^{ac})^\top\phi, \pi^*_\tau - \pi_t \rangle_{\mathcal{A}} = D_{KL}(\pi^*_\tau \| \pi_t) - D_{KL}(\pi^*_\tau \| \pi_{t+1}) + D_{KL}(\pi_t \| \pi_{t+1}). $$

(ii) \textit{Local Step Distance.} To establish the lemma, it suffices to upper-bound the local step distance $D_{KL}(\pi_t \| \pi_{t+1})$. By definition of the forward KL divergence between successive policies:
$$ D_{KL}(\pi_t \| \pi_{t+1}) = \sum_a \pi_t(a|s) \big( \log \pi_t(a|s) - \log \pi_{t+1}(a|s) \big). $$
Substituting the exact log-probability update yields:
$$ D_{KL}(\pi_t \| \pi_{t+1}) = - \eta_t \langle (g_t^{ac})^\top\phi, \pi_t \rangle_{\mathcal{A}} + \log \frac{Z_{t+1}(s)}{Z_t(s)}. $$

(iii) \textit{Hoeffding's Bound.} We express the partition function ratio as an expectation over the current policy $\pi_t$:
$$ \frac{Z_{t+1}(s)}{Z_t(s)} = \sum_a \pi_t(a|s) \exp\big(\eta_t (g_t^{ac})^\top \phi(s,a)\big) = \mathbb{E}_{a \sim \pi_t} \big[ \exp\big(\eta_t (g_t^{ac})^\top \phi(s,a)\big) \big]. $$
Because the step size satisfies $\eta_t \le 1/\tau$, Lemma~\ref{lem:structural-bounds}(ii) guarantees that the actor update direction is bounded under Assumptions~\ref{ass:bounded-features} and \ref{ass:pd-uncentered}. Then the functional update direction is bounded by a temperature-independent constant: $|(g_t^{ac})^\top \phi(s,a)| \le \|g_t^{ac}\|_2 \|\phi(s,a)\|_2 \le G_{ac} B_\phi$.
Therefore, we apply Hoeffding's Lemma to bound the log-moment generating function:
$$ \log \frac{Z_{t+1}(s)}{Z_t(s)} \le \eta_t \mathbb{E}_{a \sim \pi_t}\big[(g_t^{ac})^\top \phi(s,a)\big] + \frac{\eta_t^2 (2 G_{ac} B_\phi)^2}{8} = \eta_t \langle (g_t^{ac})^\top \phi, \pi_t \rangle_{\mathcal{A}} + \frac{\eta_t^2}{2} G_{ac}^2 B_\phi^2. $$

(iv) \textit{Synthesis.} Substituting this log-partition bound back into Step (ii) cancels the inner product, bounding the local step distance purely by the worst-case update magnitude (the PMD Local Error):
$$ D_{KL}(\pi_t \| \pi_{t+1}) \le - \eta_t \langle (g_t^{ac})^\top \phi, \pi_t \rangle_{\mathcal{A}} + \left( \eta_t \langle (g_t^{ac})^\top \phi, \pi_t \rangle_{\mathcal{A}} + \frac{\eta_t^2}{2} G_{ac}^2 B_\phi^2 \right) = \frac{\eta_t^2}{2} G_{ac}^2 B_\phi^2. $$
Substituting this upper bound into the Bregman identity from Step (i) and dividing by $\eta_t$ completes the proof.
\end{prf}

\subsection{Proofs for Section~\ref{sec:regularized-conv-deterministic}}

\begin{prf}[Proof of Lemma~\ref{lem:pmd-telescoping-l2} (PMD Actor Progress Bound)]
By the Actual Critic Decomposition with $\pi=\pi^*_\tau$ in Lemma~\ref{lem:sub-decomp}(ii), the gap evaluates via the PMD geometry to:
\begin{align*}
(1-\gamma) \text{Gap}_t^\dag &= -\tau D_t^\dag + \underbrace{\mathbb{E}_{d^{\pi^*_\tau}} \Big[ \langle (\theta_{t+1} - \tau \omega_t)^\top \phi, \pi^*_\tau - \pi_t \rangle_{\mathcal{A}} \Big]}_{\text{Algorithmic Progress}} \\
&\quad + \underbrace{\mathbb{E}_{d^{\pi^*_\tau}} \Big[ \langle -e_t^\top \phi, \pi^*_\tau - \pi_t \rangle_{\mathcal{A}} \Big]}_{\text{Critic Tracking Error}} +
\underbrace{ \mathbb{E}_{d^{\pi^*_\tau}} \big[ \langle \epsilon_t, \pi^*_\tau - \pi_t \rangle_{\mathcal{A}} \big]}_{\text{Approximation Bias}}.
\end{align*}
We analyze the three components on the right-hand side independently.

(i) \textit{PMD Algorithmic Progress.}
Because $\eta_t \le 1/\tau$, Lemma~\ref{lem:pmd-progress} applies to bound the progress inner product driven by the log-linear update, yielding:
$$\mathbb{E}_{d^*_\tau} \Big[ \langle (\theta_{t+1} - \tau \omega_t)^\top \phi, \pi^*_\tau - \pi_t \rangle_{\mathcal{A}} \Big] \le \frac{D_t^\dag - D_{t+1}^\dag}{\eta_t} + \frac{\eta_t}{2} G_{ac}^2 B_\phi^2.$$
Substituting this into the gap decomposition gives:
\begin{align}
(1-\gamma) \text{Gap}_t^\dag \le -\tau D_t^\dag + \frac{D_t^\dag - D_{t+1}^\dag}{\eta_t} + \frac{\eta_t}{2} G_{ac}^2 B_\phi^2 + \mathbb{E}_{d^*_\tau} \Big[ \langle -e_t^\top \phi + \epsilon_t, \pi^*_\tau - \pi_t \rangle_{\mathcal{A}} \Big]. \label{eq:PMD-progress}
\end{align}

(ii) \textit{Approximation Bias Transfer.}
For any state $s$, define the local policy mismatch $W^\dag(s) \triangleq \max_a \frac{\pi^*_\tau(a|s)}{\pi_t(a|s)}$. By Action-Space Cauchy-Schwarz and the $\chi^2$ bound (Lemma~\ref{lem:chi2-KL}), the approximation bias satisfies:
$$\langle \epsilon_t, \pi^*_\tau - \pi_t \rangle_{\mathcal{A}} \le \sqrt{ 2 W^\dag(s) D_{KL}(\pi^*_\tau(\cdot|s) \| \pi_t(\cdot|s)) } \sqrt{ \mathbb{E}_{a \sim \pi_t}[\epsilon_t(s,a)^2] }.$$
Taking the expectation over $d^*_\tau(s)$ and applying State-Space Cauchy-Schwarz gives:
\begin{align*}
\mathbb{E}_{d^*_\tau} \big[ \langle \epsilon_t, \pi^*_\tau - \pi_t \rangle_{\mathcal{A}} \big] &\le \mathbb{E}_{d^*_\tau} \Big[ \sqrt{ 2 D_{KL}(\pi^*_\tau(\cdot|s) \| \pi_t(\cdot|s)) } \sqrt{ W^\dag(s) \mathbb{E}_{a \sim \pi_t}[\epsilon_t(s,a)^2] } \Big] \\
&\le \sqrt{ 2 D_t^\dag } \sqrt{ \mathbb{E}_{d^*_\tau} \big[ W^\dag(s) \mathbb{E}_{a \sim \pi_t}[\epsilon_t(s,a)^2] \big] }.
\end{align*}
To control the second root, we change the state measure to $d^{\pi_t}$ and apply the joint concentrability bound $\frac{d^*_\tau(s)}{d^{\pi_t}(s)} W^\dag(s) = \max_a \frac{d^*_\tau(s) \pi^*_\tau(a|s)}{d^{\pi_t}(s) \pi_t(a|s)} \le C_{joint}^\dag$ (Assumption~\ref{ass:density-ratios}):
\begin{align*}
\mathbb{E}_{d^*_\tau} \big[ W^\dag(s) \mathbb{E}_{a \sim \pi_t}[\epsilon_t(s,a)^2] \big] &= \mathbb{E}_{d^{\pi_t}} \left[ \left( \frac{d^*_\tau(s)}{d^{\pi_t}(s)} W^\dag(s) \right) \mathbb{E}_{a \sim \pi_t}[\epsilon_t(s,a)^2] \right] \\
&\le C_{joint}^\dag \mathbb{E}_{d^{\pi_t}} \Big[ \mathbb{E}_{a \sim \pi_t}[\epsilon_t(s,a)^2] \Big] \\
&= C_{joint}^\dag \mathbb{E}_{d^{\pi_t}, \pi_t}[\epsilon_t^2] \le C_{joint}^\dag \epsilon_{app},
\end{align*}
where the last step uses Assumption~\ref{ass:approximation-error}.
Applying Young's Inequality ($xy \le \frac{\tau}{8}x^2 + \frac{2}{\tau}y^2$) with $x = \sqrt{D_t^\dag}$ and $y = \sqrt{2 C_{joint}^\dag \epsilon_{app}}$, this term is bounded by $\frac{\tau}{8}D_t^\dag + \frac{4 C_{joint}^\dag}{\tau} \epsilon_{app}$.

(iii) \textit{Tracking Error Bound.}
For the tracking error, the bounded features assumption guarantees $|e_t^\top \phi(s,a)| \le \|e_t\|_2 \|\phi(s,a)\|_2 \le B_\phi \|e_t\|_2$. Applying H\"{o}lder's inequality over the action space and Pinsker's inequality ($\|\pi^*_\tau - \pi_t\|_1 \le \sqrt{2 D_{KL}(\pi^*_\tau \| \pi_t)}$) yields:
$$ \mathbb{E}_{d^*_\tau} \Big[ \langle -e_t^\top \phi, \pi^*_\tau - \pi_t \rangle_{\mathcal{A}} \Big] \le \mathbb{E}_{d^*_\tau} \Big[ \max_a |e_t^\top \phi(s,a)| \|\pi^*_\tau - \pi_t\|_1 \Big] \le \mathbb{E}_{d^*_\tau} \Big[ B_\phi \|e_t\|_2 \sqrt{2 D_{KL}(\pi^*_\tau \| \pi_t)} \Big]. $$
Applying Jensen's inequality over the state distribution yields $B_\phi \|e_t\|_2 \sqrt{2 D_t^\dag}$. Crucially, because $\|e_t\|_2$ is a global scalar and we use Pinsker's inequality, this term is free of any joint concentrability.
Applying Young's Inequality ($xy \le \frac{\tau}{8}x^2 + \frac{2}{\tau}y^2$) with $x = \sqrt{D_t^\dag}$ and $y = \sqrt{2} B_\phi \|e_t\|_2$ bounds the tracking penalty:
$$ B_\phi \|e_t\|_2 \sqrt{2 D_t^\dag} \le \frac{\tau}{8}D_t^\dag + \frac{2}{\tau} \big(2 B_\phi^2 \|e_t\|_2^2 \big) = \frac{\tau}{8}D_t^\dag + \frac{4 B_\phi^2}{\tau} \|e_t\|_2^2. $$

(iv) \textit{Synthesis.}
Summing the bounds from (ii) and (iii) yields $\frac{\tau}{4} D_t^\dag + \frac{4 C_{joint}^\dag}{\tau} \epsilon_{app} + \frac{4 B_\phi^2}{\tau} \|e_t\|_2^2$. Substituting this into inequality \eqref{eq:PMD-progress}, and combining the $D_t^\dag$ terms, $-\tau D_t^\dag + \frac{\tau}{4} D_t^\dag = -\frac{3\tau}{4} D_t^\dag$, directly yields the single-step inequality:
$$(1-\gamma) \text{Gap}_t^\dag \le -\frac{3\tau}{4} D_t^\dag + \frac{D_t^\dag - D_{t+1}^\dag}{\eta_t} + \frac{\eta_t}{2} G_{ac}^2 B_\phi^2 + \frac{4 C_{joint}^\dag}{\tau} \epsilon_{app} + \frac{4 B_\phi^2}{\tau} \|e_t\|_2^2.$$
Taking the total expectation $\mathbb{E}[\cdot]$ over the algorithm trajectory and substituting $Z_t \triangleq \mathbb{E}[\|e_t\|_2^2]$ completes the proof.
\end{prf}

\begin{prf}[Proof of Lemma~\ref{lem:telescoping-pmd} (PMD Value-Telescoping Bound)]
Because the initial offset satisfies $t_0 \ge c_\eta \tau$ and the step-size schedule is decreasing, we have $\eta_t \le \eta_0 = \frac{c_\eta}{t_0} \le \frac{1}{\tau}$ for all $t \ge 0$. This satisfies the step-size condition required to apply the PMD Actor Progress Bound (Lemma~\ref{lem:pmd-telescoping-l2}). We rearrange this single-step recursive bound to isolate the next-step divergence:
$$ \mathbb{E}[D_{t+1}^\dag] \le \left(1 - \frac{3\tau}{4}\eta_t\right) \mathbb{E}[D_t^\dag] - \eta_t (1-\gamma) \mathbb{E}[\text{Gap}_t^\dag] + \eta_t \mathcal{E}_{noise}(t). $$
Substituting the diminishing step size $\eta_t = \frac{c_\eta}{t+t_0}$, the contraction coefficient evaluates to $1 - \frac{3\tau c_\eta / 4}{t+t_0}$. Because the step-size constant satisfies $\frac{3\tau}{4} c_\eta \ge 1$, we upper-bound this coefficient:
$$ 1 - \frac{3\tau c_\eta / 4}{t+t_0} \le 1 - \frac{1}{t+t_0} = \frac{t+t_0-1}{t+t_0}. $$
Substituting this upper bound into the recurrence gives:
$$ \mathbb{E}[D_{t+1}^\dag] \le \frac{t+t_0-1}{t+t_0} \mathbb{E}[D_t^\dag] - \frac{c_\eta}{t+t_0} (1-\gamma) \mathbb{E}[\text{Gap}_t^\dag] + \frac{c_\eta}{t+t_0} \mathcal{E}_{noise}(t). $$
We multiply the entire inequality by $(t+t_0)$ and rearrange the terms to isolate the gap on the left side:
$$ c_\eta (1-\gamma) \mathbb{E}[\text{Gap}_t^\dag] \le (t+t_0-1) \mathbb{E}[D_t^\dag] - (t+t_0) \mathbb{E}[D_{t+1}^\dag] + c_\eta \mathcal{E}_{noise}(t). $$
Summing this inequality from $t=0$ to $T-1$, the divergence terms telescope:
\begin{align*}
c_\eta (1-\gamma) \sum_{t=0}^{T-1} \mathbb{E}[\text{Gap}_t^\dag] &\le \sum_{t=0}^{T-1} \Big( (t+t_0-1) \mathbb{E}[D_t^\dag] - (t+t_0) \mathbb{E}[D_{t+1}^\dag] \Big) + c_\eta \sum_{t=0}^{T-1} \mathcal{E}_{noise}(t) \\
&= (t_0-1) D_0^\dag - (T+t_0-1) \mathbb{E}[D_T^\dag] + c_\eta \sum_{t=0}^{T-1} \mathcal{E}_{noise}(t),
\end{align*}
where we used $\mathbb{E}[D_0^\dag]=D_0^\dag$ because the initial actor parameter $\omega_0$ is fixed.
Dropping the non-positive terminal term $-(T+t_0-1)\mathbb{E}[D_T^\dag]$ and dividing the entire inequality by $c_\eta T$ isolates the average sequence and completes the proof.
\end{prf}

\begin{prf}[Proof of Lemma~\ref{lem:fractional-chung} (Fractional Chung's Lemma)]
Let $s = t+t_0$. Because $t_0 \ge c^{3/2}$, we have $s \ge c^{3/2}$, which guarantees the contraction multiplier $\left(1 - \frac{c}{s^{2/3}}\right)$ is non-negative. We prove $x_t \le \frac{v}{s^{2/3}}$ by induction.

The base case at $t=0$ holds trivially by the definition of $v$: $x_0 \le \frac{v}{t_0^{2/3}}$.
Assume the bound holds at step $t$. Substituting the inductive hypothesis into the recurrence yields:
$$ x_{t+1} \le \left(1 - \frac{c}{s^{2/3}}\right) \frac{v}{s^{2/3}} + \frac{K}{s^{4/3}} = \frac{v}{s^{2/3}} - \frac{cv - K}{s^{4/3}}. $$
By the convexity of $f(s) = s^{-2/3}$, the function is lower-bounded by its first-order Taylor expansion $f(s+1) \ge f(s) + f'(s)(s+1-s)$:
$$ \frac{1}{(s+1)^{2/3}} \ge \frac{1}{s^{2/3}} - \frac{2}{3 s^{5/3}}. $$
Therefore, to guarantee $x_{t+1} \le \frac{v}{(s+1)^{2/3}}$, it is sufficient to show:
$$ \frac{cv - K}{s^{4/3}} \ge \frac{2v}{3s^{5/3}} \Longleftrightarrow cv - K \ge \frac{2v}{3s^{1/3}}. $$
Because the initial offset ensures $s \ge t_0 \ge (\frac{4}{3c})^3$, the right-hand side is bounded by: $\frac{2v}{3s^{1/3}} \le \frac{2v}{3} \left( \frac{3c}{4} \right) = \frac{c}{2}v$.
A sufficient condition is then $cv - K \ge \frac{c}{2}v$ or equivalently $\frac{c}{2}v \ge K$. This matches our definition $v \ge \frac{2K}{c}$, completing the induction.
\end{prf}

\begin{prf}[Proof of Lemma~\ref{lem:decoupled-critic} (Pointwise Critic Tracking Error)]
(i) \textit{Step-Size Validity.} We verify that the initial offset $t_0$ guarantees the required structural conditions. First, $t_0 \ge c_\eta \tau$ guarantees that $\eta_t \le \eta_0 = \frac{c_\eta}{t_0} \le 1/\tau$ for all $t \ge 0$, which ensures the actor update direction is bounded by $G_{ac}$ via Lemma~\ref{lem:structural-bounds}(ii). Second, to ensure the SGD contraction multiplier is non-negative ($1 - 2\alpha_{t+1}\kappa \ge 0$) for the Young's inequality substitution, we require $\alpha_1 \le \frac{1}{2\kappa}$, which is guaranteed by $t_0 \ge (2 c_\alpha \kappa)^{3/2}$. Third, to apply the Fractional Chung's Lemma (Lemma~\ref{lem:fractional-chung}) with contraction parameter $c = c_\alpha \kappa$, we require $t_0 \ge \max\left(c^{3/2}, (\frac{4}{3c})^3\right)$.
The first term evaluates to $c^{3/2} = (c_\alpha \kappa)^{3/2}$, and the second term evaluates to $(\frac{4}{3c})^3 = (\frac{4}{3 c_\alpha \kappa})^3$. Both terms are enveloped by the assumed lower bound on $t_0$.

(ii) \textit{SGD Contraction and Young's Inequality.} Following the identical SGD contraction and Young's inequality as derived in Steps (i)--(iii) of the Coupled Critic Tracking proof (Lemma~\ref{lem:critic-tracking-pl}), the tracking error conditioned on $\mathcal{F}_{t+1}$ satisfies:
$$ \mathbb{E}[\|e_{t+1}\|_2^2 \mid \mathcal{F}_{t+1}] \le (1 - \alpha_{t+1} \kappa) \|e_t\|_2^2 + \alpha_{t+1}^2 G_{cr}^2 + \frac{1}{\alpha_{t+1} \kappa} \|\theta^*_\tau(\omega_t) - \theta^*_\tau(\omega_{t+1})\|_2^2. $$
Taking the total expectation $\mathbb{E}[\cdot]$ over the algorithmic trajectory yields the single-step recurrence:
$$ Z_{t+1} \le \underbrace{ (1 - \alpha_{t+1} \kappa) Z_t + \alpha_{t+1}^2 G_{cr}^2 }_{\text{SGD Error}} + \underbrace{ \frac{1}{\alpha_{t+1} \kappa} \mathbb{E} \big[ \|\theta^*_\tau(\omega_t) - \theta^*_\tau(\omega_{t+1})\|_2^2 \big] }_{\text{Target Drift}} . $$

(iii) \textit{Controlling Target Drift.} In Part I, we dynamically coupled this drift to the objective performance gap.
Here, because $\lambda_\tau \to 0$ prevents that coupling, we evaluate it as an uncoupled worst-case bound.
We utilize the Lipschitz continuity of the ideal critic mapping (Lemma~\ref{lem:lipschitz-critic}) and the uncoupled parameter step bound $\|\omega_{t+1} - \omega_t\|_2 = \eta_t \|g_t^{ac}\|_2 \le \eta_t G_{ac}$ (Lemma~\ref{lem:structural-bounds}(ii)):
$$ \mathbb{E} \big[ \|\theta^*_\tau(\omega_t) - \theta^*_\tau(\omega_{t+1})\|_2^2 \big] \le L_\theta^2 \mathbb{E} \big[ \|\omega_{t+1} - \omega_t\|_2^2 \big] \le L_\theta^2 \eta_t^2 G_{ac}^2. $$

(iv) \textit{Pointwise Recurrence.} Substituting the target drift bound into the tracking recurrence yields:
$$ Z_{t+1} \le (1 - \alpha_{t+1} \kappa) Z_t + \alpha_{t+1}^2 G_{cr}^2 + \frac{1}{\alpha_{t+1} \kappa} L_\theta^2 \eta_t^2 G_{ac}^2. $$
Substituting the step sizes $\alpha_{t+1} = \frac{c_\alpha}{(t+t_0)^{2/3}}$ and $\eta_t = \frac{c_\eta}{t+t_0}$ into the additive noise terms exactly extracts our defined constant $K$:
$$ \frac{c_\alpha^2 G_{cr}^2}{(t+t_0)^{4/3}} + \frac{L_\theta^2 c_\eta^2 G_{ac}^2}{\kappa c_\alpha (t+t_0)^{4/3}} = \frac{K}{(t+t_0)^{4/3}}. $$
The sequence satisfies the recurrence $Z_{t+1} \le \left(1 - \frac{c_\alpha \kappa}{(t+t_0)^{2/3}}\right) Z_t + \frac{K}{(t+t_0)^{4/3}}$.
Applying the Fractional Chung's Lemma (Lemma~\ref{lem:fractional-chung}) with contraction coefficient $c = c_\alpha \kappa$ yields the stated pointwise bound and completes the proof.
\end{prf}

\begin{prf}[Proof of Theorem~\ref{thm:reg-conv-l2} (Average-Iterate Regularized Convergence)]
(i) \textit{Telescoping Expansion.} By the PMD Value-Telescoping Bound (Lemma~\ref{lem:telescoping-pmd}), the average gap is bounded by:
$$ (1-\gamma) \frac{1}{T} \sum_{t=0}^{T-1} \mathbb{E}[\text{Gap}_t^\dag] \le \frac{t_0-1}{c_\eta T} D_0^\dag + \frac{1}{T} \sum_{t=0}^{T-1} \mathcal{E}_{noise}(t), $$
where the noise envelope is $\mathcal{E}_{noise}(t) = \frac{\eta_t}{2} G_{ac}^2 B_\phi^2 + \frac{4 C_{joint}^\dag}{\tau} \epsilon_{app} + \frac{4 B_\phi^2}{\tau} Z_t$.

(ii) \textit{Evaluating Sums.} By the Pointwise Critic Tracking Error (Lemma~\ref{lem:decoupled-critic}), the expected tracking error is bounded by $Z_t \le v (t+t_0)^{-2/3}$.
Substituting $\eta_t$ and $Z_t$ into the noise sum yields:
\begin{align*}
\frac{1}{T} \sum_{t=0}^{T-1} \mathcal{E}_{noise}(t) &\le \frac{c_\eta G_{ac}^2 B_\phi^2}{2 T} \sum_{t=0}^{T-1} \frac{1}{t+t_0} + \frac{4 C_{joint}^\dag}{\tau} \epsilon_{app} + \frac{4 B_\phi^2 v}{\tau T} \sum_{t=0}^{T-1} \frac{1}{(t+t_0)^{2/3}}.
\end{align*}
Because $t_0 \ge c_\eta \tau \ge \frac{4}{3} > 1$,
we bound the harmonic sum by $\sum_{t=0}^{T-1} \frac{1}{t+t_0} \le 1 + \log T$.
For the fractional tracking sum, we bound it via the integral $\int_0^T (x+t_0-1)^{-2/3} dx = 3(T+t_0-1)^{1/3} - 3(t_0-1)^{1/3} \le 3(T+t_0)^{1/3}$.
Dividing this tracking sum by $T$ evaluates to $3 T^{-2/3} \left(1 + \frac{t_0}{T}\right)^{1/3}$.
Therefore, the tracking penalty is bounded by $\frac{12 B_\phi^2 v}{\tau} T^{-2/3} \left(1 + \frac{t_0}{T}\right)^{1/3}$.

(iii) \textit{Synthesis and Asymptotic Order.} Substituting these bounded sums back into the PMD expansion yields the bound:
\begin{align*}
(1-\gamma) \frac{1}{T} \sum_{t=0}^{T-1} \mathbb{E}[\text{Gap}_t^\dag] &\le \underbrace{ \frac{t_0-1}{c_\eta T} D_0^\dag }_{\text{PMD Initialization Error}} + \underbrace{ \frac{c_\eta G_{ac}^2 B_\phi^2}{2 T} (1 + \log T) }_{\text{PMD Local Error}} \\
&\quad + \underbrace{ \frac{12 B_\phi^2 v}{\tau} T^{-2/3} \left(1 + \frac{t_0}{T}\right)^{1/3} }_{\text{Critic Tracking Error}} + \underbrace{ \frac{4 C_{joint}^\dag}{\tau} \epsilon_{app} }_{\text{Approximation Bias}}.
\end{align*}

To evaluate the structural orders of the bound, we explicitly track the dependencies on the temperature $\tau$, the feature moment minimum eigenvalue $\kappa$, and the horizon $(1-\gamma)^{-1}$.
We choose $c_\eta=\Theta(\tau^{-1})$ subject to $c_\eta\ge\frac{4}{3\tau}$.
By choosing the critic step-size constant $\zeta = \Theta( \frac{1}{\kappa})$, the product $\zeta \kappa = \Theta(1)$.
Consequently, the lower bound on $t_0$ is, up to constant factors,
$\max\left(c_\eta\tau,c_\eta,\frac{64}{27c_\eta^2}\right)$, eliminating the conflicting $\kappa$ dependencies.
Because $c_\eta = \Theta(\tau^{-1})$, the dominant term in the lower bound for $t_0$ is $c_\eta$. This allows us to satisfy the initial offset condition by choosing $t_0 = \Theta(\tau^{-1})$.

\begin{itemize}
    \item \textit{Noise Envelope ($K$):} We evaluate $K  = c_\alpha^2 G_{cr}^2 + \frac{L_\theta^2 c_\eta^2 G_{ac}^2}{\kappa c_\alpha} = (\zeta c_\eta^{2/3})^2 G_{cr}^2 + \frac{L_\theta^2 c_\eta^{4/3} G_{ac}^2}{\kappa \zeta}$.
        By Lemmas~\ref{lem:structural-bounds} and \ref{lem:lipschitz-critic}, the structural bounds scale as $G_{cr}^2 = \Theta(\kappa^{-2}(1-\gamma)^{-2})$,
        $G_{ac} = \Theta(\kappa^{-1}(1-\gamma)^{-1})$, and $L_\theta = \Theta(\kappa^{-2}(1-\gamma)^{-2})$.
        Substituting these alongside $c_\eta^{4/3} = \Theta(\tau^{-4/3})$ and $\zeta \kappa = \Theta(1)$, the first term (SGD error) evaluates to $\Theta(\tau^{-4/3} \kappa^{-4} (1-\gamma)^{-2})$, while the second term (target drift) evaluates to $\Theta(\tau^{-4/3} \kappa^{-6} (1-\gamma)^{-6})$.
        Because both terms share an identical $\Theta(\tau^{-4/3})$ temperature dependence, the second term dominates in its dependence on $\kappa$ and $(1-\gamma)^{-1}$, yielding $K = \Theta(\tau^{-4/3} \kappa^{-6} (1-\gamma)^{-6})$.

    \item \textit{Tracking Bounding Coefficient ($v$):} The bounding coefficient is $v = \max\left(t_0^{2/3} Z_0, \frac{2K}{\zeta c_\eta^{2/3} \kappa}\right)$.
        Since $\zeta \kappa = \Theta(1)$ and $c_\eta^{2/3} = \Theta(\tau^{-2/3})$, the second argument scales as $\Theta(K \tau^{2/3}) = \Theta(\tau^{-2/3} \kappa^{-6} (1-\gamma)^{-6})$.
        As shown in the proof of Theorem~\ref{thm:average-iterate}, $Z_0 \le (2B_\theta)^2 = G_{ac}^2 = \Theta(\kappa^{-2}(1-\gamma)^{-2})$.
        Thus, the initial error term evaluates to $t_0^{2/3} Z_0 = \mathcal{O}(\tau^{-2/3} \kappa^{-2}(1-\gamma)^{-2})$.
        This is dominated by the second argument, yielding $v = \Theta(\tau^{-2/3} \kappa^{-6} (1-\gamma)^{-6})$. Multiplying by $\frac{1}{\tau}$ gives the tracking multiplier $\frac{v}{\tau} = \Theta(\tau^{-5/3} \kappa^{-6} (1-\gamma)^{-6})$.

    \item \textit{Critic Tracking Error:} Because $T$ appears inside the cube root, we apply the subadditivity inequality $(1 + t_0/T)^{1/3} \le 1 + (t_0/T)^{1/3}$.
        Since $t_0 = \Theta(\tau^{-1})$, the root evaluates to $(t_0/T)^{1/3} = \Theta(\tau^{-1/3} T^{-1/3})$.
        Multiplying this by the leading tracking multiplier decomposes the tracking error into two asymptotic upper bounds:
        \begin{align*}
        &\quad \mathcal{O}(\tau^{-5/3} \kappa^{-6} (1-\gamma)^{-6}) T^{-2/3} \left( 1 + \Theta(\tau^{-1/3} T^{-1/3}) \right) \\
        & = \mathcal{O}(\tau^{-5/3} \kappa^{-6} (1-\gamma)^{-6} T^{-2/3}) + \mathcal{O}(\tau^{-2} \kappa^{-6} (1-\gamma)^{-6} T^{-1}).
        \end{align*}

    \item \textit{PMD Initialization and Local Errors:} By assumption $\|\omega_0\|_2 \le \frac{B_\theta}{\tau_{max}}$, the initial policy log-probability is bounded by $-\log \pi_0(a|s) = -\omega_0^\top \phi(s,a) + \log \sum_{a'} \exp(\omega_0^\top \phi(s,a')) \le 2\|\omega_0\|_2 B_\phi + \log|\mathcal{A}| \le \frac{2B_\theta B_\phi}{\tau_{max}} + \log|\mathcal{A}|$. Thus, the initial KL divergence is bounded by $D_0^\dag \le \frac{2B_\theta B_\phi}{\tau_{max}} + \log|\mathcal{A}| = \mathcal{O}(\kappa^{-1}(1-\gamma)^{-1})$.
    Because $\frac{t_0-1}{c_\eta} = \Theta(1)$, the initialization error evaluates to $\frac{t_0-1}{c_\eta T}D_0^\dag = \mathcal{O}(\kappa^{-1}(1-\gamma)^{-1}T^{-1})$.
    The PMD local error evaluates to $\frac{c_\eta G_{ac}^2 B_\phi^2}{2T}\log T = \mathcal{O}(\tau^{-1}\kappa^{-2}(1-\gamma)^{-2}T^{-1}\log T)$.
    Both terms are subsumed by the $\mathcal{O}(\tau^{-2}\kappa^{-6}(1-\gamma)^{-6}T^{-1})$ transient critic tracking error (second term above), up to a $\log T$ factor.
\end{itemize}
Dividing the entire expression by $(1-\gamma)$ distributes the $(1-\gamma)^{-1}$ factor across all asymptotic terms. This elevates the $(1-\gamma)^{-6}$ dependencies to $(1-\gamma)^{-7}$, yielding the final stated expected gap bound and completing the proof.
\end{prf}

\begin{prf}[Proof of Lemma~\ref{lem:fractional-chung-bias} (Chung's Lemma with Fractional Noise)]
Let $s = t+t_0$. Since $c \ge 1$ and $t_0 \ge c$, we have $s \ge c \ge 1$, guaranteeing the contraction multiplier $(1 - c/s)$ is non-negative. We proceed by induction to prove $x_t \le \frac{u}{s^{2/3}} + \frac{B}{c}$.

The base case at $t=0$ holds trivially by the definition of $u$: $x_0 \le \frac{u}{t_0^{2/3}} \le \frac{u}{t_0^{2/3}} + \frac{B}{c}$. Assume the bound holds at step $t$. Substituting the inductive hypothesis into the recurrence yields:
\begin{align*}
x_{t+1} &\le \left(1 - \frac{c}{s}\right) \left(\frac{u}{s^{2/3}} + \frac{B}{c}\right) + \frac{A}{s^{5/3}} + \frac{B}{s} \\
&= \frac{u}{s^{2/3}} - \frac{cu - A}{s^{5/3}} + \frac{B}{c} - \frac{B}{s} + \frac{B}{s} \\
&= \frac{u}{s^{2/3}} - \frac{cu - A}{s^{5/3}} + \frac{B}{c}.
\end{align*}
By the convexity of $f(s) = s^{-2/3}$, the function is lower-bounded by its first-order Taylor expansion: $\frac{1}{(s+1)^{2/3}} \ge \frac{1}{s^{2/3}} - \frac{2}{3s^{5/3}}$. Therefore, to guarantee $x_{t+1} \le \frac{u}{(s+1)^{2/3}} + \frac{B}{c}$, it is sufficient to show:
$$ \frac{cu - A}{s^{5/3}} \ge \frac{2u}{3s^{5/3}} \Longleftrightarrow cu - A \ge \frac{2}{3}u \Longleftrightarrow \left(c - \frac{2}{3}\right)u \ge A. $$
Because $c \ge 1$, we are guaranteed $c - 2/3 \ge 1/3 > 0$. This sufficient condition matches our definition $u \ge \frac{A}{c - 2/3} = \frac{3A}{3c-2}$, completing the induction.
\end{prf}

\begin{prf}[Proof of Lemma~\ref{lem:last-iterate-dt} (Last-Iterate Forward KL Convergence)]
(i) \textit{Pointwise Recurrence.} We isolate the forward KL divergence from the single-step PMD Actor Progress Bound (Lemma~\ref{lem:pmd-telescoping-l2}) by dropping the non-negative $(1-\gamma)\mathbb{E}[\text{Gap}_t^\dag]$ term:
$$ \mathbb{E}[D_{t+1}^\dag] \le \left(1 - \frac{3\tau}{4}\eta_t\right) \mathbb{E}[D_t^\dag] + \eta_t \mathcal{E}_{noise}(t). $$
Let $s = t+t_0$. Substituting $\eta_t = c_\eta s^{-1}$ and the pointwise tracking bound $Z_t \le v s^{-2/3}$ from Lemma~\ref{lem:decoupled-critic}, the additive noise evaluates to:
$$ \eta_t \mathcal{E}_{noise}(t) = \frac{c_\eta^2 G_{ac}^2 B_\phi^2}{2 s^2} + \frac{4 c_\eta B_\phi^2 v}{\tau s^{5/3}} + \frac{4 c_\eta C_{joint}^\dag}{\tau s} \epsilon_{app}. $$

(ii) \textit{Applying Chung's Lemma with Fractional Noise.} For $s \ge 1$, we upper-bound the $s^{-2}$ PMD local error term by $s^{-5/3}$.
Substituting the definitions $A \triangleq \frac{c_\eta^2 G_{ac}^2 B_\phi^2}{2} + \frac{4 c_\eta B_\phi^2 v}{\tau}$ and $B \triangleq \frac{4 c_\eta C_{joint}^\dag}{\tau} \epsilon_{app}$, the recurrence simplifies to:
$$ \mathbb{E}[D_{t+1}^\dag] \le \left(1 - \frac{c}{s}\right) \mathbb{E}[D_t^\dag] + \frac{A}{s^{5/3}} + \frac{B}{s}, $$
where $c \triangleq \frac{3\tau}{4}c_\eta$.
The step-size condition $c_\eta \ge \frac{4}{3\tau}$ guarantees $c \ge 1$. Additionally, the initial offset condition (Theorem~\ref{thm:reg-conv-l2}) requires $t_0 \ge c_\eta \tau$, which ensures $t_0 \ge \frac{4}{3} c > c$ for Lemma~\ref{lem:fractional-chung-bias}.
We apply Chung's Lemma with Fractional Noise (Lemma~\ref{lem:fractional-chung-bias}) using the sequence $x_t = \mathbb{E}[D_t^\dag]$.
The bias fraction cancels the actor step-size constant: $\frac{B}{c} = \frac{4 c_\eta C_{joint}^\dag \epsilon_{app}}{\tau} \frac{4}{3\tau c_\eta} = \frac{16 C_{joint}^\dag}{3\tau^2} \epsilon_{app}$.
This establishes the pointwise bound $$\mathbb{E}[D_T^\dag] \le \frac{u}{(T+t_0)^{2/3}} + \frac{16 C_{joint}^\dag}{3\tau^2} \epsilon_{app},$$
where $u \triangleq \max\left(t_0^{2/3} D_0^\dag, \frac{3A}{3c-2}\right)$.

(iii) \textit{Asymptotic Order.} To evaluate the structural orders of the bounding constant $u$, we explicitly track the dependencies on the temperature $\tau$, the feature moment minimum eigenvalue $\kappa$, and the horizon $(1-\gamma)^{-1}$. As established in the proof of Theorem~\ref{thm:reg-conv-l2}, we satisfy the required conditions on the step sizes by choosing $c_\eta = \Theta(\tau^{-1})$, $\zeta = \Theta(\kappa^{-1})$, and $t_0 = \Theta(\tau^{-1})$. Under these choices, the tracking coefficient scales as $v = \Theta(\tau^{-2/3} \kappa^{-6} (1-\gamma)^{-6})$.

\begin{itemize}
    \item \textit{Initialization Term:} By the initial KL divergence bound $D_0^\dag=\mathcal{O}(\kappa^{-1}(1-\gamma)^{-1})$ established in the proof of Theorem~\ref{thm:reg-conv-l2} and $t_0=\Theta(\tau^{-1})$, the initialization term evaluates to $t_0^{2/3}D_0^\dag=\mathcal{O}(\tau^{-2/3}\kappa^{-1}(1-\gamma)^{-1})$.

    \item \textit{Fractional Noise Coefficient ($A$):} For the numerator $A = \frac{c_\eta^2 G_{ac}^2 B_\phi^2}{2} + \frac{4 c_\eta B_\phi^2 v}{\tau}$, we evaluate the two components.
        By Lemma~\ref{lem:structural-bounds}, the actor update bound scales as $G_{ac}^2 = \Theta(\kappa^{-2}(1-\gamma)^{-2})$.
        Thus, the PMD local error evaluates to $c_\eta^2 G_{ac}^2 = \Theta(\tau^{-2}) \Theta(\kappa^{-2}(1-\gamma)^{-2}) = \Theta(\tau^{-2} \kappa^{-2} (1-\gamma)^{-2})$.
        The critic tracking penalty evaluates to $\frac{c_\eta v}{\tau} = \frac{\Theta(\tau^{-1}) \Theta(\tau^{-2/3} \kappa^{-6} (1-\gamma)^{-6})}{\tau} = \Theta(\tau^{-8/3} \kappa^{-6} (1-\gamma)^{-6})$.
        Because $\tau^{-8/3}$ dominates $\tau^{-2}$ as $\tau \to 0$, the critic tracking term dominates, yielding $A = \Theta(\tau^{-8/3} \kappa^{-6} (1-\gamma)^{-6})$.

    \item \textit{Bounding Constant ($u$):} The choice $c_\eta = \Theta(\tau^{-1})$ guarantees $c \triangleq \frac{3\tau}{4}c_\eta = \Theta(1)$, meaning the denominator $3c-2 = \Theta(1)$.
        Therefore, $\frac{3A}{3c-2}=\Theta(\tau^{-8/3}\kappa^{-6}(1-\gamma)^{-6})$. Taking the maximum with the initialization term $t_0^{2/3}D_0^\dag=\mathcal{O}(\tau^{-2/3}\kappa^{-1}(1-\gamma)^{-1})$ yields $u=\Theta(\tau^{-8/3}\kappa^{-6}(1-\gamma)^{-6})$.
\end{itemize}
Substituting $u$ into the pointwise bound yields the stated asymptotic convergence rate and completes the proof.
\end{prf}

\begin{prf}[Proof of Lemma~\ref{lem:pinsker-gap} (Forward KL Bound on Regularized Suboptimality)]
(i) \textit{Extracting the KL Divergence.} By the first step of the Regularized Suboptimality Decompositions (Lemma~\ref{lem:sub-decomp}) evaluated for the training policy $\pi_t$ against the optimal comparator $\pi^*_\tau$, the exact performance difference is:
$$(1-\gamma) \big( J_\tau(\pi^*_\tau) - J_\tau(\pi_t) \big) = -\tau \mathbb{E}_{d^*_\tau}[D_{KL}(\pi^*_\tau \| \pi_t)] + \mathbb{E}_{d^*_\tau} \Big[ \langle Q_\tau^{\pi_t} - \tau \log \pi_t, \pi^*_\tau - \pi_t \rangle_{\mathcal{A}} \Big].$$

(ii) \textit{H\"{o}lder's and Pinsker's Bounds.} Because the KL divergence is non-negative, dropping $-\tau D_{KL}(\pi^*_\tau \| \pi_t)$ establishes an upper bound:
$$ J_\tau(\pi^*_\tau) - J_\tau(\pi_t) \le \frac{1}{1-\gamma} \mathbb{E}_{d^*_\tau} \Big[ \langle Q_\tau^{\pi_t} - \tau \log \pi_t, \pi^*_\tau - \pi_t \rangle_{\mathcal{A}} \Big]. $$
Applying H\"{o}lder's inequality ($L_1 \times L_\infty$) to the summation and then applying Pinsker's inequality ($\|\pi^*_\tau - \pi_t\|_1 \le \sqrt{2 D_{KL}(\pi^*_\tau \| \pi_t)}$) yields:
\begin{align*}
J_\tau(\pi^*_\tau) - J_\tau(\pi_t) & \le \frac{1}{1-\gamma} \mathbb{E}_{d^*_\tau} \left[ \|\pi^*_\tau(a|s) - \pi_t(a|s)\|_1 \big|Q_\tau^{\pi_t}(s,a) - \tau \log \pi_t(a|s)\big| \right] \\
& \le \frac{\sqrt{2}}{1-\gamma} \mathbb{E}_{d^*_\tau} \Big[ \max_a \big|Q_\tau^{\pi_t}(s,a) - \tau \log \pi_t(a|s)\big| \sqrt{D_{KL}(\pi^*_\tau(\cdot|s) \| \pi_t(\cdot|s))} \Big].
\end{align*}

(iii) \textit{Bounding $Q$-value and Log-Policy via Parameter Capacity.} We bound the magnitude of the $Q$-function and log-policy term for all actions. By Lemma~\ref{lem:bounded-values}, the regularized Q-value is bounded by $0 \le Q^{\pi_t}_\tau(s,a) \le V_{max}$.
To bound the log-policy magnitude $|\tau \log \pi_t(a|s)|$, we substitute the log-linear identity $\tau \log \pi_t(a|s) = \tau \omega_t^\top \phi(s,a) - \tau \log Z_t(s)$.
By Lemma~\ref{lem:structural-bounds}(ii), the actor iterate is bounded by $\|\omega_t\|_2 \le B_\theta / \tau$. Thus, the feature inner product is bounded by:
$$ |\tau \omega_t^\top \phi(s,a)| \le \tau \|\omega_t\|_2 \|\phi(s,a)\|_2 \le \tau \left(\frac{B_\theta}{\tau}\right) B_\phi = B_\theta B_\phi. $$
The log-partition function is then bounded by:
$$ \tau \log Z_t(s) = \tau \log \sum_{a'} \exp(\omega_t^\top \phi(s,a')) \le \tau \log \left( |\mathcal{A}| \exp\left(\frac{B_\theta B_\phi}{\tau}\right) \right) = B_\theta B_\phi + \tau \log |\mathcal{A}|. $$
Similarly, $\tau \log Z_t(s) \ge \tau \log \left( \exp\left(-\frac{B_\theta B_\phi}{\tau}\right) \right) = -B_\theta B_\phi$. Therefore, $|\tau \log Z_t(s)| \le B_\theta B_\phi + \tau \log |\mathcal{A}|$.
By the triangle inequality, the log-policy magnitude is bounded by $|\tau \log \pi_t(a|s)| \le 2 B_\theta B_\phi + \tau \log |\mathcal{A}|$.
Consequently, the magnitude of the $Q$-function and log-policy term is bounded by:
$$ \big|Q_\tau^{\pi_t}(s,a) - \tau \log \pi_t(a|s)\big| \le V_{max} + 2 B_\theta B_\phi + \tau \log |\mathcal{A}| \triangleq M_\tau. $$

(iv) \textit{Jensen's Inequality and $M_{max}$ Relaxation.} Applying Jensen's inequality
$ \mathbb{E}_{d^*_\tau} [\sqrt{D_{KL}(\pi^*_\tau(\cdot|s) \| \pi_t(\cdot|s))} ] \le \sqrt{\mathbb{E}_{d^*_\tau} [D_{KL}(\pi^*_\tau(\cdot|s) \| \pi_t(\cdot|s)) ] } $
establishes the $\tau$-dependent bound. Relaxing $M_\tau \le M_{max}$ via $\tau \le \tau_{max}$ yields the $\tau$-independent bound and completes the proof.
\end{prf}

\begin{prf}[Proof of Theorem~\ref{thm:reg-last-iterate-gap} (Last-Iterate Regularized Convergence)]
(i) \textit{Forward KL Bound.} By the Forward KL Bound on Regularized Suboptimality (Lemma~\ref{lem:pinsker-gap}), for each realized policy $\pi_T$ we have
$$ \text{Gap}_T^\dag \le \frac{\sqrt{2} M_{max}}{1-\gamma} \sqrt{D_T^\dag}. $$
Taking the expectation over the algorithmic trajectory and applying Jensen's inequality yields:
$$ \mathbb{E}[\text{Gap}_T^\dag] \le \frac{\sqrt{2} M_{max}}{1-\gamma} \mathbb{E}\big[\sqrt{D_T^\dag}\big] \le \frac{\sqrt{2} M_{max}}{1-\gamma} \sqrt{\mathbb{E}[D_T^\dag]}. $$
Substituting the explicit bound for the terminal divergence from Lemma~\ref{lem:last-iterate-dt} yields:
$$ \mathbb{E}[\text{Gap}_T^\dag] \le \frac{\sqrt{2} M_{max}}{1-\gamma} \sqrt{ \frac{u}{(T+t_0)^{2/3}} + \frac{16 C_{joint}^\dag}{3\tau^2} \epsilon_{app} }. $$
Applying square-root subadditivity ($\sqrt{x+y} \le \sqrt{x} + \sqrt{y}$) separates the bound into two terms:
$$ \mathbb{E}[\text{Gap}_T^\dag] \le \frac{\sqrt{2 u} M_{max}}{1-\gamma} \frac{1}{(T+t_0)^{1/3}} + \frac{4 M_{max} \sqrt{2 C_{joint}^\dag}}{\sqrt{3} \tau (1-\gamma)} \sqrt{\epsilon_{app}}. $$

(ii) \textit{Asymptotic Order.} As established in Lemma~\ref{lem:last-iterate-dt}, the fractional noise bounding constant scales as $u = \Theta(\tau^{-8/3} \kappa^{-6} (1-\gamma)^{-6})$. Consequently, its square root guarantees $\sqrt{u} = \Theta(\tau^{-4/3} \kappa^{-3} (1-\gamma)^{-3})$.
Recall from Lemma~\ref{lem:pinsker-gap} that
$ M_{max} = V_{max} + 2 B_\theta B_\phi + \tau_{max} \log |\mathcal{A}|$.
By Lemma~\ref{lem:structural-bounds}, $B_\theta = \frac{B_\phi V_{max}}{\kappa}$, while $V_{max} = \Theta((1-\gamma)^{-1})$. Therefore,
$ M_{max} = \Theta ((1-\gamma)^{-1}\kappa^{-1}) $.
Hence, the leading coefficients for the finite-time term and approximation bias evaluate to:
$$ \frac{\sqrt{u} M_{max}}{1-\gamma} = \Theta\left( \frac{1}{(1-\gamma)^5 \kappa^4 \tau^{4/3}} \right), \quad \text{and} \quad \frac{M_{max}}{\tau (1-\gamma)} = \Theta\left( \frac{1}{(1-\gamma)^2 \kappa \tau} \right). $$
Because $t_0 \ge 1$, we bound $(T+t_0)^{-1/3} \le T^{-1/3}$. Substituting these structural orders yields the stated asymptotic bounds, completing the proof.
\end{prf}

\subsection{Proofs for Section~\ref{sec:unregularized-conv-deterministic}}

\begin{prf}[Proof of Theorem~\ref{thm:PMD-unreg-average} (Unregularized Average-Iterate Convergence)]
We first consider $\epsilon_{app}>0$. When $\epsilon_{app}=0$, the approximation bias term vanishes and the result follows from the sample-limited calculation with $\tau_T=\frac{\Delta}{2\log(C_\gamma T^{2/3})}$.

(i) \textit{Global Unregularized Bound.}
Because the finite-time schedule $\tau_T$ may initially exceed the temperature threshold required in Theorem~\ref{thm:suboptimal-mass}, we apply the Universal Unregularized Suboptimality Bound (Corollary~\ref{cor:universal-suboptimal-mass}).
This guarantees the following global inequality holds for any $\tau_T > 0$:
$$ \frac{1}{T} \sum_{t=0}^{T-1} \mathbb{E} \big[ J_0(\pi^*_0) - J_0(\pi_t) \big] \le \frac{4 A_{max}}{\Delta} \left( \frac{1}{T} \sum_{t=0}^{T-1} \mathbb{E} [\text{Gap}_t^\dag(\tau_T)] \right) + C_{tail} \exp\left(-\frac{\Delta}{2\tau_T}\right). $$
Substituting the explicit average regularized gap bound from Theorem~\ref{thm:reg-conv-l2} isolates the competing algorithmic components:
\begin{align*}
\text{Total Error} \le &\underbrace{ \mathcal{O}\left( \frac{A_{max}}{\Delta (1-\gamma)^7 \kappa^6 \tau_T^{5/3}} T^{-2/3} \right) }_{\text{Primary Tracking Error}} + \underbrace{ \mathcal{O}\left( \frac{A_{max}}{\Delta (1-\gamma) \tau_T} \epsilon_{app} \right) }_{\text{Approximation Bias}} \\
&+ \underbrace{ C_{tail} \exp\left(-\frac{\Delta}{2\tau_T}\right) }_{\text{Entropy Tail}} + \underbrace{ \tilde{\mathcal{O}}\left( \frac{A_{max}}{\Delta (1-\gamma)^7 \kappa^6 \tau_T^2} T^{-1} \right) }_{\text{Transient Convergence Error}}.
\end{align*}

(ii) \textit{Regime 1: Sample-Limited Phase.}
When $\log(T^{2/3}) \le \log(1+\epsilon_{app}^{-1})$, the temperature is tuned to $\tau_T = \frac{\Delta}{2 \log(C_\gamma T^{2/3})}$.
Substituting this alongside $A_{max} \le \frac{1}{1-\gamma}$, $C_{tail} \triangleq \frac{A_{max} C_\gamma}{1-\gamma} \le \frac{C_\gamma}{(1-\gamma)^2}$, and $\log C_\gamma = \mathcal{O}(\frac{1}{1-\gamma})$ into the global inequality yields:
\begin{align*}
\text{Primary Tracking Error}
&= \mathcal{O}\Big( \frac{1}{(1-\gamma)^8 \kappa^6 \Delta^{8/3}} (\log(C_\gamma T^{2/3}))^{5/3} T^{-2/3} \Big) \\
&= \mathcal{O}\Big( \frac{1}{(1-\gamma)^8 \kappa^6 \Delta^{8/3}} \Big( \frac{1}{(1-\gamma)^{5/3}} + (\log T)^{5/3} \Big) T^{-2/3} \Big), \\
\text{Approximation Bias}
&= \mathcal{O}\Big( \frac{1}{(1-\gamma)^2 \Delta^2} \log(C_\gamma T^{2/3}) \epsilon_{app} \Big) \\
&\le \mathcal{O}\Big( \frac{1}{(1-\gamma)^2 \Delta^2} \Big( \frac{1}{1-\gamma} + \log(1+\epsilon_{app}^{-1}) \Big) \epsilon_{app} \Big), \\
\text{Entropy Tail}
&\le \frac{C_\gamma}{(1-\gamma)^2} \exp\big(-\log(C_\gamma T^{2/3})\big) = \frac{1}{(1-\gamma)^2 T^{2/3}}, \\
\text{Transient Error}
&= \tilde{\mathcal{O}}\Big( \frac{1}{(1-\gamma)^8 \kappa^6 \Delta^3} (\log(C_\gamma T^{2/3}))^2 T^{-1} \Big).
\end{align*}
The regime condition $\log(T^{2/3}) \le \log(1+\epsilon_{app}^{-1})$ guarantees
$$ \log(C_\gamma T^{2/3}) \le \log C_\gamma + \log(1+\epsilon_{app}^{-1}), $$
which gives the stated approximation-bias envelope.
The $C_\gamma$ prefactor cancels in the exponential tail, yielding a pure $\mathcal{O}(T^{-2/3})$ tail that is absorbed by the primary tracking error.
The transient convergence error (which encapsulates the transient tracking error and PMD initialization and local errors) decays as $\tilde{\mathcal{O}}(T^{-1})$, and is asymptotically dominated by the $\tilde{\mathcal{O}}(T^{-2/3})$ primary tracking error.

(iii) \textit{Regime 2: Approximation-Limited Phase.}
When $\log(T^{2/3}) > \log(1+\epsilon_{app}^{-1})$, the temperature locks to the approximation-dependent floor:
$$ \tau_T = \frac{\Delta}{2 \log(C_\gamma(1+\epsilon_{app}^{-1}))}. $$
Substituting this fixed temperature into the global inequality yields:
\begin{align*}
\text{Primary Tracking Error}
&= \mathcal{O}\Big( \frac{1}{(1-\gamma)^8 \kappa^6 \Delta^{8/3}} \big(\log(C_\gamma(1+\epsilon_{app}^{-1}))\big)^{5/3} T^{-2/3} \Big) \\
&\le \mathcal{O}\Big( \frac{1}{(1-\gamma)^8 \kappa^6 \Delta^{8/3}} \Big( \frac{1}{(1-\gamma)^{5/3}} + (\log T)^{5/3} \Big) T^{-2/3} \Big), \\
\text{Approximation Bias}
&= \mathcal{O}\Big( \frac{1}{(1-\gamma)^2 \Delta^2} \log(C_\gamma(1+\epsilon_{app}^{-1})) \epsilon_{app} \Big) \\
&= \mathcal{O}\Big( \frac{1}{(1-\gamma)^2 \Delta^2} \Big( \frac{1}{1-\gamma} + \log(1+\epsilon_{app}^{-1}) \Big) \epsilon_{app} \Big), \\
\text{Entropy Tail}
&\le \frac{C_\gamma}{(1-\gamma)^2} \exp\big(-\log(C_\gamma(1+\epsilon_{app}^{-1}))\big)
= \frac{\epsilon_{app}/(1+\epsilon_{app})}{(1-\gamma)^2}
\le \frac{\epsilon_{app}}{(1-\gamma)^2}, \\
\text{Transient Error}
&= \tilde{\mathcal{O}}\Big( \frac{1}{(1-\gamma)^8 \kappa^6 \Delta^3} \big(\log(C_\gamma(1+\epsilon_{app}^{-1}))\big)^2 T^{-1} \Big) \\
&\le \tilde{\mathcal{O}}\Big( \frac{1}{(1-\gamma)^8 \kappa^6 \Delta^3} (\log(C_\gamma T^{2/3}))^2 T^{-1} \Big).
\end{align*}
For the finite-time terms, the regime condition $\log(1+\epsilon_{app}^{-1}) < \log(T^{2/3})$ implies
$$ \log(C_\gamma(1+\epsilon_{app}^{-1})) < \log(C_\gamma) + \log( T^{2/3}), $$
which yields the stated finite-time envelopes.
The fixed temperature floor prevents the $\tau_T^{-1}$ approximation bias from growing with $T$, preserving the $\tilde{\mathcal{O}}(\epsilon_{app})$ rate.
Simultaneously, the exponential entropy tail is bounded by the linear scale of the approximation error, yielding $\mathcal{O}(\epsilon_{app})$.
The transient convergence error remains asymptotically dominated by the primary tracking error.

(iv) \textit{Synthesis.}
Summing the evaluated components across the two regimes bounds the expected global unregularized performance gap by their respective envelopes, yielding:
\begin{align*}
\frac{1}{T} \sum_{t=0}^{T-1} \mathbb{E} \big[ J_0(\pi^*_0) - J_0(\pi_t) \big]
&\le \mathcal{O}\left( \frac{(1-\gamma)^{-5/3} + (\log T)^{5/3}}{(1-\gamma)^8 \kappa^6 \Delta^{8/3} T^{2/3}} \right) \\
&\quad + \mathcal{O}\left( \frac{(1-\gamma)^{-1} + \log(1+\epsilon_{app}^{-1})}{(1-\gamma)^2 \Delta^2} \epsilon_{app} \right).
\end{align*}
Finally, as $\epsilon_{app}\to0$, $\log(1+\epsilon_{app}^{-1})=\Theta(\log(1/\epsilon_{app}))$ and $\epsilon_{app}\log(1+\epsilon_{app}^{-1})=\tilde{\mathcal{O}}(\epsilon_{app})$. Therefore, the overall convergence rate can be summarized as $\tilde{\mathcal{O}}(T^{-2/3})+\tilde{\mathcal{O}}(\epsilon_{app})$, completing the proof.
\end{prf}

\begin{prf}[Proof of Theorem~\ref{thm:PMD-unreg-last} (Unregularized Last-Iterate Convergence)]
We first consider $\epsilon_{app}>0$. When $\epsilon_{app}=0$, the approximation bias term vanishes and the result follows from the sample-limited calculation with $\tau_T=\frac{\Delta}{2\log(C_\gamma T^{2/3})}$.

(i) \textit{Global Unregularized Bound.}
Because the finite-time schedule $\tau_T$ may initially exceed the temperature threshold required in Theorem~\ref{thm:suboptimal-mass}, we apply the Universal Unregularized Suboptimality Bound (Corollary~\ref{cor:universal-suboptimal-mass}) directly to the terminal policy $\pi_T$.
This guarantees the following global inequality holds for any $\tau_T > 0$:
$$ \mathbb{E} \big[ J_0(\pi^*_0) - J_0(\pi_T) \big] \le \frac{4 A_{max}}{\Delta} \mathbb{E}[\text{Gap}_T^\dag(\tau_T)] + C_{tail} \exp\left(-\frac{\Delta}{2\tau_T}\right). $$
Substituting the explicit regularized last-iterate bound from Theorem~\ref{thm:reg-last-iterate-gap} isolates the three competing algorithmic components:
\begin{align*}
\text{Total Error} \le &\underbrace{ \mathcal{O}\left( \frac{A_{max}}{\Delta(1-\gamma)^5 \kappa^4 \tau_T^{4/3}} T^{-1/3} \right) }_{\text{Primary Optimization Error}} + \underbrace{ \mathcal{O}\left( \frac{A_{max}}{\Delta(1-\gamma)^2 \kappa \tau_T} \sqrt{\epsilon_{app}} \right) }_{\text{Approximation Bias}} \\
&+ \underbrace{ C_{tail} \exp\left(-\frac{\Delta}{2\tau_T}\right) }_{\text{Entropy Tail}}.
\end{align*}

(ii) \textit{Regime 1: Sample-Limited Phase.}
When $\log(T^{2/3}) \le \log(1+\epsilon_{app}^{-1})$, the temperature is tuned to $\tau_T = \frac{\Delta}{2 \log(C_\gamma T^{2/3})}$.
Substituting this alongside $A_{max} \le \frac{1}{1-\gamma}$, $C_{tail} \triangleq \frac{A_{max} C_\gamma}{1-\gamma} \le \frac{C_\gamma}{(1-\gamma)^2}$, and $\log C_\gamma = \mathcal{O}(\frac{1}{1-\gamma})$ into the global inequality yields:
\begin{align*}
\text{Primary Optimization Error}
&= \mathcal{O}\Big( \frac{1}{(1-\gamma)^6 \kappa^4 \Delta^{7/3}} (\log(C_\gamma T^{2/3}))^{4/3} T^{-1/3} \Big) \\
&= \mathcal{O}\Big( \frac{1}{(1-\gamma)^6 \kappa^4 \Delta^{7/3}} \Big( \frac{1}{(1-\gamma)^{4/3}} + (\log T)^{4/3} \Big) T^{-1/3} \Big), \\
\text{Approximation Bias}
&= \mathcal{O}\Big( \frac{1}{(1-\gamma)^3 \kappa \Delta^2} \log(C_\gamma T^{2/3}) \sqrt{\epsilon_{app}} \Big) \\
&\le \mathcal{O}\Big( \frac{1}{(1-\gamma)^3 \kappa \Delta^2} \Big( \frac{1}{1-\gamma} + \log(1+\epsilon_{app}^{-1}) \Big) \sqrt{\epsilon_{app}} \Big), \\
\text{Entropy Tail}
&\le \frac{C_\gamma}{(1-\gamma)^2} \exp\big(-\log(C_\gamma T^{2/3})\big) = \frac{1}{(1-\gamma)^2 T^{2/3}}.
\end{align*}
The regime condition $\log(T^{2/3}) \le \log(1+\epsilon_{app}^{-1})$ guarantees
$$ \log(C_\gamma T^{2/3}) \le \log C_\gamma + \log(1+\epsilon_{app}^{-1}), $$
which gives the stated approximation-bias envelope.
The $C_\gamma$ prefactor cancels in the exponential tail, which scales as $\mathcal{O}(T^{-2/3})$ and is asymptotically dominated by the $\tilde{\mathcal{O}}(T^{-1/3})$ primary optimization error.

(iii) \textit{Regime 2: Approximation-Limited Phase.}
When $\log(T^{2/3}) > \log(1+\epsilon_{app}^{-1})$, the temperature locks to the approximation-dependent floor:
$$ \tau_T = \frac{\Delta}{2 \log(C_\gamma(1+\epsilon_{app}^{-1}))}. $$
Substituting this fixed temperature into the global inequality yields:
\begin{align*}
\text{Primary Optimization Error}
&= \mathcal{O}\Big( \frac{1}{(1-\gamma)^6 \kappa^4 \Delta^{7/3}} \big(\log(C_\gamma(1+\epsilon_{app}^{-1}))\big)^{4/3} T^{-1/3} \Big) \\
&\le \mathcal{O}\Big( \frac{1}{(1-\gamma)^6 \kappa^4 \Delta^{7/3}} \Big( \frac{1}{(1-\gamma)^{4/3}} + (\log T)^{4/3} \Big) T^{-1/3} \Big), \\
\text{Approximation Bias}
&= \mathcal{O}\Big( \frac{1}{(1-\gamma)^3 \kappa \Delta^2} \log(C_\gamma(1+\epsilon_{app}^{-1})) \sqrt{\epsilon_{app}} \Big) \\
&= \mathcal{O}\Big( \frac{1}{(1-\gamma)^3 \kappa \Delta^2} \Big( \frac{1}{1-\gamma} + \log(1+\epsilon_{app}^{-1}) \Big) \sqrt{\epsilon_{app}} \Big), \\
\text{Entropy Tail}
&\le \frac{C_\gamma}{(1-\gamma)^2} \exp\big(-\log(C_\gamma(1+\epsilon_{app}^{-1}))\big)
= \frac{\epsilon_{app}/(1+\epsilon_{app})}{(1-\gamma)^2}
\le \frac{\epsilon_{app}}{(1-\gamma)^2}.
\end{align*}
For the finite-time error, the regime condition $\log(1+\epsilon_{app}^{-1}) < \log(T^{2/3})$ implies
$$ \log(C_\gamma(1+\epsilon_{app}^{-1})) < \log(C_\gamma) + \log( T^{2/3}), $$
which yields the stated finite-time envelope.
The fixed temperature floor prevents the $\tau_T^{-1}$ factor from growing with $T$, preserving the $\tilde{\mathcal{O}}(\sqrt{\epsilon_{app}})$ approximation-bias rate.
Simultaneously, the exponential entropy tail is bounded by $\mathcal{O}(\epsilon_{app})$ and is therefore asymptotically subsumed by the slower $\tilde{\mathcal{O}}(\sqrt{\epsilon_{app}})$ approximation bias.

(iv) \textit{Synthesis.}
Summing these evaluated components across the two regimes bounds the expected last-iterate global unregularized performance gap by their respective envelopes, yielding:
\begin{align*}
\mathbb{E} \big[ J_0(\pi^*_0) - J_0(\pi_T) \big]
&\le \mathcal{O}\left( \frac{(1-\gamma)^{-4/3} + (\log T)^{4/3}}{(1-\gamma)^6 \kappa^4 \Delta^{7/3} T^{1/3}} \right) \\
&\quad + \mathcal{O}\left( \frac{(1-\gamma)^{-1} + \log(1+\epsilon_{app}^{-1})}{(1-\gamma)^3 \kappa \Delta^2} \sqrt{\epsilon_{app}} \right).
\end{align*}
Finally, as $\epsilon_{app}\to0$, $\log(1+\epsilon_{app}^{-1})=\Theta(\log(1/\epsilon_{app}))$ and $\sqrt{\epsilon_{app}}\log(1+\epsilon_{app}^{-1})=\tilde{\mathcal{O}}(\sqrt{\epsilon_{app}})$. Therefore, the overall convergence rate can be summarized as $\tilde{\mathcal{O}}(T^{-1/3})+\tilde{\mathcal{O}}(\sqrt{\epsilon_{app}})$, completing the proof.
\end{prf}

\begin{prf}[Proof of Theorem~\ref{thm:tabular-convergence} (Unregularized Convergence in Tabular Setting)]
To apply Corollary~\ref{cor:linfty-nu-sampling}, we systematically verify the required structural and trajectory-dependent assumptions for the tabular setting with one-hot features.

(i) \textit{Verifying Assumption~\ref{ass:bounded-features} (Bounded Features):} Because $\phi(s,a)$ is the one-hot basis vector, with Euclidean norm $1$ for all $(s,a)$, Assumption~\ref{ass:bounded-features} holds trivially with $B_\phi = 1$.

(ii) \textit{Verifying Assumption~\ref{ass:bounded-q} (Bounded Q-value Second Moment):}
We construct the unbiased estimator $\hat{Q}_t$ using a geometrically stopped Monte Carlo rollout and evaluate the entropy contribution in its statewise form $H(\pi_t(\cdot|s))$. Let $H$ denote the random rollout horizon, with $\mathbb{P}(H\ge k)=\gamma^k$. Since the original reward is bounded in $[0,1]$ and $H(\pi_t(\cdot|s))\le\log|\mathcal{A}|$, every statewise regularized reward contribution is bounded by $1+\tau_{max}\log|\mathcal{A}|$. Hence,
$$ |\hat{Q}_t| \le \big(1+\tau_{max}\log|\mathcal{A}|\big)(H+1). $$
For the geometric stopping rule, $\mathbb{E}[(H+1)^2]=\frac{1+\gamma}{(1-\gamma)^2}\le\frac{2}{(1-\gamma)^2}$. Therefore,
$$ \mathbb{E}[\hat{Q}_t^2\mid\mathcal{F}_t,s_t,a_t] \le 2\left(\frac{1+\tau_{max}\log|\mathcal{A}|}{1-\gamma}\right)^2 = 2V_{max}^2. $$
Thus, Assumption~\ref{ass:bounded-q} holds with $Q_{max}=\sqrt{2}V_{max}=\mathcal{O}((1-\gamma)^{-1})$.

(iii) \textit{Verifying Assumption~\ref{ass:generalized-nu-exp} (Positive-Definite Moment under Exploratory Training Measure):}
Under the one-hot representation,
$$ \bar{\Sigma}_{unc}(d^{\pi_t}_\nu)
=\mathbb{E}_{d^{\pi_t}_\nu}\big[\phi(s,a)\phi(s,a)^\top\big]
=\operatorname{diag}\big(d^{\pi_t}_\nu(s,a)\big). $$
Because $d^{\pi_t}_\nu(s,a)\ge(1-\gamma)\nu(s,a)$, we have
$$ \bar{\Sigma}_{unc}(d^{\pi_t}_\nu)
\succeq (1-\gamma)\min_{s,a}\nu(s,a) I
=\kappa_{exp}I. $$
Thus, Assumption~\ref{ass:generalized-nu-exp} holds with $\kappa_{exp}=(1-\gamma)\min_{s,a}\nu(s,a)>0$.

(iv) \textit{Verifying Assumption~\ref{ass:linfty-error} ($L_\infty$ Critic Approximation Error):}
Because the one-hot features span the entire space of state-action functions, the regularized $Q$-function is represented exactly. For every training policy $\pi_t$, we have
for all $(s,a)$,
$$ \theta^*_\tau(\omega_t)_{(s,a)}=Q_\tau^{\pi_t}(s,a), $$
and hence
$ \epsilon_t(s,a)=Q_\tau^{\pi_t}(s,a)-\theta^*_\tau(\omega_t)^\top\phi(s,a)=0$.
Thus, Assumption~\ref{ass:linfty-error} holds with $\epsilon_\infty=0$.

(v) \textit{Synthesis:}
The preceding arguments verify Assumptions~\ref{ass:bounded-features}, \ref{ass:bounded-q}, \ref{ass:generalized-nu-exp}, and \ref{ass:linfty-error}, while Assumption~\ref{ass:action-gap} holds by hypothesis. Therefore, Corollary~\ref{cor:linfty-nu-sampling} applies. Since $\epsilon_\infty=0$, its two-stage last-iterate temperature reduces by convention to
$$ \tau_T^{last}=\frac{\Delta}{2\log(C_\gamma T^{2/3})}=\tau_T^{avg}.$$
Thus, the common temperature $\tau_T$ can be used as stated.
Substituting $\epsilon_\infty=0$ and $\kappa=\kappa_{exp}$ into Corollary~\ref{cor:linfty-nu-sampling} yields the stated convergence bounds, completing the proof.
\end{prf}

\section{Unregularized Convergence via Linear Entropy Penalty (Stochastic Regime)} \label{app:linear-entropy}

In this appendix, we provide the unregularized convergence analysis relying on the standard linear entropy penalty
for Algorithm~\ref{alg:uncentered-nac} in the Stochastic Regime. While these convergence rates ($\tilde{\mathcal{O}}(T^{-1/3})$) are slower than the optimal $\tilde{\mathcal{O}}(T^{-1})$ bounds established in the main text via the exponential translation bounds, this analysis remains theoretically valuable. Crucially, these bounds do not require the Minimal Action Gap condition (Assumption~\ref{ass:action-gap}). Therefore, in environments lacking a uniformly positive action margin---where our exponential translation bounds are unavailable---this linear entropy penalty framework provides a robust theoretical fallback to guarantee convergence for the parametric unregularized performance gap.

\subsection{Average-Iterate Unregularized Convergence}

We map the average-iterate regularized performance gap bound established in Section \ref{sec:pl-average} to the unregularized objective. We establish the unregularized convergence rate by tuning the fixed temperature $\tau$ to optimally balance the optimization error against the induced entropy penalty.

\begin{thm}[Average-Iterate Unregularized Convergence] \label{thm:unreg-average-iterate}
For Algorithm~\ref{alg:uncentered-nac}, suppose that Assumptions \ref{ass:bounded-features}--\ref{ass:approximation-error} and \ref{ass:pd-fim}--\ref{ass:parametric-density} hold.
Define the two-stage temperature:
$$ \tau_T \triangleq \max\left( \left( \frac{\log T}{(1-\gamma)^4 \kappa^6 \lambda^{5/2} T} \right)^{1/3}, \frac{\sqrt{\epsilon_{app}}}{\lambda} \right). $$
If $T$ is sufficiently large and $\epsilon_{app}$ is sufficiently small such that $\tau_T \le \tau_{max}$, then by choosing the step-size parameters as in Theorem~\ref{thm:average-iterate} evaluated at $\tau = \tau_T$ subject to the stated conditions, the expected average parametric unregularized performance gap satisfies the convergence rate:
\begin{align*}
\frac{1}{T} \sum_{t=0}^{T-1} \mathbb{E} \big[ J_0(\pi_{\omega^*_0}) - J_0(\pi_t) \big] &\le \mathcal{O}\left( \frac{(\log T)^{1/3}}{(1-\gamma)^{7/3} \kappa^2 \lambda^{5/6} T^{1/3}} \right) + \mathcal{O}\left( \frac{\sqrt{\epsilon_{app}}}{(1-\gamma) \lambda} \right) \\
&= \tilde{\mathcal{O}}\left( T^{-1/3} \right) + \mathcal{O}(\sqrt{\epsilon_{app}}).
\end{align*}
\end{thm}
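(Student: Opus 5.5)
The plan is to combine the average-iterate regularized bound of Theorem~\ref{thm:average-iterate} with the standard linear entropy penalty, then balance terms by the choice of $\tau_T$.

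\textbf{Step 1: linear entropy translation.} For any policy $\pi$, write $J_\tau(\pi)=J_0(\pi)+\frac{\tau}{1-\gamma}\mathbb{E}_{d^\pi}[H(\pi(\cdot|s))]$, so that
\[ 0\le J_\tau(\pi)-J_0(\pi)\le \frac{\tau\log|\mathcal{A}|}{1-\gamma}. \]
Since $\omega^*_\tau$ maximizes $J_\tau$ over the family, $J_\tau(\pi_{\omega^*_\tau})\ge J_\tau(\pi_{\omega^*_0})\ge J_0(\pi_{\omega^*_0})$. Combined with $J_0(\pi_t)\ge J_\tau(\pi_t)-\frac{\tau\log|\mathcal{A}|}{1-\gamma}$, this gives
\[ J_0(\pi_{\omega^*_0})-J_0(\pi_t)\le \text{Gap}_t+\frac{\tau\log|\mathcal{A}|}{1-\gamma}. \]
Averaging over $t$ and taking expectations, I will then insert Theorem~\ref{thm:average-iterate} evaluated at $\tau=\tau_T$. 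This yields three terms:
\[ \mathcal{O}\!\left(\frac{\log T}{(1-\gamma)^5\kappa^6\lambda^{5/2}\tau^2T}\right)+\mathcal{O}\!\left(\frac{\epsilon_{app}}{(1-\gamma)\lambda^2\tau}\right)+\mathcal{O}\!\left(\frac{\tau}{1-\gamma}\right). \]
The step-size conditions of Theorem~\ref{thm:average-iterate} only require $\tau\le\tau_{max}$, and the hypothesis $\tau_T\le\tau_{max}$ guarantees this. All structural constants are $\tau$-independent, so these are the only places where $\tau$ enters.

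\textbf{Step 2: balancing.} Set $\tau_1=\big(\frac{\log T}{(1-\gamma)^4\kappa^6\lambda^{5/2}T}\big)^{1/3}$. This equates $\frac{\log T}{(1-\gamma)^5\kappa^6\lambda^{5/2}\tau^2T}$ with $\frac{\tau}{1-\gamma}$, and both then equal $\tau_1/(1-\gamma)=\mathcal{O}\big((\log T)^{1/3}(1-\gamma)^{-7/3}\kappa^{-2}\lambda^{-5/6}T^{-1/3}\big)$. Set $\tau_2=\sqrt{\epsilon_{app}}/\lambda$, which equates $\frac{\epsilon_{app}}{(1-\gamma)\lambda^2\tau}$ with $\frac{\tau}{1-\gamma}$, both equal to $\frac{\sqrt{\epsilon_{app}}}{(1-\gamma)\lambda}$.

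With $\tau_T=\max(\tau_1,\tau_2)$, I will argue by monotonicity. The two inverse-$\tau$ terms are decreasing in $\tau$, so each is bounded by its value at its own balancing temperature, since $\tau_T$ is at least that temperature. The linear term equals $\tau_T/(1-\gamma)\le(\tau_1+\tau_2)/(1-\gamma)$. Summing gives the stated bound: $\mathcal{O}(\tau_1/(1-\gamma))+\mathcal{O}(\sqrt{\epsilon_{app}}/((1-\gamma)\lambda))$.

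\textbf{Main obstacle.} The delicate point is Step 1's comparison. The unregularized comparator $\pi_{\omega^*_0}$ must lie in the parametric family, or be approximated within it, so that $J_\tau(\pi_{\omega^*_\tau})\ge J_\tau(\pi_{\omega^*_0})$ holds. In the Stochastic Regime, $\omega^*_0\in\mathbb{R}^d$ is finite by definition, so this is immediate. Beyond that, the only care needed is tracking the powers of $(1-\gamma)$, $\kappa$, and $\lambda$ through the balancing exactly as above.
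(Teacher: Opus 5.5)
Your proposal is correct and follows essentially the same route as the paper. Both use the comparison chain $J_0(\pi_{\omega^*_0})\le J_\tau(\pi_{\omega^*_0})\le J_\tau(\pi_{\omega^*_\tau})$ to bound the gap by $\text{Gap}_t$ plus a linear entropy penalty, substitute Theorem~\ref{thm:average-iterate} at $\tau=\tau_T$, and balance with $\tau_T=\max(\tau_1,\tau_2)$. Your explicit monotonicity argument (each inverse-$\tau$ term bounded at its own balancing temperature, and $\tau_T\le\tau_1+\tau_2$ for the linear term) is a slightly more careful version of the paper's brief synthesis step.
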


\begin{prf}
(i) \textit{Structural Error Bound.} Let $\pi_{\omega^*_0}$ be the unregularized parametrically optimal policy.
Because the entropy is non-negative, evaluating this optimal policy under the regularized objective yields $J_0(\pi_{\omega^*_0}) \le J_\tau(\pi_{\omega^*_0})$.
By the definition of the regularized parametric optimum, we have $J_\tau(\pi_{\omega^*_0}) \le J_\tau(\pi_{\omega^*_\tau})$.
Thus, for any policy $\pi_t$, the parametric unregularized performance gap is bounded by the sum of the parametric regularized performance gap and the entropy penalty:
\begin{align*}
J_0(\pi_{\omega^*_0}) - J_0(\pi_t) &\le J_\tau(\pi_{\omega^*_\tau}) - J_0(\pi_t) \\
&= \Big( J_\tau(\pi_{\omega^*_\tau}) - J_\tau(\pi_t) \Big) + \Big( J_\tau(\pi_t) - J_0(\pi_t) \Big) \\
&= \text{Gap}_t + \frac{\tau}{1-\gamma} \mathbb{E}_{s \sim d^{\pi_t}} [H(\pi_t(\cdot|s))].
\end{align*}
Taking the expectation and averaging this bound over the trajectory from $t=0$ to $T-1$ yields:
$$\frac{1}{T} \sum_{t=0}^{T-1} \mathbb{E} \big[ J_0(\pi_{\omega^*_0}) - J_0(\pi_t) \big] \le \frac{1}{T} \sum_{t=0}^{T-1} \mathbb{E}[\text{Gap}_t] + \frac{\tau}{1-\gamma} \left( \frac{1}{T} \sum_{t=0}^{T-1} \mathbb{E} \Big[ \mathbb{E}_{s \sim d^{\pi_t}} [H(\pi_t(\cdot|s))] \Big] \right).$$
The policy entropy is universally bounded by $H(\pi_t) \le \log |\mathcal{A}| = \mathcal{O}(1)$.

(ii) \textit{Regularized Order Substitution.} Substituting the average regularized performance gap bound established in Theorem~\ref{thm:average-iterate}, the average parametric unregularized performance gap is bounded by:
$$\frac{1}{T} \sum_{t=0}^{T-1} \mathbb{E} \big[ J_0(\pi_{\omega^*_0}) - J_0(\pi_t) \big] \le \underbrace{\mathcal{O}\left( \frac{1}{(1-\gamma)^5 \kappa^6 \lambda^{5/2} \tau^2} \frac{\log T}{T} \right)}_{\text{Finite-Time Optimization Error}} + \underbrace{\mathcal{O}\left( \frac{\epsilon_{app}}{(1-\gamma) \lambda^2 \tau} \right)}_{\text{Approximation Bias}} + \underbrace{\mathcal{O}\left( \frac{\tau}{1-\gamma} \right)}_{\text{Entropy Penalty}}.$$

(iii) \textit{Two-Stage Hyperparameter Balancing.} To identify the appropriate temperature scale, we balance the dominant regularized optimization error against the Entropy Penalty.
\begin{itemize}
    \item \textit{Sample-Limited Regime:} When the average optimization error dominates the approximation bias, balancing the former against the entropy penalty,  $\frac{1}{(1-\gamma)^5 \kappa^6 \lambda^{5/2} \tau^2} \frac{\log T}{T} = \frac{\tau}{1-\gamma}$, yields the tuned temperature $\tau = \big(\frac{\log T}{(1-\gamma)^4 \kappa^6 \lambda^{5/2} T}\big)^{1/3}$. Substituting this choice gives an unregularized performance gap bound of $\mathcal{O}\big( \frac{(\log T)^{1/3}}{(1-\gamma)^{7/3} \kappa^2 \lambda^{5/6} T^{1/3}} \big)$.
    \item \textit{Approximation-Limited Regime:} When the approximation bias dominates the average optimization error, balancing $\frac{\epsilon_{app}}{(1-\gamma) \lambda^2 \tau} = \frac{\tau}{1-\gamma}$ yields the tuned temperature $ \tau=\frac{\sqrt{\epsilon_{app}}}{\lambda}$. Substituting this choice gives an unregularized performance gap bound of $\mathcal{O}\left( \frac{\sqrt{\epsilon_{app}}}{(1-\gamma) \lambda} \right)$.
\end{itemize}

(iv) \textit{Synthesis.} By setting the tuned temperature to the maximum of these two candidate temperatures, the expected average parametric unregularized performance gap is bounded by the sum of these respective envelopes, completing the proof.
\end{prf}

\subsection{Last-Iterate Unregularized Convergence}

We map the last-iterate regularized performance gap bound in Section \ref{sec:pl-last} to the unregularized objective. Because this bound avoids the $\log T$ penalty, we establish a sharper unregularized convergence rate (by a factor of $(\log T)^{1/3}$) when tuning the fixed temperature $\tau$ to balance this terminal optimization error against the induced entropy penalty.

\begin{thm}[Last-Iterate Unregularized Convergence] \label{thm:unreg-last-iterate}
For Algorithm~\ref{alg:uncentered-nac} with $T$ replaced by $T+1$, suppose that Assumptions \ref{ass:bounded-features}--\ref{ass:approximation-error} and \ref{ass:pd-fim}--\ref{ass:parametric-density} hold.
Consider the two-stage temperature:
$$ \tau_T \triangleq \max\left( \left( \frac{1}{(1-\gamma)^4 \kappa^6 \lambda^{5/2} T} \right)^{1/3}, \frac{\sqrt{\epsilon_{app}}}{\lambda} \right). $$
If $T$ is sufficiently large and $\epsilon_{app}$ is sufficiently small such that $\tau_T \le \tau_{max}$, then by choosing the step-size parameters as in Theorem~\ref{thm:last-iterate} evaluated at $\tau = \tau_T$ subject to the stated conditions, the expected last-iterate parametric unregularized performance gap satisfies the convergence rate:
\begin{align*}
\mathbb{E} \big[ J_0(\pi_{\omega^*_0}) - J_0(\pi_T) \big] &\le \mathcal{O}\left( \frac{1}{(1-\gamma)^{7/3} \kappa^2 \lambda^{5/6} T^{1/3}} \right) + \mathcal{O}\left( \frac{\sqrt{\epsilon_{app}}}{(1-\gamma) \lambda} \right) \\
&= \mathcal{O}\left( T^{-1/3} \right) + \mathcal{O}(\sqrt{\epsilon_{app}}).
\end{align*}
\end{thm}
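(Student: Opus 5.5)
The proof mirrors that of Theorem~\ref{thm:unreg-average-iterate}, with the average-iterate regularized bound replaced by the last-iterate bound of Theorem~\ref{thm:last-iterate}.

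\textbf{Step 1: reduce to the terminal policy.} The first step is the same structural comparison, applied to $\pi_T$. Entropy is nonnegative, so $J_0(\pi_{\omega^*_0})\le J_\tau(\pi_{\omega^*_0})$. By parametric optimality, $J_\tau(\pi_{\omega^*_0})\le J_\tau(\pi_{\omega^*_\tau})$. Adding and subtracting $J_\tau(\pi_T)$ gives
$$J_0(\pi_{\omega^*_0})-J_0(\pi_T)\le \text{Gap}_T+\frac{\tau}{1-\gamma}\mathbb{E}_{s\sim d^{\pi_T}}[H(\pi_T(\cdot|s))].$$
We then bound the entropy by $\log|\mathcal{A}|$ and take expectations. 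This reduces the claim to controlling $\mathbb{E}[\text{Gap}_T]+\mathcal{O}(\tau/(1-\gamma))$.

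\textbf{Step 2: insert the last-iterate rate.} Next we apply Theorem~\ref{thm:last-iterate}, which is run for $T+1$ iterations so that $Z_T$ is defined. It gives
$$\mathcal{O}\big((1-\gamma)^{-5}\kappa^{-6}\lambda^{-5/2}\tau^{-2}T^{-1}\big)+\mathcal{O}\big(\epsilon_{app}/((1-\gamma)\lambda^2\tau)\big).$$
Together with Step 1, we have three terms: a finite-time error, an approximation bias, and the entropy penalty $\tau/(1-\gamma)$. The only difference from the average-iterate case is that the finite-time term has no $\log T$ factor.

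\textbf{Step 3: balance the temperature.} In the sample-limited regime we equate
$$\frac{1}{(1-\gamma)^5\kappa^6\lambda^{5/2}\tau^2T}=\frac{\tau}{1-\gamma},$$
which gives $\tau^3=\frac{1}{(1-\gamma)^4\kappa^6\lambda^{5/2}T}$. With this choice both terms equal $\tau/(1-\gamma)=\mathcal{O}\big((1-\gamma)^{-7/3}\kappa^{-2}\lambda^{-5/6}T^{-1/3}\big)$.

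In the approximation-limited regime we equate $\epsilon_{app}/((1-\gamma)\lambda^2\tau)=\tau/(1-\gamma)$, which gives $\tau=\sqrt{\epsilon_{app}}/\lambda$ and a contribution of $\mathcal{O}(\sqrt{\epsilon_{app}}/((1-\gamma)\lambda))$.

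\textbf{Step 4: combine via the two-stage temperature.} Setting $\tau_T$ to the maximum of the two candidates gives the following bounds on each term:
\begin{itemize}
\item The finite-time term is decreasing in $\tau$. It is therefore bounded by its value at the first candidate.
\item The bias term $\propto\epsilon_{app}/\tau$ is also decreasing in $\tau$. It is therefore bounded by its value at the second candidate.
\item The entropy penalty $\tau_T/(1-\gamma)$ is at most the sum of the two candidate values divided by $1-\gamma$.
\end{itemize}
Summing these gives the stated bound.

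\textbf{Main obstacle.} The main point to verify is that the step-size conditions of Theorem~\ref{thm:last-iterate} remain admissible at $\tau=\tau_T$. These are $c_\eta=\Theta(1/(\lambda\tau_T))$, the offset $t_0$ scaling as $1/\tau_T$, and $t_0\ge\rho_{\mathcal L}c_\eta$. All of them can be met because $\tau_T\le\tau_{max}$. We also need the hidden constants in Theorem~\ref{thm:last-iterate} to be uniform in $\tau$, which holds since its structural constants are $\tau$-independent.
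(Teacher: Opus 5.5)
Your proposal is correct and follows essentially the same route as the paper's proof: you reduce the parametric unregularized gap of $\pi_T$ to $\mathbb{E}[\text{Gap}_T]$ plus the linear entropy penalty, insert the last-iterate bound of Theorem~\ref{thm:last-iterate}, and balance the temperature separately in the sample-limited and approximation-limited regimes before taking the maximum of the two candidates. Your monotonicity argument in Step 4 spells out more explicitly the synthesis step that the paper states briefly.
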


\begin{prf}
(i) \textit{Structural Error Bound.} We apply the unregularized bound in Step (i) of the proof of Theorem~\ref{thm:unreg-average-iterate} to the terminal policy $\pi_T$.
Taking the expectation, the parametric unregularized performance gap is bounded by the sum of the regularized performance gap and the entropy penalty:
$$\mathbb{E} \big[ J_0(\pi_{\omega^*_0}) - J_0(\pi_T) \big] \le \mathbb{E}[\text{Gap}_T] + \frac{\tau}{1-\gamma} \mathbb{E} \Big[ \mathbb{E}_{s \sim d^{\pi_T}} [H(\pi_T(\cdot|s))] \Big].$$

(ii) \textit{Regularized Order Substitution.} We substitute the last-iterate convergence bound for $\mathbb{E}[\text{Gap}_T]$ established in Theorem~\ref{thm:last-iterate}.
Recalling that the policy entropy is bounded by $H(\pi_T) \le \log |\mathcal{A}| = \mathcal{O}(1)$, the final parametric unregularized performance gap is bounded by:
$$\mathbb{E} \big[ J_0(\pi_{\omega^*_0}) - J_0(\pi_T) \big] \le \underbrace{\mathcal{O}\left( \frac{1}{(1-\gamma)^5 \kappa^6 \lambda^{5/2} \tau^2 T} \right)}_{\text{Finite-Time Optimization Error}} + \underbrace{\mathcal{O}\left( \frac{\epsilon_{app}}{(1-\gamma) \lambda^2 \tau} \right)}_{\text{Approximation Bias}} + \underbrace{\mathcal{O}\left( \frac{\tau}{1-\gamma} \right)}_{\text{Entropy Penalty}}.$$

(iii) \textit{Two-Stage Hyperparameter Balancing.} We identify the appropriate temperature scale analogously to the average-iterate case:
\begin{itemize}
    \item \textit{Sample-Limited Regime:} When the last-iterate optimization error dominates the approximation bias, balancing at $\tau = \big((1-\gamma)^4 \kappa^6 \lambda^{5/2} T\big)^{-1/3}$ yields an unregularized performance gap bound of $\mathcal{O}\left( \frac{1}{(1-\gamma)^{7/3} \kappa^2 \lambda^{5/6} T^{1/3}} \right)$.
    \item \textit{Approximation-Limited Regime:} When the approximation bias dominates the last-iterate optimization error, balancing at $\tau = \frac{\sqrt{\epsilon_{app}}}{\lambda}$ yields an unregularized performance gap bound of $\mathcal{O}\left( \frac{\sqrt{\epsilon_{app}}}{(1-\gamma) \lambda} \right)$.
\end{itemize}

(iv) \textit{Synthesis.} By setting the tuned temperature to the maximum of these two candidate temperatures, the final expected parametric unregularized performance gap is bounded by the sum of these respective envelopes, completing the proof.
\end{prf}

\begin{rem}[Equivalence of Unregularized Average and Last-Iterate Rates] \label{rem:unreg-equivalence}
Echoing the equivalence for regularized bounds in Remark~\ref{rem:regularized-equivalence}, the unregularized last-iterate convergence rate (Theorem~\ref{thm:unreg-last-iterate}) asymptotically matches the average-iterate rate (Theorem~\ref{thm:unreg-average-iterate}) up to a logarithmic factor, both achieving a $\tilde{\mathcal{O}}(T^{-1/3})$ dependence.
\end{rem}

\subsection{Global Unregularized Convergence}

By invoking the Global-to-Parametric Joint Concentrability (Assumption~\ref{ass:global-parametric-density}), we extend the parametric unregularized convergence bounds to the true global optimum. Because the representation penalty from the Global Class Approximation Error (Lemma~\ref{lem:global-class-error}) shares the identical $\mathcal{O}(1/\tau)$ dependence as the standard approximation bias, incorporating this representation penalty leads to identical unregularized convergence rates as in Theorems~\ref{thm:unreg-average-iterate} and \ref{thm:unreg-last-iterate}.

\begin{cor}[Global Unregularized Convergence] \label{cor:global-unreg-convergence}
Suppose that Assumptions \ref{ass:bounded-features}--\ref{ass:approximation-error} and \ref{ass:pd-fim}--\ref{ass:global-parametric-density} hold.
By tuning the temperature as established in Theorems~\ref{thm:unreg-average-iterate} and \ref{thm:unreg-last-iterate}, the following unregularized convergence rates hold with respect to the global optimum. For the last-iterate convergence:
$$\mathbb{E} \big[ J_0(\pi^*_0) - J_0(\pi_T) \big] \le \mathcal{O}\left( \frac{1}{(1-\gamma)^{7/3} \kappa^2 \lambda^{5/6} T^{1/3}} \right) + \mathcal{O}\left( \frac{\sqrt{\epsilon_{app}}}{(1-\gamma) \lambda} \right).$$
For the average-iterate convergence:
$$\frac{1}{T} \sum_{t=0}^{T-1} \mathbb{E} \big[ J_0(\pi^*_0) - J_0(\pi_t) \big] \le \mathcal{O}\left( \frac{(\log T)^{1/3}}{(1-\gamma)^{7/3} \kappa^2 \lambda^{5/6} T^{1/3}} \right) + \mathcal{O}\left( \frac{\sqrt{\epsilon_{app}}}{(1-\gamma) \lambda} \right).$$
\end{cor}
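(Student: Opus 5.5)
The plan is to reduce the global statement to the parametric bounds of Theorems~\ref{thm:unreg-average-iterate} and \ref{thm:unreg-last-iterate} plus the representation penalty of Lemma~\ref{lem:global-class-error}. For any training policy $\pi_t$ I start from
$$J_0(\pi^*_0)-J_0(\pi_t)\le J_\tau(\pi^*_0)-J_0(\pi_t)\le J_\tau(\pi^*_\tau)-J_0(\pi_t)=\text{Gap}_t^\dag+\frac{\tau}{1-\gamma}\mathbb{E}_{s\sim d^{\pi_t}}[H(\pi_t(\cdot|s))].$$
The first inequality uses nonnegativity of entropy, and the second uses global optimality of $\pi^*_\tau$ for $J_\tau$. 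This mirrors Step (i) of the proof of Theorem~\ref{thm:unreg-average-iterate}, with $\pi^*_\tau$ replacing $\pi_{\omega^*_\tau}$ as the comparator. Next I apply Corollary~\ref{cor:global-reg-convergence}, $\text{Gap}_t^\dag\le \text{Gap}_t+\frac{C_{class}}{\tau}\epsilon_{app}$, and bound the entropy term by $\frac{\tau\log|\mathcal{A}|}{1-\gamma}$. This gives
$$J_0(\pi^*_0)-J_0(\pi_t)\le \text{Gap}_t+\frac{C_{class}}{\tau}\epsilon_{app}+\mathcal{O}\Big(\frac{\tau}{1-\gamma}\Big).$$

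For the average iterate, I take expectations, average over $t$, and substitute Theorem~\ref{thm:average-iterate}. The resulting bound is
$$\mathcal{O}\Big(\frac{\log T}{(1-\gamma)^5\kappa^6\lambda^{5/2}\tau^2T}\Big)+\mathcal{O}\Big(\frac{\epsilon_{app}}{(1-\gamma)\lambda^2\tau}\Big)+\frac{C_{class}}{\tau}\epsilon_{app}+\mathcal{O}\Big(\frac{\tau}{1-\gamma}\Big).$$
The key observation is that $C_{class}=\frac{B_\phi^4}{(1-\gamma)\lambda^2}+\frac{C^*_{joint}}{1-\gamma}=\mathcal{O}\big(\frac{1}{(1-\gamma)\lambda^2}\big)$, so the representation penalty has the same order as the existing approximation bias and merges into it. The three-term trade-off is therefore identical to Step (ii) of Theorem~\ref{thm:unreg-average-iterate}. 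With the same two-stage temperature $\tau_T=\max\big(\big(\frac{\log T}{(1-\gamma)^4\kappa^6\lambda^{5/2}T}\big)^{1/3},\frac{\sqrt{\epsilon_{app}}}{\lambda}\big)$, the two regimes yield $\mathcal{O}\big(\frac{(\log T)^{1/3}}{(1-\gamma)^{7/3}\kappa^2\lambda^{5/6}T^{1/3}}\big)$ and $\mathcal{O}\big(\frac{\sqrt{\epsilon_{app}}}{(1-\gamma)\lambda}\big)$.

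To justify summing these, I bound each term separately at the maximal temperature. Using $\tau_T\ge$ each candidate makes the $\tau^{-2}$ and $\tau^{-1}$ terms no larger than their balanced values. The entropy term $\tau_T/(1-\gamma)$ is at most the sum of the two candidates divided by $(1-\gamma)$.

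The last iterate is handled the same way, using Theorem~\ref{thm:last-iterate} (with $T$ replaced by $T+1$) in place of Theorem~\ref{thm:average-iterate}. This removes the $\log T$ factor and the corresponding one in the temperature.

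I expect the main obstacle to be the interplay with Lemma~\ref{lem:part-I-compatibility}, which forces $\epsilon_{app}\ge C_{comp}\tau$. This is harmless here, since it only constrains the admissible range and the stated bound holds on it. I also need to check that the step-size conditions of Theorems~\ref{thm:average-iterate} and \ref{thm:last-iterate} remain valid at $\tau=\tau_T\le\tau_{max}$; that follows as in the parametric theorems, because all structural constants are temperature-independent.
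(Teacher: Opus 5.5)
Your proposal is correct and matches the paper's proof. It uses the same chain $J_0(\pi^*_0)\le J_\tau(\pi^*_0)\le J_\tau(\pi^*_\tau)$ and absorbs the representation penalty $\frac{C_{class}}{\tau}\epsilon_{app}$ from Lemma~\ref{lem:global-class-error} into the existing approximation bias because $C_{class}=\mathcal{O}((1-\gamma)^{-1}\lambda^{-2})$. It then reuses the two-stage temperature of Theorems~\ref{thm:unreg-average-iterate} and \ref{thm:unreg-last-iterate}; your explicit check that each term at $\tau_T$ is dominated by its balanced value tightens the paper's ``sum of envelopes'' step.

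One correction to your side remark: Lemma~\ref{lem:part-I-compatibility} requires the Minimal Action Gap (Assumption~\ref{ass:action-gap}), which this corollary deliberately does not assume. That constraint is therefore not in play here at all, rather than merely harmless.
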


\begin{prf}
Because the policy entropy is non-negative, $J_0(\pi^*_0) \le J_\tau(\pi^*_0)$. Since $\pi^*_\tau$ maximizes the regularized objective globally, $J_\tau(\pi^*_0) \le J_\tau(\pi^*_\tau)$.
Taking the expectation over the algorithmic trajectory, the global unregularized performance gap decomposes into the representation penalty, the regularized parametric gap, and the entropy penalty:
\begin{align*}
\mathbb{E} \big[ J_0(\pi^*_0) - J_0(\pi_T) \big] &\le \mathbb{E} \big[ J_\tau(\pi^*_\tau) - J_0(\pi_T) \big] \\
&= \Big( J_\tau(\pi^*_\tau) - J_\tau(\pi_{\omega^*_\tau}) \Big) + \mathbb{E}[\text{Gap}_T] + \frac{\tau}{1-\gamma} \mathbb{E} \Big[ \mathbb{E}_{s \sim d^{\pi_T}} [H(\pi_T(\cdot|s))] \Big].
\end{align*}
By Lemma~\ref{lem:global-class-error}, the representation penalty is bounded by $\frac{C_{class}}{\tau} \epsilon_{app}$.
Thus:
$$ \mathbb{E} \big[ J_0(\pi^*_0) - J_0(\pi_T) \big] \le \frac{C_{class}}{\tau} \epsilon_{app} + \mathbb{E}[\text{Gap}_T] + \frac{\tau}{1-\gamma} \mathbb{E} \Big[ \mathbb{E}_{s \sim d^{\pi_T}} [H(\pi_T(\cdot|s))] \Big]. $$
Because $C_{class}=\mathcal{O}(\frac{1}{(1-\gamma)\lambda^2})$,
the representation penalty $\frac{C_{class}}{\tau} \epsilon_{app}$ shares the same dependence on $\tau$ as well as on $(1-\gamma)^{-1}$ and $\lambda$
as the approximation bias within $\mathbb{E}[\text{Gap}_T]$.
The same two-stage temperature applies to balance the terms as in Theorem~\ref{thm:unreg-last-iterate}, preserving the last-iterate rate. The same argument applies to the averaged sequence, preserving the average-iterate rate from Theorem~\ref{thm:unreg-average-iterate}, completing the proof.
\end{prf}

\section{Unifying Single-Loop and Double-Loop via Warm-Start Critic} \label{app:warm-start-double-loop}

\begin{algorithm}[!t]
\caption{Warm-Start Double-Loop, Entropy-Regularized, Uncentered Natural Actor-Critic}
\label{alg:warm-start-nac}
\begin{algorithmic}[1]
\REQUIRE Initial actor parameter $\omega_0$ satisfying $\|\omega_0\|_2 \le \frac{B_\theta}{\tau_{max}}$, initial critic parameter $\theta_0$, temperature $\tau \in (0,\tau_{max}]$, critic projection radius $R_\theta \, (=B_\theta)$, inner-loop iterations $N$, outer-loop iterations $T$, step-size schedules $\{\eta_t\}_{t \ge 0}$ and $\{\alpha_t\}_{t \ge 0}$.
\FOR{$t = 0, 1, \dots, T-1$}
    \STATE \textbf{Actor Freeze:} Fix the current policy $\pi_t \triangleq \pi_{\omega_t}$.
    \STATE \textbf{Critic Warm-Start:} Initialize the inner critic parameter with the previous outer-loop output: $\theta_t^{(0)} = \theta_t$.
    \FOR{$k = 0, 1, \dots, N-1$}
        \STATE \textbf{Sampling:} Draw a state-action pair $(s_k, a_k) \sim d^{\pi_t} \times \pi_t$.
        \STATE \textbf{Evaluation:} Obtain an unbiased estimate $\hat{Q}_k$ of the regularized action-value $Q_\tau^{\pi_t}(s_k, a_k)$.  
        \STATE \textbf{Critic SGD:} Compute the stochastic critic gradient $g_t^{cr}(s_k, a_k) = \big( \theta_t^{(k)\top} \phi(s_k, a_k) - \hat{Q}_k \big) \phi(s_k, a_k)$ and update via constrained SGD:
        $$\theta_t^{(k+1)} = \Pi_{R_\theta} \Big[ \theta_t^{(k)} - \alpha_t g_t^{cr}(s_k, a_k) \Big].$$ \vspace*{-.3in}
    \ENDFOR
    \STATE \textbf{Critic Assignment:} Set the outer-loop critic parameter to the final inner-loop iterate: $\theta_{t+1} = \theta_t^{(N)}$.
    \STATE \textbf{Actor Update:} Update the actor parameter:
    $$\omega_{t+1}  = \omega_t + \eta_t \big( \theta_{t+1} - \tau \omega_t \big)$$  \vspace*{-.4in}
\ENDFOR
\ENSURE The output policy sequence $\{\pi_t\}_{t=0}^T$.
\end{algorithmic}
\end{algorithm}

In this section, we generalize our analysis to a warm-start double-loop architecture (Algorithm~\ref{alg:warm-start-nac}). In this setting, the algorithm performs $T$ outer-loop actor updates, and for each fixed actor parameter $\omega_t$, runs $N$ inner-loop critic SGD steps. Instead of independently re-initializing the critic at each outer loop (as in standard, cold-start double-loop algorithms like Algorithm~\ref{alg:double-loop-npg}), we \textit{warm-start} the inner critic using the final parameter of the previous loop ($\theta_t^{(0)} = \theta_t$). The total number of SGD updates is defined as $T_{total} = T \times N$.

This architecture mathematically unifies the single-loop ($N=1, T=T_{total}$) and double-loop ($T = \sqrt{T_{total}}, N = \sqrt{T_{total}}$) algorithms. By analyzing the geometric contraction over the $N$ inner steps, we prove a structural invariance across inner-loop sizes $N$,
based on our PMD analysis in Part II (which applies to the Deterministic Regime as well as the Stochastic Regime).
First, under our primary Exponential Translation mechanism, we demonstrate that any intermediate inner-loop choice $N \in [1, T_{total}^{2/9}]$ yields the same $\tilde{\mathcal{O}}(T_{total}^{-2/3})$ average-iterate unregularized convergence rate. Subsequently, to reveal the underlying barrier that the exponential translation bypasses, we analyze the algorithm under the standard linear entropy penalty. We establish that without the exponential mechanism, the PMD geometry leads to a $\tilde{\mathcal{O}}(T_{total}^{-1/4})$ convergence rate, remaining invariant across the shifted window $N \in [1, \sqrt{T_{total}}]$.

Throughout this appendix, when invoking trajectory-dependent assumptions from the main text, we assume their corresponding bounds hold along the training policies and inner-loop samples generated by Algorithm~\ref{alg:warm-start-nac}, with the same structural constants.

\subsection{Warm-Start Critic Tracking Error}

To analyze the warm-start double-loop algorithm, standard fractional variants of Chung's lemma (Lemma~\ref{lem:fractional-chung}) provide tight pointwise bounds but suffer from artificial horizon inflation when integrated directly over an outer-loop trajectory. The following lemma introduces a fractional summation bound to safely isolate the initialization footprint, a mechanism that is critical for establishing loop invariance across intermediate inner-loop sizes $N$.

\begin{lem}[Fractional Chung's Summation Bound] \label{lem:fractional-chung-sum}
Let $\{x_t\}_{t=0}^\infty$ be a non-negative sequence satisfying the recursive bound:
$$x_{t+1} \le \left(1 - \frac{c}{(t+t_0)^{2/3}}\right) x_t + \frac{K}{(t+t_0)^{4/3}}, \quad t \ge 0,$$
where $c > 0$, $K \ge 0$, and the initial offset satisfies $t_0 \ge \max\left(1, (\frac{4}{3c})^3 \right)$.
The average of the sequence is bounded by:
$$ \frac{1}{T} \sum_{t=0}^{T-1} x_t \le \frac{2 t_0^{2/3}}{c T} x_0 + \frac{6K}{c T} (T+t_0)^{1/3} . $$
\end{lem}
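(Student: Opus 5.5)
The plan is to mimic the weighted telescoping argument of Lemma~\ref{lem:telescoping-sequence}, with the weight $(t+t_0)^{2/3}$ replacing $t+t_0$, and to absorb the boundary increments of the resulting Abel summation back into the left-hand side using the offset condition $t_0 \ge (\frac{4}{3c})^3$.

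\emph{Isolate $x_t$.} Write $s = t+t_0$ and rearrange the recursion as $\frac{c}{s^{2/3}} x_t \le x_t - x_{t+1} + \frac{K}{s^{4/3}}$. Multiplying by $s^{2/3}/c$ gives
\[
x_t \le \frac{s^{2/3}}{c}\,(x_t - x_{t+1}) + \frac{K}{c\, s^{2/3}}.
\]
No sign condition on the multiplier $1 - c/s^{2/3}$ is needed, since the recursion is only rearranged and never iterated. This explains why the summation bound requires only $t_0 \ge 1$ rather than the condition $t_0 \ge c^{3/2}$ of Lemma~\ref{lem:fractional-chung}.

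\emph{Abel summation.} Sum from $t=0$ to $T-1$. The weighted differences rearrange to
\[
t_0^{2/3} x_0 - (T-1+t_0)^{2/3} x_T + \sum_{t=1}^{T-1} x_t \big[(t+t_0)^{2/3} - (t+t_0-1)^{2/3}\big].
\]
Drop the nonpositive terminal term, since $x_T \ge 0$. By concavity of $y \mapsto y^{2/3}$, each bracket is at most $\frac{2}{3}(t+t_0-1)^{-1/3} \le \frac{2}{3} t_0^{-1/3}$ for $t \ge 1$. The offset condition gives $t_0^{-1/3} \le \frac{3c}{4}$, so each bracket is at most $\frac{c}{2}$. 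After dividing by $c$, the increment terms contribute at most $\frac{1}{2}\sum_{t=0}^{T-1} x_t$. Moving this to the left and multiplying by $2$ yields
\[
\sum_{t=0}^{T-1} x_t \le \frac{2 t_0^{2/3}}{c}\, x_0 + \frac{2K}{c} \sum_{t=0}^{T-1} (t+t_0)^{-2/3}.
\]

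\emph{Fractional sum.} Bound the remaining sum by the integral used in the proof of Theorem~\ref{thm:reg-conv-l2}:
\[
\sum_{t=0}^{T-1} (t+t_0)^{-2/3} \le \int_0^T (x+t_0-1)^{-2/3}\,dx \le 3 (T+t_0)^{1/3},
\]
which is valid because $t_0 \ge 1$. Dividing by $T$ then gives exactly $\frac{2t_0^{2/3}}{cT} x_0 + \frac{6K}{cT}(T+t_0)^{1/3}$.

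The only delicate step is the absorption in the Abel summation, namely checking that the offset condition forces the weight increments below $c/2$ so that they can be moved to the left-hand side. Everything else is routine.
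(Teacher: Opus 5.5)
Your proposal is correct and follows the paper's proof essentially step for step. The shared steps are the rearrangement weighted by $(t+t_0)^{2/3}/c$, summation by parts with the nonpositive terminal term dropped, the concavity bound $(t+t_0)^{2/3}-(t+t_0-1)^{2/3}\le \frac{2}{3}t_0^{-1/3}\le \frac{c}{2}$ that lets half the sum be absorbed into the left-hand side, and the integral bound $3(T+t_0)^{1/3}$ on the fractional sum.
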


\begin{prf}
We rearrange the recurrence to isolate $x_t$ alongside the fractional step size:
$$ x_t \le \frac{(t+t_0)^{2/3}}{c} (x_t - x_{t+1}) + \frac{K}{c(t+t_0)^{2/3}}. $$
Summing this inequality from $t=0$ to $T-1$, we apply summation by parts to the first term:
\begin{align*}
\sum_{t=0}^{T-1} x_t &\le \frac{t_0^{2/3}}{c} x_0 - \frac{(T-1+t_0)^{2/3}}{c} x_T + \sum_{t=1}^{T-1} x_t \frac{(t+t_0)^{2/3} - (t-1+t_0)^{2/3}}{c} + \frac{K}{c} \sum_{t=0}^{T-1} \frac{1}{(t+t_0)^{2/3}}.
\end{align*}
We drop the non-positive $-x_T$ term. By the concavity of the fractional power, the difference is bounded by its derivative: $(t+t_0)^{2/3} - (t-1+t_0)^{2/3} \le \frac{2}{3} (t-1+t_0)^{-1/3}$. Because $t \ge 1$, this is bounded by $\frac{2}{3} t_0^{-1/3}$.
Furthermore, with $t_0\ge 1$, the fractional sum is bounded by:
$\sum_{t=0}^{T-1} (t+t_0)^{-2/3} \le \int_0^T (x+t_0-1)^{-2/3} dx \le 3(T+t_0)^{1/3}$. Substituting these bounds gives:
$$ \sum_{t=0}^{T-1} x_t \le \frac{t_0^{2/3}}{c} x_0 + \frac{2}{3 c t_0^{1/3}} \sum_{t=1}^{T-1} x_t + \frac{3K}{c} (T+t_0)^{1/3}. $$
By the initial offset condition $t_0 \ge (\frac{4}{3c})^3$, the multiplier on the right-hand sum satisfies $\frac{2}{3 c t_0^{1/3}} \le \frac{1}{2}$. Subtracting $\frac{1}{2} \sum_{t=0}^{T-1} x_t$ from both sides and multiplying the entire inequality by $\frac{2}{T}$ establishes the stated average bound.
\end{prf}

\begin{rem}[Comparison with Pointwise Summation] \label{rem:comparison-pointwise}
To understand the benefit of Lemma~\ref{lem:fractional-chung-sum}, consider the alternative of directly summing the pointwise bound $x_t \le v(t+t_0)^{-2/3}$ provided by the Fractional Chung's Lemma (Lemma~\ref{lem:fractional-chung}), where $v = \max(t_0^{2/3} x_0, 2K/c)$.
Summing this polynomial envelope via an integral bound ($\sum_{t=0}^{T-1} (t+t_0)^{-2/3} \le \int_0^T (x+t_0-1)^{-2/3} dx \le 3(T+t_0)^{1/3}$) yields $\sum_{t=0}^{T-1} x_t \le 3v(T+t_0)^{1/3}$. When the initial error term $t_0^{2/3} x_0$ dominates $v$, this standard approach bounds the initialization sum by $\mathcal{O}(x_0 t_0^{2/3} T^{1/3})$.
In contrast, the summation by parts in Lemma~\ref{lem:fractional-chung-sum} captures the exponential decay of the initial error, decoupling it from the harmonic integral and bounding it by $\mathcal{O}\big(\frac{x_0 t_0^{2/3}}{c}\big)$. Therefore, directly integrating the pointwise envelope artificially inflates the initialization footprint by a factor of $c T^{1/3}$.
\end{rem}

With this fractional summation tool established, we bound the critic's tracking error across the warm-start architecture. By unrolling the geometric contraction of the $N$ inner-loop SGD steps starting from the warm-started parameter $\theta_t$, we compress the inner-loop dynamics into a single effective outer-loop recurrence, allowing us to apply the fractional summation bound.

\begin{lem}[Warm-Start Critic Tracking Error] \label{lem:warm-start-critic}
Suppose that Assumptions~\ref{ass:bounded-features}--\ref{ass:pd-uncentered} hold.
For independent constants $c_\eta, c_\beta > 0$, let the actor step size be $\eta_t = \frac{c_\eta}{t+t_0}$, and let the inner critic step size be $\alpha_{t+1} = N^{-1}\beta_{t+1}$ and the effective outer schedule be $\beta_{t+1} =\frac{2c_\beta}{ (t+t_0)^{2/3}}$ for all $t \ge 0$, with an arbitrary initialization $\alpha_0$ and an initial offset
$t_0 \ge \max\left(1, c_\eta \tau, (4c_\beta \kappa)^{3/2}, (\frac{4}{3c_\beta \kappa} )^3 \right)$.
Then the expected tracking error $Z_t \triangleq \mathbb{E}[\|e_t\|_2^2]=\mathbb{E}[\|\theta_{t+1} - \theta^*_\tau(\omega_t)\|_2^2]$ is bounded pointwise for all $t \ge 0$ by:
$$ Z_t \le \frac{v}{(t+t_0)^{2/3}}, $$
where the bounding coefficient is $v \triangleq \max\left( t_0^{2/3} Z_0, \frac{2K_N}{c_\beta \kappa} \right)$, and the noise envelope constant is $K_N \triangleq \frac{4 c_\beta^2 G_{cr}^2}{N} + \frac{L_\theta^2 c_\eta^2 G_{ac}^2}{c_\beta \kappa}$. Furthermore, the average tracking error is bounded by:
$$ \frac{1}{T} \sum_{t=0}^{T-1} Z_t \le \frac{2 t_0^{2/3}}{c_\beta \kappa T} Z_0 + \frac{6 K_N}{c_\beta \kappa T} (T+t_0)^{1/3}. $$
\end{lem}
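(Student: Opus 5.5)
The plan is to collapse the $N$ inner SGD steps into a single effective outer-loop recurrence for $Z_t$ of the form required by the Fractional Chung's Lemma (Lemma~\ref{lem:fractional-chung}) and the Fractional Chung's Summation Bound (Lemma~\ref{lem:fractional-chung-sum}), with contraction parameter $c=c_\beta\kappa$ and noise constant $K_N$. Both stated bounds then follow by direct application.

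First I will analyze one inner step at a fixed actor $\omega_{t+1}$, which fixes the target $\theta^*_\tau(\omega_{t+1})$. Repeating Steps (i)--(ii) of the proof of Lemma~\ref{lem:critic-tracking-pl}, using non-expansiveness of $\Pi_{R_\theta}$, unbiasedness of $\hat Q_k$, the first-order condition, Assumption~\ref{ass:pd-uncentered}, and the bound $G_{cr}^2$, each inner step satisfies
\[
\mathbb{E}\|\theta^{(k+1)}-\theta^*\|^2 \le (1-2\alpha\kappa)\,\mathbb{E}\|\theta^{(k)}-\theta^*\|^2+\alpha^2G_{cr}^2 .
\]
I will unroll this over $N$ steps with $\alpha=\alpha_{t+1}=\beta_{t+1}/N$. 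Since $t_0\ge(4c_\beta\kappa)^{3/2}$ gives $2\alpha\kappa N=2\beta_{t+1}\kappa\le 1$, I can use $(1-2\alpha\kappa)^N\le 1-\alpha\kappa N\cdot c'$. The precise constant matters: $(1-x/N)^N\le e^{-x}\le 1-x/2$ for $x\le1$ with $x=2\beta_{t+1}\kappa$ gives contraction $1-\beta_{t+1}\kappa$. The accumulated noise is at most $N\alpha^2G_{cr}^2=\beta_{t+1}^2G_{cr}^2/N$. This yields
\[
\mathbb{E}\|e_{t+1}\|^2\le(1-\beta_{t+1}\kappa)\,\mathbb{E}\|\tilde e_t\|^2+\beta_{t+1}^2G_{cr}^2/N,
\]
where $\tilde e_t=\theta_{t+1}-\theta^*_\tau(\omega_{t+1})$ is the warm-start error.

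Next comes the target drift. I will split $\tilde e_t=e_t+(\theta^*_\tau(\omega_t)-\theta^*_\tau(\omega_{t+1}))$ and apply Young's inequality with $p=\beta_{t+1}\kappa/2$. I will then absorb $(1-\beta\kappa)(1+\beta\kappa/2)\le 1-\beta\kappa/2$ and $(1-\beta\kappa)(1+2/(\beta\kappa))\le 2/(\beta\kappa)$. The drift is bounded by $L_\theta^2\eta_t^2G_{ac}^2$, using Lemma~\ref{lem:lipschitz-critic}, Lemma~\ref{lem:structural-bounds}(ii), and $\eta_t\le 1/\tau$ from $t_0\ge c_\eta\tau$. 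Substituting $\beta_{t+1}=2c_\beta(t+t_0)^{-2/3}$ gives
\[
Z_{t+1}\le\Bigl(1-\tfrac{c_\beta\kappa}{(t+t_0)^{2/3}}\Bigr)Z_t+\tfrac{1}{(t+t_0)^{4/3}}\Bigl(\tfrac{4c_\beta^2G_{cr}^2}{N}+\tfrac{L_\theta^2c_\eta^2G_{ac}^2}{c_\beta\kappa}\Bigr),
\]
which has exactly the constant $K_N$.

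Finally, the offset conditions $t_0\ge\max(1,(c_\beta\kappa)^{3/2},(4/(3c_\beta\kappa))^3)$ are implied by the hypotheses. Lemma~\ref{lem:fractional-chung} then yields the pointwise bound with $v=\max(t_0^{2/3}Z_0,2K_N/(c_\beta\kappa))$, and Lemma~\ref{lem:fractional-chung-sum} yields the average bound.

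The main obstacle is the constant bookkeeping in the inner-loop compression. The goal is a contraction of exactly $1-\beta_{t+1}\kappa$, leaving $\beta\kappa/2=c_\beta\kappa(t+t_0)^{-2/3}$ after Young's inequality. This needs the exponential relaxation $(1-x/N)^N\le 1-x/2$ for $x\le1$, which is exactly where the factor $2$ in $\beta_{t+1}$ and the condition $(4c_\beta\kappa)^{3/2}$ enter.
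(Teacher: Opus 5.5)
Your proposal is correct and follows the paper's proof. Like the paper, you unroll the $N$ inner SGD contractions at the fixed target $\theta^*_\tau(\omega_{t+1})$, split off the Lipschitz target drift with Young's inequality, reach exactly the recurrence $Z_{t+1}\le(1-c_\beta\kappa(t+t_0)^{-2/3})Z_t+K_N(t+t_0)^{-4/3}$, and then invoke Lemma~\ref{lem:fractional-chung} and Lemma~\ref{lem:fractional-chung-sum}. The only difference is in the inner-loop compression. The paper defines $\bar{\alpha}_{t+1}=(1-(1-2\alpha_{t+1}\kappa)^N)/\kappa$, proves $\beta_{t+1}\le\bar{\alpha}_{t+1}\le 2\beta_{t+1}$ via Bernoulli's inequality and an alternating binomial argument, and applies Young's inequality with $p=\bar{\alpha}_{t+1}\kappa/2$. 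You instead relax $(1-2\alpha_{t+1}\kappa)^N\le e^{-2\beta_{t+1}\kappa}\le 1-\beta_{t+1}\kappa$ and bound the geometric noise sum by $N$ up front, which yields the same recurrence a bit more directly.
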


\begin{prf}
(i) \textit{Step-Size Validity.} We first verify that the initial offset $t_0$ guarantees the condition $\alpha_{t+1} \kappa \le \frac{1}{2N}$ for all $t \ge 0$, which is equivalent to $\beta_{t+1} \kappa \le \frac{1}{2}$. This condition holds because $\beta_{t+1} \kappa \le \beta_1 \kappa \le \frac{2c_\beta \kappa}{t_0^{2/3}} \le \frac{1}{2}$, due to the requirement $t_0 \ge (4c_\beta \kappa)^{3/2}$. Additionally, the condition $t_0 \ge c_\eta \tau$ guarantees that the actor step size satisfies $\eta_t \le \frac{c_\eta}{t_0} \le 1/\tau$ for all $t \ge 0$, which ensures the actor update direction is bounded by $G_{ac}$ via Lemma~\ref{lem:structural-bounds}(ii).

(ii) \textit{Exact Inner-Loop Contraction.}
Consider outer loop $t+1$, which uses the critic step size $\alpha_{t+1}$ and fixes the target $\theta^*_\tau(\omega_{t+1})$. Let
$e_{t+1}^{(j)} \triangleq \theta_{t+1}^{(j)} - \theta^*_\tau(\omega_{t+1})$ denote the tracking error at the $j$-th inner-loop SGD step, and let $\mathcal{F}_{t+1}^{(j)}$ denote the filtration including the history up to this step. As derived in Lemma~\ref{lem:critic-tracking-pl}, a single inner SGD step satisfies the exact contraction:
$$ \mathbb{E}[\|e_{t+1}^{(j+1)}\|_2^2 \mid \mathcal{F}_{t+1}^{(j)}] \le (1 - 2\alpha_{t+1} \kappa) \|e_{t+1}^{(j)}\|_2^2 + \alpha_{t+1}^2 G_{cr}^2. $$
We take the total expectation conditioned on the outer-loop filtration $\mathcal{F}_{t+1}$ (which fixes the actor parameter $\omega_{t+1}$ and the warm-start initialization $\theta_{t+1}^{(0)} = \theta_{t+1}$). Unrolling this exact SGD contraction over $N$ steps, the terminal expected error satisfies:
$$ \mathbb{E}[\|e_{t+1}^{(N)}\|_2^2 \mid \mathcal{F}_{t+1}] \le (1 - 2\alpha_{t+1}\kappa)^N \|\tilde{e}_t\|_2^2 + \alpha_{t+1}^2 G_{cr}^2 \sum_{j=0}^{N-1}(1 - 2\alpha_{t+1}\kappa)^j, $$
where $\tilde{e}_t \triangleq \theta_{t+1} - \theta^*_\tau(\omega_{t+1})$.
Note that $e_{t+1}^{(0)} = \theta_{t+1}^{(0)} - \theta^*_\tau(\omega_{t+1}) = \theta_{t+1} - \theta^*_\tau(\omega_{t+1}) = \tilde{e}_t$,
and  $e_{t+1}^{(N)} = \theta_{t+1}^{(N)} - \theta^*_\tau(\omega_{t+1}) = \theta_{t+2} - \theta^*_\tau(\omega_{t+1}) = e_{t+1}$.
Defining $\bar{\alpha}_{t+1} \triangleq \frac{1 - (1 - 2\alpha_{t+1} \kappa)^N}{\kappa}$, we have
$$ \sum_{j=0}^{N-1} (1 - 2\alpha_{t+1} \kappa)^j = \frac{1 - (1 - 2\alpha_{t+1} \kappa)^N}{2\alpha_{t+1}\kappa} = \frac{\bar{\alpha}_{t+1}}{2\alpha_{t+1}}, $$
so that the preceding inequality reduces to:
$$ \mathbb{E}[\|e_{t+1} \|_2^2 \mid \mathcal{F}_{t+1}] \le (1 - \bar{\alpha}_{t+1} \kappa) \|\tilde{e}_t\|_2^2 + \frac{\alpha_{t+1}\bar{\alpha}_{t+1}}{2} G_{cr}^2. $$

(iii) \textit{Young's Inequality and Controlling Target Drift.}
We separate the outer-loop tracking error $e_t = \theta_{t+1} - \theta^*_\tau(\omega_t)$ from the target drift:
$$ \tilde{e}_t = e_t + \big(\theta^*_\tau(\omega_t) - \theta^*_\tau(\omega_{t+1})\big). $$
Applying Young's inequality $\|x+y\|_2^2 \le (1+p)\|x\|_2^2 + (1+1/p)\|y\|_2^2$ with parameter $p = \bar{\alpha}_{t+1}\kappa/2$ yields:
$$ \|\tilde{e}_t\|_2^2 \le \left(1+\frac{\bar{\alpha}_{t+1}\kappa}{2}\right)\|e_t\|_2^2 + \left(1+\frac{2}{\bar{\alpha}_{t+1}\kappa}\right)\|\theta^*_\tau(\omega_t) - \theta^*_\tau(\omega_{t+1})\|_2^2. $$
The multiplier $1-\bar{\alpha}_{t+1}\kappa = (1-2\alpha_{t+1} \kappa)^N$ is non-negative, because
$2\alpha_{t+1} \kappa \le \frac{1}{N} \le 1$ by Step (i).
This allows us to substitute the Young bound into the contraction from Step (ii). Using
$(1-\bar{\alpha}_{t+1}\kappa) (1+\frac{\bar{\alpha}_{t+1}\kappa}{2}) \le 1-\frac{\bar{\alpha}_{t+1}\kappa}{2}$
and
$ (1-\bar{\alpha}_{t+1}\kappa) (1+\frac{2}{\bar{\alpha}_{t+1}\kappa} ) \le \frac{2}{\bar{\alpha}_{t+1}\kappa}$
together with the Lipschitz target-drift bound
$$ \|\theta^*_\tau(\omega_t) - \theta^*_\tau(\omega_{t+1})\|_2^2 \le L_\theta^2 \|\omega_{t+1}-\omega_t\|_2^2 \le L_\theta^2 \eta_t^2 G_{ac}^2, $$
and taking the total expectation $\mathbb{E} [\cdot]$ yields the single-step outer-loop recurrence:
$$ Z_{t+1} \le \left(1-\frac{\bar{\alpha}_{t+1}\kappa}{2}\right) Z_t + \frac{\alpha_{t+1}\bar{\alpha}_{t+1}}{2} G_{cr}^2 + \frac{2}{\bar{\alpha}_{t+1}\kappa} L_\theta^2 \eta_t^2 G_{ac}^2. $$

(iv) \textit{Bounding $\bar{\alpha}_{t+1}$ and Extracting $K_N$.} We bound $\bar{\alpha}_{t+1}$ against the effective outer schedule $\beta_{t+1} = N\alpha_{t+1}$. Let $x=2\alpha_{t+1}\kappa$, which, by Step (i), satisfies $Nx=2\beta_{t+1}\kappa\le1$.
Because $x\le1$, Bernoulli's inequality yields $(1-x)^N \ge 1-Nx$, ensuring
$$ \bar{\alpha}_{t+1}\kappa = 1-(1-x)^N \le Nx = 2\beta_{t+1}\kappa . $$
Hence $\bar{\alpha}_{t+1} \le 2\beta_{t+1}$.
Conversely, we bound the Binomial expansion $(1-x)^N = \sum_{k=0}^N \binom{N}{k} (-x)^k$. Letting $a_k = \binom{N}{k}x^k$, the ratio of adjacent magnitudes is $\frac{a_{k+1}}{a_k} = \frac{N-k}{k+1}x \le \frac{Nx}{k+1}$. Because $Nx \le 1$, this ratio satisfies $\frac{a_{k+1}}{a_k} \le \frac{1}{k+1} < 1$ for all $k \ge 1$. Since the terms decrease in magnitude, grouping the alternating remainder terms starting from $k=3$ yields a non-positive sum: $(-a_3 + a_4) + (-a_5 + a_6) + \dots \le 0$. Thus, $(1-x)^N \le 1 - Nx + \frac{N(N-1)}{2}x^2$. Bounding $N(N-1) \le N^2$ gives $(1-x)^N \le 1 - Nx + \frac{N^2 x^2}{2} = 1 - Nx(1 - \frac{Nx}{2})$.
Because $Nx\le 1$, we have $1 - \frac{Nx}{2} \ge \frac{1}{2}$, yielding
$$  \bar{\alpha}_{t+1}\kappa = 1 - (1-x)^N \ge \frac{1}{2} Nx = \beta_{t+1}\kappa .$$
Thus, $ \beta_{t+1} \le \bar{\alpha}_{t+1} \le 2\beta_{t+1}$.

Substituting these bounds into the recurrence yields:
$$ Z_{t+1} \le \left(1-\frac{1}{2}\beta_{t+1}\kappa\right) Z_t + \frac{\beta_{t+1}^2}{N} G_{cr}^2 + \frac{2}{\beta_{t+1}\kappa} L_\theta^2 \eta_t^2 G_{ac}^2. $$
Substituting the step-size schedules $\beta_{t+1} = \frac{2c_\beta}{(t+t_0)^{2/3}}$ and $\eta_t = \frac{c_\eta}{t+t_0}$, the contraction multiplier evaluates to $1-\frac{c_\beta\kappa}{(t+t_0)^{2/3}}$. The additive noise terms evaluate to:
$$ \frac{1}{N}\frac{4c_\beta^2}{(t+t_0)^{4/3}}G_{cr}^2 + \frac{2(t+t_0)^{2/3}}{2c_\beta\kappa}L_\theta^2\frac{c_\eta^2}{(t+t_0)^2}G_{ac}^2 = \frac{\frac{4c_\beta^2G_{cr}^2}{N}+\frac{L_\theta^2c_\eta^2G_{ac}^2}{c_\beta\kappa}}{(t+t_0)^{4/3}} = \frac{K_N}{(t+t_0)^{4/3}}, $$
explicitly extracting our defined constant $K_N$.

(v) \textit{Pointwise and Average Bounds.}
The sequence satisfies the single-step recurrence:
$$ Z_{t+1} \le \left(1-\frac{c_\beta\kappa}{(t+t_0)^{2/3}}\right) Z_t + \frac{K_N}{(t+t_0)^{4/3}}. $$
To invoke Fractional Chung's Lemma (Lemma~\ref{lem:fractional-chung}) with the contraction coefficient $c=c_\beta\kappa$, we verify that the initial offset satisfies $t_0 \ge \max\left((c_\beta\kappa)^{3/2}, (\frac{4}{3c_\beta\kappa} )^3\right)$. The term $(c_\beta\kappa)^{3/2}$ is subsumed by the stated $(4c_\beta\kappa)^{3/2}$ requirement, and $ (\frac{4}{3c_\beta\kappa} )^3$ is directly enveloped by the assumed lower bound on $t_0$. Thus, Lemma~\ref{lem:fractional-chung} yields the pointwise bound $Z_t \le \frac{v}{(t+t_0)^{2/3}}$ with $v = \max\left(t_0^{2/3} Z_0,\frac{2K_N}{c_\beta\kappa}\right)$.
Furthermore,
by the initial offset condition $t_0 \ge \max\left(1, (\frac{4}{3c_\beta\kappa} )^3\right)$,
applying the Fractional Chung's Summation Bound (Lemma~\ref{lem:fractional-chung-sum}) to the identical sequence directly yields the average bound:
$$ \frac{1}{T}\sum_{t=0}^{T-1}Z_t \le \frac{2t_0^{2/3}}{c_\beta\kappa T}Z_0 + \frac{6K_N}{c_\beta\kappa T}(T+t_0)^{1/3}, $$
completing the proof.
\end{prf}

\subsection{Warm-Start Average-Iterate Regularized Convergence}

We now integrate the warm-start tracking bound into our PMD analysis. By substituting the effective fractional noise envelope into the PMD Value-Telescoping Bound (Lemma~\ref{lem:telescoping-pmd}), we establish the average-iterate convergence rate for the regularized objective. This demonstrates how the warm-start architecture separates the total algorithmic error into an invariant primary tracking component and rapidly decaying transient optimization terms.

\begin{thm}[Warm-Start Average-Iterate Regularized Convergence] \label{thm:warm-start-reg}
For Algorithm~\ref{alg:warm-start-nac}, suppose that Assumptions~\ref{ass:bounded-features}--\ref{ass:approximation-error} and \ref{ass:density-ratios} hold.
For independent constants $c_\eta \ge \frac{4}{3\tau}$ and $\zeta >0$, define the base schedule constant $c_\beta \triangleq \frac{1}{2} \zeta c_\eta^{2/3} N^{1/3}$.
Let the actor step size be $\eta_t=\frac{c_\eta}{t+t_0}$. Let $\alpha_0$ be arbitrary, and for all $t\ge0$, let the subsequent inner critic step sizes satisfy $\alpha_{t+1}=N^{-1}\beta_{t+1}$ with the effective outer schedule $\beta_{t+1}=\frac{2c_\beta}{(t+t_0)^{2/3}}$, and let the initial offset satisfy
$ t_0 \ge \max\left(c_\eta \tau, (4c_\beta \kappa)^{3/2}, (\frac{4}{3c_\beta \kappa} )^3 \right)$.
By tuning the constants to the asymptotic orders $c_\eta = \Theta(\tau^{-1})$, $\zeta = \Theta(\kappa^{-1})$, and $t_0 = \Theta(N^{1/2} \tau^{-1})$ subject to the stated conditions, the expected average regularized performance gap for the warm-start double-loop algorithm satisfies:
\begin{align*}
\frac{1}{T} \sum_{t=0}^{T-1} \mathbb{E}[\text{Gap}_t^\dag] \le &\underbrace{ \mathcal{O}\left( \frac{1}{(1-\gamma)^7 \kappa^6 \tau^{5/3} T_{total}^{2/3}} \right) }_{\text{Invariant Tracking Error}} + \underbrace{ \mathcal{O}\left( \frac{\epsilon_{app}}{(1-\gamma)\tau} \right) }_{\text{Approximation Bias}} \\
&+ \underbrace{ \tilde{\mathcal{O}}\left( \frac{N^{1/2}}{(1-\gamma)^2 \kappa T} + \frac{N^{-1/2}}{(1-\gamma)^7 \kappa^6 \tau^2 T} + \frac{1}{(1-\gamma)^3 \kappa^2 \tau T} \right) }_{\text{Transient Convergence Error}}.
\end{align*}
\end{thm}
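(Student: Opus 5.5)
The proof reuses the single-loop PMD machinery verbatim on the actor side and swaps in the warm-start tracking bound on the critic side. The key observation is that the actor update in Algorithm~\ref{alg:warm-start-nac} is identical to Algorithm~\ref{alg:uncentered-nac}: it still uses $\omega_{t+1}=\omega_t+\eta_t(\theta_{t+1}-\tau\omega_t)$ with $\|\theta_{t+1}\|_2\le B_\theta$, and $\theta_{t+1}$ tracks $\theta^*_\tau(\omega_t)$. Hence Lemma~\ref{lem:structural-bounds}(ii), Lemma~\ref{lem:pmd-progress}, the PMD Actor Progress Bound (Lemma~\ref{lem:pmd-telescoping-l2}) and the PMD Value-Telescoping Bound (Lemma~\ref{lem:telescoping-pmd}) hold unchanged, with $Z_t=\mathbb{E}[\|\theta_{t+1}-\theta^*_\tau(\omega_t)\|_2^2]$. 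The condition $t_0\ge c_\eta\tau$ together with $c_\eta\ge\frac{4}{3\tau}$ gives $\eta_t\le 1/\tau$. This yields
$$(1-\gamma)\frac1T\sum_{t<T}\mathbb{E}[\text{Gap}_t^\dag]\le \frac{t_0-1}{c_\eta T}D_0^\dag+\frac{c_\eta G_{ac}^2B_\phi^2}{2T}(1+\log T)+\frac{4C_{joint}^\dag}{\tau}\epsilon_{app}+\frac{4B_\phi^2}{\tau}\cdot\frac1T\sum_{t<T}Z_t.$$

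For the last term I will \emph{not} sum the pointwise bound. Instead I apply the average bound of Lemma~\ref{lem:warm-start-critic}, whose offset conditions are exactly those assumed, with $t_0\ge1$ implied by $t_0\ge c_\eta\tau\ge 4/3$. This gives
$$\frac{4B_\phi^2}{\tau}\Big(\frac{2t_0^{2/3}Z_0}{c_\beta\kappa T}+\frac{6K_N}{c_\beta\kappa T}(T+t_0)^{1/3}\Big),$$
and I then use $(T+t_0)^{1/3}\le T^{1/3}+t_0^{1/3}$. The remaining work is order bookkeeping with $c_\eta=\Theta(\tau^{-1})$ and $\zeta\kappa=\Theta(1)$.

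\textbf{Constants.} With these choices, $c_\beta\kappa=\Theta(\tau^{-2/3}N^{1/3})$. In $K_N$, the SGD part is $4c_\beta^2G_{cr}^2/N=\Theta(\tau^{-4/3}N^{-1/3}\kappa^{-4}(1-\gamma)^{-2})$. The drift part is $L_\theta^2c_\eta^2G_{ac}^2/(c_\beta\kappa)=\Theta(\tau^{-4/3}N^{-1/3}\kappa^{-6}(1-\gamma)^{-6})$. Both carry $N^{-1/3}$, which is precisely why $c_\beta$ was scaled by $N^{1/3}$. Hence $K_N/(c_\beta\kappa)=\Theta(\tau^{-2/3}N^{-2/3}\kappa^{-6}(1-\gamma)^{-6})$.

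\textbf{Tracking terms.} The primary tracking term $\frac{1}{\tau}\frac{K_N}{c_\beta\kappa}T^{-2/3}$ becomes $\Theta(\tau^{-5/3}\kappa^{-6}(1-\gamma)^{-6}(NT)^{-2/3})$. This is the invariant $T_{total}^{-2/3}$ term. The cross term with $t_0^{1/3}=\Theta(N^{1/6}\tau^{-1/3})$ gives $\Theta(\tau^{-2}\kappa^{-6}(1-\gamma)^{-6}N^{-1/2}T^{-1})$.

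\textbf{Offset.} Next I check the offset: $(4c_\beta\kappa)^{3/2}=\Theta(N^{1/2}\tau^{-1})$ dominates $c_\eta\tau=\Theta(1)$. Also $(4/(3c_\beta\kappa))^3=\Theta(\tau^2/N)$ is bounded since $\tau\le\tau_{max}$. So $t_0=\Theta(N^{1/2}\tau^{-1})$ is admissible.

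\textbf{Initialization and local errors.} The critic initialization term is $\frac{1}{\tau}\frac{t_0^{2/3}Z_0}{c_\beta\kappa T}$ with $Z_0\le G_{ac}^2$. It evaluates to $\Theta(\tau^{-1}N^{1/3}\tau^{-2/3}\cdot\tau^{2/3}N^{-1/3}\kappa^{-2}(1-\gamma)^{-2}T^{-1})=\Theta(\tau^{-1}\kappa^{-2}(1-\gamma)^{-2}T^{-1})$. The PMD local error is $\Theta(\tau^{-1}\kappa^{-2}(1-\gamma)^{-2}T^{-1}\log T)$. The PMD initialization error is $\frac{t_0}{c_\eta T}D_0^\dag=\Theta(N^{1/2}T^{-1}\kappa^{-1}(1-\gamma)^{-1})$, using $D_0^\dag=\mathcal{O}(\kappa^{-1}(1-\gamma)^{-1})$ from the proof of Theorem~\ref{thm:reg-conv-l2}. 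Dividing everything by $(1-\gamma)$ produces the three transient terms and the invariant and bias terms as stated.

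\textbf{Main obstacle.} The delicate step is keeping the initialization footprint from inflating with $N$. If I summed the pointwise envelope instead, the $t_0^{2/3}Z_0$ term would acquire an extra $c_\beta\kappa T^{1/3}$ factor (Remark~\ref{rem:comparison-pointwise}), spoiling invariance. So the crux is using Lemma~\ref{lem:fractional-chung-sum} via Lemma~\ref{lem:warm-start-critic}, and confirming that the $N$-powers cancel in that term and in $K_N/(c_\beta\kappa)$.
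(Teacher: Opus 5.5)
Your proposal is correct and follows the same route as the paper's proof. In both, the averaged warm-start tracking bound of Lemma~\ref{lem:warm-start-critic} (obtained via Lemma~\ref{lem:fractional-chung-sum}) is substituted into the PMD Value-Telescoping Bound, $(T+t_0)^{1/3}$ is split, and the orders are tracked using $c_\beta\kappa=\Theta(N^{1/3}\tau^{-2/3})$, $K_N=\Theta(N^{-1/3}\tau^{-4/3}\kappa^{-6}(1-\gamma)^{-6})$ and $t_0=\Theta(N^{1/2}\tau^{-1})$. Your evaluation of each term matches the paper's, including the cancellation $t_0^{2/3}/(c_\beta\kappa)=\Theta(1)$ that keeps the critic-initialization penalty at $\mathcal{O}(Z_0/(\tau T))$.
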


\begin{prf}
(i) \textit{Telescoping Expansion.} By substituting the average tracking bound from Lemma~\ref{lem:warm-start-critic} into the PMD Value-Telescoping Bound (Lemma~\ref{lem:telescoping-pmd}) with $c_\eta \ge \frac{4}{3\tau}$, the expected average regularized performance gap expands as:
\begin{align*}
(1-\gamma) \frac{1}{T} \sum_{t=0}^{T-1} \mathbb{E}[\text{Gap}_t^\dag] &\le \underbrace{ \frac{t_0-1}{c_\eta T} D_0^\dag }_{\text{PMD Initialization Error}} + \underbrace{ \frac{c_\eta G_{ac}^2 B_\phi^2}{2 T} (1 + \log T) }_{\text{PMD Local Error}} \\
&\quad + \underbrace{ \frac{4 C_{joint}^\dag}{\tau} \epsilon_{app} }_{\text{Approximation Bias}}
+ \underbrace{ \frac{8 B_\phi^2 t_0^{2/3}}{\tau c_\beta\kappa T} Z_0 }_{\text{Initialization Tracking Penalty}} + \underbrace{ \frac{24 B_\phi^2 K_N}{\tau c_\beta\kappa T} (T+t_0)^{1/3} }_{\text{Fractional Tracking Penalty}}.
\end{align*}
Since $(T+t_0)^{1/3} =\Theta( T^{1/3} + t_0^{1/3})$, the fractional tracking penalty decomposes into a $\Theta(\frac{K_N}{\tau c_\beta\kappa T^{2/3}})$ primary term and a $\Theta(\frac{K_N t_0^{1/3}}{\tau c_\beta\kappa T})$ secondary transient term.

(ii) \textit{Invoking Average Tracking Bound.} By choosing $\zeta = \Theta(\kappa^{-1})$ and setting $c_\eta = \Theta(\tau^{-1})$, the effective contraction coefficient scales as $c_\beta\kappa = \Theta(N^{1/3} \tau^{-2/3})$.
Note that $t_0 \ge 1$ is guaranteed by $t_0 \ge c_\eta \tau$ and $c_\eta \ge \frac{4}{3\tau}$.
To satisfy the initial offset condition $t_0 \ge \max \left(c_\eta \tau, (4c_\beta \kappa)^{3/2}, (\frac{4}{3c_\beta\kappa})^3 \right)$, we verify the dominant term. For $\tau \le \tau_{max}$, the term $(4c_\beta \kappa)^{3/2} = \Theta(N^{1/2} \tau^{-1})$ dominates $(\frac{4}{3c_\beta\kappa})^3 = \Theta(N^{-1} \tau^2)$ as well as $c_\eta \tau = \Theta(1)$. This allows us to fulfill the requirement by setting $t_0 = \Theta(N^{1/2} \tau^{-1})$.

(iii) \textit{Structural Order of Noise Envelope.} We evaluate the noise envelope $K_N$. By Lemmas~\ref{lem:structural-bounds} and \ref{lem:lipschitz-critic}, the structural bounds scale as $G_{cr}^2 = \Theta(\kappa^{-2}(1-\gamma)^{-2})$, $G_{ac} = \Theta(\kappa^{-1}(1-\gamma)^{-1})$, and $L_\theta^2 = \Theta(\kappa^{-4}(1-\gamma)^{-4})$.
Substituting these alongside our choices $c_\eta = \Theta(\tau^{-1})$ and $c_\beta = \Theta(\kappa^{-1} \tau^{-2/3} N^{1/3})$, we evaluate the two additive components of $K_N$ separately.
The inner-loop SGD noise component evaluates to:
$$ \frac{4 c_\beta^2 G_{cr}^2}{N} = \Theta\left( N^{-1/3} \tau^{-4/3} \kappa^{-4} (1-\gamma)^{-2} \right). $$
The actor target drift component evaluates to:
$$ \frac{L_\theta^2 c_\eta^2 G_{ac}^2}{c_\beta \kappa} = \Theta\left( N^{-1/3} \tau^{-4/3} \kappa^{-6} (1-\gamma)^{-6} \right). $$
While both terms share an identical $\Theta(N^{-1/3} \tau^{-4/3})$ dependence, the second term dominates in its dependence on $\kappa$ and $(1-\gamma)^{-1}$.
Combining them yields the final noise envelope:
$$ K_N = \Theta\left( N^{-1/3} \tau^{-4/3} \kappa^{-6} (1-\gamma)^{-6} \right). $$

(iv) \textit{Synthesis.} We substitute the evaluated orders back into the tracking and PMD penalties:
\begin{itemize}
    \item \textit{Initialization Tracking Penalty:} The geometric coefficient evaluates to $\frac{t_0^{2/3}}{c_\beta\kappa} = \frac{\Theta(N^{1/3} \tau^{-2/3})}{\Theta(N^{1/3} \tau^{-2/3})} = \Theta(1)$. Thus, the initialization tracking penalty evaluates to $\Theta\left( \frac{Z_0}{\tau T} \right)$.
        As shown in the proof of Theorem~\ref{thm:average-iterate},
        the initial tracking error $Z_0 = \mathbb{E}[\|\theta_1 - \theta^*_\tau(\omega_0)\|_2^2]$ is bounded by $Z_0 \le (2B_\theta)^2 = G_{ac}^2 = \Theta(\kappa^{-2}(1-\gamma)^{-2})$. Substituting this bound and dividing by $1-\gamma$, the penalty yields $\mathcal{O}\left( \frac{1}{(1-\gamma)^3 \kappa^2 \tau T} \right)$, which is subsumed by the PMD local error $\mathcal{O}\left( \frac{\log T}{(1-\gamma)^3 \kappa^2 \tau T} \right)$.

    \item \textit{Invariant Tracking Error:} The primary fractional tracking penalty evaluates to $\Theta\left( \frac{K_N}{\tau c_\beta\kappa T^{2/3}} \right) = \Theta\left( \frac{N^{-2/3}}{\tau^{5/3} \kappa^6 (1-\gamma)^6 T^{2/3}} \right)$.
    Because $N^{2/3} T^{2/3} = (N \times T)^{2/3} = T_{total}^{2/3}$, the individual loop sizes combine into the total size $T_{total}$. After dividing the entire inequality by $1-\gamma$, this yields the invariant tracking error envelope $\Theta\left( \frac{1}{(1-\gamma)^7 \kappa^6 \tau^{5/3} T_{total}^{2/3}} \right)$.

    \item \textit{Transient Convergence Error:} The secondary fractional tracking penalty divided by $1-\gamma$ evaluates to $\Theta\left( \frac{K_N t_0^{1/3}}{(1-\gamma) \tau c_\beta\kappa T} \right)= \Theta\left( \frac{ N^{-1/2} }{(1-\gamma)^7 \kappa^6 \tau^2 T} \right)$.
    As shown in the proof of Theorem \ref{thm:reg-conv-l2}, the initial KL divergence is bounded by $D_0^\dag \le \frac{2B_\theta B_\phi}{\tau_{max}} + \log|\mathcal{A}| = \mathcal{O}(\kappa^{-1}(1-\gamma)^{-1})$. Since $t_0/c_\eta=\Theta(N^{1/2})$, the PMD initialization error divided by $1-\gamma$ evaluates to $\mathcal{O}\left( \frac{t_0 D_0^\dag}{(1-\gamma)c_\eta T} \right)=\mathcal{O}\left(\frac{N^{1/2}}{(1-\gamma)^2\kappa T}\right)$.
    Finally, dividing the PMD local error by $1-\gamma$ yields $\Theta\left( \frac{\log T}{(1-\gamma)^3 \kappa^2 \tau T} \right)$.
\end{itemize}
Combining the invariant tracking error envelope with these transient optimization terms and the $\Theta(\tau^{-1} \epsilon_{app})$ approximation bias completes the proof.
\end{prf}

\begin{rem}[Advantage Over Pointwise Summation and Last-Iterate Collapse] \label{rem:pointwise-collapse}
The proof of Theorem~\ref{thm:warm-start-reg} fundamentally relies on the average tracking error derived from the Fractional Chung's Summation Bound (Lemma~\ref{lem:fractional-chung-sum}). Alternatively, if we were to sum the pointwise tracking error $Z_t \le v(t+t_0)^{-2/3}$ derived from the standard Fractional Chung's Lemma, the resulting bound would include a heavily inflated penalty.

Specifically, the pointwise bounding coefficient is
$v \triangleq
\max\left(
t_0^{2/3}Z_0,\,
\frac{2K_N}{c_\beta\kappa}
\right)$, with an initialization contribution $t_0^{2/3} Z_0$.
Summing the pointwise bound via an integral ($\sum_{t=0}^{T-1} (t+t_0)^{-2/3} \le 3(T+t_0)^{1/3}$) yields
$\frac{1}{T}\sum_{t=0}^{T-1} Z_t
=\mathcal{O}\big(
\frac{v(T+t_0)^{1/3}}{T}\big)$.
Using $(T+t_0)^{1/3}\le T^{1/3}+t_0^{1/3}$, the component proportional to $T^{1/3}$ gives an average tracking penalty of $\mathcal{O}\big(\frac{v}{\tau T^{2/3}}\big)$,
with an initialization contribution $\mathcal{O}\big(\frac{t_0^{2/3}Z_0}{\tau T^{2/3}}\big)$, in the performance gap bound.
As evaluated in Step (ii) of the proof of Theorem~\ref{thm:warm-start-reg},
the offset condition $t_0 \ge (4c_\beta \kappa)^{3/2}$ (ensuring a stable inner-loop contraction) requires an initial offset $t_0 = \Theta(N^{1/2} \tau^{-1})$.
By substituting this and the outer-loop size $T = T_{total}/N$, the initialization contribution to the average tracking penalty becomes
$$
\mathcal{O}\left(
\frac{N^{1/3}Z_0}{\tau^{5/3}T^{2/3}}
\right) =
\mathcal{O}\left(
\frac{NZ_0}
{\tau^{5/3}T_{total}^{2/3}}
\right).$$
In contrast, by utilizing summation by parts, Lemma~\ref{lem:fractional-chung-sum} captures the exponential decay of the initial error, decoupling it from the harmonic integral and bounding its contribution to the average penalty by $\mathcal{O}\big(\frac{t_0^{2/3} Z_0}{c_\beta\kappa \tau T}\big)$. Because $t_0^{2/3}$ and the effective contraction rate $c_\beta\kappa$ both scale as $\Theta(N^{1/3} \tau^{-2/3})$, the average penalty reduces to $\mathcal{O}\big(\frac{Z_0}{\tau T}\big) = \mathcal{O}\big(\frac{N Z_0}{\tau T_{total}}\big)$.
The pointwise approach inflates this penalty by a factor of $c_\beta\kappa T^{1/3} = \Theta(T_{total}^{1/3} \tau^{-2/3})$,
consistent with Remark~\ref{rem:comparison-pointwise}.

Under the Minimal Action Gap, in the sample-limited phase, the temperature is tuned to $\tau = \Theta(1/\log T_{total})$ to achieve the $\tilde{\mathcal{O}}(T_{total}^{-2/3})$ average-iterate unregularized rate.
The initialization component of the pointwise bound, $\mathcal{O}\big(N Z_0 \tau^{-5/3} T_{total}^{-2/3}\big)$, becomes $\tilde{\mathcal{O}}\big(N Z_0 T_{total}^{-2/3}\big)$. For a critic initialization with $Z_0$ bounded away from zero, keeping this term within the $\tilde{\mathcal{O}}(T_{total}^{-2/3})$ target rate requires $N = \tilde{\Theta}(1)$, causing the invariance window to collapse. In contrast, Lemma~\ref{lem:fractional-chung-sum} safely bounds the initialization penalty by $\tilde{\mathcal{O}}(N Z_0 T_{total}^{-1})$, allowing $N$ to grow unbounded without breaking the target rate envelope (Section~\ref{app:warm-start-exponential}).

Likewise, relying on the pointwise tracking error to establish the \textit{last-iterate} regularized convergence---which involves evaluating the terminal error $Z_T \le v T^{-2/3}$ directly to establish the convergence of the forward KL divergence---faces the same structural bottleneck, leading to a similar collapse of the last-iterate invariance window to $N = \tilde{\Theta}(1)$.
\end{rem}

\subsection{Warm-Start Unregularized Convergence (Minimal Action Gap)} \label{app:warm-start-exponential}

Building upon the regularized guarantees, we map the warm-start performance gap to the unregularized objective for Algorithm~\ref{alg:warm-start-nac}. By leveraging the Universal Exponential Translation Bound and deploying a two-stage temperature schedule, we balance the invariant tracking error against an exponentially small target tail. This mechanism bypasses the standard linear entropy bottleneck, extracting an accelerated $\tilde{\mathcal{O}}(T_{total}^{-2/3})$ average-iterate unregularized convergence rate that remains invariant for any intermediate inner-loop choice up to $N \le T_{total}^{2/9}$.

\begin{thm}[Warm-Start Exponential Translation Convergence] \label{thm:warm-start-exponential}
For Algorithm~\ref{alg:warm-start-nac}, suppose that Assumptions~\ref{ass:bounded-features}--\ref{ass:action-gap} and \ref{ass:density-ratios} hold.
Define the two-stage temperature:
$$ \tau_{T_{total}} \triangleq \max\left( \frac{\Delta}{2\log(C_\gamma T_{total}^{2/3})}, \frac{\Delta}{2\log(C_\gamma(1+\epsilon_{app}^{-1}))} \right). $$
For any intermediate inner-loop choice $N \in [1,T_{total}^{2/9}]$ and $T=T_{total}/N$, if $T_{total}$ is sufficiently large and $\epsilon_{app}$ is sufficiently small such that $\tau_{T_{total}}\le\tau_{max}$, then by choosing the step-size parameters as in Theorem~\ref{thm:warm-start-reg} evaluated at $\tau=\tau_{T_{total}}$, the expected average unregularized performance gap satisfies the convergence rate:
\begin{align*}
\frac{1}{T}\sum_{t=0}^{T-1}\mathbb{E}\big[J_0(\pi^*_0)-J_0(\pi_t)\big]
&\le \mathcal{O}\left(\frac{(1-\gamma)^{-5/3}+(\log T_{total})^{5/3}}{(1-\gamma)^8\kappa^6\Delta^{8/3}T_{total}^{2/3}}\right) \\
&\quad + \mathcal{O}\left(\frac{(1-\gamma)^{-1}+\log(1+\epsilon_{app}^{-1})}{(1-\gamma)^2\Delta^2}\epsilon_{app}\right) \\
&= \tilde{\mathcal{O}}\left(T_{total}^{-2/3}\right)+\tilde{\mathcal{O}}(\epsilon_{app}).
\end{align*}
In the case of $\epsilon_{app}=0$, we interpret $\tau_{T_{total}}=\frac{\Delta}{2\log(C_\gamma T_{total}^{2/3})}$ and $\epsilon_{app}\log(1+\epsilon_{app}^{-1})=0$.
\end{thm}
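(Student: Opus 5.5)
The plan mirrors the proof of Theorem~\ref{thm:PMD-unreg-average}, with Theorem~\ref{thm:warm-start-reg} used in place of Theorem~\ref{thm:reg-conv-l2}.

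\textbf{Step 1: translate to the regularized gap.} Apply Corollary~\ref{cor:universal-suboptimal-mass} pointwise to each $\pi_t$ at temperature $\tau=\tau_{T_{total}}$, then average over $t<T$. This gives
\[
\frac{1}{T}\sum_{t=0}^{T-1}\mathbb{E}\big[J_0(\pi^*_0)-J_0(\pi_t)\big]
\le \frac{4A_{max}}{\Delta}\cdot\frac{1}{T}\sum_{t=0}^{T-1}\mathbb{E}[\text{Gap}^\dag_t]
+C_{tail}\exp\!\left(-\frac{\Delta}{2\tau_{T_{total}}}\right).
\]
Substituting the warm-start bound from Theorem~\ref{thm:warm-start-reg} splits the right-hand side into four pieces: the invariant tracking error $\mathcal{O}\big((1-\gamma)^{-8}\kappa^{-6}\Delta^{-1}\tau^{-5/3}T_{total}^{-2/3}\big)$, the approximation bias $\mathcal{O}\big(\epsilon_{app}/((1-\gamma)^2\Delta\tau)\big)$, the entropy tail, and the three transient terms, each carrying an extra factor $A_{max}/\Delta\le 1/((1-\gamma)\Delta)$.

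\textbf{Step 2: two-regime temperature analysis.} In the sample-limited phase, $\log(T_{total}^{2/3})\le\log(1+\epsilon_{app}^{-1})$, so $\tau=\Delta/(2\log(C_\gamma T_{total}^{2/3}))$. Using $\log C_\gamma=\mathcal{O}((1-\gamma)^{-1})$, the quantity $\tau^{-5/3}$ turns into $\Delta^{-5/3}\big((1-\gamma)^{-5/3}+(\log T_{total})^{5/3}\big)$. The tail collapses to $1/((1-\gamma)^2T_{total}^{2/3})$, and the bias is bounded through the regime condition exactly as before.

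In the approximation-limited phase, $\tau$ locks at $\Delta/(2\log(C_\gamma(1+\epsilon_{app}^{-1})))$. The tail becomes at most $\epsilon_{app}/(1-\gamma)^2$, and the finite-time terms are bounded by substituting $\log(C_\gamma(1+\epsilon_{app}^{-1}))<\log C_\gamma+\log T_{total}^{2/3}$. The case $\epsilon_{app}=0$ is the sample-limited calculation alone.

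\textbf{Step 3: absorbing the transients (the main obstacle).} I must show the transient terms are dominated by the invariant term uniformly for $N\in[1,T_{total}^{2/9}]$, using $T=T_{total}/N$ and $1/\tau=\mathcal{O}(\log$-factors$)$.
\begin{itemize}
\item The PMD initialization term $N^{1/2}/((1-\gamma)^2\kappa T)$ equals $N^{3/2}/((1-\gamma)^2\kappa T_{total})$. It is $\le T_{total}^{-2/3}$ exactly when $N^{3/2}\le T_{total}^{1/3}$, i.e.\ $N\le T_{total}^{2/9}$. This is the binding constraint that produces the window.
\item The tracking transient $N^{-1/2}/((1-\gamma)^7\kappa^6\tau^2T)$ equals $N^{1/2}\tau^{-2}/((1-\gamma)^7\kappa^6T_{total})$. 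Since $N^{1/2}\le T_{total}^{1/9}$, this is $\tilde{\mathcal{O}}(T_{total}^{-8/9})$, which is dominated.
\item The local-error term $\tilde{\mathcal{O}}\big(N/((1-\gamma)^3\kappa^2\tau T_{total})\big)$ is $\tilde{\mathcal{O}}(T_{total}^{-7/9})$, which is also dominated.
\end{itemize}
One must check that the structural constants on each transient are no worse than $(1-\gamma)^{-8}\kappa^{-6}$ after multiplying by $A_{max}/\Delta$. Any leftover $\tau^{-2}$ versus $\tau^{-5/3}$ log mismatch is absorbed because the power of $T_{total}$ is strictly smaller. Combining the two regimes then yields the stated envelope.
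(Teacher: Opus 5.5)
Your proposal is correct and follows essentially the same route as the paper's proof: translate via Corollary~\ref{cor:universal-suboptimal-mass}, substitute the bound of Theorem~\ref{thm:warm-start-reg}, split into the sample-limited and approximation-limited temperature regimes, and absorb the three transient terms. As in the paper, the PMD initialization term $N^{3/2}T_{total}^{-1}$ is the binding one that produces the window $N\le T_{total}^{2/9}$, while the tracking and local-error transients are dominated at rates $\tilde{\mathcal{O}}(T_{total}^{-8/9})$ and $\tilde{\mathcal{O}}(T_{total}^{-7/9})$.
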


\begin{prf}
We first consider $\epsilon_{app}>0$. When $\epsilon_{app}=0$, the approximation bias term vanishes and the result follows from the sample-limited calculation with $\tau_{T_{total}}=\frac{\Delta}{2\log(C_\gamma T_{total}^{2/3})}$.

(i) \textit{Global Unregularized Bound.} By applying the Universal Unregularized Suboptimality Bound (Corollary~\ref{cor:universal-suboptimal-mass}) to the warm-start average-iterate regularized gap from Theorem~\ref{thm:warm-start-reg}, the global unregularized performance gap is bounded for any $\tau>0$ by:
\begin{align*}
&\quad \frac{1}{T} \sum_{t=0}^{T-1} \mathbb{E} \big[ J_0(\pi^*_0) - J_0(\pi_t) \big] \\
\le &\underbrace{ \mathcal{O}\left( \frac{1}{\Delta(1-\gamma)^8 \kappa^6 \tau^{5/3} T_{total}^{2/3}} \right) }_{\text{Invariant Tracking Error}} + \underbrace{ \mathcal{O}\left( \frac{\epsilon_{app}}{\Delta(1-\gamma)^2\tau} \right) }_{\text{Approximation Bias}} + \underbrace{ C_{tail} \exp\left(-\frac{\Delta}{2\tau}\right) }_{\text{Entropy Tail}} \\
&+ \underbrace{ \tilde{\mathcal{O}}\left( \frac{N^{1/2}}{\Delta(1-\gamma)^3 \kappa T} + \frac{N^{-1/2}}{\Delta(1-\gamma)^8 \kappa^6 \tau^2 T} + \frac{1}{\Delta(1-\gamma)^4 \kappa^2 \tau T} \right) }_{\text{Transient Convergence Error}}.
\end{align*}

(ii) \textit{Regime 1: Sample-Limited Phase.} When $\log(T_{total}^{2/3})\le\log(1+\epsilon_{app}^{-1})$, the temperature is tuned to
$ \tau_{T_{total}}=\frac{\Delta}{2\log(C_\gamma T_{total}^{2/3})}$.
Substituting this alongside $A_{max}\le\frac{1}{1-\gamma}$, $C_{tail}\triangleq\frac{A_{max}C_\gamma}{1-\gamma}\le\frac{C_\gamma}{(1-\gamma)^2}$, and $\log C_\gamma=\mathcal{O}((1-\gamma)^{-1})$ into the leading components yields:
\begin{align*}
\text{Invariant Tracking Error}
&=\mathcal{O}\left(\frac{(\log(C_\gamma T_{total}^{2/3}))^{5/3}}{(1-\gamma)^8\kappa^6\Delta^{8/3}T_{total}^{2/3}}\right) \\
&=\mathcal{O}\left(\frac{(1-\gamma)^{-5/3}+(\log T_{total})^{5/3}}{(1-\gamma)^8\kappa^6\Delta^{8/3}T_{total}^{2/3}}\right), \\
\text{Approximation Bias}
&=\mathcal{O}\left(\frac{\log(C_\gamma T_{total}^{2/3})}{(1-\gamma)^2\Delta^2}\epsilon_{app}\right) \\
&\le\mathcal{O}\left(\frac{(1-\gamma)^{-1}+\log(1+\epsilon_{app}^{-1})}{(1-\gamma)^2\Delta^2}\epsilon_{app}\right), \\
\text{Entropy Tail}
&\le\frac{C_\gamma}{(1-\gamma)^2}\exp\left(-\log(C_\gamma T_{total}^{2/3})\right)
=\frac{1}{(1-\gamma)^2T_{total}^{2/3}}.
\end{align*}
The regime condition $\log(T_{total}^{2/3})\le\log(1+\epsilon_{app}^{-1})$ guarantees
$$ \log(C_\gamma T_{total}^{2/3})\le\log C_\gamma+\log(1+\epsilon_{app}^{-1}), $$
which gives the stated approximation-bias envelope. The $C_\gamma$ prefactor cancels in the exponential tail, yielding an $\mathcal{O}(T_{total}^{-2/3})$ tail that is absorbed by the invariant tracking error.

We next evaluate the transient errors to verify loop invariance:
\begin{itemize}
    \item \textit{PMD Initialization Error:} $\tilde{\mathcal{O}}\big(\frac{N^{1/2}}{\Delta(1-\gamma)^3\kappa T}\big)=\tilde{\mathcal{O}}\big(\frac{N^{3/2}T_{total}^{-1}}{\Delta(1-\gamma)^3\kappa}\big)$. To ensure that this error remains within the $\tilde{\mathcal{O}}(T_{total}^{-2/3})$ target envelope, it suffices to require $N^{3/2}T_{total}^{-1}\le T_{total}^{-2/3}$, establishing the invariance boundary $N\le T_{total}^{2/9}$.

    \item \textit{Secondary Fractional Tracking Penalty:} Under the tuned temperature,
    $$ \tilde{\mathcal{O}}\left(\frac{N^{-1/2}}{\Delta(1-\gamma)^8\kappa^6\tau^2T}\right)
    =\tilde{\mathcal{O}}\left(\frac{N^{1/2}T_{total}^{-1}(\log(C_\gamma T_{total}^{2/3}))^2}{\Delta^3(1-\gamma)^8\kappa^6}\right). $$
    For $N\le T_{total}^{2/9}$, this term decays as $\tilde{\mathcal{O}}(T_{total}^{-8/9})$ with respect to $T_{total}$ and is therefore dominated by the $\tilde{\mathcal{O}}(T_{total}^{-2/3})$ target envelope.

    \item \textit{PMD Local Error:} Under the tuned temperature,
    $$ \tilde{\mathcal{O}}\left(\frac{1}{\Delta(1-\gamma)^4\kappa^2\tau T}\right)
    =\tilde{\mathcal{O}}\left(\frac{NT_{total}^{-1}\log(C_\gamma T_{total}^{2/3})}{\Delta^2(1-\gamma)^4\kappa^2}\right). $$
    For $N\le T_{total}^{2/9}$, this term decays as $\tilde{\mathcal{O}}(T_{total}^{-7/9})$ with respect to $T_{total}$ and is also dominated by the $\tilde{\mathcal{O}}(T_{total}^{-2/3})$ target envelope.
\end{itemize}
Thus, for any inner-loop choice $N\in[1,T_{total}^{2/9}]$, all transient errors remain within the target envelope, establishing the stated loop-invariance window.

(iii) \textit{Regime 2: Approximation-Limited Phase.} When $\log(T_{total}^{2/3})>\log(1+\epsilon_{app}^{-1})$, the temperature locks to the approximation-dependent floor:
$ \tau_{T_{total}}=\frac{\Delta}{2\log(C_\gamma(1+\epsilon_{app}^{-1}))}$.
Substituting this fixed temperature into the leading components yields:
\begin{align*}
\text{Invariant Tracking Error}
&=\mathcal{O}\left(\frac{(\log(C_\gamma(1+\epsilon_{app}^{-1})))^{5/3}}{(1-\gamma)^8\kappa^6\Delta^{8/3}T_{total}^{2/3}}\right) \\
&\le\mathcal{O}\left(\frac{(1-\gamma)^{-5/3}+(\log T_{total})^{5/3}}{(1-\gamma)^8\kappa^6\Delta^{8/3}T_{total}^{2/3}}\right), \\
\text{Approximation Bias}
&=\mathcal{O}\left(\frac{\log(C_\gamma(1+\epsilon_{app}^{-1}))}{(1-\gamma)^2\Delta^2}\epsilon_{app}\right) \\
&=\mathcal{O}\left(\frac{(1-\gamma)^{-1}+\log(1+\epsilon_{app}^{-1})}{(1-\gamma)^2\Delta^2}\epsilon_{app}\right), \\
\text{Entropy Tail}
&\le\frac{C_\gamma}{(1-\gamma)^2}\exp\left(-\log(C_\gamma(1+\epsilon_{app}^{-1}))\right) \\
&=\frac{\epsilon_{app}/(1+\epsilon_{app})}{(1-\gamma)^2}
\le\frac{\epsilon_{app}}{(1-\gamma)^2}.
\end{align*}
Because $\log(1+\epsilon_{app}^{-1})<\log(T_{total}^{2/3})$, we have
$$ \log(C_\gamma(1+\epsilon_{app}^{-1}))<\log(C_\gamma) + \log( T_{total}^{2/3}). $$
Therefore, the secondary fractional tracking penalty and PMD local error are bounded by their corresponding Regime~1 envelopes, while the PMD initialization error is independent of $\tau$. Hence, for any inner-loop choice $N\in[1,T_{total}^{2/9}]$, all transient errors remain safely within the $\tilde{\mathcal{O}}(T_{total}^{-2/3})$ target envelope.

(iv) \textit{Synthesis.} Summing the evaluated components across the two regimes yields:
\begin{align*}
\frac{1}{T}\sum_{t=0}^{T-1}\mathbb{E}\big[J_0(\pi^*_0)-J_0(\pi_t)\big]
&\le \mathcal{O}\left(\frac{(1-\gamma)^{-5/3}+(\log T_{total})^{5/3}}{(1-\gamma)^8\kappa^6\Delta^{8/3}T_{total}^{2/3}}\right) \\
&\quad + \mathcal{O}\left(\frac{(1-\gamma)^{-1}+\log(1+\epsilon_{app}^{-1})}{(1-\gamma)^2\Delta^2}\epsilon_{app}\right).
\end{align*}
Finally, as $\epsilon_{app}\to0$, $\log(1+\epsilon_{app}^{-1})=\Theta(\log(1/\epsilon_{app}))$ and $\epsilon_{app}\log(1+\epsilon_{app}^{-1})=\tilde{\mathcal{O}}(\epsilon_{app})$. Therefore, the overall convergence rate is $\tilde{\mathcal{O}}(T_{total}^{-2/3})+\tilde{\mathcal{O}}(\epsilon_{app})$, completing the proof.
\end{prf}

\subsection{Warm-Start Unregularized Convergence (Linear Entropy)} \label{app:warm-start-linear-entropy}

To expose the underlying barrier bypassed by the exponential translation mechanism, we provide a comparative unregularized convergence analysis relying on the standard linear entropy penalty for Algorithm~\ref{alg:warm-start-nac}.
Crucially, this result does not require the Minimal Action Gap assumption, providing a robust theoretical fallback for environments lacking a uniformly positive action margin. By balancing the invariant tracking error against the $\mathcal{O}(\tau)$ entropy penalty, we establish a $\tilde{\mathcal{O}}(T_{total}^{-1/4})$ average-iterate unregularized convergence rate, which maintains its loop invariance across the shifted inner-loop window $N \in [1, \sqrt{T_{total}}]$.

\begin{thm}[Warm-Start Linear Entropy Convergence] \label{thm:warm-start-linear-entropy}
For Algorithm~\ref{alg:warm-start-nac}, suppose that Assumptions~\ref{ass:bounded-features}--\ref{ass:approximation-error} and \ref{ass:density-ratios} hold.
Define the two-stage temperature:
$$ \tau_{T_{total}} \triangleq \max\left( \frac{1}{(1-\gamma)^{9/4} \kappa^{9/4} T_{total}^{1/4}}, \sqrt{\epsilon_{app}} \right). $$
For any intermediate inner-loop choice $N \in [1, \sqrt{T_{total}}]$ and $T = T_{total} / N$, if $T_{total}$ is sufficiently large and $\epsilon_{app}$ is sufficiently small such that $\tau_{T_{total}} \le \tau_{max}$, then by choosing the step-size parameters as in Theorem~\ref{thm:warm-start-reg} evaluated at $\tau = \tau_{T_{total}}$, the expected average unregularized performance gap satisfies the convergence rate:
\begin{align*}
\frac{1}{T} \sum_{t=0}^{T-1} \mathbb{E} \big[ J_0(\pi^*_0) - J_0(\pi_t) \big] &\le \tilde{\mathcal{O}}\left( \frac{1}{(1-\gamma)^{13/4} \kappa^{9/4} T_{total}^{1/4}} \right) + \mathcal{O}\left( \frac{\sqrt{\epsilon_{app}}}{1-\gamma} \right) \\
&= \tilde{\mathcal{O}}\left( T_{total}^{-1/4} \right) + \mathcal{O}(\sqrt{\epsilon_{app}}).
\end{align*}
\end{thm}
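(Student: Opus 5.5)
The plan mirrors the proof of Theorem~\ref{thm:warm-start-exponential}, with the Exponential Translation replaced by the standard linear entropy penalty. This is the same substitution used in the proof of Corollary~\ref{cor:global-unreg-convergence}.

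First, for any policy $\pi_t$ we have $J_0(\pi^*_0)\le J_\tau(\pi^*_0)\le J_\tau(\pi^*_\tau)$. Hence
$$J_0(\pi^*_0)-J_0(\pi_t)\le \text{Gap}_t^\dag+\frac{\tau}{1-\gamma}\mathbb{E}_{d^{\pi_t}}[H(\pi_t(\cdot|s))]\le \text{Gap}_t^\dag+\frac{\tau\log|\mathcal{A}|}{1-\gamma}.$$
This step needs no Minimal Action Gap. Averaging over $t$ and inserting Theorem~\ref{thm:warm-start-reg} bounds the average unregularized gap by four terms. The first is the invariant tracking error $\mathcal{O}\big((1-\gamma)^{-7}\kappa^{-6}\tau^{-5/3}T_{total}^{-2/3}\big)$. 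The second is the approximation bias $\mathcal{O}(\epsilon_{app}/((1-\gamma)\tau))$. The third is the entropy penalty $\mathcal{O}(\tau/(1-\gamma))$. The fourth is the transient error
$$\tilde{\mathcal{O}}\big(N^{1/2}(1-\gamma)^{-2}\kappa^{-1}T^{-1}+N^{-1/2}(1-\gamma)^{-7}\kappa^{-6}\tau^{-2}T^{-1}+(1-\gamma)^{-3}\kappa^{-2}\tau^{-1}T^{-1}\big).$$

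Second, I would tune $\tau$ in two regimes. In the sample-limited regime, balancing $\tau^{-5/3}(1-\gamma)^{-7}\kappa^{-6}T_{total}^{-2/3}$ against $\tau(1-\gamma)^{-1}$ gives $\tau^{8/3}=(1-\gamma)^{-6}\kappa^{-6}T_{total}^{-2/3}$. That is, $\tau=(1-\gamma)^{-9/4}\kappa^{-9/4}T_{total}^{-1/4}$, which is exactly the first branch of $\tau_{T_{total}}$. With this choice both terms equal $\mathcal{O}((1-\gamma)^{-13/4}\kappa^{-9/4}T_{total}^{-1/4})$. In the approximation-limited regime, balancing $\epsilon_{app}/\tau$ against $\tau$ gives $\tau=\sqrt{\epsilon_{app}}$ and a contribution $\mathcal{O}(\sqrt{\epsilon_{app}}/(1-\gamma))$. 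Taking the maximum of the two temperatures, each of the three leading terms is dominated by the sum of the two envelopes. When $\tau$ is the larger branch, the tracking term only decreases and the bias term is at most $\sqrt{\epsilon_{app}}/(1-\gamma)$.

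The main obstacle is showing that the transient terms remain inside $\tilde{\mathcal{O}}(T_{total}^{-1/4})$ over the whole window $N\le\sqrt{T_{total}}$. Here $T=T_{total}/N$ and $\tau\ge\tau^{(1)}\asymp T_{total}^{-1/4}$. The three transient terms are handled as follows.
\begin{itemize}
\item The PMD initialization term scales as $N^{3/2}T_{total}^{-1}$. At $N=\sqrt{T_{total}}$ this equals $T_{total}^{-1/4}$ (the horizon factors are smaller), which is exactly what fixes the right endpoint $\sqrt{T_{total}}$.
\item The secondary tracking term scales as $N^{1/2}\tau^{-2}T_{total}^{-1}\lesssim T_{total}^{1/4}\cdot T_{total}^{1/2}\cdot T_{total}^{-1}=T_{total}^{-1/4}$. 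The $(1-\gamma)$ and $\kappa$ factors must be checked: $\tau^{-2}$ contributes $(1-\gamma)^{9/2}\kappa^{9/2}$, leaving $(1-\gamma)^{-5/2}\kappa^{-3/2}$, which is within $(1-\gamma)^{-13/4}\kappa^{-9/4}$.
\item The PMD local term scales as $N\tau^{-1}T_{total}^{-1}\lesssim T_{total}^{-1/4}$ up to $\log$ factors.
\end{itemize}
I would verify each of these power counts carefully, including the $(1-\gamma)$ and $\kappa$ exponents, and check that the requirement $\tau_{T_{total}}\le\tau_{max}$ keeps the step-size conditions of Theorem~\ref{thm:warm-start-reg} valid. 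Summing the envelopes then yields the stated $\tilde{\mathcal{O}}(T_{total}^{-1/4})+\mathcal{O}(\sqrt{\epsilon_{app}})$ bound.
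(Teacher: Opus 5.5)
Your proposal is correct and follows essentially the same route as the paper. The shared chain is: the linear entropy bound $J_0(\pi^*_0)-J_0(\pi_t)\le \text{Gap}_t^\dag+\tau\log|\mathcal{A}|/(1-\gamma)$, then substitution of Theorem~\ref{thm:warm-start-reg}, then balancing the invariant tracking error against the entropy penalty to recover the first branch of $\tau_{T_{total}}$, and finally checking that the three transient terms stay within $\tilde{\mathcal{O}}(T_{total}^{-1/4})$ for $N\le\sqrt{T_{total}}$, with the PMD initialization term fixing the endpoint. Your use of monotonicity in $\tau$ to handle the maximum of the two branches replaces the paper's explicit sample-limited/approximation-limited case split and is equally valid.
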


\begin{prf}
(i) \textit{Global Unregularized Bound.} Because the policy entropy is non-negative, evaluating the optimal policy under the regularized objective yields $J_0(\pi^*_0) \le J_\tau(\pi^*_0) \le J_\tau(\pi^*_\tau)$.
For any training policy $\pi_t$, the global unregularized performance gap is bounded by the global regularized performance gap plus the linear entropy penalty:
\begin{align*}
J_0(\pi^*_0) - J_0(\pi_t) &\le J_\tau(\pi^*_\tau) - J_0(\pi_t) \\
&= \Big( J_\tau(\pi^*_\tau) - J_\tau(\pi_t) \Big) + \Big( J_\tau(\pi_t) - J_0(\pi_t) \Big) \\
&\le \text{Gap}_t^\dag + \frac{\tau \log |\mathcal{A}|}{1-\gamma}.
\end{align*}
Taking the expectation and substituting the average warm-start regularized gap from Theorem~\ref{thm:warm-start-reg} explicitly yields the total unregularized error bound:
\begin{align*}
&\quad \frac{1}{T} \sum_{t=0}^{T-1} \mathbb{E} \big[ J_0(\pi^*_0) - J_0(\pi_t) \big] \\
\le &\underbrace{ \mathcal{O}\left( \frac{1}{(1-\gamma)^7 \kappa^6 \tau^{5/3} T_{total}^{2/3}} \right) }_{\text{Invariant Tracking Error}} + \underbrace{ \mathcal{O}\left( \frac{\epsilon_{app}}{(1-\gamma)\tau} \right) }_{\text{Approximation Bias}} + \underbrace{ \mathcal{O}\left( \frac{\tau}{1-\gamma} \right) }_{\text{Entropy Penalty}} \\
&+ \underbrace{ \tilde{\mathcal{O}}\left( \frac{N^{1/2}}{(1-\gamma)^2 \kappa T} + \frac{N^{-1/2}}{(1-\gamma)^7 \kappa^6 \tau^2 T} + \frac{1}{(1-\gamma)^3 \kappa^2 \tau T} \right) }_{\text{Transient Convergence Error}}.
\end{align*}

(ii) \textit{Regime 1: Sample-Limited Phase.} When $\frac{1}{(1-\gamma)^{9/4} \kappa^{9/4} T_{total}^{1/4}} \ge \sqrt{\epsilon_{app}}$,
we balance the Invariant Tracking Error against the Entropy Penalty. Setting $\frac{1}{(1-\gamma)^7 \kappa^6 \tau^{5/3} T_{total}^{2/3}} = \frac{\tau}{1-\gamma}$ yields the tuned temperature $\tau_{T_{total}} = \frac{1}{(1-\gamma)^{9/4} \kappa^{9/4} T_{total}^{1/4}}$. Under this tuning, both the Invariant Tracking Error and the Entropy Penalty evaluate to $\mathcal{O}\big( \frac{1}{(1-\gamma)^{13/4} \kappa^{9/4} T_{total}^{1/4}} \big)$,
and the Approximation Bias is dominated by this leading order. We evaluate the transient errors under this tuning to verify loop invariance:
\begin{itemize}
    \item \textit{PMD Initialization Error:} $\tilde{\mathcal{O}}\big( \frac{N^{1/2}}{(1-\gamma)^2\kappa T} \big) = \tilde{\mathcal{O}}\big( \frac{N^{3/2}T_{total}^{-1}}{(1-\gamma)^2\kappa} \big)$. To ensure that this error remains within the $\tilde{\mathcal{O}}(T_{total}^{-1/4})$ target envelope, it suffices to require $N^{3/2}T_{total}^{-1}\le T_{total}^{-1/4}$, establishing the invariance boundary $N\le\sqrt{T_{total}}$.

    \item \textit{Secondary Fractional Tracking Penalty:} $\tilde{\mathcal{O}}\big( \frac{N^{-1/2}}{(1-\gamma)^7\kappa^6\tau^2T} \big) = \tilde{\mathcal{O}}\big( \frac{N^{1/2}T_{total}^{-1}}{(1-\gamma)^7\kappa^6\tau^2} \big)$. Because $\tau=\Theta(T_{total}^{-1/4})$, this term evaluates to $\tilde{\mathcal{O}}(N^{1/2}T_{total}^{-1/2})$. For $N\le\sqrt{T_{total}}$, it is bounded by $\tilde{\mathcal{O}}(T_{total}^{-1/4})$ and therefore remains within the target envelope.

    \item \textit{PMD Local Error:} $\tilde{\mathcal{O}}\big( \frac{1}{(1-\gamma)^3\kappa^2\tau T} \big) = \tilde{\mathcal{O}}\big( \frac{NT_{total}^{-1}}{(1-\gamma)^3\kappa^2\tau} \big)$. Because $\tau=\Theta(T_{total}^{-1/4})$, this term evaluates to $\tilde{\mathcal{O}}(NT_{total}^{-3/4})$. For $N\le\sqrt{T_{total}}$, it is bounded by $\tilde{\mathcal{O}}(T_{total}^{-1/4})$ and therefore also remains within the target envelope.
\end{itemize}
Thus, for any inner-loop choice $N\in[1,\sqrt{T_{total}}]$, all transient errors remain within the $\tilde{\mathcal{O}}(T_{total}^{-1/4})$ target envelope, establishing the stated loop-invariance window.

(iii) \textit{Regime 2: Approximation-Limited Phase.} When $T_{total}$ grows such that $\frac{1}{(1-\gamma)^{9/4} \kappa^{9/4} T_{total}^{1/4}} < \sqrt{\epsilon_{app}}$, the temperature locks to the constant floor $\tau_{T_{total}} = \sqrt{\epsilon_{app}}$. The Approximation Bias and the Entropy Penalty evaluate to $\mathcal{O}\big( \frac{\sqrt{\epsilon_{app}}}{1-\gamma} \big)$, and the Invariant Tracking Error is dominated by this leading order.
Because $\frac{1}{(1-\gamma)^{9/4}\kappa^{9/4}T_{total}^{1/4}}<\sqrt{\epsilon_{app}}$, the secondary fractional tracking penalty and PMD local error are bounded by their corresponding Regime~1 envelopes, while the PMD initialization error is independent of $\tau$. Thus, for any inner-loop choice $N\in[1,\sqrt{T_{total}}]$, all transient errors remain safely within the $\tilde{\mathcal{O}}(T_{total}^{-1/4})$ target envelope.

(iv) \textit{Synthesis.} By setting the tuned temperature $\tau_{T_{total}}$ to the maximum of these two balanced bounds, the expected average unregularized performance gap is bounded by the sum of their respective envelopes, completing the proof.
\end{prf}

\end{document}